\def\eqref#1{equation~\ref{#1}}
\def\1{\bm{1}}
\DeclareMathAlphabet{\mathsfit}{\encodingdefault}{\sfdefault}{m}{sl}
\SetMathAlphabet{\mathsfit}{bold}{\encodingdefault}{\sfdefault}{bx}{n}
\def\sA{{\mathbb{A}}}
\def\sR{{\mathbb{R}}}
\def\sT{{\mathbb{T}}}
\def\BibTeX{{\rm B\kern-.05em{\sc i\kern-.025em b}\kern-.08em
		T\kern-.1667em\lower.7ex\hbox{E}\kern-.125emX}}
\newtheorem{proposition}{Proposition}
\newcommand{\red}[1]{\textcolor{red}{\textbf{#1}}}
\newcommand{\blue}[1]{\textcolor{blue}{\underline{#1}}}
\newcommand{\bb}{\hspace{-1mm} $\bullet$}
\newcommand{\bbs}{\hspace{-1mm} $\bullet$ }  %   with space 
\title{Robust Angular Synchronization via Directed Graph Neural Networks}
\author{Yixuan He \thanks{This work was partially done during an internship at Amazon.} \& Gesine Reinert \\
Department of Statistics\\
University of Oxford\\
Oxford, United Kingdom \\
\texttt{\{yixuan.he, reinert\}@stats.ox.ac.uk} \\
\And
David Wipf\\
Amazon \\
Shanghai, China \\
\texttt{daviwipf@amazon.com} \\
\AND
Mihai Cucuringu\\
Department of Statistics \\
University of Oxford \\
Oxford, United Kingdom \\
\texttt{mihai.cucuringu@stats.ox.ac.uk} 
}
\begin{document}

\maketitle

\begin{abstract}
The angular synchronization problem aims to accurately estimate (up to a constant additive phase) a set of unknown angles $\theta_1, \dots, \theta_n\in[0, 2\pi)$ from $m$ noisy measurements of their offsets $\theta_i-\theta_j \;\mbox{mod} \;  2\pi.$ Applications include, for example, sensor network localization, phase retrieval, and distributed clock synchronization. 
An extension of the problem to the heterogeneous setting (dubbed $k$-synchronization) is to estimate $k$ groups of angles simultaneously, given noisy observations (with unknown group assignment) from each group. Existing methods for angular synchronization usually perform poorly in high-noise regimes, which are common in applications. In this paper, we leverage neural networks for the angular synchronization problem, and its heterogeneous extension, by proposing \textsc{GNNSync}, a theoretically-grounded end-to-end trainable framework using directed graph neural networks. In addition, new loss functions are devised to encode synchronization objectives. Experimental results on extensive data sets demonstrate that GNNSync attains competitive, and often superior, performance against a comprehensive set of baselines for the angular synchronization problem and its extension, validating the robustness of GNNSync even at high noise levels. 
\end{abstract}

\vspace{-7mm}
\section{Introduction}
\vspace{-3.5mm}
\looseness=-1 The group synchronization problem has received considerable attention in recent years, as a key building block of many computational problems. Group synchronization aims to estimate a collection of group elements, given a small subset of potentially noisy measurements of their pairwise ratios   $\Upsilon_{i,j} = g_i \, g_j^{-1}$. Some applications are $\bullet$ over the group SO(3) of 3D rotations: rotation-averaging in 3D computer vision~\citep{arrigoni2020synchronization, janco2022unrolled} and the molecule problem in structural biology~\citep{cucuringu2012eigenvector}; $\bullet$ over the group $\mathbb{Z}_4$ of the integers $\{0, 1, 2, 3\}$ with addition $\mbox{mod} \;  4$ as the group operation: solving jigsaw puzzles~\citep{huroyan2020solving}; $\bullet$ over the group $\mathbb
Z_n$, resp., SO(2): recovering a global ranking from pairwise comparisons~\citep{he2022gnnrank, syncRank}, and, $\bullet$ over the Euclidean group of rigid motions $\mbox{Euc}(2) = \mathbb{Z}_2 \times \mbox{SO(2)} \times \mathbb{R}^2$: sensor network localization~\citep{cucuringu2012sensor}.

\begin{wrapfigure}{r}{0.43\linewidth}
\centering
\vspace{-5mm}
\begin{minipage}{0.51\linewidth}
    \centering\captionsetup[subfigure]{justification=centering}
\includegraphics[width=\linewidth,trim=1cm 0.8cm 0.8cm 0.8cm,clip]{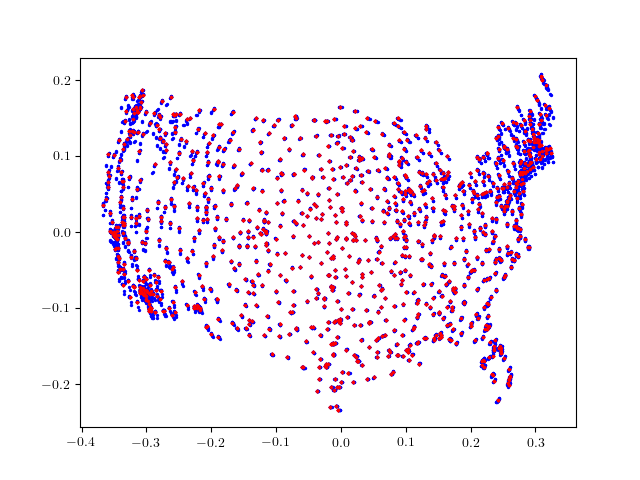}
\vspace{-5.5mm}
\subcaption{Low noise.}
\end{minipage}
\hspace{-3mm}
\begin{minipage}{0.51\linewidth}
    \centering\captionsetup[subfigure]{justification=centering}
\includegraphics[width=\linewidth,trim=0.8cm 0.8cm 0.9cm 0.9cm,clip]{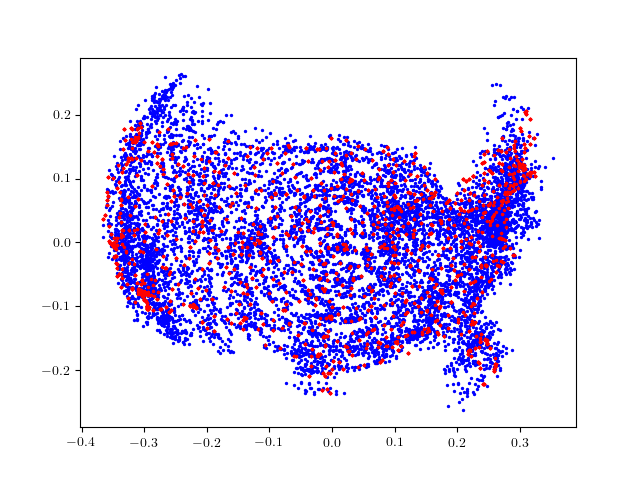}
\vspace{-5.5mm}
\subcaption{High noise.}
\end{minipage}
\vspace{-4.5mm}
\caption{Sensor network localization map.
}
\vspace{-3mm}
\label{fig:uscities_example}
\end{wrapfigure} 
\looseness=-1 An important special case is {\it angular synchronization},
also referred to as \textit{phase synchronization}, which can be viewed as group synchronization over SO(2).
The angular synchronization problem aims at obtaining an accurate estimation (up to a constant additive phase) for a set of unknown angles $\theta_1, \dots, \theta_n\in[0, 2\pi)$ from $m$ noisy measurements of their pairwise offsets $\theta_i-\theta_j \;\mbox{mod} \;  2\pi.$ This problem has a wide range of applications,  such as distributed clock synchronization over wireless networks~\citep{giridhar2006distributed}, image reconstruction from pairwise intensity differences~\citep{stella2009angular, yu2011angular}, phase retrieval~\citep{forstner2020well, iwen2020phase}, and sensor network localization (SNL)~\citep{cucuringu2012sensor}.
In engineering, the SNL problem seeks to reconstruct the 2D coordinates of a cloud of points from a sparse set of pairwise noisy Euclidean distances; in typical divide-and-conquer approaches that aid with scalability, one first computes a local embedding of nearby points (denoted as \textit{patches}) and is left with the task of stitching the patches together in a globally consistent embedding  \citep{cucuringu2012sensor}.
Fig.~\ref{fig:uscities_example} is an example of SNL on the U.S. map, where our method recovers city locations (in blue) and aims to match ground-truth locations (in red). 
Most works in the SNL literature that focus on the methodology development consider only purely synthetic data sets in their experiments; here we consider a real-world data set (actual 2D layout with different levels of densities of cities across the U.S. map), and add synthetic noise to perturb the local patch embeddings for testing the robustness to noise of the angular synchronization component.

\looseness=-1 An extension of angular synchronization 
to the heterogeneous setting is \emph{$k$-synchronization}, introduced in   \cite{cucuringu2022extension}, and motivated by real-world graph realization problems (GRP) and ranking.
GRP aims to recover coordinates of a cloud of points in $\mathbbm{R}^d$, from a sparse subset (edges of a graph) of noisy pairwise Euclidean distances (the case $d=2$ is the above SNL problem). 
The motivation for \emph{$k$-synchronization} arises in structural biology, where the distance measurements between pairs of atoms may correspond to $k$ different configurations of the molecule, in the case of molecules with multiple conformations. In ranking applications, the $k=2$ sets of disjoint pairwise measurements may correspond to two different judges, whose latent rankings we aim to recover.

\looseness=-1 A key limitation of existing methods for angular synchronization is their poor performance in the presence of considerable noise. High noise levels are not unusual; measurements in SO(3) can have large outliers in certain biological settings (cryo-EM and NMR spectroscopy), see for example \citet{cucuringu2012eigenvector}. Therefore, we need new methods to push the boundary of signal recovery when there is a high level of noise. 
While neural networks (NNs), in principle, could be trained to address high noise regimes, the angular synchronization problem is not directly amenable to a standard NN architecture due to the directed graph (digraph) structure of the underlying data measurement process and the underlying group structure; hence the need for a customized graph neural network (GNN) architecture and loss function for this task. Here we propose a GNN method called GNNSync for angular synchronization, with a novel cycle loss function, which downweights noisy observations, and explicitly enforces cycle consistency as a quality measure. GNNSync's novelty does not lie in simply applying a data-driven NN to this task, but rather in proposing a framework for handling the pairwise comparisons encoded in a digraph, accounting for the underlying SO(2) group structure, and designing a loss function for increased robustness to noise and outliers, with theoretical support.

Our main contributions are summarized as follows.
\\\bbs We demonstrate how the angular synchronization problem can be recast as a theoretically-grounded directed graph learning task by first incorporating the inductive biases of classical estimators within the design of a more robust GNN architecture, called GNNSync, and then pairing with a novel training loss that exploits cycle consistency to help infer the unknown angles.
\\\bbs We perform extensive experiments comparing GNNSync with existing state-of-the-art algorithms from the angular synchronization and $k$-synchronization literature, across a variety of synthetic outlier models at various density and noise levels, and on a real-world application. GNNSync attains leading performance, especially in high noise regimes, validating its robustness to noise.

\vspace{-4mm}
\section{Related work} \label{sec:relatedWork}
\vspace{-4.5mm}
\subsection{Angular synchronization}
\vspace{-3mm}
\looseness=-1 The seminal work of \citet{singer2011angular} introduced spectral and semidefinite programming (SDP) relaxations for angular synchronization. For the spectral relaxation, the estimated angles are given by the eigenvector corresponding to the largest eigenvalue of a Hermitian matrix $\mathbf{H},$ whose entries are given by $\mathbf{H}_{i,j} = \exp(\iota \mathbf{A}_{i,j})\mathbbm{1}(\mathbf{A}_{i,j} \neq 0),$ where $\iota$ is the imaginary unit, and $\mathbf{A}_{i,j}$ is the observed potentially noisy offset $\theta_i - \theta_j \;\mbox{mod} \;  2\pi.$ 
\citet{singer2011angular} also provided an SDP relaxation involving the same matrix  $\mathbf{H}$, and empirically demonstrated that the spectral and SDP relaxations yield  similar experimental results. 
A row normalization was introduced to 
$\mathbf{H}$ prior to the eigenvector computation by \citet{cucuringu2012sensor}, which showed improved results.    
\cite{cucuringu2012eigenvector} generalized this approach to the 3D setting $\mbox{Euc}(3) = \mathbb{Z}_2 \times \mbox{SO(3)} \times \mathbb{R}^3$, and incorporated into the optimization pipeline the ability to operate in a semi-supervised setting, where certain group elements are known a-priori.
\citet{cucuringu2022extension} extended the angular synchronization problem to a heterogeneous setting, to the so-called \emph{$k$-synchronization} problem, whose goal is to estimate $k$ sets of angles simultaneously, given only the graph union of noisy pairwise offsets, which we also explore in our experiments. The key idea in their work is to estimate the $k$ sets of angles from the top $k$ eigenvectors of the angular embedding matrix $\mathbf{H}$.

\looseness=-1 \citet{boumal2016nonconvex} modeled the angular (phase) synchronization problem as a least-squares non-convex optimization problem, and proposed a modified version of the power method called the Generalized Power Method (GPM), which is straightforward to implement and free of parameter tuning. GPM often attains leading performance among baselines in our experiments, and the iterative steps in the GPM method motivated the design of the projected gradient steps in our GNNSync architecture. However, GPM is not directly applicable to $k$-synchronization with $k > 1$ while GNNSync is. 
For $k=1,$ GNNSync tends to perform significantly better than GPM at high noise levels. \citet{bandeira2017tightness} studied the tightness of the maximum likelihood semidefinite relaxation for angular synchronization, where the maximum likelihood estimate is the solution to a nonbipartite Grothendieck problem over the complex numbers. A truncated least-squares approach was proposed by \citet{huang2017translation} that minimizes the discrepancy between the estimated angle differences and the observed differences under some constraints. \citet{gao2019multi} tackled the angular synchronization problem with a multi-frequency approach.
\citet{liu2020unified} unified various group synchronization problems over subgroups of the orthogonal group. 
\citet{filbir2021recovery} provided recovery guarantees for eigenvector relaxation and semidefinite convex relaxation methods for weighted angular synchronization. \citet{lerman2022robust} applied a message-passing procedure based on cycle consistency information, to estimate the corruption levels of group ratios and consequently solve the synchronization problem, but the method is focused on the restrictive setting of adversarial or uniform corruption and sufficiently small noise. In addition, \citet{lerman2022robust} requires post-processing based on the estimated corruption levels to obtain the group elements, while GNNSync is trained end-to-end. \citet{maunu2023depth} utilized energy minimization ideas, with a variant converging linearly to the ground truth rotations.

\vspace{-3.5mm}
\subsection{Directed graph neural networks}
\vspace{-3.5mm}
Digraph node embeddings can be effectively learned via directed graph neural networks~\citep{he2022pytorch}. For learning such an embedding, \citet{tong2020digraph} constructed a GNN using higher-order proximity. \citet{zhang2021magnet} built a complex Hermitian Laplacian matrix and proposed a spectral digraph GNN. \citet{he2022digrac} introduced imbalance objectives for digraph clustering. Our GNNSync framework can readily incorporate any existing digraph neural network.
%%%%%%%%%%%%%%%%%%%%%%%%%%%%%%%%%
\vspace{-3.5mm}
\subsection{Relationship with other group synchronization methods}
\vspace{-3.5mm}
\looseness=-1 Angular synchronization outputs can be used to obtain global rankings by using a one-dimensional ordering. To this end, recovering rankings of $n$ objects from pairwise comparisons can be viewed as group synchronization over $\mathbb{Z}_n.$ To recover global rankings from pairwise comparisons, GNNRank~\citep{he2022gnnrank} adopted an unfolding idea to add an inductive bias from \citet{fogel2014serialrank} to the NN architecture. Inspired by \citet{he2022gnnrank}, we adapt their framework to borrow strength from solving a related problem. We adapt their ``innerproduct" variant to $k$-synchronization, remove the 1D ordering at the end of the GNNRank framework, and rescale the estimated quantities to the range $[0, 2\pi)$. We also borrow strength from the projected gradient steps in GPM~\citep{boumal2016nonconvex} and add projected gradient steps to our GNNSync architecture. Another key novelty is that we 
devise novel objectives, which reflect the angular structure of the data, to serve as our training loss functions. The architectures are also very different: While in GNNRank the proximal gradient steps play a vital role from an unrolling perspective, and the whole architecture could be viewed as an unrolling of the SerialRank algorithm, here, although we borrow strength from the GPM method, the whole architecture is different from merely unrolling GPM. Furthermore, the baselines serve as initial guesses for the ``proximal baseline" variant in GNNRank, but serve as input node features in our approach.

\looseness=-1 Other methods have been introduced for group synchronization, but mostly in the context of SO(3). 
\citet{shi2020message} proposed an efficient algorithm for synchronization over SO(3) under high levels of corruption and noise.
\citet{shi2022robust} provided a novel quadratic programming formulation for estimating the corruption levels, but again its focus is on SO(3). Unrolled algorithms (which are NNs) were introduced for SO(3) in \citet{janco2022unrolled}. While an adaptation to SO(2) may be possible in principle, as its objective functions are based on the level of agreement between the estimated angles and ground-truth, its experiments require ground-truth during training, usually not available in practice. In contrast, our GNNSync framework can be trained without any known angles.

%%%%%%%%%%%%%%%%%%%%%%%%%%%%%%%%%%%%%%%%%%%%%%%%%%%%%%%%%%%%%%%%%%
%%%%%%%%%%%%%%%%%%%%%%%%%%%%%%
\vspace{-4mm}
\section{Problem definition}
\vspace{-4mm}
The \emph{angular synchronization} problem aims at obtaining an accurate estimation (up to a constant additive phase) for a set of $n$ unknown angles $\theta_1, \dots, \theta_n\in[0, 2\pi)$ from $m$ noisy measurements of their offsets $\theta_i-\theta_j \;\mbox{mod} \;  2\pi,$ for $i,j\in\{1, \dots, n\}$. We encode the noisy measurements in a digraph $\mathcal{G}=(\mathcal{V}, \mathcal{E})$, where each of the $n$ elements of the node set $\mathcal{V}$ has as attribute an angle $\theta_i\in [0, 2\pi)$. The edge set $\mathcal{E}$ represents pairwise measurements of the angular offsets $(\theta_i-\theta_j) \;\mbox{mod} \;  2\pi$. The weighted directed graph has a corresponding adjacency matrix $\mathbf{A}$ with $\mathbf{A}_{i,j} = (\theta_i-\theta_j )\;\mbox{mod} \;  2\pi \ge 0$. Estimating the unknown angles from noisy offsets amounts to assigning an estimate $r_i\in[0, 2\pi)$ to each node $i\in\mathcal{V}$.
For computational complexity considerations, we randomly keep one of $\mathbf{A}_{i,j}$ and $\mathbf{A}_{j,i}$ as observed quantity and set the other of these to zero. Thus, at most one of $\mathbf{A}_{i,j}$ and $\mathbf{A}_{j,i}$ can be nonzero by construction; the other original entry can be inferred from $\mathbf{A}_{i,j}+\mathbf{A}_{j,i}=0$ mod $2\pi$.

\looseness=-1 An extension of the above problem to the heterogeneous setting is the \emph{$k$-synchronization} problem, which is defined as follows.
We are given only the graph union of $k$ digraphs $\mathcal{G}_1, \dots, \mathcal{G}_k,$ with the same node set and disjoint edge sets, 
which encode noisy measurements of $k$ sets $(\theta_{i,l}-\theta_{j,l}) \;\mbox{mod} \;  2\pi,$ for $l\in\{1, \dots, k\}, i,j\in\{1, \dots, n\}$, of angle differences modulo $2\pi.$ Its adjacency matrix is denoted by ${\mathbf A}$. 
The problem is to estimate these $k$ sets of $n$ unknown angles $\theta_{i,l}\in[0, 2\pi), \forall l\in\{1, \dots, k\}, i\in\{1,\dots, n\},$ simultaneously. Note that we are given only $\mathcal{G}=\mathcal{G}_1\cup\dots\cup\mathcal{G}_k$ and the value of $k$, and each edge in $\mathcal{G}$ belongs to exactly one of $\mathcal{G}_1, \dots, \mathcal{G}_k$. To unify notations, we view the normal angular synchronization problem as a special case of the more general $k$-synchronization problem where $k=1.$ 
%%%%%%%%%%%%%%%%%%%%%%%%%%%%%%%%%%%%%%
\vspace{-4mm}
\section{Loss and evaluation}
\vspace{-4mm}
\subsection{Loss and evaluation for angular synchronization}
\vspace{-3mm}
\label{subsec:angular_loss}

For a vector $\mathbf{r}=[r_1, \dots,r_n]^\top$ with estimated angles as entries, we define $\mathbf{T}=[(\mathbf{r1}^\top-\mathbf{1r}^\top)\;\mbox{mod} \;  2\pi]\in\mathbb{R}^{n\times n}.$ Then $\mathbf{T}_{i,j} = (r_i-r_j)\;\mbox{mod} \;  2\pi$
estimates $\mathbf{A}_{i,j}.$
We only compare $\mathbf{T}$ with $\mathbf{A}$ at locations where $\mathbf{A}$ has nonzero entries. 
We introduce the residual matrix $\mathbf{M}$ with entries \vspace{-1mm}
$$\mathbf{M}_{i,j}=\min\left((\mathbf{T}_{i,j}-\mathbf{A}_{i,j})\;\mbox{mod} \;  2\pi, (\mathbf{A}_{i,j}-\mathbf{T}_{i,j}) \;\mbox{mod} \;  2\pi\right)$$\vspace{-1mm} if $\mathbf{A}_{i,j}\neq 0$, and $\mathbf{M}_{i,j}=0$ if $\mathbf{A}_{i,j}= 0$. Then our upset loss is defined as
\vspace{-1mm}
\begin{equation}
    \mathcal{L}_\text{upset} = 
    \left\lVert \mathbf{M} \right\rVert_F/t,
    \label{eq:loss_upset}
\end{equation}
where the subscript $F$ means Frobenius norm, and $t$ is the number of nonzero elements in $\mathbf{A}.$
Despite the non-differentiablility of the loss function, using the concept of a limiting subdifferential from \cite{li2020understanding} we can give the following theoretical guarantee on the minimization of \cref{eq:loss_upset}; its proof is in Appendix (App.)~\ref{appendix_sec:loss_properties}, where also the case of general $k$ is discussed.
\begin{proposition} \label{prop:sol}
Every local minimum of \cref{eq:loss_upset} is a directional stationary point of \cref{eq:loss_upset}. 
\end{proposition}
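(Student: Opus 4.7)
My plan is to reduce the proposition to a standard fact from variational analysis (discussed in \cite{li2020understanding}): if a function $f$ is directionally differentiable at $\mathbf{r}^*$ and $\mathbf{r}^*$ is a local minimizer of $f$, then $f'(\mathbf{r}^*;\mathbf{d})\geq 0$ for every direction $\mathbf{d}$, which is precisely d-stationarity. Once this fact is invoked, the real work is to verify that $\mathcal{L}_\text{upset}$ is directionally differentiable at every $\mathbf{r}\in\mathbb{R}^n$, despite the apparent discontinuities introduced by the $\mathrm{mod}\;2\pi$ operation used in the definition of $\mathbf{M}$.

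For each entry, I would first observe that the two quantities inside the $\min$ in the definition of $M_{i,j}$ always sum to $2\pi$ (or are both $0$). Hence, writing $s = r_i - r_j$, one can rewrite
\[
M_{i,j}(\mathbf{r}) \;=\; \mathrm{dist}\!\bigl(s,\, A_{i,j} + 2\pi\mathbb{Z}\bigr),
\]
the distance from $s$ to the nearest point in the translated lattice. The jumps caused by the individual $\mathrm{mod}$ operations cancel out, leaving a continuous, $2\pi$-periodic, piecewise-affine ``triangular wave'' in $s$ with slopes $\pm 1$. In particular $M_{i,j}$ is Lipschitz and directionally differentiable as a function of $\mathbf{r}$, with directional derivative expressible in terms of the signs of the affine pieces at the relevant breakpoints.

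Next, the Frobenius norm $\mathbf{M}\mapsto\|\mathbf{M}\|_F$ is convex, continuous, and directionally differentiable everywhere (at $\mathbf{M}=\mathbf{0}$ its directional derivative in direction $\mathbf{D}$ is simply $\|\mathbf{D}\|_F$). The composition of the Lipschitz, directionally differentiable map $\mathbf{r}\mapsto\mathbf{M}(\mathbf{r})$ with the directionally differentiable Euclidean norm is again directionally differentiable, via the standard chain rule for one-sided derivatives under a Lipschitz inner map; rescaling by the constant $1/t$ preserves the property. Thus $\mathcal{L}_\text{upset}$ is directionally differentiable on $\mathbb{R}^n$.

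To conclude, if $\mathbf{r}^*$ is a local minimizer, then for any $\mathbf{d}$ the inequality $\mathcal{L}_\text{upset}(\mathbf{r}^* + \tau\mathbf{d}) \geq \mathcal{L}_\text{upset}(\mathbf{r}^*)$ holds for all sufficiently small $\tau>0$; dividing by $\tau$ and passing to the limit using the directional derivative gives $\mathcal{L}'_\text{upset}(\mathbf{r}^*;\mathbf{d})\geq 0$, i.e.\ $\mathbf{r}^*$ is a directional stationary point. The main obstacle I foresee is the verification that the individual $\mathrm{mod}$ discontinuities cancel to produce a globally continuous, piecewise-affine $M_{i,j}$; this requires a careful case analysis on the position of $r_i - r_j$ relative to $A_{i,j}$ modulo $2\pi$, but once carried out the rest of the argument is routine bookkeeping with directional derivatives.
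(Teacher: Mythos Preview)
Your proposal is correct and takes essentially the same approach as the paper: show that $\mathcal{L}_\text{upset}$ is locally Lipschitz and directionally differentiable (the paper does this by analyzing $f(x)=\min(x^2,(2\pi-x)^2)$ on $(0,2\pi)$ and writing $\min(a,b)=\tfrac12(a+b)-\tfrac12|a-b|$; you do it via the distance-to-lattice reformulation of $M_{i,j}$), then invoke Fact~6 of \cite{li2020understanding} that local minima of such functions are directional stationary points. Your handling of why the individual $\mathrm{mod}$ discontinuities cancel is more explicit than the paper's, and you work unconstrained on $\mathbb{R}^n$ whereas the paper restricts to $C=[0,2\pi]^n$, but the skeleton of the argument is identical.
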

\vspace{-2.5mm}
For \emph{evaluation}, we employ a Mean Square Error (MSE) function with angle corrections, considered  in~\cite{singer2011three}.
As the offset measurements are unchanged if we shift all angles by a constant, denoting the ground-truth angle vector as $\mathbf{R}$, this evaluation function can be written as 
\vspace{-3mm}
\begin{equation}
    \label{eq:mse}
\mathcal{D}_\text{MSE}(\mathbf{r}, \mathbf{R}) 
=\min_{\theta_0\in[0,2\pi)}\sum_{i=1}^n{[\min(\delta_i\;\mbox{mod} \;  2\pi, (-\delta_i)\;\mbox{mod} \;  2\pi)]^2},
\end{equation}
\vspace{-3mm}
where $\delta_i = r_i+\theta_0-\theta_i,\;\forall i=1,\dots,n.$ Additional implementation details are provided in App.~\ref{app_sec:MSE}.
%%%%%%%%%%%%%%%%%%%%%%%%%%%%%%%%%%%%%%%%
%%%%%%%%%%%%%%%%%%%%%%%%%%
\vspace{-4mm}
\subsection{Cycle consistency relation}
\vspace{-3mm}
\label{sec:cycle_consistency}
For noiseless observations, every cycle in the angular synchronization problem ($k=1$) or every cycle whose edges correspond to the same offset graph $\mathcal{G}_l$ ($k>1$) satisfy the \emph{cycle consistency} relation that the angle sum $\mbox{mod} \; 2\pi$ is 0. For 3-cycles $(i,j,q)$, such that $\mathbf{A}_{i,j}\cdot\mathbf{A}_{j,q}\cdot\mathbf{A}_{q,i}>0$, this leads to
\begin{equation*}
  (\mathbf{A}_{i,j} + \mathbf{A}_{j,q} + \mathbf{A}_{q,i}) \;\mbox{mod} \;  2\pi 
    =(\theta_i - \theta_j + \theta_j-\theta_q + \theta_q - \theta_i)\;\mbox{mod} \;  2\pi
    =0,  
\end{equation*}
as $ (a + b \;\mbox{mod} \;  m) = \{ (a \;\mbox{mod} \;  m) + (b \;\mbox{mod} \;  m) \;\mbox{mod} \;  m\}$.
Hence we obtain the 3-cycle condition
\begin{equation}
    (\mathbf{A}_{i,j} + \mathbf{A}_{j,q} + \mathbf{A}_{q,i}) \;\mbox{mod} \;  2\pi =0,  \forall (i,j,q) \;\mbox{ such that }
    \mathbf{A}_{i,j}\cdot\mathbf{A}_{j,q}\cdot\mathbf{A}_{q,i}>0. 
\end{equation}
With $\sT=\{(i,j,q):\mathbf{A}_{i,j}\cdot\mathbf{A}_{j,q}\cdot\mathbf{A}_{q,i}>0\},$ we define  the \emph{cycle inconsistency level} $\frac{1}{\left|\sT\right|}\sum_{(i,j,q)\in\sT}[(\mathbf{A}_{i,j} + \mathbf{A}_{j,q} + \mathbf{A}_{q,i}) \;\mbox{mod} \;  2\pi]$. We devise a loss function to minimize the cycle inconsistency level with reweighted edges.
%%%%%%%%%%%%%%%%%%%%%%%%%%%%%%%%%%%%%%%
\vspace{-3.5mm}
\subsection{Loss and evaluation for general k-synchronization}
\vspace{-3mm}
\looseness=-1 The upset loss for general $k$ is defined similarly as in Sec.~\ref{subsec:angular_loss}. 
Recall that the observed graph $\mathcal{G}$ has adjacency matrix $\mathbf{A}$. 
Given $k$ groups of estimated angles $\{r_{1,l}, \dots, r_{n,l}\},\, l=1,\dots, k,$ we define the matrix $\mathbf{T}^{(l)}$ with entries $\mathbf{T}^{(l)}_{i,j} = (r_{i,l}-r_{j,l})\;\mbox{mod} \;  2\pi,$ for $i,j\in\{1,\ldots,n\}, l\in\{1,\dots, k\}$. 
We define $\mathbf{M}^{(l)}$ by $\mathbf{M}^{(l)}_{i,j}=\min((\mathbf{T}^{(l)}_{i,j}-\mathbf{A}_{i,j})\;\mbox{mod} \;  2\pi, (\mathbf{A}_{i,j}-\mathbf{T}^{(l)}_{i,j}) \;\mbox{mod} \;  2\pi)$ if $\mathbf{A}_{i,j}\neq 0$, 
and $\mathbf{M}^{(l)}_{i,j}=0$ if $\mathbf{A}_{i,j}= 0$.  
Define $\mathbf{M}$ by $\mathbf{M}_{i,j} = \min_{l\in\{1,\dots,k\}}\mathbf{M}^{(l)}_{i,j}.$ The upset loss is as in eq.\,(\ref{eq:loss_upset}),
    $\mathcal{L}_\text{upset} = 
    \left\lVert \mathbf{M} \right\rVert_F/t.$
 
In addition to $\mathcal{L}_\text{upset}$, we introduce another option as a loss function based on the cycle consistency relation from Sec.~\ref{sec:cycle_consistency}, which adds a regularization that helps in guiding the learning process for certain challenging scenarios (e.g., with sparser $\mathcal{G}$ or larger $k$). Since measurements are typically noisy, we first estimate the corruption level by entries in $\mathbf{M}$, and use them to construct a confidence matrix $\Tilde{\mathbf{C}}$ for edges in $\mathcal{G}.$ We define the unnormalized confidence matrix $\mathbf{C}$ by $\mathbf{C}_{i,j}=\frac{1}{1+\mathbf{M}_{i,j}}\mathbbm{1}(\mathbf{A}_{i,j}\neq 0),$ then normalize the entries by 
$\Tilde{\mathbf{C}}_{i,j}=\mathbf{C}_{i,j}\frac{\sum_{u,v}\mathbf{A}_{u,v}}{\sum_{u,v}\mathbf{A}_{u,v}\cdot\mathbf{C}_{u,v}}.$
The normalization is chosen such that $\sum_{i,j} \mathbf{A}_{i,j} \Tilde{\mathbf{C}}_{i,j} = \sum_{u,v}\mathbf{A}_{u,v}.$ Keeping the sum of edge weights constant is carried out in order 
to avoid reducing the cycle inconsistency level by only rescaling edge weights but not their relative magnitudes. Based on the confidence matrix $\Tilde{\mathbf{C}},$ we reweigh edges in $\mathcal{G}$ to obtain an updated input graph, whose adjacency matrix is the Hadamard product $\mathbf{A}\odot\Tilde{\mathbf{C}}$. 
This graph attaches larger weights to edges $\mathbf{A}_{i,j}$ for which $\mathbf{T}_{i,j}^{(l)}$ is a good estimate when the edge $(i,j)$ belongs to graph $\mathcal{G}_l$.
As the graph assignment of an edge $(i,j)$ is not known during training, we estimate it by 
\vspace{-3mm} 
\begin{equation}
g(i,j) = \arg\min_{l\in\{1,\dots,k\}}\mathbf{M}^{(l)}_{i,j}, \;\text{and set } g(j,i)=g(i,j), 
 \label{eq:graph_assignment}
\end{equation}
thus obtaining our estimated graphs $\Tilde{\mathcal{G}}_1, \dots, \Tilde{\mathcal{G}}_k$, which are also edge disjoint. 
Next, aiming to minimize 3-cycle inconsistency of the updated input graph given our graph assignment estimates, we introduce a loss function denoted as the \emph{cycle inconsistency loss}  $\mathcal{L}_\text{cycle}$; for simplicity, we only focus on 3-cycles (triangles).
We interpret the matrix $\Tilde{\mathbf{A}}=(\mathbf{A}\odot\Tilde{\mathbf{C}}-(\mathbf{A}\odot\Tilde{\mathbf{C}})^\top) \;\mbox{mod} \;  2\pi$ as the adjacency matrix of another weighted directed graph $\Tilde{\mathcal{G}}$. The entry $\tilde{A}_{i,j}$ of the new adjacency matrix  
approximates angular differences of a reweighted graph, with noisy observations downweighted. Note that we only reweigh the adjacency matrix in the cycle loss definition,  but do not 
update the input graph. The underlying idea is that this updated denoised graph may display higher cycle consistency than the original graph. From our graph assignment estimates, we obtain estimated adjacency matrices $\Tilde{\mathbf{A}}^{(l)}$ for $l\in\{1, \dots, k\},$ where $\Tilde{\mathbf{A}}_{i,j}^{(l)}=\mathbbm{1}(g(i,j)=l)\Tilde{\mathbf{A}}_{i,j}.$ 
Let $\sT^{(l)}=\{(i,j,q):\Tilde{A}_{i,j}^{(l)}\cdot\Tilde{A}_{j,q}^{(l)}\cdot\Tilde{A}_{q,i}^{(l)}>0\}$ denote the set of all triangles in $\tilde{\mathcal G}_l$, 
and set $S_{i,j,q}^{(l)}=\Tilde{\mathbf{A}}_{i,j}^{(l)}+\Tilde{\mathbf{A}}_{j,q}^{(l)}+\Tilde{\mathbf{A}}_{q,i}^{(l)}$ for $(i,j,q)\in\sT^{(l)}.$ We define
\vspace{-2mm}
\begin{equation} \label{eq:cycle}
\mathcal{L}_\text{cycle}^{(l)}=
\frac{1}{\lvert\sT^{(l)}\rvert}\sum_{(i,j,q)\in\sT^{(l)}}  \min(S_{i,j,q}^{(l)}\;\mbox{mod} \;  2\pi,   
(-S_{i,j,q}^{(l)})\;\mbox{mod} \;  2\pi) 
\end{equation}
\looseness=-1
and set $\mathcal{L}_\text{cycle}=\frac{1}{k}\sum_{l=1}^{k}\mathcal{L}_\text{cycle}^{(l)}$. 
The default training loss for $k\geq2$ is $\mathcal{L}_\text{cycle}$ or $\mathcal{L}_\text{upset}$ alone; in the experiment section, we also report the performance of a variant based on $\mathcal{L}_\text{upset}+\mathcal{L}_\text{cycle}$. 

For evaluation, we compute ${\mathcal D}_{\rm{MSE}}$ with \cref{eq:mse}, 
for each of the $k$ sets of angles, and consider the average. As the ordering of the $k$ sets can be arbitrary, we consider all permutations of $\{1, \dots, k\},$ denoted by $perm(k)$. Denoting the ground-truth angle matrix as $\mathbf{R}$ , whose $(i,l)$ entry is the ground-truth angle $\theta_{i,l}$, and the $l$-th entry of the permutation $pe$ by $pe(l)$, the final MSE value is \vspace{-3mm}
\begin{equation}
    \mathcal{D}_\text{MSE}(\mathbf{r}, \mathbf{R})=\frac{1}{k}\min_{pe\in perm(k)}\sum_{l=1}^{k}\mathcal{D}_\text{MSE}(\mathbf{r}_{:, pe(l)}, \mathbf{R}_{:, l}).
\end{equation}
Note that the MSE loss is not used during training as we do not have any ground-truth supervision; the MSE formulation in \cref{eq:mse} is only used for evaluation. The lack of ground-truth information in the presence of noise is precisely what renders this problem very difficult. If any partial ground-truth information is available, then this can be incorporated into the loss function.
%%%%%%%%%%%%%%%%%%%%%%%%%%%%%%%%%%%%%%%%%%%%%%%%%%%%%%%
\vspace{-3mm}
\section{GNNSync architecture}
\iffalse
\begin{wrapfigure}{r}{0.3\linewidth}
\centering
\vspace{-3mm}
\includegraphics[width=\linewidth]{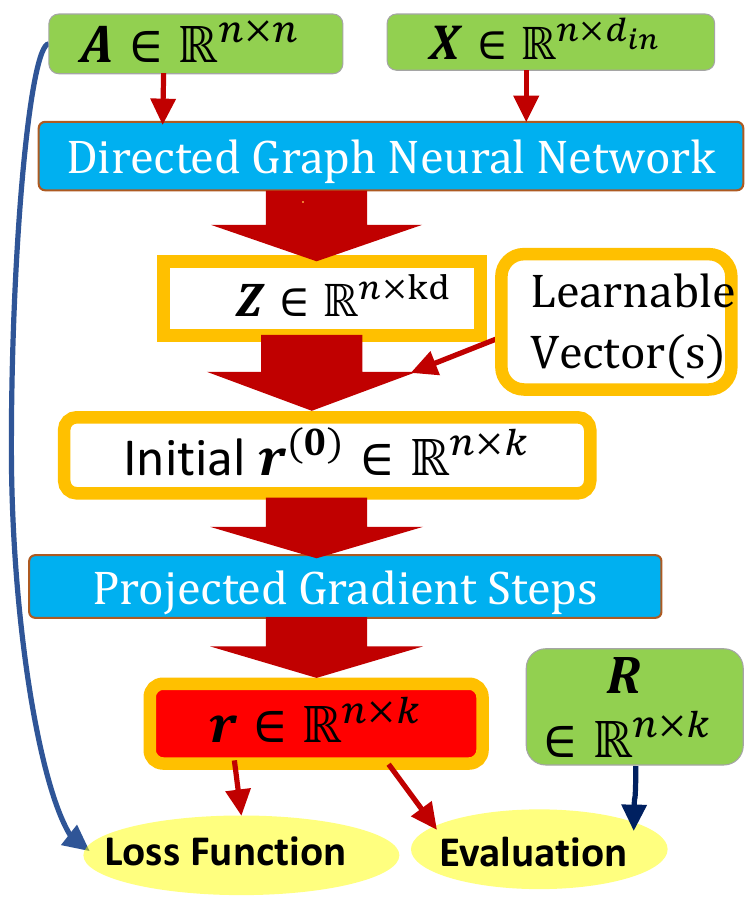}
\vspace{-5mm}
\caption{GNNSync overview: starting from an adjacency matrix $\mathbf{A}$ encoding (noisy) pairwise offsets and an input feature matrix $\mathbf{X}$, GNNSync first applies a directed \yh{GNN} model to learn node embeddings $\mathbf{Z}$ for each node. Then it calculates the inner product with a learnable vector (or $k$ learnable vectors for $k>1$) to produce outcomes for the initial estimated angles $r_{i,l}^{(0)} \in [0, 2\pi)$ for $l\in\{1,\dots,k\}$, after rescaling. 
We then apply several projected gradient steps to the initial angle estimates to obtain the final angle estimates, $r_{i,l} \in [0, 2\pi)$. Let the ground-truth angle matrix be $\mathbf{R}\in\mathbb{R}^{n\times k}.$ The loss function is applied to the output angle matrix $\mathbf{r}$, given the input adjacency matrix $\mathbf{A},$ while the final evaluation is based on $\mathbf{R}$ and $\mathbf{r}.$ Orange frames indicate trainable vectors/matrices. Green squares correspond to fixed inputs. The red square denotes the final estimated angles (outputs), and the yellow circles correspond to the loss function and evaluation.}
\vspace{-8mm}
    \label{fig:framework_overview}
\end{wrapfigure}
\fi
\begin{figure}[hbt]
\vspace{-5mm}
\begin{minipage}[c]{0.25\textwidth}
\includegraphics[width=\textwidth]{}
\end{minipage}
\begin{minipage}[c]{0.78\textwidth}
\caption{\looseness=-1 GNNSync overview: starting from an adjacency matrix $\mathbf{A}$ encoding (noisy) pairwise offsets and an input feature matrix $\mathbf{X}$, GNNSync first applies a directed GNN to learn node embeddings $\mathbf{Z}$. % for each node. 
It then calculates the inner product with a learnable vector (or $k$ learnable vectors for $k>1$) to produce %outcomes for 
the initial estimated angles $r_{i,l}^{(0)} \in [0, 2\pi)$ for $l\in\{1,\dots,k\}$, after rescaling.  
It then applies several projected gradient steps to the initial angle estimates to obtain the final angle estimates, $r_{i,l} \in [0, 2\pi)$. Let the ground-truth angle matrix be $\mathbf{R}\in\mathbb{R}^{n\times k}.$ The loss function is applied to the output angle matrix $\mathbf{r}$, given %the input adjacency matrix 
$\mathbf{A},$ while the final evaluation is based on $\mathbf{R}$ and $\mathbf{r}.$ Orange frames indicate trainable vectors/matrices,  green squares fixed inputs, the red square the final estimated angles (outputs), and the yellow circles the loss function and evaluation.
    } \label{fig:framework_overview}
  \end{minipage}
  \vspace{-8mm}
\end{figure}

% \floatbox[{\capbeside\thisfloatsetup{capbesideposition={right,top},capbesidewidth=4cm}}]{figure}[\FBwidth]  {\caption{A test figure with its caption side by side}\label{fig:test}}

\vspace{-2mm}
\subsection{Obtaining directed graph embeddings}
\vspace{-2mm}
For obtaining digraph embeddings, any digraph GNN that outputs node embeddings can be applied,  
e.g. DIMPA by \cite{he2022digrac}, the inception block model by \cite{tong2020digraph}, and MagNet by \cite{zhang2021magnet}. 
Here we employ DIMPA; details are in  \ref{appendix_sec:stability}. 
Denoting the final node embedding matrix by $\mathbf{Z}\in \mathbb{R}^{n \times kd}$, the embedding vector $\mathbf{z}_i$ for a node $i$ is $\mathbf{z}_i=(\mathbf{Z})_{(i,:)} \in \mathbb{R}^{kd}, $ the $i^\text{th}$ row of $\mathbf{Z}$.
%%%%%%%%%%%%%%%%%%%%%%%%%%%%%%%%%%%%%%%%%%%%%%
\vspace{-2.5mm}
\subsection{Obtaining initial estimated angles}
\vspace{-2.5mm}
\label{subsec:angles}
To obtain the initial estimated angles for the angular synchronization problem, 
we introduce a trainable vector $\mathbf{a}$ with dimension equal to the embedding dimension, 
then calculate the unnormalized estimated angles by the inner product of $\mathbf{z}_i$ with  $\mathbf{a}$, plus a trainable bias $b$, followed by a sigmoid layer to force positive values, and finally rescale the angles to $[0, 2\pi)$; in short: $r_i^{(0)} = 2\pi \,\text{sigmoid}(\mathbf{z}_i \cdot \mathbf{a} + b).$

For general $k$-synchronization, we apply independent $\mathbf{a},b$ values to obtain $k$ different groups of initial angle estimates based on different columns of the node embedding matrix $\mathbf{Z}.$ In general, denote $\mathbf{Z}_{i, u:v}$ as the $(v-u+1)$-vector whose entries are from the $i$-th row and the $u$-th to $v$-th columns of the matrix $\mathbf{Z}.$ 
With a trainable vector $\mathbf{a}^{(l)}$ for each $l\in\{1,\dots,k\}$ with dimension equal to $d$, we obtain the unnormalized estimated angles by the inner product of $\mathbf{Z}_{i, (l-1)d+1:ld}$ with $\mathbf{a}^{(l)}$, plus a trainable bias $b_l$, followed by a sigmoid layer to force positive angle values, then rescale the angles to $[0, 2\pi)$; in short: $r_{i,l}^{(0)} = 2\pi\, \text{sigmoid}(\mathbf{z}_{i, (l-1)d+1:ld} \cdot \mathbf{a}^{(l)} + b_l).$
%%%%%%%%%%%%%%%%%%%%%%%%%%%%%%%%%%
\vspace{-3mm}
\subsection{Projected gradient steps for final angle estimates}
\vspace{-2mm}
\begin{wrapfigure}{L}{0.4\textwidth}
\vspace{-8mm}
\begin{minipage}{0.4\textwidth}
\begin{algorithm}[H]%[htb]
\begin{small}
\caption{Projected Gradient Steps}
\textbf{Input}: Initial angle estimates $\mathbf{r}^{(0)}\in\mathbb{R}^{n\times k}$, Hermitian matrix $\mathbf{H}\in\mathbb{R}^{n\times n}$, number of steps $\Gamma$ (default: 5).\\
\textbf{Parameter}: (Initial) parameter set $\{\alpha_\gamma\geq 0\}_{\gamma=1}^\Gamma$ that could either be fixed or trainable (default: fixed value 1).\\ 
\textbf{Output}: Updated angle estimates $\mathbf{r}\in\mathbb{R}^{n\times k}$.

\begin{algorithmic}[1] 
\STATE $l \leftarrow 1$;
\FOR{$l \leq k$}
\STATE $\gamma \leftarrow 1$; $\mathbf{y}\leftarrow\mathbf{r}_{:, l}^{(0)}$;\\
\FOR{$\gamma \leq \Gamma$}
\STATE $\Tilde{\mathbf{y}} \leftarrow \exp(\iota \mathbf{y})$; 
\STATE $\Tilde{\mathbf{y}} \leftarrow \alpha_\gamma\Tilde{\mathbf{y}} + \mathbf{H}\Tilde{\mathbf{y}};$
\STATE $\mathbf{y} \leftarrow \text{angle}(\Tilde{\mathbf{y}})$ to obtain elementwise angles in radians from complex numbers;
\STATE $\gamma \leftarrow  \gamma + 1.$
\ENDFOR
\STATE $\mathbf{r}_{:, l}\leftarrow \mathbf{y}.$
\STATE $l \leftarrow l+1$.
\ENDFOR
\STATE \textbf{return} $\mathbf{r}$.
\end{algorithmic}
\label{algo:projected}
\end{small}
\end{algorithm}
\end{minipage}
\vspace{-11mm}
\end{wrapfigure}
Our final angle estimates are obtained after applying several (default: $\Gamma=5$) projected gradient steps to the initial angle estimates.  In brief, projected gradient descent for constrained optimization problems first takes a gradient step while ignoring the constraints, and then projects the result back onto the feasible set to incorporate the constraints. Here the projected gradient steps are inspired by \citet{boumal2016nonconvex}. We construct $\mathbf{H}$ by $\mathbf{H}_{i,j}= \exp(\iota \mathbf{A}_{i,j})\mathbbm{1}(\mathbf{A}_{i,j} \neq 0),$ and update the estimated angles using Algo.~\ref{algo:projected}, where $\mathbf{r}_{:, l}$ denotes the $l$-th column of $\mathbf{r}$.  In Algo.~\ref{algo:projected} the gradient step is on line 6, while the projection step on line 7 projects the updated matrix to elementwise angles. Fig.~\ref{fig:framework_overview} shows the GNNSync framework.

If graph assignments can be estimated effectively right after the GNN, one  can replace $\mathbf{H}$ with $\mathbf{H}^{(l)}$ for each $l=1, \dots,k$ separately, where $\mathbf{H}_{i,j}^{(l)}= \exp(\iota \mathbf{A}_{i,j}^{(l)})\mathbbm{1}(\mathbf{A}_{i,j}^{(l)} \neq 0),$ and  $\mathbf{A}_{i,j}^{(l)}=\mathbbm{1}(g(i,j)=l)\mathbf{A}_{i,j}$ is the estimated adjacency matrix for graph $\mathcal{G}_l$ using network assignments from $g(i,j)$ from \cref{eq:graph_assignment} applied to the initial angle estimates $\mathbf{r}^{(0)}$. Yet, separate $\mathbf{H}^{(l)}$'s may make the architecture sensitive to the accuracy of graph assignments after the GNN, and hence for robustness we simply choose a single $\mathbf{H}$. We also find in Sec.~\ref{sec:ablation} that the use of $\mathbf{H}$ instead of separate $\mathbf{H}^{(l)}$'s is essential
for satisfactory performance. Besides, it is possible to make Algo.~\ref{algo:projected} parameter-free by further fixing the 
$\{\alpha_\gamma\}$ values (default: $\alpha_\gamma=1, \forall \gamma$); we find that using fixed $\{\alpha_\gamma\}$ does not strongly affect performance in our experiments. GNNSync executes the projected gradient descent steps at every training iteration as part of a unified end-to-end training process, but one could also use Algo.~\ref{algo:projected} to post-process predicted angles without putting the steps in the end-to-end framework. We find that putting Algo.~\ref{algo:projected} in our end-to-end training framework is usually helpful.
%%%%%%%%%%%%%%%%%%%%%%%%%%%%%%%%%%%%%%%%
\vspace{-2.5mm}
\subsection{Robustness of GNNSync}
\vspace{-2.5mm}
Measurement noise that perturbs the edge offsets can significantly impact the performance of group synchronization algorithms. To this end, we demonstrate  the robustness of GNNSync to such noise perturbations, with the following theoretical guarantee, proved and further discussed in App.~\ref{appendix_sec:stability} 
\begin{proposition} \label{prop:stability}
\looseness=-1 For adjacency matrices $\mathbf{A}, \hat{\mathbf{A}},$ assume their row-normalized variants $\mathbf{A}_s, \hat{\mathbf{A}}_s, \mathbf{A}_t, \hat{\mathbf{A}}_t$ satisfy $\left\lVert \mathbf{A}_s- \hat{\mathbf{A}}_s\right\rVert_F<\epsilon_s$ and $\left\lVert \mathbf{A}_t- \hat{\mathbf{A}}_t\right\rVert_F<\epsilon_t,$ where subscripts $s, t$ denote source and target, resp. Assume further their input feature matrices $\mathbf{X}, \hat{\mathbf{X}}$ satisfy $\left\lVert \mathbf{X}- \hat{\mathbf{X}}\right\rVert_F<\epsilon_f$. Then their initial angles $\mathbf{r}^{(0)}, \hat{\mathbf{r}}^{(0)}$ from a trained GNNSync using DIMPA satisfy $\left\lVert \mathbf{r}^{(0)}- \hat{\mathbf{r}}^{(0)}\right\rVert_F<B_s\epsilon_s+B_t\epsilon_s+B_f\epsilon_f$, for values $B_s, B_t, B_f$ that can be bounded by imposing constraints on model parameters and input. 
\end{proposition}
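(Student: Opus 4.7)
The plan is to prove this as a standard Lipschitz-propagation result: trace the perturbations through each component of GNNSync (with DIMPA as the backbone) and bound them layer-by-layer using the triangle inequality together with the submultiplicativity of the Frobenius norm under the operator norm. The basic tool throughout is the identity $\|PW - \hat P\hat W\|_F \le \|P-\hat P\|_F\|W\|_{\mathrm{op}} + \|\hat P\|_{\mathrm{op}}\|W-\hat W\|_F$, which lets us split perturbations in products into additive contributions from each factor.

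First I would recall the DIMPA pipeline explicitly. Input features $\mathbf{X}$ are passed through an MLP to produce initial embeddings, after which $L$ rounds of source-side and target-side aggregations of the form $\mathbf{A}_s^h \mathbf{H}^{(l)}$ and $\mathbf{A}_t^h \mathbf{H}^{(l)}$ (for hops $h=1,\dots,H$) are combined via learned weights, interleaved with MLPs and bounded Lipschitz activations (e.g.\ ReLU, which is $1$-Lipschitz); the source and target streams are then concatenated and passed through another MLP to produce $\mathbf{Z}$. I would introduce shorthand for the Lipschitz constant $L_\phi$ of each MLP $\phi$, which, under the proposition's constraints on model parameters, is bounded by a product of operator norms of the learned weight matrices times the Lipschitz constants of the activations. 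I would also impose an a-priori bound $\|\mathbf{X}\|_F, \|\hat{\mathbf{X}}\|_F \le R_X$ on the input features and $\|\mathbf{A}_s\|_{\mathrm{op}}, \|\mathbf{A}_t\|_{\mathrm{op}} \le 1$ (which holds since they are row-stochastic), so that intermediate embeddings remain norm-bounded by an explicit constant that depends only on the depth $L$, number of hops $H$, and layerwise Lipschitz constants.

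Next I would induct on the layer index. At layer $l$, write $\mathbf{H}^{(l)}$ and $\hat{\mathbf{H}}^{(l)}$ for the intermediate embeddings. Using the splitting identity above on each aggregation $\mathbf{A}_s^h \mathbf{H}^{(l)}$, I obtain a recursion of the form
\begin{equation*}
\|\mathbf{H}^{(l+1)} - \hat{\mathbf{H}}^{(l+1)}\|_F \le L_{\phi_l}\!\left(C_s^{(l)}\epsilon_s + C_t^{(l)}\epsilon_t + D^{(l)}\|\mathbf{H}^{(l)} - \hat{\mathbf{H}}^{(l)}\|_F\right),
\end{equation*}
where $C_s^{(l)}, C_t^{(l)}, D^{(l)}$ depend on the (bounded) embedding norms, aggregation weights, and number of hops. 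Unrolling this recursion from $l=0$ (where the base case uses $\|\mathbf{H}^{(0)} - \hat{\mathbf{H}}^{(0)}\|_F \le L_{\phi_0}\epsilon_f$) yields an inequality
\begin{equation*}
\|\mathbf{Z} - \hat{\mathbf{Z}}\|_F \le \tilde B_s \epsilon_s + \tilde B_t \epsilon_t + \tilde B_f \epsilon_f,
\end{equation*}
with constants $\tilde B_s, \tilde B_t, \tilde B_f$ expressible as finite products and sums of the previous Lipschitz constants and norm bounds. Finally, the initial angle map $r_i^{(0)} = 2\pi\,\sigma(\mathbf{z}_i\cdot\mathbf{a}+b)$ is Lipschitz in $\mathbf{z}_i$ with constant at most $2\pi\cdot \tfrac{1}{4}\cdot\|\mathbf{a}\|_2$ (since $\sigma'\le 1/4$), so $\|\mathbf{r}^{(0)} - \hat{\mathbf{r}}^{(0)}\|_F \le (\pi/2)\|\mathbf{a}\|_2\|\mathbf{Z} - \hat{\mathbf{Z}}\|_F$, giving the claimed bound with $B_s=(\pi/2)\|\mathbf{a}\|_2\tilde B_s$ and analogously for $B_t, B_f$.

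I expect the main obstacle to be bookkeeping rather than any single hard step: keeping the recursion tidy across the hops, the source/target branches, and the concatenation, and clearly stating which parameter-norm constraints (on the MLP weights, on $\|\mathbf{a}\|_2$, and on $R_X$) are needed to guarantee that the $B_s, B_t, B_f$ are finite rather than blowing up with depth. I would therefore collect these assumptions explicitly at the start of the proof (e.g.\ operator-norm bounds on each weight matrix) and point out that since $\mathbf{A}_s, \hat{\mathbf{A}}_s$ and $\mathbf{A}_t, \hat{\mathbf{A}}_t$ are row-stochastic their operator norms are automatically bounded by $1$, which controls the growth from the aggregation stages and makes the constants finite for any fixed $L$ and $H$.
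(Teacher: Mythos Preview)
Your approach is essentially the same Lipschitz-propagation argument as the paper's, just kept abstract. The paper specializes immediately to the concrete DIMPA instantiation actually used ($h=2$ hops, a two-layer bias-free MLP), writes out $\mathbf{Z}_s=(\mathbf{I}_n+a_{s1}\mathbf{A}_s+a_{s2}\mathbf{A}_s^2)\,\mathrm{ReLU}(\mathbf{X}\mathbf{W}_{s0})\mathbf{W}_{s1}$ explicitly, and from this derives closed-form expressions for $B_s,B_t,B_f$ in terms of Frobenius norms of the trained weights and of $\mathbf{A}_s,\hat{\mathbf{A}}_s,\mathbf{A}_t,\hat{\mathbf{A}}_t$; no induction is needed since there is effectively one aggregation layer. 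Your inductive recursion over general depth $L$ and hop count $H$ would of course subsume this, at the cost of less explicit constants. Your sigmoid Lipschitz constant $1/4$ is sharper than the paper's, which simply uses $1$.

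One technical slip to fix: row-stochastic matrices do \emph{not} in general have spectral norm at most $1$ (that bound holds for the $\ell_\infty\!\to\!\ell_\infty$ operator norm, not the $\ell_2\!\to\!\ell_2$ norm that pairs with $\|\cdot\|_F$ in your splitting identity); e.g.\ $\bigl(\begin{smallmatrix}1&0\\1&0\end{smallmatrix}\bigr)$ has spectral norm $\sqrt{2}$. The paper avoids this issue by never claiming such a bound: it simply absorbs factors like $\|a_{s1}\mathbf{I}+a_{s2}(\mathbf{A}_s+\hat{\mathbf{A}}_s)\|_F$ and $\|\mathbf{I}_n+a_{s1}\hat{\mathbf{A}}_s+a_{s2}\hat{\mathbf{A}}_s^2\|_F$ into the constants $B_s,B_f$. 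You can do the same, and then your ``model-parameter and input constraints'' just amount to bounding these Frobenius norms rather than relying on row-stochasticity.
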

%%%%%%%%%%%%%%%%%%%%%%%%%%%%%%%%%%%%%%%%
\vspace{-5mm}
\section{Experiments}
\label{sec:experiments}
\vspace{-3mm}
Implementation details are in App.~\ref{appendix_sec:implementation} and extended results in App.~\ref{app_sec:experiments_full}. 
%%%%%%%%%%%%%%%%%%%%%%%%%%%%%%%
\vspace{-3.5mm}
\subsection{Data sets and protocol}
\vspace{-2.5mm}
\label{sec:data_gen}
Previous works in angular synchronization typically only consider synthetic data sets in their experiments, and those applying synchronization to real-world data do not typically publish the data sets. To bridge the gap between synthetic experiments and the real world, we construct synthetic data sets with both correlated and uncorrelated ground-truth rotation angles, using various measurement graphs and noise levels. In addition, we conduct sensor network localization on two data sets.

For synthetic data, we perform experiments on graphs with $n=360$ nodes for different measurement graphs, with edge density parameter $p\in\{0.05, 0.1, 0.15\}$, noise level $\eta\in\{0,0.1,\dots,0.9\}$ for $k=1$,  and $\eta\in\{0, 0.1,\dots, 0.7\}$ for $k\in\{2,3,4\}$. The graph generation procedure is as follows (with further details in App.~\ref{sec:apprandom}):
\bb 1) Generate $k$ group(s) of ground-truth angles. One option is to generate each angle from the same Gamma distribution with shape 0.5 and scale $2\pi$. We denote this option with subscript ``1". As angles could be highly correlated in practical scenarios, we introduce a more realistic but challenging option ``2", with multivariate normal ground-truth angles. The mean of the ground-truth angles is $\pi,$ with covariance matrix for each $l\in\{1,\dots,k\}$ defined by $\mathbf{w}\mathbf{w}^\top,$ where entries in $\mathbf{w}$ are generated independently from a standard normal distribution. We explore two more options in the SI.
We then apply $\mbox{mod} \;  2\pi$ to all angles. 
\bbs 2) Generate a noisy background adjacency matrix $\mathbf{A}_\text{noise}\in\mathbbm{R}^{n\times n}$.
\bbs 3) Construct a complete adjacency matrix where $\eta$ portion of the entries are noisy and the rest represent true angular differences. 
\bbs 4) Generate a measurement graph and sparsify the complete adjacency matrix by only keeping the edges in the measurement graph.
 
\looseness=-1 We construct 3 types of measurement graphs from NetworkX~\citep{hagberg2008exploring} and use the following notations, where the subscript $o\in\{1, 2, 3,4\}$ is the option mentioned in step 1) above: \\
\bbs Erd\H{o}s-R\'enyi (ER) Outlier model: denoted by $\text{ERO}_o(p,k,\eta)$, using as the measurement graph the ER model from NetworkX, where $p$ is the edge density parameter for the ER measurement graph; 
\\\bbs Barabasi Albert (BA) Outlier model: denoted by $\text{BAO}_o(p,k,\eta)$, where the measurement graph is a BA model with the number of edges to attach from a new node to existing nodes equal to $\lceil np/2\rceil$, using the standard implementation from NetworkX~\cite{hagberg2008exploring}; and \\\bbs Random Geometric Graph (RGG) Outlier model: denoted by $\text{RGGO}_o(p,k,\eta)$, with NetworkX parameter ``distance threshold value (radius)"  $2p$ for the RGG measurement graph. For $k=1,$ we omit the value $k$ and subscript $o$ in the notation, as the two options coincide in this special case.  

For real-world data, we conduct sensor network localization on the U.S. map and the PACM point cloud data set~\citep{cucuringu2012sensor} with a focus on the SO(2) component, as follows, with data processing details provided in App.~\ref{appsec:real_data}.
\bb 1) Starting with the ground-truth locations of $n=1097$ U.S. cities (resp., $n=426$ points), we construct patches using each city (resp., point) as a central node and add its $50$ nearest neighbors to the corresponding patch.
\bb 2) For each patch, we add noise to each node's coordinates independently.
\bb 3) We then rotate the patches using random rotation angles (ground-truth angles generated as in 1) for synthetic models).
For each pair of patches that have at least $6$ overlapping nodes, we apply Procrustes alignment~\citep{gower1975generalized} to estimate the rotation angle based on these overlapping nodes and add an edge to the observed measurement adjacency matrix. 
\bb 4) We perform angular synchronization to obtain the initial estimated angles and update the estimated angles by shifting by the average pairwise differences between the estimated and ground-truth angles, to eliminate the degree of freedom of a global rotation.
\bb 5) Finally, we apply the estimated rotations to the noisy patches and estimate node coordinates by averaging the estimated locations for each node from all patches that contain this node.

%%%%%%%%%%%%%%%%%%%%%%%%%%%%%%%%%%%%%%%%%%%
\vspace{-3mm}
\subsection{Baselines} 
\vspace{-3mm}
In our numerical experiments for angular synchronization, we compare against \underline{\textbf{7 baselines}}, where results are averaged over 10 runs: 
\bbs Spectral Baseline (Spectral) by \cite{singer2011angular},  
\bbs Row-Normalized Spectral Baseline (Spectral\_RN) by \cite{cucuringu2012sensor},   
\bbs Generalized Power Method (GPM) by \cite{boumal2016nonconvex}, 
\bbs TranSync by \cite{huang2017translation}, 
\bbs CEMP\_GCW, 
\bbs CEMP\_MST by \cite{lerman2022robust}, and 
\bbs Trimmed Averaging Synchronization (TAS) by \cite{maunu2023depth}. 

For more general $k$-synchronization, we compare against two baselines from \cite{cucuringu2022extension}, which are based on the top $k$ eigenvectors of the matrix $\mathbf{H}$ or its row-normalized version. We use names \bbs Spectral and \bbs Spectral\_RN to denote them as before. 
To show that GNNSync (as well as the baselines) deviate from trivial or random solutions, we include an additional baseline denoted ``Trivial" for each $k$, where all angles are predicted equal (with value 1, for simplicity).

%%%%%%%%%%%%%%%%%%%%%%%%%%%%%%%%%%%%%%%%%%%
\vspace{-2mm}
\subsection{Main experimental results}
\begin{figure*}[!hbt]
\vspace{-2mm}
    \centering
   \begin{subfigure}[ht]{0.245\linewidth}
    \centering
    \includegraphics[width=\linewidth,trim=0cm 0cm 0cm 1.8cm,clip]{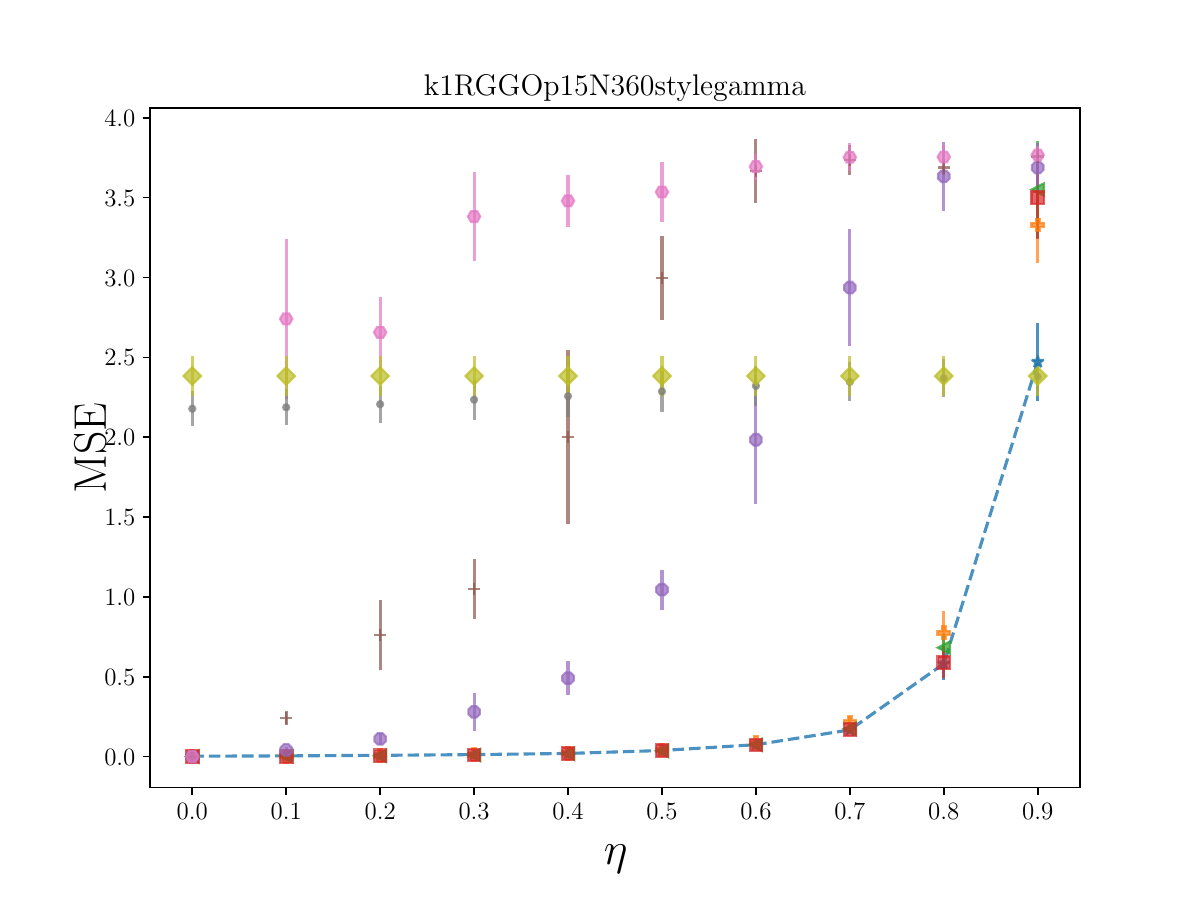}
    \caption{$\text{RGGO}_1(p=0.15)$}
  \end{subfigure}
  \begin{subfigure}[ht]{0.245\linewidth}
    \centering
    \includegraphics[width=\linewidth,trim=0cm 0cm 0cm 1.8cm,clip]{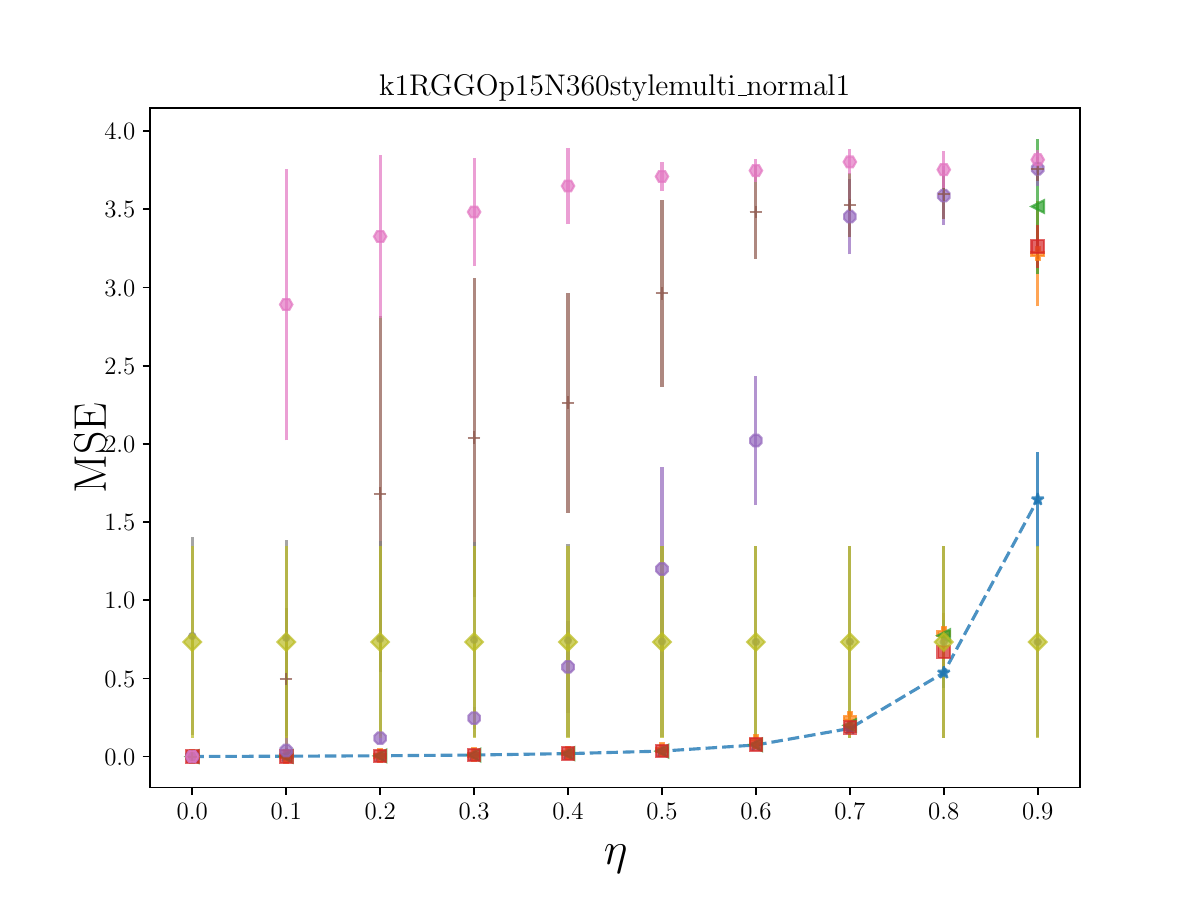}
    \caption{$\text{RGGO}_2(p=0.15)$}
  \end{subfigure} 
   \begin{subfigure}[ht]{0.245\linewidth}
    \centering
    \includegraphics[width=\linewidth,trim=0cm 0cm 0cm 1.8cm,clip]{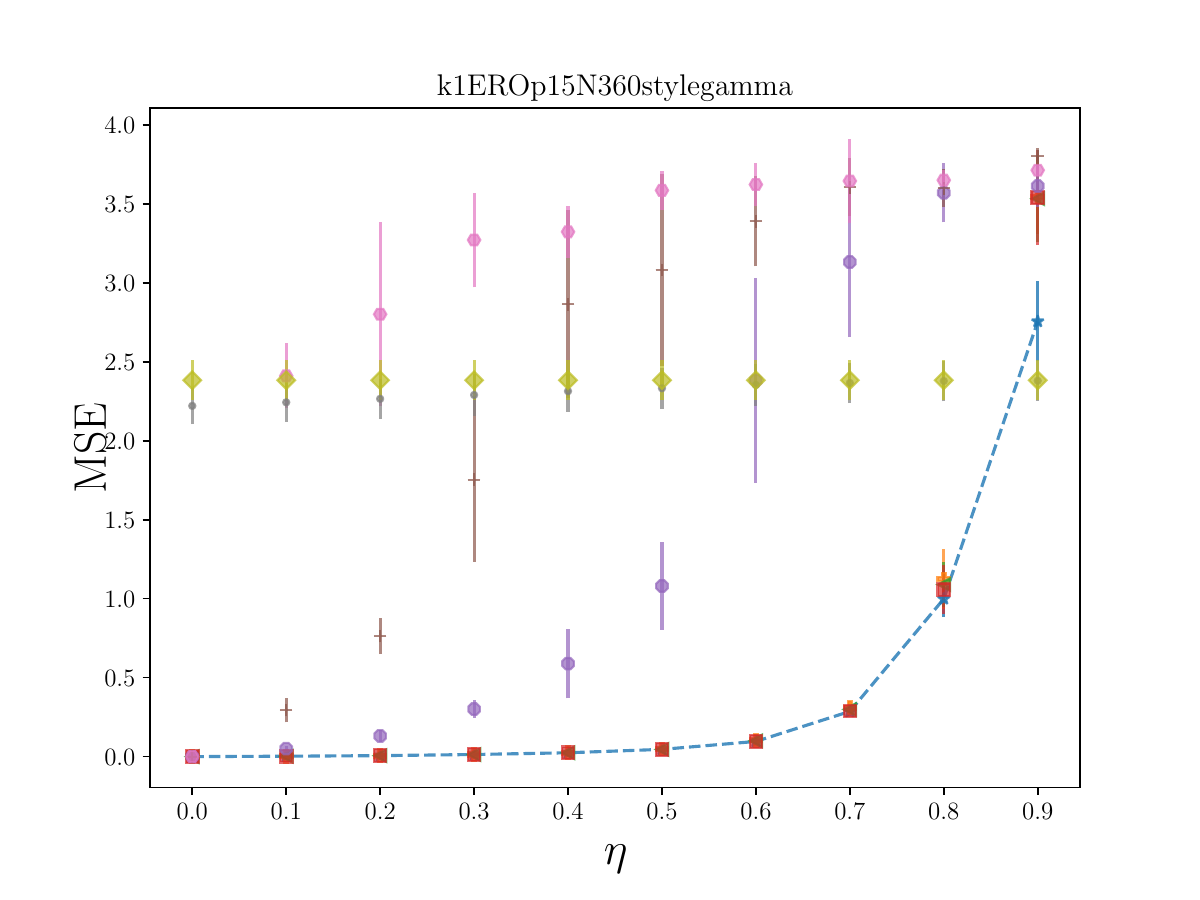}
    \caption{$\text{ERO}_1(p=0.15)$}
  \end{subfigure}
  \begin{subfigure}[ht]{0.245\linewidth}
    \centering
    \includegraphics[width=\linewidth,trim=0cm 0cm 0cm 1.8cm,clip]{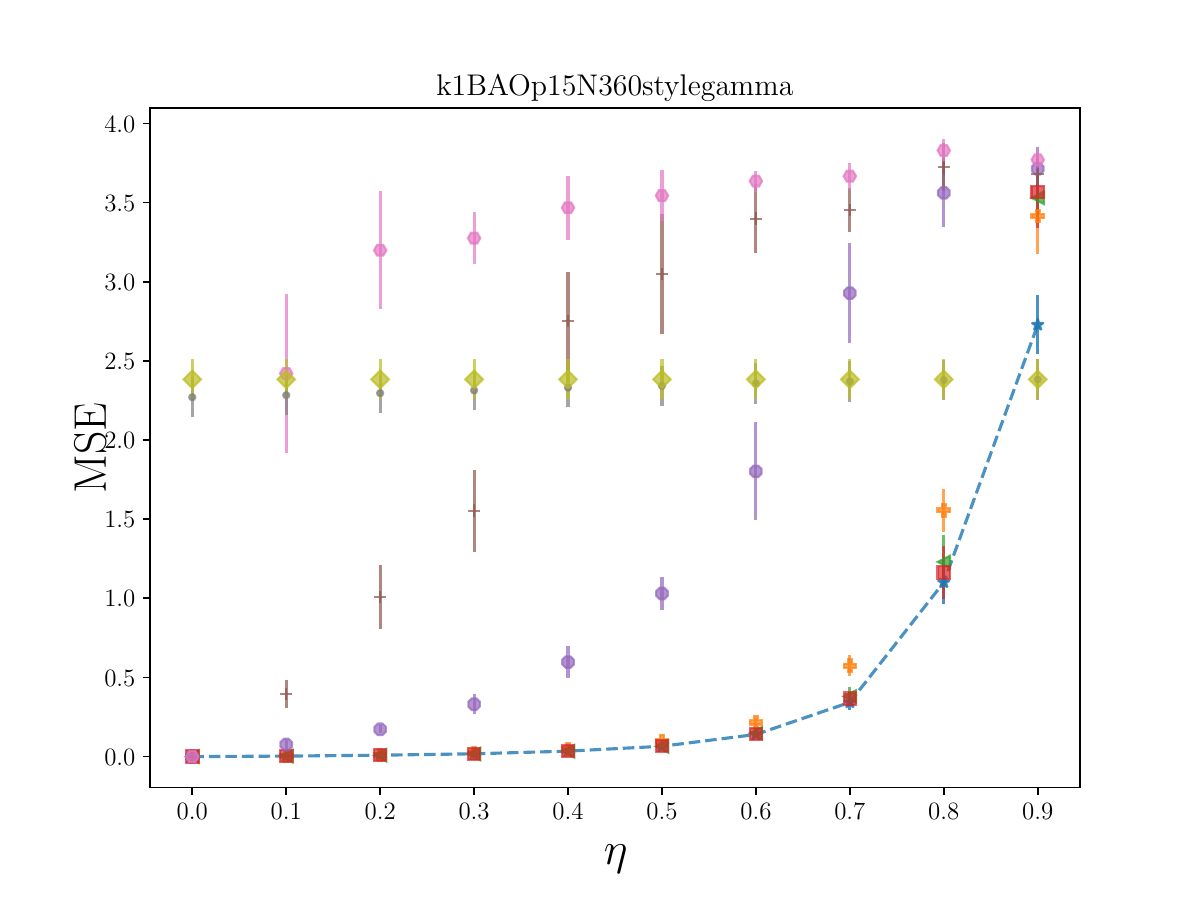}
    \caption{$\text{BAO}_1(p=0.15)$}
  \end{subfigure}
  %%%%%%%%%%%%%%%%%%%%%%%
  %%%%%%%%%%%%%%%%%%%%%%%
\begin{subfigure}[ht]{\linewidth}
    \centering
    \includegraphics[width=\linewidth,trim=0cm 0cm 0cm 0cm,clip]{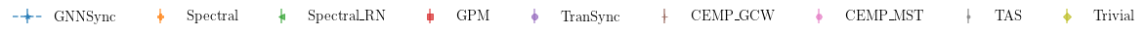}
  \end{subfigure}
  \vspace{-3mm}
    \caption{MSE performance on angular synchronization ($k=1$). Error bars indicate one standard deviation. Dashed lines highlight GNNSync variants.
    }
    \label{fig:sync_compare}
\end{figure*}
%%%%%%%%%%%%%%%%%%%%%%%%%%%%%%%%
\vspace{-1mm}
\begin{figure*}[!hbt]
\vspace{-3mm}
    \centering
     \begin{tabular}{c|c}
  \begin{subfigure}[ht]{0.245\linewidth}
    \centering
    \includegraphics[width=\linewidth,trim=0cm 0cm 0cm 1.8cm,clip]{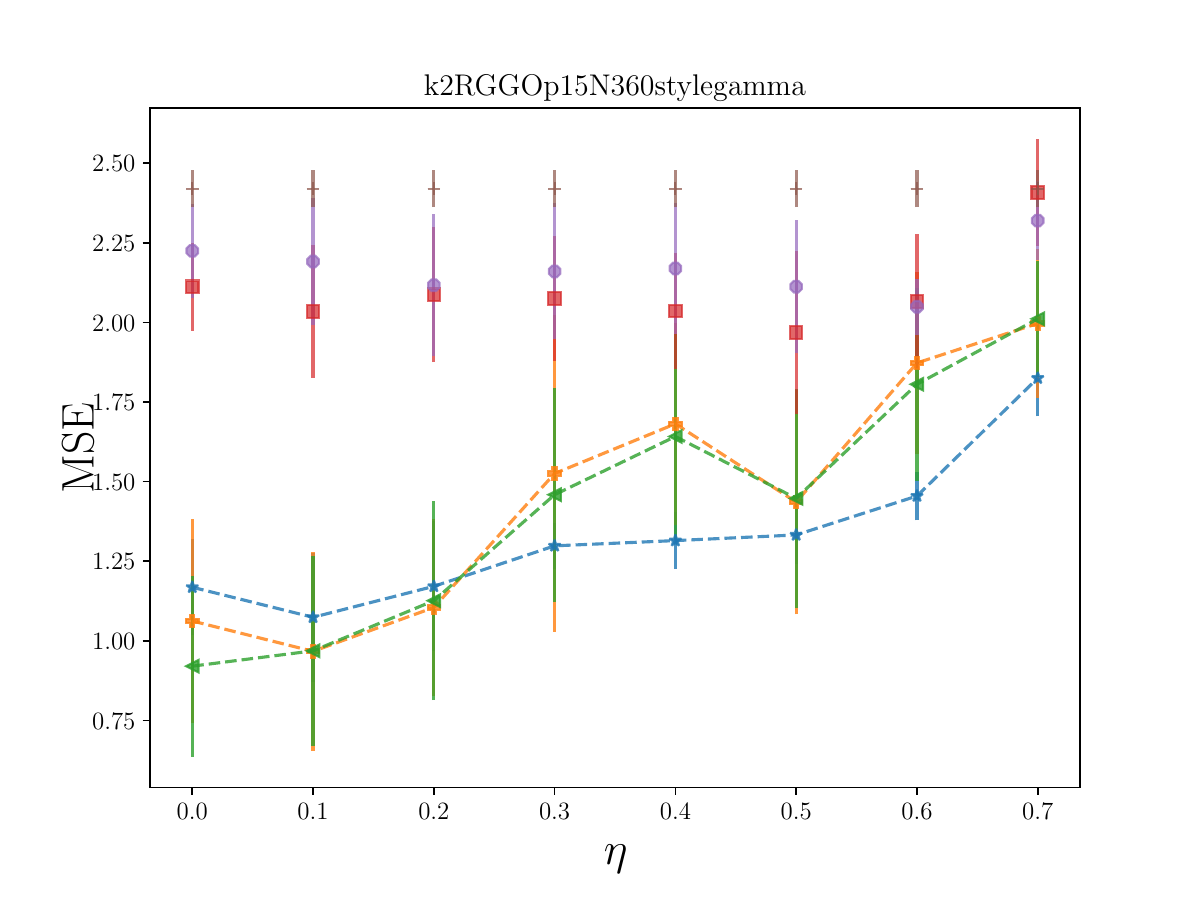}
    \caption{$\text{RGGO}_1(p=0.15, k=2)$}
  \end{subfigure}
  \begin{subfigure}[ht]{0.245\linewidth}
    \centering
    \includegraphics[width=\linewidth,trim=0cm 0cm 0cm 1.8cm,clip]{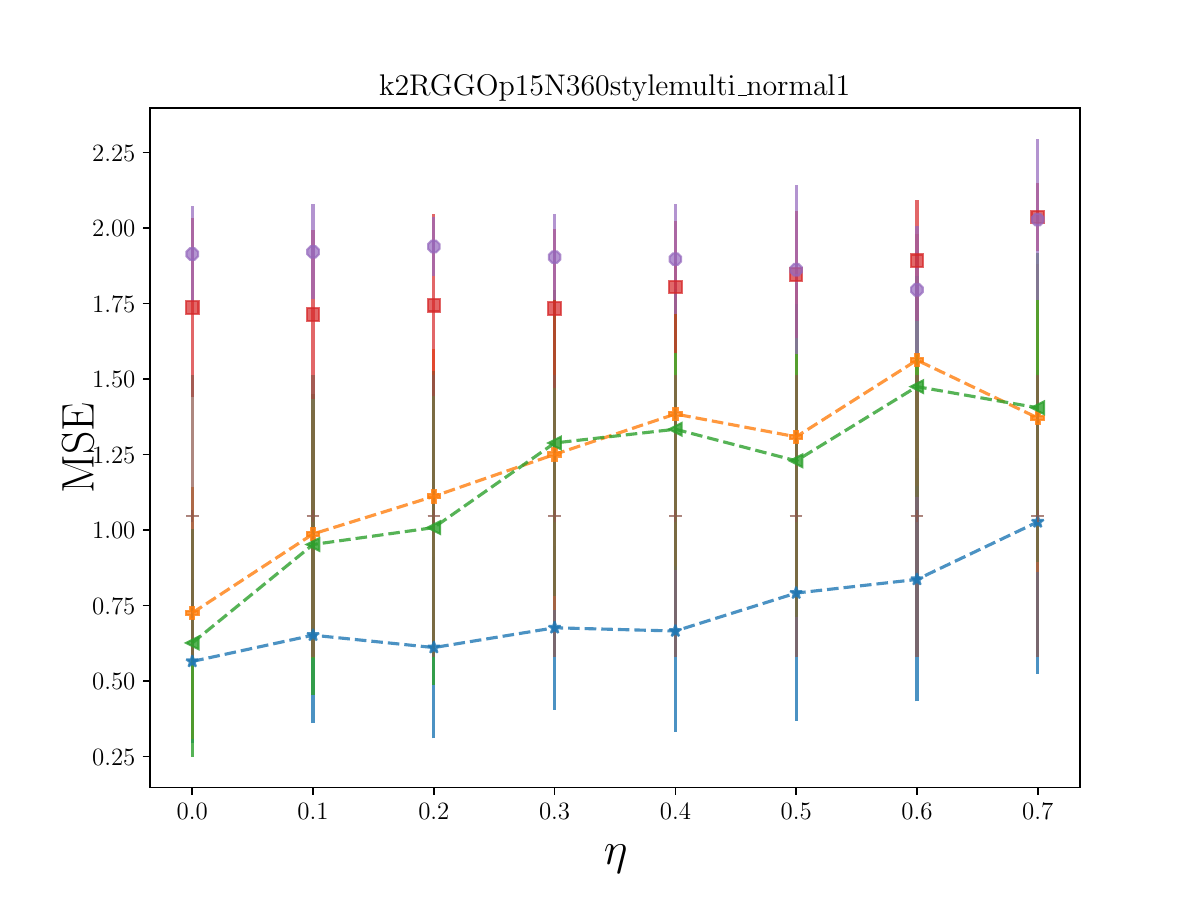}
    \caption{$\text{RGGO}_2(p=0.15, k=2)$}
  \end{subfigure} & 
    \begin{subfigure}[ht]{0.245\linewidth}
    \centering
    \includegraphics[width=\linewidth,trim=0cm 0cm 0cm 1.8cm,clip]{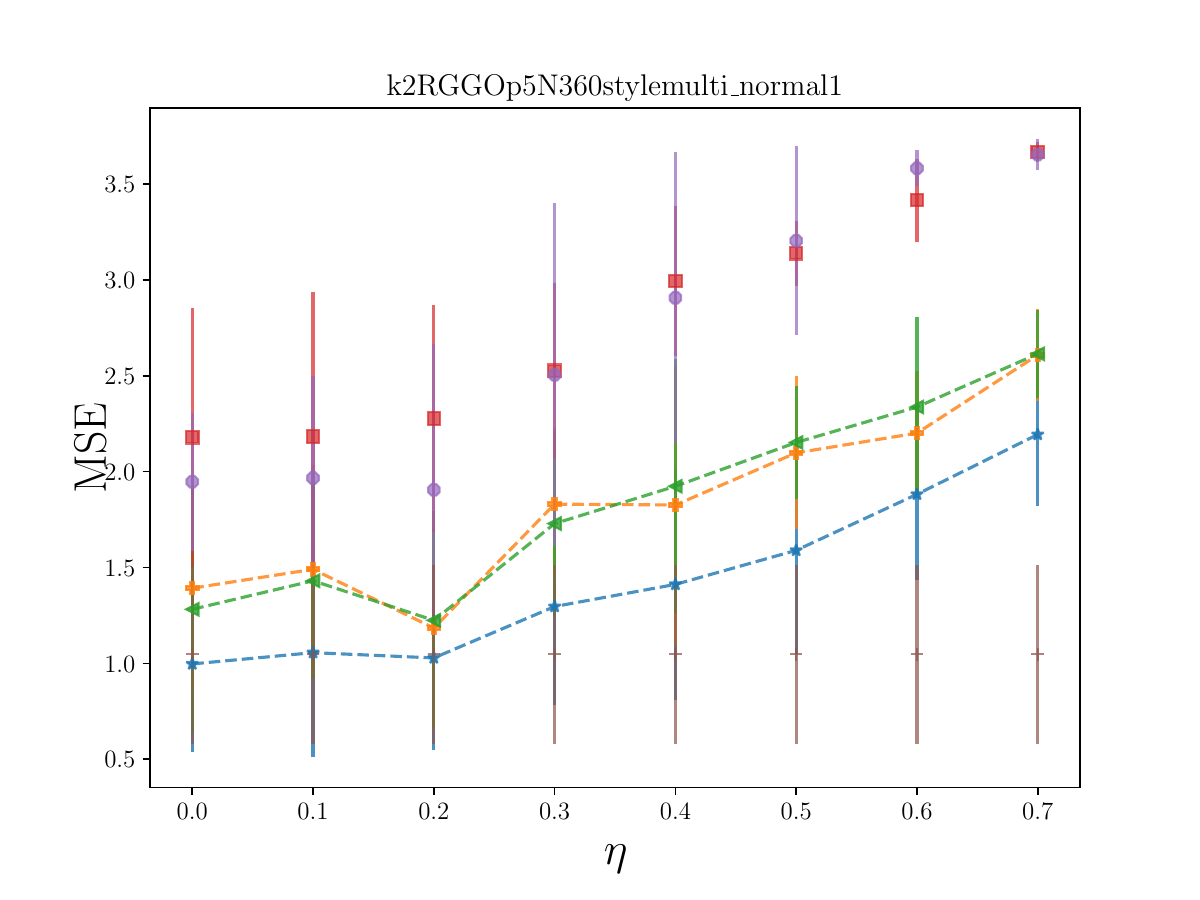}
    \caption{$\text{RGGO}_2(p=0.05, k=2)$}
  \end{subfigure}
  \begin{subfigure}[ht]{0.245\linewidth}
    \centering
    \includegraphics[width=\linewidth,trim=0cm 0cm 0cm 1.8cm,clip]{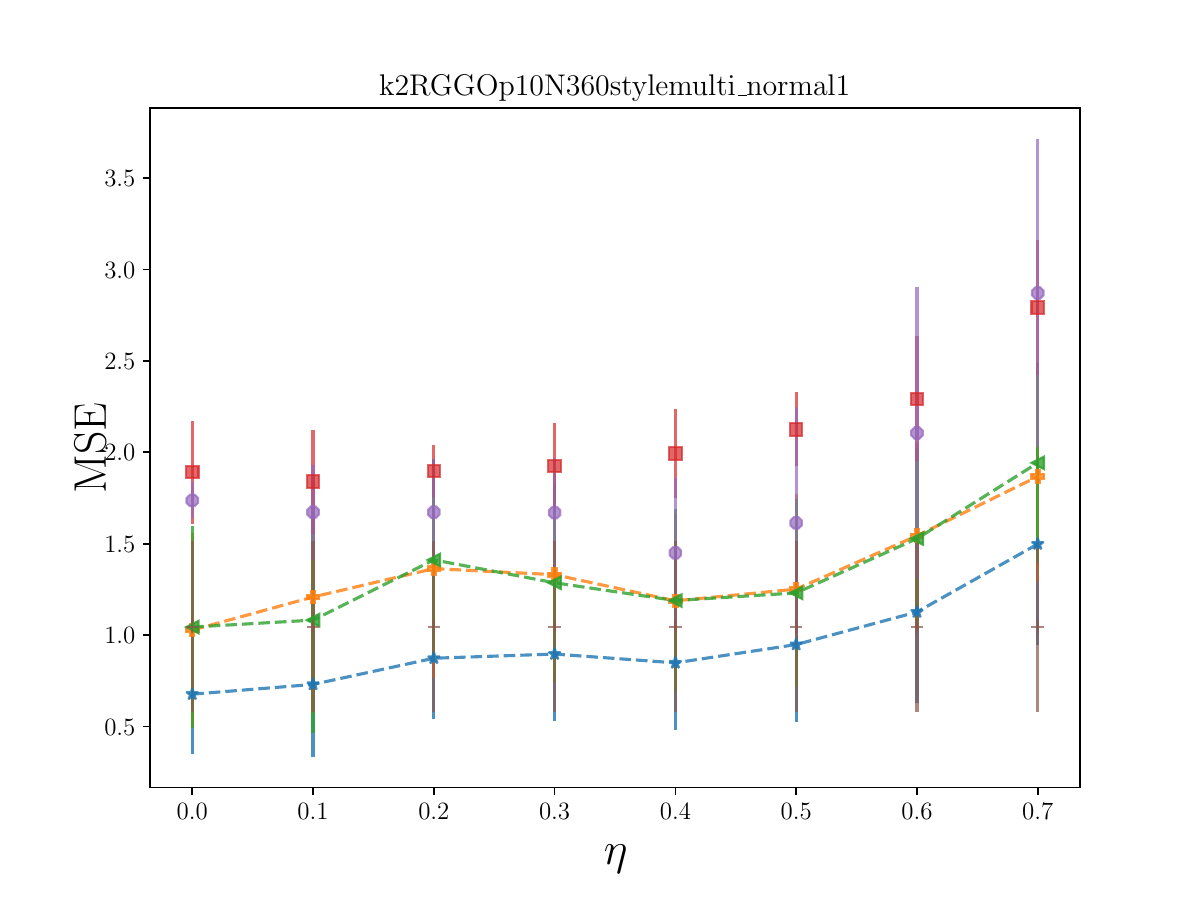}
    \caption{$\text{RGGO}_2(p=0.1, k=2)$}
  \end{subfigure} \\
%%%%%%%%%%%%%%%%%%%%%%%%%%%%%%%%%%
%%%%%%%%%%%%%%%%%%%%%%%%%%%%%%%%%%
%\vspace{0.2cm} 
  \begin{subfigure}[ht]{0.245\linewidth}
    \centering
    \includegraphics[width=\linewidth,trim=0cm 0cm 0cm 1.8cm,clip]{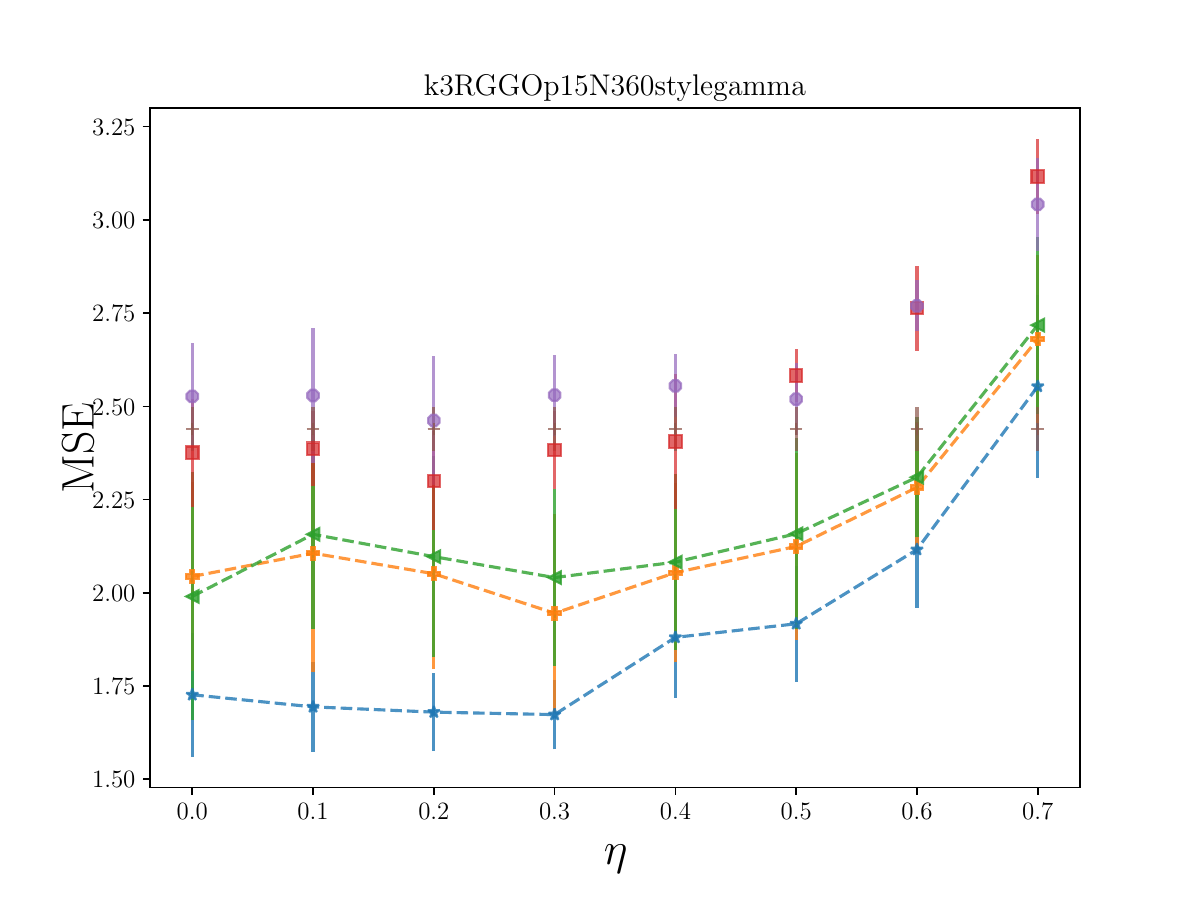}
    \caption{$\text{RGGO}_1(p=0.15, k=3)$}
  \end{subfigure}
  \begin{subfigure}[ht]{0.245\linewidth}
    \centering
    \includegraphics[width=\linewidth,trim=0cm 0cm 0cm 1.8cm,clip]{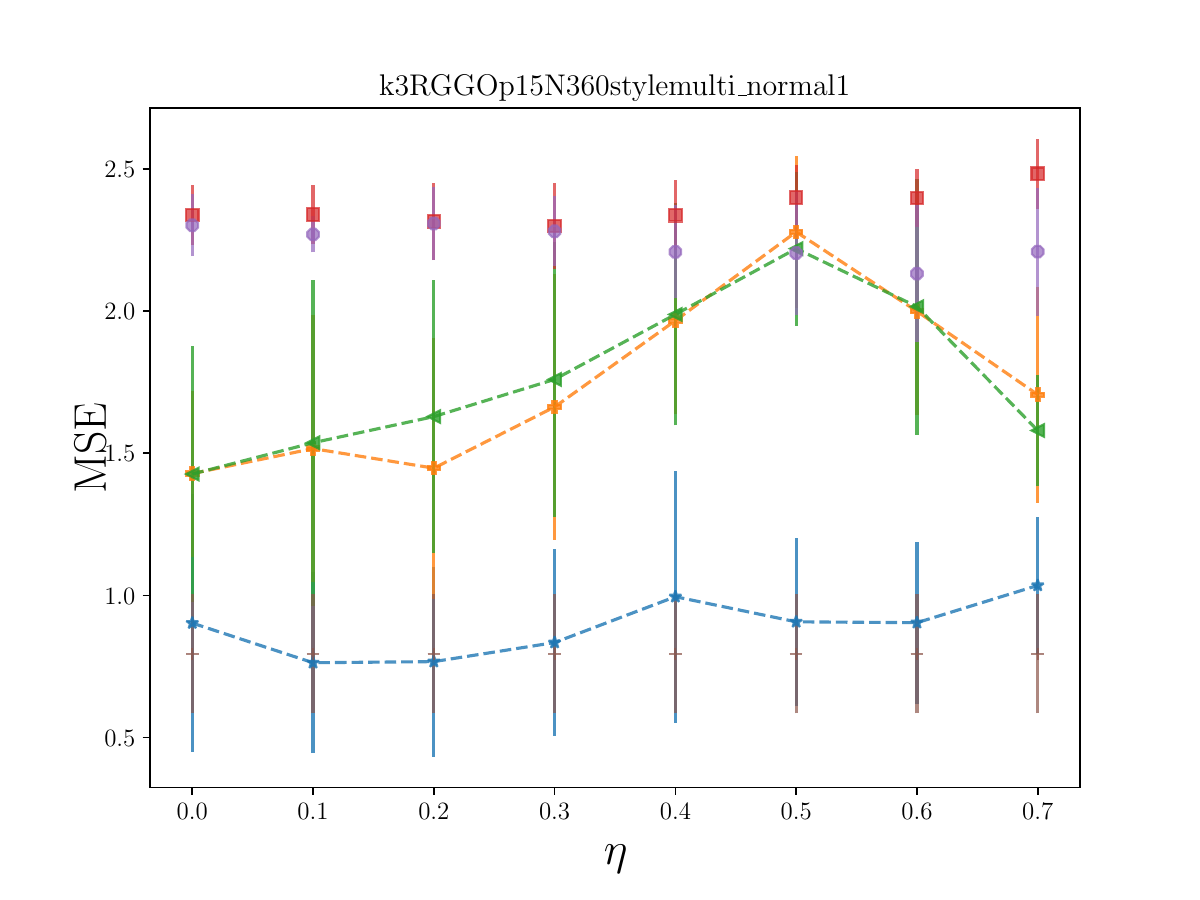}
    \caption{$\text{RGGO}_2(p=0.15, k=3)$}
  \end{subfigure} &
%%%%%%%%%%%%%%%%%%%%%%%%%%%%%%%%%%
  \begin{subfigure}[ht]{0.245\linewidth}
    \centering
    \includegraphics[width=\linewidth,trim=0cm 0cm 0cm 1.8cm,clip]{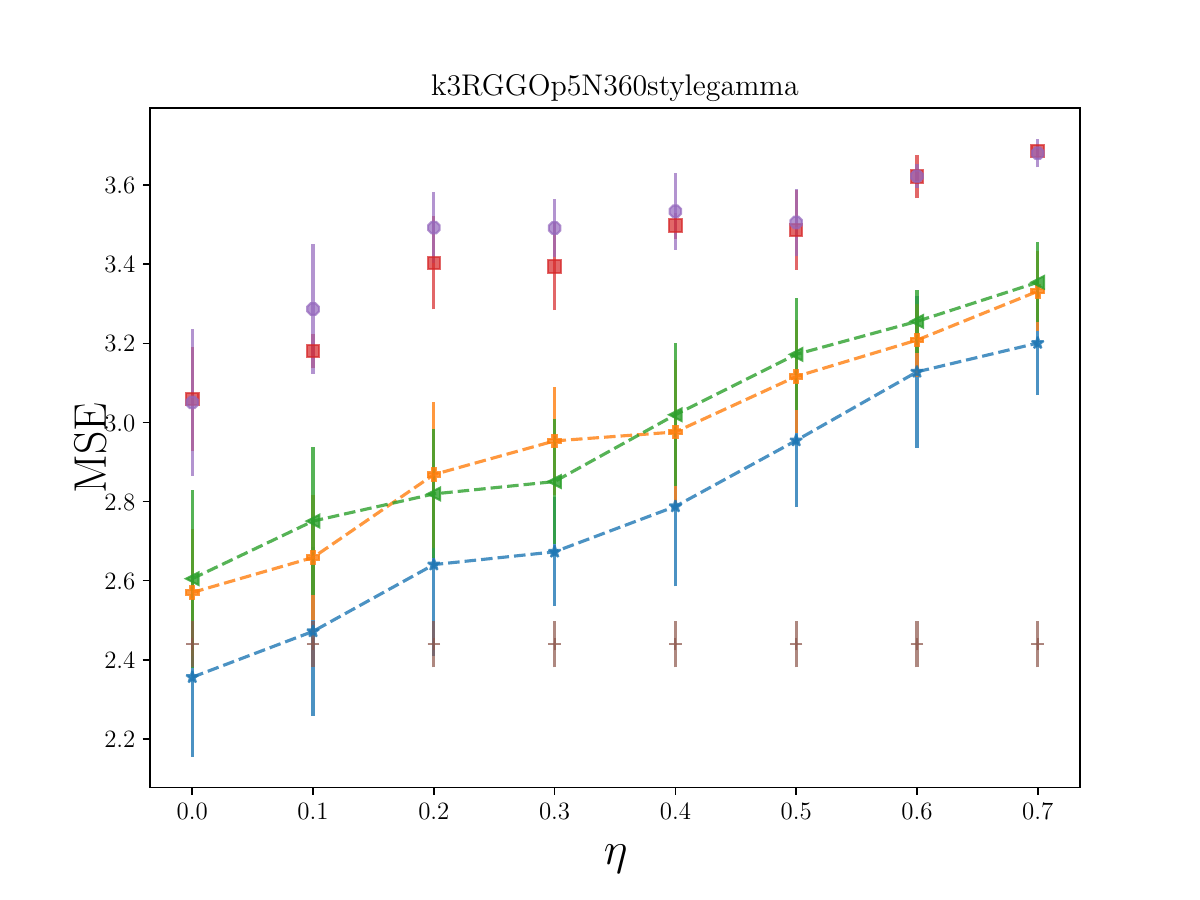}
    \caption{$\text{RGGO}_1(p=0.05, k=3)$}
  \end{subfigure}
  \begin{subfigure}[ht]{0.245\linewidth}
    \centering
    \includegraphics[width=\linewidth,trim=0cm 0cm 0cm 1.8cm,clip]{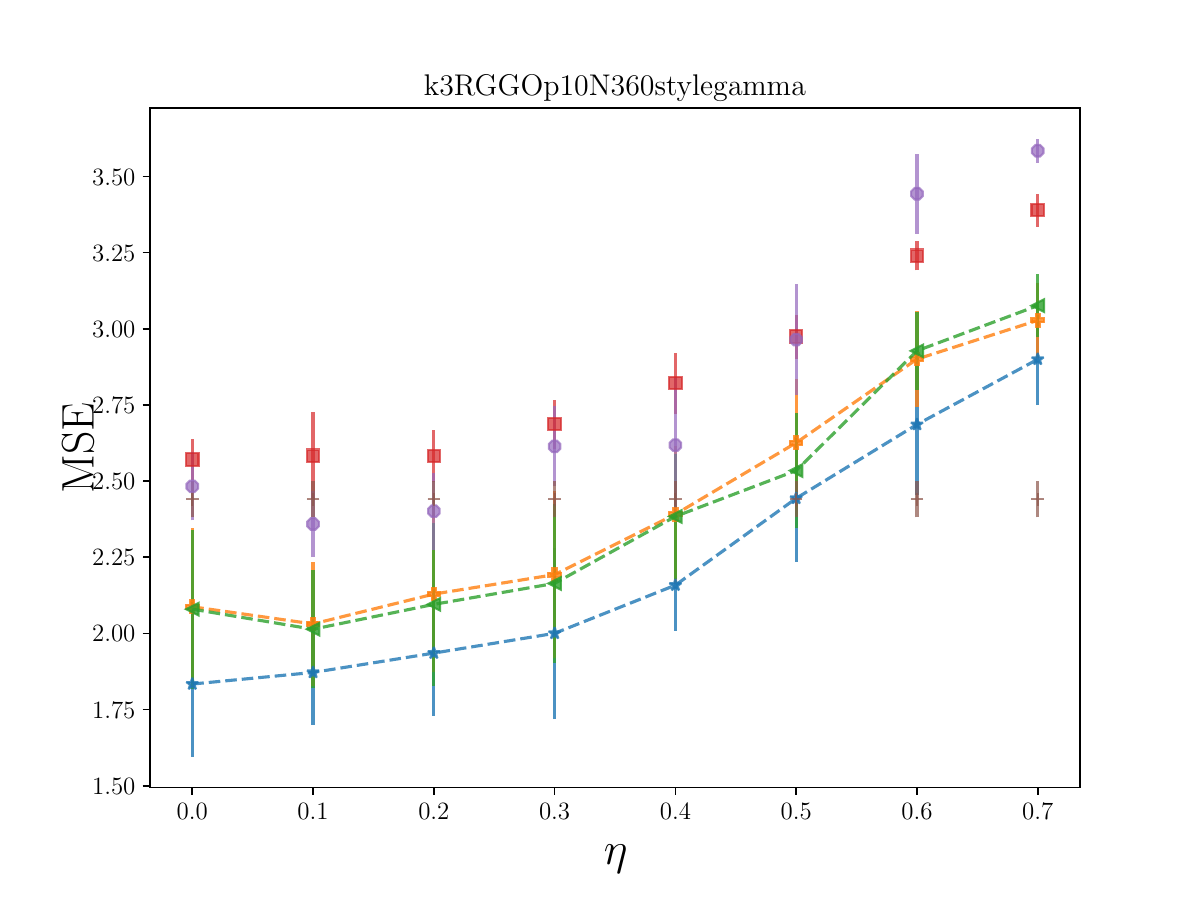}
    \caption{$\text{RGGO}_1(p=0.1, k=3)$}
  \end{subfigure} \\
%%%%%%%%%%%%%%%%%%%%%%
%%%%%%%%%%%%%%%%%%%%%%
%\vspace{0.2cm}
   \begin{subfigure}[ht]{0.245\linewidth}
    \centering
    \includegraphics[width=\linewidth,trim=0cm 0cm 0cm 1.8cm,clip]{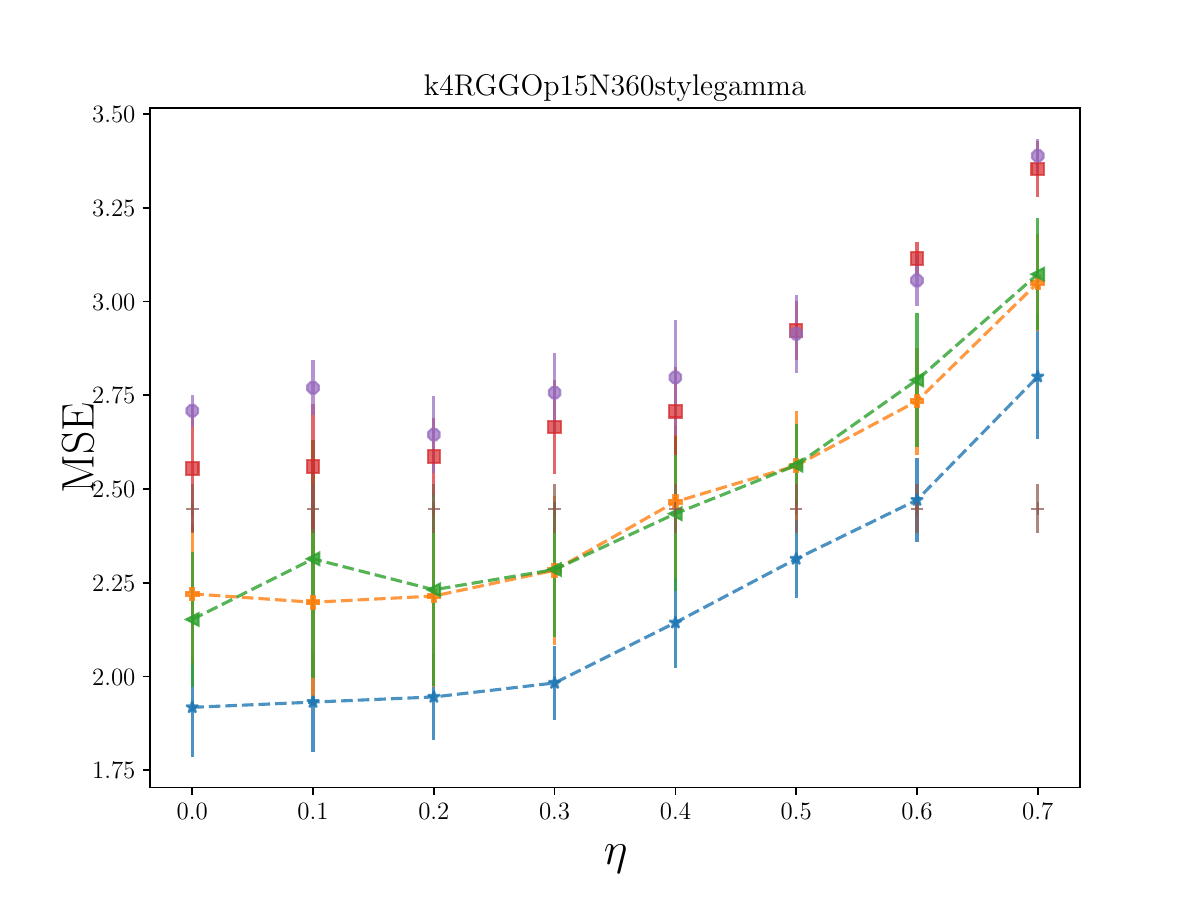}
    \caption{$\text{RGGO}_1(p=0.15, k=4)$}
  \end{subfigure}
  \begin{subfigure}[ht]{0.245\linewidth}
    \centering
    \includegraphics[width=\linewidth,trim=0cm 0cm 0cm 1.8cm,clip]{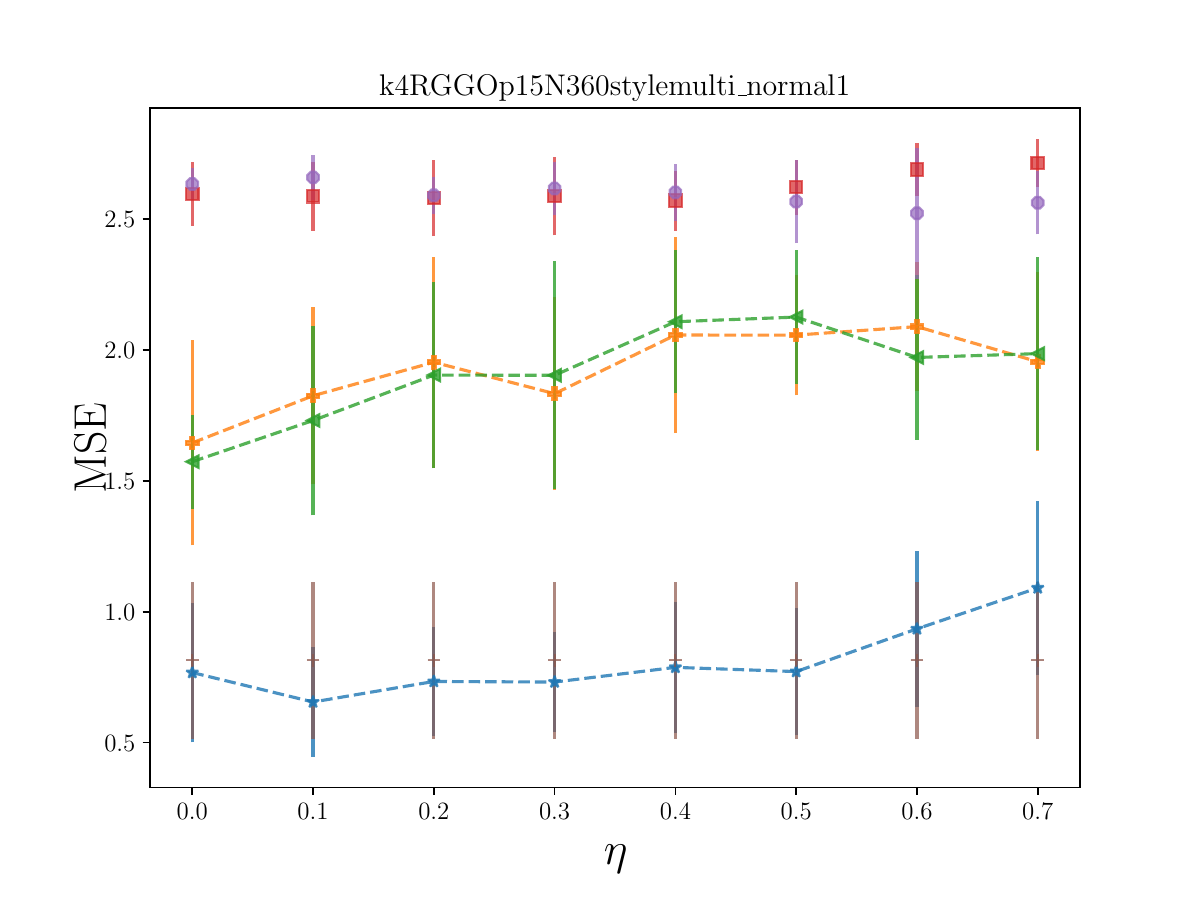}
    \caption{$\text{RGGO}_2(p=0.15, k=4)$}
  \end{subfigure} &
   \begin{subfigure}[ht]{0.245\linewidth}
    \centering
    \includegraphics[width=\linewidth,trim=0cm 0cm 0cm 1.8cm,clip]{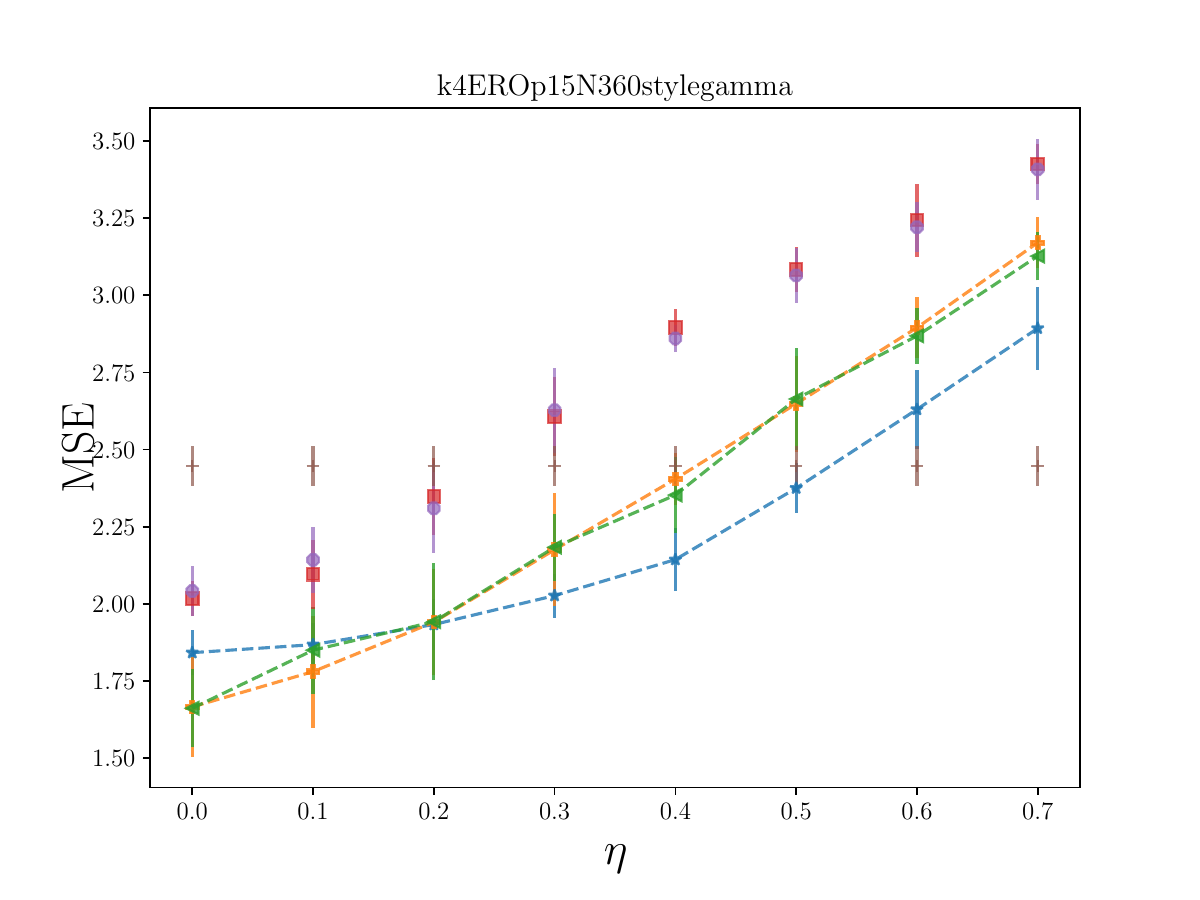}
    \caption{$\text{ERO}_1(p=0.15, k=4)$}
  \end{subfigure}
  \begin{subfigure}[ht]{0.245\linewidth}
    \centering
    \includegraphics[width=\linewidth,trim=0cm 0cm 0cm 1.8cm,clip]{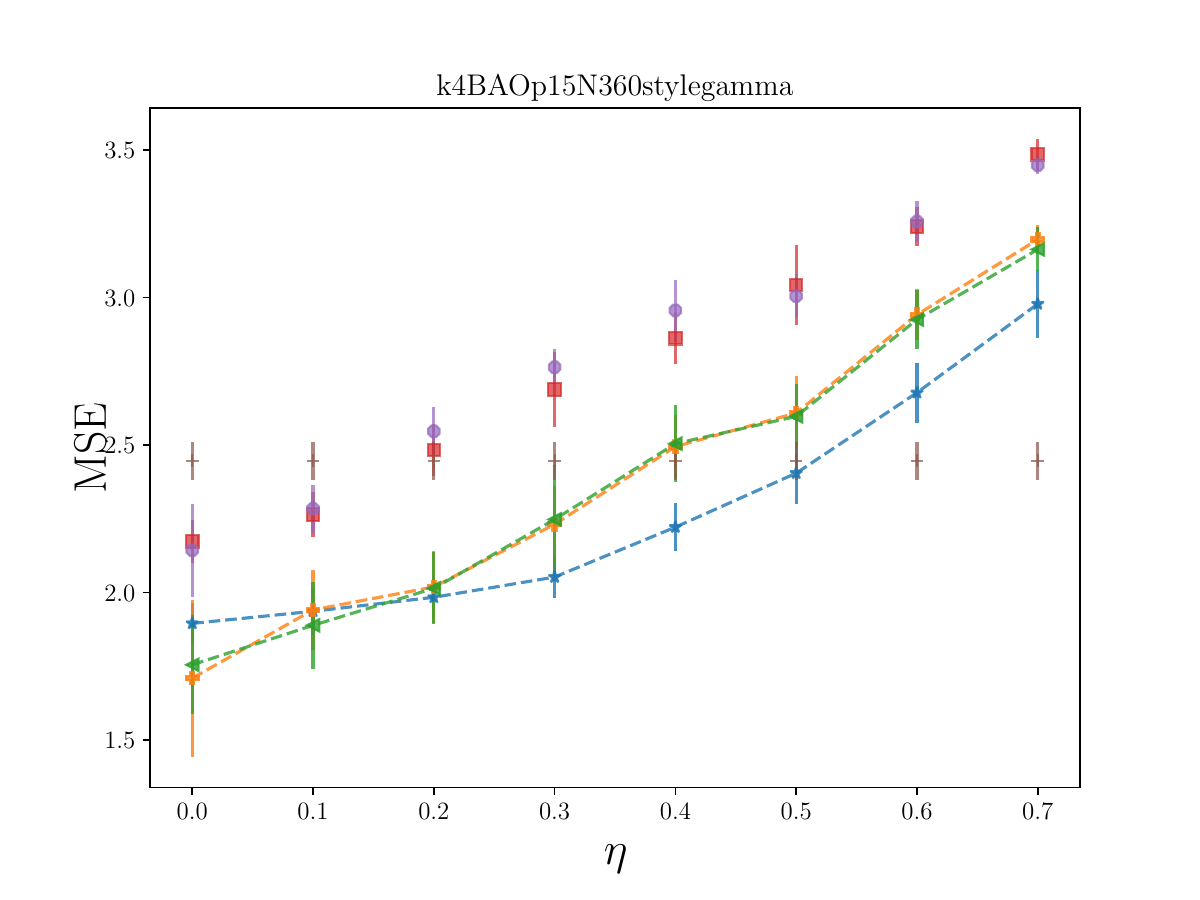}
    \caption{$\text{BAO}_1(p=0.15, k=4)$}
  \end{subfigure}
  \end{tabular}
  %%%%%%%%%%%%%%%%%%%%%%%
  %%%%%%%%%%%%%%%%%%%%%%%
\begin{subfigure}[ht]{\linewidth}
    \centering
    \includegraphics[width=0.85\linewidth,trim=0cm 0cm 0cm 0cm,clip]{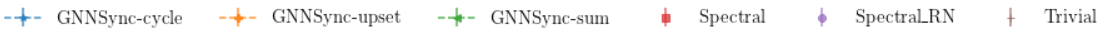}
  \end{subfigure}
  \vspace{-3mm}
    \caption{MSE performance on $k$-synchronization for $k\in\{2,3,4\}$. $p$ is the network density and $\eta$ is the noise level. Error bars indicate one standard deviation. Dashed lines highlight GNNSync variants. %The baselines are ``Spectral" and ``Spectral\_RN", while the others are GNNSync variants.
    }
    \vspace{-3mm}
    \label{fig:ksync_compare}
\end{figure*}
By default, we use the output angles of the baseline ``Spectral\_RN" as  input features for GNNSync, and thus $d_\text{in}=k$. The main experimental results are shown in Fig.~\ref{fig:sync_compare} for $k=1$, and Fig.~\ref{fig:ksync_compare} for general $k\in\{2,3,4\}$, with additional results reported in App.~\ref{app_sec:experiments_full}. For $k>1,$ we use ``GNNSync-cycle", ``GNNSync-upset" and ``GNNSync-sum" to denote GNNSync variants when considering the training loss function $\mathcal{L}_\text{cycle},\mathcal{L}_\text{upset}$, and $\mathcal{L}_\text{upset}+\mathcal{L}_\text{cycle}$, respectively.

From Fig.~\ref{fig:sync_compare} (with additional figures in App.~\ref{app_sec:experiments_full} Fig.~\ref{fig:main_k1_ERO}--\ref{fig:main_k1_RGGO}), we conclude that GNNSync produces generally the best performance compared to baselines, in angular synchronization ($k=1$). 
From Fig.~\ref{fig:ksync_compare} (see also App.~\ref{app_sec:experiments_full} Fig.~\ref{fig:main_k2_ERO}--\ref{fig:main_k4_RGGO}), we again conclude that GNNSync variants attain leading performance for $k>1$. The first two columns of Fig.~\ref{fig:ksync_compare} compare the performance of the two options of ground-truth angles on RGGO models. In columns 3 and 4, we show the effect of varying density parameter $p$, and different synthetic models under various measurement graphs.

\looseness=-1 For $k>1,$ GNNSync-upset performs better than both baselines in most cases, with $\mathcal{L}_\text{upset}$ simple yet effective to train. GNNSync-cycle generally attains the best performance.
As the problems become harder (with increasing $\eta$, decreasing $p$, increasing $k$, more complex measurement graph RGG), GNNSync-cycle outperforms both baselines and other GNNSync variants by a larger margin. The performance of GNNSync-sum lies between that of GNNSync-upset and GNNSync-cycle, but is closer to that of GNNSync-upset, see App.~\ref{app_sec:experiments_full} for more discussions on linear combinations of the two losses.  We conclude that while GNNSync-upset generally attains satisfactory performance, GNNSync-cycle is more robust to harder problems than other GNNSync variants and the baselines. Accounting for the performance of trivial guesses, we observe that GNNSync variants are more robust to noise, and attain satisfactory performance even when the competitive baselines are outperformed by trivial guesses. We highlight that there is a clear advantage of using cycle consistency in the pipeline, especially when the problem is harder, thus reflecting the angular nature of the problem. For 3-cycle consistency and the cycle loss $\mathcal{L}_\text{cycle}$, gradient descent in principle drives down the (non-negative) values $S$ of the sum of three predicted angular differences. To minimize the $S$ values, we encourage a reweighing process of the initial edge weights so that cycle consistency roughly holds. 
Unlike $\mathcal{L}_\text{upset}$ which explicitly encourages small $\mathbf{M}_{i,j}$ values for all edges, $\mathcal{L}_\text{cycle}$ only implicitly encourages small $\mathbf{M}_{i,j}$ values via the confidence matrix reweighing process for edges with relatively small noise. In an ideal case, we only have large $\mathbf{M}_{i,j}$ values on noisy edges. In this case, the reweighing process would downweight these noisy edges, which results in a smaller value of the cycle loss function. This is also the underlying reason why $\mathcal{L}_\text{cycle}$ is more robust to noise than $\mathcal{L}_\text{upset}$. For $k>1,$ we hence recommend using the more intricate  
$\mathcal{L}_\text{cycle}$ function as the training loss function, and we will focus on GNNSync-cycle in the ablation study.

\looseness=-1 From Fig.~\ref{fig:uscities_example} (see also App.~\ref{app_sec:experiments_full} Tab.~\ref{tab:MSE_uscities_mean_std}--\ref{tab:ANE_pacm_mean_std}, Fig.~\ref{fig:uscities_gamma_full_low}--\ref{fig:pacm_gamma_full_high}), we observe that GNNSync is able to align patches and recover coordinates effectively, and is more robust to noise than baselines. GNNSync attains competitive MSE values and Average Normalized Error (ANE) results, where ANE (defined explicitly in App.~\ref{app_sec:main_full}) measures the discrepancy between the predicted locations and the actual locations.
%%%%%%%%%%%%%%%%%%%%%%%%%%%%%%%
\vspace{-4mm}
\subsection{Ablation study and discussion}
\vspace{-3.5mm}
\label{sec:ablation}
In this subsection, we justify several model choices for all $k$: 
\bbs the use of the projected gradient steps; 
\bbs an end-to-end framework instead of training first without the projected gradient steps and then applying Algo.~\ref{algo:projected} as a post-processing procedure; 
\bbs fixed instead of trainable $\{\alpha_\gamma\}$ values. For $k>1,$ we also justify the use of the $\mathbf{H}$ matrix in Algo.~\ref{algo:projected} instead of separate $\mathbf{H}^{(l)}$'s based on estimated graph assignments of the edges. To validate the ability of GNNSync to borrow strength from baselines, we set the input feature matrix $\mathbf{X}$ as a set of angles that is estimated by one of the baselines (or $k$ sets of angles estimated by one of the baselines for $k>1$) and report the performance. 

\looseness=-1 Due to space considerations, results for the ablation study are reported in App.~\ref{app_sec:experiments_full}. For $k=1,$ Fig. 22--24 report the MSE performance for different GNNSync variants. Improvements over all possible baselines when taking their output as input features for $k=1$ are reported in Fig. 34--36. For $k>1,$ we report the results when using $\mathcal{L}_\text{cycle}$ as the training loss function in Fig. 25--33. 
We conclude that Algo.~\ref{algo:projected} is indeed helpful in guiding GNNSync to attain lower loss values (we omit loss results for space considerations)
and better MSE performance, and that end-to-end training usually attains comparable or better performance than using Algo.~\ref{algo:projected} for post-processing, even when there is no trainable parameter in Algo.~\ref{algo:projected}. Moreover, the baselines are still outperformed by GNNSync if we apply the same number of
projected gradient steps as in GNNSyc as fine-tuning post-processing to the baselines, as illustrated in Fig.~\ref{fig:ablation_fine_tuned_k1_ERO} and \ref{fig:ablation_fine_tuned_k2_ERO}. We observe across all data sets, that GNNSync usually improves on existing baselines when employing their outputs as input features, and never performs significantly worse than the corresponding baseline; hence, GNNSync can be used to enhance existing methods. Further, setting $\{\alpha_\gamma\}$ values to be trainable does not seem to boost performance much, and hence we stick to fixed $\{\alpha_\gamma\}$ values.
For $k>1,$ using separate $\mathbf{H}^{(l)}$'s instead of the whole $\mathbf{H}$ in Algo.~\ref{algo:projected} harms performance, which can be explained by the fact that learning graph assignments effectively via GNN outputs is challenging. 

%%%%%%%%%%%%%%%%%%%%%%%%%%%%%%%%%%%%%%%%%%%
\vspace{-4.5mm}
\section{Conclusion and outlook}
\vspace{-4.5mm}

\looseness=-1 This paper proposed a general NN framework for angular synchronization and a heterogeneous extension. As the current framework is limited to SO(2), we believe that extending our GNN-based framework to the setting of other more general groups is an exciting research  direction to pursue, and constitutes ongoing work  (for instance, for doing synchronization over the full Euclidean group $\mbox{Euc}(2) = \mathbb{Z}_2 \times \mbox{SO(2)} \times \mathbb{R}^2$). We also plan to optimize the loss functions under constraints, train our framework with supervision of ground-truth angles (anchor information), and explore the interplay with low-rank matrix completion. Another interesting direction is to extend our SNL example to explore the graph realization problem, of recovering point clouds from a sparse noisy set of pairwise Euclidean distances.

\clearpage
%%%%%%%%%%%%%%%%%%%%%%%%%%%%%%%
%%%%%%%%%%%%%%%%%%%%%%%%%%%%%%%%%%%%%%%%
\section*{Acknowledgements} 
Y.H. is supported by a Clarendon scholarship. 
G.R. is supported in part by EPSRC grants EP/T018445/1,  EP/W037211/1 and  EP/R018472/1. 
M.C. acknowledges support from the EPSRC grants EP/N510129/1
and EP/W037211/1 at The Alan Turing Institute.
\bibliography{ICLR2024}
\bibliographystyle{iclr2024_conference}

\appendix
%%%%%%%%%%%%%%%%%%%%%%%%%%%%%%%%%%
\section{Analytical discussions}
\label{app_sec:analytical}
\subsection{Properties of the loss functions}
\label{appendix_sec:loss_properties}
In classical convex optimization of the type $\inf_{{\bf x}} g(x)$, optimal values are achieved at stationary points; when the function $g$ is differentiable, then following Fermat's rule, stationary points are points at which the gradient of $g$ vanishes. Such points are typically found via gradient descent methods. When the function $g$ is not differentiable, then there are weaker variants for differentiablity available such as the directional derivative. First we recall the notion of a directional derivative and a directional stationary point from \cite{li2020understanding}. The \emph{directional derivative} of a function $f$ at point $\mathbf{x}\in\mathbbm{R}^m$ in the direction $\mathbf{d}\in\mathbbm{R}^m$ is defined by $$f'(\mathbf{x}, \mathbf{d})=\lim_{t\searrow 0}\frac{f(\mathbf{x}+t\mathbf{d})-f(\mathbf{x})}{t}.$$ A \emph{directional stationary} point $\bf{x} \in {\mathbbm{R}}^n$ of the problem $\inf_{{\bf x} \in C} g(x)$ for $C \subset {\mathbbm{R}}^n$ and $g: {\mathbbm{R}}^n \rightarrow {\mathbbm{R}}$ is a point such that the directional derivatives in any direction ${\bf d} \in {\mathbbm{R}}^n $ satisfy
$(g + {\mathbbm{1}}_C)' ({\bf x}, {\bf d}) \ge 0$. This notion is broad enough to include functions such as the maximum which is not everywhere differentiable.

Moreover we say that a function $f:\mathbbm{R}^m\rightarrow\mathbbm{R}$ is \emph{locally Lipschitz} if for any bounded set $\mathcal{S}\subset\mathbbm{R}^m,$ there exists a constant $L>0$ such that $$\lvert f(\mathbf{x}-\mathbf{y})\rvert\leq L\parallel\mathbf{x}-\mathbf{y}\parallel_2$$ for all $\mathbf{x}, \mathbf{y}\in\mathcal{S}.$ Note that a locally Lipschitz function $f$ is differentiable almost everywhere, see for example~\cite[Theorem 9.60]{rockafellar2009variational} where also more background on directional derivatives and subdifferentials can be found.
\paragraph{Proof of Proposition~\ref{prop:sol}}
Here we prove Proposition~\ref{prop:sol} from the main text; for convenience, we repeat it here.  
\begin{proposition} \label{prop:sol_SI}
Every local minimum of \cref{eq:loss_upset} is a directional stationary point of \cref{eq:mse}. 
\end{proposition}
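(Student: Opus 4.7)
The plan is to proceed in three stages: verify the non-smooth regularity needed to invoke a generalized Fermat rule, apply that rule at the local minimum of $\mathcal{L}_\text{upset}$, and then transfer directional stationarity across to $\mathcal{D}_\text{MSE}$ by exploiting the shared wrap-around structure of the two objectives.

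First I would establish that both $\mathcal{L}_\text{upset}$ and $\mathcal{D}_\text{MSE}$, viewed as functions of the prediction vector $\mathbf{r}\in\mathbb{R}^n$, are locally Lipschitz. The key building block is the one-dimensional map $\psi(x) := \min(x \bmod 2\pi,\, (-x)\bmod 2\pi)$; although $\psi$ is non-differentiable at $x \in \pi\mathbb{Z}$, it is $1$-Lipschitz globally, since it is the minimum of two $1$-Lipschitz piecewise-affine branches of the mod operation. The upset loss is the (normalized) Frobenius norm of the matrix $\mathbf{M}$ whose nonzero entries are $\psi(\mathbf{T}_{i,j}-\mathbf{A}_{i,j})$, and $\mathbf{T}_{i,j}=(r_i-r_j)\bmod 2\pi$ depends on $\mathbf{r}$ via compositions of locally Lipschitz maps; so $\mathcal{L}_\text{upset}$ is locally Lipschitz in $\mathbf{r}$. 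The MSE \cref{eq:mse} is $\min_{\theta_0}\sum_i \psi(\delta_i)^2$, a pointwise minimum over a compact interval of locally Lipschitz functions of $\mathbf{r}$, and hence itself locally Lipschitz.

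Second, I would invoke the nonsmooth Fermat rule (see e.g.\ \cite{rockafellar2009variational}, Thm.~10.1): for any locally Lipschitz $f$, every local minimum $\mathbf{r}^*$ of $f$ is a directional stationary point in the sense of the limiting subdifferential introduced in \cite{li2020understanding}, i.e.\ $f'(\mathbf{r}^*,\mathbf{d})\geq 0$ for all $\mathbf{d}\in\mathbb{R}^n$. Applied to $\mathcal{L}_\text{upset}$, this delivers directional stationarity of the upset loss at any local minimum.

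Third, and this is the step that carries the bulk of the argument, I would transfer directional stationarity to $\mathcal{D}_\text{MSE}$. The bridge is the identity $\mathbf{T}_{i,j}-\mathbf{A}_{i,j} \equiv \delta_i-\delta_j \pmod{2\pi}$ on any edge $(i,j)$ where $\mathbf{A}_{i,j}=(\theta_i-\theta_j)\bmod 2\pi$, which lets one rewrite $\mathcal{L}_\text{upset}$ as a function of the displacement vector $\boldsymbol{\delta}=\mathbf{r}+\theta_0-\boldsymbol{\theta}$ evaluated on edges. I would fix an optimal shift $\theta_0^\star\in\arg\min_{\theta_0}\sum_i\psi(\delta_i)^2$, differentiate the scalar objective $\theta_0\mapsto \sum_i\psi(\delta_i)^2$ in the direction determined by $\mathbf{d}$ using the chain rule for directional derivatives of compositions of locally Lipschitz maps (Clarke's calculus, \cite{rockafellar2009variational}), and relate $\mathcal{D}'_\text{MSE}(\mathbf{r}^*,\mathbf{d})$ to a sum of directional derivatives of $\psi^2$ along $\delta_i$. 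The same chain-rule expansion applied to $\mathcal{L}_\text{upset}$ produces an analogous edge-sum expression of directional derivatives of $\psi$, and I would conclude by showing that non-negativity of the latter (guaranteed by stage two) together with the relation $\delta_i-\delta_j \equiv \mathbf{T}_{i,j}-\mathbf{A}_{i,j}$ forces non-negativity of the former on every $\mathbf{d}$, yielding $\mathcal{D}'_\text{MSE}(\mathbf{r}^*,\mathbf{d})\geq 0$.

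The main obstacle I anticipate is precisely this last step: translating edgewise directional stationarity of a $\psi$-based loss into nodewise directional stationarity of a $\psi^2$-based loss after the outer minimization over $\theta_0$. The subtlety is that $\psi$ is not $C^1$ at multiples of $\pi$, so the chain rule holds only in the generalized (Clarke) sense, and one must carefully handle the envelope theorem for the $\min_{\theta_0}$ operation. I would address this by appealing to Danskin's theorem in its nonsmooth form and by working with the full subdifferential rather than one-sided derivatives, thus ensuring the directional-stationarity inequalities compose correctly across the upset-to-MSE transition.
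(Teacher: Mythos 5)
Your stages one and two coincide with the paper's argument: the paper also observes that the relevant scalar map $f(x)=\min(x^2,(2\pi-x)^2)$ is a directionally differentiable Lipschitz function (using $\min(a,b)=\tfrac12(a+b)-\tfrac12\lvert a-b\rvert$), notes the constraint set $C=[0,2\pi]^n$, and invokes Fact~6 of \citet{li2020understanding} to conclude that every local minimum of $\mathcal{L}_\text{upset}$ is a directional stationary point \emph{of} $\mathcal{L}_\text{upset}$. The paper's proof (and the main-text version of the proposition) stops there; the reference to \eqref{eq:mse} in the SI restatement and at the end of the proof is almost certainly a cross-referencing typo, not a claim about the MSE.

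Your third stage is where the proposal goes astray, and it is a genuine gap rather than a stylistic difference. The ``bridge'' identity $\mathbf{T}_{i,j}-\mathbf{A}_{i,j}\equiv\delta_i-\delta_j\pmod{2\pi}$ requires $\mathbf{A}_{i,j}=(\theta_i-\theta_j)\bmod 2\pi$ \emph{exactly}, i.e.\ a noise-free measurement on edge $(i,j)$. The entire point of the setting is that the observed $\mathbf{A}_{i,j}$ are noisy, so this identity is false on an $\eta$-fraction of the edges, and the two objectives are simply unrelated there: $\mathcal{L}_\text{upset}$ penalizes deviation from the \emph{observed} offsets while $\mathcal{D}_\text{MSE}$ penalizes deviation from the \emph{ground truth}. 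No amount of careful Clarke/Danskin calculus will let you transfer a first-order condition from one to the other when the two functionals disagree on the data. Even restricting to the noiseless case the argument does not close: $\mathcal{L}_\text{upset}$ is a function of edgewise differences $r_i-r_j$ (hence invariant under a global shift of $\mathbf{r}$), whereas $\mathcal{D}_\text{MSE}$ is, after optimizing over $\theta_0$, a function of the per-node displacements $\delta_i$, and a direction $\mathbf{d}$ that is a pure global shift is a null direction for the former but not a priori handled for free in the latter without a separate envelope argument. In short, the extra transfer step is chasing a typo; if you instead prove directional stationarity of $\mathcal{L}_\text{upset}$ at its own local minima (your stage two), you have reproduced the paper's proof and are done.
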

\begin{proof}
     \cref{eq:mse} gives that 
\begin{eqnarray*}
 \lefteqn{  \mathcal{L}_\text{upset} }\\
 &= &
    \left\lVert \mathbf{M} \right\rVert_F/t
    \\
    &=& \frac{1}{t}
    \sqrt{ 
    \sum_{i,j} {\mathbbm{1}} (\mathbf{T}_{i,j}-\mathbf{A}_{i,j} \ne 0, 2 \pi)  \min(\mathbf{T}_{i,j}-\mathbf{A}_{i,j}\;\mbox{mod} \; 2\pi, \mathbf{A}_{i,j}-\mathbf{T}_{i,j} \;\mbox{mod} \; 2\pi)^2
    }\\
    &=& \frac{1}{t}
    \sqrt{ 
    \sum_{i,j} {\mathbbm{1}} (\mathbf{T}_{i,j}-\mathbf{A}_{i,j} \ne 0, 2 \pi )  \min\{ \mathbf{T}_{i,j}-\mathbf{A}_{i,j}\;\mbox{mod} \; 2\pi, 2 \pi - (\mathbf{T}_{i,j}-\mathbf{A}_{i,j} \;\mbox{mod} \; 2\pi) \} ^2
    }
\end{eqnarray*}
where  $t$ is the number of nonzero elements in $\mathbf{A}.$ The function $f: (0, 2 \pi) \mapsto (0, 2\pi)$ given by $f(x) = \min(x^2, (2\pi - x)^2)$ is differentiable with derivative uniformly bounded by $4 \pi$ (and is hence locally Lipschitz) except at the point $x = \pi$ where it takes on its maximum, $\pi^2$.
Thus, this function is a directionally differentiable Lipschitz function (which can be seen by writing $\min (a,b) = \frac12 ( a + b) - \frac{1}{2} \lvert a - b\rvert$ and noting that $f(x) = - \lvert x\rvert $  is a directionally differentiable Lipschitz function).
Moreover we can phrase the 
optimization problem for the upset loss as an optimization problem over the closed set $C= [0, 2 \pi]^n$ representing sets$\{r_i, i=1, \ldots, n\} $ which are then used to obtain matrices $T$ with entries $T_{ij} = r_i - r_j \;\mbox{mod} \; 2 \pi$. Fact 6 in \citet{li2020understanding} then guarantees 
that every local minimum is a directional stationary point of \cref{eq:mse}.
\end{proof} 

\paragraph{Discussion of the case of general $k$}

For general $k$-synchronization, we only require $\mathbf{M}^{(l)}_{i,j}$ to be close to zero for one $l$ instead of all because each edge is assumed to belong to exactly one graph $\mathbf{G}_l.$ 
Therefore in an ideal setting, for each edge $(i,j) \in \mathcal{E},$ exactly one of the entries $\mathbf{M}^{(l)}_{i,j}, l=1, \ldots, k$ is zero. If all entries are large then this indicates that the information for $(i,j)$ is very noisy. Subsequently, the entry for $(i,j)$ is downweighted in the updated graph for the cycle loss function. The rationale is that when edge information is very noisy, the cycle consistency will often be violated; violations for edge information that is not so noisy are more important for angular synchronization as they should contain a stronger signal.

In terms of the cycle loss function itself, the confidence matrix $\Tilde{\mathbf{C}}$ for edges in $\mathcal{G}$ arises. First consider the optimization problem in which $\Tilde{\mathbf{C}}$ is omitted; taking $\Tilde{\mathbf{A}}=(\mathbf{A}-\mathbf{A}^\top) \;\mbox{mod} \;  2\pi$ 
and we optimize the upset loss, the cycle loss, or both. 
For the upset loss, \cref{eq:cycle} itself when considering fixed $\Tilde{\mathbf{A}}^{(l)}$ terms, can be analyzed similarly to the analysis of $\mathcal{L}_\text{upset}$ in Proposition 1, using that the minimum as appearing in $\mathbf{M}_{i,j} = \min_{l\in\{1,\dots,k\}}\mathbf{M}^{(l)}_{i,j}$ 
is a directionally differentiable Lipschitz function. For the cycle loss function, 
the expression of $\mathcal{L}_\text{cycle}^{(l)}$ takes a constant (when we regard $\Tilde{\mathbf{A}}^{(l)}$ as fixed) away from the minimum of $S_{i,j,q}^{(l)}\;\mbox{mod}\; 2\pi$ and $ (-S_{i,j,q}^{(l)})\;\mbox{mod}\;  2\pi$. This minimum is equivalent to $\lvert \pi - (S_{i,j,q}^{(l)}\;\mbox{mod} \; 2\pi) \rvert$, which is again a directionally differentiable Lipschitz function. Arguing as for Proposition 1 thus shows that the statement of this proposition extends to this special treatment of the $k$-synchronization problem.

In our general treatment of the $k$-synchronization problem, the confidence matrix $\Tilde{\mathbf{C}}$ depends on the maximum ${\bf{M}}$ of the residual matrices and involves the  expression $\frac{1}{1+\mathbf{M}_{i,j}} {\mathbbm{1}}(\mathbf{A}_{i,j} \ne 0)$. While the function $f(x) = \frac{1}{1+x} $ is differentiable for $x>0$, its composition with a function such as ${\bf M}$ may not be differentiable, as ${\bf M}$ only possesses a very weak notion of differential, called a {\it limiting subdifferential} in \cite{li2020understanding}, to which the chain rule does not apply.
This complex dependence hinders a more rigorous analysis of the general treatment of the $k$-synchronization problem, where even the chain rule is not guaranteed to hold.

\paragraph{Non-differentiable points of the loss function}

Although the Frobenius norm, the min function, and modulo have non-differentiable points, these points have measure zero. 
Moreover, as we use  PyTorch autograd \footnote{ \url{https://pytorch.org/docs/stable/notes/autograd.html}} for gradient calculation, even in the presence of non-differentiable points, backpropagation can be carried out whenever an approximate gradient can be constructed. Note that the absolute value function is convex, and hence autograd will apply the sub-gradient of the minimum norm. 
There also exist differentiable approximations for the modulo, and hence backpropagation can still be executed. Finally, in our experiments, we do not empirically observe any issue of convergence.

\paragraph{Novelty} While the design of the upset loss in isolation may be relatively straightforward for the $k=1$ case, we provide theoretical support as well as a less obvious loss function extension to handle broader $k\geq 2$ cases that rely on assigning edges to different graphs. The design of the cycle loss is not trivial and based on problem-specific insights.
%%%%%%%%%%%%%%%%%%%%%%%%%%%%%%%%%%%%%%%%%%
\subsection{Robustness of GNNSync}
\label{appendix_sec:stability}

First we review DIMPA (Directed Mixed Path Aggregation) from \cite{he2022digrac} for obtaining a network embedding. 
 DIMPA captures local network information by taking a weighted average of
information from neighbors within $h$ hops. Here we use $h=2$ hops throughout the paper.
Let  $\mathbf{A}\in\sR^{n\times n}$ be an adjacency and $\mathbf{A}_s$ its row-normalization. 
A weighted self-loop is added to each node;  then we normalize by setting $\mathbf{A}_s = (\mathbf{\Tilde{D}}^s)^{-1}\mathbf{\Tilde{A}}^s,$ where $\mathbf{\Tilde{A}}^s = \mathbf{A} + \tau\mathbf{I}_n$, with $\mathbf{\Tilde{D}}^s$ the diagonal matrix with entries $\mathbf{\Tilde{D}}^s_{i,i} = \sum_j \mathbf{\Tilde{A}}^s_{i,j},$ $\mathbf{I}_n$ the $n\times n$ identity matrix, and $\tau$ is a small value; as in \cite{he2022digrac} we take $\tau=0.5$.

The $h$-hop \textbf{source} matrix is given by $({\mathbf{A}}_s)^h.$
The set of \emph{up-to-$h$-hop} source neighborhood matrices is denoted as  
$\sA^{s,h}=\{\mathbf{I}_n,\mathbf{A}_s, \ldots,(\mathbf{A}_s)^h\}.$
Similarly, for aggregating information when each node is viewed as a \textbf{target} node of a link, we carry out the same procedure for the transpose $\mathbf{A}^\top$. 
The set of up-to-$h$-hop target neighborhood matrices is denotes as 
$\sA^{t,h}=\{\mathbf{I}_n,\mathbf{A}_t, \ldots,(\mathbf{A}_t)^h\},$ where $\mathbf{A}_t$ is the row-normalized target adjacency matrix calculated from $\mathbf{A}^\top.$

Let the  input feature matrix be denoted by 
$\mathbf{X}\in\sR^{n\times d_\text{in}}$. 
The source embedding is given by 
\begin{equation}
    \label{eq:z_s_dimpa}\mathbf{Z}_s=\left(\sum_{\mathbf{N}\in \sA^{s,h}} \omega_{{\mathbf{N}}}^{s} \cdot {\mathbf{N}}\right) \cdot \mathbf{Q}^{s} \in \mathbb{R}^{n\times d},
\end{equation}
where for each $\mathbf{N}$,
$\omega_{\mathbf{N}}^{s}$ is a learnable scalar,
$d$ is the dimension of this embedding, and
$\mathbf{Q}^{s}=\textbf{MLP}^{(s,L)}(\mathbf{X}).$
Here, the hyperparameter $L$ controls the number of layers in the multilayer perceptron (MLP) with ReLU activation but without the bias terms;
as in \cite{he2022digrac} we fix $L=2$  throughout. 
Each layer of the MLP has the same number $d$ of hidden units.
The target embedding $\mathbf{Z}_t$ is defined similarly, with $s$ replaced by $t$ in\cref{eq:z_s_dimpa}.
After these two decoupled aggregations, the embeddings are concatenated to obtain the final node embedding  as a $n \times (2d)$ matrix
$\mathbf{Z}=\operatorname{CONCAT}\left(\mathbf{Z}_s,\mathbf{Z}_t\right). $

\paragraph{Proof of Proposition~\ref{prop:stability}}
Here we prove Proposition~\ref{prop:stability} from the main text; for convenience, we repeat it here. 
\begin{proposition} \label{prop:stability_SI}
\looseness=-1 For adjacency matrices $\mathbf{A}, \hat{\mathbf{A}},$ assume their row-normalized variants $\mathbf{A}_s, \hat{\mathbf{A}}_s, \mathbf{A}_t, \hat{\mathbf{A}}_t$ satisfy $\left\lVert \mathbf{A}_s- \hat{\mathbf{A}}_s\right\rVert_F<\epsilon_s$ and $\left\lVert \mathbf{A}_t- \hat{\mathbf{A}}_t\right\rVert_F<\epsilon_t,$ where subscripts $s, t$ denote source and target, resp. Assume further their input feature matrices $\mathbf{X}, \hat{\mathbf{X}}$ satisfy $\left\lVert \mathbf{X}- \hat{\mathbf{X}}\right\rVert_F<\epsilon_f$. Then their initial angles $\mathbf{r}^{(0)}, \hat{\mathbf{r}}^{(0)}$ from a trained GNNSync using DIMPA satisfy $\left\lVert \mathbf{r}^{(0)}- \hat{\mathbf{r}}^{(0)}\right\rVert_F<B_s\epsilon_s+B_t\epsilon_s+B_f\epsilon_f$, for values $B_s, B_t, B_f$ that can be bounded by imposing constraints on model parameters and input.
\end{proposition}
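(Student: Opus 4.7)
The plan is to propagate the perturbation bounds layer by layer through the DIMPA-based GNNSync pipeline that produces $\mathbf{r}^{(0)}$, then aggregate them by the triangle inequality. I would first handle the MLP branch: since DIMPA uses a fixed depth $L$ MLP with ReLU activations and no bias, the maps $\mathbf{X}\mapsto \mathbf{Q}^s=\text{MLP}^{(s,L)}(\mathbf{X})$ and $\mathbf{X}\mapsto \mathbf{Q}^t$ are Lipschitz in Frobenius norm with constants $C_Q^s:=\prod_{l=1}^L\|\mathbf{W}^{(s,l)}\|_2$ and $C_Q^t:=\prod_{l=1}^L\|\mathbf{W}^{(t,l)}\|_2$ respectively, so $\|\mathbf{Q}^s-\hat{\mathbf{Q}}^s\|_F\le C_Q^s\epsilon_f$ and analogously for $t$.

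Next I would bound the perturbation of the neighborhood operators $\mathbf{P}_s:=\sum_{k=0}^{h}\omega_k^s(\mathbf{A}_s)^k$ and $\mathbf{P}_t:=\sum_{k=0}^{h}\omega_k^t(\mathbf{A}_t)^k$. Using the telescoping identity $(\mathbf{A}_s)^k-(\hat{\mathbf{A}}_s)^k=\sum_{j=0}^{k-1}(\mathbf{A}_s)^j(\mathbf{A}_s-\hat{\mathbf{A}}_s)(\hat{\mathbf{A}}_s)^{k-1-j}$ together with submultiplicativity $\|CD\|_F\le\min(\|C\|_{op}\|D\|_F,\|C\|_F\|D\|_{op})$, I get $\|(\mathbf{A}_s)^k-(\hat{\mathbf{A}}_s)^k\|_F\le k M_s^{k-1}\epsilon_s$, where $M_s$ upper-bounds the operator norms of both $\mathbf{A}_s$ and $\hat{\mathbf{A}}_s$. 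Summing over $k$ yields $\|\mathbf{P}_s-\hat{\mathbf{P}}_s\|_F\le C_P^s\epsilon_s$ with $C_P^s:=\sum_{k=0}^{h}k|\omega_k^s|M_s^{k-1}$, and similarly $C_P^t$ for the target side. I would then split $\mathbf{Z}_s-\hat{\mathbf{Z}}_s=(\mathbf{P}_s-\hat{\mathbf{P}}_s)\mathbf{Q}^s+\hat{\mathbf{P}}_s(\mathbf{Q}^s-\hat{\mathbf{Q}}^s)$, obtaining $\|\mathbf{Z}_s-\hat{\mathbf{Z}}_s\|_F\le C_P^s\|\mathbf{Q}^s\|_{op}\epsilon_s+\|\hat{\mathbf{P}}_s\|_{op}C_Q^s\epsilon_f$, and an analogous bound for $t$. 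Because $\mathbf{Z}=\operatorname{CONCAT}(\mathbf{Z}_s,\mathbf{Z}_t)$ satisfies $\|\mathbf{Z}-\hat{\mathbf{Z}}\|_F^2=\|\mathbf{Z}_s-\hat{\mathbf{Z}}_s\|_F^2+\|\mathbf{Z}_t-\hat{\mathbf{Z}}_t\|_F^2$, the two contributions combine cleanly.

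Finally, since $r_{i,l}^{(0)}=2\pi\,\sigma(\mathbf{z}_{i,(l-1)d+1:ld}\cdot\mathbf{a}^{(l)}+b_l)$ and the sigmoid is $\tfrac14$-Lipschitz, each coordinate satisfies $|r_{i,l}^{(0)}-\hat{r}_{i,l}^{(0)}|\le\tfrac{\pi}{2}\|\mathbf{a}^{(l)}\|_2\,\|\mathbf{z}_{i,(l-1)d+1:ld}-\hat{\mathbf{z}}_{i,(l-1)d+1:ld}\|_2$, so summing over $(i,l)$ and applying Cauchy--Schwarz gives $\|\mathbf{r}^{(0)}-\hat{\mathbf{r}}^{(0)}\|_F\le\tfrac{\pi}{2}\max_l\|\mathbf{a}^{(l)}\|_2\cdot\|\mathbf{Z}-\hat{\mathbf{Z}}\|_F$. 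Folding the previous two bounds in yields the claimed decomposition $B_s\epsilon_s+B_t\epsilon_t+B_f\epsilon_f$, where $B_s$, $B_t$ involve $\|\mathbf{a}^{(l)}\|_2$, $C_P^{s/t}$, and $\|\mathbf{Q}^{s/t}\|_{op}$, while $B_f$ involves $\|\mathbf{a}^{(l)}\|_2$, $\|\hat{\mathbf{P}}_{s/t}\|_{op}$, and $C_Q^{s/t}$.

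The main obstacle will be ensuring that all constants appearing in $B_s,B_t,B_f$ are finite and ``controllable.'' Concretely, bounding the spectral norms $M_s,M_t$ of the row-normalized adjacency matrices is subtle: although such matrices are row-stochastic (so their spectral radius is $1$), their largest singular value can in principle grow with $n$, so one needs to either impose a uniform operator-norm constraint or exploit the specific normalization $\mathbf{A}_s=(\tilde{\mathbf{D}}^s)^{-1}\tilde{\mathbf{A}}^s$. Likewise, boundedness of $\prod_l\|\mathbf{W}^{(s,l)}\|_2$, of the aggregation weights $|\omega_k^{s/t}|$, and of $\|\mathbf{a}^{(l)}\|_2$ must be assumed (or guaranteed by weight-regularized training), and $\|\mathbf{Q}^{s/t}\|_{op}$ must be controlled through a bound on $\|\mathbf{X}\|_{op}$. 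Once these mild assumptions are in place, the Lipschitz chain above produces the stated bound.
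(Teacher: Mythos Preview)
Your proposal is correct and follows essentially the same layer-by-layer Lipschitz propagation as the paper: split $\mathbf{Z}_s-\hat{\mathbf{Z}}_s$ into an adjacency-perturbation term and a feature-perturbation term via the triangle inequality, use that ReLU and sigmoid are Lipschitz, and collect constants. The only differences are cosmetic refinements on your side---you treat general hop number $h$ via the telescoping identity (the paper fixes $h=L=2$ and factors $\mathbf{A}_s^2-\hat{\mathbf{A}}_s^2$ directly), you use the sharp $\tfrac14$-Lipschitz constant for the sigmoid (the paper uses $1$), and you track operator versus Frobenius norms more carefully---but the argument is the same.
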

\begin{proof}

Let us assume the input feature matrices are $\mathbf{X}, \hat{\mathbf{X}}\in\sR^{n\times d_\text{in}}$ for $\mathbf{A}$ and $\hat{\mathbf{A}},$ respectively. 

The DIMPA procedures for the input row-normalized adjacency matrices $\mathbf{A}_s, \mathbf{A}_t$ with $2$ hops and hidden dimension $d$ can be written as a concatenation of the source and target node embeddings $\mathbf{Z}_s$ and $\mathbf{Z}_t,$ where
\begin{align}
\begin{split}
\label{eq:gnnsync_dimpa_z}
\mathbf{Z}_s&=(\mathbf{I}_n+a_{s1}\mathbf{A}_s+a_{s2}\mathbf{A}_s^2)\text{ReLU}(\mathbf{X}\mathbf{W}_{s0})\mathbf{W}_{s1},\\
\mathbf{Z}_t&=(\mathbf{I}_n+a_{t1}\mathbf{A}_t+a_{t2}\mathbf{A}_t^2)\text{ReLU}(\mathbf{X}\mathbf{W}_{t0})\mathbf{W}_{t1}.
\end{split}
\end{align}
Here $I_n\in\sR^{n\times n}$ is the identity matrix, $a_{s1}, a_{s2}, a_{t1}, a_{t2}\in \sR, $ $\mathbf{W}_{s0}, \mathbf{W}_{t0}\in\sR^{d_\text{in}\times d}$ where $d$ is the hidden dimension, and $\mathbf{W}_{s1}, \mathbf{W}_{t1}\in\sR^{h\times h}.$ Similarly, we have for $\hat{\mathbf{A}}_s$ and $\hat{\mathbf{A}}_t$
\begin{align}
\begin{split}
\label{eq:gnnsync_dimpa_zhat}
\hat{\mathbf{Z}}_s&=(\mathbf{I}_n+a_{s1}\hat{\mathbf{A}}_s+a_{s2}{\hat{\mathbf{A}}}_s^2)\text{ReLU}(\hat{\mathbf{X}}\mathbf{W}_{s0})\mathbf{W}_{s1},\\
\hat{\mathbf{Z}}_t&=(\mathbf{I}_n+a_{t1}\hat{\mathbf{A}}_t+a_{t2}\hat{\mathbf{A}}_t^2)\text{ReLU}(\hat{\mathbf{X}}\mathbf{W}_{t0})\mathbf{W}_{t1}.
\end{split}
\end{align}

After DIMPA, we carry out the innerproduct procedure and sigmoid rescaling, to obtain for $k=1$ 
\begin{equation}
\label{eq:gnnsync_innerproduct}
    \mathbf{r}^{(0)}=2\pi \text{sigmoid}(\mathbf{Z}_s\mathbf{a}_s+\mathbf{Z}_t\mathbf{a}_t+b),\;\; \hat{\mathbf{r}}^{(0)}=2\pi \text{sigmoid}(\hat{\mathbf{Z}}_s\mathbf{a}_s+\hat{\mathbf{Z}}_t\mathbf{a}_t+b),
\end{equation} 
where $\mathbf{a}_s, \mathbf{a}_t\in\sR^{d\times 1}$ and $b$ is a trained scalar. 

For $k>1,$ we have (before reshaping $\mathbf{r}^{(0)}$ and $\hat{\mathbf{r}}^{(0)}$ from shape $nk\times 1$ to shape $n\times k$ which does not change the Frobenius  norm)
\begin{equation}
\label{eq:gnnsync_innerproduct_ksync}
    \mathbf{r}^{(0)}=2\pi \text{sigmoid}(\mathbf{Z}_s\mathbf{a}_s+\mathbf{Z}_t\mathbf{a}_t+\mathbf{b}),\;\; \hat{\mathbf{r}}^{(0)}=2\pi \text{sigmoid}(\hat{\mathbf{Z}}_s\mathbf{a}_s+\hat{\mathbf{Z}}_t\mathbf{a}_t+\mathbf{b}),
\end{equation} 
where $\mathbf{a}_s, \mathbf{a}_t\in\sR^{dk}$ and $\mathbf{b}\in\sR^k.$ Indeed, we could view the scalar $b$ as a 1D vector, and consider \cref{eq:gnnsync_innerproduct} as a special case of \cref{eq:gnnsync_innerproduct_ksync}.

Using \cref{eq:gnnsync_dimpa_z} and \cref{eq:gnnsync_dimpa_zhat}, along with the triangle inequality, we have that 
\begin{align}
\begin{split}
\label{eq:dimpa_zs_diff}
    &\left\lVert\mathbf{Z}_s-\hat{\mathbf{Z}}_s\right\rVert_F \\
    =&\left\lVert(\mathbf{I}_n+a_{s1}\mathbf{A}_s+a_{s2}\mathbf{A}_s^2)\text{ReLU}(\mathbf{X}\mathbf{W}_{s0})\mathbf{W}_{s1}-(\mathbf{I}_n+a_{s1}\hat{\mathbf{A}}_s+a_{s2}{\hat{\mathbf{A}}}_s^2)\text{ReLU}(\hat{\mathbf{X}}\mathbf{W}_{s0})\mathbf{W}_{s1}\right\rVert_F\\
    \leq&\left\lVert(\mathbf{I}_n+a_{s1}\mathbf{A}_s+a_{s2}\mathbf{A}_s^2)\text{ReLU}(\mathbf{X}\mathbf{W}_{s0})\mathbf{W}_{s1}-(\mathbf{I}_n+a_{s1}\hat{\mathbf{A}}_s+a_{s2}{\hat{\mathbf{A}}}_s^2)\text{ReLU}(\mathbf{X}\mathbf{W}_{s0})\mathbf{W}_{s1}\right\rVert_F+\\
    &\left\lVert(\mathbf{I}_n+a_{s1}\hat{\mathbf{A}}_s+a_{s2}{\hat{\mathbf{A}}}_s^2)\text{ReLU}(\mathbf{X}\mathbf{W}_{s0})\mathbf{W}_{s1}-(\mathbf{I}_n+a_{s1}\hat{\mathbf{A}}_s+a_{s2}{\hat{\mathbf{A}}}_s^2)\text{ReLU}(\hat{\mathbf{X}}\mathbf{W}_{s0})\mathbf{W}_{s1}\right\rVert_F\\
    \le &\left\lVert[a_{s1}(\mathbf{A}_s-\hat{\mathbf{A}}_s)+a_{s2}(\mathbf{A}_s^2-\hat{\mathbf{A}}_s^2)]\text{ReLU}(\mathbf{X}\mathbf{W}_{s0})\mathbf{W}_{s1}\right\rVert_F+\\
    &\left\lVert\mathbf{I}_n+a_{s1}\hat{\mathbf{A}}_s+a_{s2}{\hat{\mathbf{A}}}_s^2\right\rVert_F\left\lVert\mathbf{W}_{s1}\right\rVert_F\left\lVert\text{ReLU}(\mathbf{X}\mathbf{W}_{s0})-\text{ReLU}(\hat{\mathbf{X}}\mathbf{W}_{s0})\right\rVert_F\\
    \leq &\left\lVert\mathbf{A}_s-\hat{\mathbf{A}}_s\right\rVert_F\left\lVert[a_{s1}+a_{s2}(\mathbf{A}_s+\hat{\mathbf{A}}_s)]\text{ReLU}(\mathbf{X}\mathbf{W}_{s0})\mathbf{W}_{s1}\right\rVert_F+\\
    &\left\lVert\mathbf{I}_n+a_{s1}\hat{\mathbf{A}}_s+a_{s2}{\hat{\mathbf{A}}}_s^2\right\rVert_F\left\lVert\mathbf{W}_{s1}\right\rVert_F\left\lVert\mathbf{W}_{s0}\right\rVert_F\left\lVert\mathbf{X}-\hat{\mathbf{X}}\right\rVert_F\\
    <&\epsilon_s B_{s0}+\epsilon_f B_{fs},
\end{split}
\end{align}
where $B_{s0}=\left\lVert[a_{s1}+a_{s2}(\mathbf{A}_s+\hat{\mathbf{A}}_s)]\text{ReLU}(\mathbf{X}\mathbf{W}_{s0})\mathbf{W}_{s1}\right\rVert_F$ and\\ $B_{fs}=\left\lVert\mathbf{I}_n+a_{s1}\hat{\mathbf{A}}_s+a_{s2}{\hat{\mathbf{A}}}_s^2\right\rVert_F\left\lVert\mathbf{W}_{s1}\right\rVert_F\left\lVert\mathbf{W}_{s0}\right\rVert_F.$ Note that we also use the fact that the ReLU function is Lipschitz with Lipschitz constant 1.

Likewise, we have
\begin{equation}
\label{eq:dimpa_zt_diff}
    \left\lVert\mathbf{Z}_t-\hat{\mathbf{Z}}_t\right\rVert_F<\epsilon_t B_{t0}+\epsilon_{f}B_{ft},
\end{equation}
where $B_{t0}=\left\lVert[a_{t1}+a_{t2}(\mathbf{A}_t+\hat{\mathbf{A}}_t)]\text{ReLU}(\mathbf{X}\mathbf{W}_{t0})\mathbf{W}_{t1}\right\rVert_F$ and\\ $B_{ft}=\left\lVert\mathbf{I}_n+a_{t1}\hat{\mathbf{A}}_t+a_{t2}{\hat{\mathbf{A}}}_t^2\right\rVert_F\left\lVert\mathbf{W}_{t1}\right\rVert_F\left\lVert\mathbf{W}_{t0}\right\rVert_F.$

With \cref{eq:dimpa_zs_diff} and \cref{eq:dimpa_zt_diff}, noting  that the sigmoid function is Lipschitz with Lipschitz constant 1, we employ \cref{eq:gnnsync_innerproduct_ksync} to obtain
\begin{align*}
    \begin{split}
        &\left\lVert\mathbf{r}^{(0)}-\hat{\mathbf{r}}^{(0)}\right\rVert_F\\
        =&\left\lVert2\pi \text{sigmoid}(\mathbf{Z}_s\mathbf{a}_s+\mathbf{Z}_t\mathbf{a}_t+\mathbf{b})-2\pi \text{sigmoid}(\hat{\mathbf{Z}}_s\mathbf{a}_s+\hat{\mathbf{Z}}_t\mathbf{a}_t+\mathbf{b})\right\rVert_F\\
        \leq & 2\pi\left\lVert(\mathbf{Z}_s\mathbf{a}_s+\mathbf{Z}_t\mathbf{a}_t+\mathbf{b})-(\hat{\mathbf{Z}}_s\mathbf{a}_s+\hat{\mathbf{Z}}_t\mathbf{a}_t+\mathbf{b})\right\rVert_F\\
        = & 2\pi\left\lVert(\mathbf{Z}_s-\hat{\mathbf{Z}}_s)\mathbf{a}_s+(\mathbf{Z}_t-\hat{\mathbf{Z}}_t)\mathbf{a}_t\right\rVert_F\\
        \leq & 2\pi\left[\left\lVert(\mathbf{Z}_s-\hat{\mathbf{Z}}_s)\mathbf{a}_s\right\rVert_F+\left\lVert(\mathbf{Z}_t-\hat{\mathbf{Z}}_t)\mathbf{a}_t\right\rVert_F\right]\\
        = & 2\pi\left(\left\lVert \mathbf{a}_s\right\rVert_F\left\lVert\mathbf{Z}_s-\hat{\mathbf{Z}}_s\right\rVert_F+\left\lVert \mathbf{a}_t\right\rVert_F\left\lVert\mathbf{Z}_t-\hat{\mathbf{Z}}_t\right\rVert_F\right)\\
        < & 2\pi(\epsilon_s B_{s0}+\epsilon_f B_{fs}+\epsilon_t B_{t0}+\epsilon_f B_{ft})\\
        =&\epsilon_s B_s + \epsilon_t B_t + \epsilon_f B_f,
    \end{split}
\end{align*}

with values
\begin{align}
    \begin{split}
    B_s=&2\pi B_{s0}=2\pi\left\lVert[a_{s1}+a_{s2}(\mathbf{A}_s+\hat{\mathbf{A}}_s)]\text{ReLU}(\mathbf{X}\mathbf{W}_{s0})\mathbf{W}_{s1}\right\rVert_F,\\
    B_t=&2\pi B_{t0}=2\pi\left\lVert[a_{t1}+a_{t2}(\mathbf{A}_t+\hat{\mathbf{A}}_t)]\text{ReLU}(\mathbf{X}\mathbf{W}_{t0})\mathbf{W}_{t1}\right\rVert_F,\\
    B_f=&2\pi(B_{fs}+B_{ft})\\
    =&2\pi \left\lVert\mathbf{I}_n+a_{s1}\hat{\mathbf{A}}_s+a_{s2}{\hat{\mathbf{A}}}_s^2\right\rVert_F\left\lVert\mathbf{W}_{s1}\right\rVert_F\left\lVert\mathbf{W}_{s0}\right\rVert_F\\
    & + 2\pi\left\lVert\mathbf{I}_n+a_{t1}\hat{\mathbf{A}}_t+a_{t2}{\hat{\mathbf{A}}}_t^2\right\rVert_F\left\lVert\mathbf{W}_{t1}\right\rVert_F\left\lVert\mathbf{W}_{t0}\right\rVert_F.
    \label{eq:gnnsync_consts}
    \end{split}
\end{align}
If we in addition have 
\begin{align*}
&\left\lVert\mathbf{W}_{s0}\right\rVert_F\leq 1,\;\;  \left\lVert\mathbf{W}_{s1}\right\rVert_F\leq 1,\;\; \left\lVert\mathbf{W}_{t0}\right\rVert_F\leq 1,\;\; \left\lVert\mathbf{W}_{t1}\right\rVert_F\leq 1,\\
&\left\lVert a_{s1}+a_{s2}(\mathbf{A}_s+\hat{\mathbf{A}}_s)\right\rVert_F\leq 1,\;\;  \left\lVert a_{t1}+a_{t2}(\mathbf{A}_t+\hat{\mathbf{A}}_t)\right\rVert_F\leq 1,\\
&\left\lVert\mathbf{X}\mathbf{W}_{s0}\right\rVert_F\leq 1,\;\;\left\lVert\mathbf{I}_n+a_{s1}\hat{\mathbf{A}}_s+a_{s2}{\hat{\mathbf{A}}}_s^2\right\rVert_F\leq 1,\;\;\left\lVert\mathbf{I}_n+a_{t1}\hat{\mathbf{A}}_t+a_{t2}{\hat{\mathbf{A}}}_t^2\right\rVert_F\leq 1,
\end{align*}
then the bound becomes $\left\lVert\mathbf{r}^{(0)}-\hat{\mathbf{r}}^{(0)}\right\rVert_F<2\pi(\epsilon_s+\epsilon_t+2\epsilon_f),$ as 
\begin{align*}
    &\left\lVert[a_{s1}+a_{s2}(\mathbf{A}_s+\hat{\mathbf{A}}_s)]\text{ReLU}(\mathbf{X}\mathbf{W}_{s0})\mathbf{W}_{s1}\right\rVert_F\\
    \leq &\left\lVert a_{s1}+a_{s2}(\mathbf{A}_s+\hat{\mathbf{A}}_s)\right\rVert_F\left\lVert\text{ReLU}(\mathbf{X}\mathbf{W}_{s0})\right\rVert_F\left\lVert\mathbf{W}_{s1}\right\rVert_F\\
    \leq &\left\lVert a_{s1}+a_{s2}(\mathbf{A}_s+\hat{\mathbf{A}}_s)\right\rVert_F\left\lVert\mathbf{X}\mathbf{W}_{s0}\right\rVert_F\left\lVert\mathbf{W}_{s1}\right\rVert_F,
\end{align*}
and similarly for $B_t.$

This completes the proof.
\end{proof}
%%%%%%%%%%%%%%%%%%%%%%%%%%%%%%%%%%%%%%%%%%%%%%%
\paragraph{Discussion on GNNSync's robustness to noise} With the above proposition, we could consider $\hat{\mathbf{A}}$ as the ground-truth noiseless adjacency matrix whose nonzero entries encode $(\theta_i-\theta_j) \;\mbox{mod} \; 2\pi,$ and  $\mathbf{A}$ as the actual noisy observed input graph. We then execute row normalization to obtain the source and target matrices $\hat{\mathbf{A}}_s$ and $\hat{\mathbf{A}}_t$ for the ground-truth and $\mathbf{A}_s$ and $\mathbf{A}_t$ for the observation, respectively. In a favourable noise regime, $\epsilon_s$ and $\epsilon_t$ would be small. The value $\epsilon_f$ comes from the feature generation method. For Spectral\_RN baseline as input feature generation method for example, this involves some eigensolver corresponding to complex Hermitian matrices, and hence the Davis-Kahan Theorem~\citep{davis1970rotation} or one of its variants~\citep{li1998relative,yu2015useful} could be applied to upper-bound $\epsilon_f.$ As for the values $B_s, B_t,$ and $B_f$, we could bound them by adding constraints to GNNSync's model parameters. Employing a backpropagation procedure with our novel loss functions could further boost the robustness of GNNSync, with learnable procedures, as shown  for example in \citet{ruiz2021graph}.
%%%%%%%%%%%%%%%%%%%%%%%%%%%%%%%%%%%%%%%%%%
\section{Data sets}
\label{appsec:data}
\subsection{Random graph outlier models} \label{sec:apprandom} 
The detailed synthetic data generation process is as follows:
\begin{enumerate}
    \item Given the number of nodes $n$, generate $k$ group(s) of ground-truth angles $\{\theta_{i,l}: i\in\{1, \dots, n\}\}$ for $l\in\{1,\dots, k\}.$ One option is to generate each $\theta_{i,l}$ from the same Gamma distribution with shape 0.5 and scale $2\pi$. We denote this option with subscript ``1". Since angles could be highly correlated in practical scenarios, we introduce a more realistic but challenging option ``2", with multivariate normal ground-truth angles. For example, in the SNL application, angles correspond to patch rotations, and may well be that patches in similar geographic regions have corresponding rotations. The mean of the ground-truth angles is $\pi,$ with covariance matrix for each $l\in\{1,\dots,k\}$ defined by $\mathbf{w}\mathbf{w}^\top,$ where entries in $\mathbf{w}$ are generated independently from a standard normal distribution. 
We then apply $\mbox{mod} \; 2\pi$ to all angles to ensure that they lie in $[0, 2\pi).$ We thus obtain the ground-truth adjacency matrix (matrices) $\mathbf{A}^{l}_\text{GT}\in\mathbbm{R}^{n\times n}$, whose $(i,j)$ element is given by $(\theta_{i,l} - \theta_{j,l}) \;\mbox{mod} \; 2 \pi.$ 
\item  Generate a noisy background adjacency matrix $\mathbf{A}_\text{noise}\in\mathbbm{R}^{n\times n}$ whose entries are independently generated from a uniform distribution over $[0, 2\pi).$ 
\item Generate a selection matrix $\mathbf{A}_\text{sel}\in\mathbbm{R}^{n\times n}$ whose entries are independently drawn from a Uniform(0,1) distribution. The $(i,j)$ entry of this selection matrix is used to assign whether or not the observation is noisy, and if not noisy, to which graph it is assigned, using for $l=1, \ldots, k$
\begin{equation*}
    B_{\text{noise}} (i,j; l) =\mathbbm{1} \big({(1-\eta)(l-1)}/{k}\leq\mathbf{A}_\text{sel}(i,j)< {(1-\eta)l}/{k}\big)
\end{equation*}
   and 
   $ B_{\text{noise}} (i,j; \infty) = \mathbbm{1}\big(\mathbf{A}_\text{sel}(i,j) \ge  (1-\eta)\big),$ where  $\mathbbm{1}(\cdot)$ is the indicator function.
\item Construct a complete (without self-loops) weighted adjacency matrix $\mathbf{A}_\text{complete}\in\mathbbm{R}^{n\times n}$ by 
$\mathbf{A}_\text{complete}(i,j) 
    =  \sum_{l\in\{1,\dots,k\}}\mathbf{A}^{(l)}_\text{GT}(i,j)   B_{\text{noise}} (i,j; l)+\mathbf{A}_\text{noise}(i,j) B_{\text{noise}} (i,j; \infty ).$
\item Generate a measurement graph $\bar{\mathcal{G}}$ with adjacency matrix $\bar{\mathbf{G}}$ using a standard random graph model, as introduced by Sec.~\ref{sec:data_gen}. 
\item The edges in $\bar{\mathcal{G}}$ are the edges on which we observe the noisy version $\mathbf{A}_\text{complete}$ of the ground-truth adjacency matrix (matrices) $\mathbf{A}^l_\text{GT}$, to obtain the temporary adjacency matrix $\mathbf{T}_1$ by $\mathbf{T}_1(i,j)=\mathbf{A}_\text{complete}(i,j)\mathbbm
{1}(\bar{\mathbf{G}}(i,j)\neq 0)$. 
\item The true angle differences would yield a skew-symmetric matrix before taking the entries $\;\mbox{mod} \; 2 \pi$. We therefore construct a skew-symmetric matrix ${\mathbf T}_2$ 
by setting ${\mathbf T}_2(i,i)=0$ for all $i$, and for $i \ne j$ setting 
${\mathbf T}_2(i,j)=\mathbf{T}_1(i,j)\mathbbm{1}(i<j)-\mathbf{T}_1(j,i)\mathbbm{1}(i>j).$
\item In the skew-symmetric matrix, each entry appears twice, with different signs. For computational reasons, for the final adjacency matrix, we only keep the non-negative entries, except for evaluation. We obtain the final adjacency matrix $\mathbf{A}$ by $\mathbf{A}_{i,j}=\mathbf{A}(i,j)={\mathbf T}_2 (i,j)\mathbbm{1}({\mathbf T}_2  
(i,j)\geq 0) \;\mbox{mod} \; 2\pi.$
\end{enumerate}

In addition to the two options introduced in Sec.~\ref{sec:data_gen}, we introduce two more options for the ground-truth angle generation process here, which are both multivariate normal distributions, but with different covariance matrices. For option ``3", the covariance matrix is just the identity matrix. For option ``4", we have a block-diagonal covariance matrix, with six blocks, each of which is generated independently according to option ``2" as stated in Sec.~\ref{sec:data_gen} in the main text.

As methods could be applied to different connected components of disconnected graphs separately, we focus on weakly connected networks. We have checked that all generated networks in our experiments are weakly connected.

The reason behind the naming convention ``outlier" stems from the noisy offset entries in the adjacency matrix.

Steps 7 and 8 are to ensure that there does not exist an edge $(i,j)$ such that $(\mathbf{A}_{i,j}+\mathbf{A}_{j,i})\;\mbox{mod} \; 2\pi\neq 0,$ as this would be confusing. In principle, we could work with the upper-triangular part or the lower-triangular part of the adjacency matrix first, then obtain the skew-symmetric adjacency matrix $\mathbf{T}_2$ and apply step 8 again. The procedures mentioned in Sec.~\ref{sec:data_gen} are what we implement in practice, which should take no more than twice the computational cost compared to working with half of the adjacency matrix at the beginning. Note that data generation only happens once before running the actual experiments.

Our synthetic data settings are similar to those in previous angular synchronization papers, such as \cite{singer2011angular, lerman2022robust, cucuringu2022extension}, to generate noisy samples from an outlier model (where each outlier measurement is generated  uniformly at random),  instead of using additive Gaussian noise in the so-called spike models. The choice of the number of nodes 360 could be changed to other numbers; we chose it to relate to 360 possible integer degrees of an angle. Note that the initial work of \cite{singer2011angular} considered $n$ random rotation angles uniformly distributed in $[0,2\pi)$. We do not observe a large difference in the performance of other sizes (we have also tried 300 and 500, for example). 

The choice of synthetic data set construction is inspired by \cite{singer2011angular} and \cite{cucuringu2022extension}. They are noisy versions of standard random graph models. These random graph models were chosen as they can be used for comparison. Indeed, some previous works have only used ER measurement graphs as in \cite{lerman2022robust}, and \cite{singer2011angular}
theoretically analyzed and experimented with both sparse ER and complete measurement graphs; we already have a more thorough setup in our experiments. Furthermore, the addition of the RGG model stems from the very fact that this model is perhaps the most representative one, given the applications that have motivated the development of the group synchronization problem over the last decade. Indeed, in sensor network localization or the \textit{molecule problem} in NMR structural biology, pairwise Euclidean distance information is only available between nearby sensors or atoms (e.g., certain sensors/atoms are connected if at most 6 miles/angstrom apart), hence leading to an RGG (disc graph) model.
In this setup, in order to recover the latent coordinates, the state-of-the-art methods rely on divide-and-conquer approaches that first divide the graph into overlapping subgraphs /patches, embed locally to take advantage of the higher edge density locally, and finally aim to \textbf{stitch globally}, which is where group synchronization comes into play. Therefore, any patch-based reconstruction method that leverages the local geometry is only able to pairwise align only nearby patches that have enough points in common; far away patches that do not overlap simply cannot be aligned. Thus, the choice of RGG resembles best the real-world applications. The ER model has been predominantly used in the literature as it is easier to analyze theoretically compared to RGG, in light of available tools from the random matrix theory literature.

%%%%%%%%%%%%%%%%%%%%%%%%%%%%%%%%%%
\subsection{Sensor network localization}
\label{appsec:real_data}
Previous works in the field of 
angular synchronization typically only consider synthetic data sets in their experiments, and those applying synchronization to real-world data do not typically publish the data sets.  Concrete examples for such works include tracking the trajectory of a moving object using directional sensors~\citep{plarre2005object}, and habitat monitoring in an infrastructure-less environment in which radios are turned on at designated times to save power on Great Duck Island~\citep{mainwaring2002wireless}. 

In this paper, we adapt the task on group synchronization over the Euclidean group of rigid motions $Euc(2) = \mathbb{Z}_2 \times \mbox{SO}(2) \times \mathbb{R}^2$  to a real-world task, by focusing on the  angular synchronization $\mbox{SO}(2)$ component.   
This task on a real-world data set is a special case of the sensor network localization (SNL) task on the plane ($\mathbbm{R}^2$) mentioned in \cite{cucuringu2012sensor}, but we focus on synchronization over SO(2) only, as we do not consider any translations or reflections. Though we do not have purely real-world data sets that are employed in practice, we mimic the practical task of sensor network localization (with a focus on rotation only) and conduct the localization task on the U.S. map as well as a PACM point cloud. In detail, the task is conducted as follows, where Fig.~\ref{fig:uscities_flow} provides an overview of the pipeline on the U.S. map with an illustrative example:

\begin{enumerate}
    \item Starting with the ground-truth locations of U.S. cities (we have $n=1097$ cities, see red dots in Fig.~\ref{fig:uscities_flow}), we construct patches using each city as a central node and add its $k_\text{patch}=50$ nearest neighbors to the corresponding patch (see Fig.~\ref{fig:uscities_flow}(a)). We then obtain $m=n=1097$ patches (see Fig.~\ref{fig:uscities_flow}(b) for a two-patch example). This is to represent sensor patches in the real world.
    \item For each patch, we add noise to each node's coordinates using independent normal distributions for x and y coordinates respectively, with mean zero and standard deviation $\eta$ times of x and y coordinates' standard deviation, respectively (see Fig.~\ref{fig:uscities_flow}(c)). Note that the noise added to the same node is independent for different patches. This is to represent noisy observations due to the lack of use of the expensive GPS service to estimate sensor coordinates.
    \item We then rotate the patches based on some ground-truth rotation angles $\theta_1, \dots, \theta_{n}$ (see Fig.~\ref{fig:uscities_flow}(d)). Here we generate the angles using one of the options introduced in Sec.~\ref{sec:data_gen} and Sec.~\ref{sec:apprandom}. This again is to represent noisy observations in the real world.
    \item Then for each pair of the patches that have at least $k_\text{thres}=6$ overlapping nodes, we apply Procrustes alignment~\citep{gower1975generalized} to estimate the rotation angle based on these overlapping nodes (but with noisy coordinates) and add an edge with the weight the estimated rotation angle to the observed (measurement) adjacency matrix $\mathbf{A}$. In other words, if two patches $P_i, P_j$ that have at least $k_\text{thres}=6$ overlapping nodes, we have $\mathbf{A}_{i,j}$ the estimated rotation angle from Procrustes alignment to rotate $P_j$ to align with $P_i.$ This angle is an estimation of $\theta_i-\theta_j.$ The threshold is set to represent the real-world scenario where only nearby sensors may communicate with each other.
    \item After that, we perform angular synchronization on the sparse adjacency matrix $\mathbf{A}$ (retaining only the upper triangular entries) to obtain the initial estimated angles $r_1^{(0)}, \dots, r_{n}^{(0)}$ for each patch. 
    \item We then update the estimated angles by shifting by the average of pairwise differences between the estimated and ground-truth angles, in order to mod out the global degree of freedom from the synchronization step (see Fig.~\ref{fig:uscities_flow}(e)).  That is, we first calculate the average of pairwise differences by $\delta_\text{pariwise}=\frac{1}{n}\sum_{i=1}^{n}[(r_i^{(0)}-\theta_i)\;\mbox{mod} \; 2\pi],$ then set $r_i=(r_i^{(0)}-\delta_\text{pairwise})\;\mbox{mod} \; 2\pi, i=1, \dots, n.$
    \item Next, we apply the estimated rotations to the noisy patches. 
    
    \item Finally, we estimate the city coordinates by averaging the estimated locations for each city (node) across patches that contain this city (node) (see Fig.~\ref{fig:uscities_flow}(f)).
\end{enumerate}

Note that the noise in the observed adjacency matrix originates from the error by Procrustes alignment, with possible noise added to nodes' coordinates. Therefore, even when $\eta=0,$ the observed adjacency matrix may not align perfectly with ground-truth pairwise angular offsets. Besides, the observed adjacency matrix is sparse instead of complete due to the thresholding set up to only connect two nodes in the graph if the patches have enough overlapping nodes. In our experiments, we vary $\eta$ from $[0, 0.05, 0.1, 0.15, 0.2, 0.25].$

Our current experiment is focused on group synchronization over the group SO(2), and in future work we plan to explore synchronization over the full Euclidean group $\mbox{Euc}(2) = \mathbb{Z}_2 \times \mbox{SO(2)} \times \mathbb{R}^2$, similar to \cite{cucuringu2012sensor}, where in addition to rotations, both reflections and translations are considered and synchronized over. 

\begin{figure*}[!hbt]
    \centering
  \begin{subfigure}[ht]{0.3\linewidth}
    \centering
    \includegraphics[width=\linewidth,trim=0cm 0cm 0cm 0cm,clip]{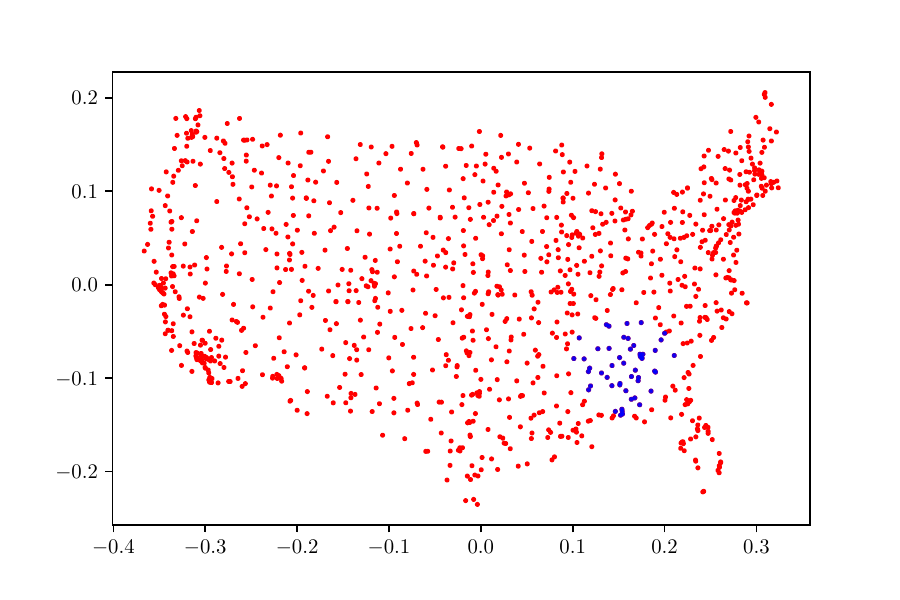}
    \caption{Construct a patch in blue based on nearest neighbors.}
  \end{subfigure}
  \begin{subfigure}[ht]{0.3\linewidth}
    \centering
    \includegraphics[width=\linewidth,trim=0cm 0cm 0cm 0cm,clip]{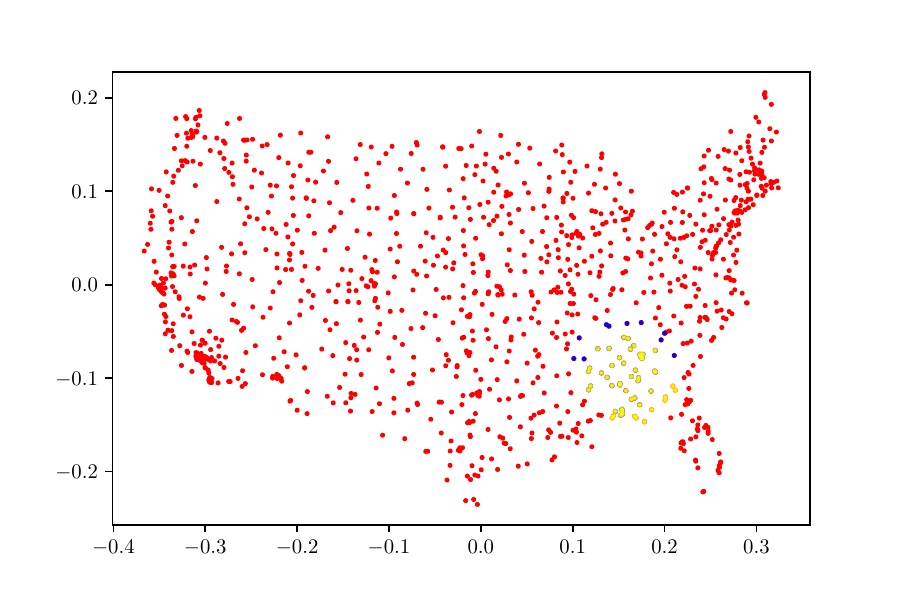}
    \caption{Construct another patch in yellow.}
  \end{subfigure}
  \begin{subfigure}[ht]{0.3\linewidth}
    \centering
    \includegraphics[width=\linewidth,trim=0cm 0cm 0cm 0cm,clip]{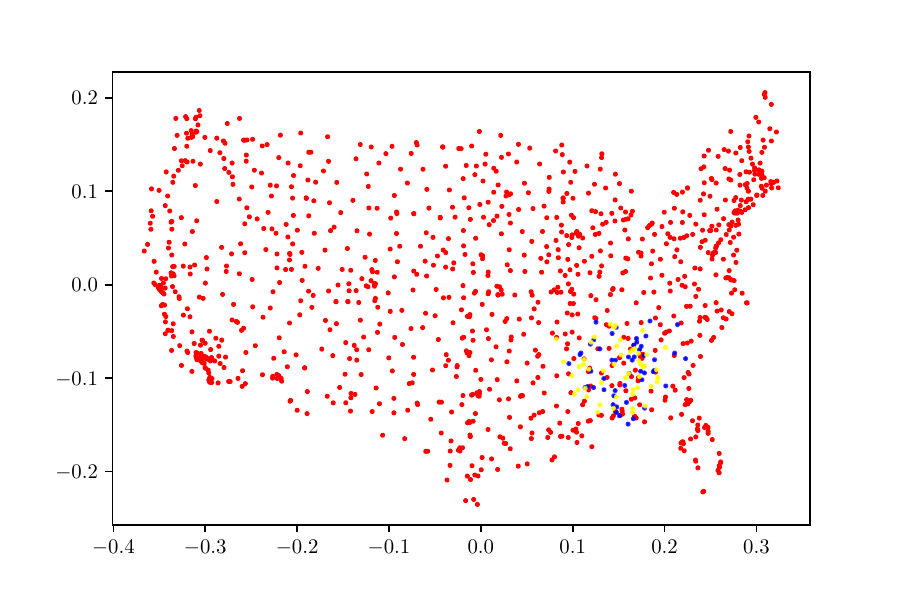}
    \caption{Add noise to each node's coordinates in each patch. }
  \end{subfigure}
  \begin{subfigure}[ht]{0.3\linewidth}
    \centering
    \includegraphics[width=\linewidth,trim=0cm 0cm 0cm 0cm,clip]{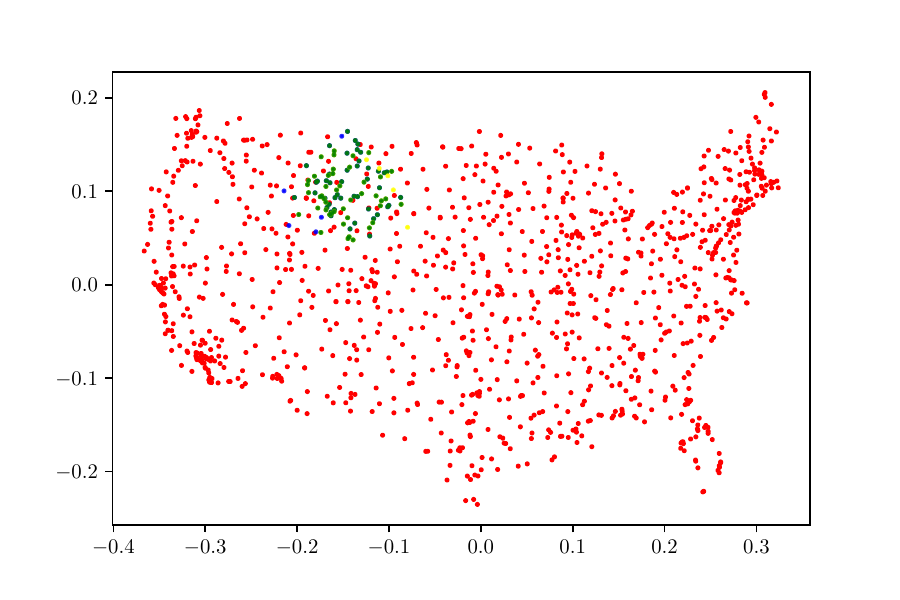}
    \caption{Rotate the patches based on some ground-truth rotation angles. Then apply Procrustes analysis to estimate the rotation angle based on overlapping nodes (in green). Here we use $\theta_\text{blue}=174^{\circ}$ and $\theta_\text{yellow}=178^{\circ}$ with ground-truth $\theta_\text{blue}-\theta_\text{yellow}=356^{\circ}$. }
  \end{subfigure}
  \begin{subfigure}[ht]{0.3\linewidth}
    \centering
    \includegraphics[width=\linewidth,trim=0cm 0cm 0cm 0cm,clip]{uscities_figures/us_k50_thres6_noiseless_patch0_1.pdf}
    \caption{Perform angular synchronization on the full adjacency matrix $\mathbf{A}$ (keeping only upper triangular entries) to obtain estimated angles for each patch. Update the estimated angles by a global shift to obtain final estimates. Apply estimated rotations to the noisy patches. }
  \end{subfigure}
    \begin{subfigure}[ht]{0.3\linewidth}
    \centering
    \includegraphics[width=\linewidth,trim=0cm 0cm 0cm 0cm,clip]{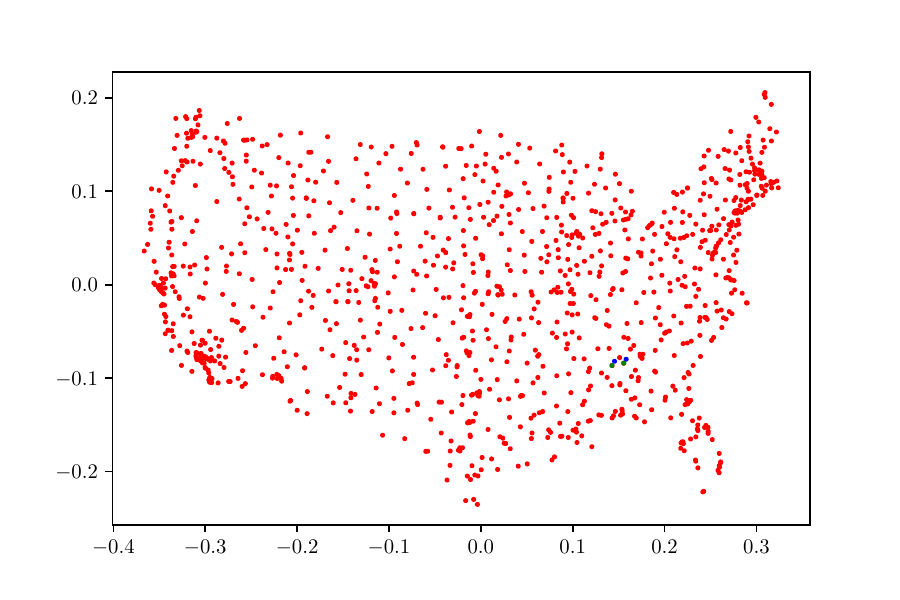}
    \caption{Recovered central point locations (in blue) versus their ground-truth locations (in green) on the U.S. map (in red) for the two sampled patches. The locations are estimated by taking the average recovered coordinates for each city (node) from all patches that contain this node.}
  \end{subfigure}
%%%%%%%%%%%%%%%%%%%%%%%%%%%%%%%%%%%
    \caption{U.S. city patch localization pipeline with two patches as an example: Starting with the ground-truth locations of U.S. cities, we construct patches using each city as a central node and add its $k_\text{patch}=50$ nearest neighbors to the corresponding patch. We then add noise to each node's coordinates using independent normal distributions for x and y coordinates respectively, with mean zero and standard deviation $\eta=0.05$ times of x and y coordinates' standard deviation, respectively. We then rotate the patches based on some ground-truth rotation angles from option ``2" introduced in Sec. 4.1. Here we use $\theta_\text{blue}=174^{\circ}$ and $\theta_\text{yellow}=178^{\circ}$ with ground-truth $\theta_\text{blue}-\theta_\text{yellow}=356^{\circ}$. The estimated rotation angle from the blue patch to the yellow one is $\mathbf{A}_\text{blue, yellow}=6.25$ (i.e., $358^{\circ}$). Then we apply Procrustes analysis to estimate the rotation angle based on overlapping nodes (but with noisy coordinates). After that, we perform angular synchronization on the full adjacency matrix $\mathbf{A}$ (keeping only upper triangular entries) to obtain estimated angles for each patch. We then update the estimated angles by shifting by the average of pairwise differences between the estimated and ground-truth angles. Here we have estimates $r_\text{blue}=173^{\circ}$ and $r_\text{yellow}=175^{\circ}$ with $r_\text{blue}-r_\text{yellow}=358^{\circ}$. Then we apply estimated rotations to the noisy patches. Finally, we obtain recovered central point locations for the two sampled patches. The locations are estimated by taking the average recovered coordinates for each city (node) from all patches that contain this node. The recovered points are colored in blue, while their ground-truth locations are colored in green. 
    }
    \label{fig:uscities_flow}
\end{figure*}
%%%%%%%%%%%%%%%%%%%%%%%%%%%%%
%%%%%%%%%%%%%%%%%%%%%%%%%%%%%
\section{Implementation details}
\label{appendix_sec:implementation}
\subsection{Setup}
We use the whole graph for training for at most 1000 epochs, and stop early if the loss value does not decrease for 200 epochs. We use Stochastic Gradient Descend (SGD) as the optimizer and $\ell_2$ regularization with weight decay $5\cdot 10^{-4}$ to avoid overfitting. We use as learning rate 0.005 throughout.

For each synthetic data set, we generate 5 synthetic networks under the same setting, each with 2 repeated runs.

The DIMPA model is inherited from \cite{he2022digrac}. Indeed, other directed graph embedding neural network methods such as \cite{tong2020digraph} and \cite{zhang2021magnet} could be employed, and we pick DIMPA just for simplicity. In our experiments, we did try out \cite{tong2020digraph}, and we do not observe much difference in the performance as long as some directed graph embeddings could be produced.
\subsection{Codes, data and hardware}
To fully reproduce our results, anonymized code is available at \url{https://github.com/SherylHYX/GNN_Sync}. Experiments were conducted on two compute nodes, each with 8 Nvidia Tesla T4, 96 Intel Xeon Platinum 8259CL CPUs @ 2.50GHz and $378$GB RAM. All experiments can be completed within several days, including all variants.

The data sets considered here are relatively small and the same applies to GNNSync's competitive papers. Although each individual task does not require many resources (often $<$ 5min/run), for the synthetic data sets in this paper we have \begin{eqnarray*} 3  \lefteqn{(\text{measurement graph styles for }k=1)\cdot 10(\text{noise levels}) }\\
&&\cdot 3(\text{sparsity levels})\cdot 4(\text{ground-truth options})\\
 + && 3(\text{number of larger }k\text{ values})\cdot 6(\text{measurement graph }\\ && \text{options for }k>1)
 \cdot 8(\text{noise levels})\\
 && \cdot 3(\text{sparsity levels})\cdot 4(\text{ground-truth options})\\
&=& 360+1,728=2,088\end{eqnarray*} 
synthetic data sets.  
Each data set requires 10 runs for each of the \begin{eqnarray*}
   \lefteqn{1 (\text{main results})}\\
   && +6 (\text{different baselines as input features})\\
   && +1 (\text{no Projected Gradient Steps})\\
   &&+1 (\text{trainable } \alpha)\\&=& 9
    \end{eqnarray*}
    variants for the regular angular synchronization ($k=1$) and 
    \begin{eqnarray*}
   \lefteqn{3 (\text{main results})}\\&& +2(\text{different baselines as input features})\\
   && +2(\text{different baseline})\\
   && +2 (\text{trainable } \{\alpha_\gamma\})\\
   && +2 (\text{no Projected Gradient Steps})\\
   && +2 (\text{separate } \mathbf{H}^{(l)})\\
   && +5 (\text{other linear combinations of the loss function})\\
   &=&18\end{eqnarray*}  variants for general $k$-synchronization with $k\in\{2,3,4\}$. Therefore, the set of tasks in this paper requires a total of $360\cdot 10\cdot 9+1,728\cdot 10\cdot 18=32,400+311,040=343,440$ runs.

The baselines are typically faster, as they do not involve training, but GNNSync is also pretty computationally friendly, not at a significantly higher computational expense. Indeed, the set of all cycles could be pre-computed before training. For $k$-synchronization, we only need to verify whether all edges in a cycle are contained in an estimated graph, and to keep only these cycles for computation. There is a loop that repeats $k$ times, but for each loop, only matrix operations are involved, which can be done in parallel for all possible cycles. This would not be too expensive, as validated by our experiments. Besides, for the MSE function, we do not use it for training, so it is not a loss function in the first place. For evaluation, it does require computing all permutations of $k$ but the evaluation is only conducted once. Therefore, these computationally expensive operations (locating all possible cycles and permutations of $k$ in the MSE computation) are not involved in training, but only before or after training, and hence our method is still scalable with $n$ and $k$.

\subsection{Baseline implementation}
For CEMP\_GCW and CEMP\_MST, we adapt the MatLab code from \url{https://github.com/yunpeng-shi/CEMP/tree/main/SO2} to Python.
For other baselines, we implement the approaches based on equations from the original papers. For TAS, we transform the MatLab codes from the authors of \cite{maunu2023depth} to Python. We set the number of epochs to 50, and set the trimming parameter to zero due to the high sparsity level of our synthetic networks, as otherwise, almost all predictions would be the same, just like the Trivial solution.

Besides, we do not compare GNNSync against the method in \cite{gao2019multi} in our experiments, as there is no code available. Also, their algorithm involves integration and angular argmax in each iteration, which seems to be computationally expensive.

Finally, we are aware of Semi-Definite Programming (SDP) baselines but have found them too time-consuming or space-inefficient. Also, from \cite{singer2011angular}, we know that SDP and spectral methods have comparable performance. Therefore, in our experiments, we omit the SDP results.

\subsection{MSE calculation}
\label{app_sec:MSE}
As stated in \cref{eq:mse}, the MSE function calculates the mean square error using a global angular rotation that minimizes the MSE value. The implementation of the MSE function, however, does not explicitly search for the lowest MSE value through grid search or gradient descent. Inspired by the implementation of the MSE in \cite{singer2011three}, we first map each of the predicted angles $\mathbf{r}$ and the ground-truth angles $\mathbf{R}$ to rotation matrices by the mapping function $$rot(\theta) = \begin{bmatrix}
\cos(\theta) & -\sin(\theta) \\
\sin(\theta) & \cos(\theta)
\end{bmatrix}.$$
We then calculate the matrix $$\mathbf{Q}=\frac{1}{n}\sum_{i=1}^n rot(\mathbf{R}_i)^\top\cdot rot(\mathbf{r}_i).$$
The MSE value is given by $$4-2\sum_{i=1}^n sing_i(\mathbf{Q}),$$
where $sing_i(\mathbf{Q})$ is the $i$-th singular value of $\mathbf{Q}$ during Singular Value Decomposition (SVD).
%%%%%%%%%%%%%%%%%%%%%%%%%%%%%%%%%%%%%%%%%%
\section{Extended experimental results}
\label{app_sec:experiments_full}
This section reports extended experimental results mentioned in the main text. 

\subsection{Extended main results}
\label{app_sec:main_full}
Full main synthetic experimental results are shown in Fig.~\ref{fig:main_k1_ERO} to \ref{fig:main_k4_RGGO}. Results on real-world data sets are shown in Fig.~\ref{fig:uscities_gamma_full_low} to \ref{fig:pacm_gamma_full_high}, while other PACM results are omitted but with the same conclusion. To accommodate potential variability in different runs, we report the mean and standard deviation of ten runs (two repeated runs on five different sets of ground-truth angles) in Tab.~\ref{tab:MSE_uscities_mean_std} and \ref{tab:MSE_pacm_mean_std}. We also compute the Average Normalized Error (ANE) for coordinate recovery similar to eq.~(44) of \cite{cucuringu2012sensor}, and report mean and one standard deviation of the results in Tab.~\ref{tab:ANE_uscities_mean_std} and \ref{tab:ANE_pacm_mean_std}. Specifically, denote $(x_i, y_i)$ as the ground-truth coordinate for node $i$ where $i=1,\dots, n,$ and $(\hat{x_i}, \hat{y_i})$ as the predicted coordinate, we define the Average Normalized Error (ANE) as \begin{equation}
ANE=\frac{\sqrt{\sum_{i=1}^n[(x_i-\hat{x_i})^2+(y_i-\hat{y_i})^2]}}{\sqrt{\sum_{i=1}^n[(x_i-x_0)^2+(y_i-y_0)^2]}},
\end{equation} where $(x_0, y_0)=(\frac{1}{n}\sum_{i=1}^nx_i,\frac{1}{n}\sum_{i=1}^ny_i)=(0,0)$ is the center of mass of the true coordinates. We conclude that GNNSync is able to effectively recover coordinates. We also observe that GNNSync is more robust to the noise of patch coordinates. We omit the visual plots for the ``Trivial" baseline but report its performance in Tab.~\ref{tab:MSE_uscities_mean_std}, \ref{tab:ANE_uscities_mean_std}, \ref{tab:MSE_pacm_mean_std}, and \ref{tab:ANE_pacm_mean_std}. 
\begin{table*}[!tb]
\centering
%\vspace{-10pt}
\caption{Average MSE values (plus/minus one standard deviation) for the real-world experiments on the U.S. map over ten runs. The best is marked in \red{bold red} while the second best is in \blue{underline blue}.}
\label{tab:MSE_uscities_mean_std}
\resizebox{\linewidth}{!}{\begin{tabular}{lrrrrrrrrrrrrrrrrrr}
\toprule
$\eta$&option  & GNNSync&
Spectral&Spectral\_RN & GPM &TranSync&CEMP\_GCW&CEMP\_MST&TAS&Trivial\\
\midrule
0&1 & 0.010$\pm$0.006 & \red{0.000$\pm$0.000} & \red{0.000$\pm$0.000} & \red{0.000$\pm$0.000} & \red{0.000$\pm$0.000} & \red{0.000$\pm$0.000} & \red{0.000$\pm$0.000} & 2.358$\pm$0.075 & 2.442$\pm$0.069 \\
			0&2 & 0.004$\pm$0.001 & \red{0.000$\pm$0.000} & \red{0.000$\pm$0.000} & \red{0.000$\pm$0.000} & \red{0.000$\pm$0.000} & \red{0.000$\pm$0.000} & \red{0.000$\pm$0.000} & 1.567$\pm$0.035 & 1.566$\pm$0.037 \\
			0&3 & \red{0.000$\pm$0.000} & \red{-0.000$\pm$0.000} & \red{-0.000$\pm$0.000} & \red{-0.000$\pm$0.000} & \red{-0.000$\pm$0.000} & \red{-0.000$\pm$0.000} & \red{-0.000$\pm$0.000} & 0.138$\pm$0.174 & 0.138$\pm$0.174 \\
			0&4 & 0.002$\pm$0.001 & \red{0.000$\pm$0.000} & \red{0.000$\pm$0.000} & \red{0.000$\pm$0.000} & \red{0.000$\pm$0.000} & \red{0.000$\pm$0.000} & \red{0.000$\pm$0.000} & 0.651$\pm$0.237 & 0.663$\pm$0.238 \\
			0.05&1 & 0.014$\pm$0.006 & 0.007$\pm$0.001 & \red{0.006$\pm$0.001} & \red{0.006$\pm$0.001} & 0.036$\pm$0.027 & 0.082$\pm$0.040 & 1.162$\pm$0.438 & 2.340$\pm$0.098 & 2.411$\pm$0.094 \\
			0.05&2 & 0.010$\pm$0.002 & \red{0.006$\pm$0.000} & \red{0.006$\pm$0.000} & \red{0.006$\pm$0.000} & 0.026$\pm$0.006 & 0.072$\pm$0.006 & 1.353$\pm$0.461 & 1.559$\pm$0.032 & 1.559$\pm$0.033 \\
			0.05&3 & \red{0.006$\pm$0.002} & 0.007$\pm$0.001 & \red{0.006$\pm$0.001} & \red{0.006$\pm$0.001} & 0.027$\pm$0.009 & 0.816$\pm$0.494 & 2.827$\pm$0.980 & 0.293$\pm$0.481 & 0.294$\pm$0.482 \\
			0.05&4 & 0.007$\pm$0.002 & \red{0.006$\pm$0.001} & \red{0.006$\pm$0.000} & \red{0.006$\pm$0.001} & 0.024$\pm$0.007 & 0.319$\pm$0.251 & 1.826$\pm$0.918 & 0.750$\pm$0.373 & 0.759$\pm$0.372 \\
			0.1&1 & 0.030$\pm$0.005 & 0.030$\pm$0.007 & \blue{0.027$\pm$0.006} & \red{0.025$\pm$0.005} & 0.147$\pm$0.026 & 0.528$\pm$0.077 & 2.819$\pm$0.292 & 2.318$\pm$0.104 & 2.368$\pm$0.101 \\
			0.1&2 & \red{0.025$\pm$0.002} & 0.031$\pm$0.002 & 0.028$\pm$0.002 & \blue{0.027$\pm$0.001} & 0.188$\pm$0.053 & 0.397$\pm$0.048 & 2.874$\pm$0.315 & 1.558$\pm$0.030 & 1.564$\pm$0.028 \\
			0.1&3 & \red{0.019$\pm$0.003} & 0.027$\pm$0.005 & \blue{0.024$\pm$0.004} & 0.025$\pm$0.004 & 0.129$\pm$0.048 & 1.775$\pm$1.104 & 3.583$\pm$0.298 & 0.138$\pm$0.175 & 0.138$\pm$0.174 \\
			0.1&4 & \red{0.021$\pm$0.006} & 0.026$\pm$0.003 & \blue{0.023$\pm$0.002} & \blue{0.023$\pm$0.002} & 0.127$\pm$0.033 & 1.055$\pm$0.616 & 3.102$\pm$0.205 & 0.656$\pm$0.238 & 0.663$\pm$0.238 \\
			0.15&1 & \red{0.059$\pm$0.008} & 0.075$\pm$0.012 & 0.072$\pm$0.013 & \blue{0.062$\pm$0.008} & 0.483$\pm$0.279 & 1.496$\pm$0.473 & 3.698$\pm$0.092 & 2.363$\pm$0.117 & 2.396$\pm$0.118 \\
			0.15&2 & \red{0.057$\pm$0.005} & 0.082$\pm$0.005 & 0.076$\pm$0.007 & \blue{0.067$\pm$0.003} & 0.492$\pm$0.122 & 1.193$\pm$0.238 & 3.501$\pm$0.301 & 1.566$\pm$0.037 & 1.566$\pm$0.037 \\
			0.15&3 & \red{0.037$\pm$0.006} & 0.052$\pm$0.011 & 0.048$\pm$0.009 & \blue{0.046$\pm$0.010} & 0.277$\pm$0.064 & 1.745$\pm$0.540 & 3.671$\pm$0.229 & 0.138$\pm$0.175 & 0.138$\pm$0.174 \\
			0.15&4 & \red{0.043$\pm$0.007} & 0.055$\pm$0.008 & 0.052$\pm$0.006 & \blue{0.046$\pm$0.005} & 0.591$\pm$0.384 & 1.317$\pm$0.174 & 3.519$\pm$0.152 & 0.658$\pm$0.239 & 0.663$\pm$0.238 \\
			0.2&1 & \red{0.101$\pm$0.007} & 0.148$\pm$0.024 & 0.151$\pm$0.019 & \blue{0.107$\pm$0.009} & 1.000$\pm$0.257 & 2.568$\pm$0.906 & 3.711$\pm$0.122 & 2.377$\pm$0.095 & 2.399$\pm$0.093 \\
			0.2&2 & \red{0.101$\pm$0.006} & 0.163$\pm$0.017 & 0.159$\pm$0.018 & \blue{0.122$\pm$0.009} & 1.054$\pm$0.245 & 2.082$\pm$0.263 & 3.717$\pm$0.109 & 1.564$\pm$0.037 & 1.566$\pm$0.037 \\
			0.2&3 & \red{0.065$\pm$0.015} & 0.095$\pm$0.018 & 0.092$\pm$0.019 & \blue{0.078$\pm$0.015} & 0.756$\pm$0.222 & 2.239$\pm$0.556 & 3.699$\pm$0.111 & 0.335$\pm$0.377 & 0.336$\pm$0.380 \\
			0.2&4 & \red{0.066$\pm$0.008} & 0.096$\pm$0.017 & 0.095$\pm$0.019 & \blue{0.072$\pm$0.010} & 0.717$\pm$0.258 & 2.040$\pm$0.334 & 3.751$\pm$0.089 & 0.660$\pm$0.239 & 0.663$\pm$0.238 \\
			0.25&1 & \red{0.158$\pm$0.008} & 0.244$\pm$0.038 & 0.263$\pm$0.039 & \blue{0.164$\pm$0.011} & 1.690$\pm$0.569 & 2.888$\pm$0.240 & 3.791$\pm$0.096 & 2.427$\pm$0.069 & 2.442$\pm$0.069 \\
			0.25&2 & \red{0.163$\pm$0.013} & 0.291$\pm$0.038 & 0.294$\pm$0.042 & \blue{0.193$\pm$0.018} & 1.888$\pm$0.737 & 2.633$\pm$0.294 & 3.782$\pm$0.108 & 1.564$\pm$0.037 & 1.566$\pm$0.037 \\
			0.25&3 & \blue{0.105$\pm$0.065} & 0.143$\pm$0.018 & 0.242$\pm$0.105 & \red{0.103$\pm$0.021} & 1.265$\pm$0.583 & 3.050$\pm$0.552 & 3.765$\pm$0.081 & 0.138$\pm$0.174 & 0.138$\pm$0.174 \\
			0.25&4 & \blue{0.128$\pm$0.074} & 0.170$\pm$0.029 & 0.176$\pm$0.040 & \red{0.107$\pm$0.017} & 1.296$\pm$0.320 & 2.787$\pm$0.477 & 3.787$\pm$0.070 & 0.660$\pm$0.238 & 0.663$\pm$0.238 \\
\bottomrule
\end{tabular}}
\end{table*}

\begin{table*}[!tb]
\centering
%\vspace{-10pt}
\caption{Average ANE values (plus/minus one standard deviation) for the real-world experiments on the U.S. map over ten runs. The best is marked in \red{bold red} while the second best is in \blue{underline blue}.}
\label{tab:ANE_uscities_mean_std}
\resizebox{\linewidth}{!}{\begin{tabular}{lrrrrrrrrrrrrrrrrrr}
\toprule
$\eta$&option  & GNNSync&
Spectral&Spectral\_RN & GPM &TranSync&CEMP\_GCW&CEMP\_MST&TAS&Trivial\\
\midrule
0&1 & 0.075$\pm$0.028 & \red{0.000$\pm$0.000} & \red{0.000$\pm$0.000} & \red{0.000$\pm$0.000} & \red{0.000$\pm$0.000} & \red{0.000$\pm$0.000} & \red{0.000$\pm$0.000} & 0.740$\pm$0.021 & 0.973$\pm$0.263 \\
			0&2 & 0.047$\pm$0.010 & \red{0.000$\pm$0.000} & \red{0.000$\pm$0.000} & \red{0.000$\pm$0.000} & \red{0.000$\pm$0.000} & \red{0.000$\pm$0.000} & \red{0.000$\pm$0.000} & 0.425$\pm$0.015 & 0.423$\pm$0.014 \\
			0&3 & 0.011$\pm$0.009 & \red{0.000$\pm$0.000} & \red{0.000$\pm$0.000} & \red{0.000$\pm$0.000} & \red{0.000$\pm$0.000} & \red{0.000$\pm$0.000} & \red{0.000$\pm$0.000} & 0.050$\pm$0.049 & 0.049$\pm$0.049 \\
			0&4 & 0.030$\pm$0.016 & \red{0.000$\pm$0.000} & \red{0.000$\pm$0.000} & \red{0.000$\pm$0.000} & \red{0.000$\pm$0.000} & \red{0.000$\pm$0.000} & \red{0.000$\pm$0.000} & 0.213$\pm$0.078 & 0.219$\pm$0.078 \\
			0.05&1 & 0.074$\pm$0.027 & \red{0.032$\pm$0.010} & \red{0.032$\pm$0.011} & \red{0.032$\pm$0.009} & 0.360$\pm$0.591 & 0.121$\pm$0.069 & 0.709$\pm$0.495 & 0.735$\pm$0.025 & 0.984$\pm$0.277 \\
			0.05&2 & \blue{0.054$\pm$0.013} & 0.284$\pm$0.508 & \red{0.030$\pm$0.002} & 0.159$\pm$0.258 & 0.092$\pm$0.020 & 0.146$\pm$0.035 & 0.638$\pm$0.238 & 0.421$\pm$0.015 & 0.419$\pm$0.013 \\
			0.05&3 & \red{0.030$\pm$0.017} & 0.118$\pm$0.160 & \blue{0.035$\pm$0.005} & 0.039$\pm$0.008 & 0.469$\pm$0.763 & 0.708$\pm$0.338 & 0.900$\pm$0.178 & 0.089$\pm$0.124 & 0.089$\pm$0.124 \\
			0.05&4 & 0.229$\pm$0.569 & 0.404$\pm$0.744 & \red{0.031$\pm$0.007} & \blue{0.032$\pm$0.008} & 0.058$\pm$0.009 & 0.494$\pm$0.595 & 0.750$\pm$0.369 & 0.243$\pm$0.121 & 0.250$\pm$0.122 \\
			0.1&1 & \blue{0.078$\pm$0.014} & \red{0.070$\pm$0.024} & 0.204$\pm$0.152 & 0.160$\pm$0.181 & 0.164$\pm$0.050 & 0.568$\pm$0.409 & 1.068$\pm$0.135 & 0.733$\pm$0.028 & 0.987$\pm$0.280 \\
			0.1&2 & \red{0.057$\pm$0.017} & 0.077$\pm$0.007 & \blue{0.076$\pm$0.005} & 0.259$\pm$0.355 & 0.270$\pm$0.042 & 0.345$\pm$0.042 & 0.992$\pm$0.161 & 0.415$\pm$0.016 & 0.414$\pm$0.016 \\
			0.1&3 & \red{0.046$\pm$0.015} & 0.085$\pm$0.017 & 0.079$\pm$0.018 & 0.324$\pm$0.466 & 0.503$\pm$0.503 & 0.920$\pm$0.298 & 1.019$\pm$0.109 & \blue{0.053$\pm$0.047} & \blue{0.053$\pm$0.047} \\
			0.1&4 & 0.247$\pm$0.583 & \blue{0.075$\pm$0.019} & \red{0.069$\pm$0.016} & 0.124$\pm$0.119 & 0.226$\pm$0.135 & 0.828$\pm$0.382 & 0.964$\pm$0.146 & 0.215$\pm$0.079 & 0.219$\pm$0.078 \\
			0.15&1 & 0.313$\pm$0.449 & \blue{0.216$\pm$0.153} & 0.602$\pm$0.713 & \red{0.104$\pm$0.022} & 0.313$\pm$0.027 & 0.859$\pm$0.422 & 1.019$\pm$0.071 & 0.755$\pm$0.032 & 1.101$\pm$0.257 \\
			0.15&2 & \red{0.076$\pm$0.019} & 0.573$\pm$0.679 & 0.151$\pm$0.062 & \blue{0.136$\pm$0.026} & 0.460$\pm$0.304 & 0.983$\pm$0.460 & 0.964$\pm$0.065 & 0.425$\pm$0.015 & 0.424$\pm$0.014 \\
			0.15&3 & 0.181$\pm$0.347 & 0.189$\pm$0.165 & 0.102$\pm$0.024 & 0.107$\pm$0.025 & 0.524$\pm$0.527 & 0.787$\pm$0.145 & 0.979$\pm$0.105 & \red{0.057$\pm$0.045} & \red{0.057$\pm$0.045} \\
			0.15&4 & \red{0.075$\pm$0.032} & 0.104$\pm$0.030 & 0.096$\pm$0.020 & \blue{0.088$\pm$0.024} & 0.929$\pm$0.522 & 0.947$\pm$0.460 & 1.019$\pm$0.060 & 0.215$\pm$0.077 & 0.220$\pm$0.078 \\
			0.2&1 & \blue{0.131$\pm$0.026} & 0.247$\pm$0.203 & 0.612$\pm$0.375 & \red{0.125$\pm$0.034} & 0.835$\pm$0.435 & 0.876$\pm$0.140 & 1.087$\pm$0.058 & 0.749$\pm$0.027 & 0.976$\pm$0.267 \\
			0.2&2 & \red{0.094$\pm$0.018} & 0.451$\pm$0.332 & \blue{0.225$\pm$0.109} & 0.433$\pm$0.506 & 0.609$\pm$0.234 & 0.967$\pm$0.272 & 1.005$\pm$0.038 & 0.424$\pm$0.017 & 0.424$\pm$0.014 \\
			0.2&3 & \red{0.087$\pm$0.037} & 0.133$\pm$0.025 & 0.130$\pm$0.028 & 0.132$\pm$0.030 & 0.964$\pm$0.649 & 0.889$\pm$0.211 & 1.020$\pm$0.053 & \blue{0.109$\pm$0.092} & 0.110$\pm$0.093 \\
			0.2&4 & \red{0.077$\pm$0.023} & 0.123$\pm$0.029 & 0.126$\pm$0.025 & \blue{0.103$\pm$0.027} & 0.724$\pm$0.484 & 0.911$\pm$0.217 & 0.999$\pm$0.031 & 0.215$\pm$0.076 & 0.221$\pm$0.077 \\
			0.25&1 & \red{0.197$\pm$0.066} & \blue{0.223$\pm$0.101} & 0.337$\pm$0.255 & 0.478$\pm$0.418 & 0.656$\pm$0.188 & 0.978$\pm$0.133 & 1.008$\pm$0.032 & 0.760$\pm$0.022 & 0.974$\pm$0.263 \\
			0.25&2 & \red{0.122$\pm$0.034} & 0.494$\pm$0.399 & \blue{0.393$\pm$0.219} & 0.546$\pm$0.701 & 0.912$\pm$0.215 & 0.895$\pm$0.061 & 1.030$\pm$0.025 & 0.425$\pm$0.017 & 0.425$\pm$0.014 \\
			0.25&3 & 0.260$\pm$0.397 & 0.281$\pm$0.215 & 0.230$\pm$0.065 & 0.157$\pm$0.038 & 1.193$\pm$0.429 & 1.085$\pm$0.153 & 1.009$\pm$0.052 & \red{0.067$\pm$0.041} & \red{0.067$\pm$0.041} \\
			0.25&4 & \red{0.134$\pm$0.109} & 0.513$\pm$0.662 & \blue{0.188$\pm$0.047} & 0.288$\pm$0.343 & 0.612$\pm$0.174 & 0.958$\pm$0.147 & 1.042$\pm$0.048 & 0.217$\pm$0.074 & 0.222$\pm$0.076 \\
\bottomrule
\end{tabular}}
\end{table*}
\begin{table*}[!tb]
\centering
%\vspace{-10pt}
\caption{Average MSE values (plus/minus one standard deviation) for the real-world experiments on the PACM point cloud over ten runs. The best is marked in \red{bold red} while the second best is in \blue{underline blue}.}
\label{tab:MSE_pacm_mean_std}
\resizebox{\linewidth}{!}{\begin{tabular}{lrrrrrrrrrrrrrrrrrr}
\toprule
$\eta$&option  & GNNSync&
Spectral&Spectral\_RN & GPM &TranSync&CEMP\_GCW&CEMP\_MST&TAS&Trivial\\
\midrule
0&1 & 0.010$\pm$0.013 & \red{-0.000$\pm$0.000} & \red{-0.000$\pm$0.000} & \red{-0.000$\pm$0.000} & \red{-0.000$\pm$0.000} & \red{-0.000$\pm$0.000} & \red{-0.000$\pm$0.000} & 2.249$\pm$0.128 & 2.468$\pm$0.139 \\
			0&2 & 0.001$\pm$0.001 & \red{0.000$\pm$0.000} & \red{0.000$\pm$0.000} & \red{0.000$\pm$0.000} & \red{0.000$\pm$0.000} & \red{0.000$\pm$0.000} & \red{0.000$\pm$0.000} & 1.640$\pm$0.041 & 1.565$\pm$0.042 \\
			0&3 & 0.002$\pm$0.002 & \red{0.000$\pm$0.000} & \red{0.000$\pm$0.000} & \red{0.000$\pm$0.000} & \red{0.000$\pm$0.000} & \red{0.000$\pm$0.000} & \red{0.000$\pm$0.000} & 1.221$\pm$0.779 & 1.160$\pm$0.783 \\
			0&4 & 0.001$\pm$0.001 & \red{0.000$\pm$0.000} & \red{0.000$\pm$0.000} & \red{0.000$\pm$0.000} & \red{0.000$\pm$0.000} & \red{0.000$\pm$0.000} & \red{0.000$\pm$0.000} & 1.196$\pm$0.430 & 1.176$\pm$0.426 \\
			0.05&1 & 0.015$\pm$0.011 & \red{0.003$\pm$0.000} & \red{0.003$\pm$0.001} & \red{0.003$\pm$0.000} & 0.010$\pm$0.004 & 0.039$\pm$0.008 & 0.202$\pm$0.093 & 2.246$\pm$0.129 & 2.468$\pm$0.139 \\
			0.05&2 & 0.004$\pm$0.001 & 0.003$\pm$0.000 & \red{0.002$\pm$0.000} & \red{0.002$\pm$0.000} & 0.010$\pm$0.003 & 0.024$\pm$0.016 & 0.885$\pm$1.157 & 1.611$\pm$0.057 & 1.543$\pm$0.051 \\
			0.05&3 & 0.004$\pm$0.002 & \red{0.003$\pm$0.000} & \red{0.003$\pm$0.000} & \red{0.003$\pm$0.000} & 0.009$\pm$0.004 & 0.057$\pm$0.058 & 0.528$\pm$0.496 & 1.218$\pm$0.777 & 1.160$\pm$0.783 \\
			0.05&4 & \red{0.003$\pm$0.001} & \red{0.003$\pm$0.000} & \red{0.003$\pm$0.000} & \red{0.003$\pm$0.000} & 0.011$\pm$0.012 & 0.177$\pm$0.206 & 0.718$\pm$0.292 & 1.006$\pm$0.246 & 0.987$\pm$0.253 \\
			0.1&1 & 0.015$\pm$0.005 & \red{0.012$\pm$0.004} & \red{0.012$\pm$0.004} & \red{0.012$\pm$0.004} & 0.049$\pm$0.014 & 0.187$\pm$0.107 & 1.580$\pm$0.642 & 2.122$\pm$0.150 & 2.346$\pm$0.135 \\
			0.1&2 & \red{0.010$\pm$0.001} & 0.011$\pm$0.001 & \red{0.010$\pm$0.001} & \red{0.010$\pm$0.001} & 0.044$\pm$0.011 & 0.145$\pm$0.031 & 1.599$\pm$0.459 & 1.631$\pm$0.036 & 1.565$\pm$0.042 \\
			0.1&3 & 0.012$\pm$0.003 & \red{0.010$\pm$0.001} & \red{0.010$\pm$0.001} & \red{0.010$\pm$0.001} & 0.055$\pm$0.025 & 0.298$\pm$0.377 & 1.394$\pm$0.659 & 1.213$\pm$0.775 & 1.160$\pm$0.783 \\
			0.1&4 & \red{0.010$\pm$0.001} & 0.011$\pm$0.001 & \red{0.010$\pm$0.001} & \red{0.010$\pm$0.001} & 0.067$\pm$0.052 & 0.271$\pm$0.148 & 1.934$\pm$0.656 & 1.191$\pm$0.431 & 1.176$\pm$0.426 \\
			0.15&1 & 0.032$\pm$0.009 & 0.029$\pm$0.012 & \red{0.028$\pm$0.011} & \red{0.028$\pm$0.012} & 0.136$\pm$0.088 & 0.380$\pm$0.192 & 2.009$\pm$0.657 & 2.262$\pm$0.136 & 2.468$\pm$0.139 \\
			0.15&2 & 0.022$\pm$0.002 & 0.022$\pm$0.002 & \red{0.020$\pm$0.002} & \blue{0.021$\pm$0.002} & 0.109$\pm$0.034 & 0.522$\pm$0.528 & 2.927$\pm$0.254 & 1.587$\pm$0.064 & 1.530$\pm$0.057 \\
			0.15&3 & 0.021$\pm$0.005 & 0.020$\pm$0.004 & \red{0.019$\pm$0.004} & \red{0.019$\pm$0.004} & 0.107$\pm$0.035 & 0.523$\pm$0.416 & 2.970$\pm$0.502 & 0.570$\pm$0.468 & 0.524$\pm$0.446 \\
			0.15&4 & \red{0.020$\pm$0.002} & 0.022$\pm$0.001 & \red{0.020$\pm$0.001} & \red{0.020$\pm$0.001} & 0.091$\pm$0.038 & 0.567$\pm$0.324 & 2.869$\pm$0.430 & 1.185$\pm$0.427 & 1.176$\pm$0.426 \\
			0.2&1 & \red{0.050$\pm$0.012} & 0.053$\pm$0.019 & \blue{0.051$\pm$0.018} & \blue{0.051$\pm$0.019} & 0.236$\pm$0.117 & 0.546$\pm$0.167 & 2.894$\pm$0.312 & 2.288$\pm$0.139 & 2.468$\pm$0.139 \\
			0.2&2 & \red{0.037$\pm$0.003} & 0.039$\pm$0.006 & \red{0.037$\pm$0.005} & 0.038$\pm$0.006 & 0.250$\pm$0.109 & 0.826$\pm$0.410 & 2.982$\pm$0.595 & 1.610$\pm$0.041 & 1.565$\pm$0.042 \\
			0.2&3 & 0.032$\pm$0.006 & 0.032$\pm$0.004 & \red{0.030$\pm$0.003} & \red{0.030$\pm$0.004} & 0.157$\pm$0.045 & 0.872$\pm$0.368 & 3.330$\pm$0.284 & 1.180$\pm$1.161 & 1.166$\pm$1.170 \\
			0.2&4 & \red{0.030$\pm$0.002} & 0.034$\pm$0.002 & 0.032$\pm$0.002 & \blue{0.031$\pm$0.002} & 0.191$\pm$0.084 & 0.894$\pm$0.220 & 3.312$\pm$0.295 & 0.998$\pm$0.246 & 0.987$\pm$0.253 \\
			0.25&1 & \red{0.069$\pm$0.019} & 0.082$\pm$0.026 & \blue{0.078$\pm$0.025} & 0.081$\pm$0.028 & 0.389$\pm$0.185 & 1.028$\pm$0.224 & 3.513$\pm$0.350 & 2.318$\pm$0.140 & 2.468$\pm$0.139 \\
			0.25&2 & \red{0.054$\pm$0.005} & 0.059$\pm$0.010 & \blue{0.056$\pm$0.009} & 0.057$\pm$0.011 & 0.454$\pm$0.386 & 0.955$\pm$0.126 & 3.586$\pm$0.224 & 1.605$\pm$0.040 & 1.565$\pm$0.042 \\
			0.25&3 & 0.050$\pm$0.013 & 0.051$\pm$0.009 & \red{0.047$\pm$0.008} & \blue{0.049$\pm$0.009} & 0.341$\pm$0.105 & 0.942$\pm$0.049 & 3.469$\pm$0.164 & 1.475$\pm$1.071 & 1.460$\pm$1.084 \\
			0.25&4 & \red{0.047$\pm$0.005} & 0.053$\pm$0.003 & \blue{0.050$\pm$0.004} & \blue{0.050$\pm$0.003} & 0.359$\pm$0.176 & 1.179$\pm$0.442 & 3.238$\pm$0.333 & 1.181$\pm$0.425 & 1.176$\pm$0.426 \\
\bottomrule
\end{tabular}}
\end{table*}

\begin{table*}[!tb]
\centering
%\vspace{-10pt}
\caption{Average ANE values (plus/minus one standard deviation) for the real-world experiments on the PACM point cloud over ten runs. The best is marked in \red{bold red} while the second best is in \blue{underline blue}.}
\label{tab:ANE_pacm_mean_std}
\resizebox{\linewidth}{!}{\begin{tabular}{lrrrrrrrrrrrrrrrrrr}
\toprule
$\eta$&option  & GNNSync&
Spectral&Spectral\_RN & GPM &TranSync&CEMP\_GCW&CEMP\_MST&TAS&Trivial\\
\midrule
0&1 & 0.118$\pm$0.093 & \red{0.000$\pm$0.000} & \red{0.000$\pm$0.000} & \red{0.000$\pm$0.000} & \red{0.000$\pm$0.000} & \red{0.000$\pm$0.000} & \red{0.000$\pm$0.000} & 1.357$\pm$0.080 & 1.655$\pm$0.426 \\
			0&2 & 0.051$\pm$0.031 & \red{0.000$\pm$0.000} & \red{0.000$\pm$0.000} & \red{0.000$\pm$0.000} & \red{0.000$\pm$0.000} & \red{0.000$\pm$0.000} & \red{0.000$\pm$0.000} & 0.870$\pm$0.029 & 0.785$\pm$0.020 \\
			0&3 & 0.048$\pm$0.034 & \red{0.000$\pm$0.000} & \red{0.000$\pm$0.000} & \red{0.000$\pm$0.000} & \red{0.000$\pm$0.000} & \red{0.000$\pm$0.000} & \red{0.000$\pm$0.000} & 0.645$\pm$0.396 & 0.597$\pm$0.390 \\
			0&4 & 0.038$\pm$0.024 & \red{0.000$\pm$0.000} & \red{0.000$\pm$0.000} & \red{0.000$\pm$0.000} & \red{0.000$\pm$0.000} & \red{0.000$\pm$0.000} & \red{0.000$\pm$0.000} & 0.687$\pm$0.236 & 0.631$\pm$0.223 \\
			0.05&1 & 0.139$\pm$0.079 & 0.722$\pm$1.382 & \blue{0.088$\pm$0.120} & \red{0.029$\pm$0.009} & 0.093$\pm$0.042 & 0.154$\pm$0.066 & 0.964$\pm$1.353 & 1.347$\pm$0.082 & 1.655$\pm$0.426 \\
			0.05&2 & 0.055$\pm$0.019 & 0.671$\pm$1.288 & \red{0.022$\pm$0.003} & \blue{0.024$\pm$0.005} & 0.102$\pm$0.048 & 0.121$\pm$0.093 & 0.692$\pm$0.687 & 0.869$\pm$0.043 & 0.805$\pm$0.045 \\
			0.05&3 & 0.061$\pm$0.031 & 0.029$\pm$0.007 & \red{0.026$\pm$0.007} & \blue{0.028$\pm$0.007} & 0.062$\pm$0.017 & 0.823$\pm$1.317 & 0.588$\pm$0.310 & 0.640$\pm$0.393 & 0.597$\pm$0.389 \\
			0.05&4 & 0.046$\pm$0.021 & 0.032$\pm$0.007 & \red{0.030$\pm$0.006} & \blue{0.031$\pm$0.007} & 0.068$\pm$0.039 & 0.291$\pm$0.253 & 0.826$\pm$0.425 & 0.699$\pm$0.256 & 0.671$\pm$0.279 \\
			0.1&1 & 0.116$\pm$0.037 & \red{0.072$\pm$0.039} & \blue{0.074$\pm$0.039} & 0.661$\pm$1.159 & 0.220$\pm$0.066 & 0.896$\pm$1.294 & 1.326$\pm$0.710 & 1.320$\pm$0.094 & 1.654$\pm$0.433 \\
			0.1&2 & \blue{0.068$\pm$0.021} & 0.197$\pm$0.275 & \red{0.053$\pm$0.009} & 0.088$\pm$0.063 & 0.210$\pm$0.041 & 1.057$\pm$1.368 & 1.499$\pm$0.537 & 0.851$\pm$0.018 & 0.785$\pm$0.020 \\
			0.1&3 & \blue{0.077$\pm$0.039} & 0.144$\pm$0.168 & 0.158$\pm$0.216 & \red{0.055$\pm$0.016} & 0.196$\pm$0.070 & 0.401$\pm$0.320 & 1.470$\pm$0.907 & 0.636$\pm$0.391 & 0.597$\pm$0.389 \\
			0.1&4 & \blue{0.057$\pm$0.014} & 0.063$\pm$0.011 & \red{0.056$\pm$0.014} & 0.060$\pm$0.013 & 0.270$\pm$0.185 & 0.639$\pm$0.383 & 1.419$\pm$0.437 & 0.683$\pm$0.245 & 0.632$\pm$0.223 \\
			0.15&1 & 0.163$\pm$0.063 & 0.228$\pm$0.178 & \blue{0.159$\pm$0.064} & \red{0.138$\pm$0.060} & 0.301$\pm$0.235 & 0.511$\pm$0.346 & 1.313$\pm$0.653 & 1.357$\pm$0.084 & 1.655$\pm$0.426 \\
			0.15&2 & 0.085$\pm$0.020 & \blue{0.078$\pm$0.017} & \red{0.076$\pm$0.015} & 0.349$\pm$0.382 & 0.232$\pm$0.078 & 0.739$\pm$0.438 & 2.096$\pm$0.323 & 0.844$\pm$0.043 & 0.768$\pm$0.032 \\
			0.15&3 & 0.095$\pm$0.039 & 0.738$\pm$1.340 & \red{0.076$\pm$0.031} & \blue{0.079$\pm$0.032} & 0.239$\pm$0.061 & 0.903$\pm$1.036 & 1.803$\pm$0.217 & 0.322$\pm$0.241 & 0.298$\pm$0.226 \\
			0.15&4 & \red{0.079$\pm$0.020} & 0.094$\pm$0.016 & \blue{0.086$\pm$0.019} & 0.090$\pm$0.017 & 0.366$\pm$0.240 & 1.125$\pm$1.250 & 1.775$\pm$0.338 & 0.670$\pm$0.237 & 0.632$\pm$0.223 \\
			0.2&1 & \red{0.189$\pm$0.063} & 0.196$\pm$0.068 & 0.199$\pm$0.069 & \blue{0.193$\pm$0.071} & 0.937$\pm$1.219 & 1.528$\pm$0.954 & 2.331$\pm$0.262 & 1.378$\pm$0.085 & 1.655$\pm$0.426 \\
			0.2&2 & \red{0.102$\pm$0.029} & 0.111$\pm$0.028 & \blue{0.110$\pm$0.031} & 0.265$\pm$0.313 & 0.402$\pm$0.239 & 1.202$\pm$0.789 & 1.909$\pm$0.278 & 0.840$\pm$0.027 & 0.785$\pm$0.020 \\
			0.2&3 & \red{0.097$\pm$0.036} & 0.846$\pm$1.507 & 0.526$\pm$0.873 & \blue{0.169$\pm$0.173} & 0.336$\pm$0.124 & 0.829$\pm$0.338 & 1.852$\pm$0.125 & 0.633$\pm$0.591 & 0.635$\pm$0.610 \\
			0.2&4 & \red{0.096$\pm$0.017} & 0.116$\pm$0.020 & 0.807$\pm$0.855 & \blue{0.104$\pm$0.022} & 0.398$\pm$0.247 & 1.059$\pm$0.303 & 1.952$\pm$0.435 & 0.690$\pm$0.270 & 0.672$\pm$0.279 \\
			0.25&1 & 0.290$\pm$0.277 & 0.254$\pm$0.074 & \blue{0.252$\pm$0.075} & \red{0.247$\pm$0.081} & 0.465$\pm$0.140 & 1.505$\pm$0.824 & 2.016$\pm$0.194 & 1.391$\pm$0.093 & 1.655$\pm$0.426 \\
			0.25&2 & \red{0.109$\pm$0.029} & 0.142$\pm$0.051 & \blue{0.129$\pm$0.040} & 0.372$\pm$0.494 & 0.907$\pm$1.221 & 1.312$\pm$0.891 & 2.087$\pm$0.138 & 0.838$\pm$0.028 & 0.785$\pm$0.020 \\
			0.25&3 & \red{0.311$\pm$0.615} & 0.557$\pm$0.872 & 0.407$\pm$0.586 & \blue{0.313$\pm$0.268} & 0.871$\pm$0.806 & 1.713$\pm$0.728 & 1.941$\pm$0.078 & 0.779$\pm$0.534 & 0.782$\pm$0.560 \\
			0.25&4 & \red{0.109$\pm$0.022} & 0.140$\pm$0.035 & 0.888$\pm$0.965 & \blue{0.131$\pm$0.035} & 0.427$\pm$0.117 & 1.248$\pm$0.622 & 1.923$\pm$0.205 & 0.648$\pm$0.235 & 0.633$\pm$0.222 \\
\bottomrule
\end{tabular}}
\end{table*}

\begin{figure*}[!hbt]
    \centering
  \begin{subfigure}[ht]{0.245\linewidth}
    \centering
    \includegraphics[width=\linewidth,trim=0cm 0cm 0cm 1.8cm,clip]{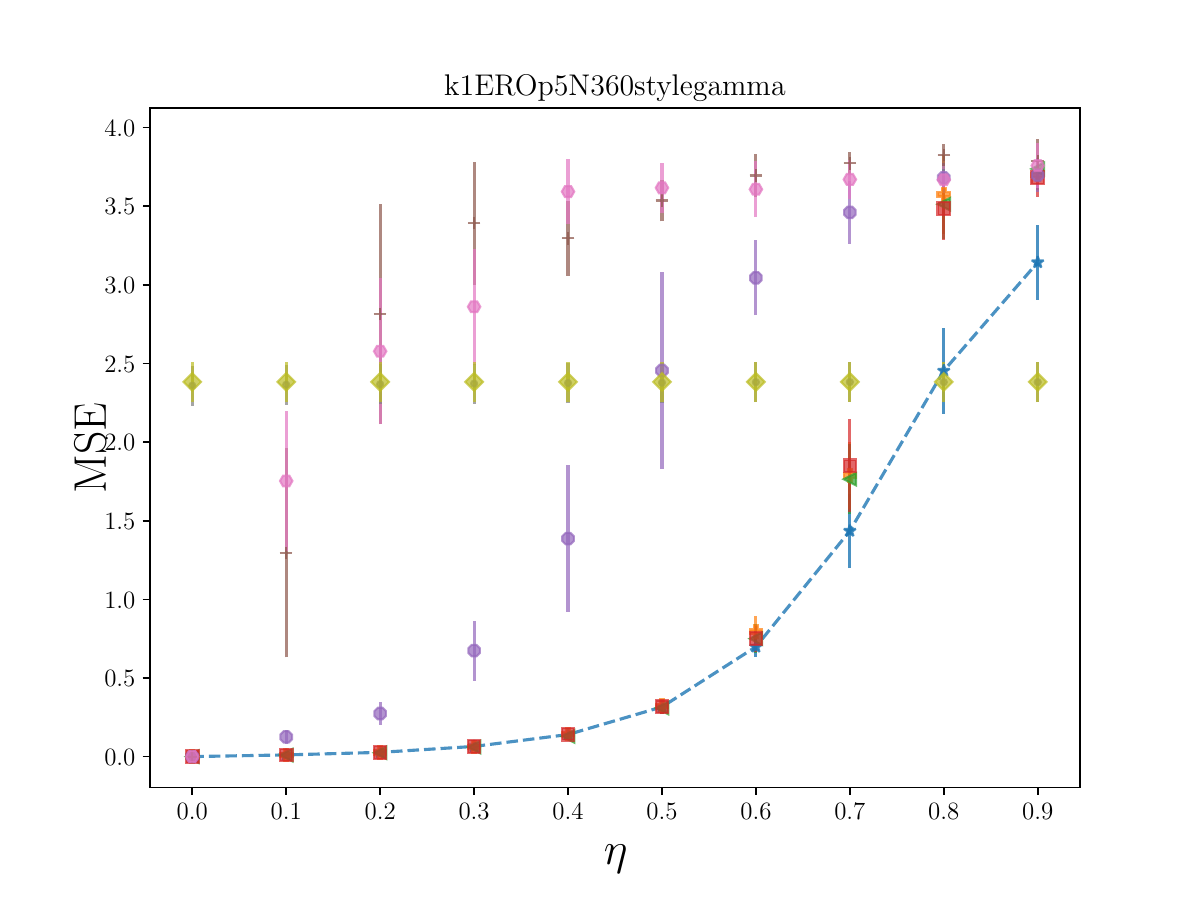}
    \caption{$\text{ERO}_1(p=0.05)$}
  \end{subfigure}
  \begin{subfigure}[ht]{0.245\linewidth}
    \centering
    \includegraphics[width=\linewidth,trim=0cm 0cm 0cm 1.8cm,clip]{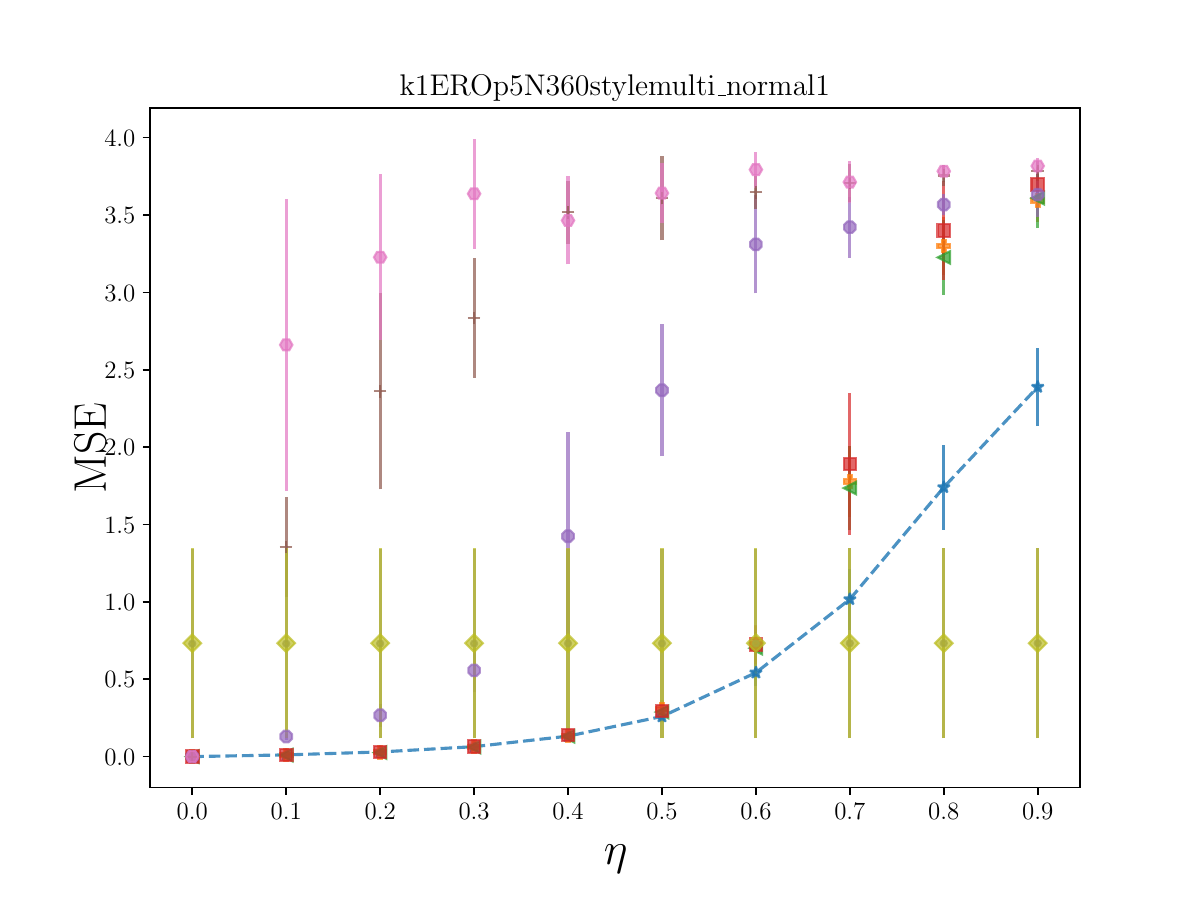}
    \caption{$\text{ERO}_2(p=0.05)$}
  \end{subfigure}
  \begin{subfigure}[ht]{0.245\linewidth}
    \centering
    \includegraphics[width=\linewidth,trim=0cm 0cm 0cm 1.8cm,clip]{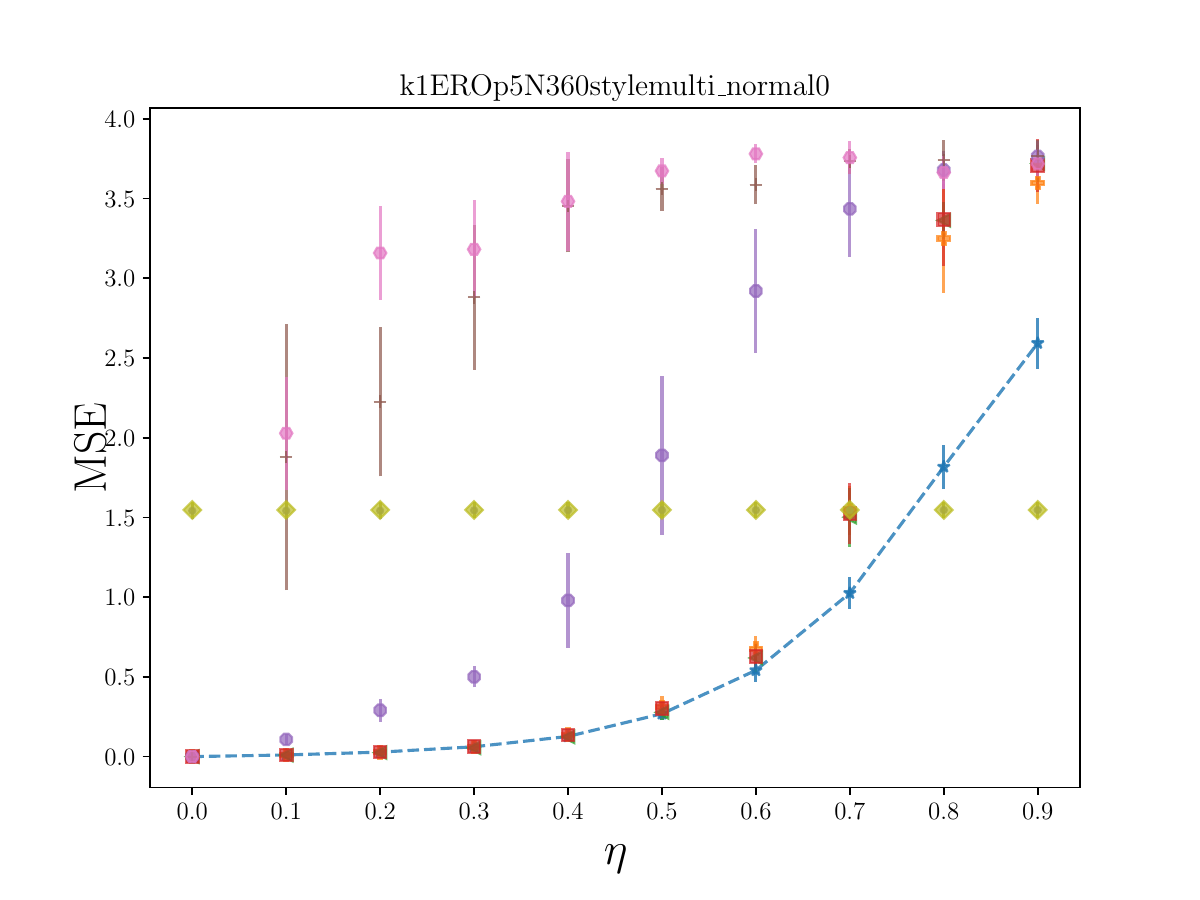}
    \caption{$\text{ERO}_3(p=0.05)$}
  \end{subfigure}
  \begin{subfigure}[ht]{0.245\linewidth}
    \centering
    \includegraphics[width=\linewidth,trim=0cm 0cm 0cm 1.8cm,clip]{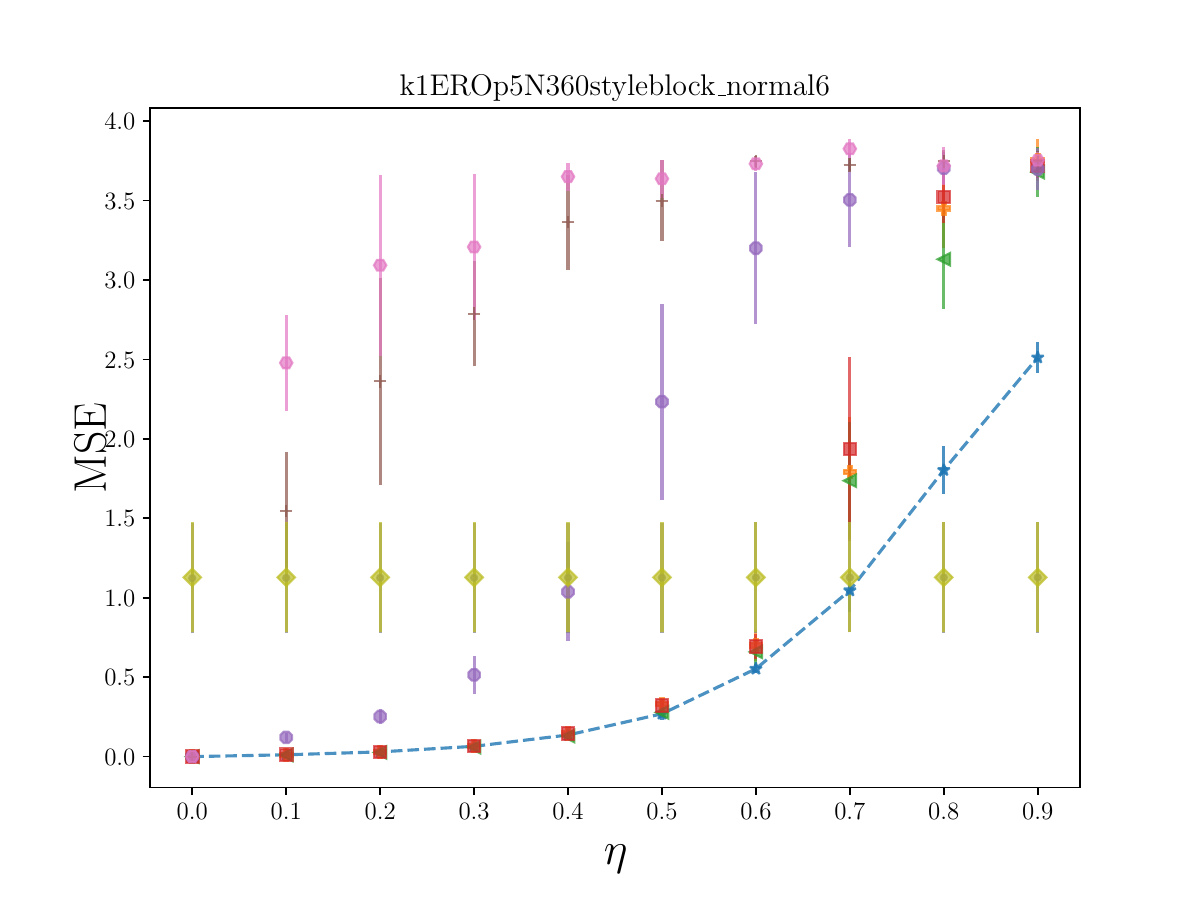}
    \caption{$\text{ERO}_4(p=0.05)$}
  \end{subfigure}
  %%%%%%%%%%%%%%%%%%%%%%%%%%%%
  \begin{subfigure}[ht]{0.245\linewidth}
    \centering
    \includegraphics[width=\linewidth,trim=0cm 0cm 0cm 1.8cm,clip]{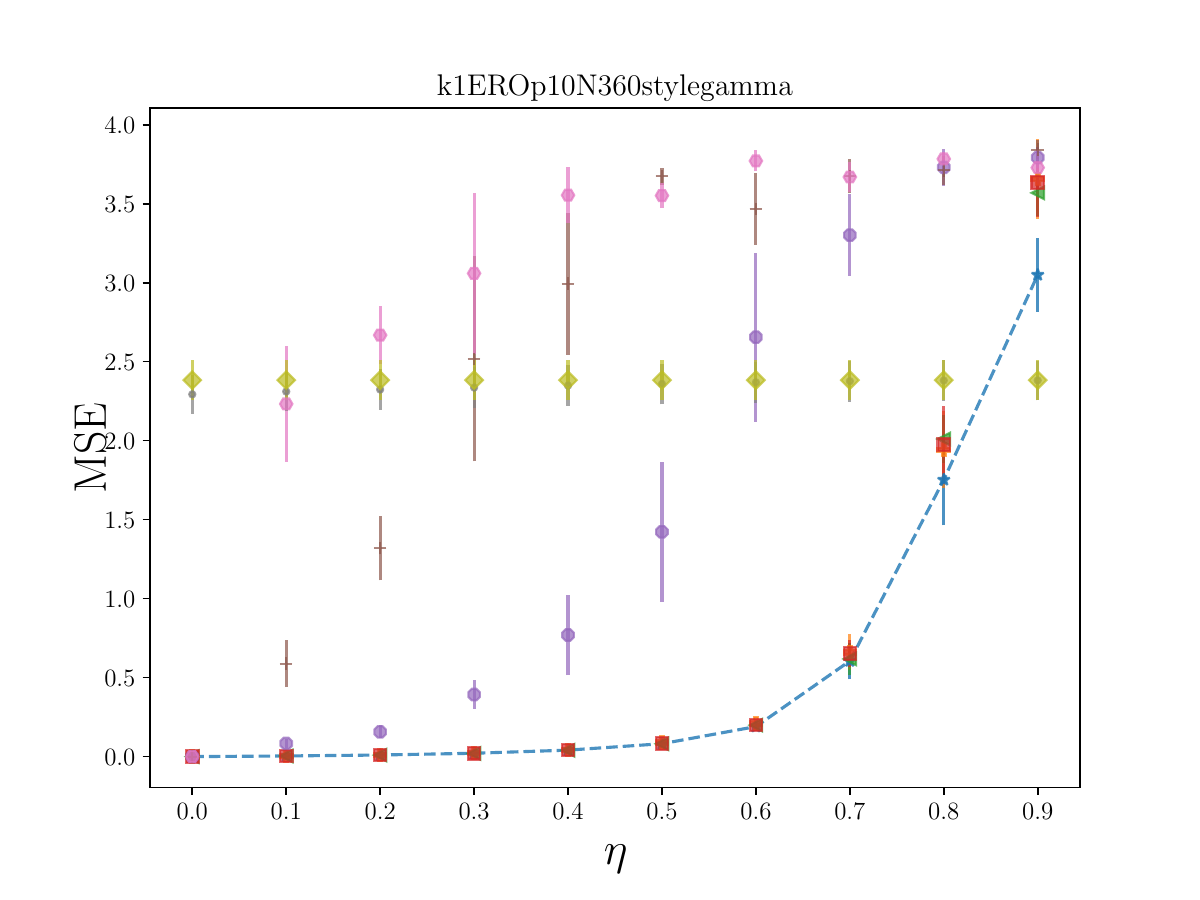}
    \caption{$\text{ERO}_1(p=0.1)$}
  \end{subfigure}
  \begin{subfigure}[ht]{0.245\linewidth}
    \centering
    \includegraphics[width=\linewidth,trim=0cm 0cm 0cm 1.8cm,clip]{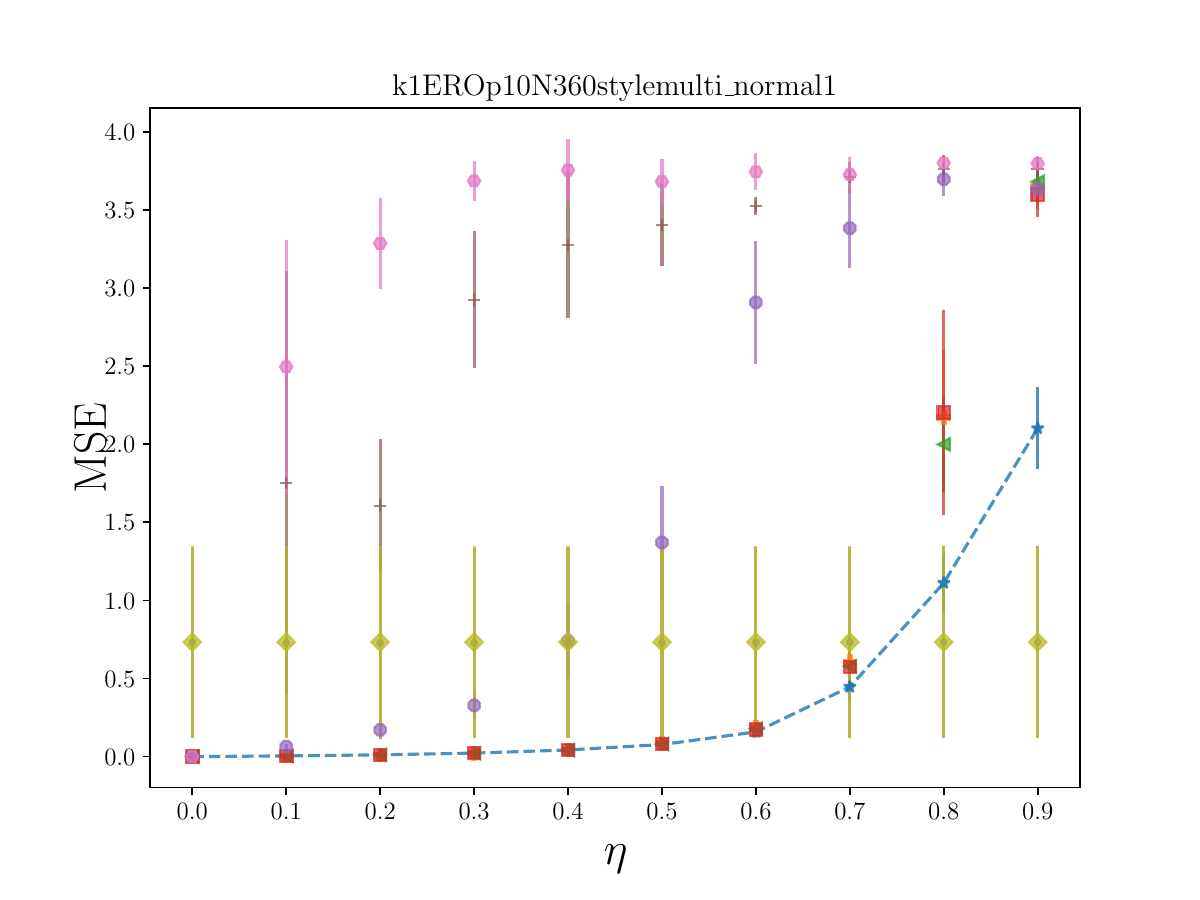}
    \caption{$\text{ERO}_2(p=0.1)$}
  \end{subfigure}
  \begin{subfigure}[ht]{0.245\linewidth}
    \centering
    \includegraphics[width=\linewidth,trim=0cm 0cm 0cm 1.8cm,clip]{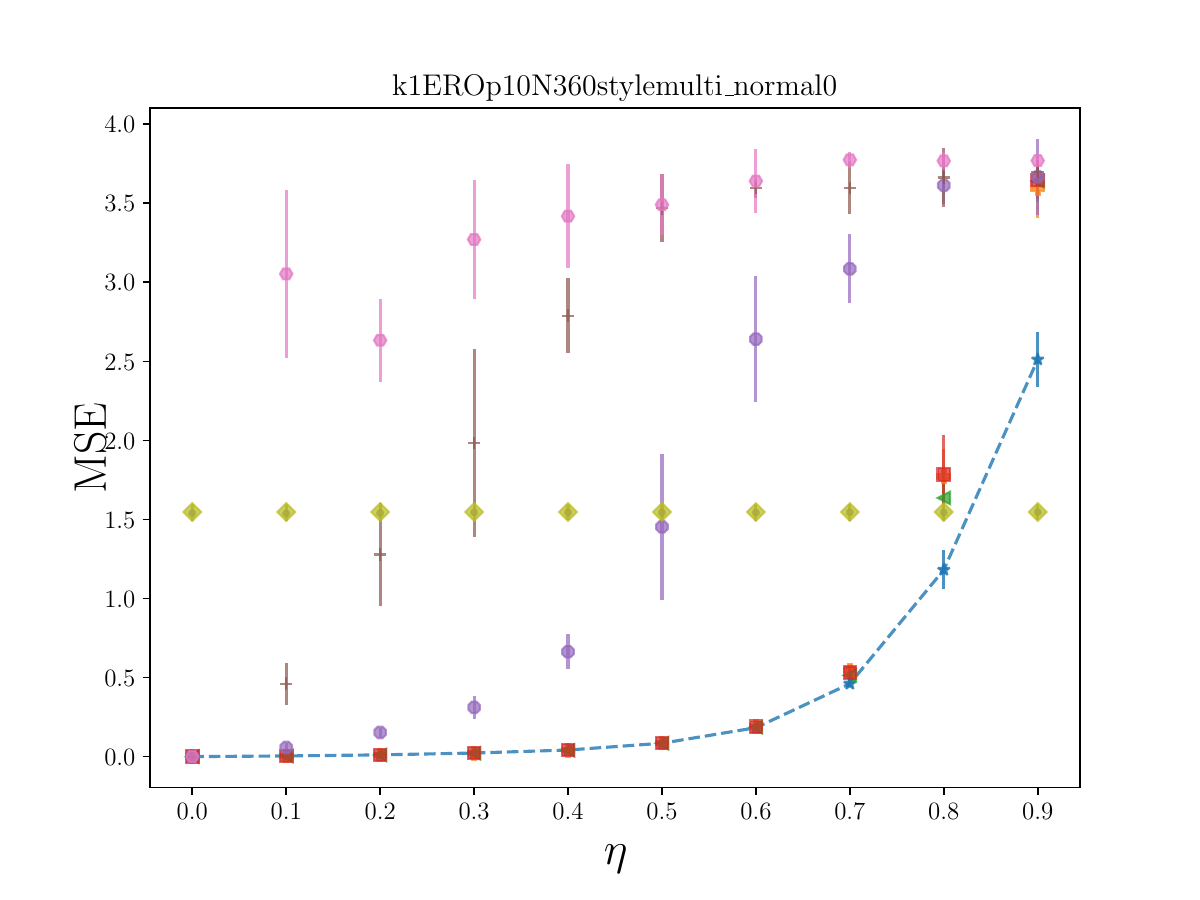}
    \caption{$\text{ERO}_3(p=0.1)$}
  \end{subfigure}
  \begin{subfigure}[ht]{0.245\linewidth}
    \centering
    \includegraphics[width=\linewidth,trim=0cm 0cm 0cm 1.8cm,clip]{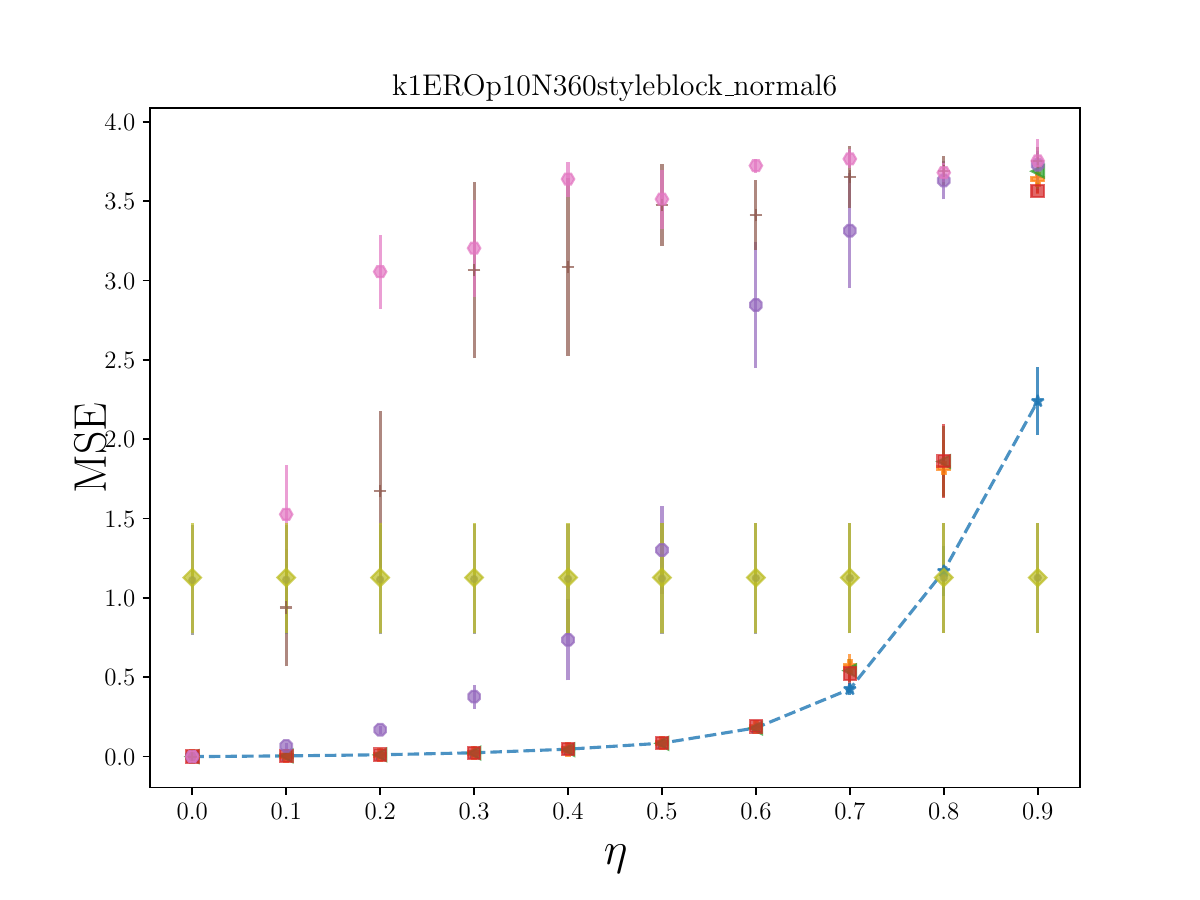}
    \caption{$\text{ERO}_4(p=0.1)$}
  \end{subfigure}
  %%%%%%%%%%%%%%%%%%%%%%%%%%%%%%%%
  \begin{subfigure}[ht]{0.245\linewidth}
    \centering
    \includegraphics[width=\linewidth,trim=0cm 0cm 0cm 1.8cm,clip]{figures/EROp15N360stylegamma.pdf}
    \caption{$\text{ERO}_1(p=0.15)$}
  \end{subfigure}
  \begin{subfigure}[ht]{0.245\linewidth}
    \centering
    \includegraphics[width=\linewidth,trim=0cm 0cm 0cm 1.8cm,clip]{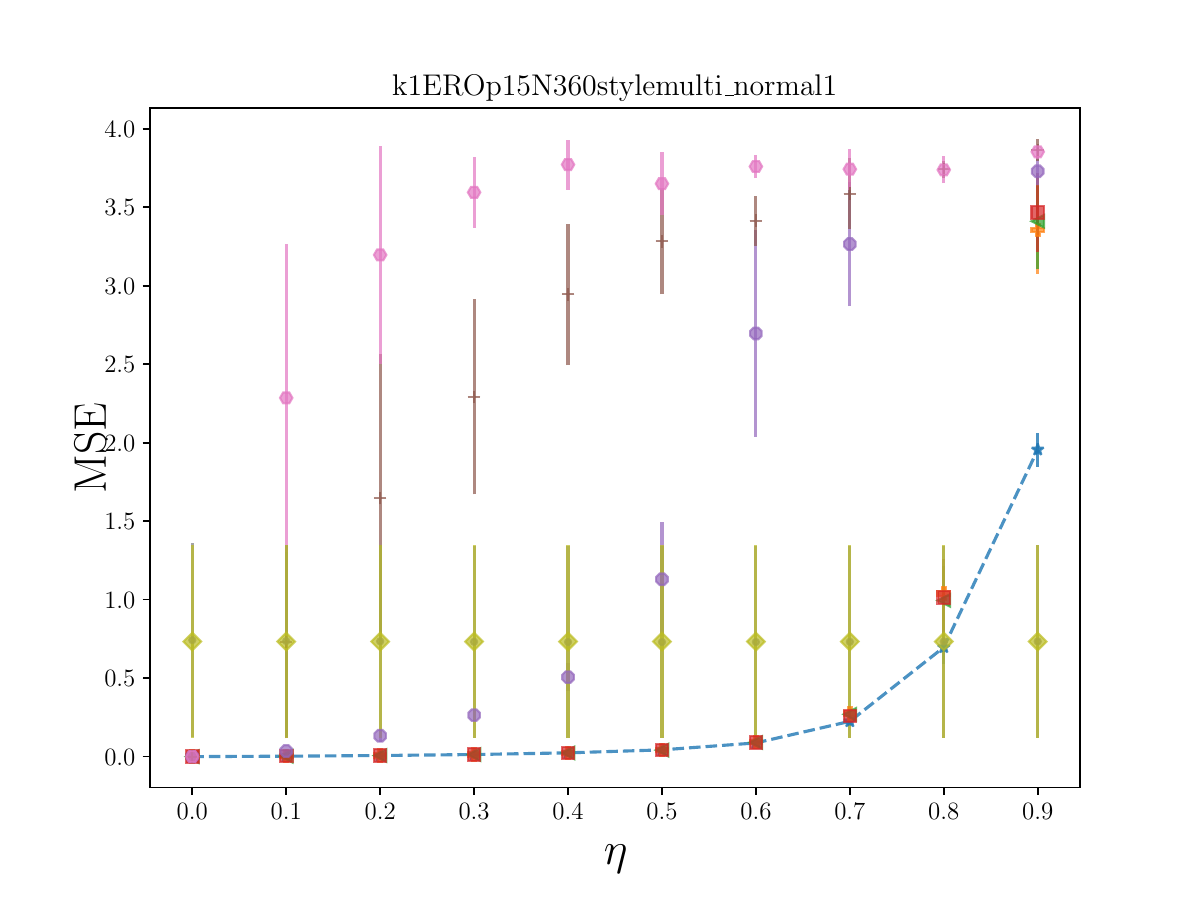}
    \caption{$\text{ERO}_2(p=0.15)$}
  \end{subfigure}
  \begin{subfigure}[ht]{0.245\linewidth}
    \centering
    \includegraphics[width=\linewidth,trim=0cm 0cm 0cm 1.8cm,clip]{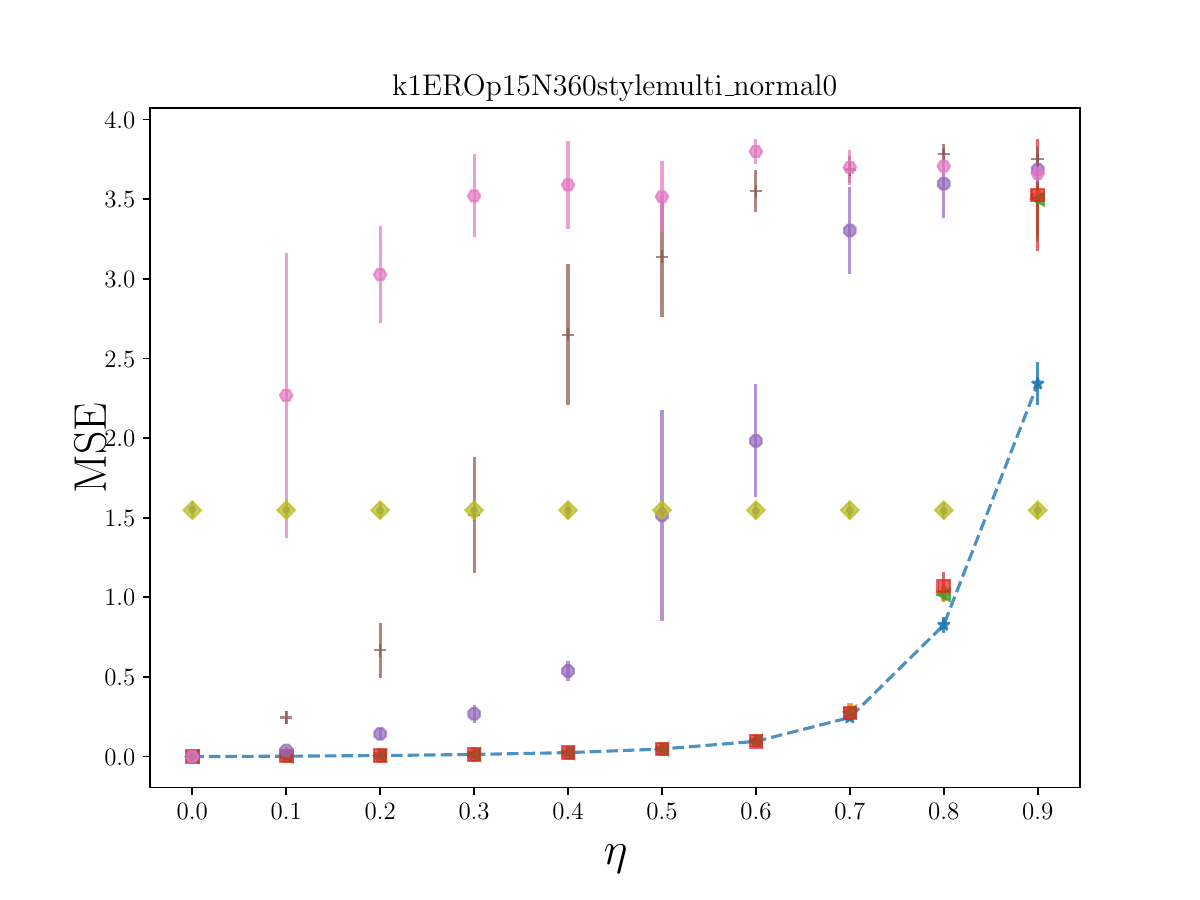}
    \caption{$\text{ERO}_3(p=0.15)$}
  \end{subfigure}
  \begin{subfigure}[ht]{0.245\linewidth}
    \centering
    \includegraphics[width=\linewidth,trim=0cm 0cm 0cm 1.8cm,clip]{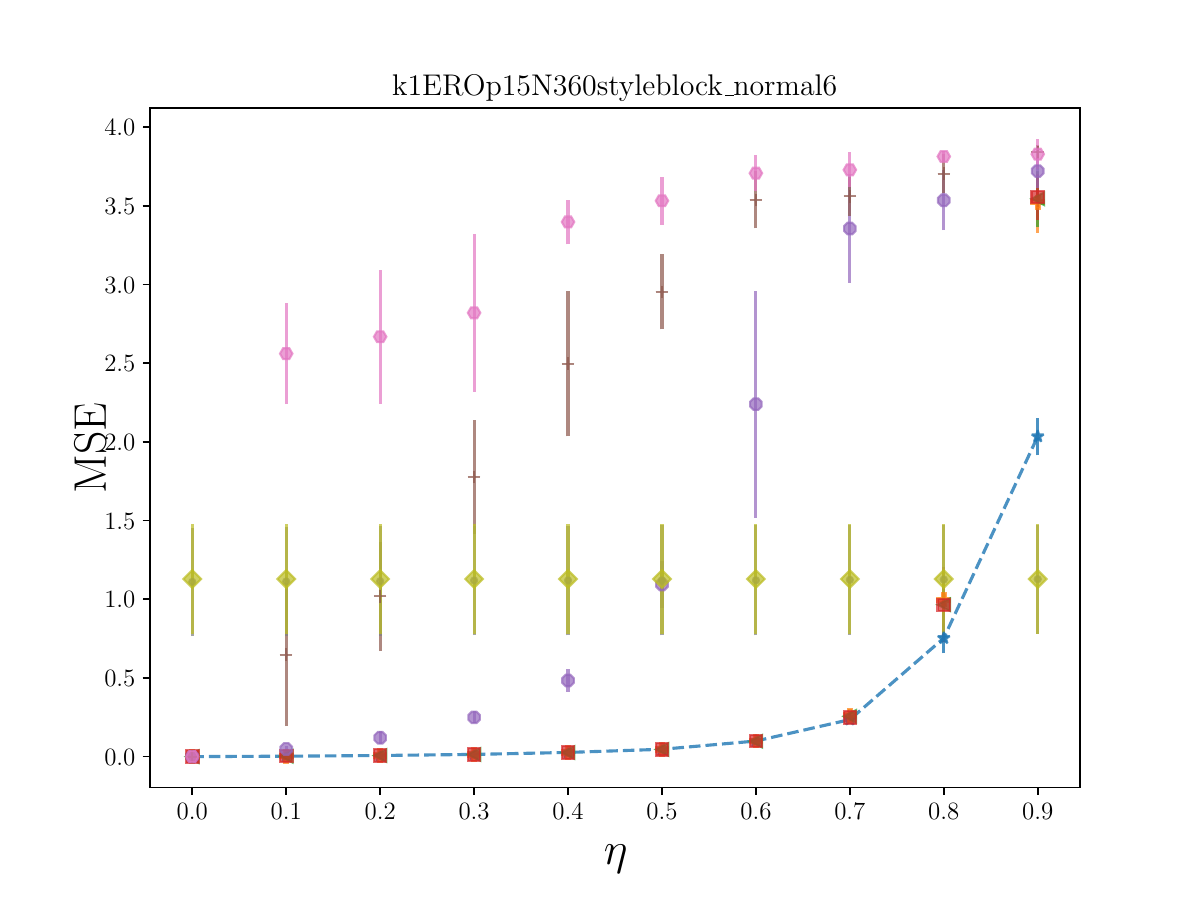}
    \caption{$\text{ERO}_4(p=0.15)$}
  \end{subfigure}
  %%%%%%%%%%%%%%%%%%%%%%%%%%%%%%%%%%%
\begin{subfigure}[ht]{\linewidth}
    \centering
    \includegraphics[width=1.0\linewidth,trim=0cm 0cm 0cm 0cm,clip]{figures/sync_legend_main.png}
  \end{subfigure}
    \caption{MSE performance comparison on GNNSync against baselines on angular synchronization ($k=1$) for ERO models. $p$ is the network density and $\eta$ is the noise level.  Error bars indicate one standard deviation. 
    }
    \label{fig:main_k1_ERO}
\end{figure*}

\begin{figure*}[!hbt]
    \centering
  \begin{subfigure}[ht]{0.245\linewidth}
    \centering
    \includegraphics[width=\linewidth,trim=0cm 0cm 0cm 1.8cm,clip]{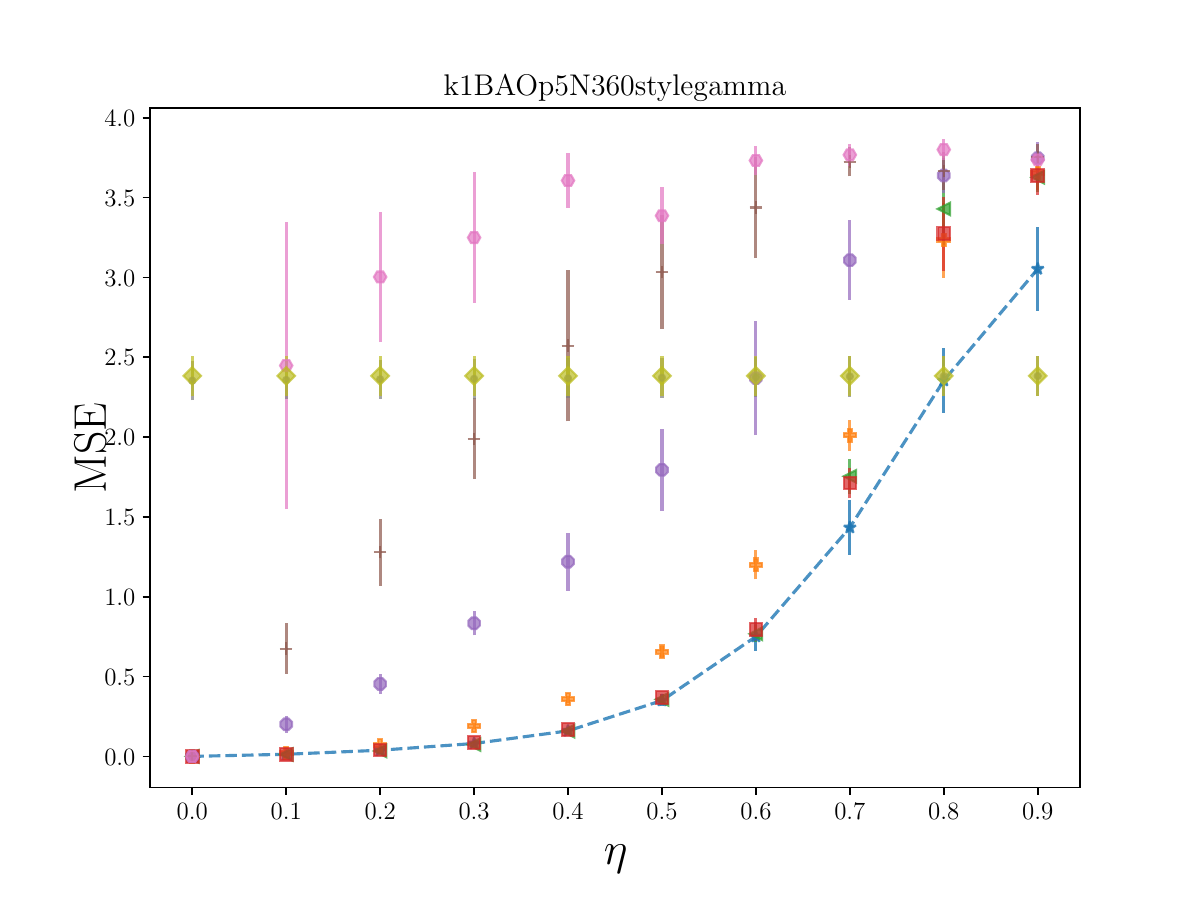}
    \caption{$\text{BAO}_1(p=0.05)$}
  \end{subfigure}
  \begin{subfigure}[ht]{0.245\linewidth}
    \centering
    \includegraphics[width=\linewidth,trim=0cm 0cm 0cm 1.8cm,clip]{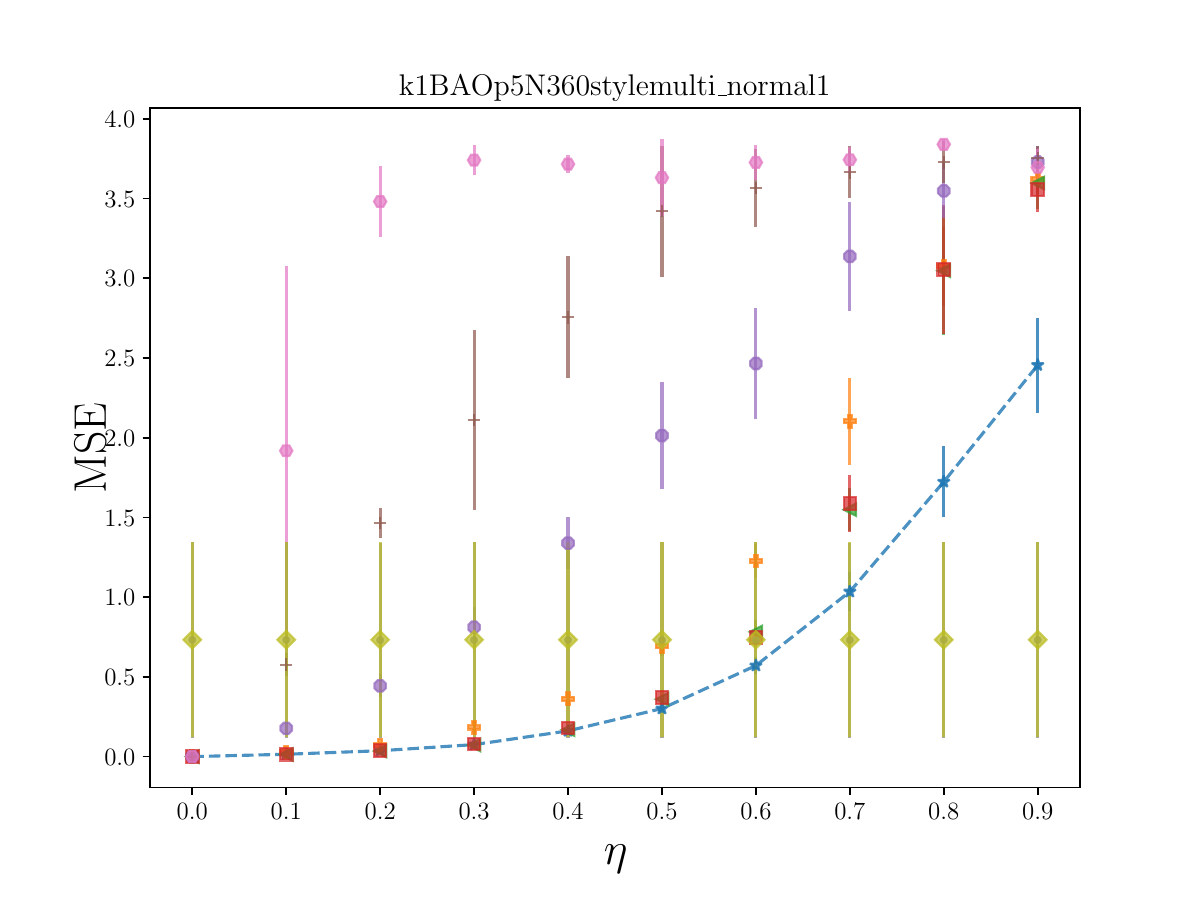}
    \caption{$\text{BAO}_2(p=0.05)$}
  \end{subfigure}
  \begin{subfigure}[ht]{0.245\linewidth}
    \centering
    \includegraphics[width=\linewidth,trim=0cm 0cm 0cm 1.8cm,clip]{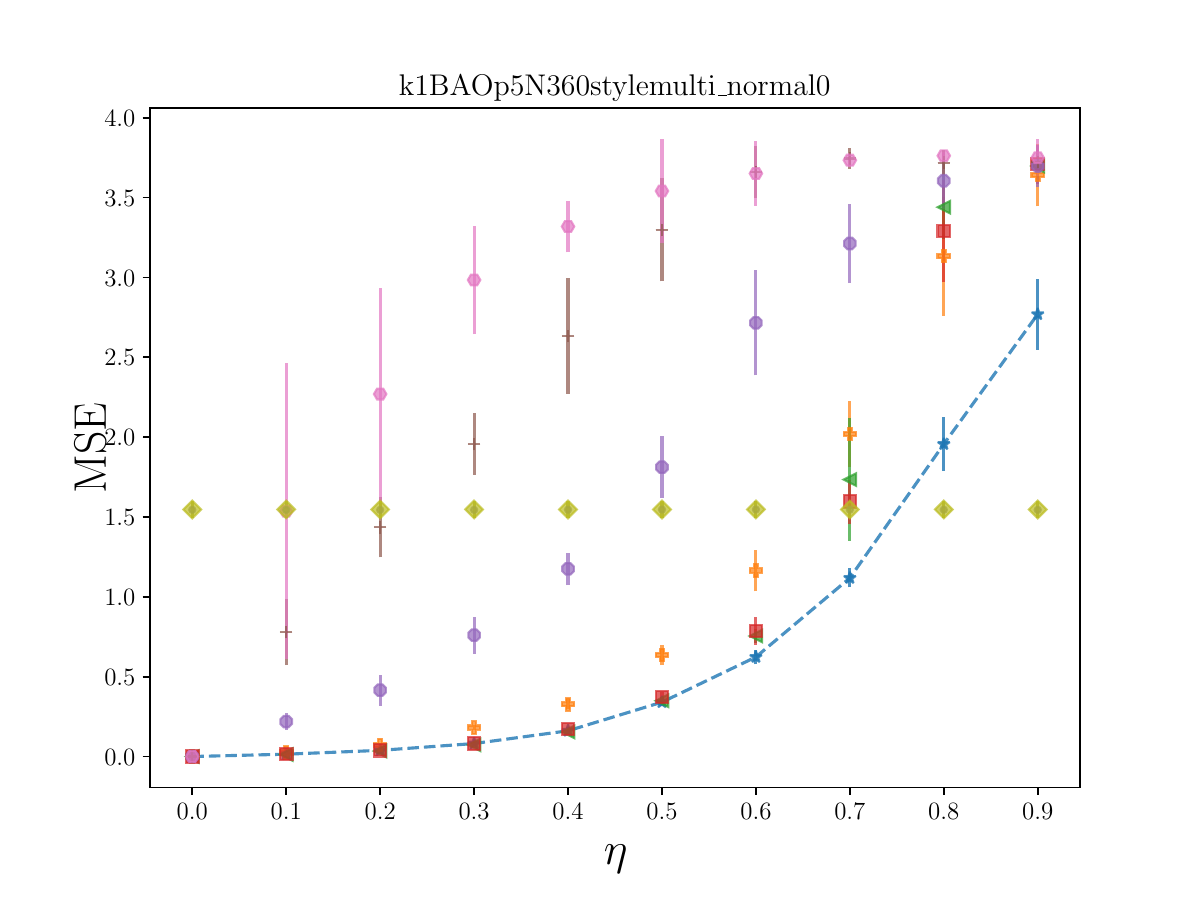}
    \caption{$\text{BAO}_3(p=0.05)$}
  \end{subfigure}
  \begin{subfigure}[ht]{0.245\linewidth}
    \centering
    \includegraphics[width=\linewidth,trim=0cm 0cm 0cm 1.8cm,clip]{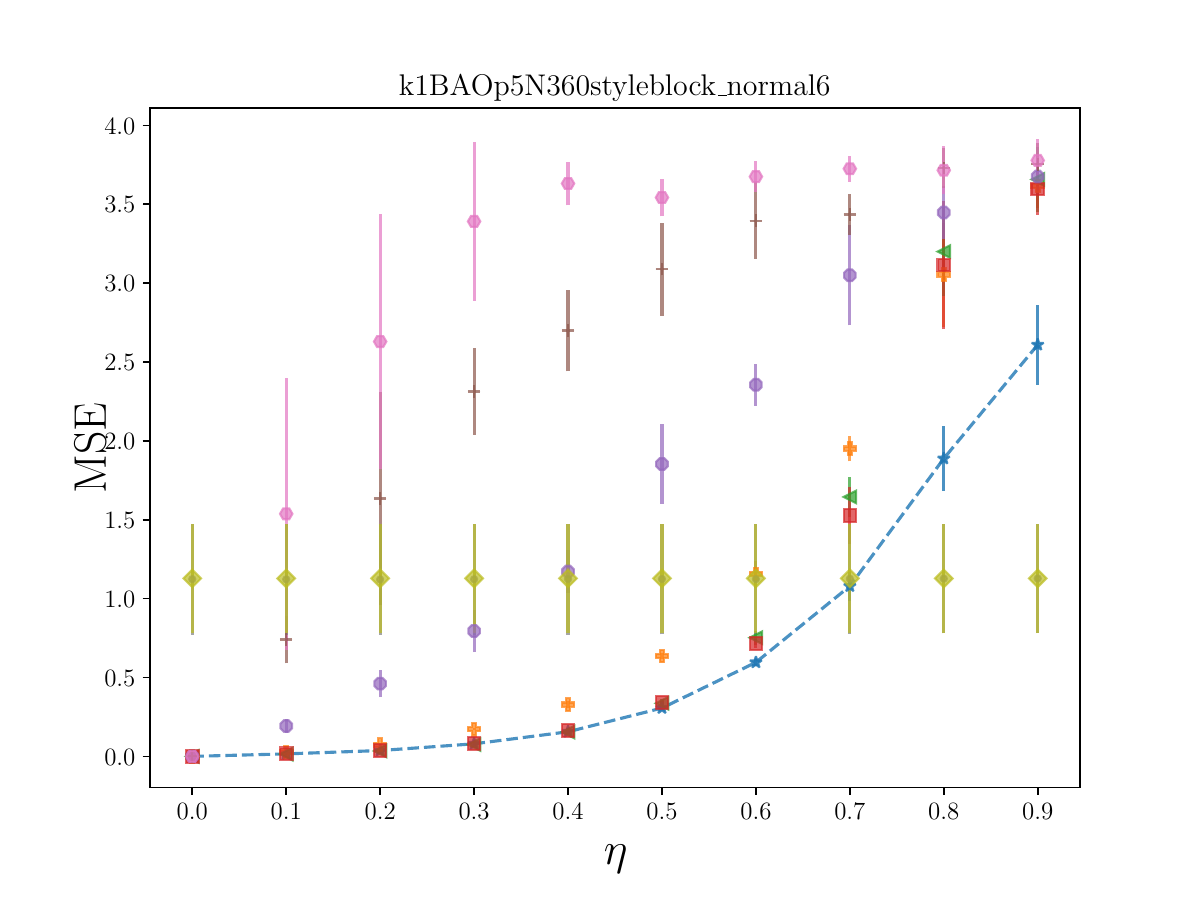}
    \caption{$\text{BAO}_4(p=0.05)$}
  \end{subfigure}
  %%%%%%%%%%%%%%%%%%%%%%%%%%%%
  \begin{subfigure}[ht]{0.245\linewidth}
    \centering
    \includegraphics[width=\linewidth,trim=0cm 0cm 0cm 1.8cm,clip]{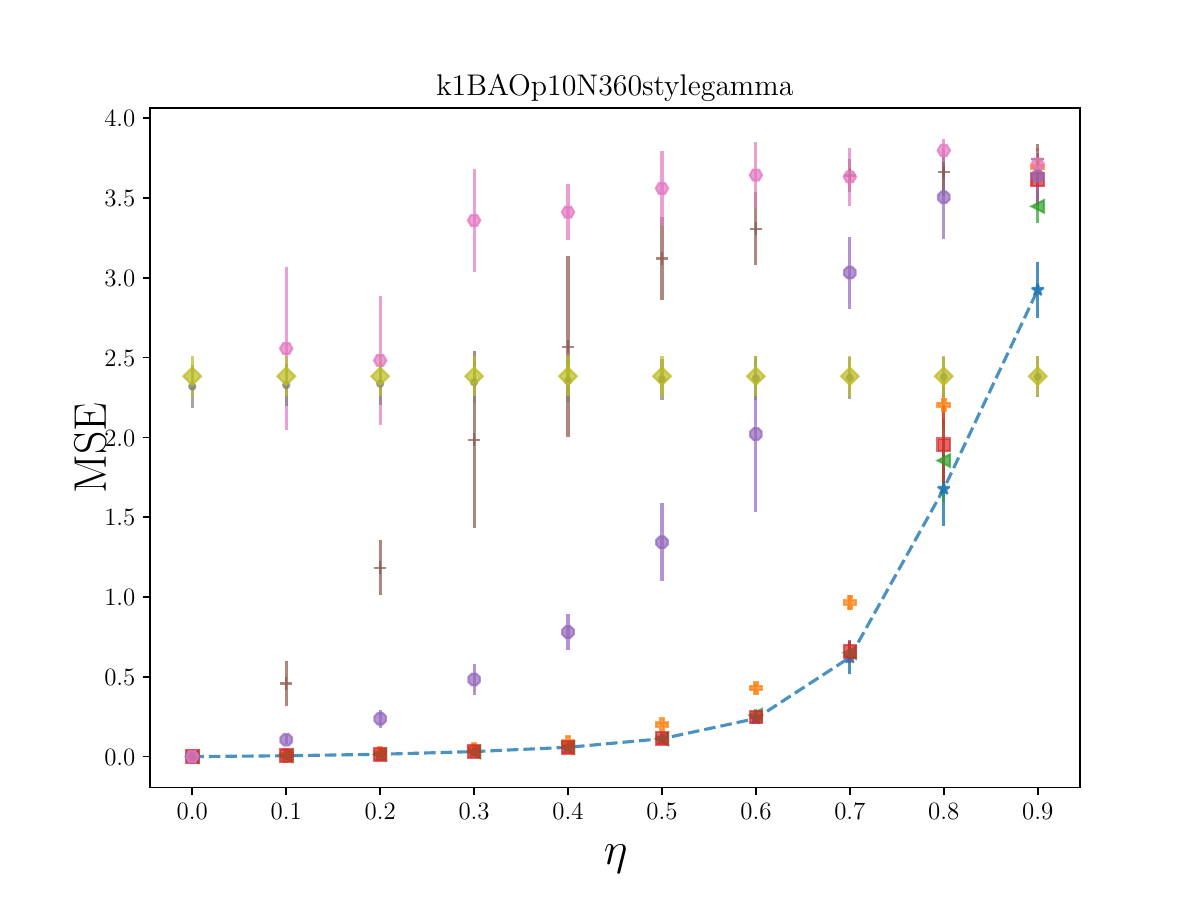}
    \caption{$\text{BAO}_1(p=0.1)$}
  \end{subfigure}
  \begin{subfigure}[ht]{0.245\linewidth}
    \centering
    \includegraphics[width=\linewidth,trim=0cm 0cm 0cm 1.8cm,clip]{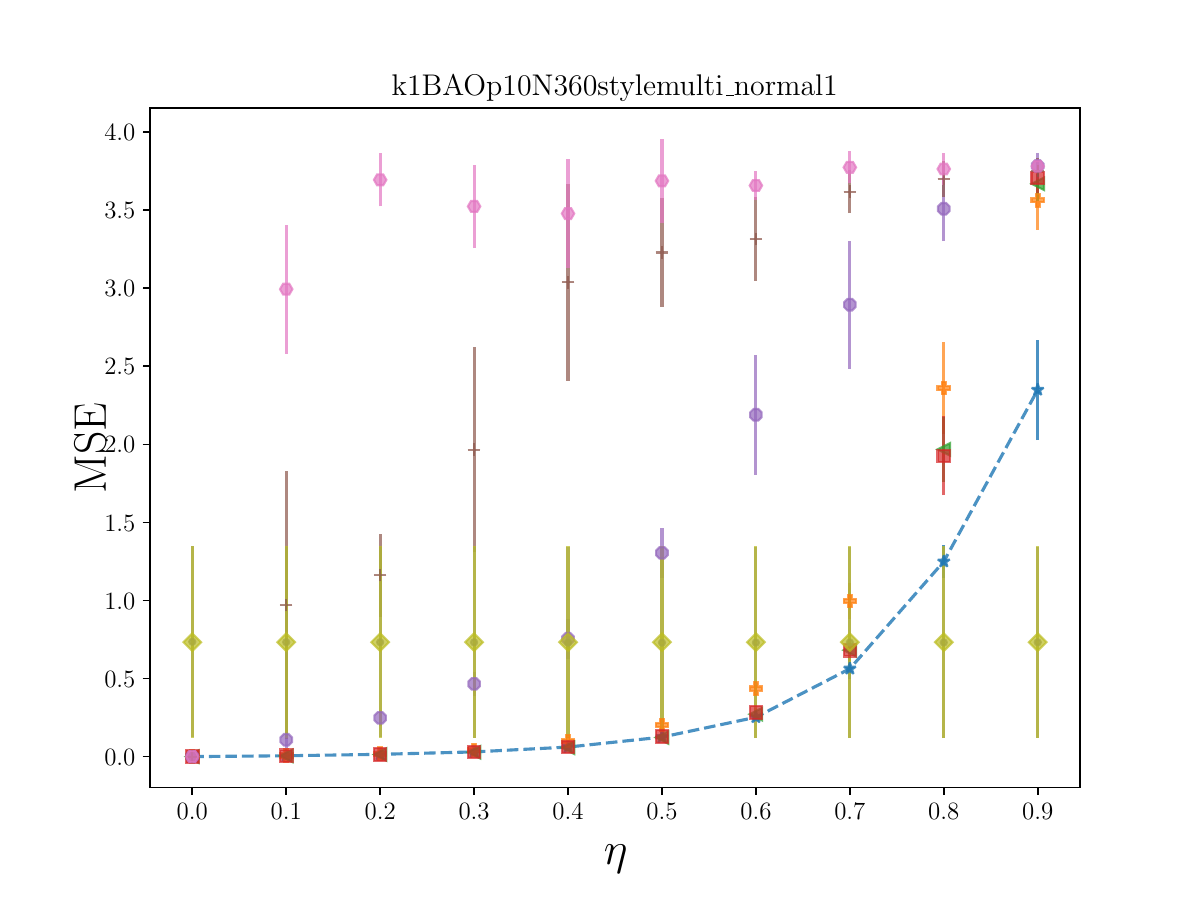}
    \caption{$\text{BAO}_2(p=0.1)$}
  \end{subfigure}
  \begin{subfigure}[ht]{0.245\linewidth}
    \centering
    \includegraphics[width=\linewidth,trim=0cm 0cm 0cm 1.8cm,clip]{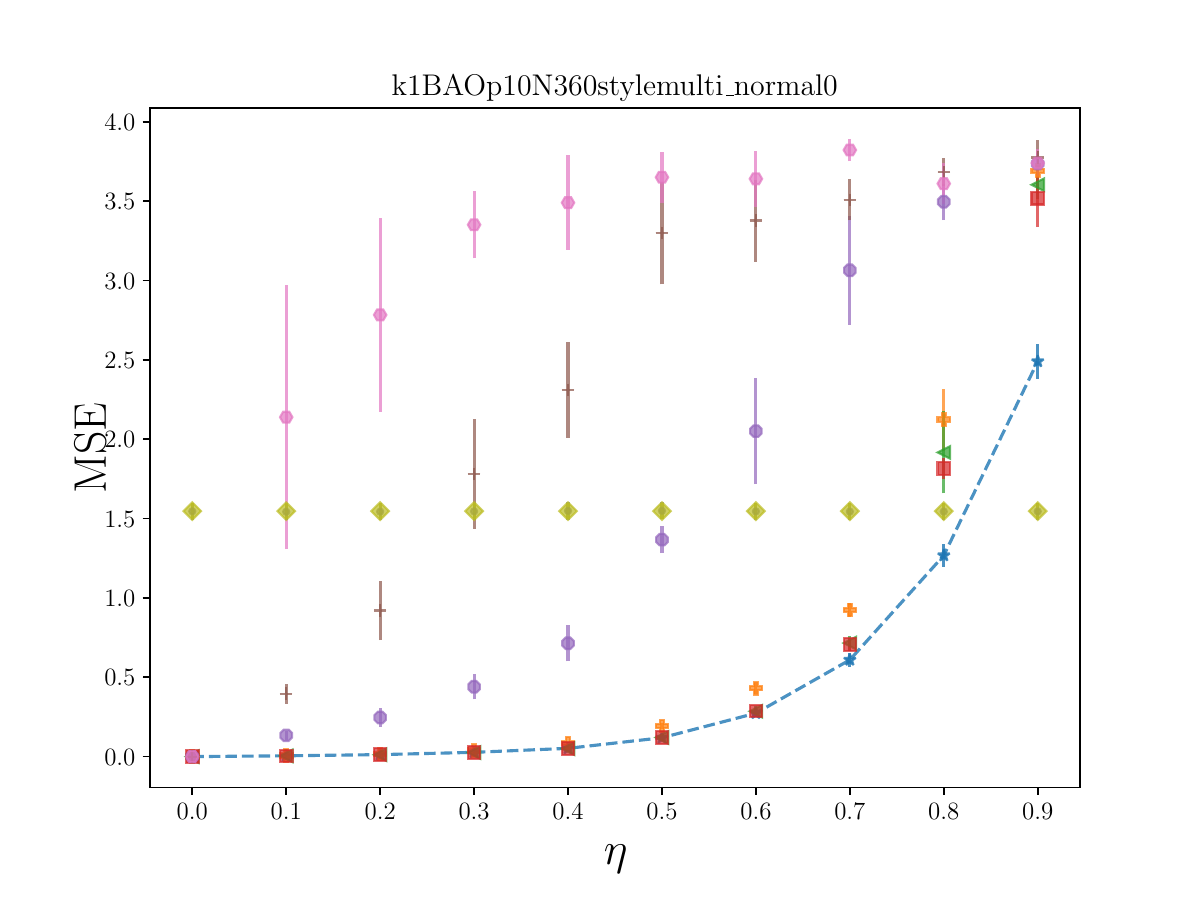}
    \caption{$\text{BAO}_3(p=0.1)$}
  \end{subfigure}
  \begin{subfigure}[ht]{0.245\linewidth}
    \centering
    \includegraphics[width=\linewidth,trim=0cm 0cm 0cm 1.8cm,clip]{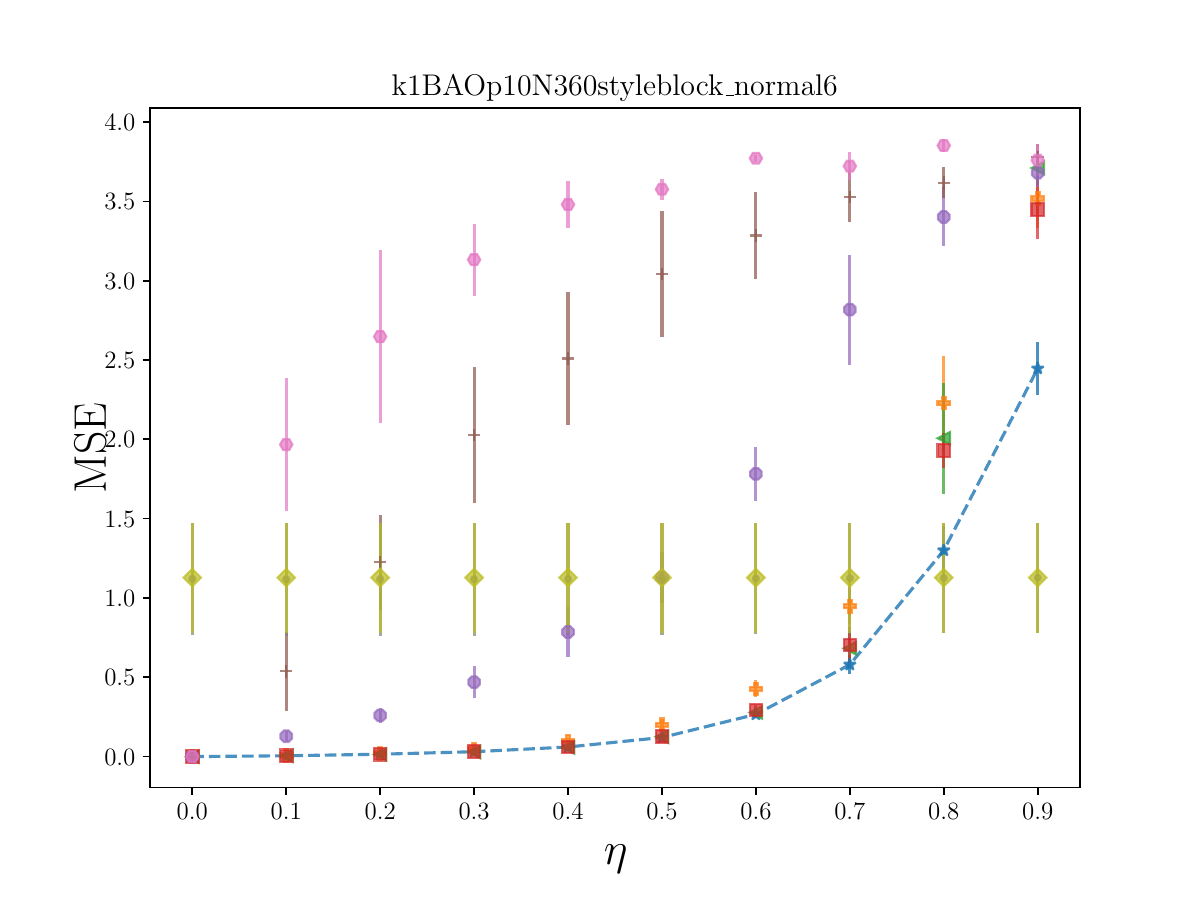}
    \caption{$\text{BAO}_4(p=0.1)$}
  \end{subfigure}
  %%%%%%%%%%%%%%%%%%%%%%%%%%%%%%%%
  \begin{subfigure}[ht]{0.245\linewidth}
    \centering
    \includegraphics[width=\linewidth,trim=0cm 0cm 0cm 1.8cm,clip]{figures/BAOp15N360stylegamma.pdf}
    \caption{$\text{BAO}_1(p=0.15)$}
  \end{subfigure}
  \begin{subfigure}[ht]{0.245\linewidth}
    \centering
    \includegraphics[width=\linewidth,trim=0cm 0cm 0cm 1.8cm,clip]{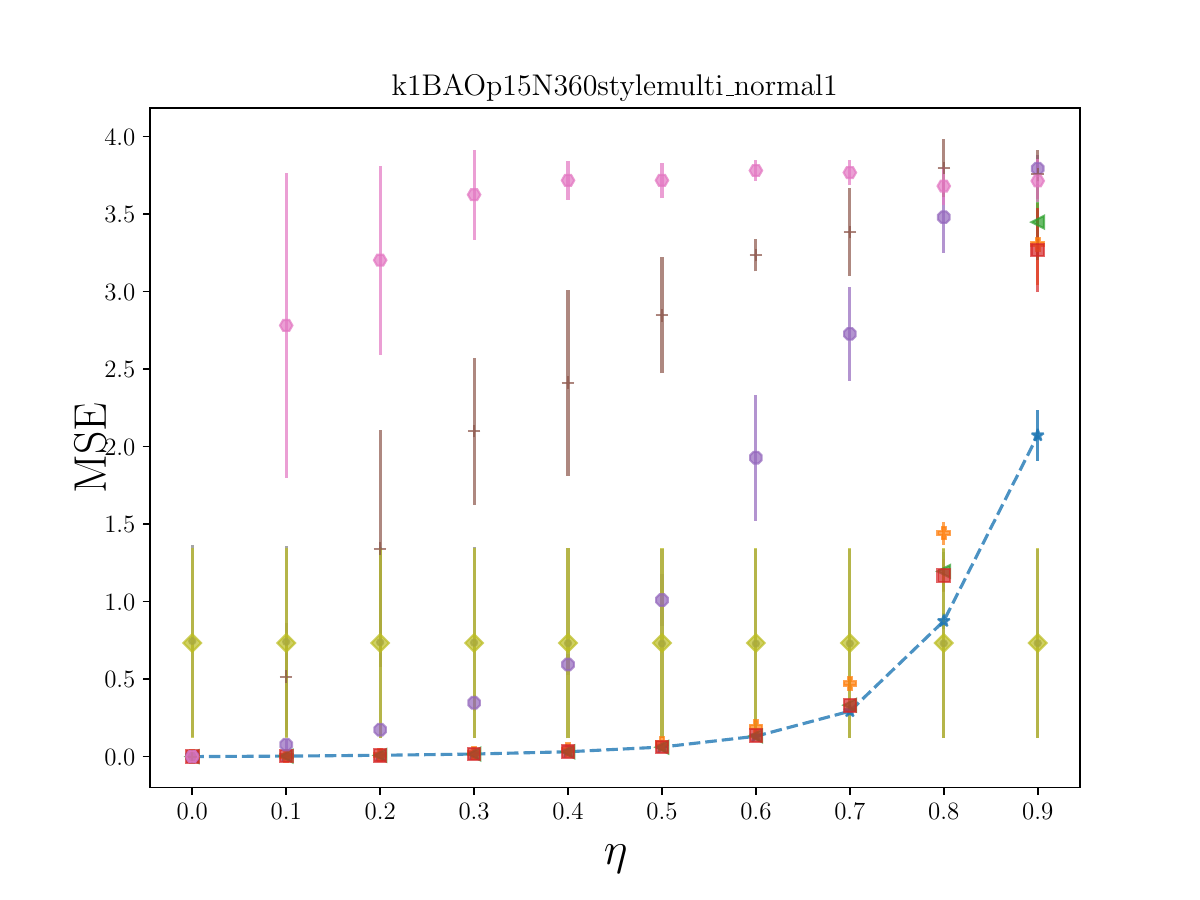}
    \caption{$\text{BAO}_2(p=0.15)$}
  \end{subfigure}
  \begin{subfigure}[ht]{0.245\linewidth}
    \centering
    \includegraphics[width=\linewidth,trim=0cm 0cm 0cm 1.8cm,clip]{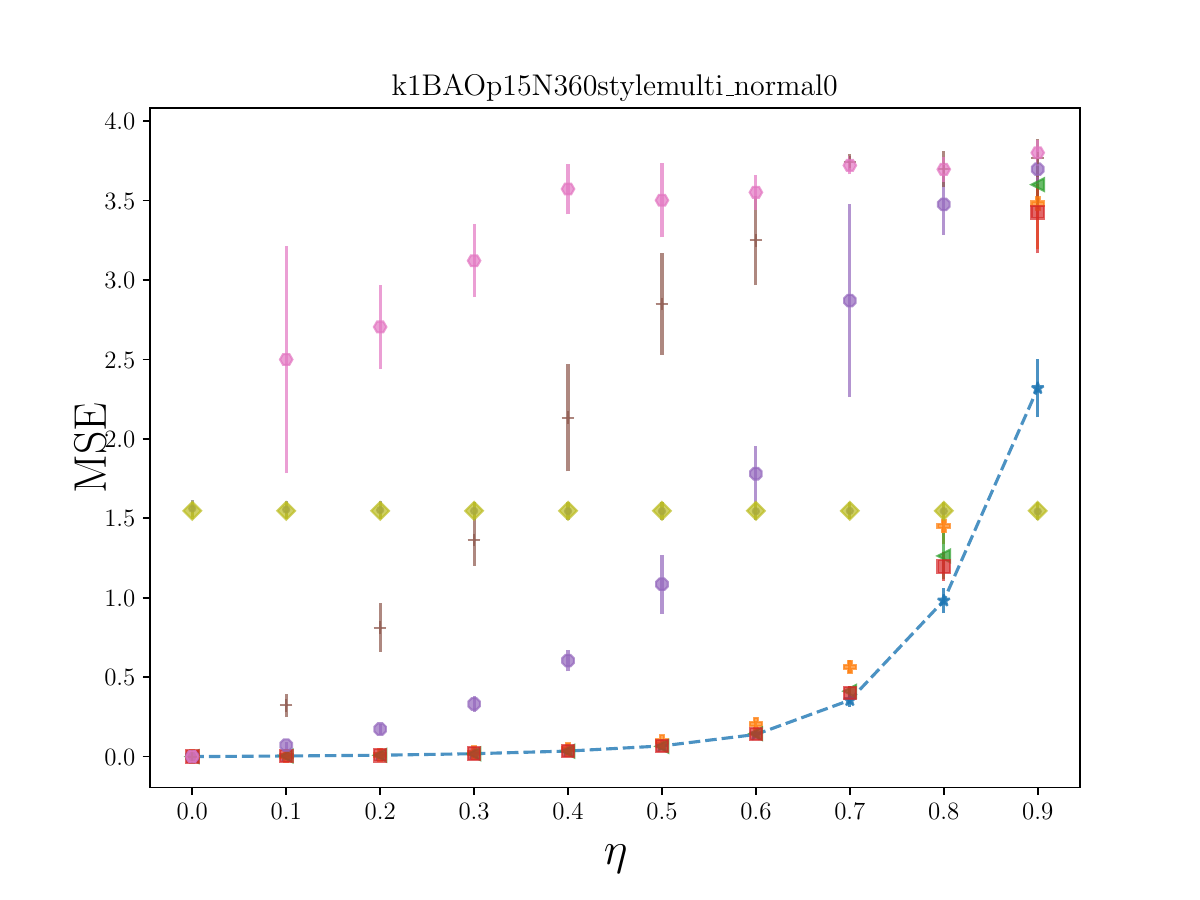}
    \caption{$\text{BAO}_3(p=0.15)$}
  \end{subfigure}
  \begin{subfigure}[ht]{0.245\linewidth}
    \centering
    \includegraphics[width=\linewidth,trim=0cm 0cm 0cm 1.8cm,clip]{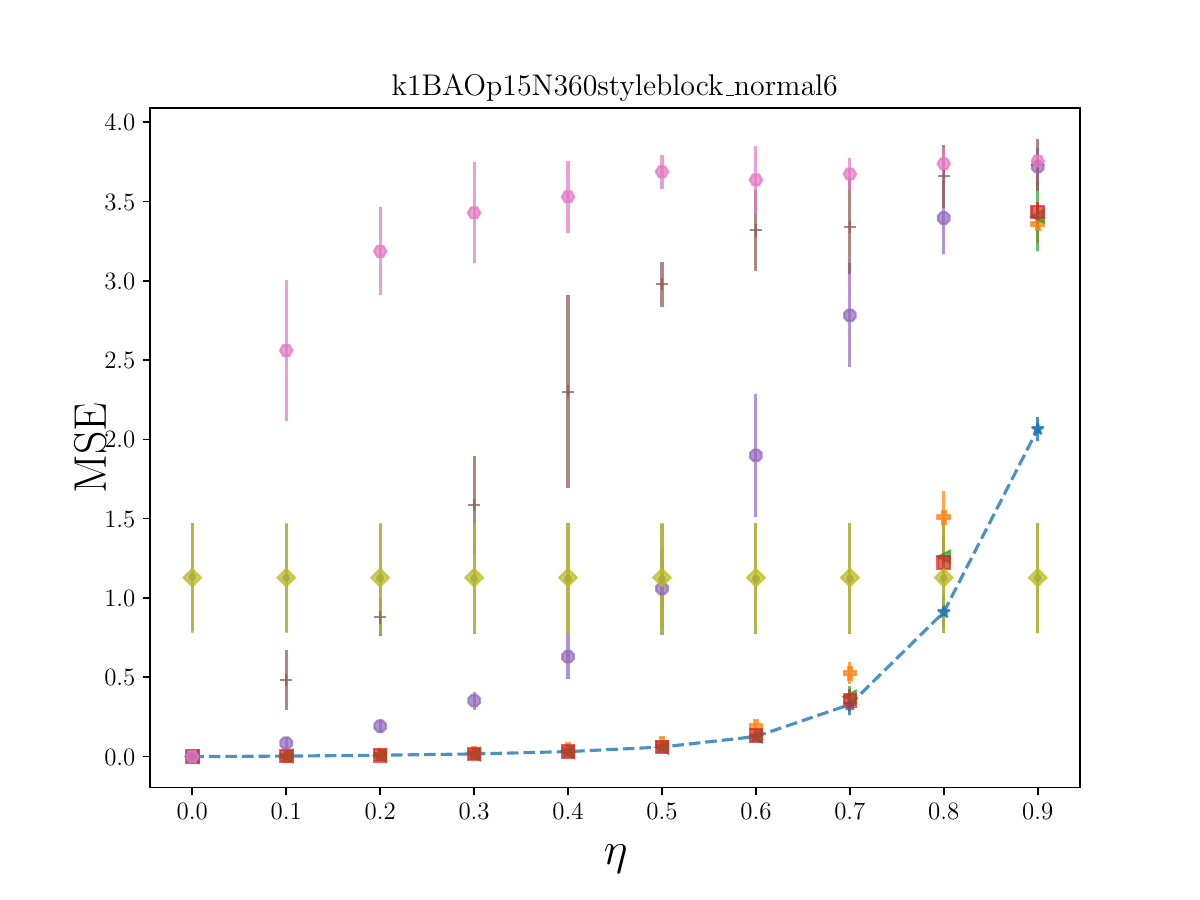}
    \caption{$\text{BAO}_4(p=0.15)$}
  \end{subfigure}
  %%%%%%%%%%%%%%%%%%%%%%%%%%%%%%%%%%%
\begin{subfigure}[ht]{\linewidth}
    \centering
    \includegraphics[width=1.0\linewidth,trim=0cm 0cm 0cm 0cm,clip]{figures/sync_legend_main.png}
  \end{subfigure}
    \caption{MSE performance comparison on GNNSync against baselines on angular synchronization ($k=1$) for BAO models. $p$ is the network density and $\eta$ is the noise level.  Error bars indicate one standard deviation. 
    }
    \label{fig:main_k1_BAO}
\end{figure*}

\begin{figure*}[!hbt]
    \centering
  \begin{subfigure}[ht]{0.245\linewidth}
    \centering
    \includegraphics[width=\linewidth,trim=0cm 0cm 0cm 1.8cm,clip]{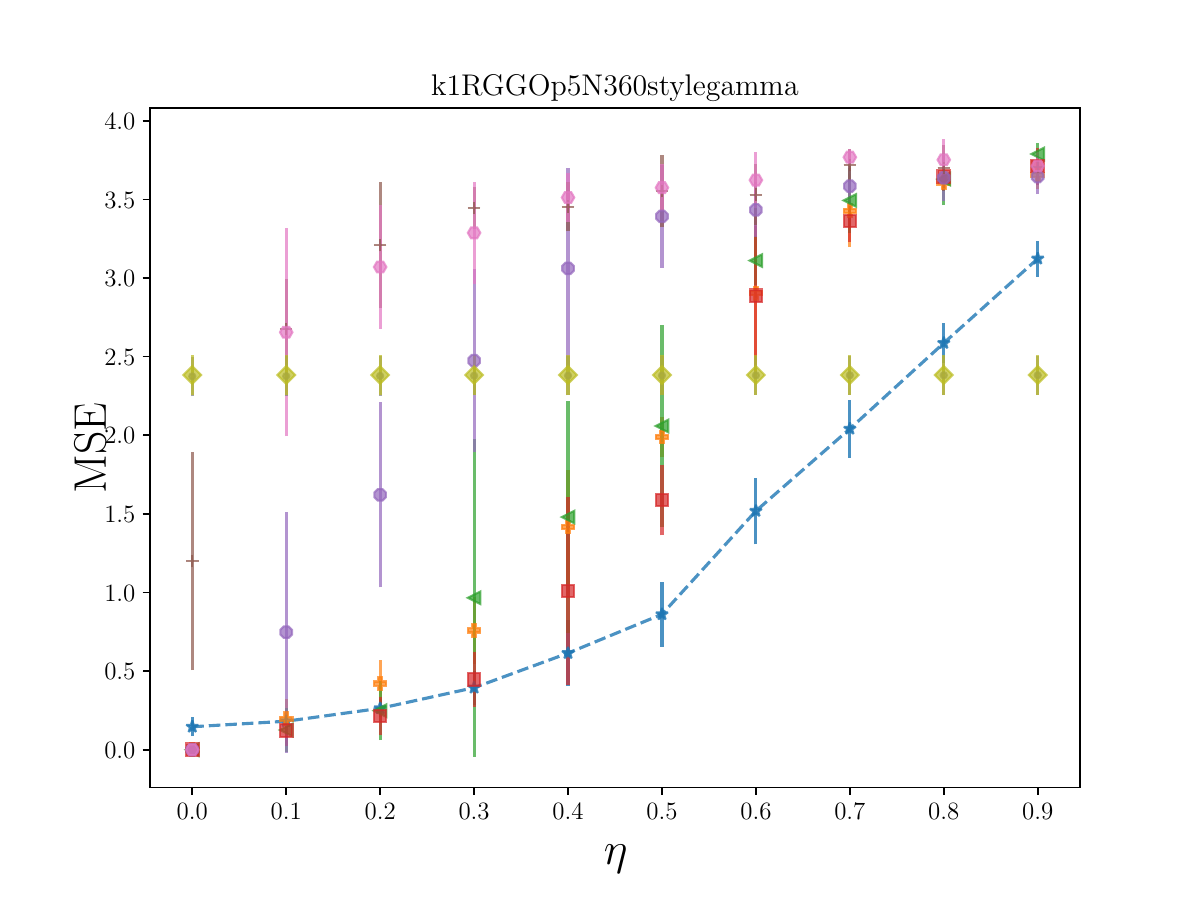}
    \caption{$\text{RGGO}_1(p=0.05)$}
  \end{subfigure}
  \begin{subfigure}[ht]{0.245\linewidth}
    \centering
    \includegraphics[width=\linewidth,trim=0cm 0cm 0cm 1.8cm,clip]{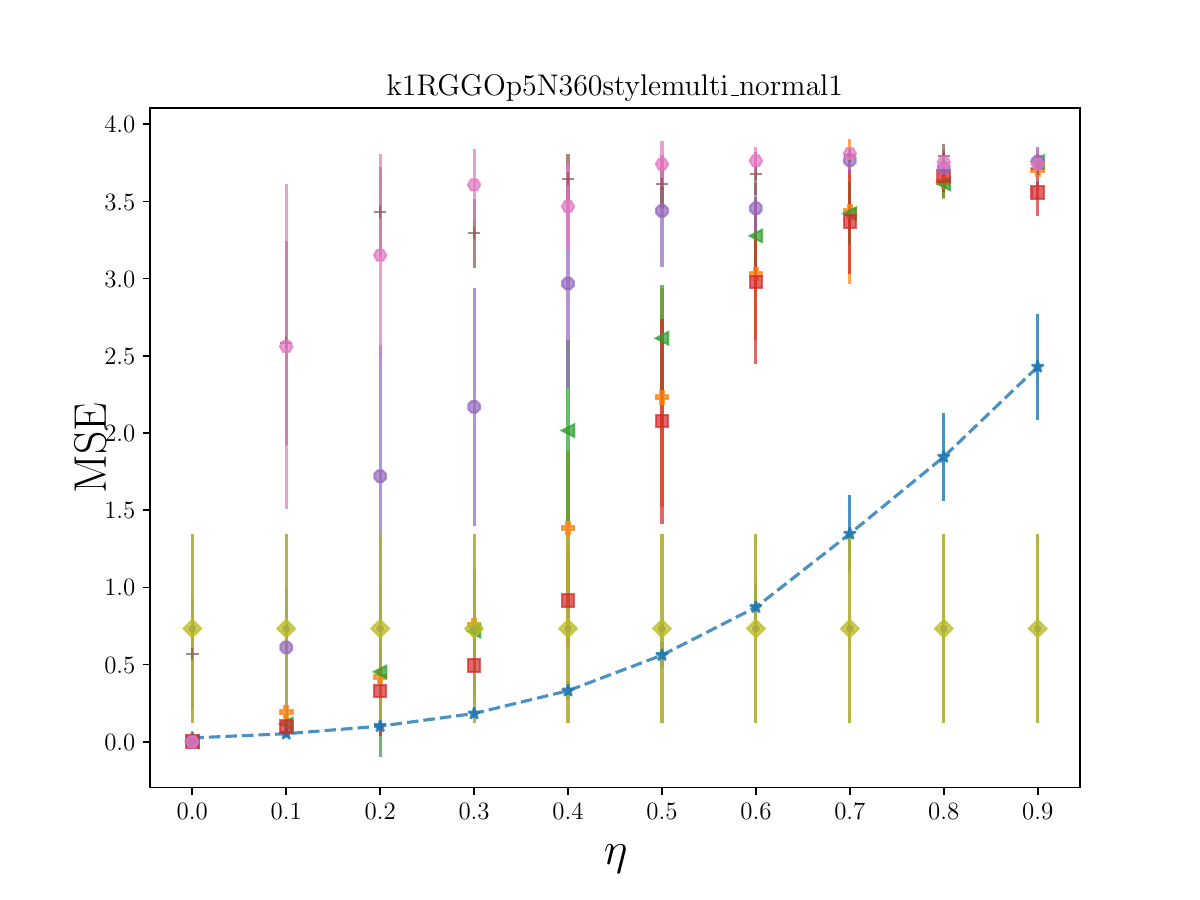}
    \caption{$\text{RGGO}_2(p=0.05)$}
  \end{subfigure}
  \begin{subfigure}[ht]{0.245\linewidth}
    \centering
    \includegraphics[width=\linewidth,trim=0cm 0cm 0cm 1.8cm,clip]{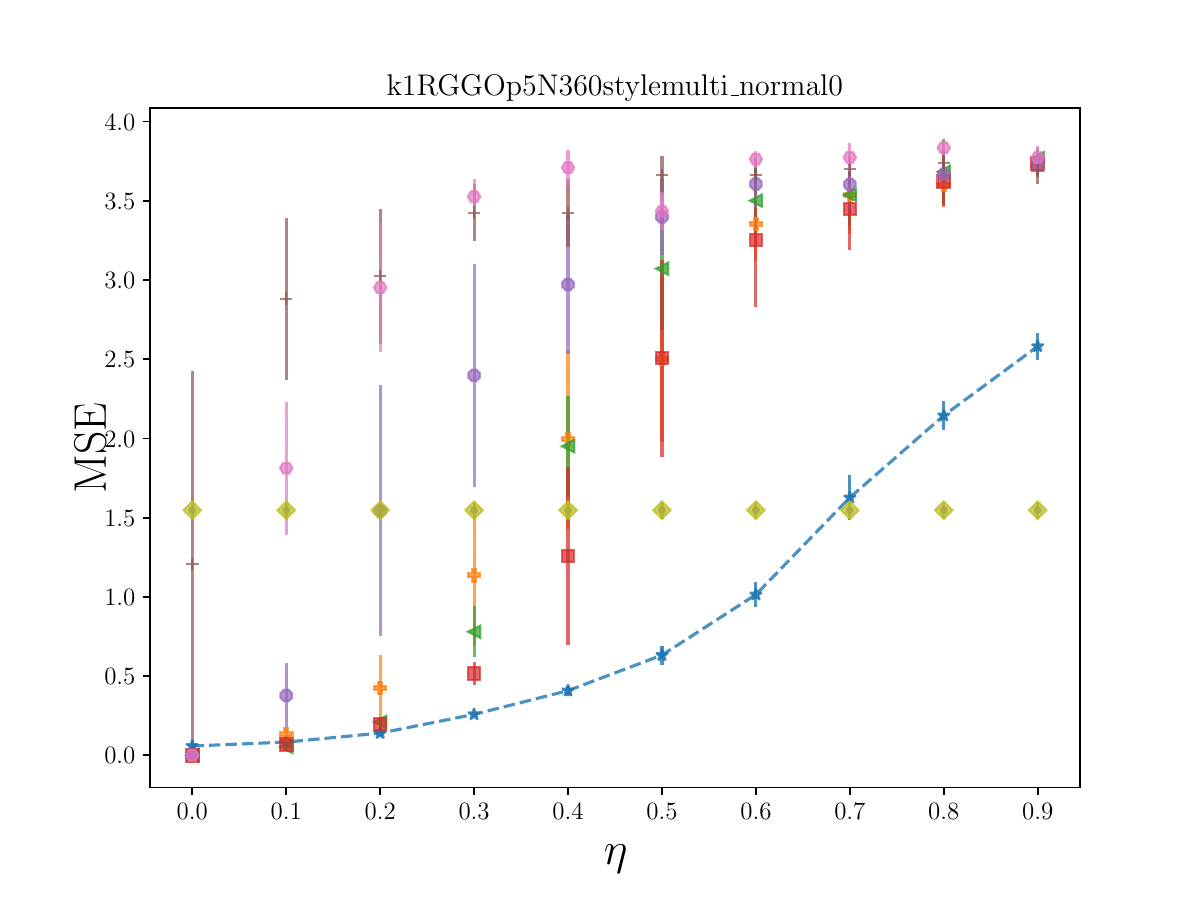}
    \caption{$\text{RGGO}_3(p=0.05)$}
  \end{subfigure}
  \begin{subfigure}[ht]{0.245\linewidth}
    \centering
    \includegraphics[width=\linewidth,trim=0cm 0cm 0cm 1.8cm,clip]{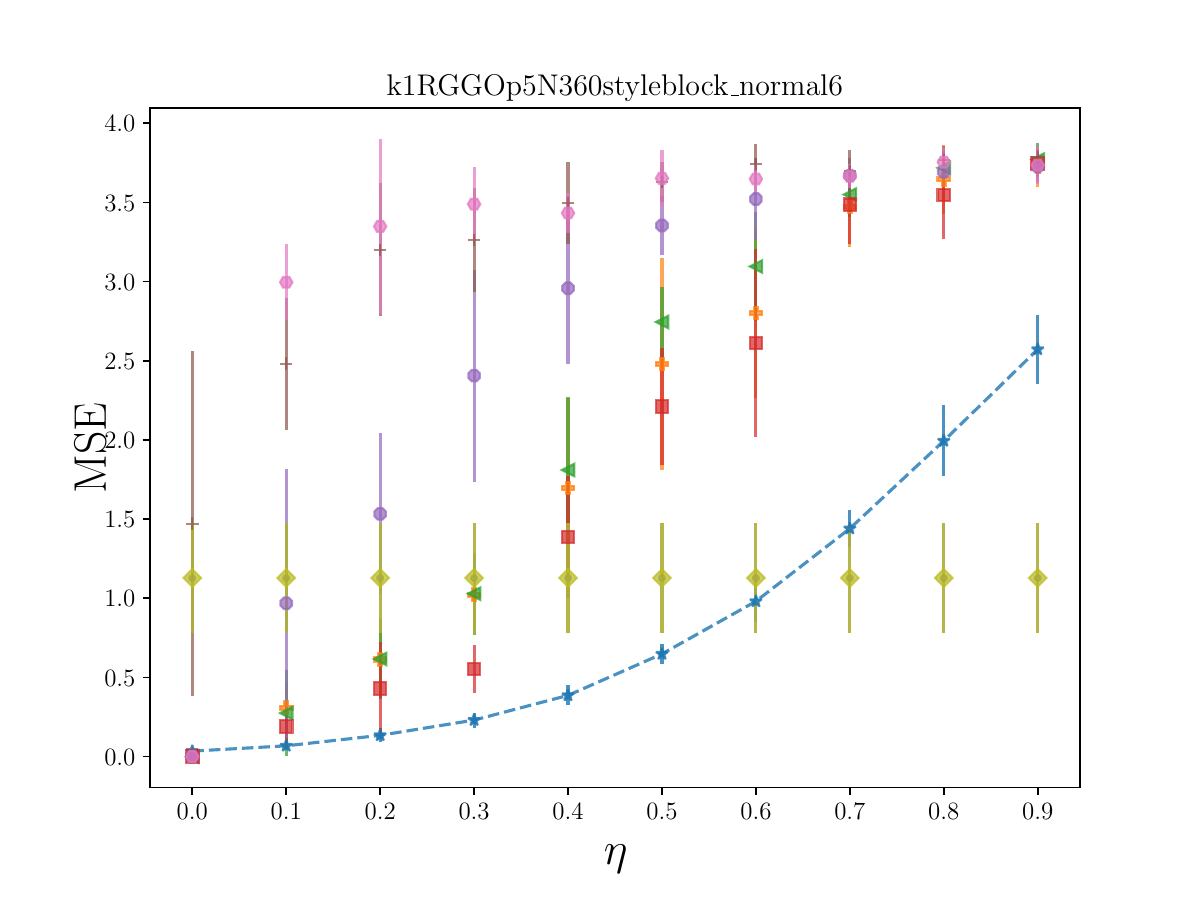}
    \caption{$\text{RGGO}_4(p=0.05)$}
  \end{subfigure}
  %%%%%%%%%%%%%%%%%%%%%%%%%%%%
  \begin{subfigure}[ht]{0.245\linewidth}
    \centering
    \includegraphics[width=\linewidth,trim=0cm 0cm 0cm 1.8cm,clip]{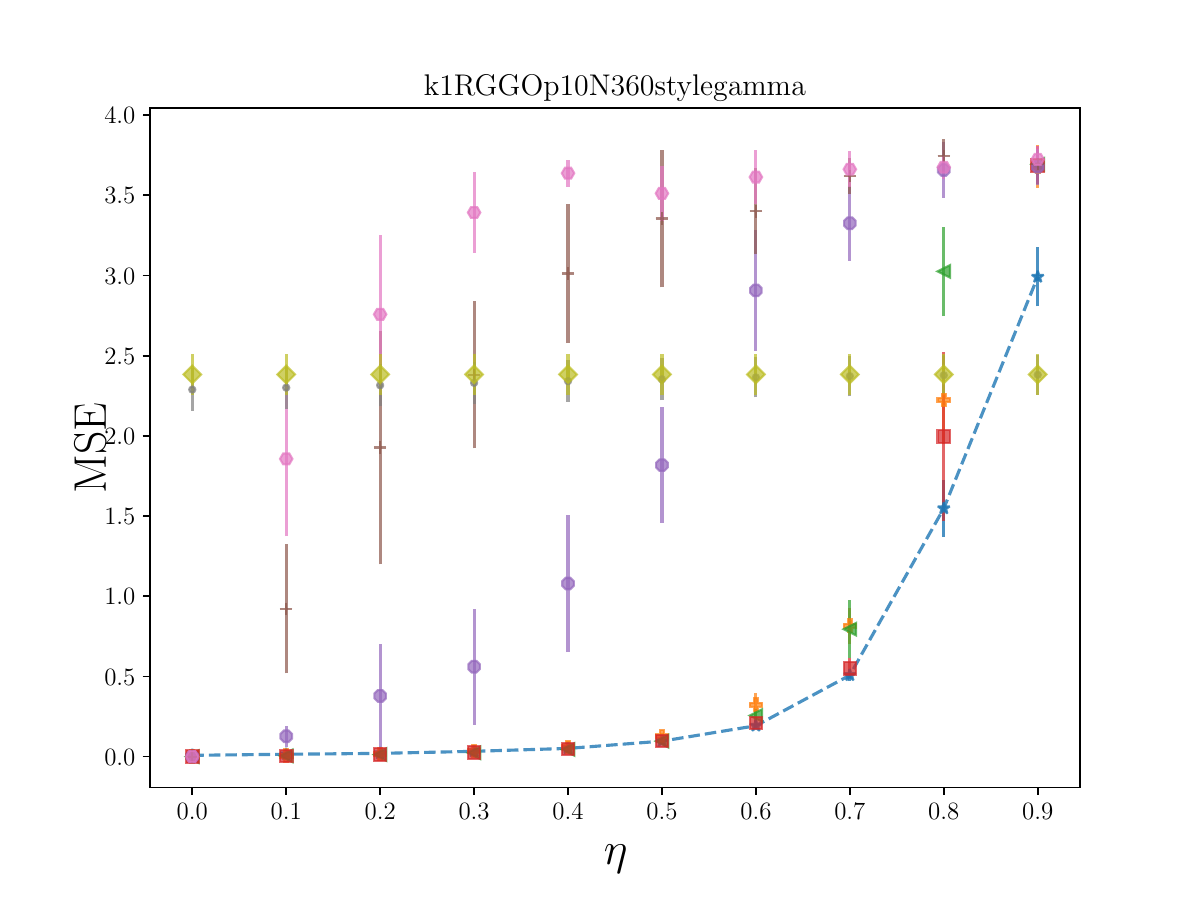}
    \caption{$\text{RGGO}_1(p=0.1)$}
  \end{subfigure}
  \begin{subfigure}[ht]{0.245\linewidth}
    \centering
    \includegraphics[width=\linewidth,trim=0cm 0cm 0cm 1.8cm,clip]{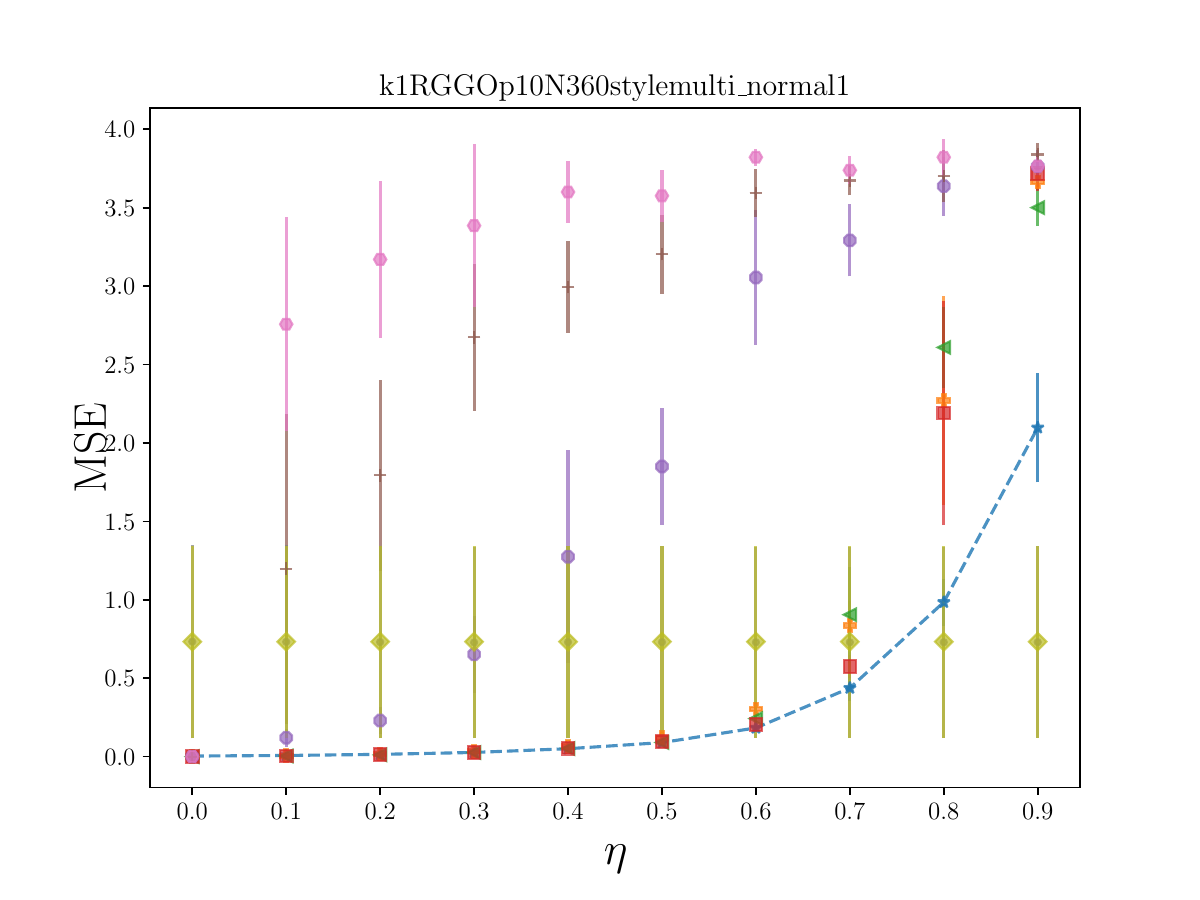}
    \caption{$\text{RGGO}_2(p=0.1)$}
  \end{subfigure}
  \begin{subfigure}[ht]{0.245\linewidth}
    \centering
    \includegraphics[width=\linewidth,trim=0cm 0cm 0cm 1.8cm,clip]{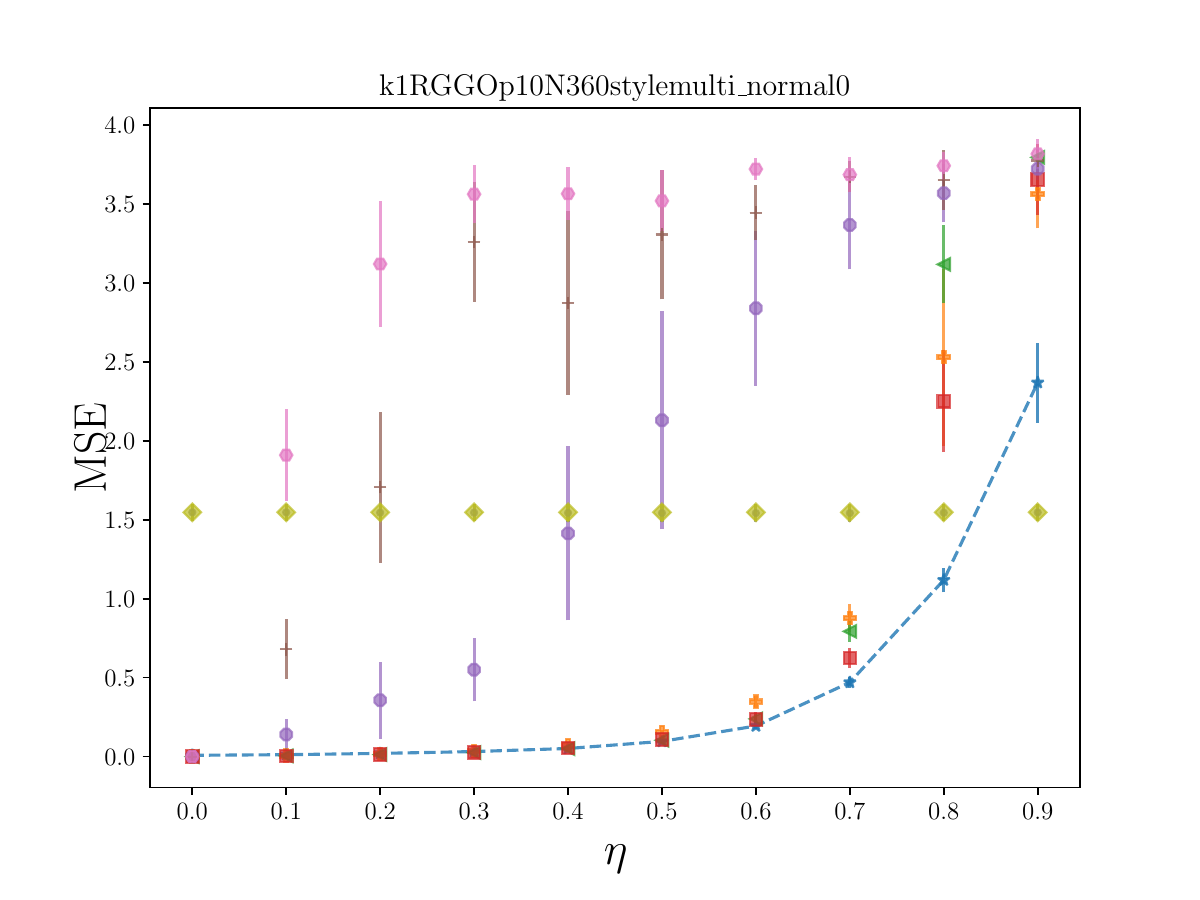}
    \caption{$\text{RGGO}_3(p=0.1)$}
  \end{subfigure}
  \begin{subfigure}[ht]{0.245\linewidth}
    \centering
    \includegraphics[width=\linewidth,trim=0cm 0cm 0cm 1.8cm,clip]{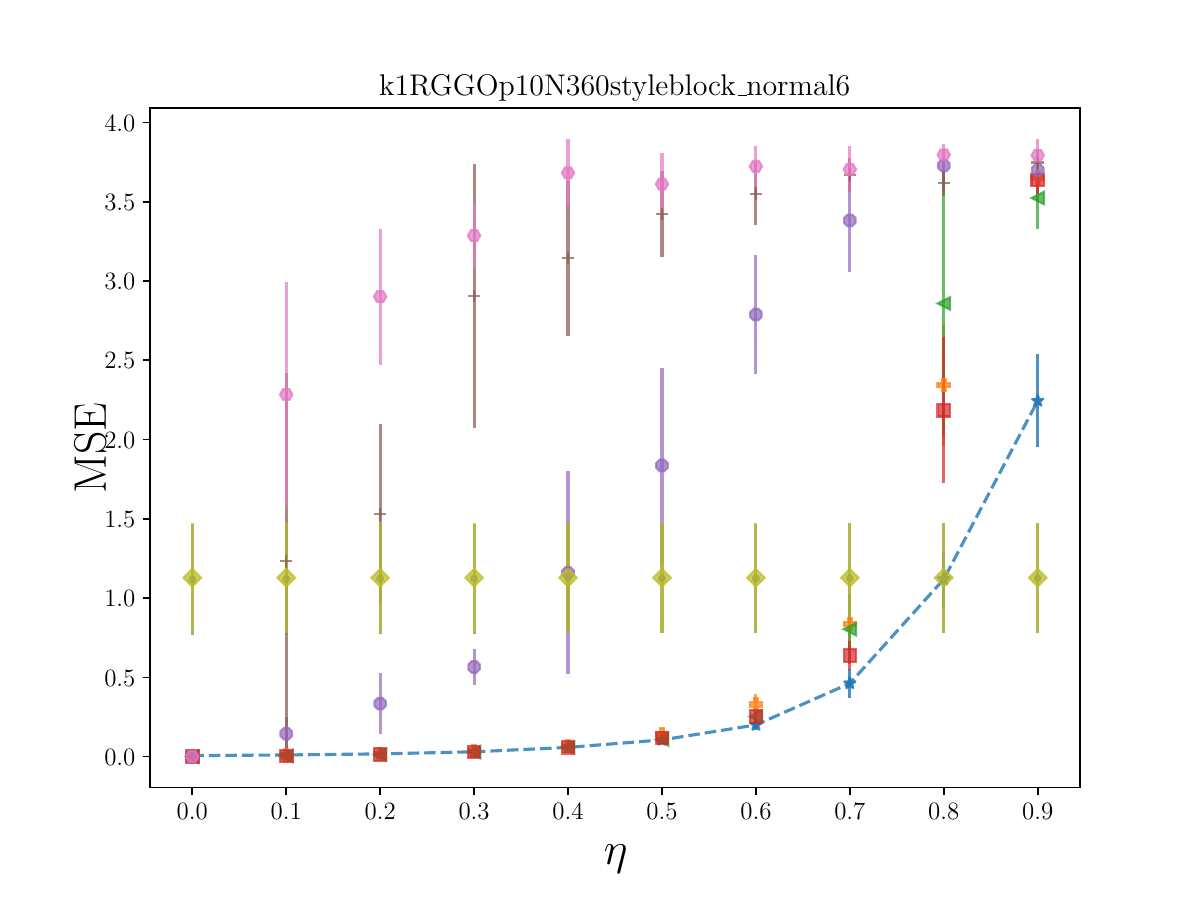}
    \caption{$\text{RGGO}_4(p=0.1)$}
  \end{subfigure}
  %%%%%%%%%%%%%%%%%%%%%%%%%%%%%%%%
  \begin{subfigure}[ht]{0.245\linewidth}
    \centering
    \includegraphics[width=\linewidth,trim=0cm 0cm 0cm 1.8cm,clip]{figures/RGGOp15N360stylegamma.pdf}
    \caption{$\text{RGGO}_1(p=0.15)$}
  \end{subfigure}
  \begin{subfigure}[ht]{0.245\linewidth}
    \centering
    \includegraphics[width=\linewidth,trim=0cm 0cm 0cm 1.8cm,clip]{figures/RGGOp15N360stylemulti_normal1.pdf}
    \caption{$\text{RGGO}_2(p=0.15)$}
  \end{subfigure}
  \begin{subfigure}[ht]{0.245\linewidth}
    \centering
    \includegraphics[width=\linewidth,trim=0cm 0cm 0cm 1.8cm,clip]{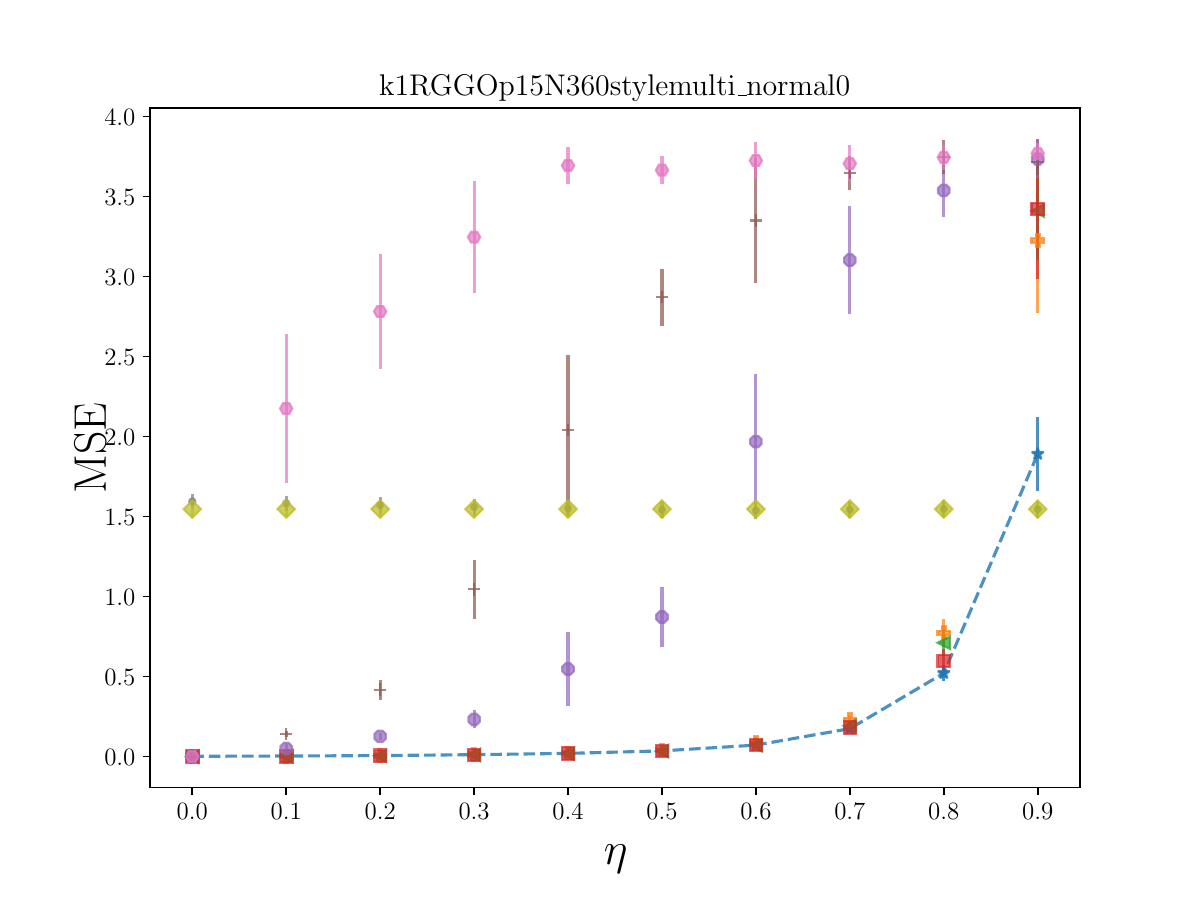}
    \caption{$\text{RGGO}_3(p=0.15)$}
  \end{subfigure}
  \begin{subfigure}[ht]{0.245\linewidth}
    \centering
    \includegraphics[width=\linewidth,trim=0cm 0cm 0cm 1.8cm,clip]{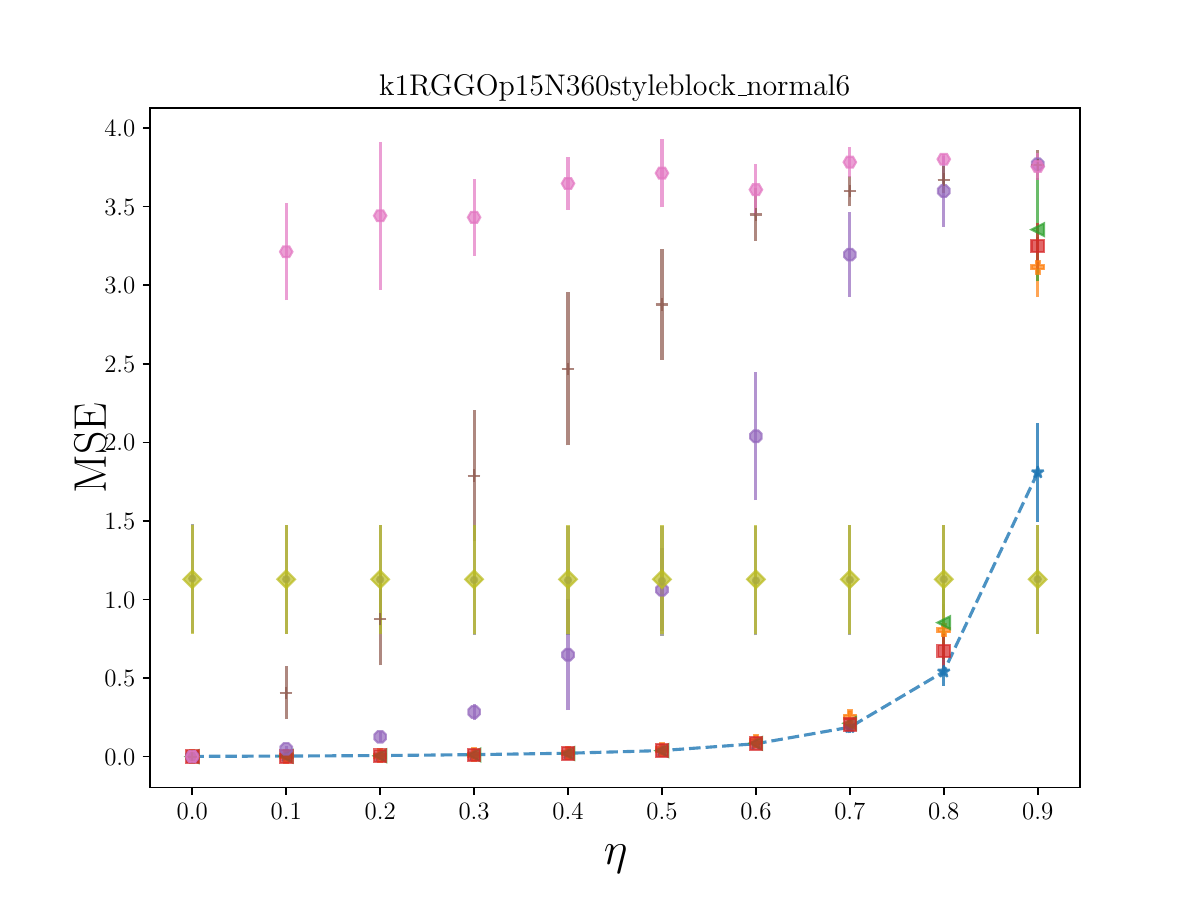}
    \caption{$\text{RGGO}_4(p=0.15)$}
  \end{subfigure}
  %%%%%%%%%%%%%%%%%%%%%%%%%%%%%%%%%%%
\begin{subfigure}[ht]{\linewidth}
    \centering
    \includegraphics[width=1.0\linewidth,trim=0cm 0cm 0cm 0cm,clip]{figures/sync_legend_main.png}
  \end{subfigure}
    \caption{MSE performance comparison on GNNSync against baselines on angular synchronization ($k=1$) for RGGO models. $p$ is the network density and $\eta$ is the noise level.  Error bars indicate one standard deviation. 
    }
    \label{fig:main_k1_RGGO}
\end{figure*}

\begin{figure*}[!hbt]
    \centering
  \begin{subfigure}[ht]{0.245\linewidth}
    \centering
    \includegraphics[width=\linewidth,trim=0cm 0cm 0cm 1.8cm,clip]{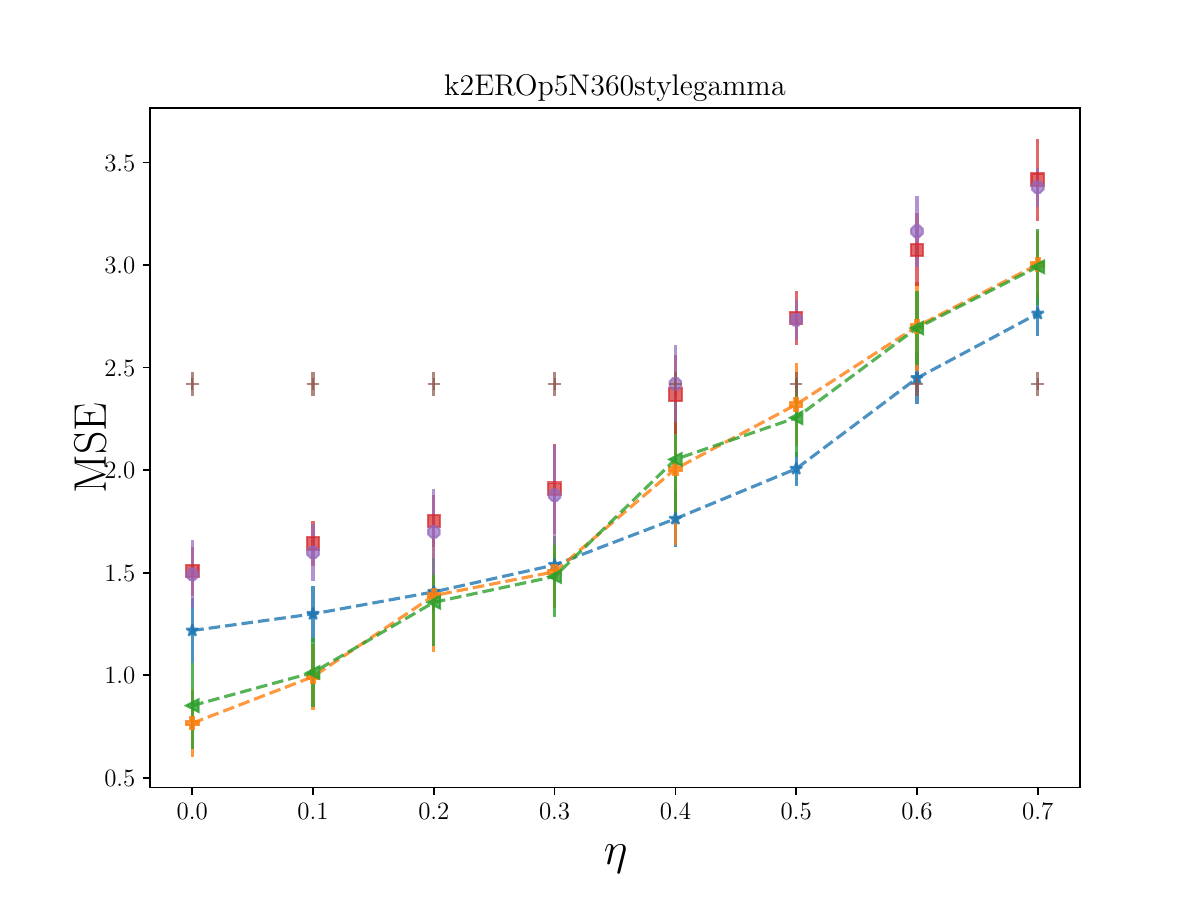}
    \caption{$\text{ERO}_1(p=0.05)$}
  \end{subfigure}
  \begin{subfigure}[ht]{0.245\linewidth}
    \centering
    \includegraphics[width=\linewidth,trim=0cm 0cm 0cm 1.8cm,clip]{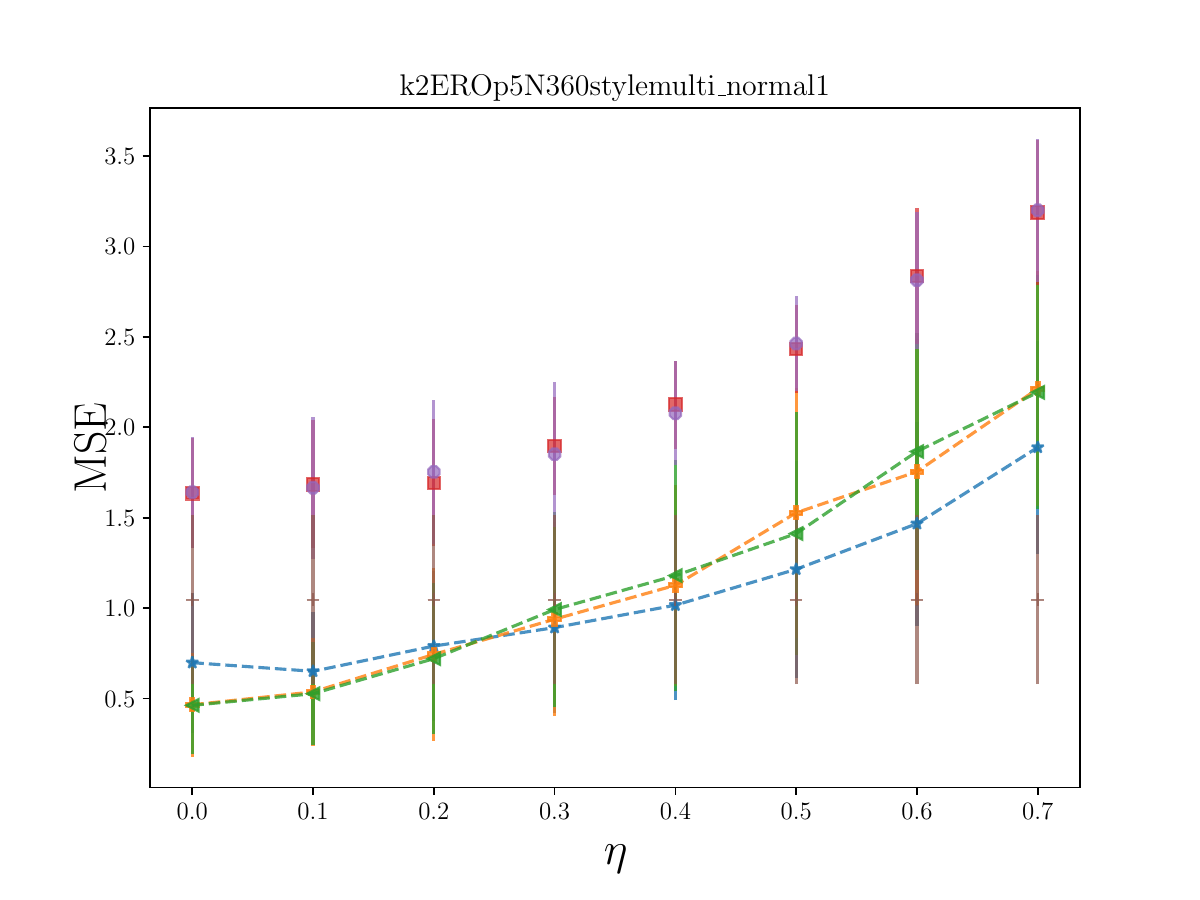}
    \caption{$\text{ERO}_2(p=0.05)$}
  \end{subfigure}
  \begin{subfigure}[ht]{0.245\linewidth}
    \centering
    \includegraphics[width=\linewidth,trim=0cm 0cm 0cm 1.8cm,clip]{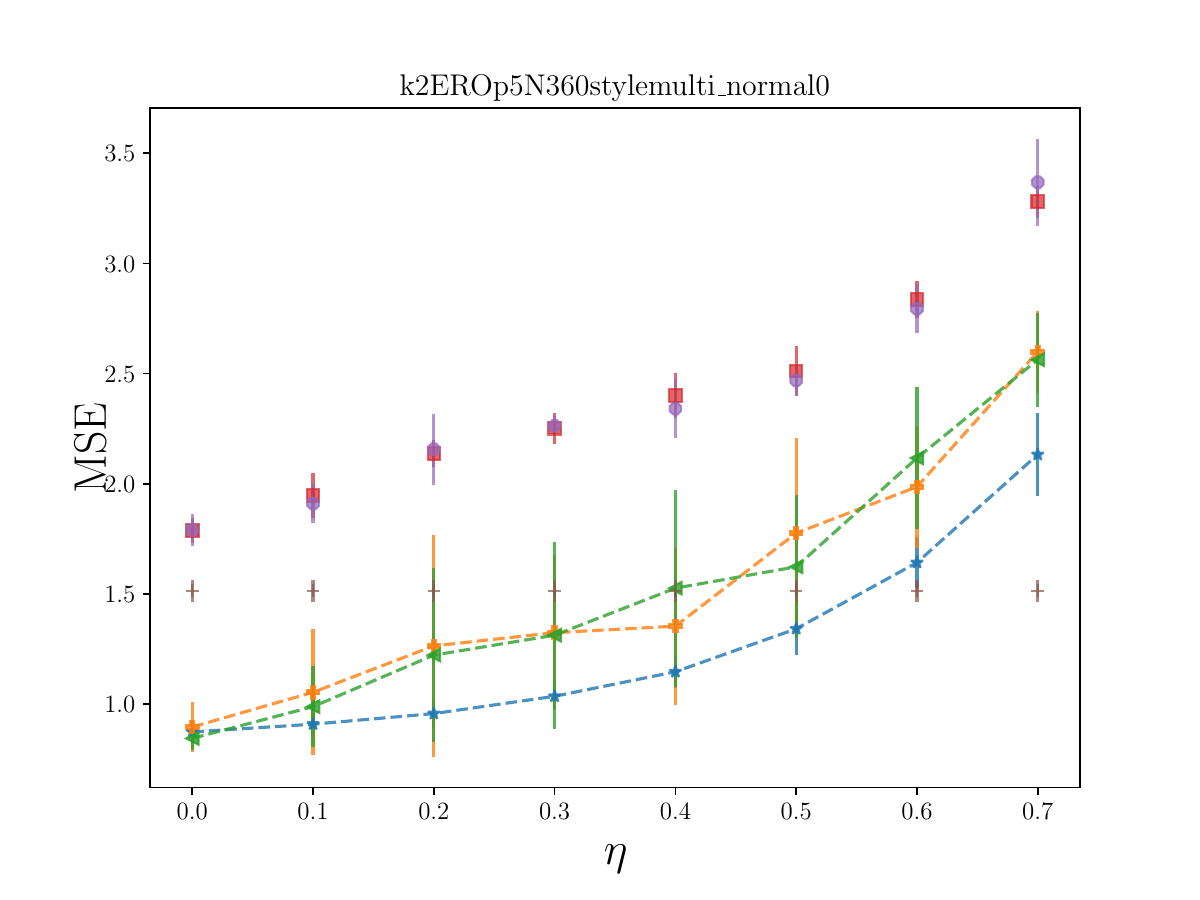}
    \caption{$\text{ERO}_3(p=0.05)$}
  \end{subfigure}
  \begin{subfigure}[ht]{0.245\linewidth}
    \centering
    \includegraphics[width=\linewidth,trim=0cm 0cm 0cm 1.8cm,clip]{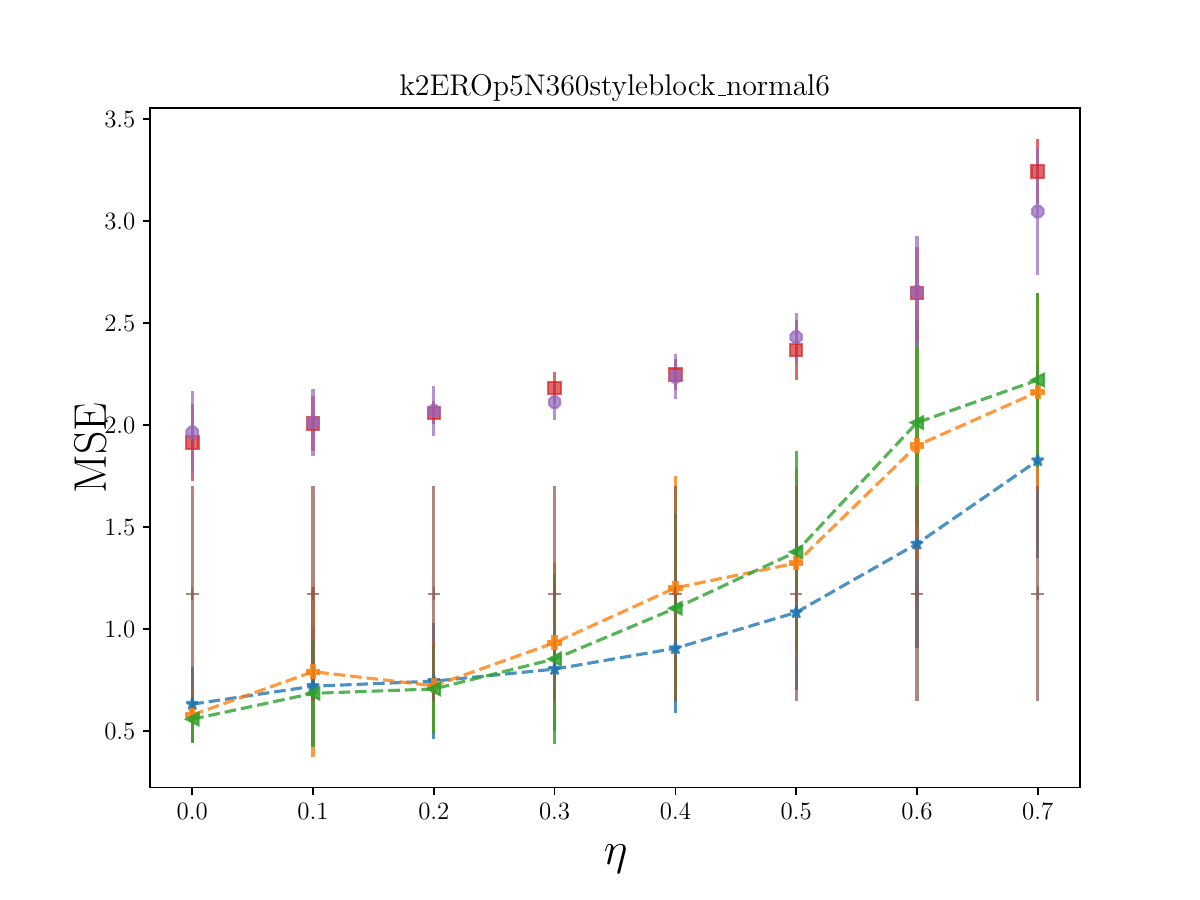}
    \caption{$\text{ERO}_4(p=0.05)$}
  \end{subfigure}
  %%%%%%%%%%%%%%%%%%%%%%%%%%%%
  \begin{subfigure}[ht]{0.245\linewidth}
    \centering
    \includegraphics[width=\linewidth,trim=0cm 0cm 0cm 1.8cm,clip]{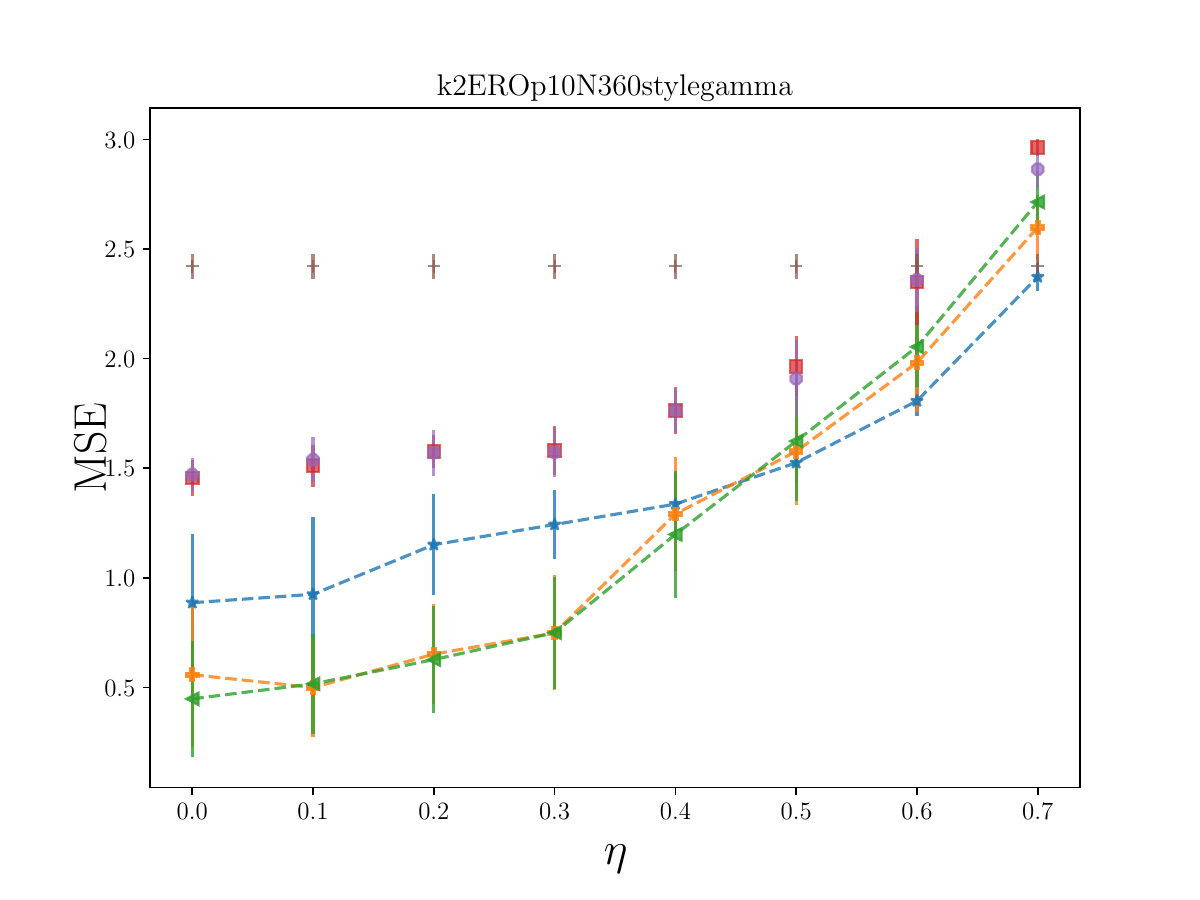}
    \caption{$\text{ERO}_1(p=0.1)$}
  \end{subfigure}
  \begin{subfigure}[ht]{0.245\linewidth}
    \centering
    \includegraphics[width=\linewidth,trim=0cm 0cm 0cm 1.8cm,clip]{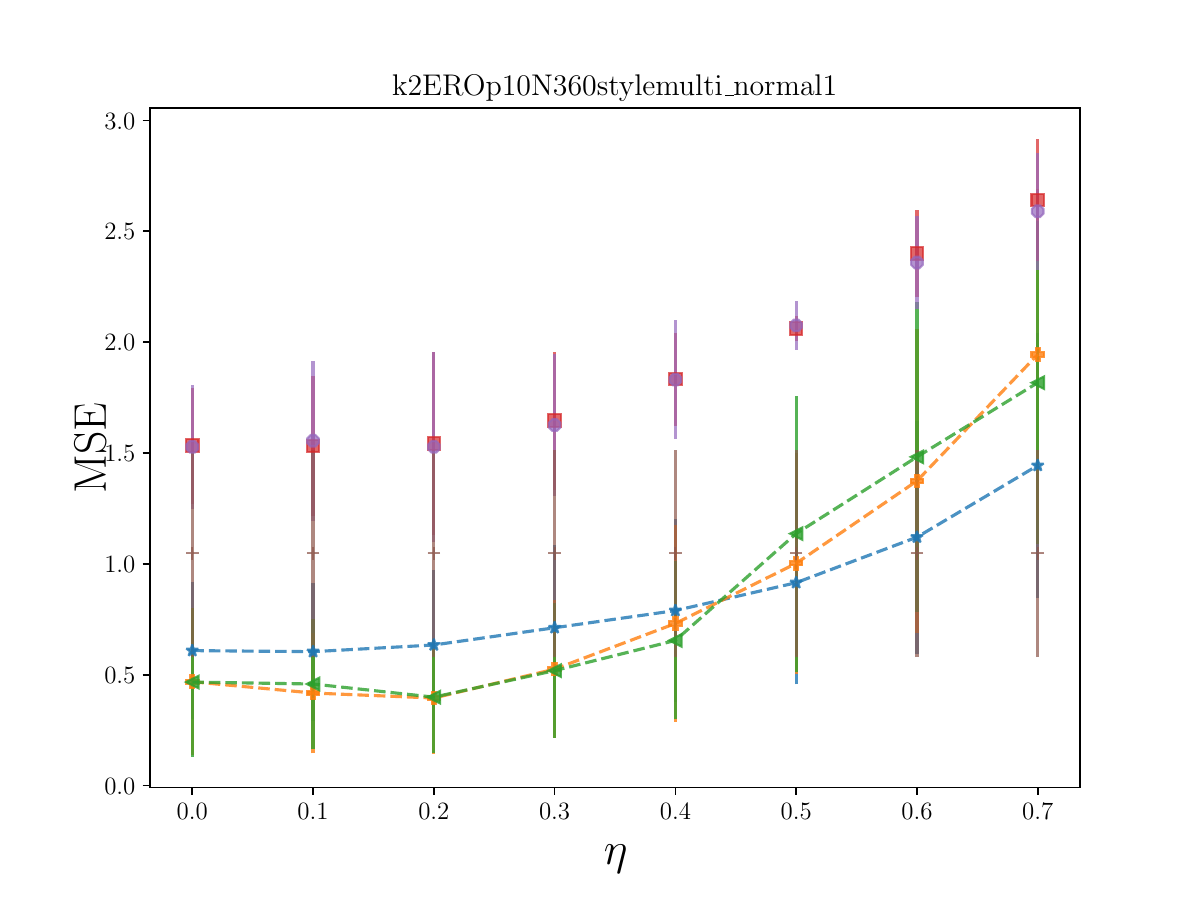}
    \caption{$\text{ERO}_2(p=0.1)$}
  \end{subfigure}
  \begin{subfigure}[ht]{0.245\linewidth}
    \centering
    \includegraphics[width=\linewidth,trim=0cm 0cm 0cm 1.8cm,clip]{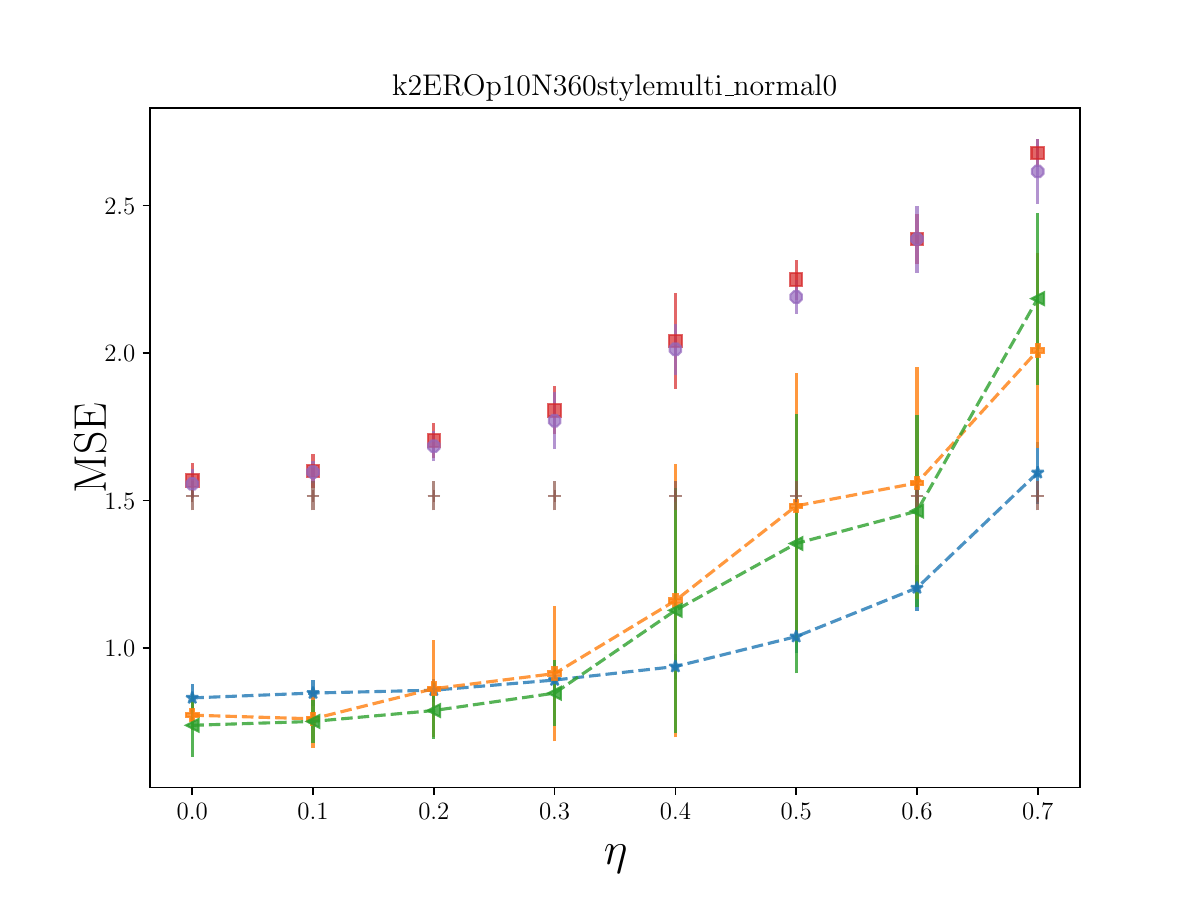}
    \caption{$\text{ERO}_3(p=0.1)$}
  \end{subfigure}
  \begin{subfigure}[ht]{0.245\linewidth}
    \centering
    \includegraphics[width=\linewidth,trim=0cm 0cm 0cm 1.8cm,clip]{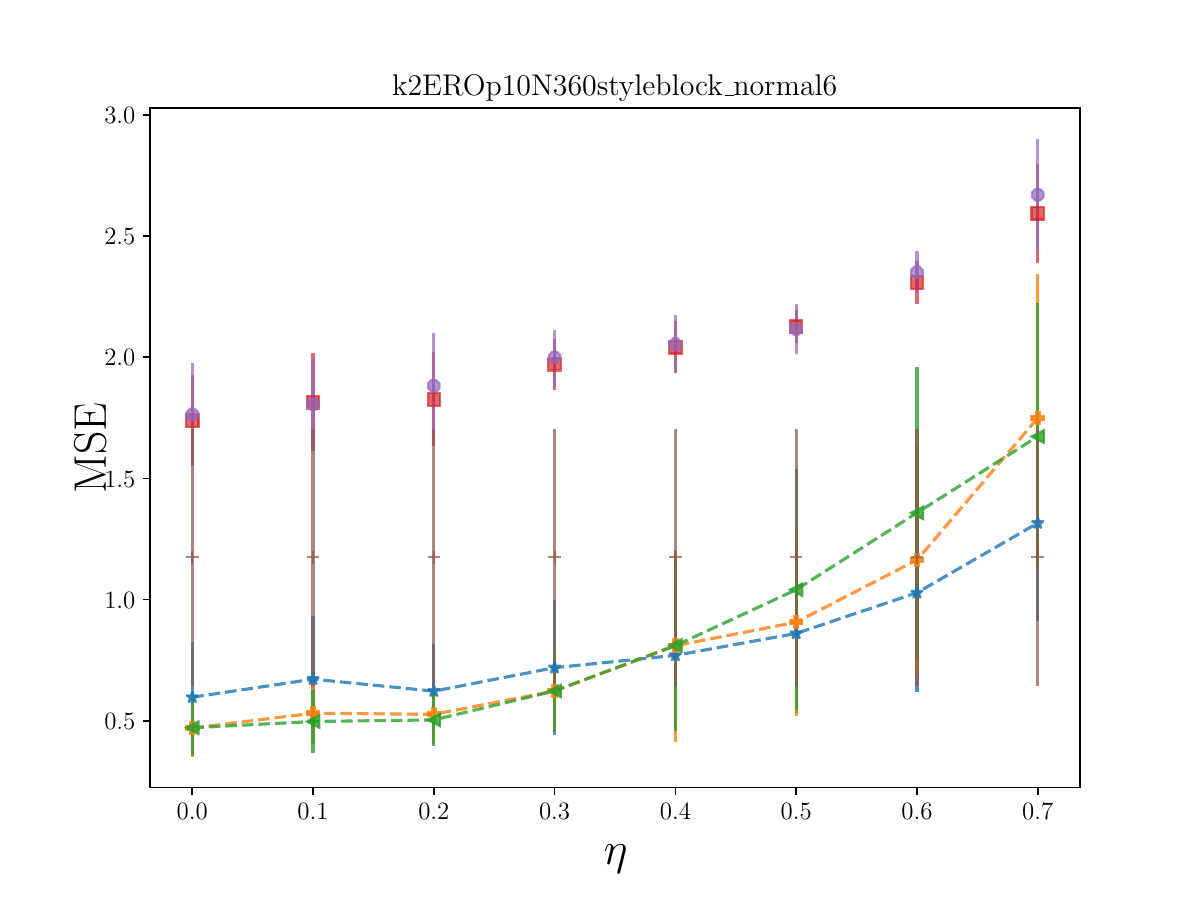}
    \caption{$\text{ERO}_4(p=0.1)$}
  \end{subfigure}
  %%%%%%%%%%%%%%%%%%%%%%%%%%%%%%%%
  \begin{subfigure}[ht]{0.245\linewidth}
    \centering
    \includegraphics[width=\linewidth,trim=0cm 0cm 0cm 1.8cm,clip]{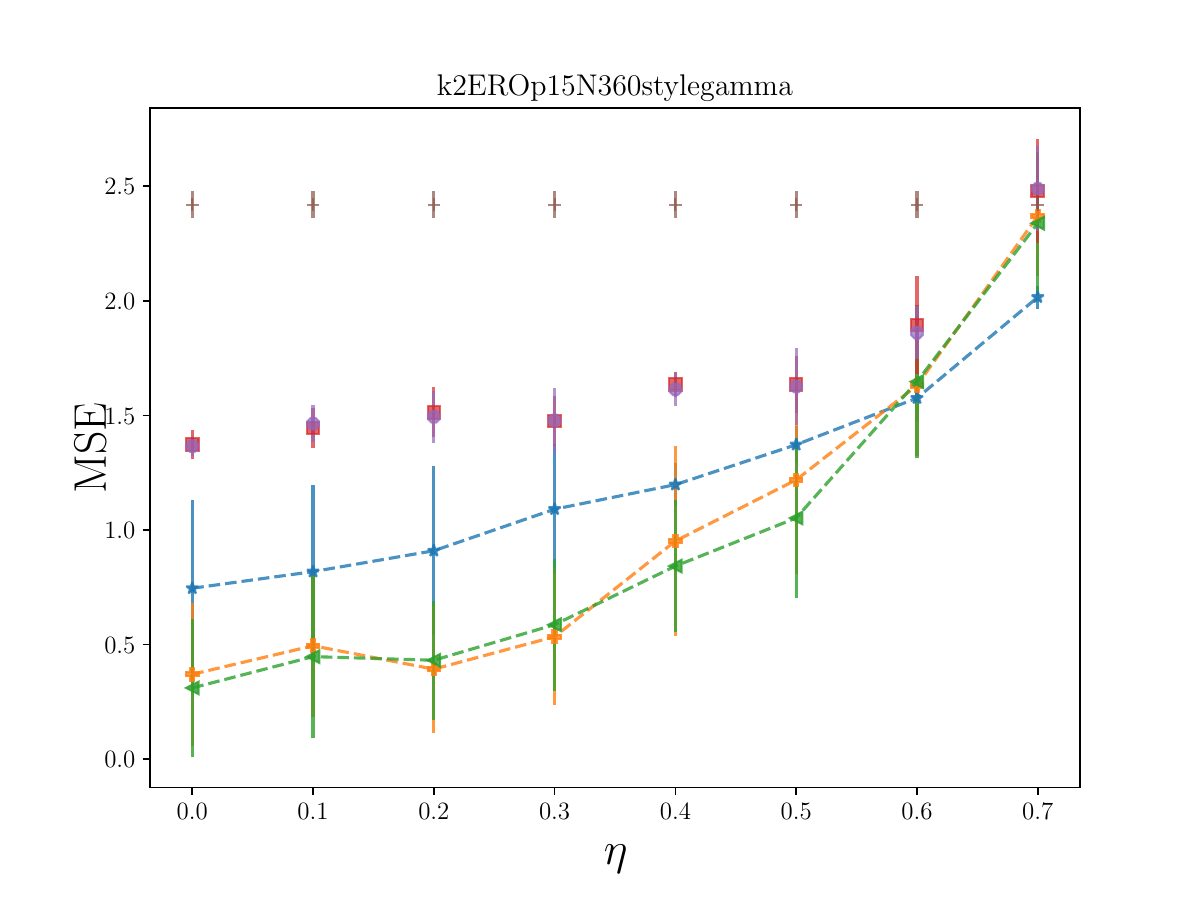}
    \caption{$\text{ERO}_1(p=0.15)$}
  \end{subfigure}
  \begin{subfigure}[ht]{0.245\linewidth}
    \centering
    \includegraphics[width=\linewidth,trim=0cm 0cm 0cm 1.8cm,clip]{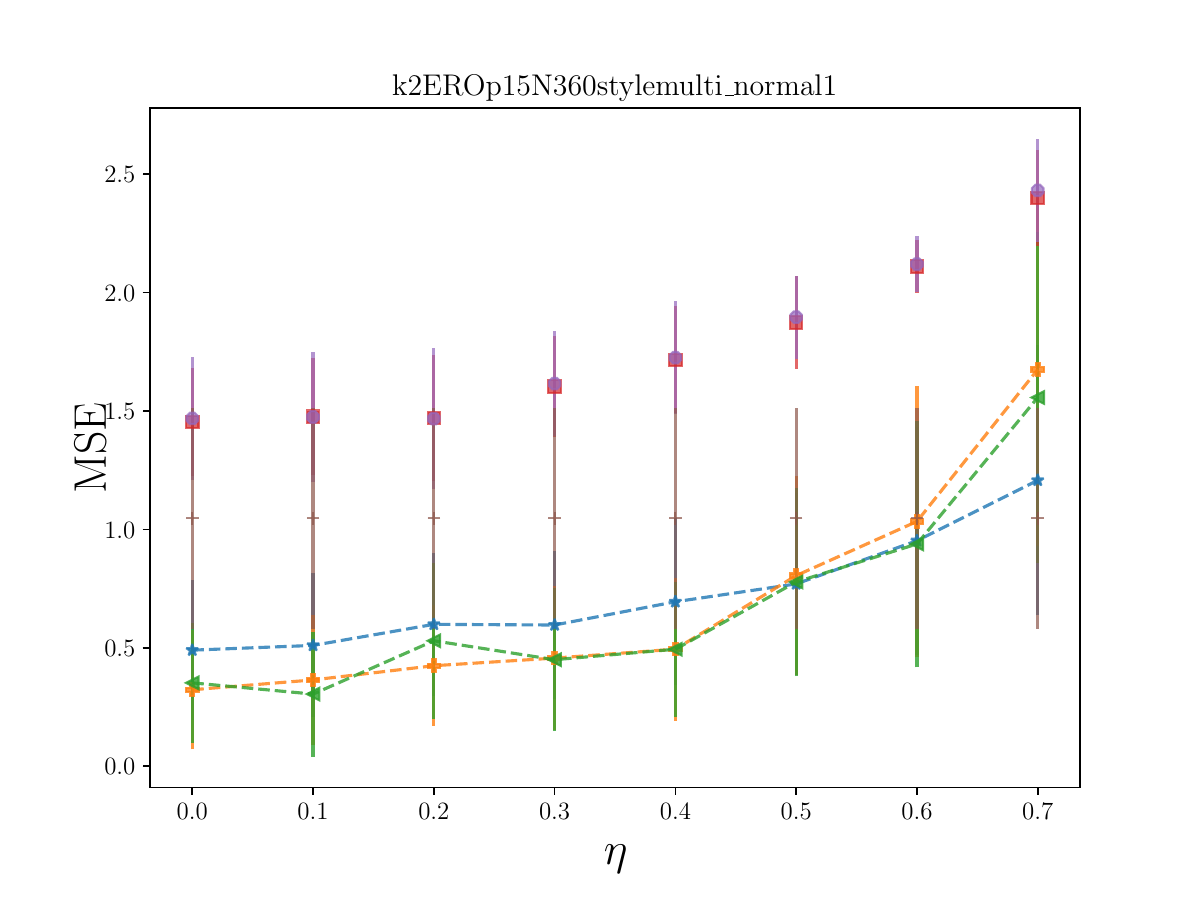}
    \caption{$\text{ERO}_2(p=0.15)$}
  \end{subfigure}
  \begin{subfigure}[ht]{0.245\linewidth}
    \centering
    \includegraphics[width=\linewidth,trim=0cm 0cm 0cm 1.8cm,clip]{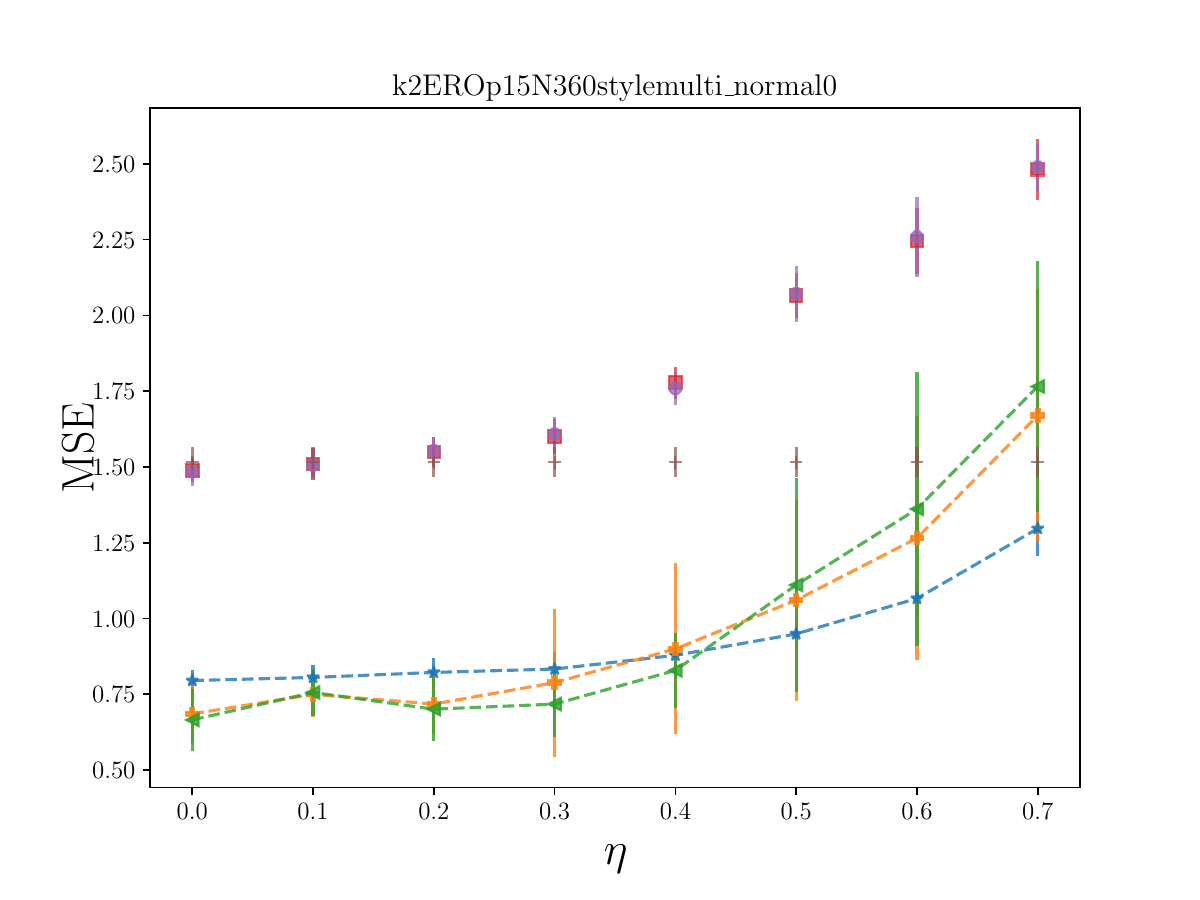}
    \caption{$\text{ERO}_3(p=0.15)$}
  \end{subfigure}
  \begin{subfigure}[ht]{0.245\linewidth}
    \centering
    \includegraphics[width=\linewidth,trim=0cm 0cm 0cm 1.8cm,clip]{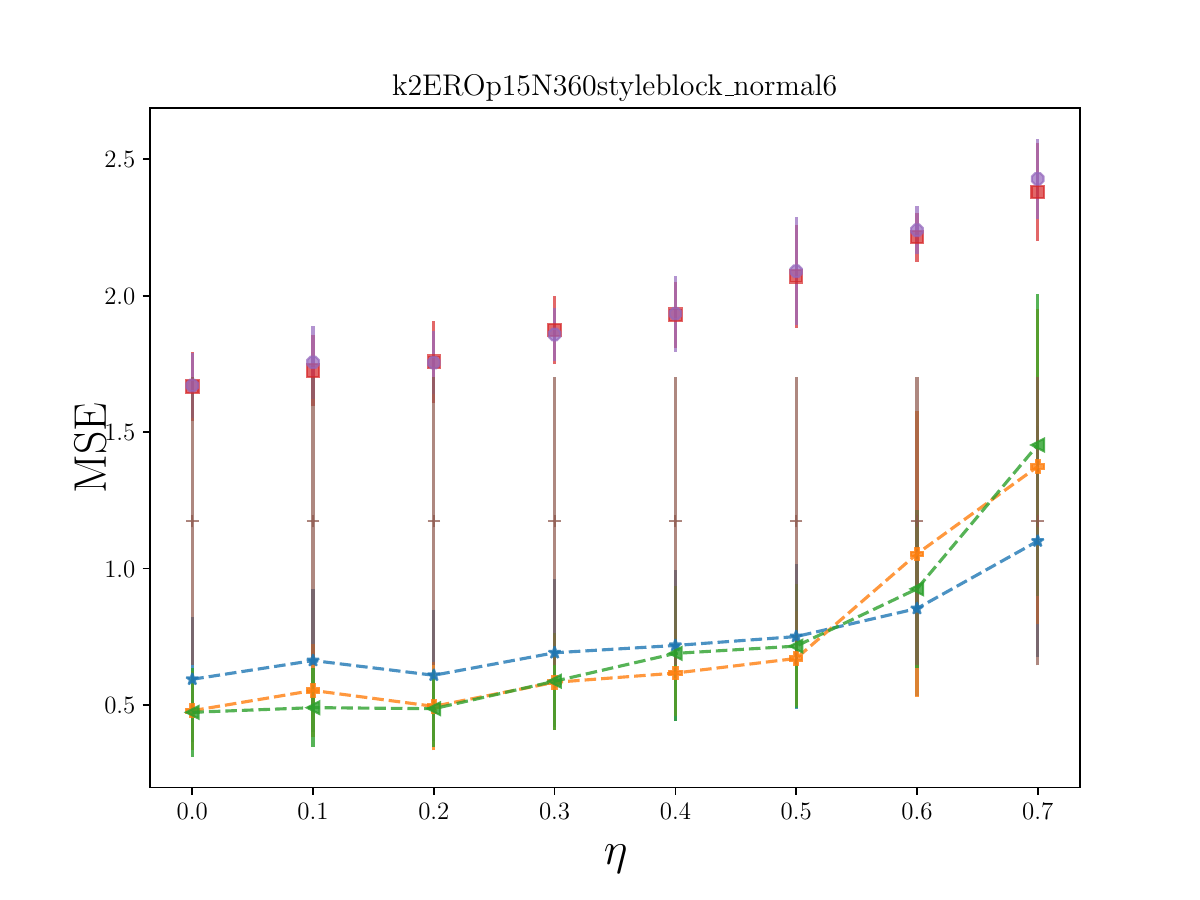}
    \caption{$\text{ERO}_4(p=0.15)$}
  \end{subfigure}
  %%%%%%%%%%%%%%%%%%%%%%%%%%%%%%%%%%%
\begin{subfigure}[ht]{\linewidth}
    \centering
    \includegraphics[width=1.0\linewidth,trim=0cm 0cm 0cm 0cm,clip]{figures/legend_ksync.png}
  \end{subfigure}
    \caption{MSE performance comparison on GNNSync against baselines on $k-$synchronization with $k=2$ for ERO models. $p$ is the network density and $\eta$ is the noise level. Error bars indicate one standard deviation. 
    }
    \label{fig:main_k2_ERO}
\end{figure*}

\begin{figure*}[!hbt]
    \centering
  \begin{subfigure}[ht]{0.245\linewidth}
    \centering
    \includegraphics[width=\linewidth,trim=0cm 0cm 0cm 1.8cm,clip]{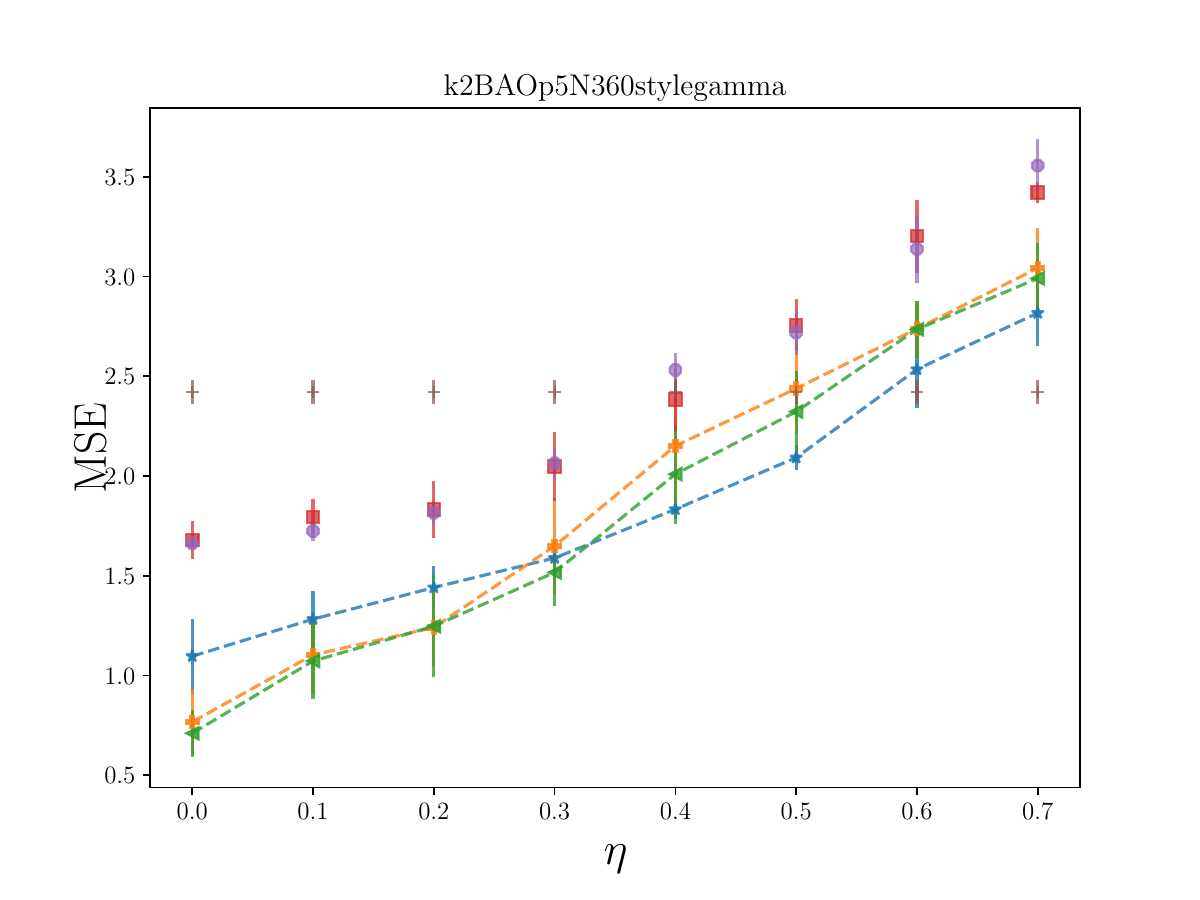}
    \caption{$\text{BAO}_1(p=0.05)$}
  \end{subfigure}
  \begin{subfigure}[ht]{0.245\linewidth}
    \centering
    \includegraphics[width=\linewidth,trim=0cm 0cm 0cm 1.8cm,clip]{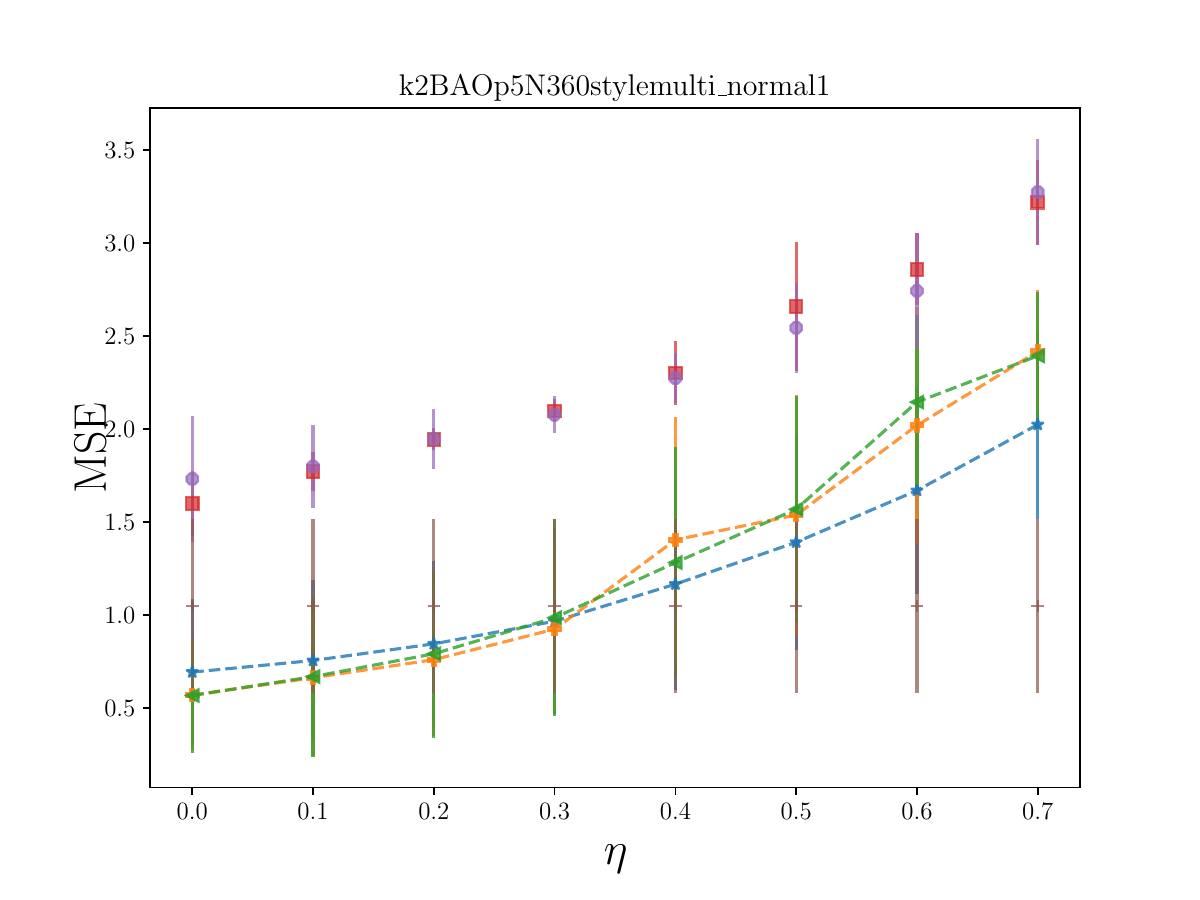}
    \caption{$\text{BAO}_2(p=0.05)$}
  \end{subfigure}
  \begin{subfigure}[ht]{0.245\linewidth}
    \centering
    \includegraphics[width=\linewidth,trim=0cm 0cm 0cm 1.8cm,clip]{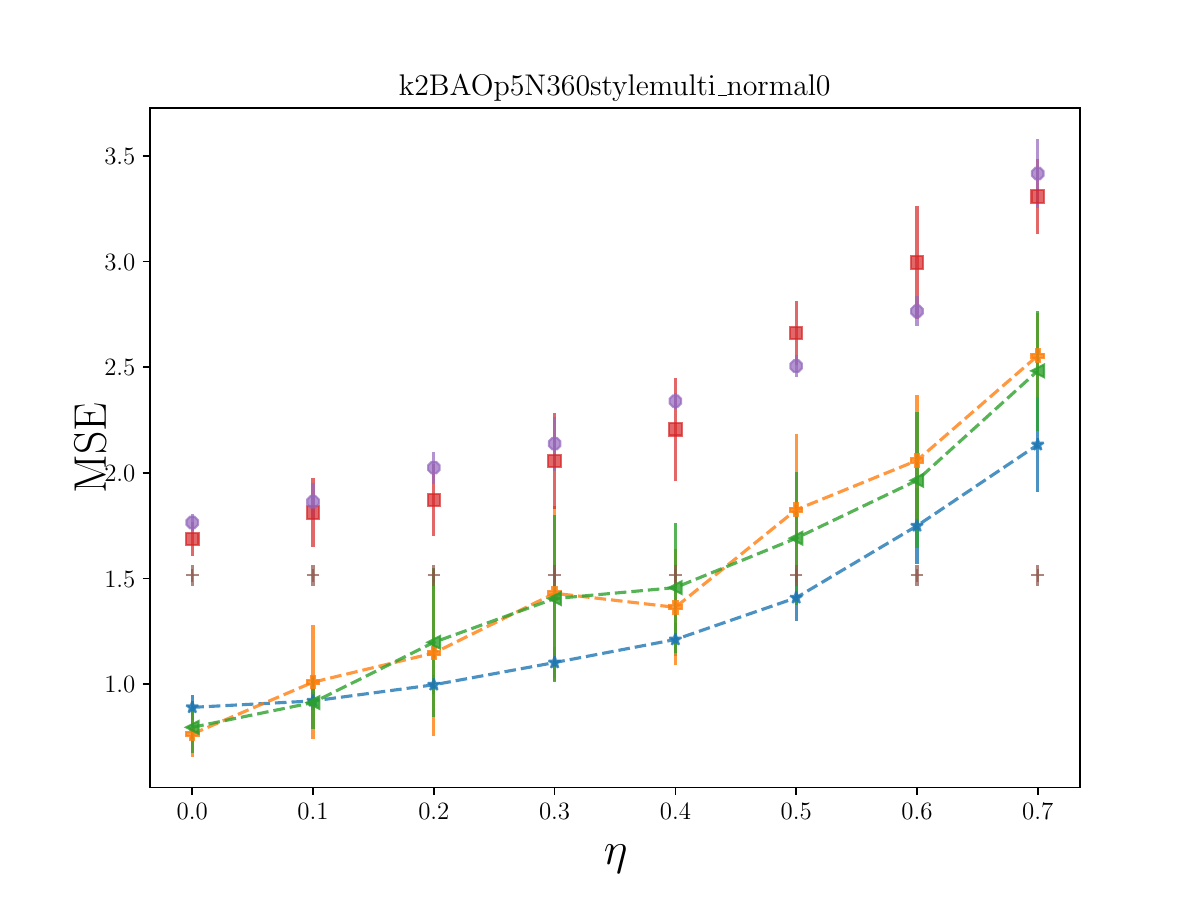}
    \caption{$\text{BAO}_3(p=0.05)$}
  \end{subfigure}
  \begin{subfigure}[ht]{0.245\linewidth}
    \centering
    \includegraphics[width=\linewidth,trim=0cm 0cm 0cm 1.8cm,clip]{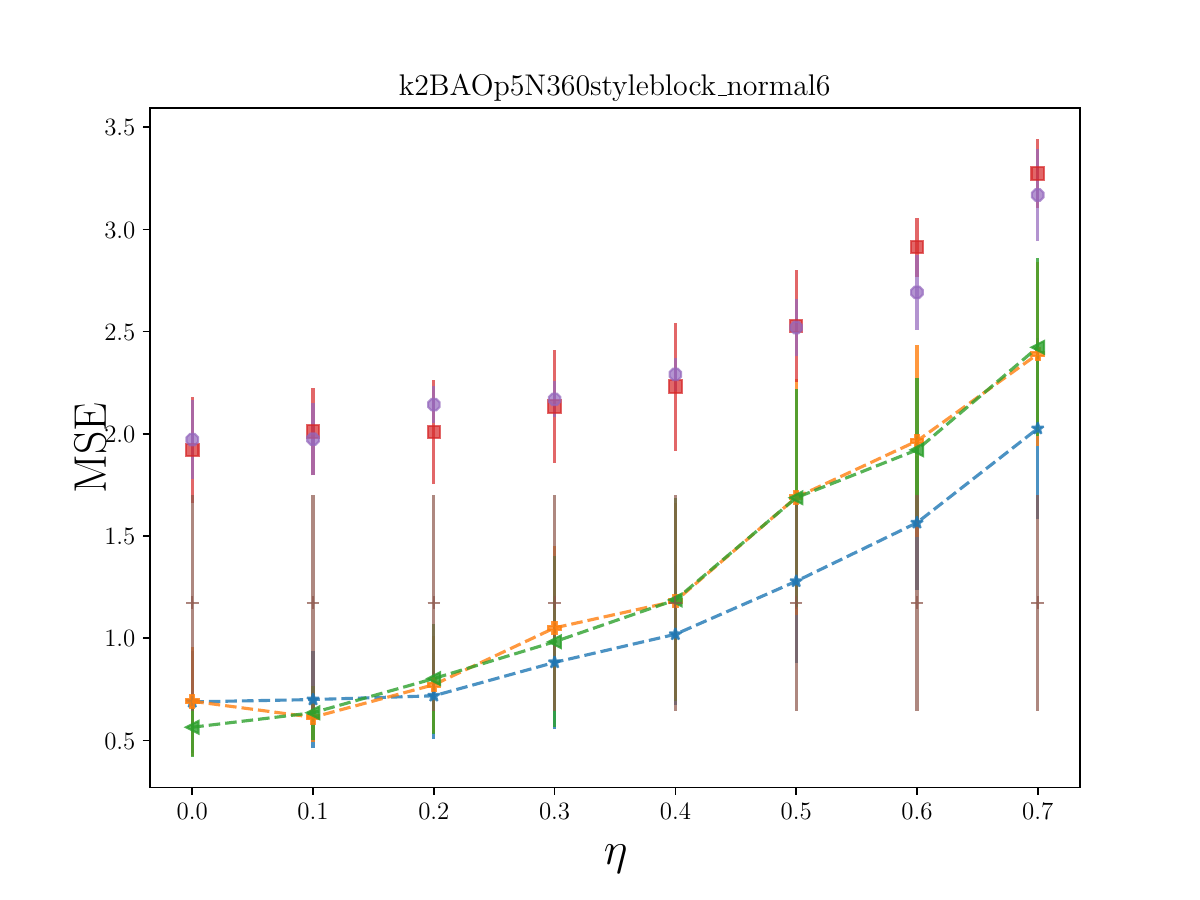}
    \caption{$\text{BAO}_4(p=0.05)$}
  \end{subfigure}
  %%%%%%%%%%%%%%%%%%%%%%%%%%%%
  \begin{subfigure}[ht]{0.245\linewidth}
    \centering
    \includegraphics[width=\linewidth,trim=0cm 0cm 0cm 1.8cm,clip]{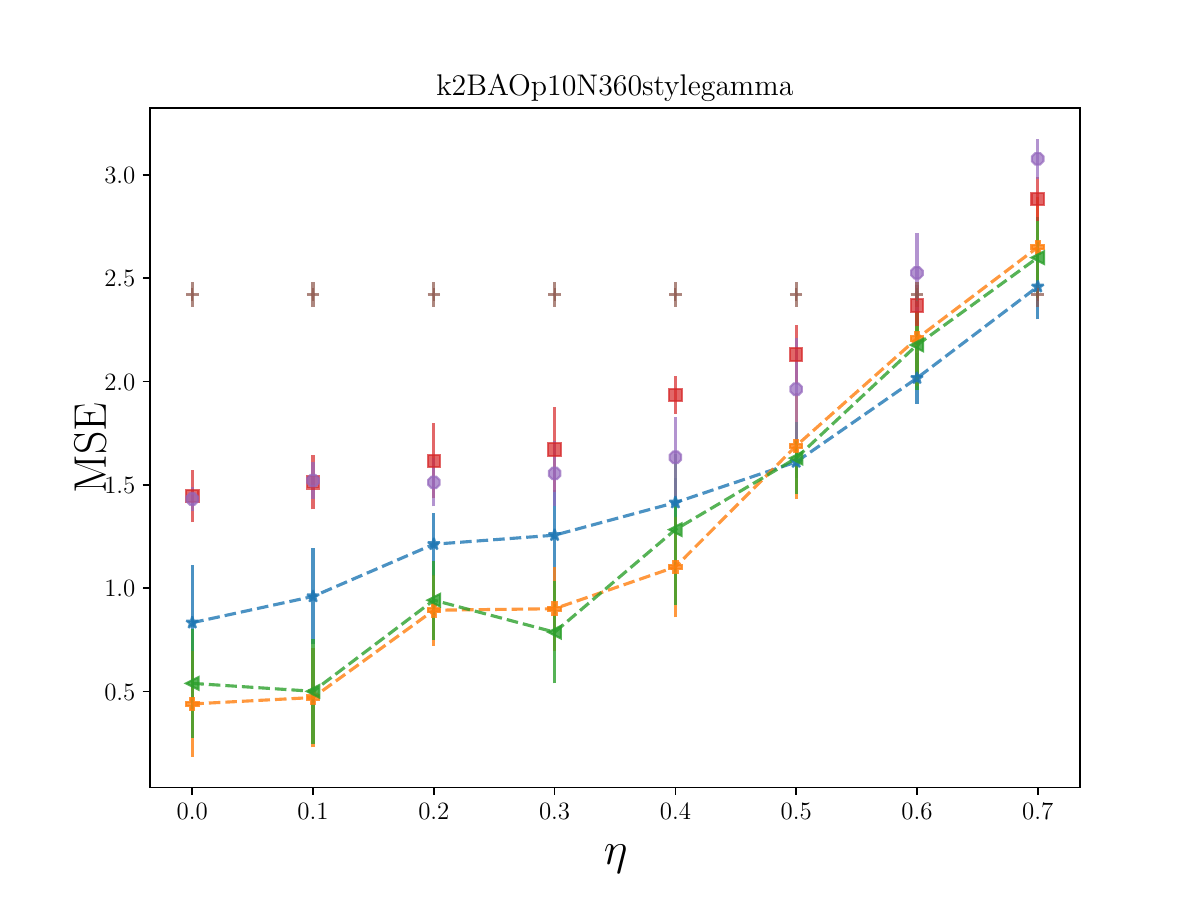}
    \caption{$\text{BAO}_1(p=0.1)$}
  \end{subfigure}
  \begin{subfigure}[ht]{0.245\linewidth}
    \centering
    \includegraphics[width=\linewidth,trim=0cm 0cm 0cm 1.8cm,clip]{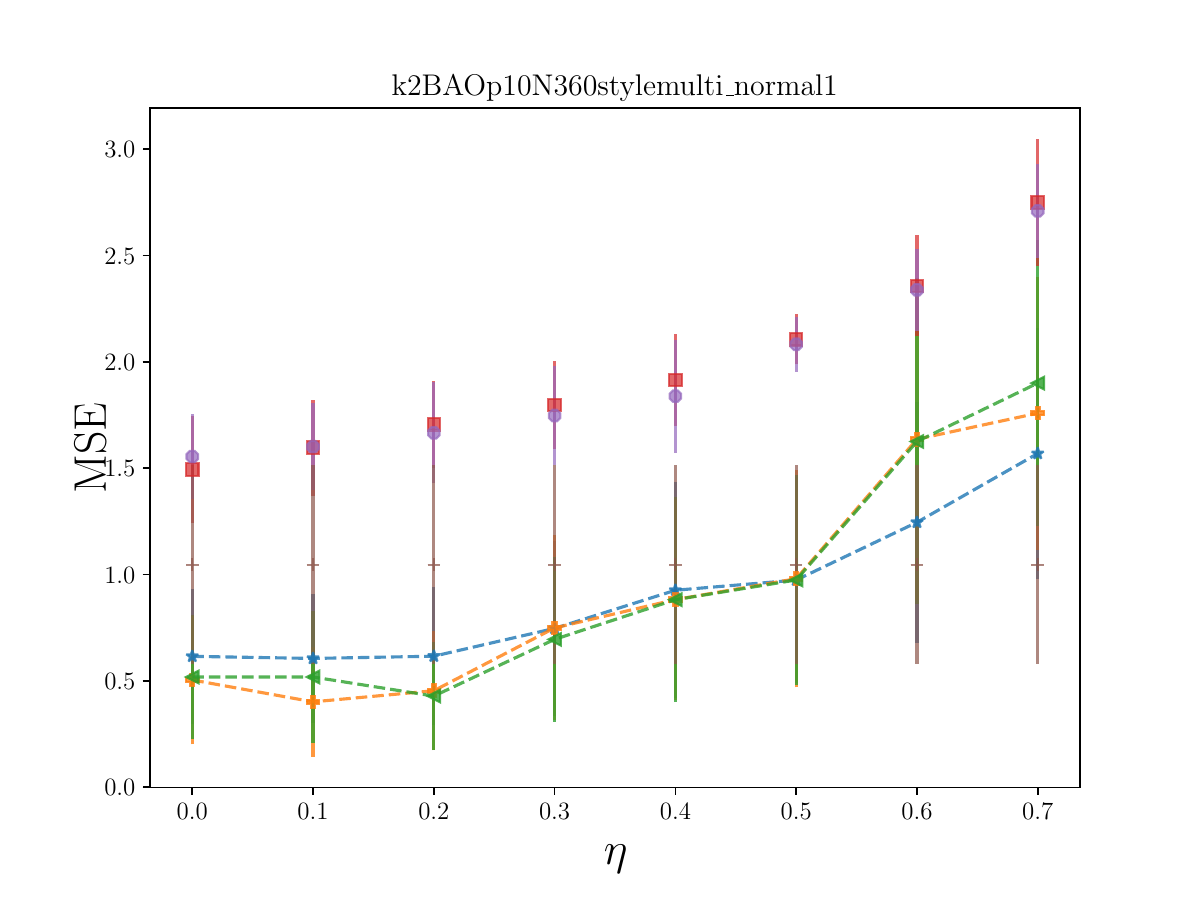}
    \caption{$\text{BAO}_2(p=0.1)$}
  \end{subfigure}
  \begin{subfigure}[ht]{0.245\linewidth}
    \centering
    \includegraphics[width=\linewidth,trim=0cm 0cm 0cm 1.8cm,clip]{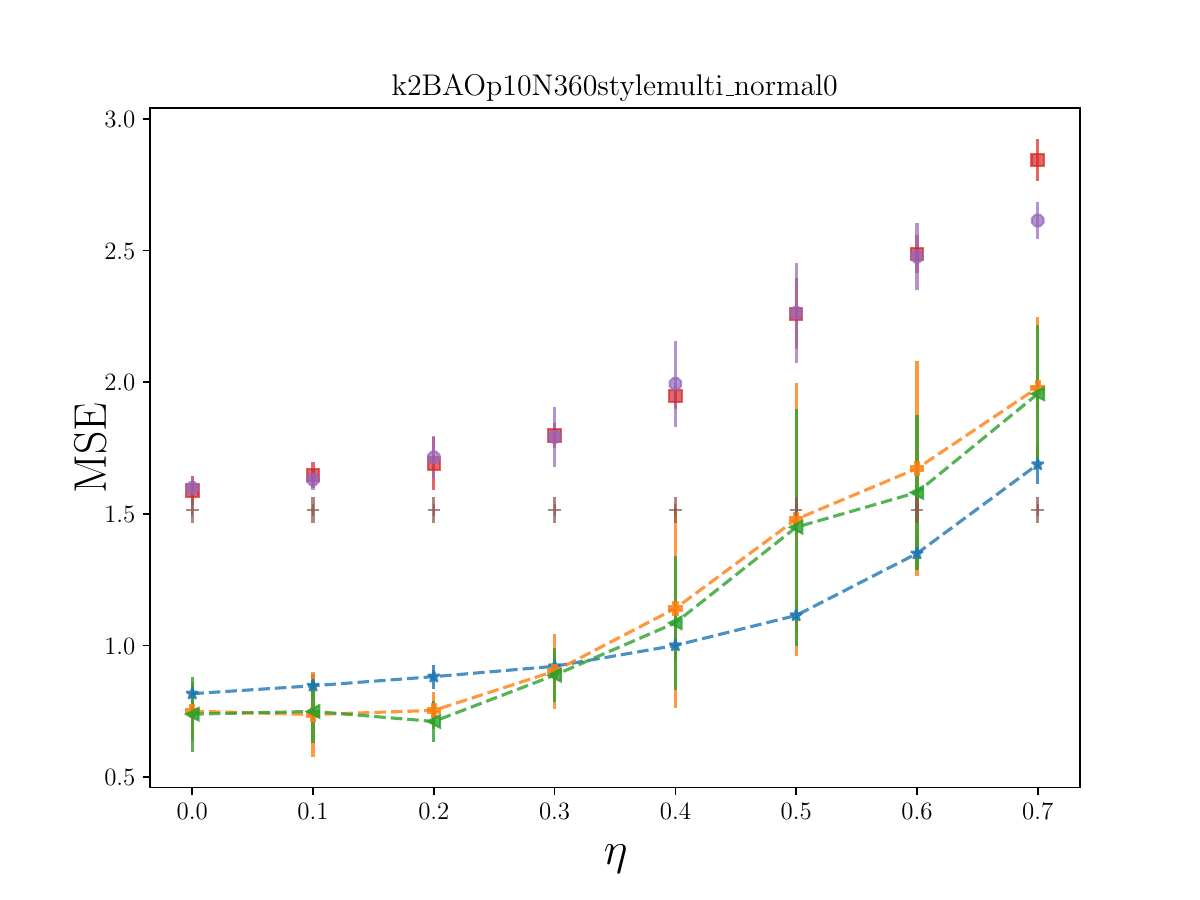}
    \caption{$\text{BAO}_3(p=0.1)$}
  \end{subfigure}
  \begin{subfigure}[ht]{0.245\linewidth}
    \centering
    \includegraphics[width=\linewidth,trim=0cm 0cm 0cm 1.8cm,clip]{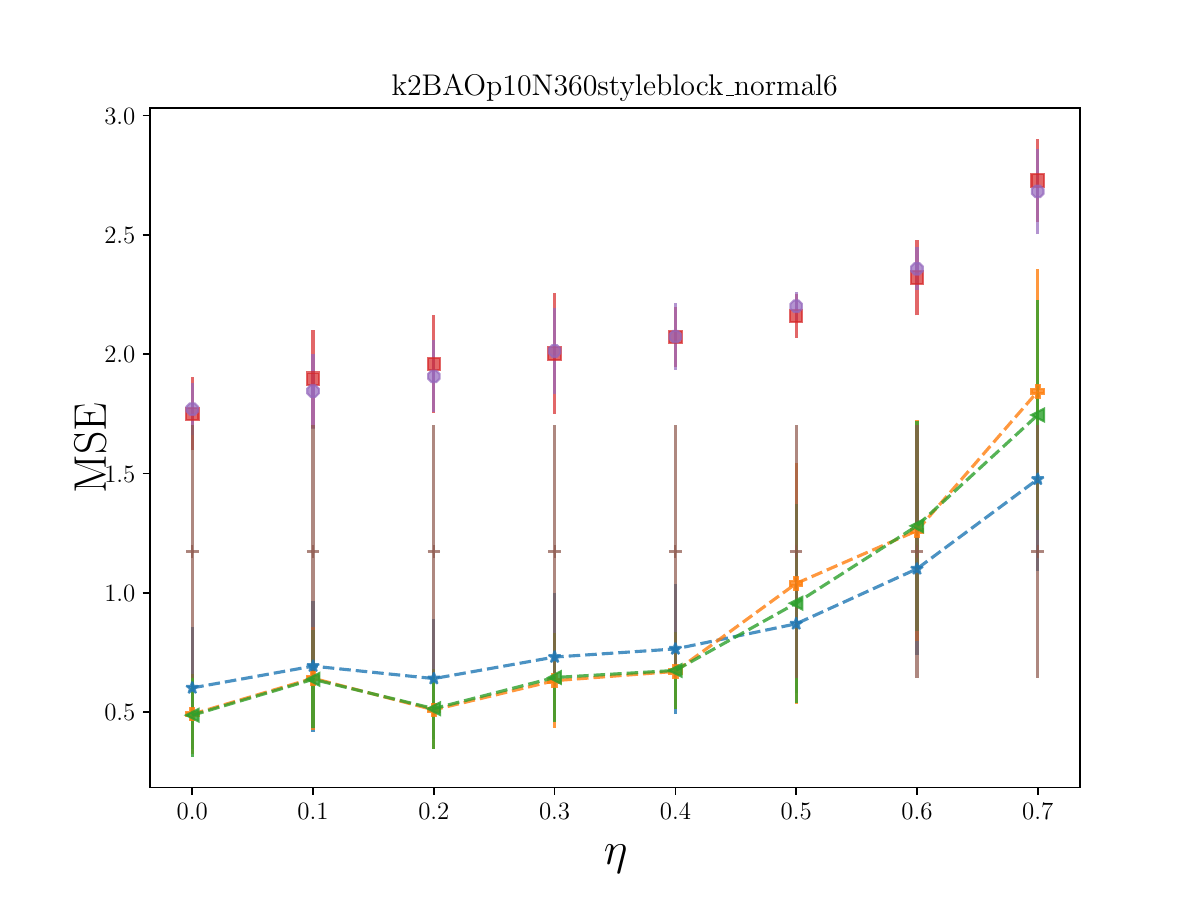}
    \caption{$\text{BAO}_4(p=0.1)$}
  \end{subfigure}
  %%%%%%%%%%%%%%%%%%%%%%%%%%%%%%%%
  \begin{subfigure}[ht]{0.245\linewidth}
    \centering
    \includegraphics[width=\linewidth,trim=0cm 0cm 0cm 1.8cm,clip]{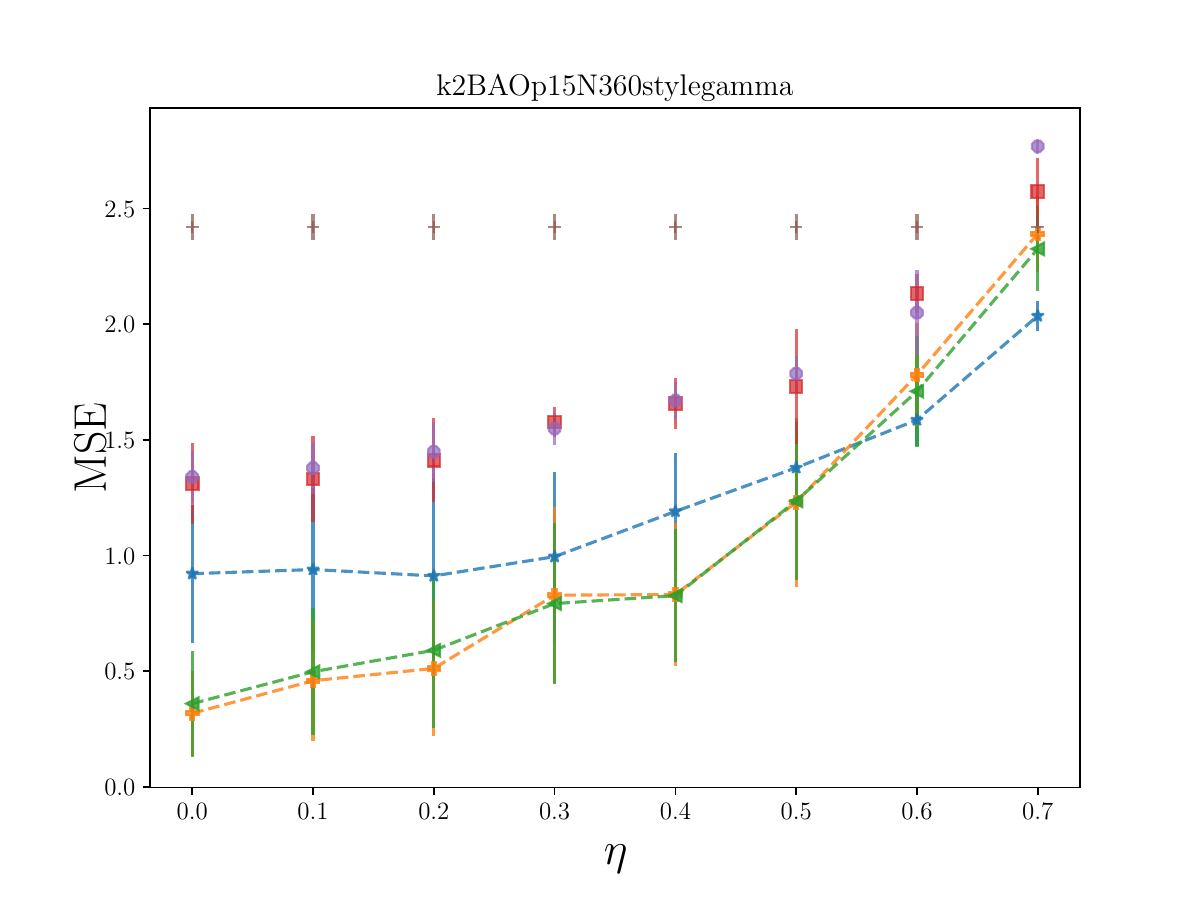}
    \caption{$\text{BAO}_1(p=0.15)$}
  \end{subfigure}
  \begin{subfigure}[ht]{0.245\linewidth}
    \centering
    \includegraphics[width=\linewidth,trim=0cm 0cm 0cm 1.8cm,clip]{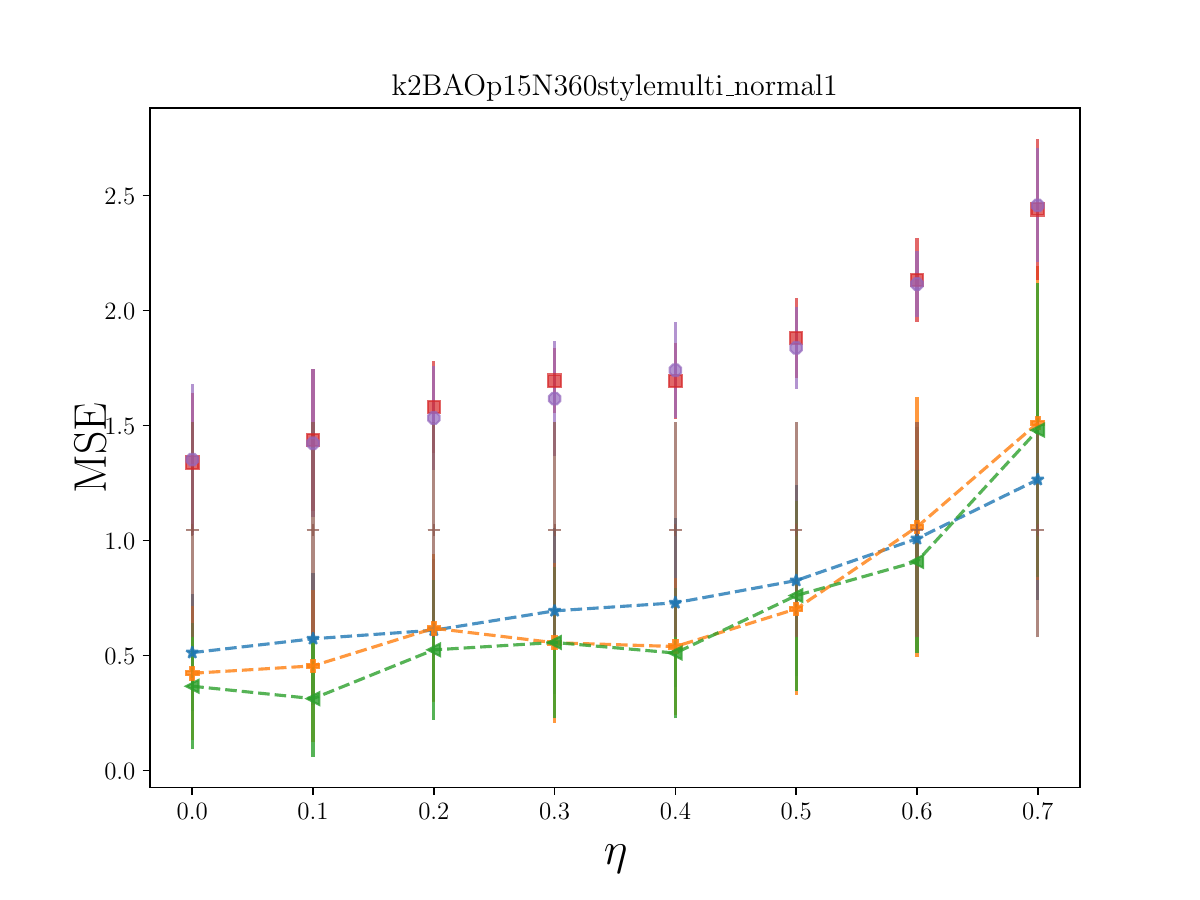}
    \caption{$\text{BAO}_2(p=0.15)$}
  \end{subfigure}
  \begin{subfigure}[ht]{0.245\linewidth}
    \centering
    \includegraphics[width=\linewidth,trim=0cm 0cm 0cm 1.8cm,clip]{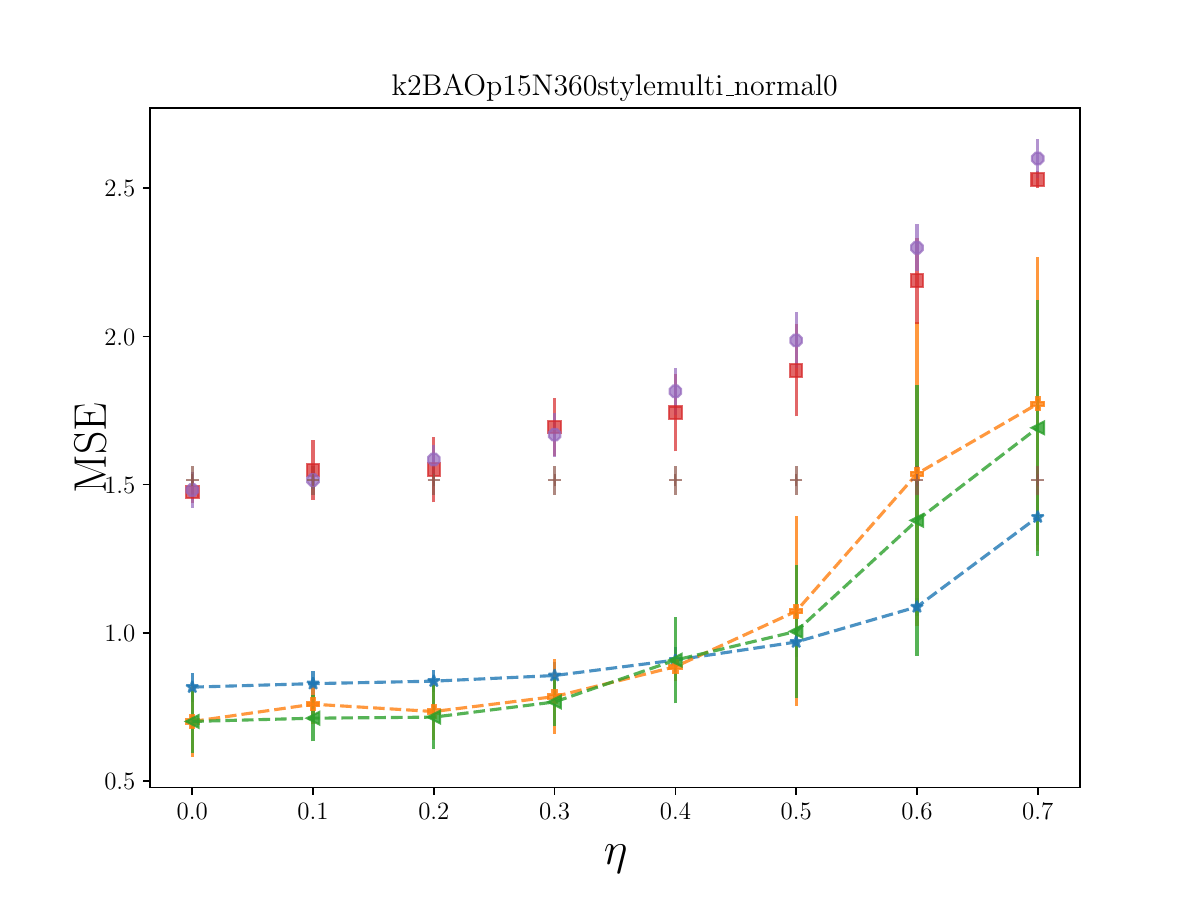}
    \caption{$\text{BAO}_3(p=0.15)$}
  \end{subfigure}
  \begin{subfigure}[ht]{0.245\linewidth}
    \centering
    \includegraphics[width=\linewidth,trim=0cm 0cm 0cm 1.8cm,clip]{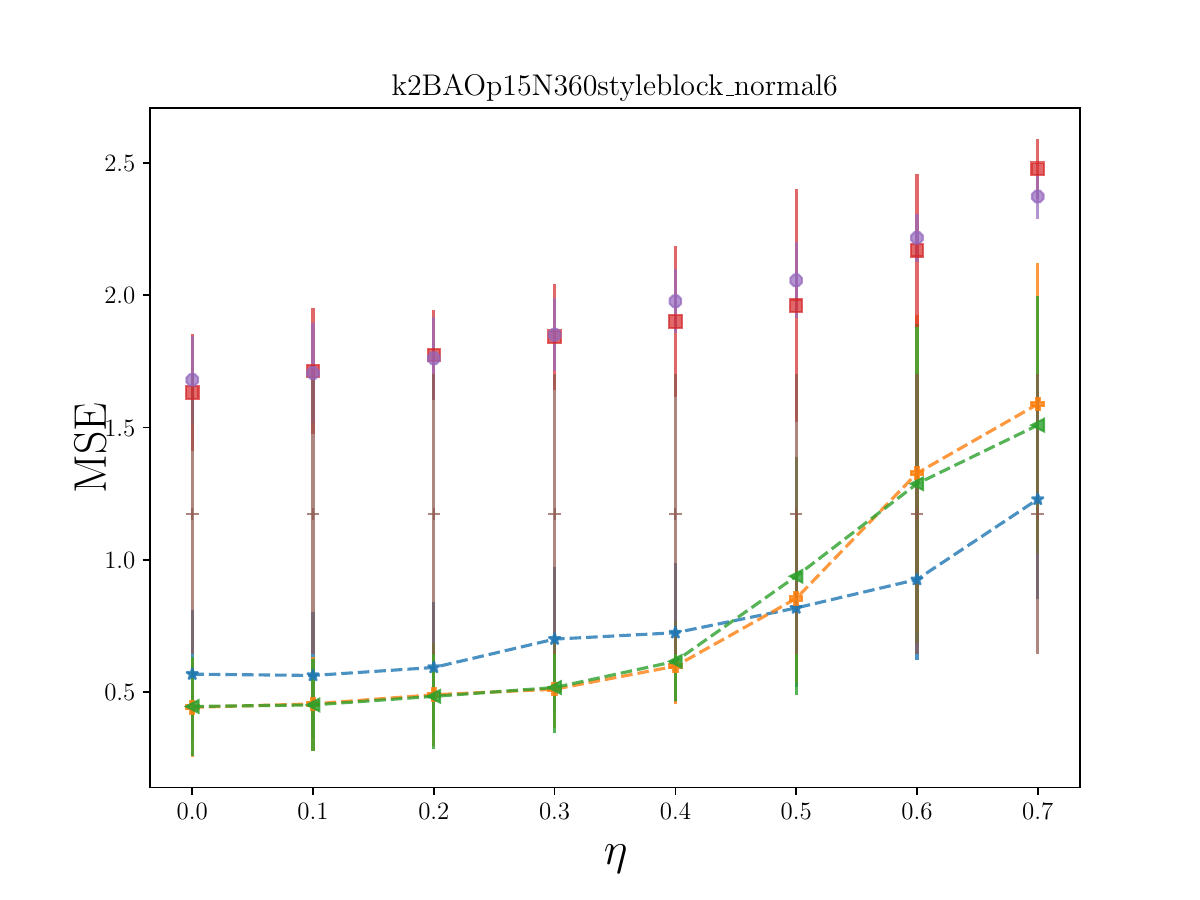}
    \caption{$\text{BAO}_4(p=0.15)$}
  \end{subfigure}
  %%%%%%%%%%%%%%%%%%%%%%%%%%%%%%%%%%%
\begin{subfigure}[ht]{\linewidth}
    \centering
    \includegraphics[width=1.0\linewidth,trim=0cm 0cm 0cm 0cm,clip]{figures/legend_ksync.png}
  \end{subfigure}
    \caption{MSE performance comparison on GNNSync against baselines on $k-$synchronization with $k=2$ for BAO models. $p$ is the network density and $\eta$ is the noise level. Error bars indicate one standard deviation. 
    }
    \label{fig:main_k2_BAO}
\end{figure*}

\begin{figure*}[!hbt]
    \centering
  \begin{subfigure}[ht]{0.245\linewidth}
    \centering
    \includegraphics[width=\linewidth,trim=0cm 0cm 0cm 1.8cm,clip]{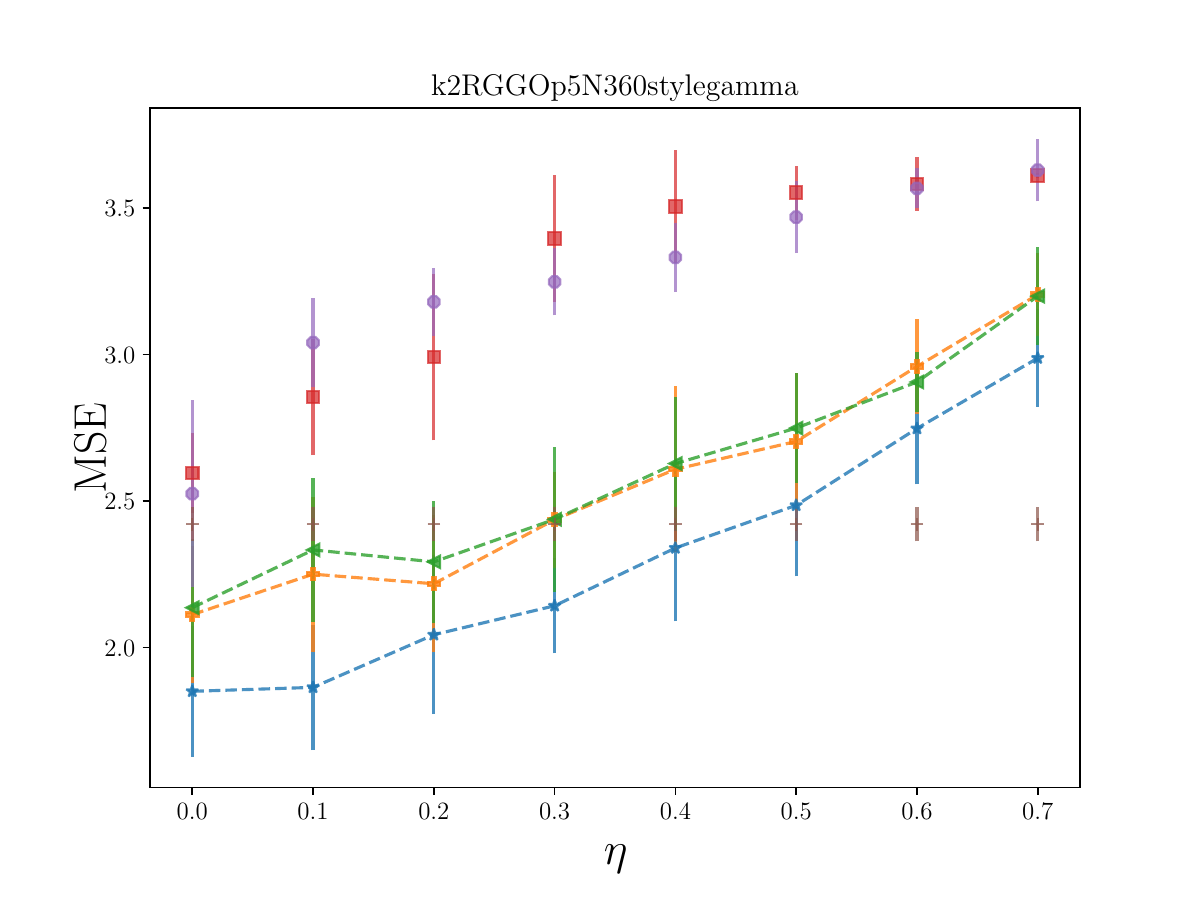}
    \caption{$\text{RGGO}_1(p=0.05)$}
  \end{subfigure}
  \begin{subfigure}[ht]{0.245\linewidth}
    \centering
    \includegraphics[width=\linewidth,trim=0cm 0cm 0cm 1.8cm,clip]{figures/k2RGGOp5N360stylemulti_normal1.pdf}
    \caption{$\text{RGGO}_2(p=0.05)$}
  \end{subfigure}
  \begin{subfigure}[ht]{0.245\linewidth}
    \centering
    \includegraphics[width=\linewidth,trim=0cm 0cm 0cm 1.8cm,clip]{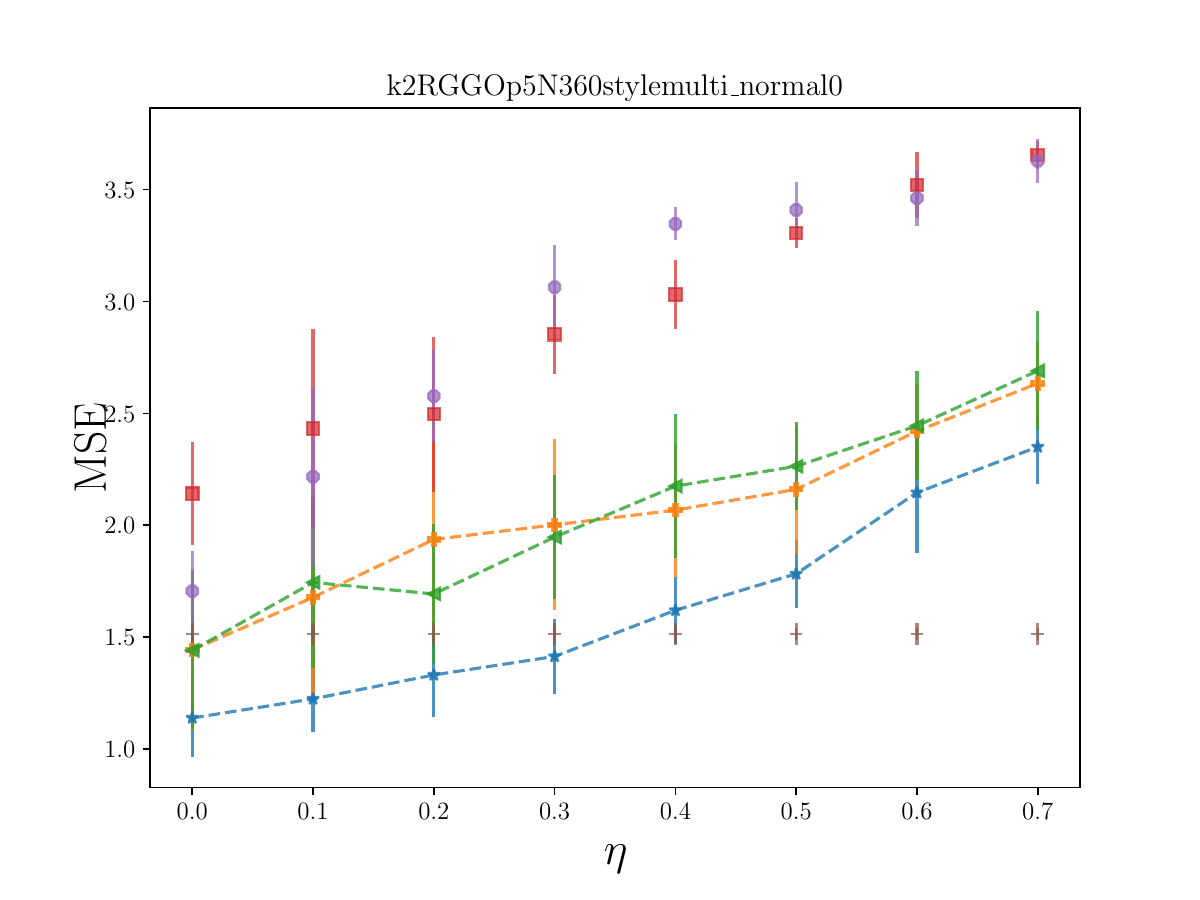}
    \caption{$\text{RGGO}_3(p=0.05)$}
  \end{subfigure}
  \begin{subfigure}[ht]{0.245\linewidth}
    \centering
    \includegraphics[width=\linewidth,trim=0cm 0cm 0cm 1.8cm,clip]{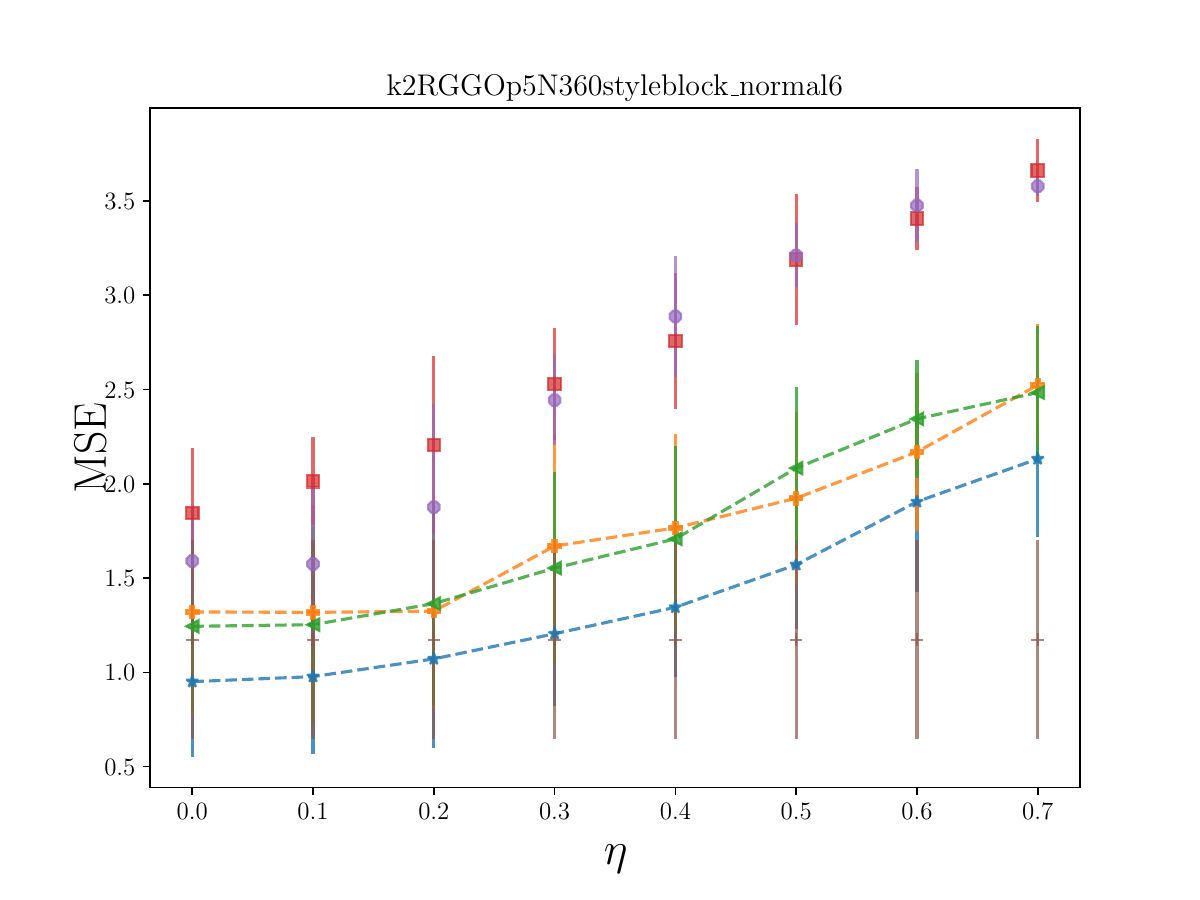}
    \caption{$\text{RGGO}_4(p=0.05)$}
  \end{subfigure}
  %%%%%%%%%%%%%%%%%%%%%%%%%%%%
  \begin{subfigure}[ht]{0.245\linewidth}
    \centering
    \includegraphics[width=\linewidth,trim=0cm 0cm 0cm 1.8cm,clip]{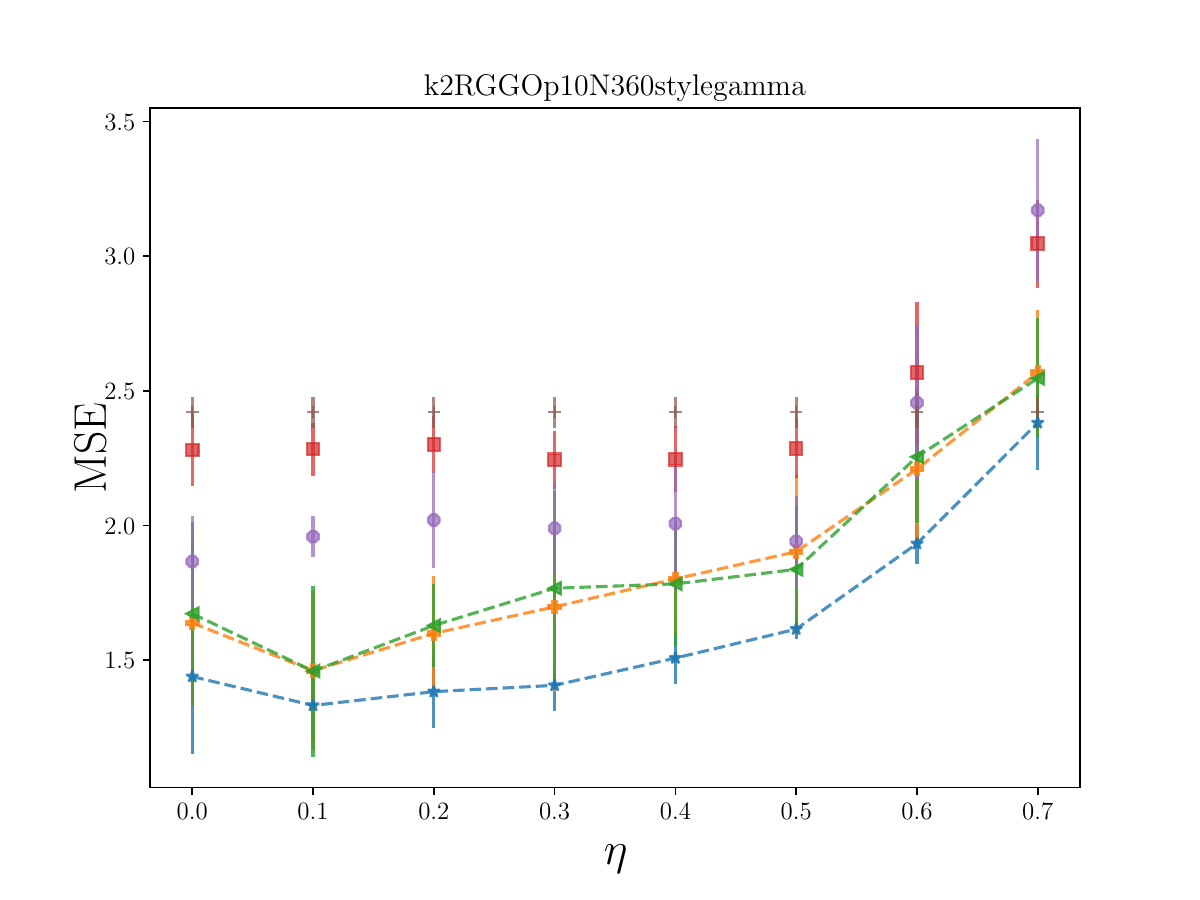}
    \caption{$\text{RGGO}_1(p=0.1)$}
  \end{subfigure}
  \begin{subfigure}[ht]{0.245\linewidth}
    \centering
    \includegraphics[width=\linewidth,trim=0cm 0cm 0cm 1.8cm,clip]{figures/k2RGGOp10N360stylemulti_normal1.pdf}
    \caption{$\text{RGGO}_2(p=0.1)$}
  \end{subfigure}
  \begin{subfigure}[ht]{0.245\linewidth}
    \centering
    \includegraphics[width=\linewidth,trim=0cm 0cm 0cm 1.8cm,clip]{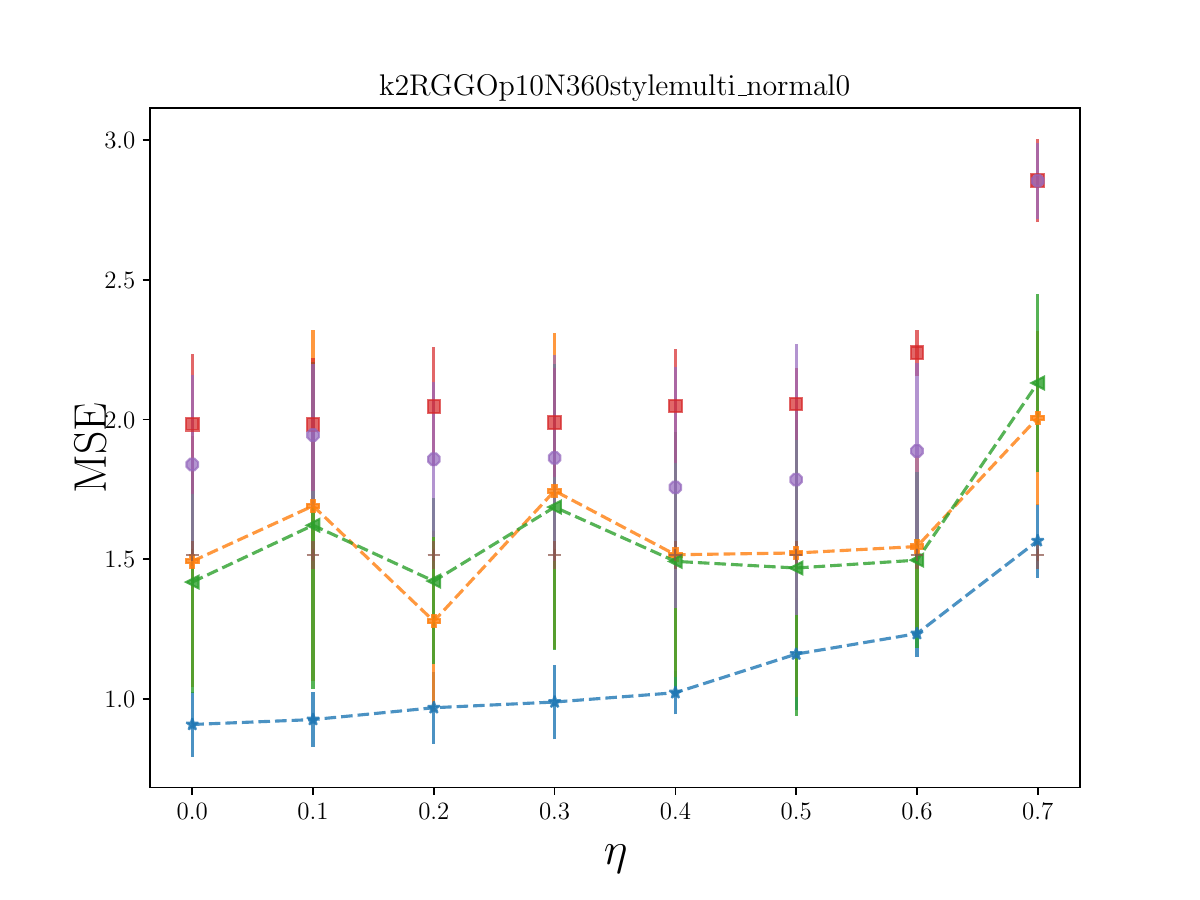}
    \caption{$\text{RGGO}_3(p=0.1)$}
  \end{subfigure}
  \begin{subfigure}[ht]{0.245\linewidth}
    \centering
    \includegraphics[width=\linewidth,trim=0cm 0cm 0cm 1.8cm,clip]{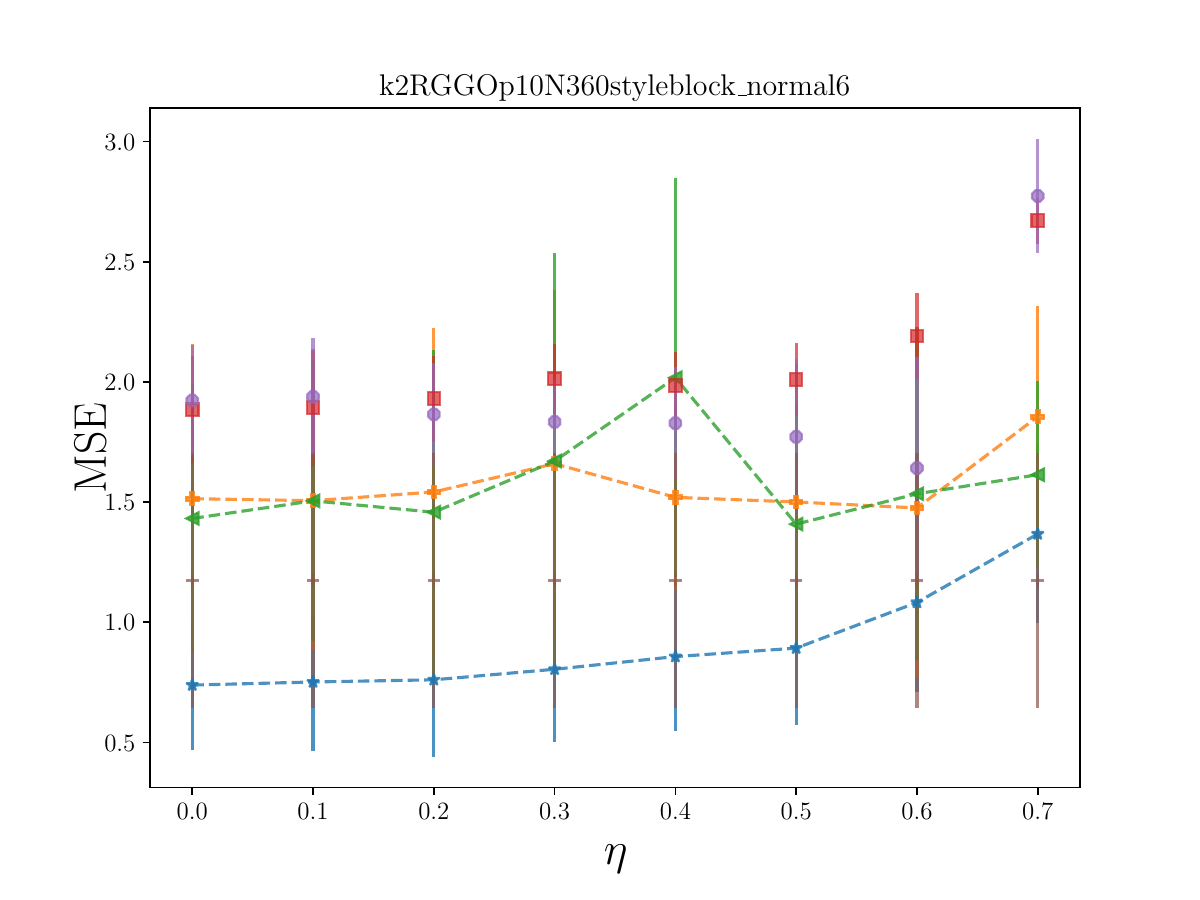}
    \caption{$\text{RGGO}_4(p=0.1)$}
  \end{subfigure}
  %%%%%%%%%%%%%%%%%%%%%%%%%%%%%%%%
  \begin{subfigure}[ht]{0.245\linewidth}
    \centering
    \includegraphics[width=\linewidth,trim=0cm 0cm 0cm 1.8cm,clip]{figures/k2RGGOp15N360stylegamma.pdf}
    \caption{$\text{RGGO}_1(p=0.15)$}
  \end{subfigure}
  \begin{subfigure}[ht]{0.245\linewidth}
    \centering
    \includegraphics[width=\linewidth,trim=0cm 0cm 0cm 1.8cm,clip]{figures/k2RGGOp15N360stylemulti_normal1.pdf}
    \caption{$\text{RGGO}_2(p=0.15)$}
  \end{subfigure}
  \begin{subfigure}[ht]{0.245\linewidth}
    \centering
    \includegraphics[width=\linewidth,trim=0cm 0cm 0cm 1.8cm,clip]{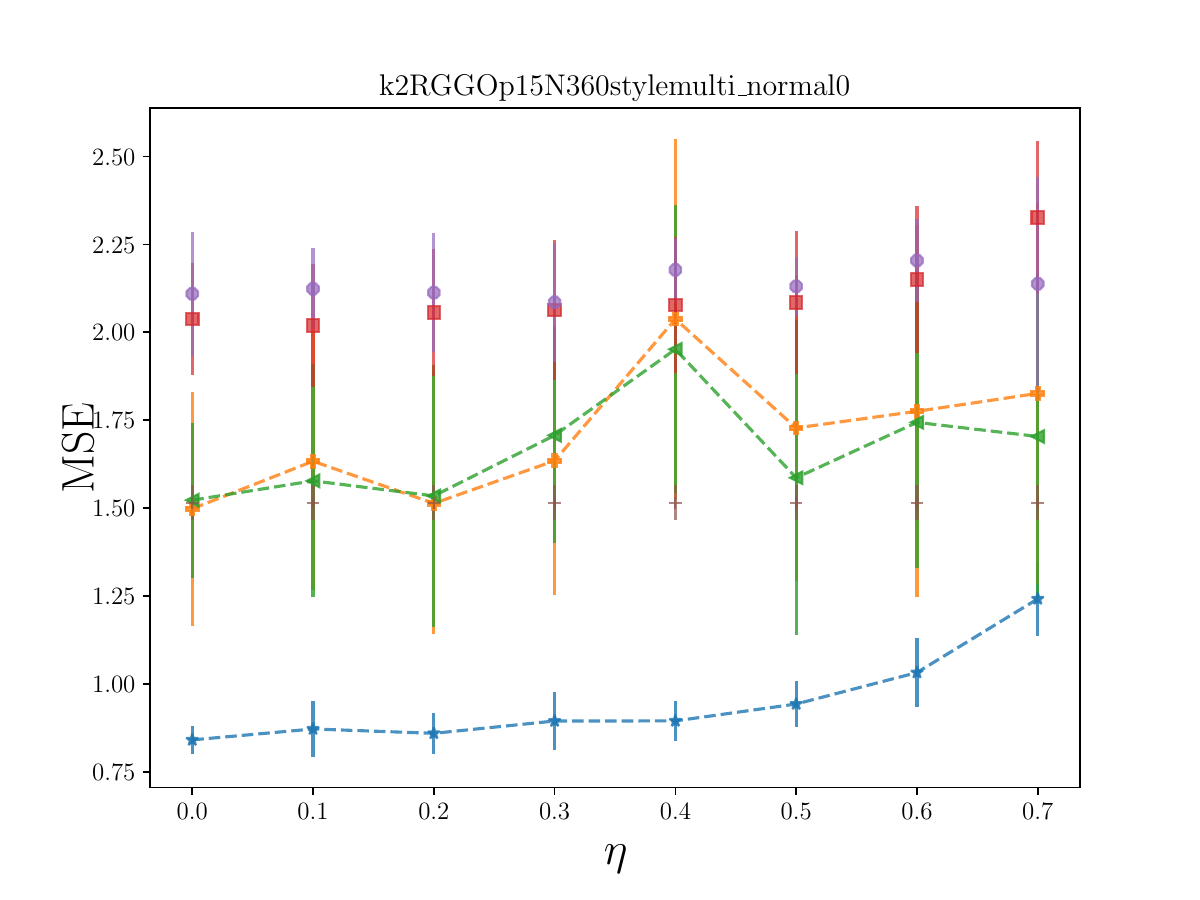}
    \caption{$\text{RGGO}_3(p=0.15)$}
  \end{subfigure}
  \begin{subfigure}[ht]{0.245\linewidth}
    \centering
    \includegraphics[width=\linewidth,trim=0cm 0cm 0cm 1.8cm,clip]{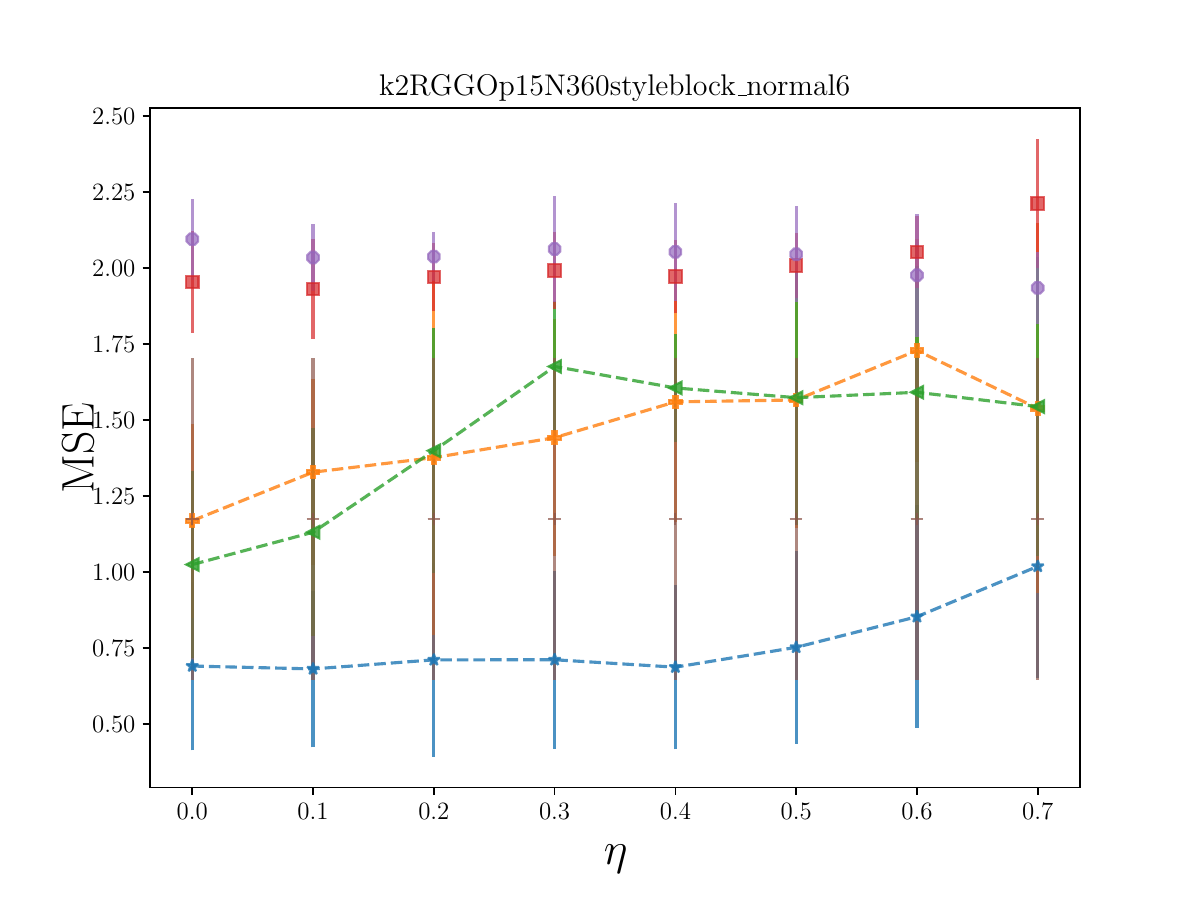}
    \caption{$\text{RGGO}_4(p=0.15)$}
  \end{subfigure}
  %%%%%%%%%%%%%%%%%%%%%%%%%%%%%%%%%%%
\begin{subfigure}[ht]{\linewidth}
    \centering
    \includegraphics[width=1.0\linewidth,trim=0cm 0cm 0cm 0cm,clip]{figures/legend_ksync.png}
  \end{subfigure}
    \caption{MSE performance comparison on GNNSync against baselines on $k-$synchronization with $k=2$ for RGGO models. $p$ is the network density and $\eta$ is the noise level. Error bars indicate one standard deviation. 
    }
    \label{fig:main_k2_RGGO}
\end{figure*}

\begin{figure*}[!hbt]
    \centering
  \begin{subfigure}[ht]{0.245\linewidth}
    \centering
    \includegraphics[width=\linewidth,trim=0cm 0cm 0cm 1.8cm,clip]{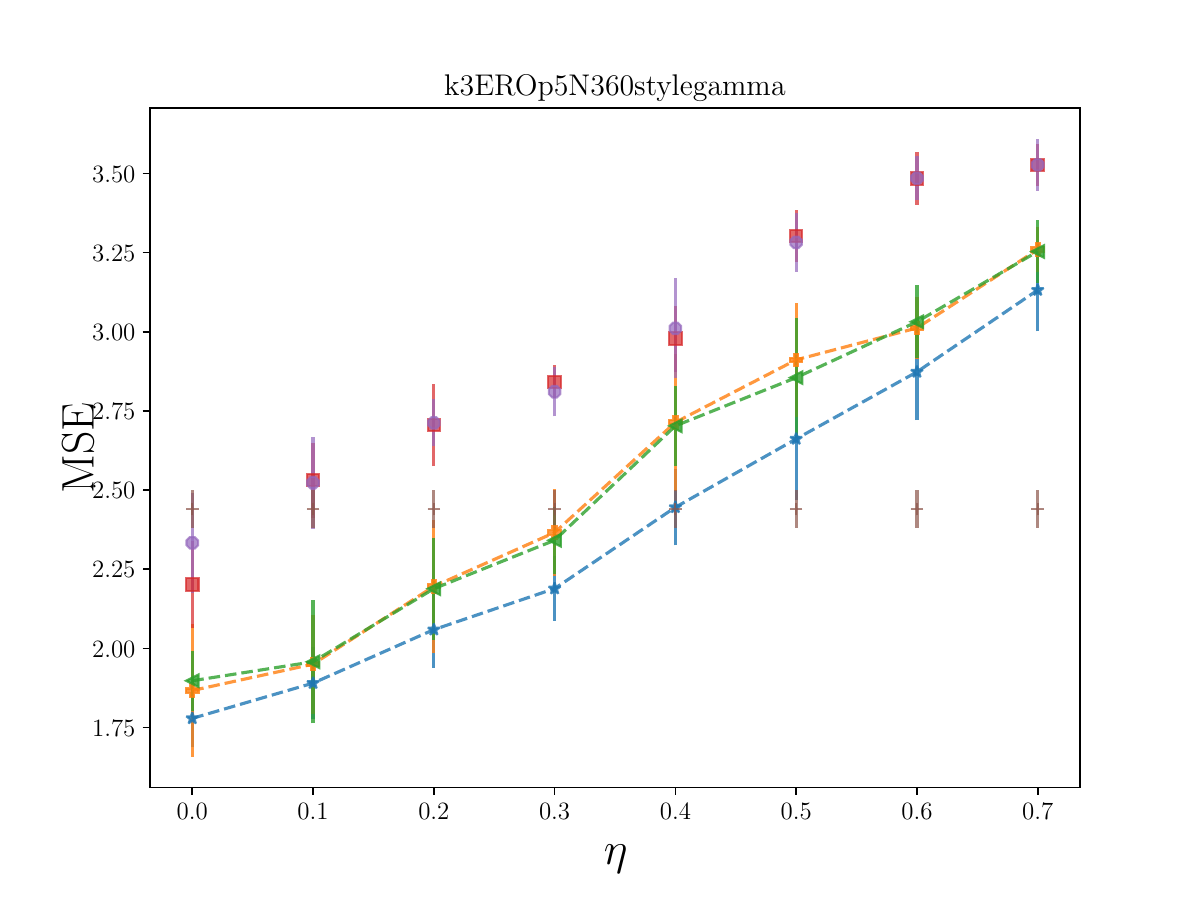}
    \caption{$\text{ERO}_1(p=0.05)$}
  \end{subfigure}
  \begin{subfigure}[ht]{0.245\linewidth}
    \centering
    \includegraphics[width=\linewidth,trim=0cm 0cm 0cm 1.8cm,clip]{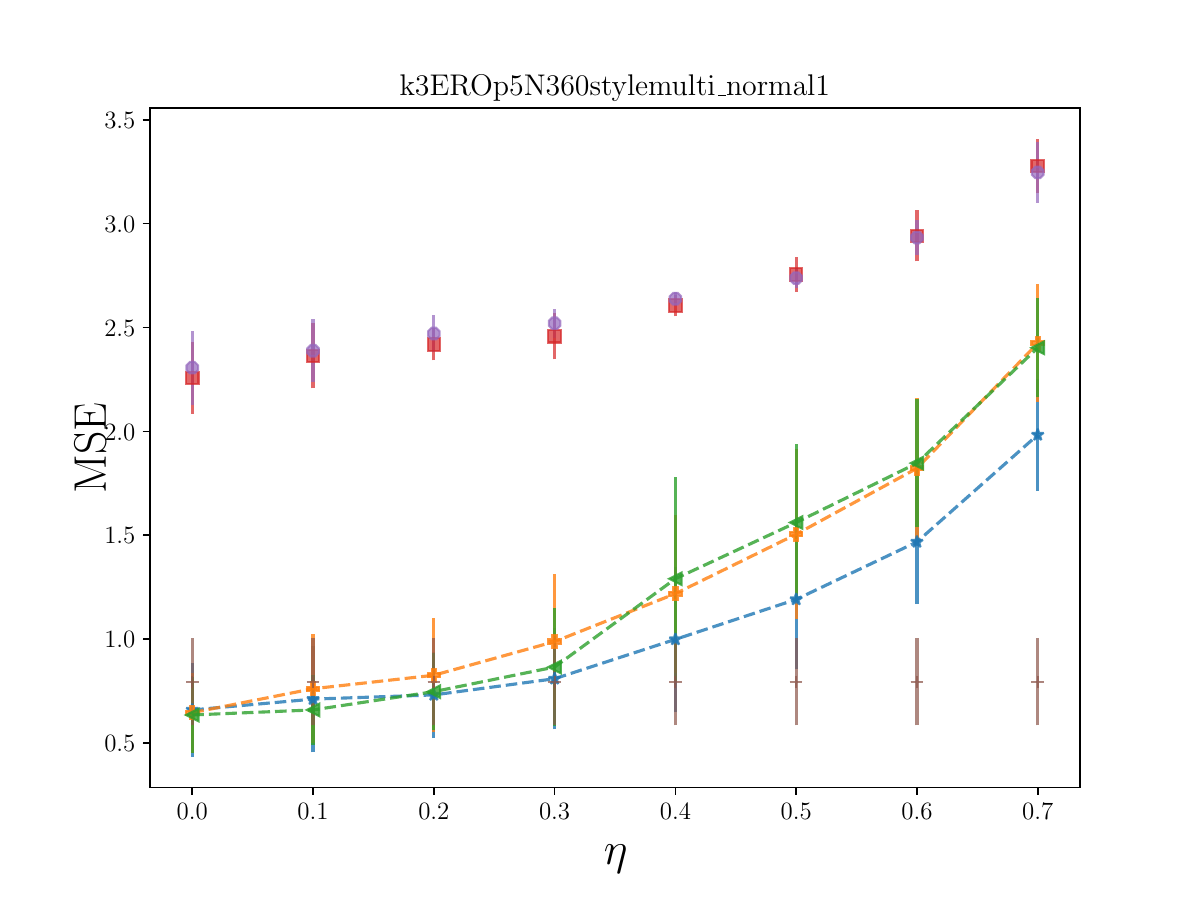}
    \caption{$\text{ERO}_2(p=0.05)$}
  \end{subfigure}
  \begin{subfigure}[ht]{0.245\linewidth}
    \centering
    \includegraphics[width=\linewidth,trim=0cm 0cm 0cm 1.8cm,clip]{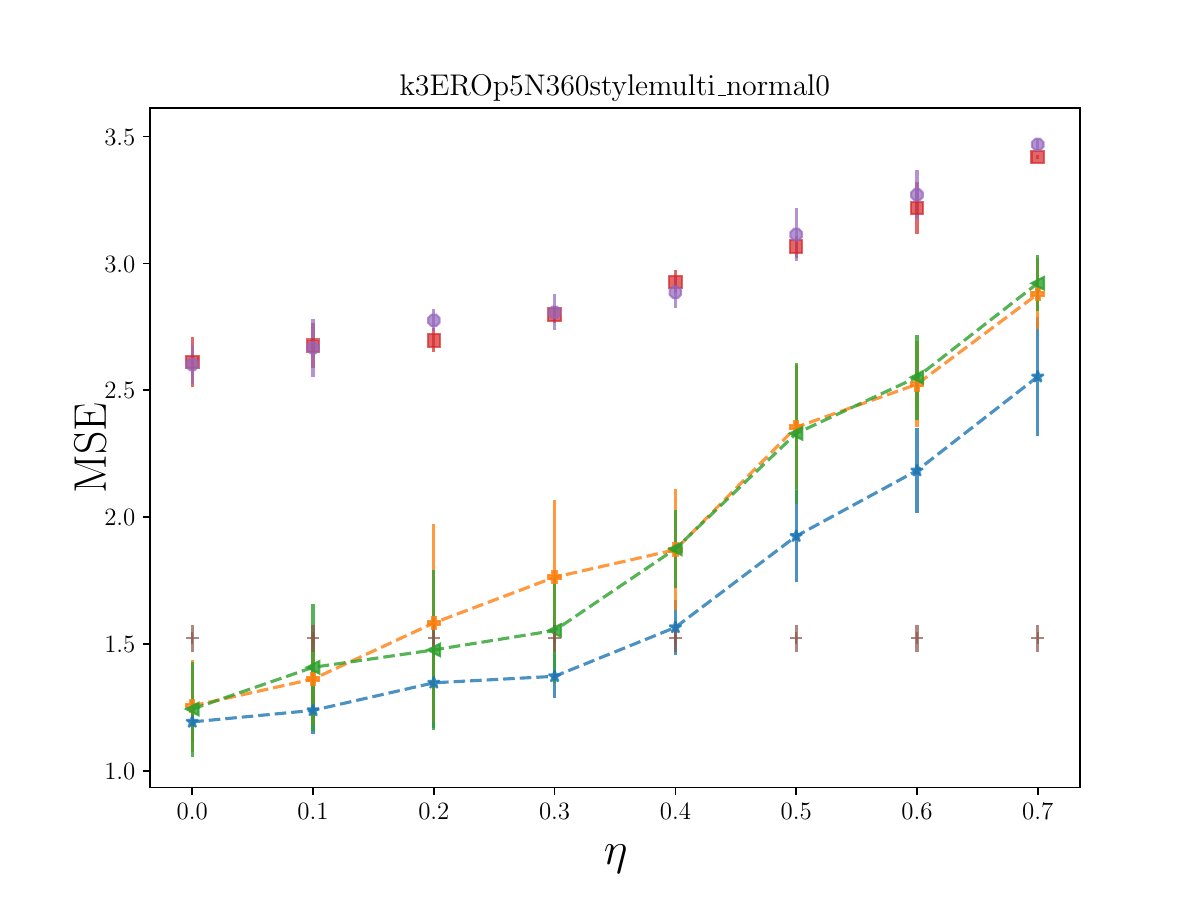}
    \caption{$\text{ERO}_3(p=0.05)$}
  \end{subfigure}
  \begin{subfigure}[ht]{0.245\linewidth}
    \centering
    \includegraphics[width=\linewidth,trim=0cm 0cm 0cm 1.8cm,clip]{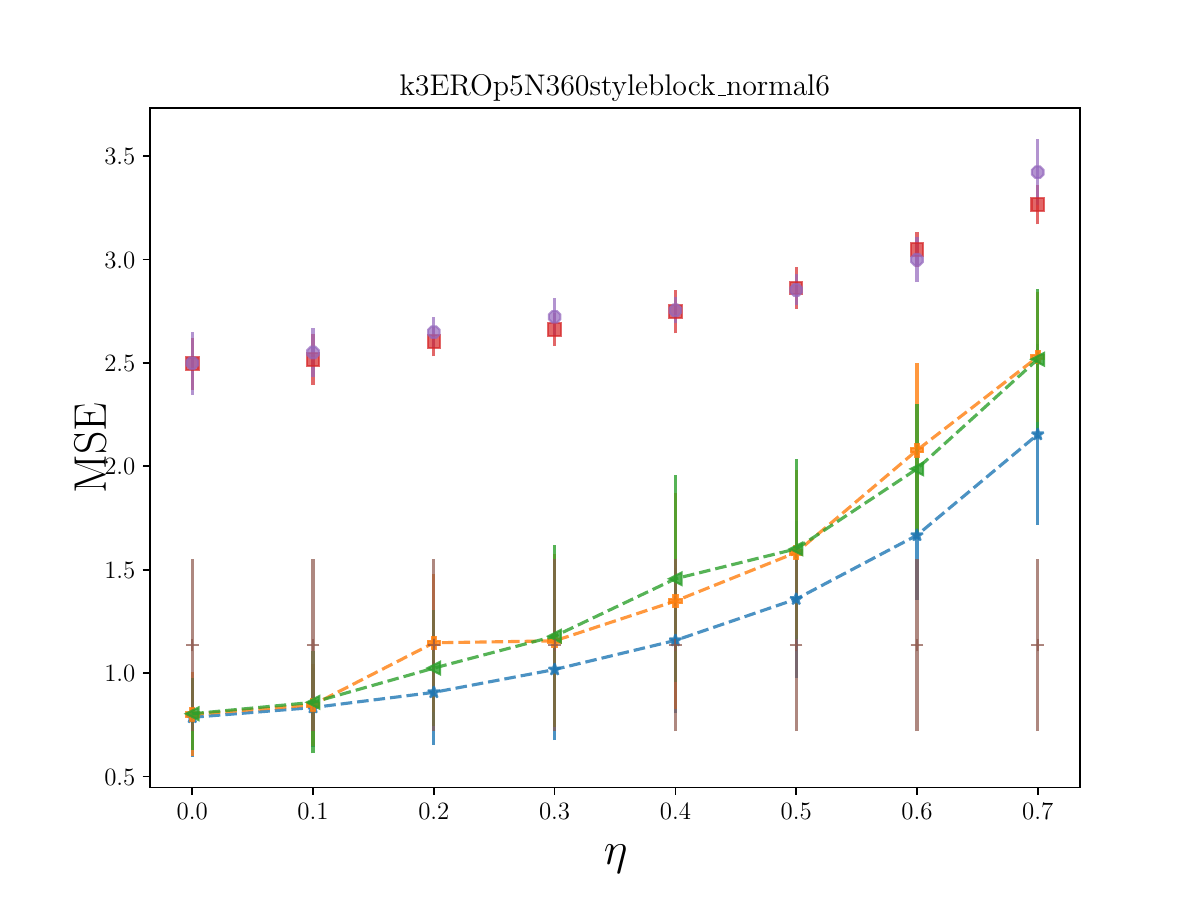}
    \caption{$\text{ERO}_4(p=0.05)$}
  \end{subfigure}
  %%%%%%%%%%%%%%%%%%%%%%%%%%%%
  \begin{subfigure}[ht]{0.245\linewidth}
    \centering
    \includegraphics[width=\linewidth,trim=0cm 0cm 0cm 1.8cm,clip]{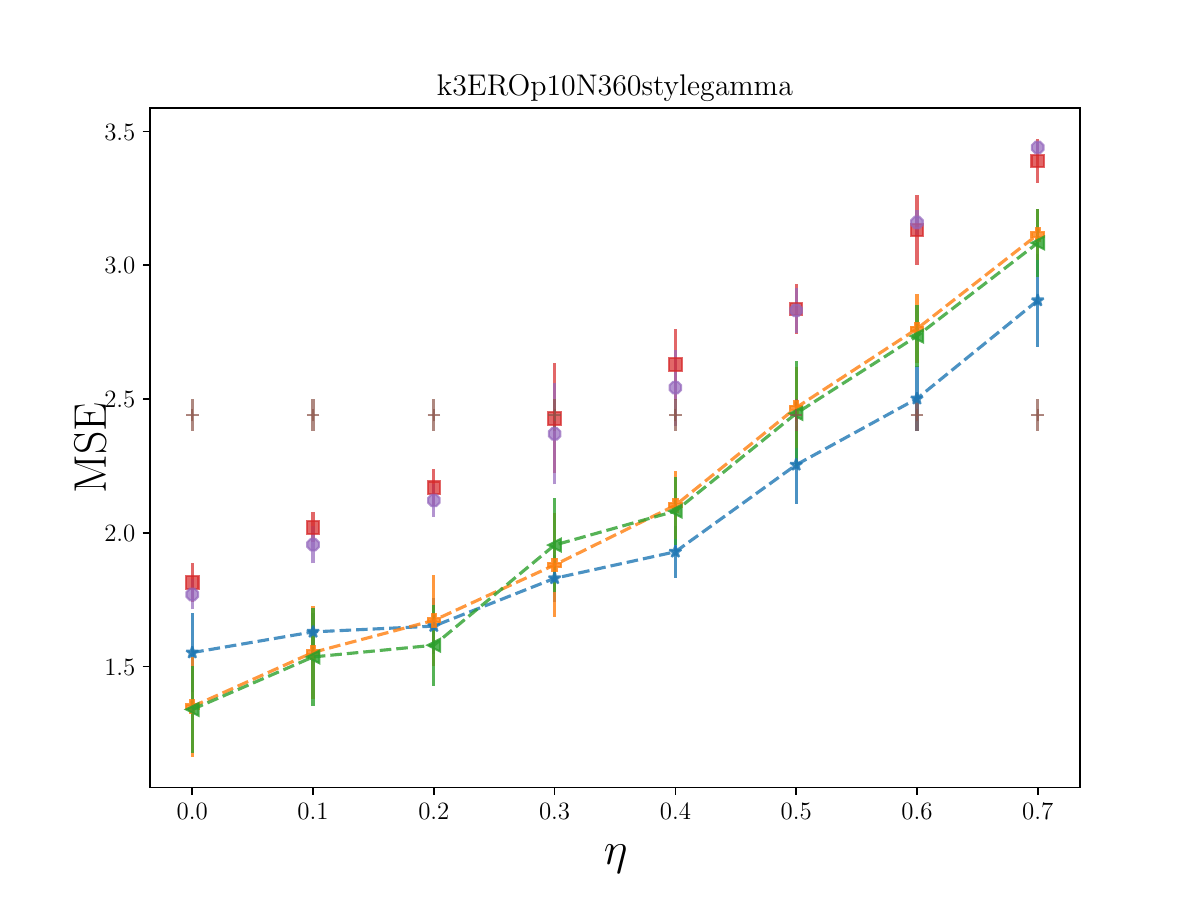}
    \caption{$\text{ERO}_1(p=0.1)$}
  \end{subfigure}
  \begin{subfigure}[ht]{0.245\linewidth}
    \centering
    \includegraphics[width=\linewidth,trim=0cm 0cm 0cm 1.8cm,clip]{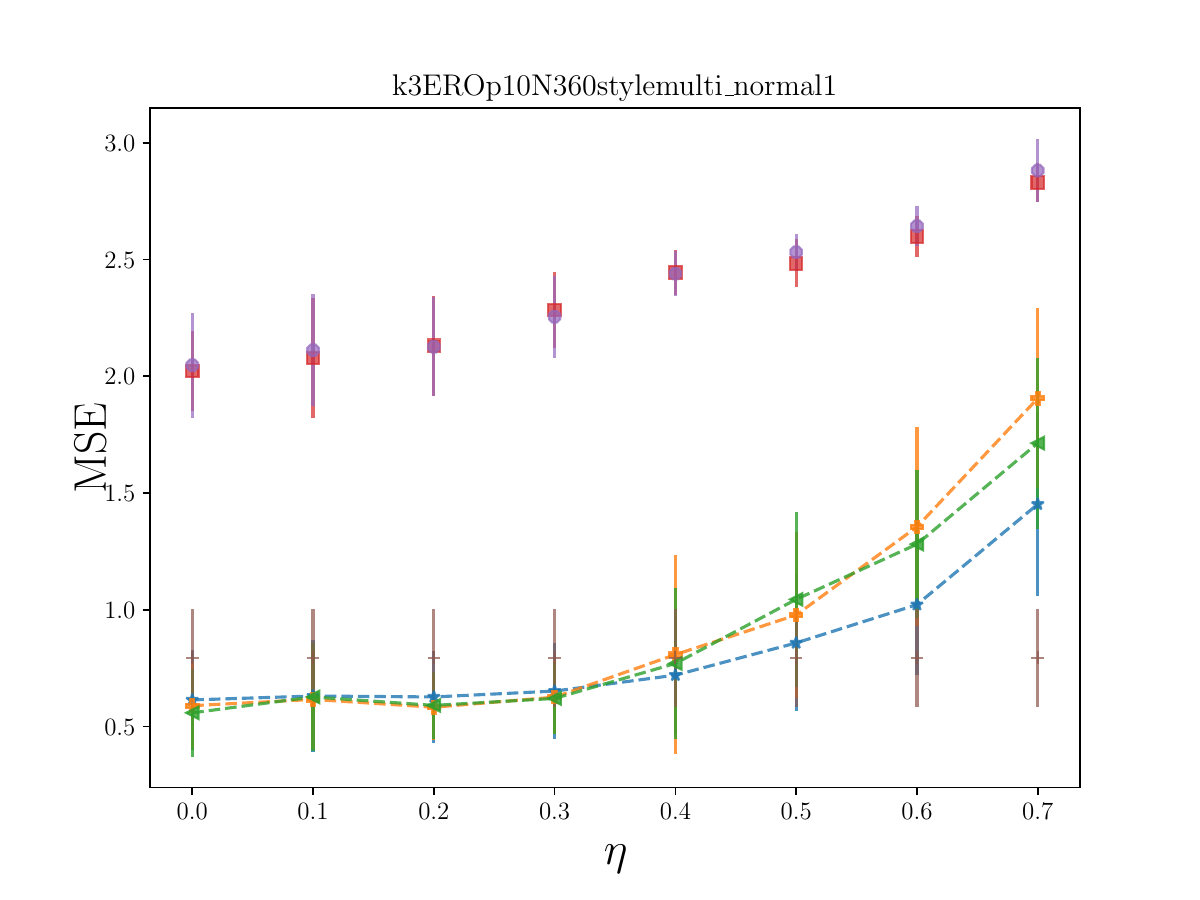}
    \caption{$\text{ERO}_2(p=0.1)$}
  \end{subfigure}
  \begin{subfigure}[ht]{0.245\linewidth}
    \centering
    \includegraphics[width=\linewidth,trim=0cm 0cm 0cm 1.8cm,clip]{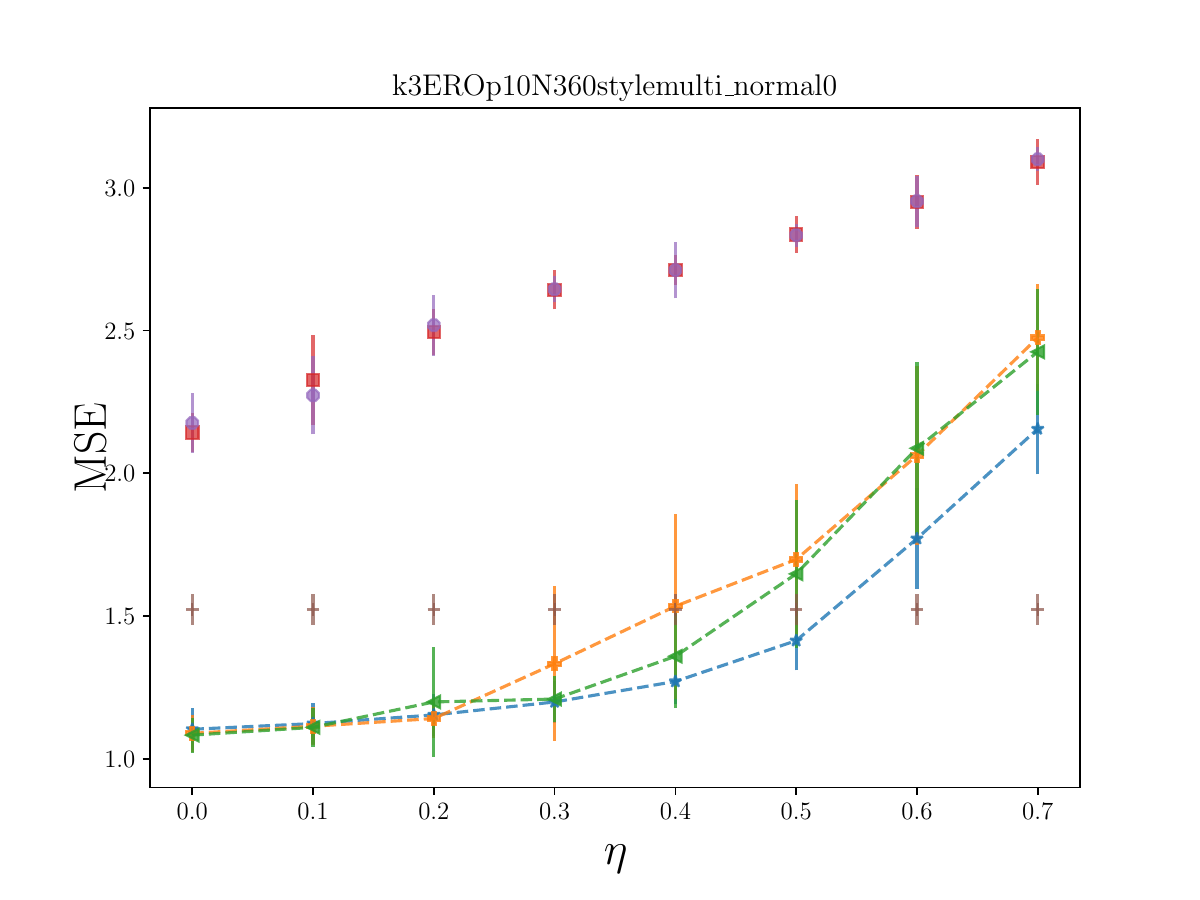}
    \caption{$\text{ERO}_3(p=0.1)$}
  \end{subfigure}
  \begin{subfigure}[ht]{0.245\linewidth}
    \centering
    \includegraphics[width=\linewidth,trim=0cm 0cm 0cm 1.8cm,clip]{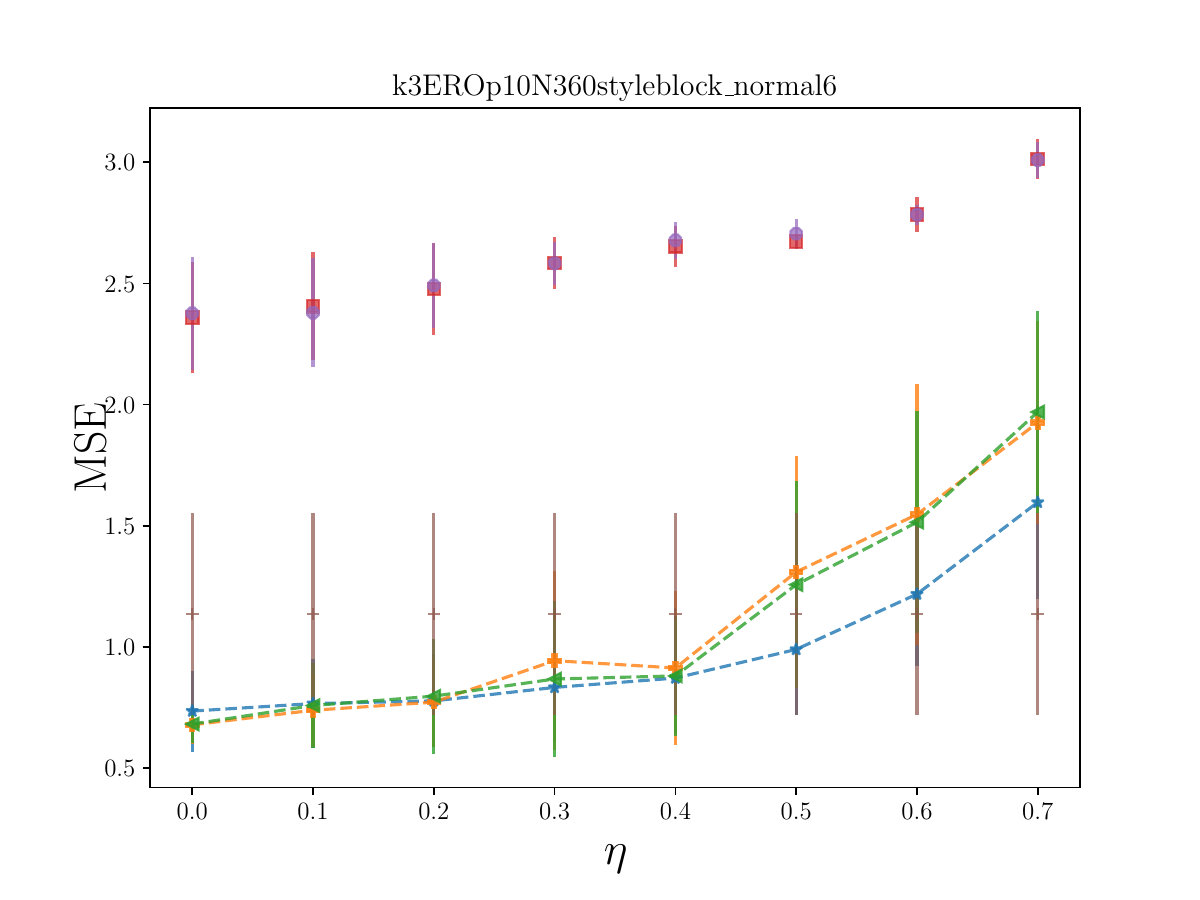}
    \caption{$\text{ERO}_4(p=0.1)$}
  \end{subfigure}
  %%%%%%%%%%%%%%%%%%%%%%%%%%%%%%%%
  \begin{subfigure}[ht]{0.245\linewidth}
    \centering
    \includegraphics[width=\linewidth,trim=0cm 0cm 0cm 1.8cm,clip]{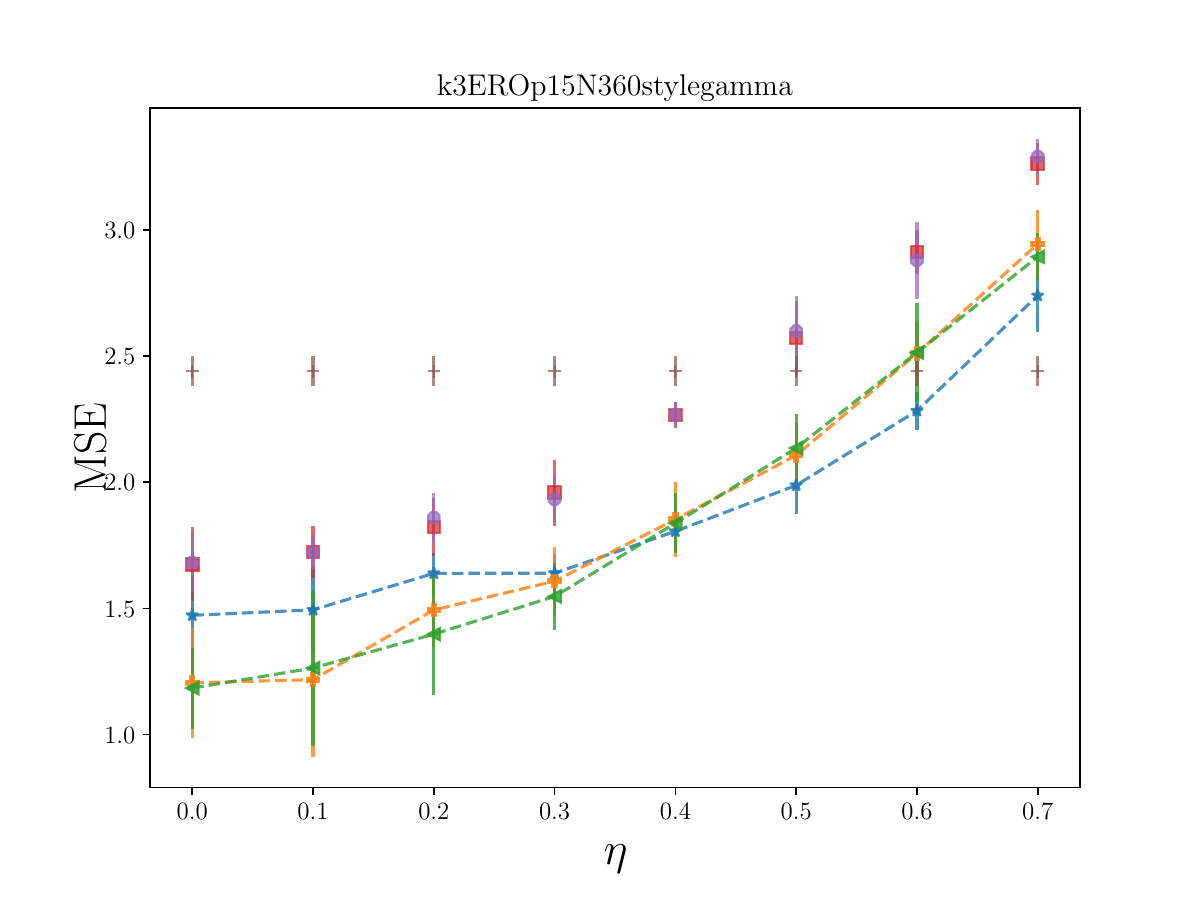}
    \caption{$\text{ERO}_1(p=0.15)$}
  \end{subfigure}
  \begin{subfigure}[ht]{0.245\linewidth}
    \centering
    \includegraphics[width=\linewidth,trim=0cm 0cm 0cm 1.8cm,clip]{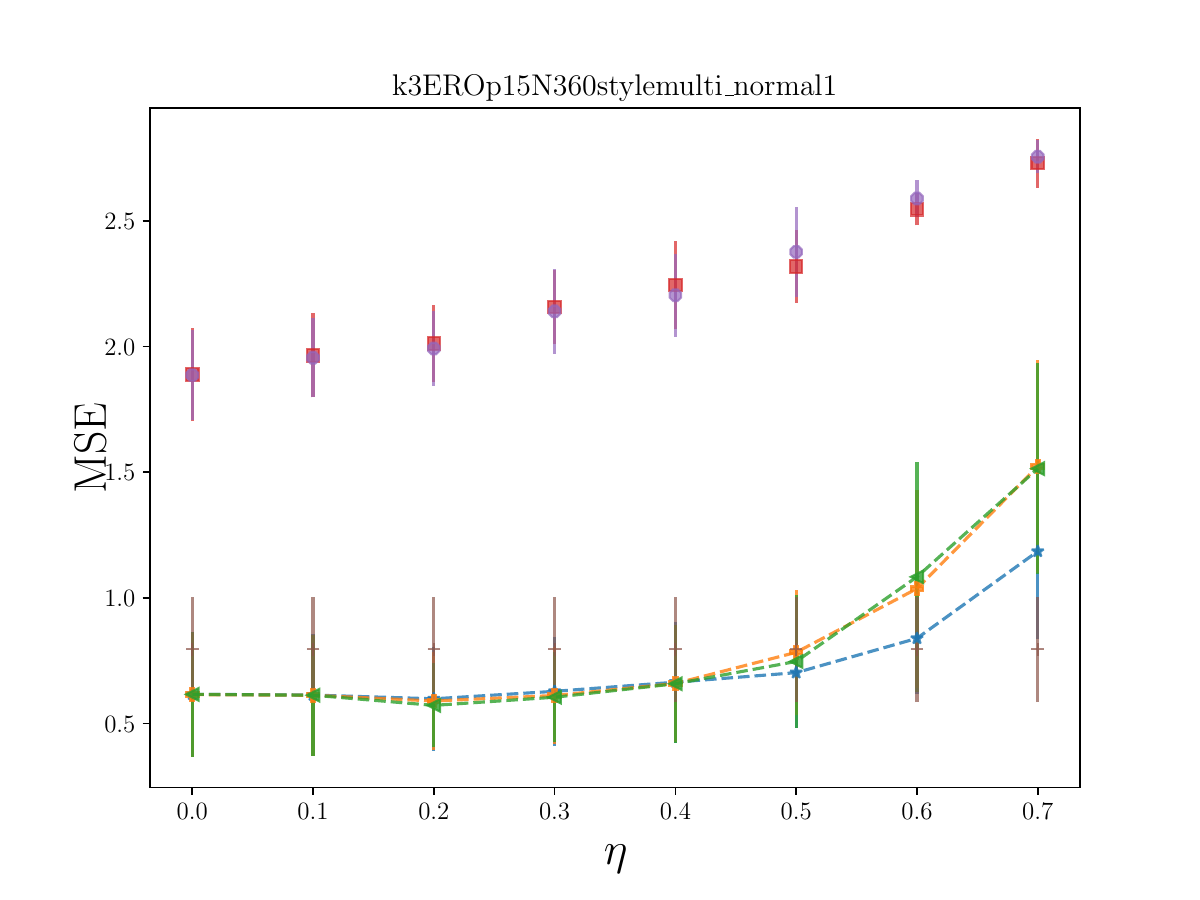}
    \caption{$\text{ERO}_2(p=0.15)$}
  \end{subfigure}
  \begin{subfigure}[ht]{0.245\linewidth}
    \centering
    \includegraphics[width=\linewidth,trim=0cm 0cm 0cm 1.8cm,clip]{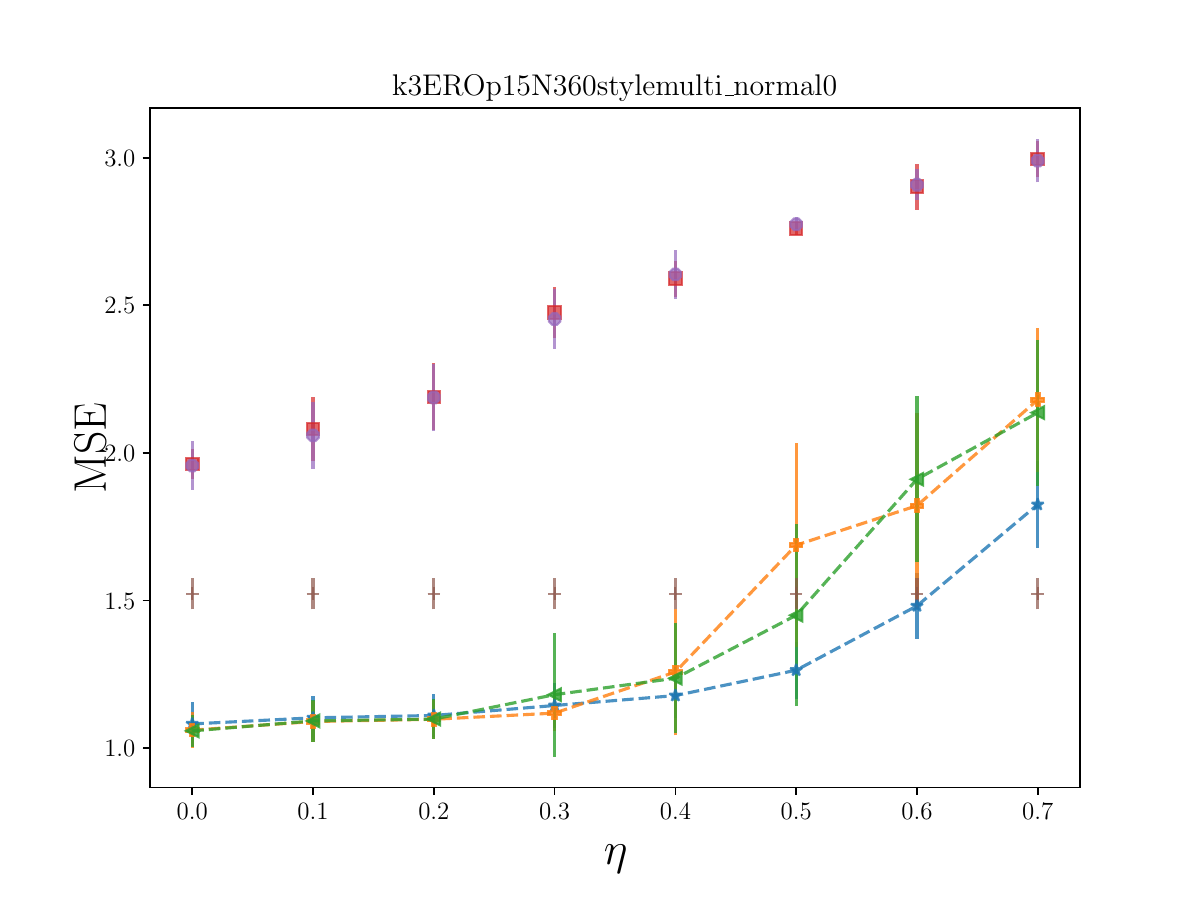}
    \caption{$\text{ERO}_3(p=0.15)$}
  \end{subfigure}
  \begin{subfigure}[ht]{0.245\linewidth}
    \centering
    \includegraphics[width=\linewidth,trim=0cm 0cm 0cm 1.8cm,clip]{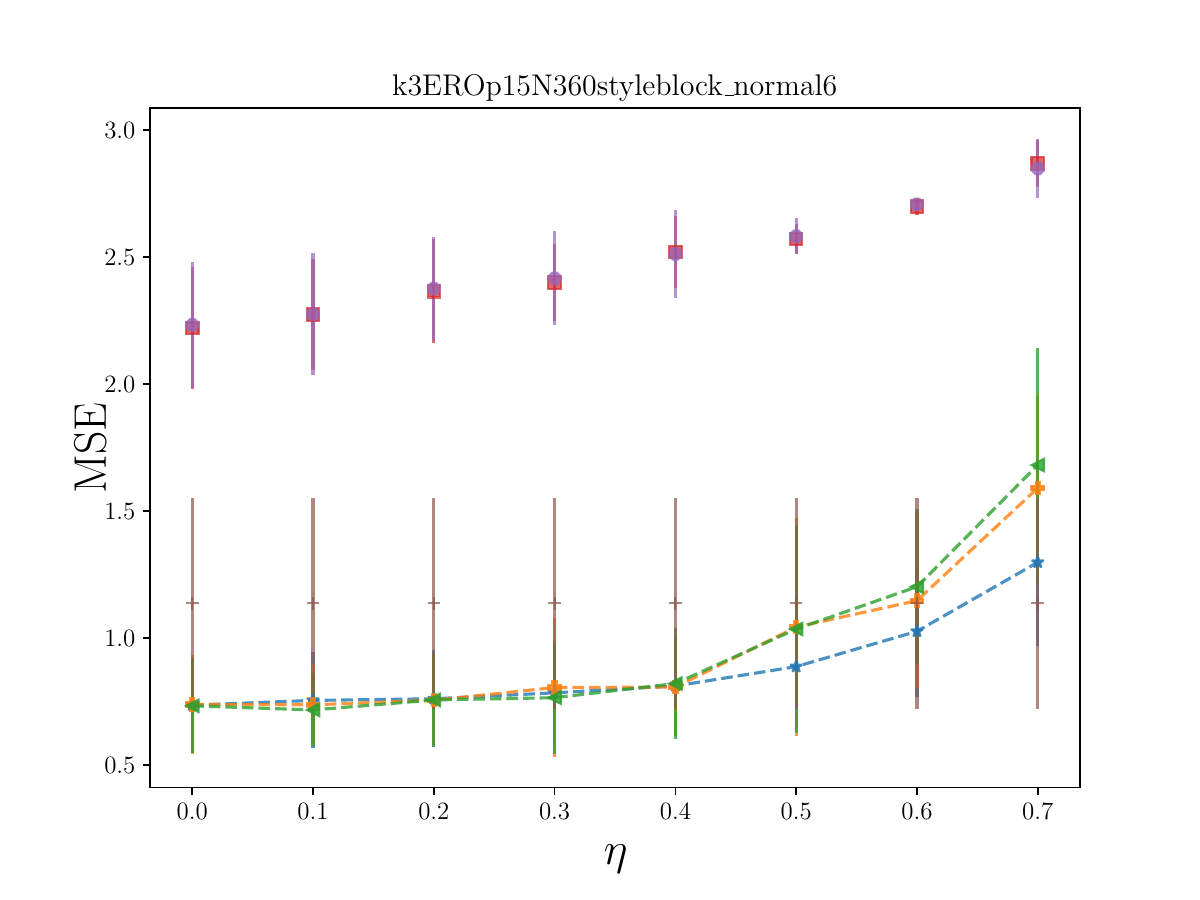}
    \caption{$\text{ERO}_4(p=0.15)$}
  \end{subfigure}
  %%%%%%%%%%%%%%%%%%%%%%%%%%%%%%%%%%%
\begin{subfigure}[ht]{\linewidth}
    \centering
    \includegraphics[width=1.0\linewidth,trim=0cm 0cm 0cm 0cm,clip]{figures/legend_ksync.png}
  \end{subfigure}
    \caption{MSE performance comparison on GNNSync against baselines on $k-$synchronization with $k=3$ for ERO models. $p$ is the network density and $\eta$ is the noise level. $p$ is the network density and $\eta$ is the noise level. Error bars indicate one standard deviation. 
    }
    \label{fig:main_k3_ERO}
\end{figure*}

\begin{figure*}[!hbt]
    \centering
  \begin{subfigure}[ht]{0.245\linewidth}
    \centering
    \includegraphics[width=\linewidth,trim=0cm 0cm 0cm 1.8cm,clip]{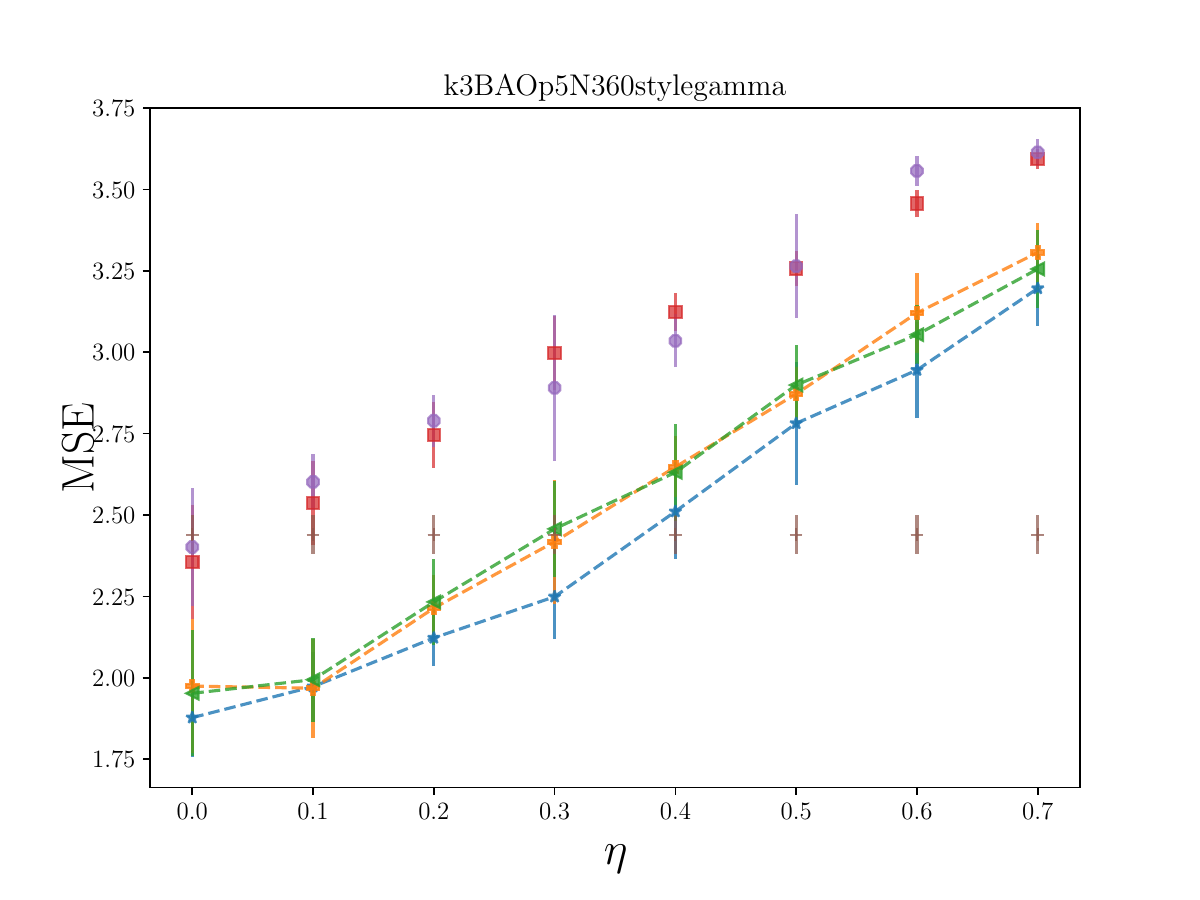}
    \caption{$\text{BAO}_1(p=0.05)$}
  \end{subfigure}
  \begin{subfigure}[ht]{0.245\linewidth}
    \centering
    \includegraphics[width=\linewidth,trim=0cm 0cm 0cm 1.8cm,clip]{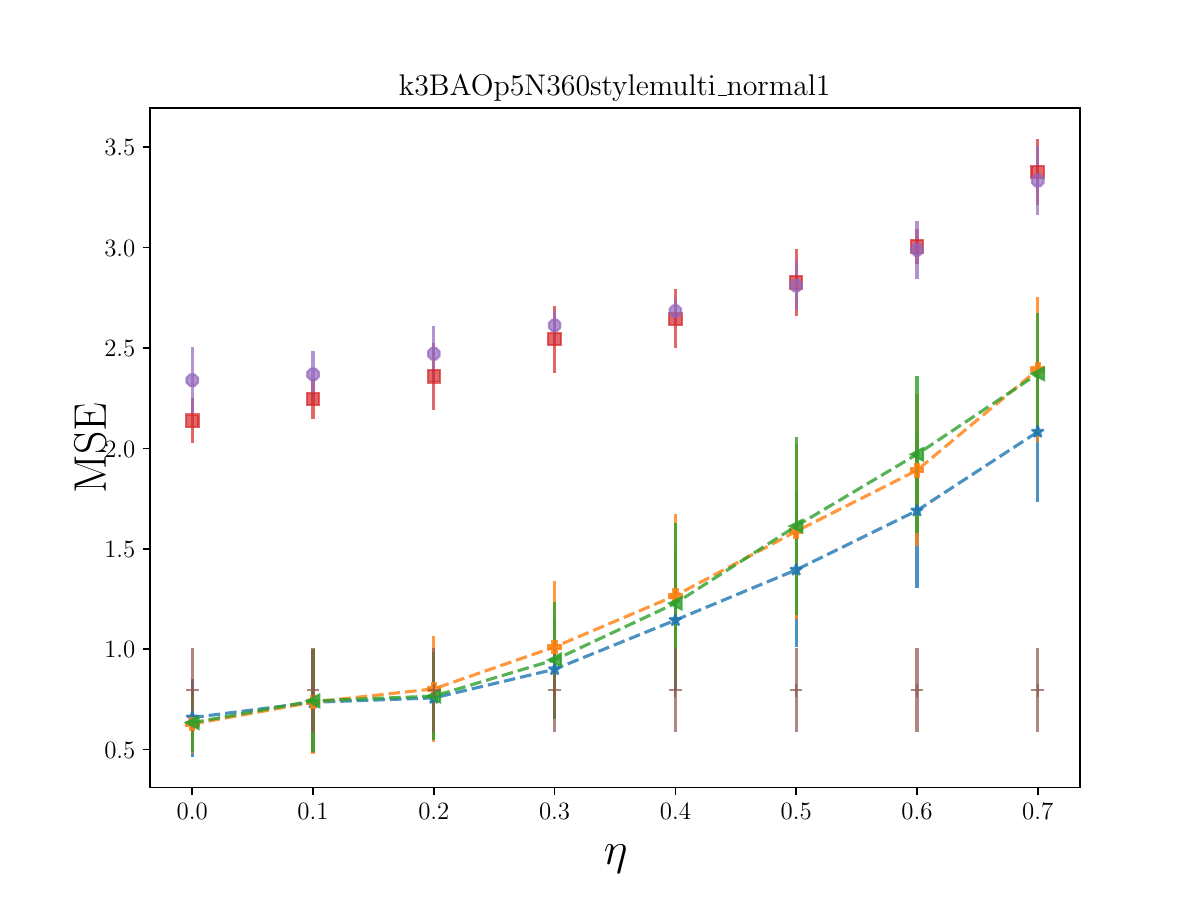}
    \caption{$\text{BAO}_2(p=0.05)$}
  \end{subfigure}
  \begin{subfigure}[ht]{0.245\linewidth}
    \centering
    \includegraphics[width=\linewidth,trim=0cm 0cm 0cm 1.8cm,clip]{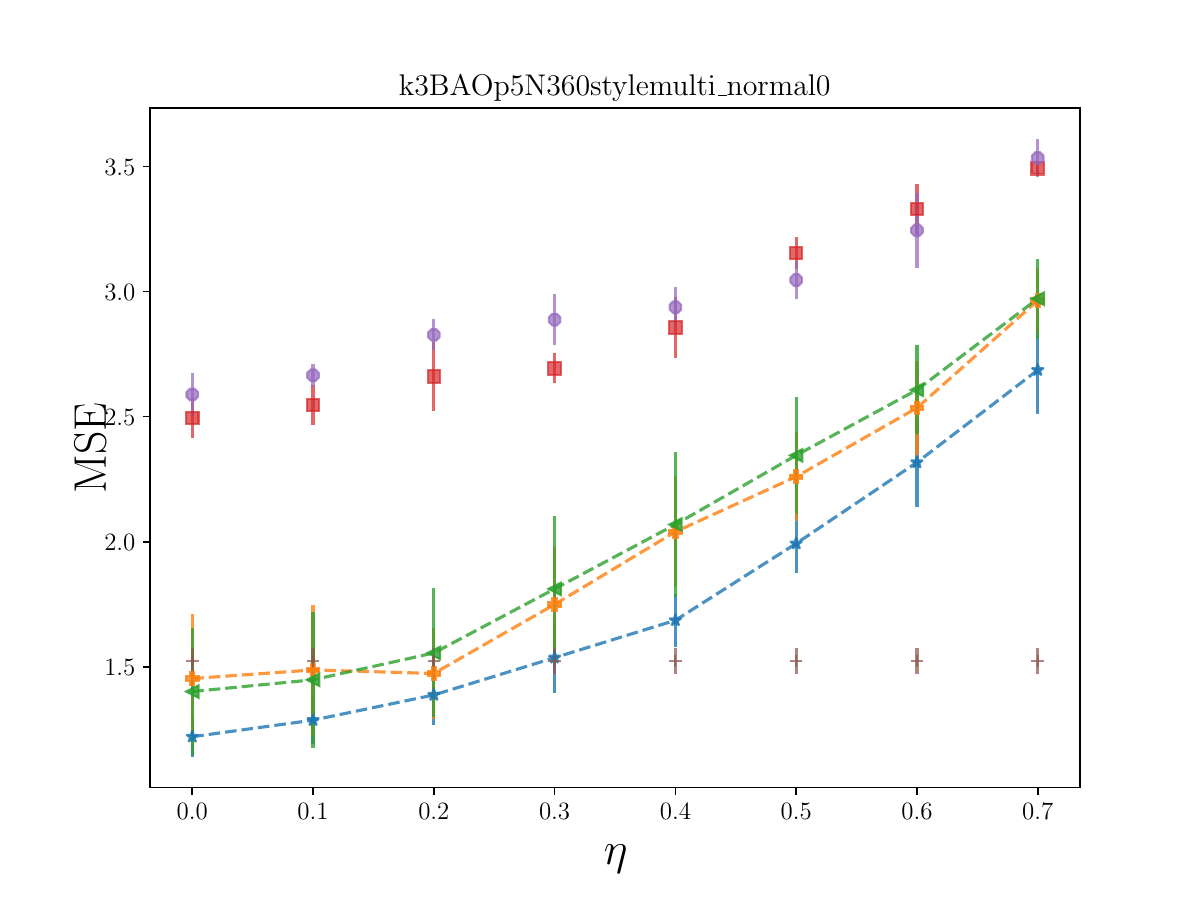}
    \caption{$\text{BAO}_3(p=0.05)$}
  \end{subfigure}
  \begin{subfigure}[ht]{0.245\linewidth}
    \centering
    \includegraphics[width=\linewidth,trim=0cm 0cm 0cm 1.8cm,clip]{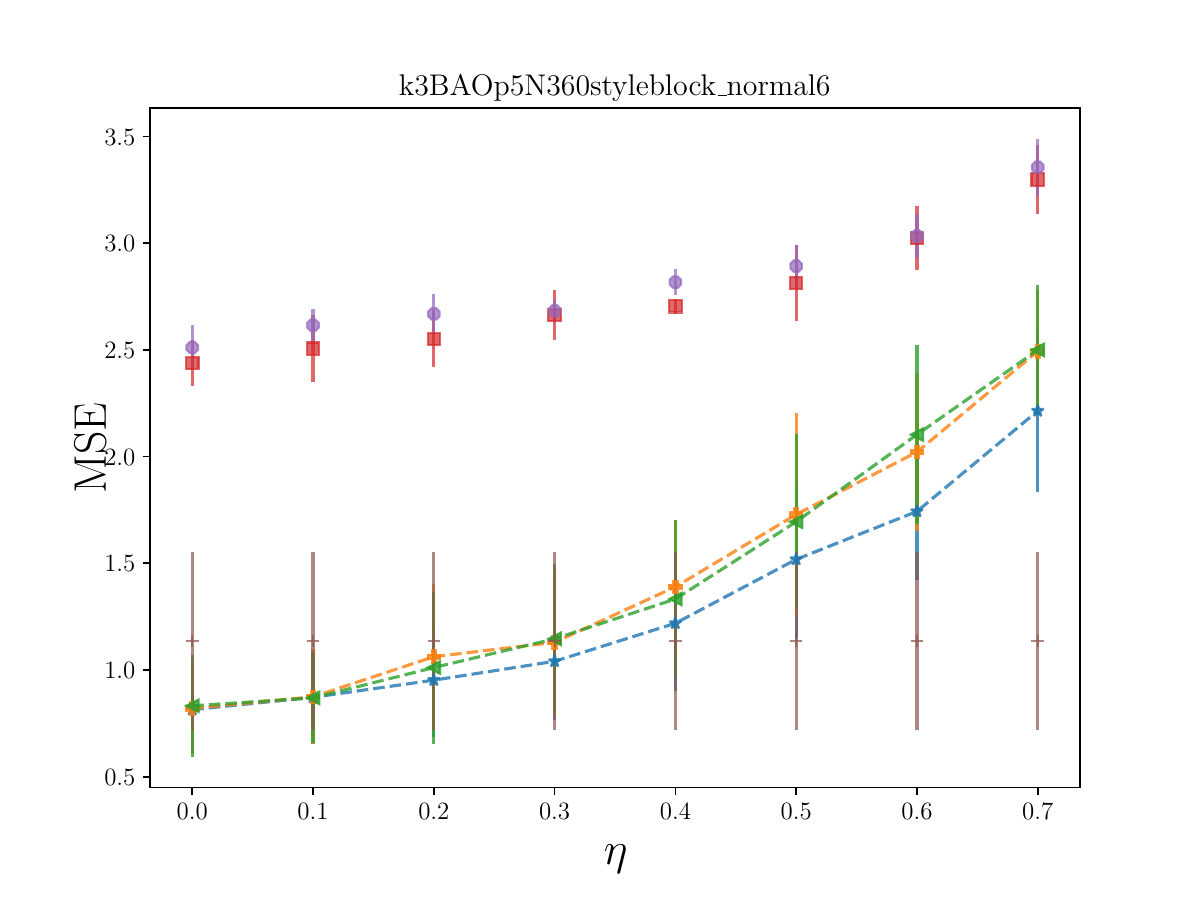}
    \caption{$\text{BAO}_4(p=0.05)$}
  \end{subfigure}
  %%%%%%%%%%%%%%%%%%%%%%%%%%%%
  \begin{subfigure}[ht]{0.245\linewidth}
    \centering
    \includegraphics[width=\linewidth,trim=0cm 0cm 0cm 1.8cm,clip]{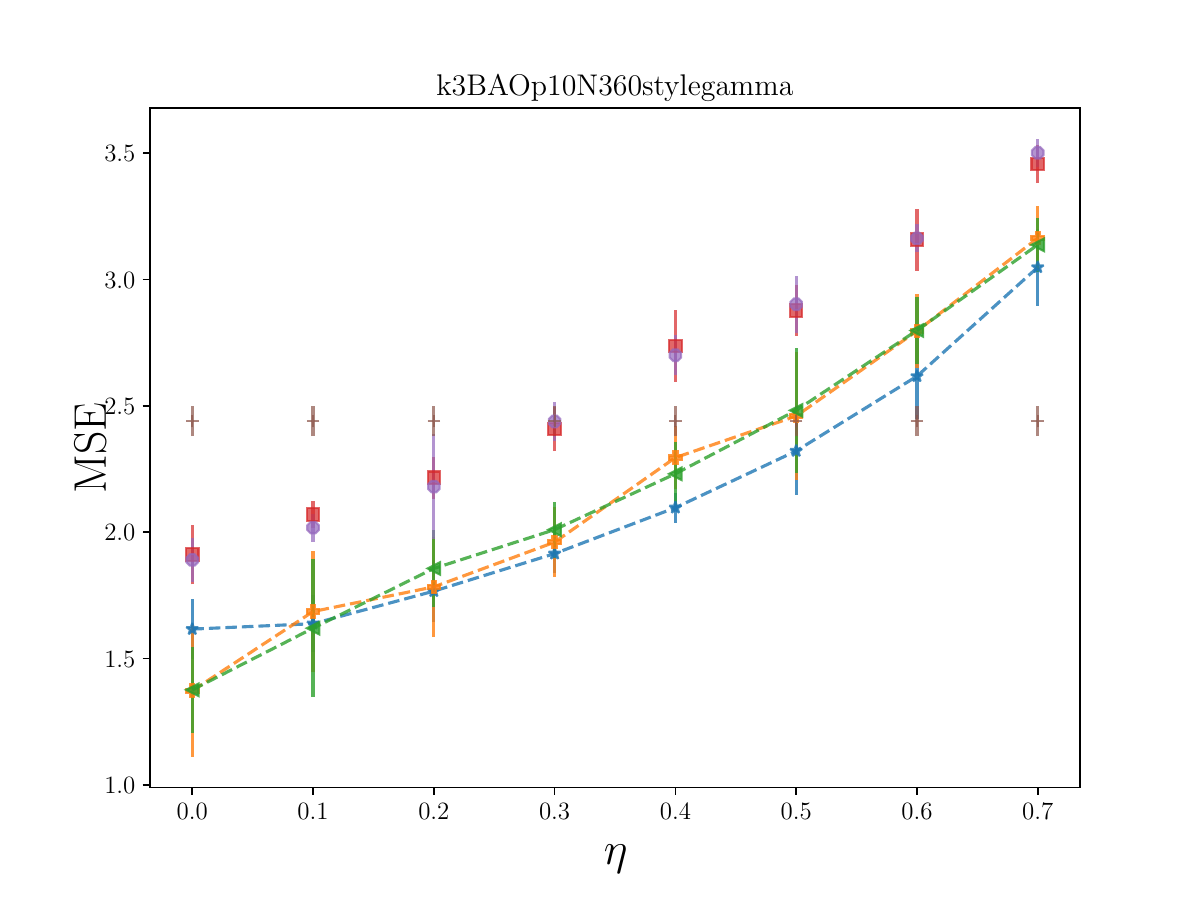}
    \caption{$\text{BAO}_1(p=0.1)$}
  \end{subfigure}
  \begin{subfigure}[ht]{0.245\linewidth}
    \centering
    \includegraphics[width=\linewidth,trim=0cm 0cm 0cm 1.8cm,clip]{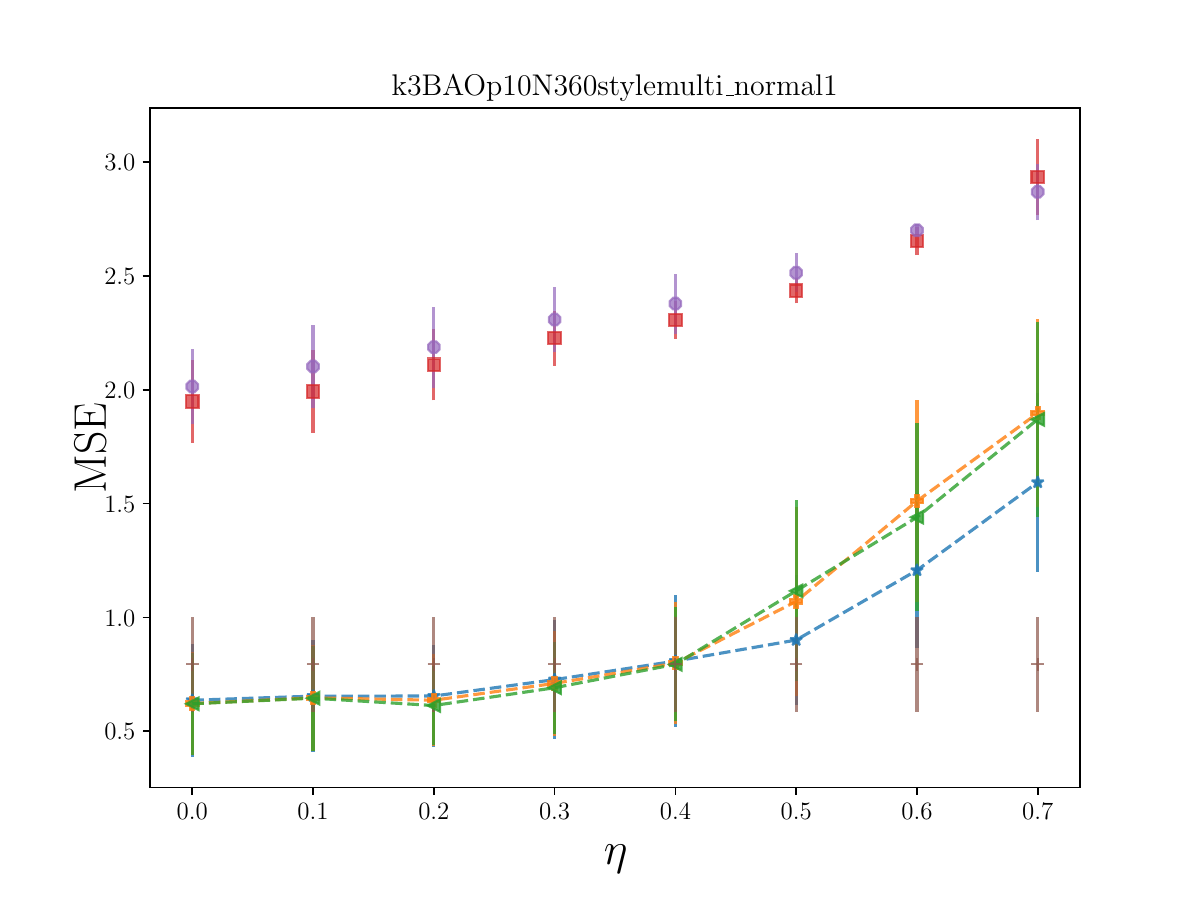}
    \caption{$\text{BAO}_2(p=0.1)$}
  \end{subfigure}
  \begin{subfigure}[ht]{0.245\linewidth}
    \centering
    \includegraphics[width=\linewidth,trim=0cm 0cm 0cm 1.8cm,clip]{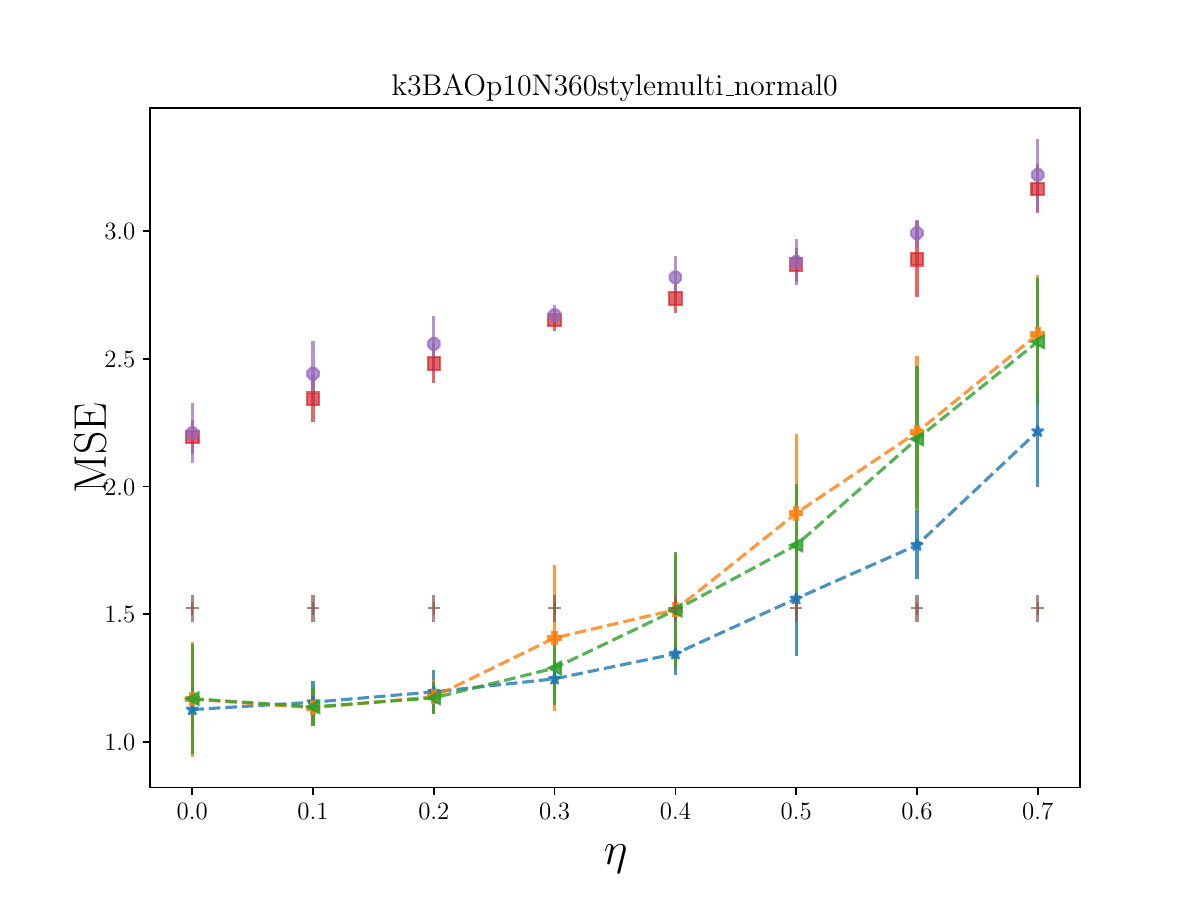}
    \caption{$\text{BAO}_3(p=0.1)$}
  \end{subfigure}
  \begin{subfigure}[ht]{0.245\linewidth}
    \centering
    \includegraphics[width=\linewidth,trim=0cm 0cm 0cm 1.8cm,clip]{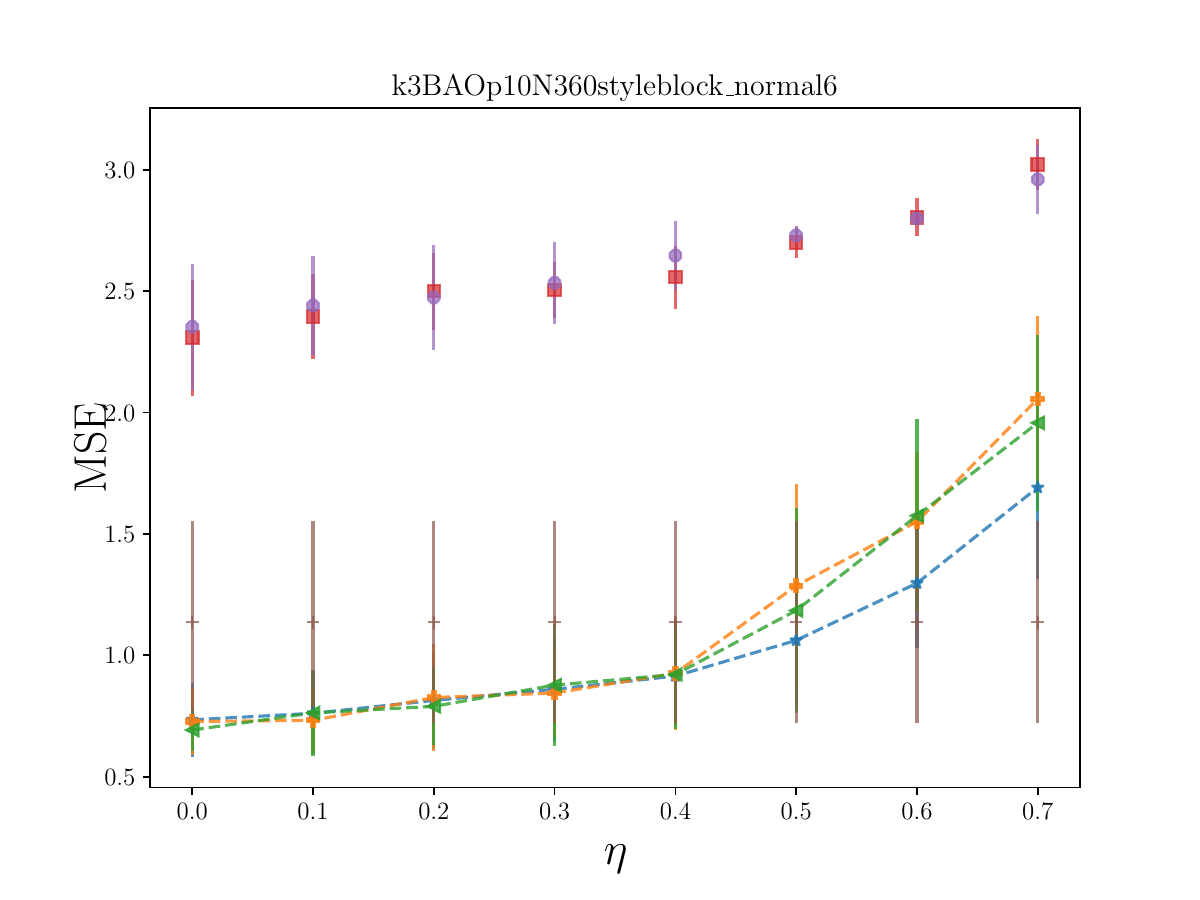}
    \caption{$\text{BAO}_4(p=0.1)$}
  \end{subfigure}
  %%%%%%%%%%%%%%%%%%%%%%%%%%%%%%%%
  \begin{subfigure}[ht]{0.245\linewidth}
    \centering
    \includegraphics[width=\linewidth,trim=0cm 0cm 0cm 1.8cm,clip]{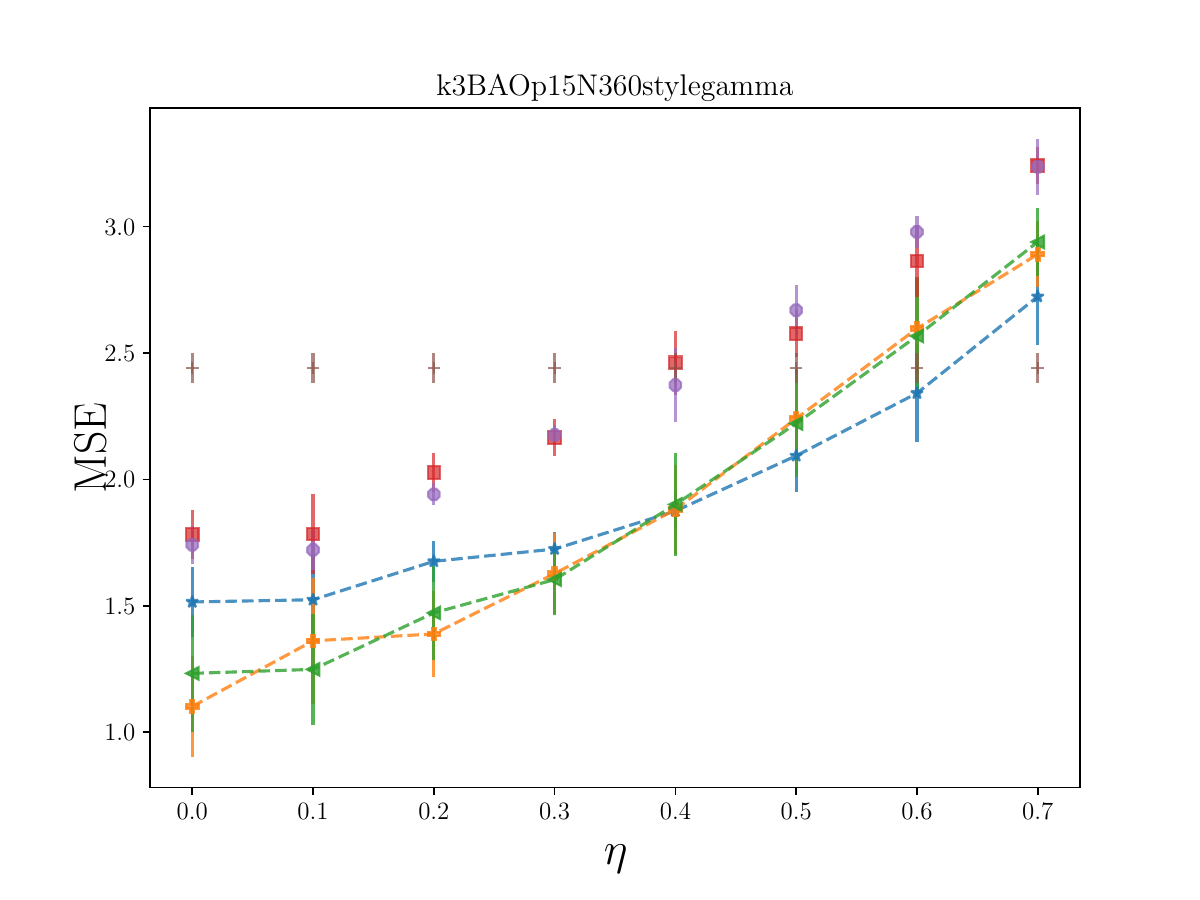}
    \caption{$\text{BAO}_1(p=0.15)$}
  \end{subfigure}
  \begin{subfigure}[ht]{0.245\linewidth}
    \centering
    \includegraphics[width=\linewidth,trim=0cm 0cm 0cm 1.8cm,clip]{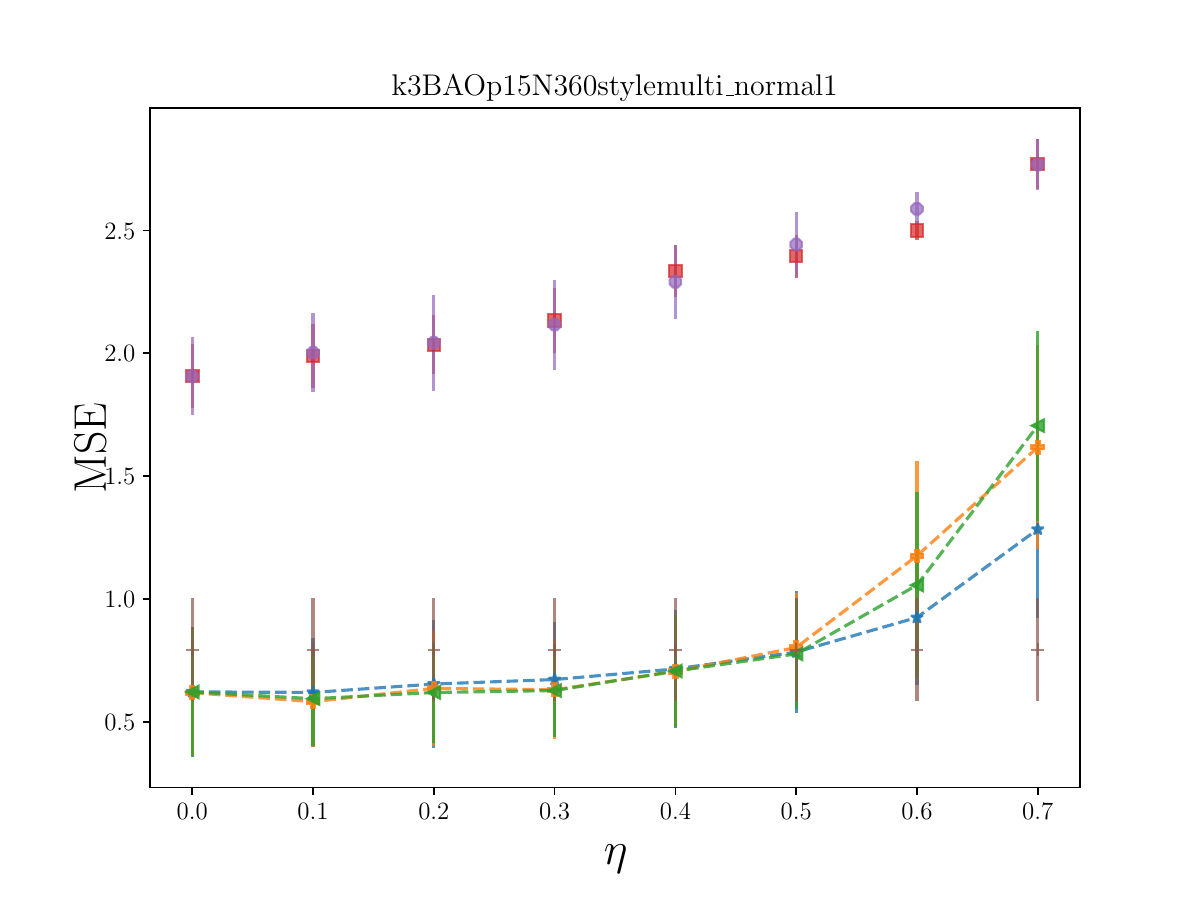}
    \caption{$\text{BAO}_2(p=0.15)$}
  \end{subfigure}
  \begin{subfigure}[ht]{0.245\linewidth}
    \centering
    \includegraphics[width=\linewidth,trim=0cm 0cm 0cm 1.8cm,clip]{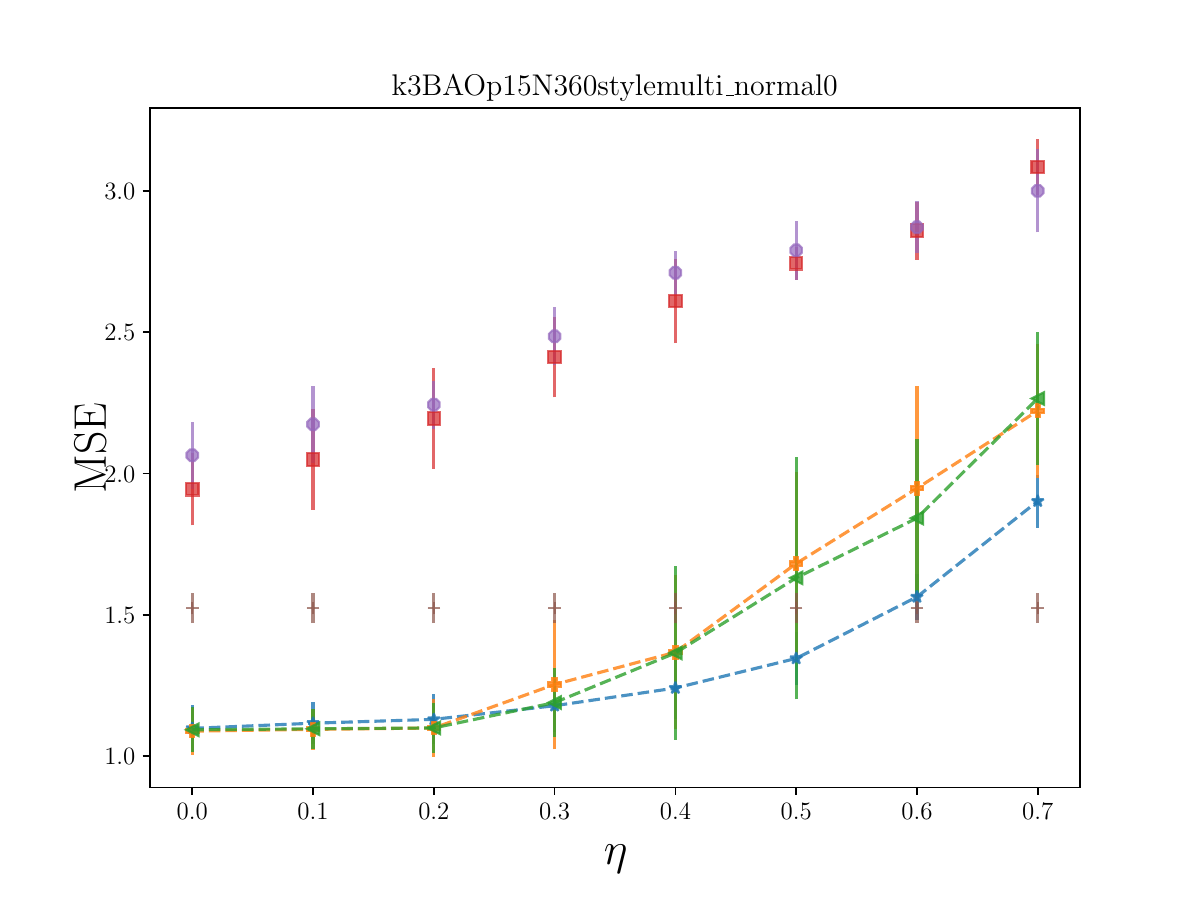}
    \caption{$\text{BAO}_3(p=0.15)$}
  \end{subfigure}
  \begin{subfigure}[ht]{0.245\linewidth}
    \centering
    \includegraphics[width=\linewidth,trim=0cm 0cm 0cm 1.8cm,clip]{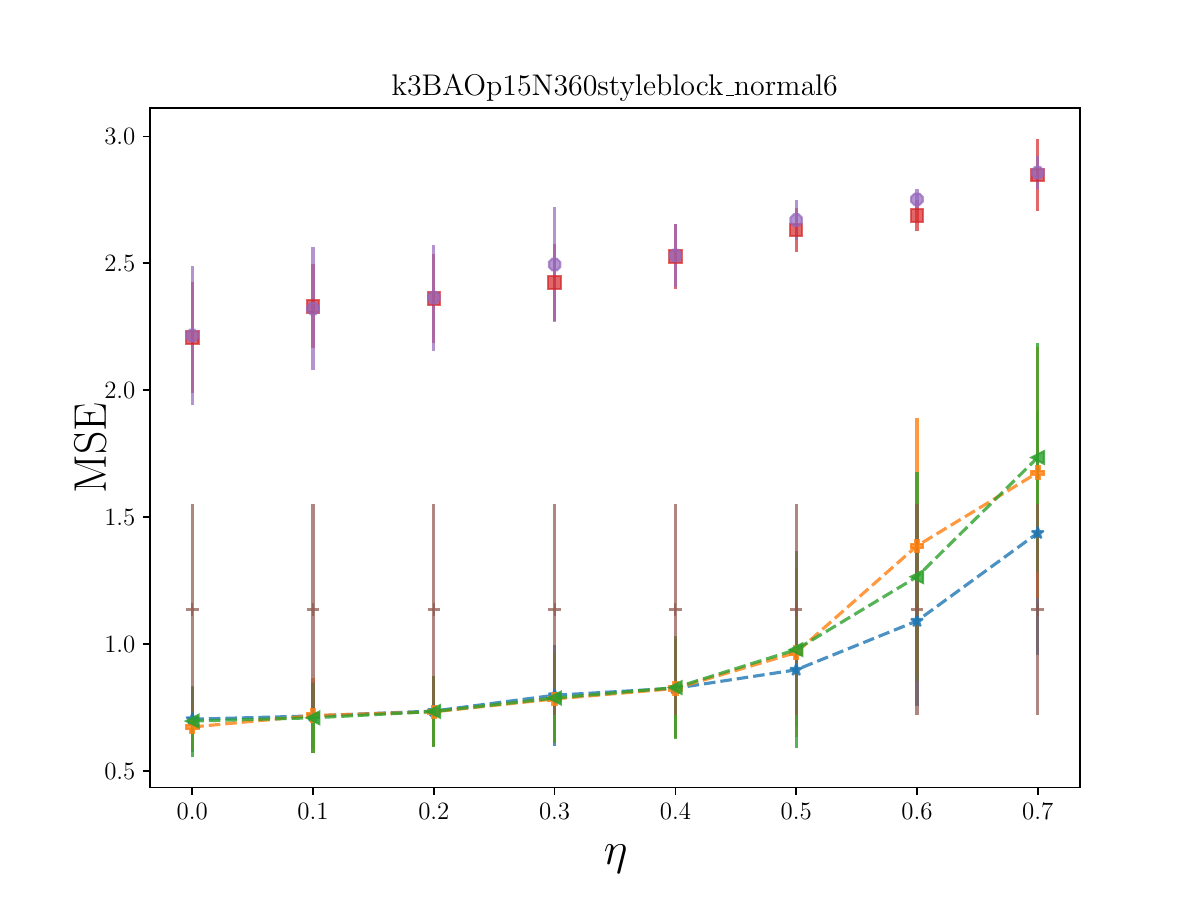}
    \caption{$\text{BAO}_4(p=0.15)$}
  \end{subfigure}
  %%%%%%%%%%%%%%%%%%%%%%%%%%%%%%%%%%%
\begin{subfigure}[ht]{\linewidth}
    \centering
    \includegraphics[width=1.0\linewidth,trim=0cm 0cm 0cm 0cm,clip]{figures/legend_ksync.png}
  \end{subfigure}
    \caption{MSE performance comparison on GNNSync against baselines on $k-$synchronization with $k=3$ for BAO models. $p$ is the network density and $\eta$ is the noise level. Error bars indicate one standard deviation. 
    }
    \label{fig:main_k3_BAO}
\end{figure*}

\begin{figure*}[!hbt]
    \centering
  \begin{subfigure}[ht]{0.245\linewidth}
    \centering
    \includegraphics[width=\linewidth,trim=0cm 0cm 0cm 1.8cm,clip]{figures/k3RGGOp5N360stylegamma.pdf}
    \caption{$\text{RGGO}_1(p=0.05)$}
  \end{subfigure}
  \begin{subfigure}[ht]{0.245\linewidth}
    \centering
    \includegraphics[width=\linewidth,trim=0cm 0cm 0cm 1.8cm,clip]{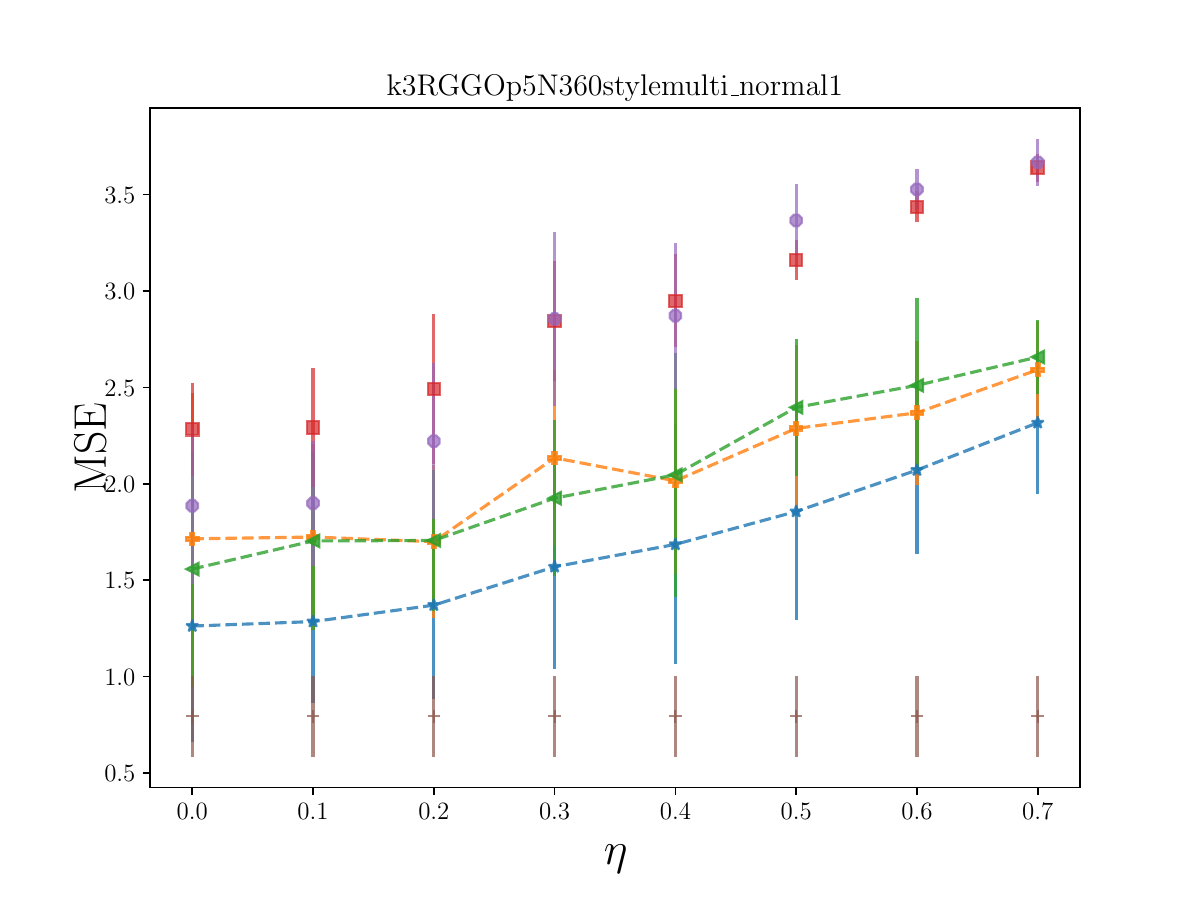}
    \caption{$\text{RGGO}_2(p=0.05)$}
  \end{subfigure}
  \begin{subfigure}[ht]{0.245\linewidth}
    \centering
    \includegraphics[width=\linewidth,trim=0cm 0cm 0cm 1.8cm,clip]{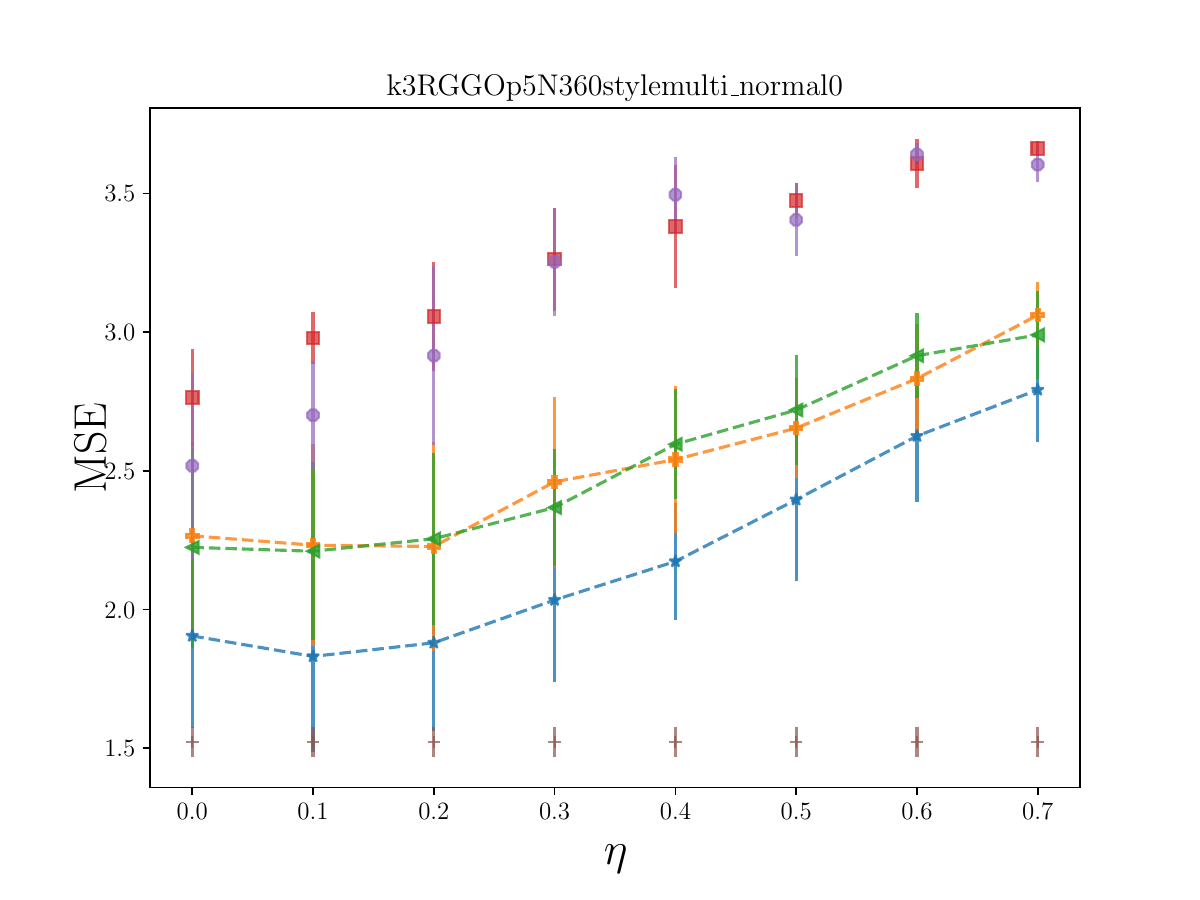}
    \caption{$\text{RGGO}_3(p=0.05)$}
  \end{subfigure}
  \begin{subfigure}[ht]{0.245\linewidth}
    \centering
    \includegraphics[width=\linewidth,trim=0cm 0cm 0cm 1.8cm,clip]{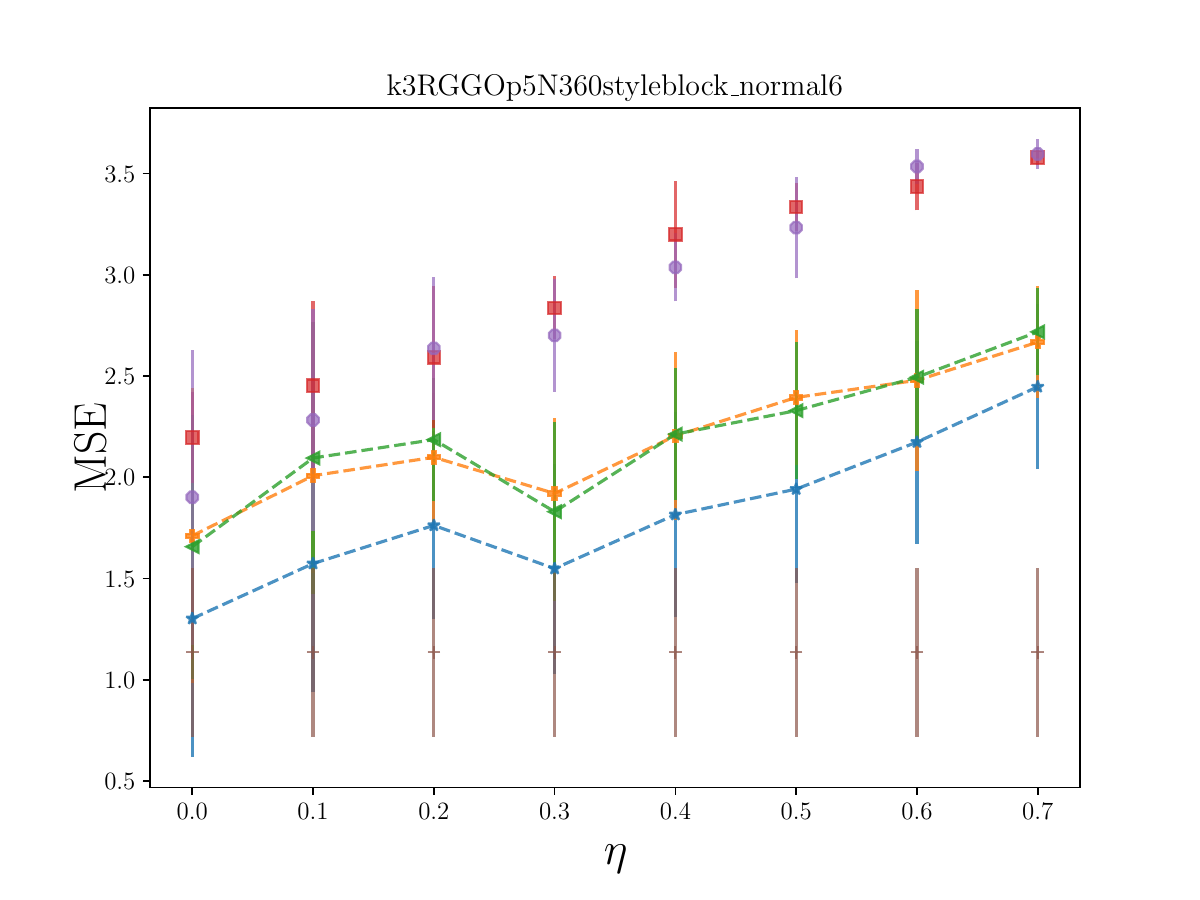}
    \caption{$\text{RGGO}_4(p=0.05)$}
  \end{subfigure}
  %%%%%%%%%%%%%%%%%%%%%%%%%%%%
  \begin{subfigure}[ht]{0.245\linewidth}
    \centering
    \includegraphics[width=\linewidth,trim=0cm 0cm 0cm 1.8cm,clip]{figures/k3RGGOp10N360stylegamma.pdf}
    \caption{$\text{RGGO}_1(p=0.1)$}
  \end{subfigure}
  \begin{subfigure}[ht]{0.245\linewidth}
    \centering
    \includegraphics[width=\linewidth,trim=0cm 0cm 0cm 1.8cm,clip]{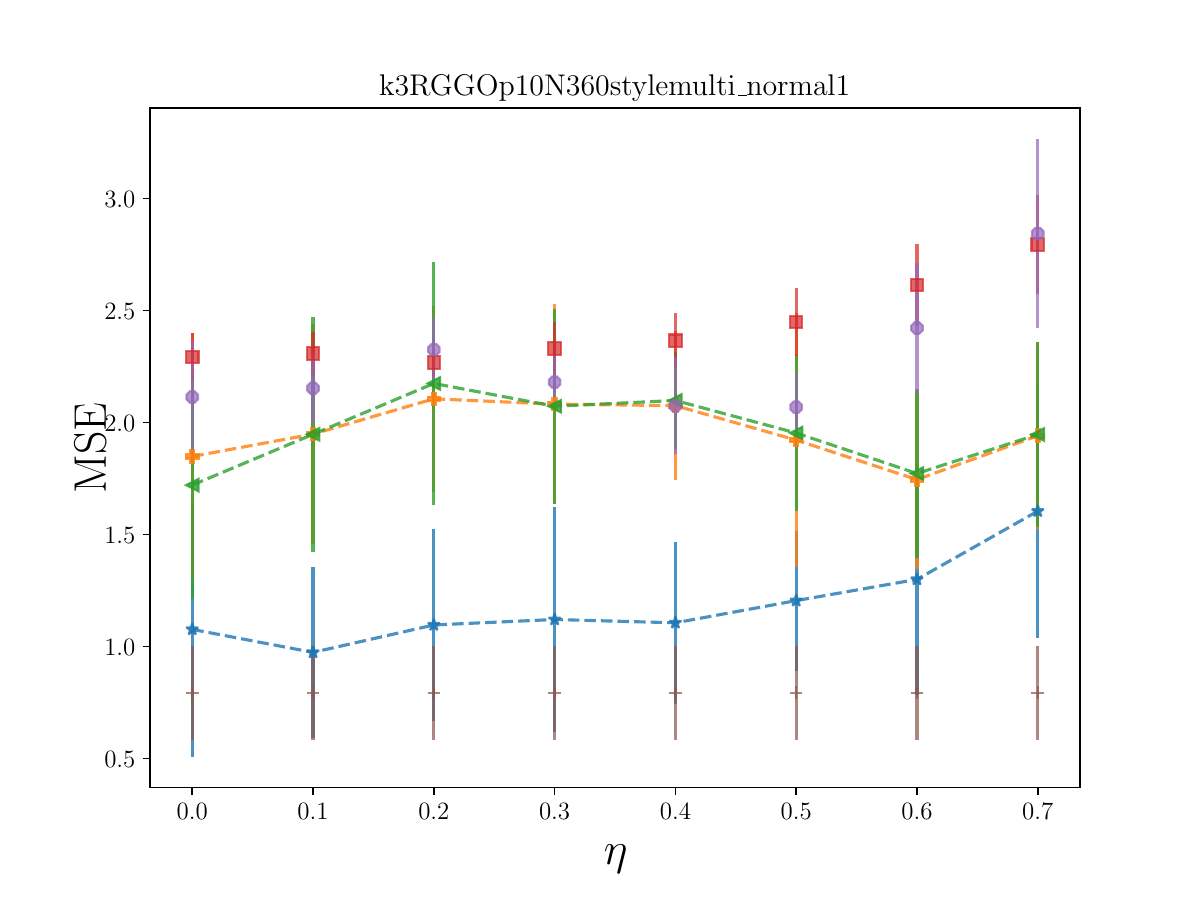}
    \caption{$\text{RGGO}_2(p=0.1)$}
  \end{subfigure}
  \begin{subfigure}[ht]{0.245\linewidth}
    \centering
    \includegraphics[width=\linewidth,trim=0cm 0cm 0cm 1.8cm,clip]{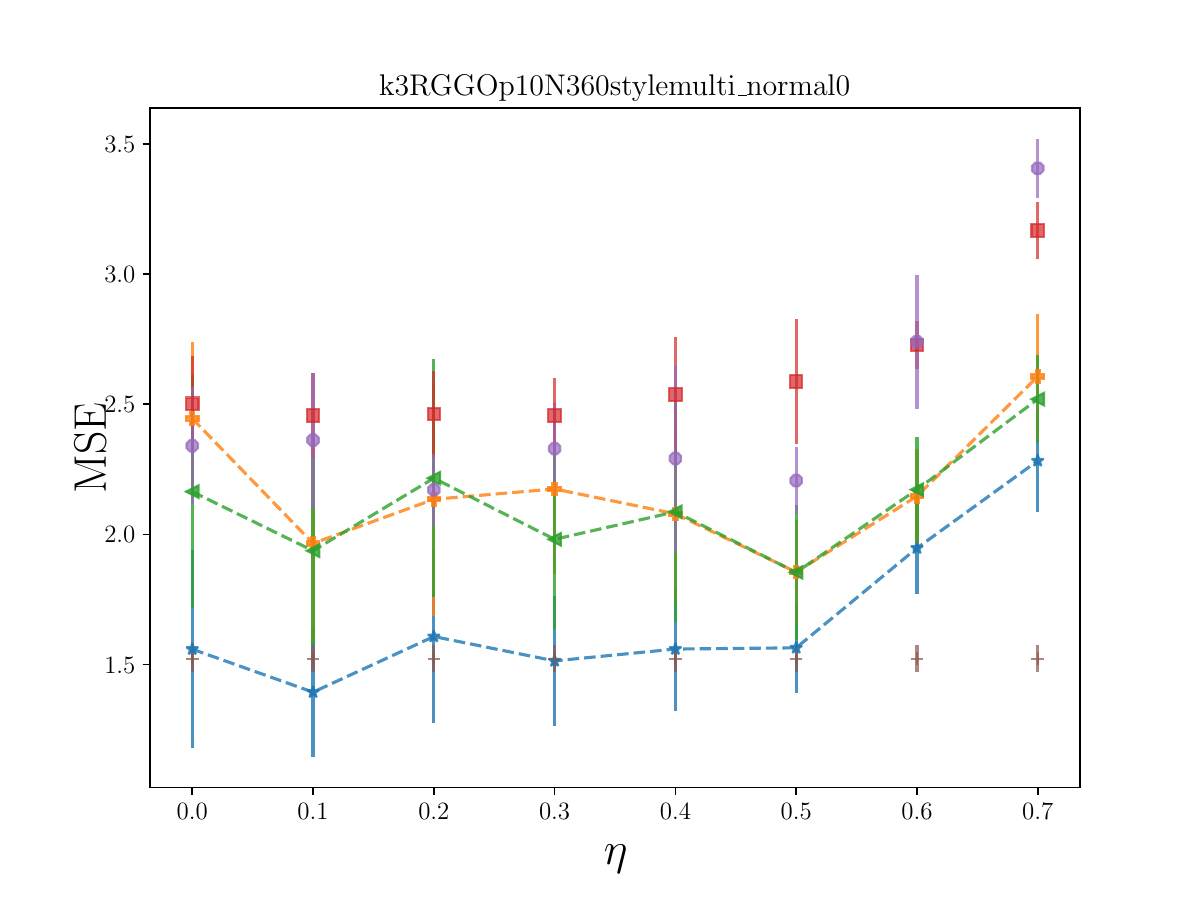}
    \caption{$\text{RGGO}_3(p=0.1)$}
  \end{subfigure}
  \begin{subfigure}[ht]{0.245\linewidth}
    \centering
    \includegraphics[width=\linewidth,trim=0cm 0cm 0cm 1.8cm,clip]{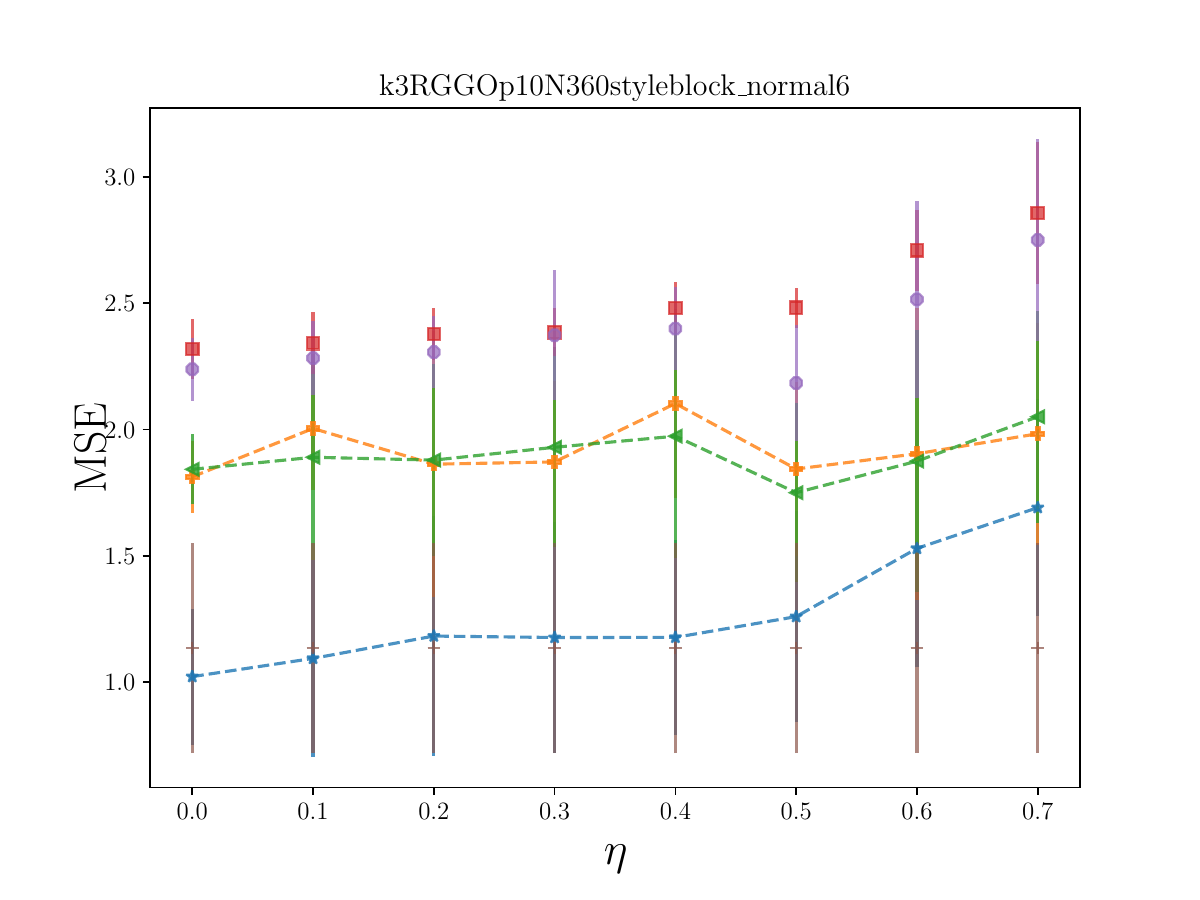}
    \caption{$\text{RGGO}_4(p=0.1)$}
  \end{subfigure}
  %%%%%%%%%%%%%%%%%%%%%%%%%%%%%%%%
  \begin{subfigure}[ht]{0.245\linewidth}
    \centering
    \includegraphics[width=\linewidth,trim=0cm 0cm 0cm 1.8cm,clip]{figures/k3RGGOp15N360stylegamma.pdf}
    \caption{$\text{RGGO}_1(p=0.15)$}
  \end{subfigure}
  \begin{subfigure}[ht]{0.245\linewidth}
    \centering
    \includegraphics[width=\linewidth,trim=0cm 0cm 0cm 1.8cm,clip]{figures/k3RGGOp15N360stylemulti_normal1.pdf}
    \caption{$\text{RGGO}_2(p=0.15)$}
  \end{subfigure}
  \begin{subfigure}[ht]{0.245\linewidth}
    \centering
    \includegraphics[width=\linewidth,trim=0cm 0cm 0cm 1.8cm,clip]{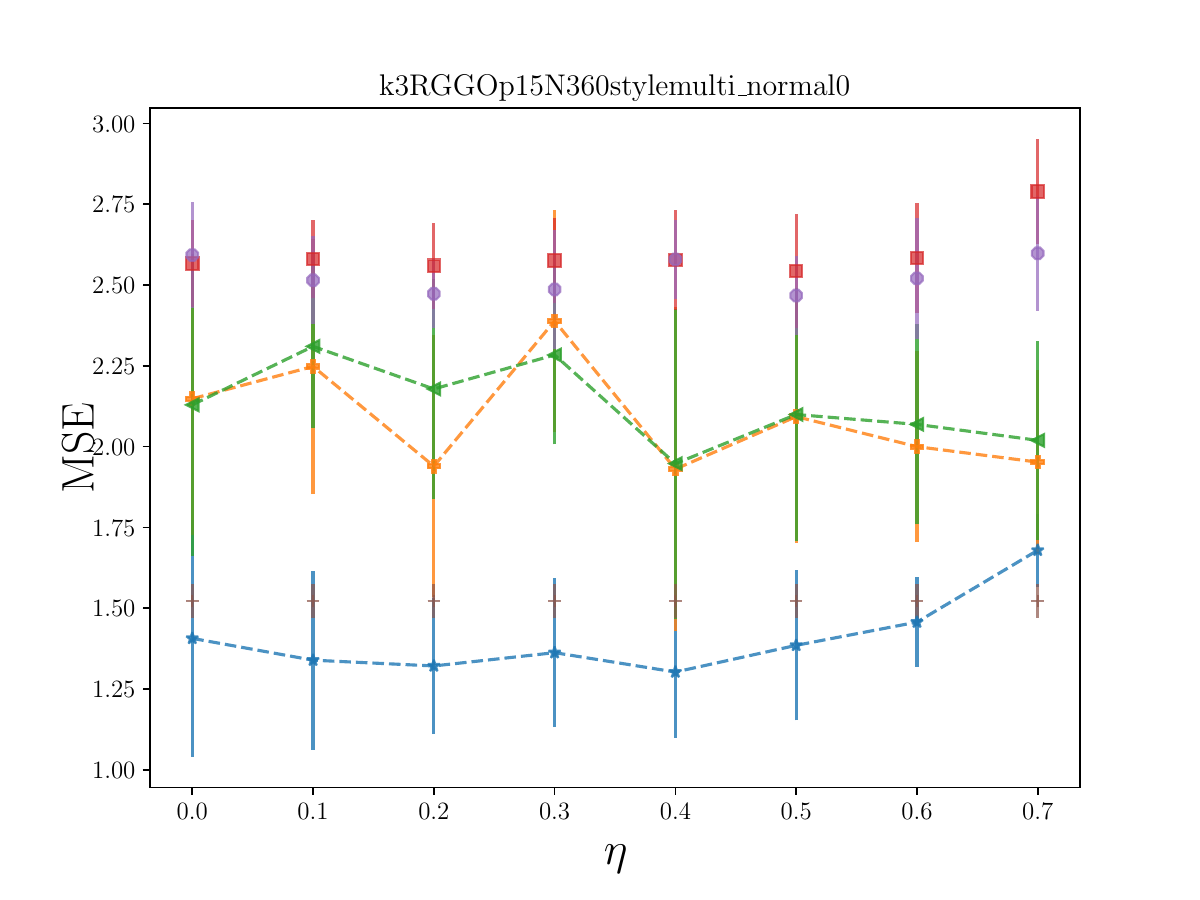}
    \caption{$\text{RGGO}_3(p=0.15)$}
  \end{subfigure}
  \begin{subfigure}[ht]{0.245\linewidth}
    \centering
    \includegraphics[width=\linewidth,trim=0cm 0cm 0cm 1.8cm,clip]{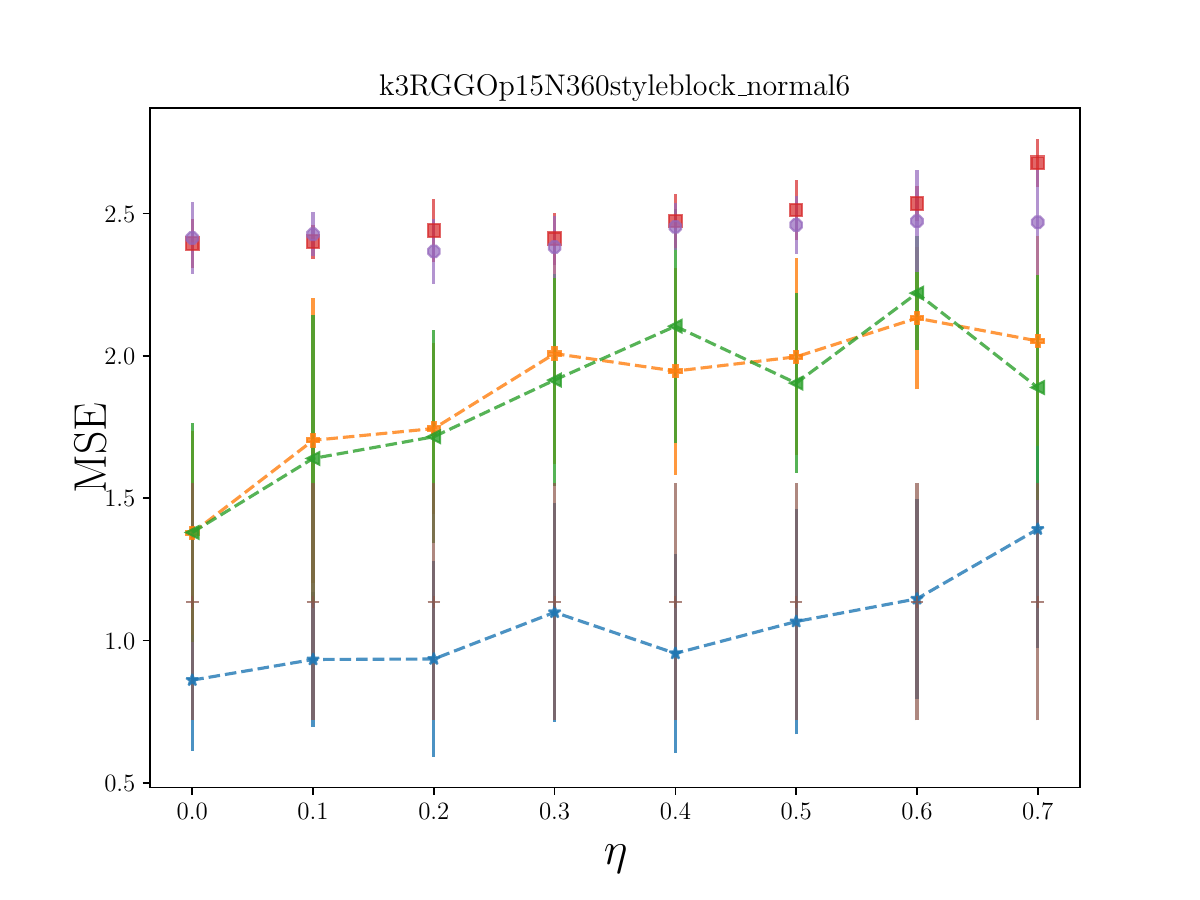}
    \caption{$\text{RGGO}_4(p=0.15)$}
  \end{subfigure}
  %%%%%%%%%%%%%%%%%%%%%%%%%%%%%%%%%%%
\begin{subfigure}[ht]{\linewidth}
    \centering
    \includegraphics[width=1.0\linewidth,trim=0cm 0cm 0cm 0cm,clip]{figures/legend_ksync.png}
  \end{subfigure}
    \caption{MSE performance comparison on GNNSync against baselines on $k-$synchronization with $k=3$ for RGGO models. $p$ is the network density and $\eta$ is the noise level. Error bars indicate one standard deviation. 
    }
    \label{fig:main_k3_RGGO}
\end{figure*}

\begin{figure*}[!hbt]
    \centering
  \begin{subfigure}[ht]{0.245\linewidth}
    \centering
    \includegraphics[width=\linewidth,trim=0cm 0cm 0cm 1.8cm,clip]{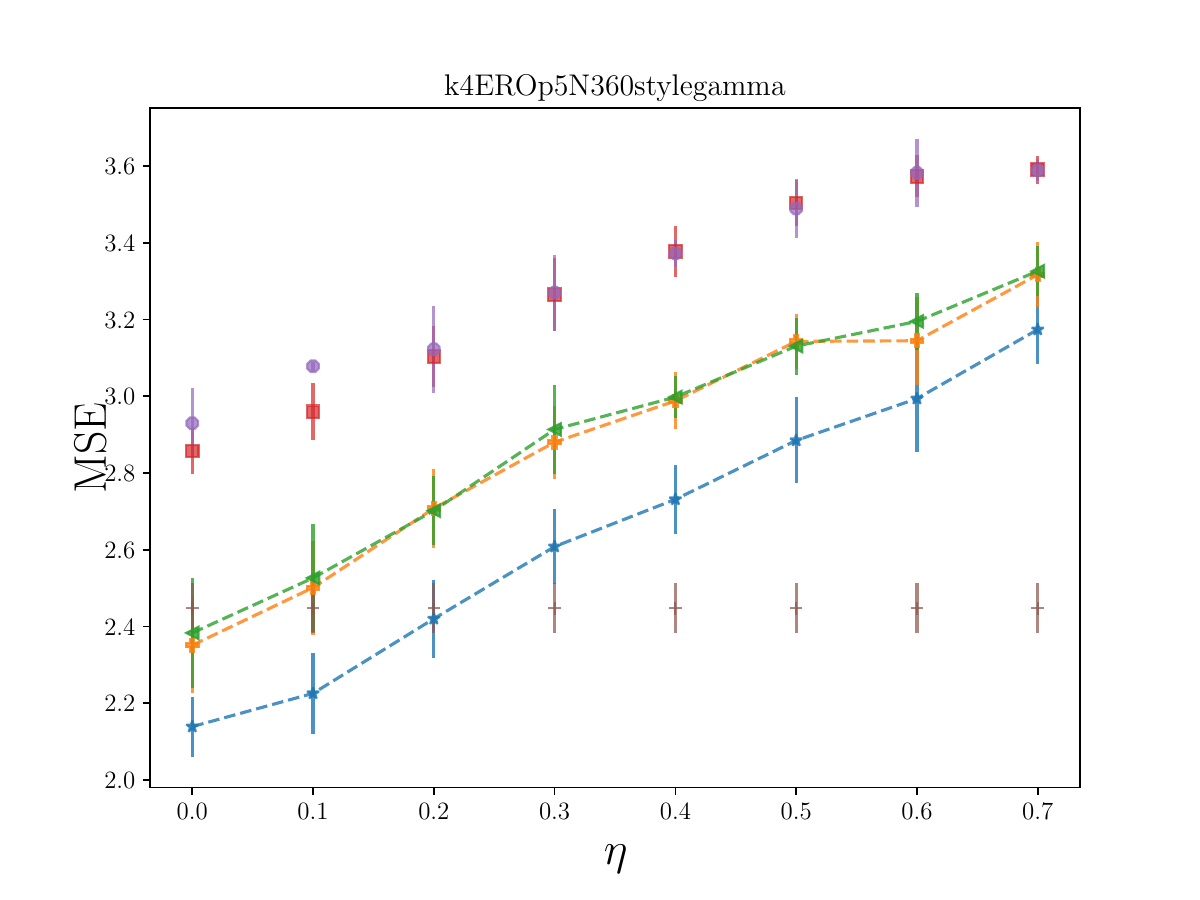}
    \caption{$\text{ERO}_1(p=0.05)$}
  \end{subfigure}
  \begin{subfigure}[ht]{0.245\linewidth}
    \centering
    \includegraphics[width=\linewidth,trim=0cm 0cm 0cm 1.8cm,clip]{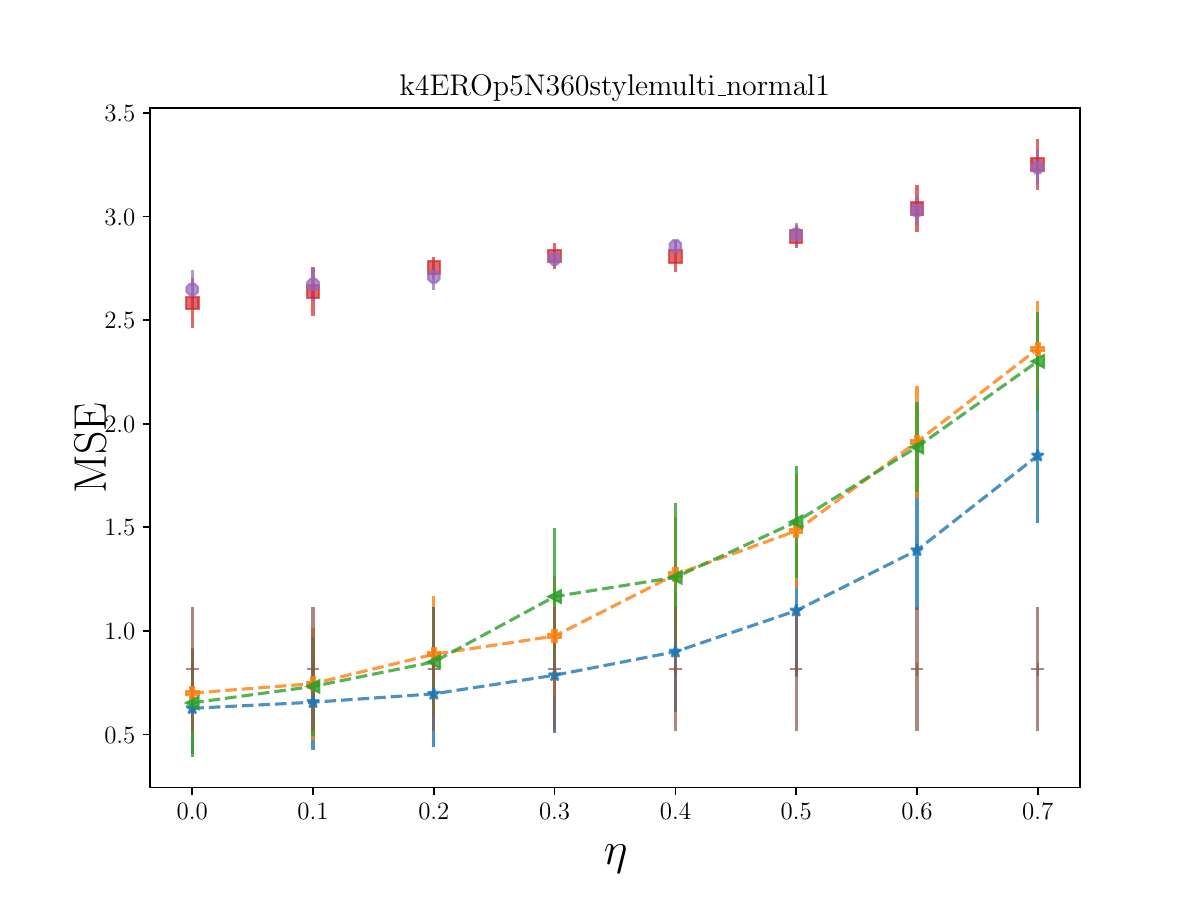}
    \caption{$\text{ERO}_2(p=0.05)$}
  \end{subfigure}
  \begin{subfigure}[ht]{0.245\linewidth}
    \centering
    \includegraphics[width=\linewidth,trim=0cm 0cm 0cm 1.8cm,clip]{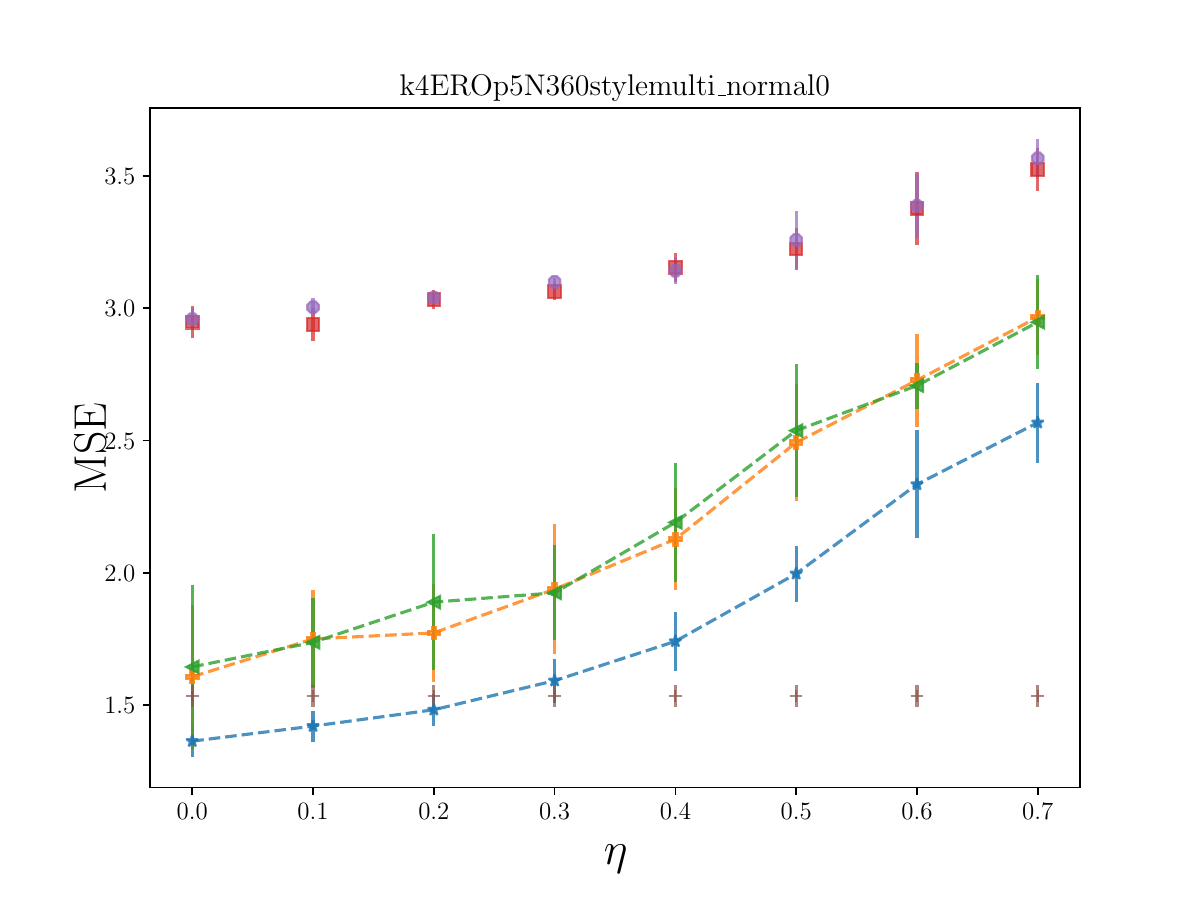}
    \caption{$\text{ERO}_3(p=0.05)$}
  \end{subfigure}
  \begin{subfigure}[ht]{0.245\linewidth}
    \centering
    \includegraphics[width=\linewidth,trim=0cm 0cm 0cm 1.8cm,clip]{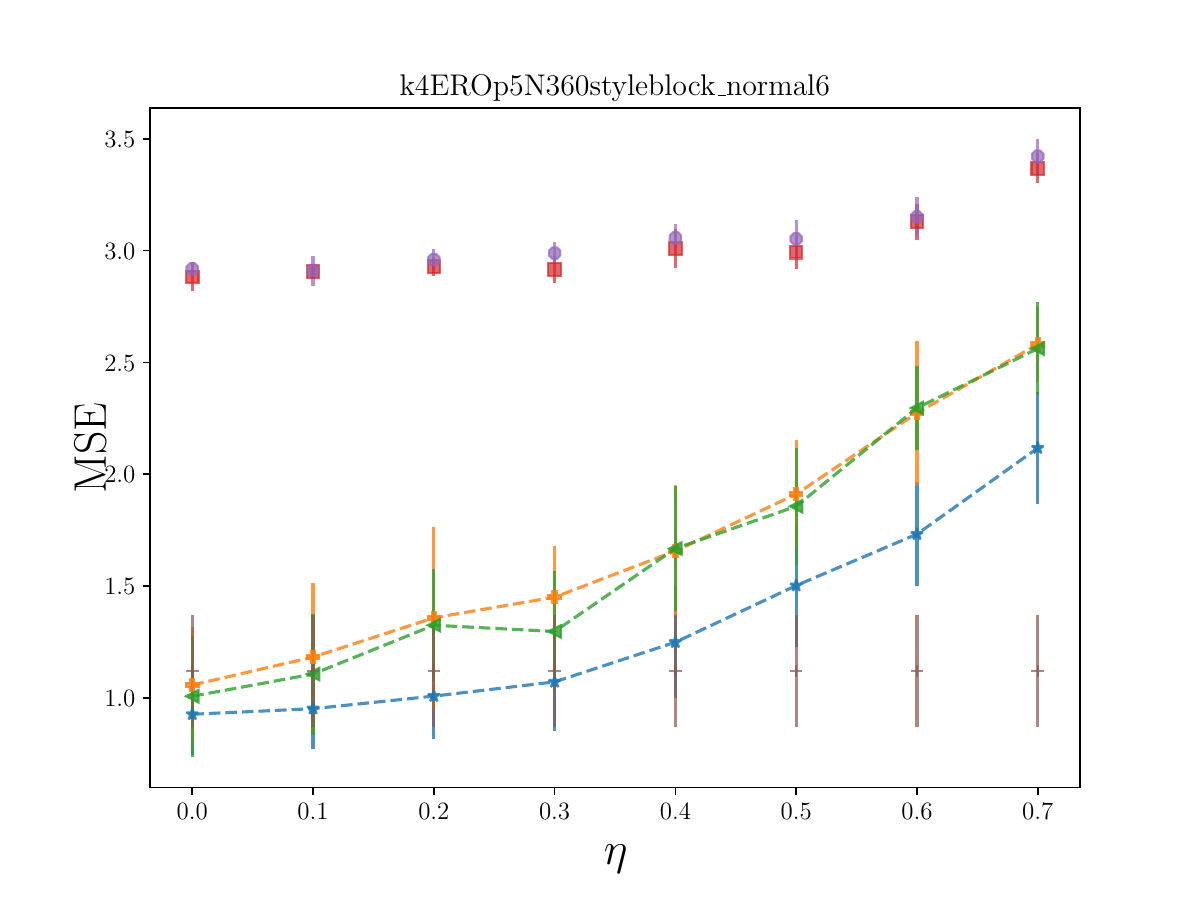}
    \caption{$\text{ERO}_4(p=0.05)$}
  \end{subfigure}
  %%%%%%%%%%%%%%%%%%%%%%%%%%%%
  \begin{subfigure}[ht]{0.245\linewidth}
    \centering
    \includegraphics[width=\linewidth,trim=0cm 0cm 0cm 1.8cm,clip]{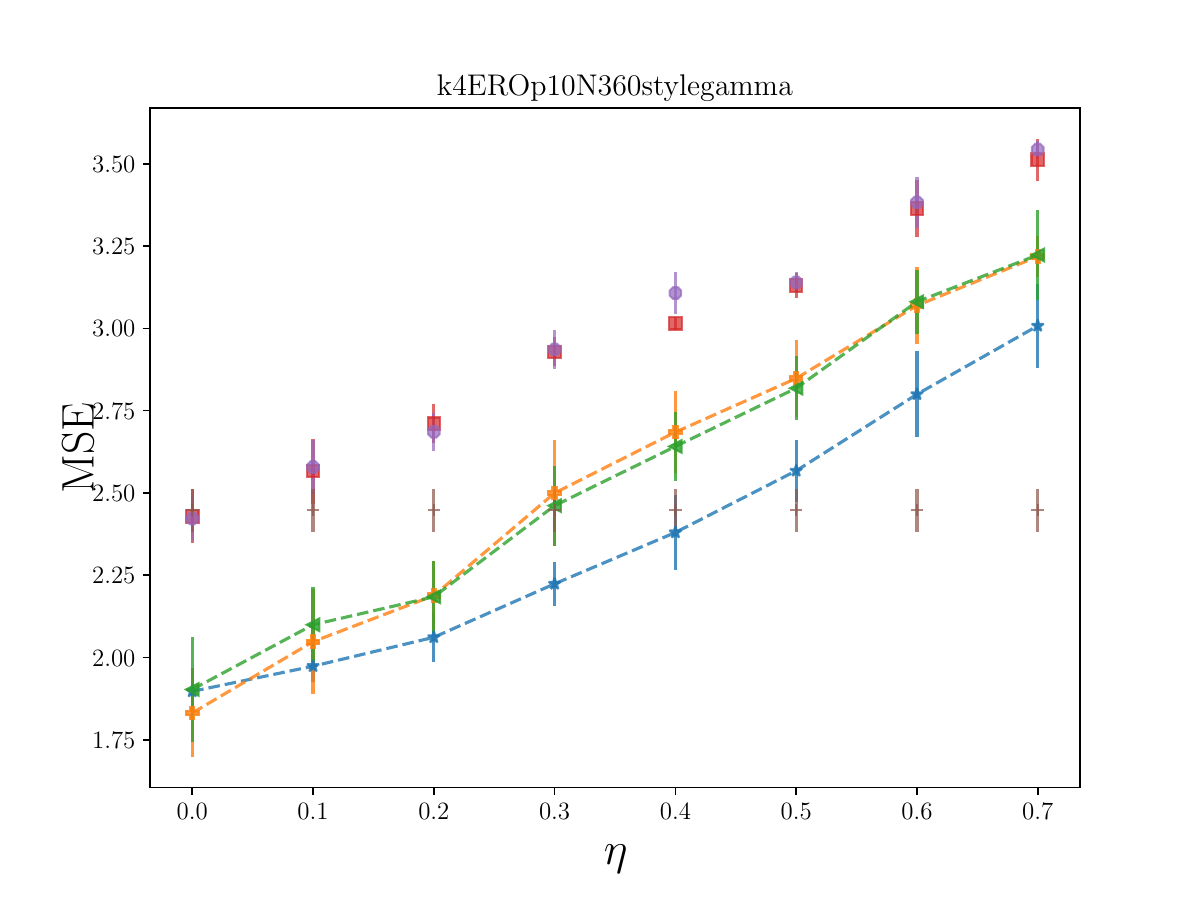}
    \caption{$\text{ERO}_1(p=0.1)$}
  \end{subfigure}
  \begin{subfigure}[ht]{0.245\linewidth}
    \centering
    \includegraphics[width=\linewidth,trim=0cm 0cm 0cm 1.8cm,clip]{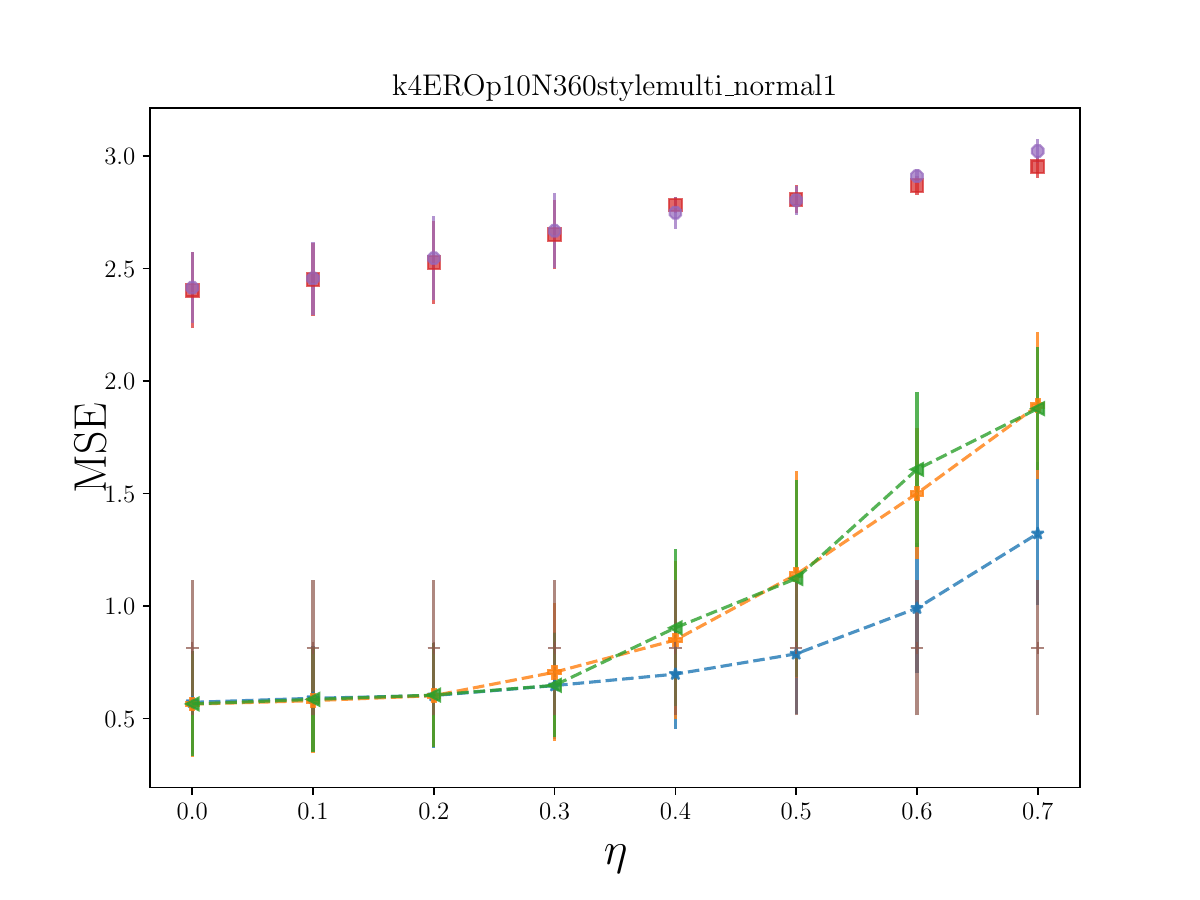}
    \caption{$\text{ERO}_2(p=0.1)$}
  \end{subfigure}
  \begin{subfigure}[ht]{0.245\linewidth}
    \centering
    \includegraphics[width=\linewidth,trim=0cm 0cm 0cm 1.8cm,clip]{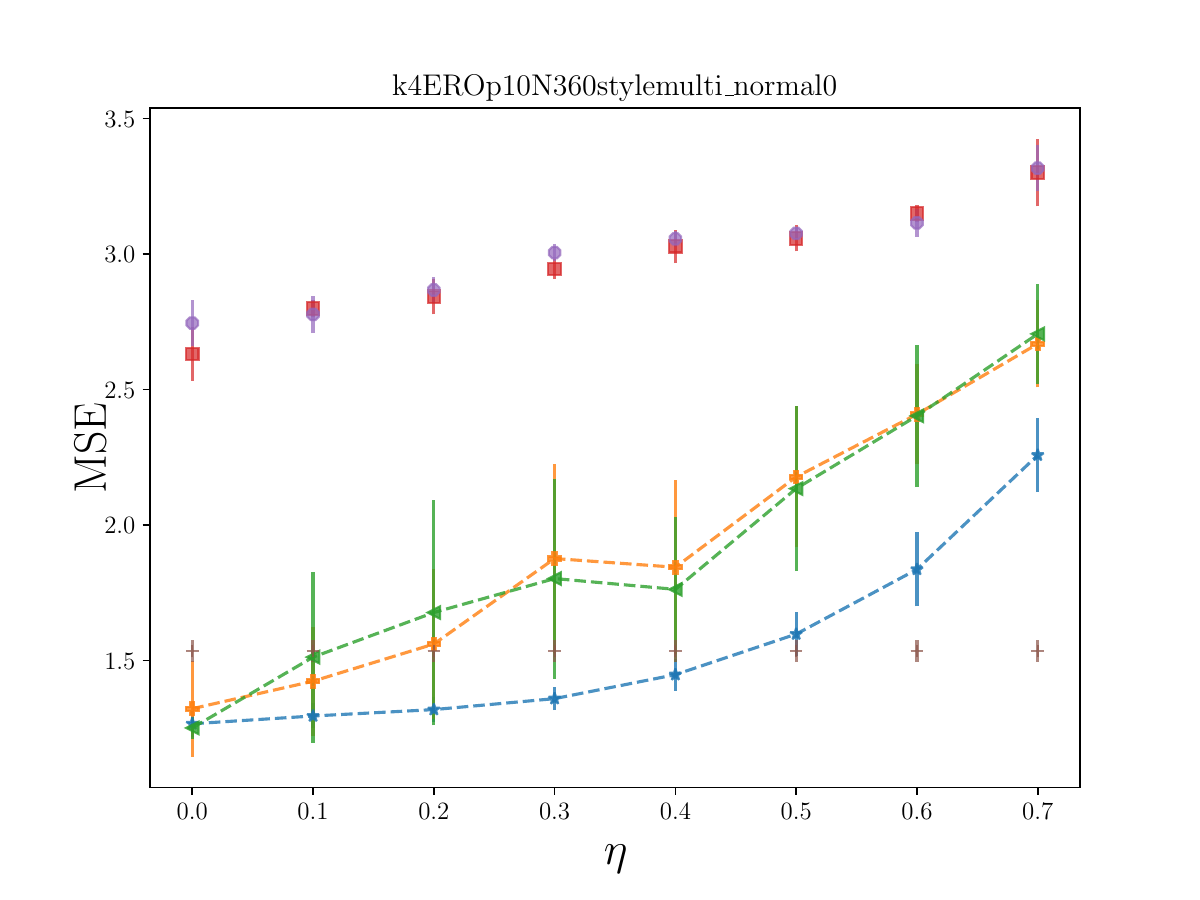}
    \caption{$\text{ERO}_3(p=0.1)$}
  \end{subfigure}
  \begin{subfigure}[ht]{0.245\linewidth}
    \centering
    \includegraphics[width=\linewidth,trim=0cm 0cm 0cm 1.8cm,clip]{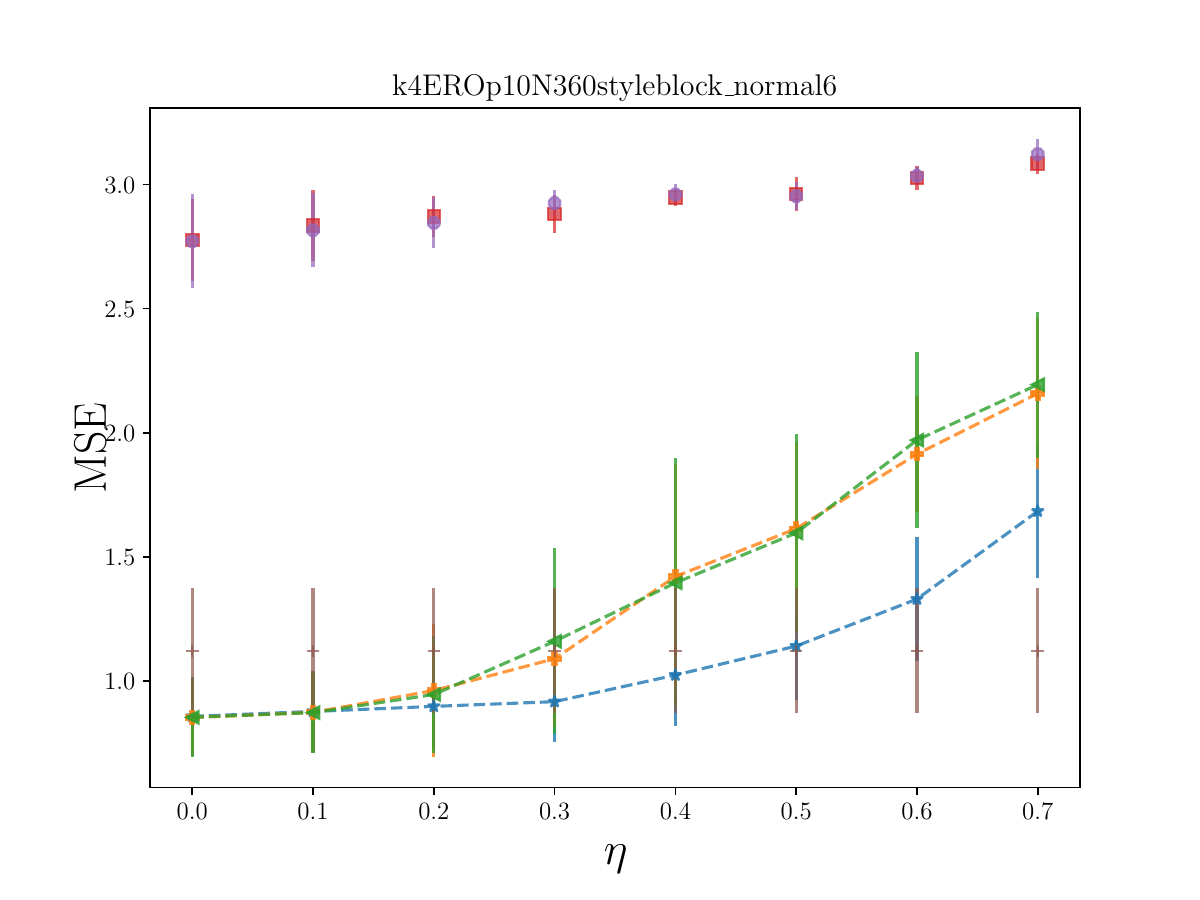}
    \caption{$\text{ERO}_4(p=0.1)$}
  \end{subfigure}
  %%%%%%%%%%%%%%%%%%%%%%%%%%%%%%%%
  \begin{subfigure}[ht]{0.245\linewidth}
    \centering
    \includegraphics[width=\linewidth,trim=0cm 0cm 0cm 1.8cm,clip]{figures/k4EROp15N360stylegamma.pdf}
    \caption{$\text{ERO}_1(p=0.15)$}
  \end{subfigure}
  \begin{subfigure}[ht]{0.245\linewidth}
    \centering
    \includegraphics[width=\linewidth,trim=0cm 0cm 0cm 1.8cm,clip]{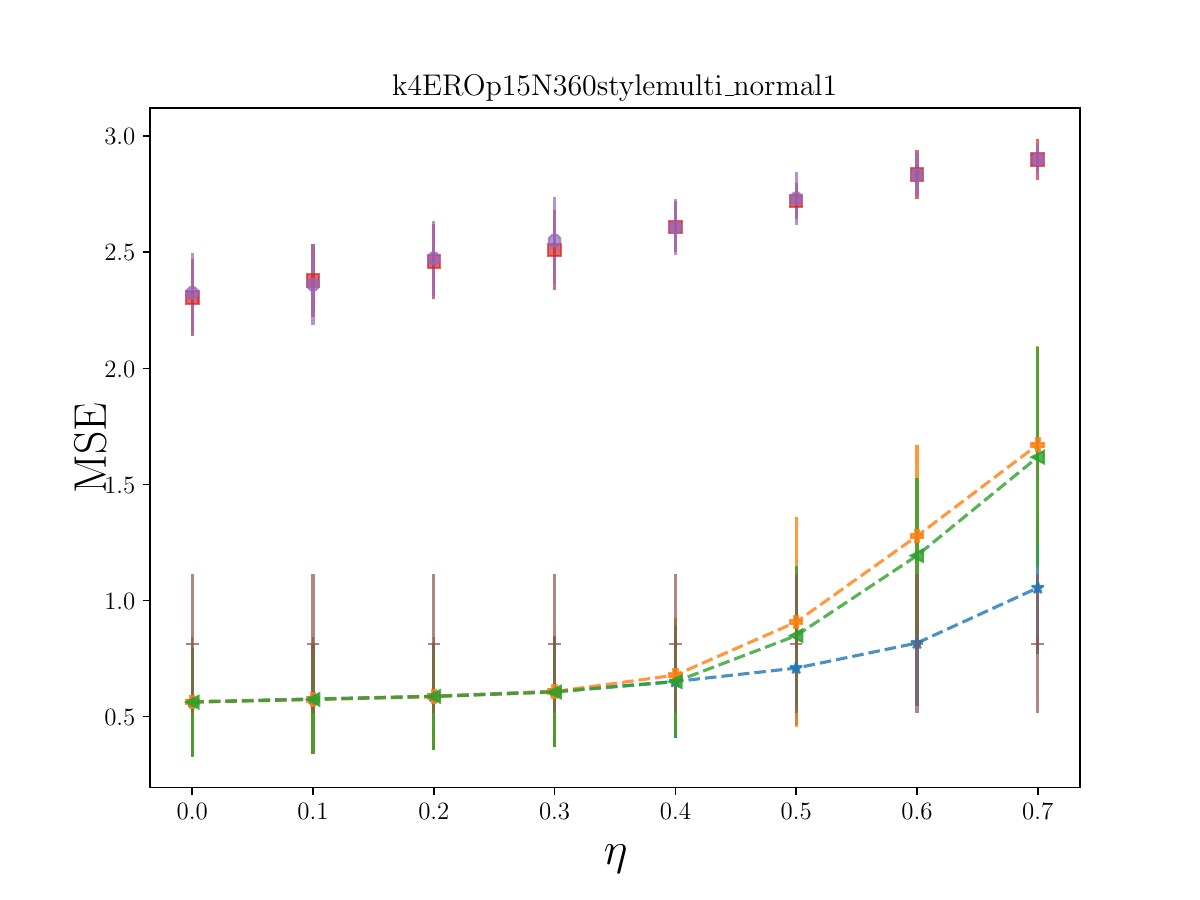}
    \caption{$\text{ERO}_2(p=0.15)$}
  \end{subfigure}
  \begin{subfigure}[ht]{0.245\linewidth}
    \centering
    \includegraphics[width=\linewidth,trim=0cm 0cm 0cm 1.8cm,clip]{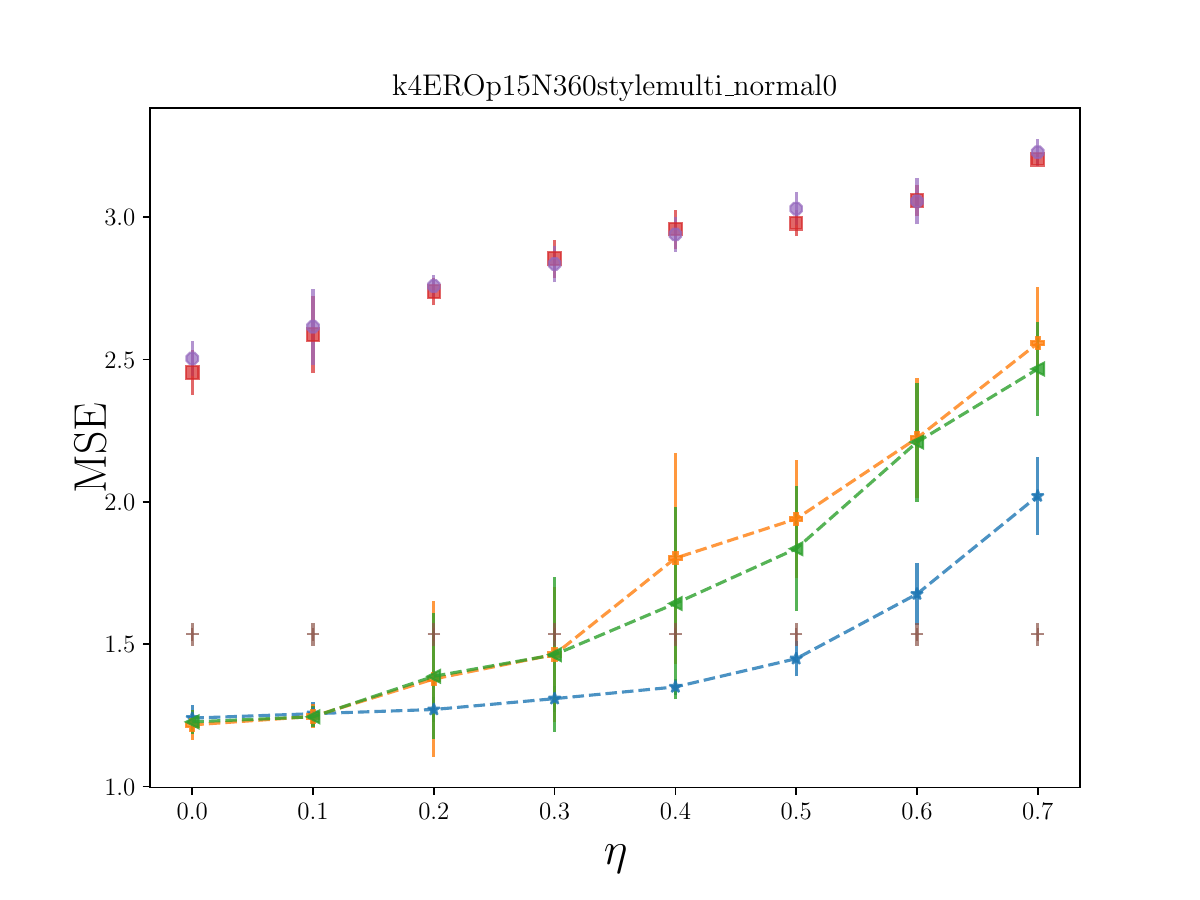}
    \caption{$\text{ERO}_3(p=0.15)$}
  \end{subfigure}
  \begin{subfigure}[ht]{0.245\linewidth}
    \centering
    \includegraphics[width=\linewidth,trim=0cm 0cm 0cm 1.8cm,clip]{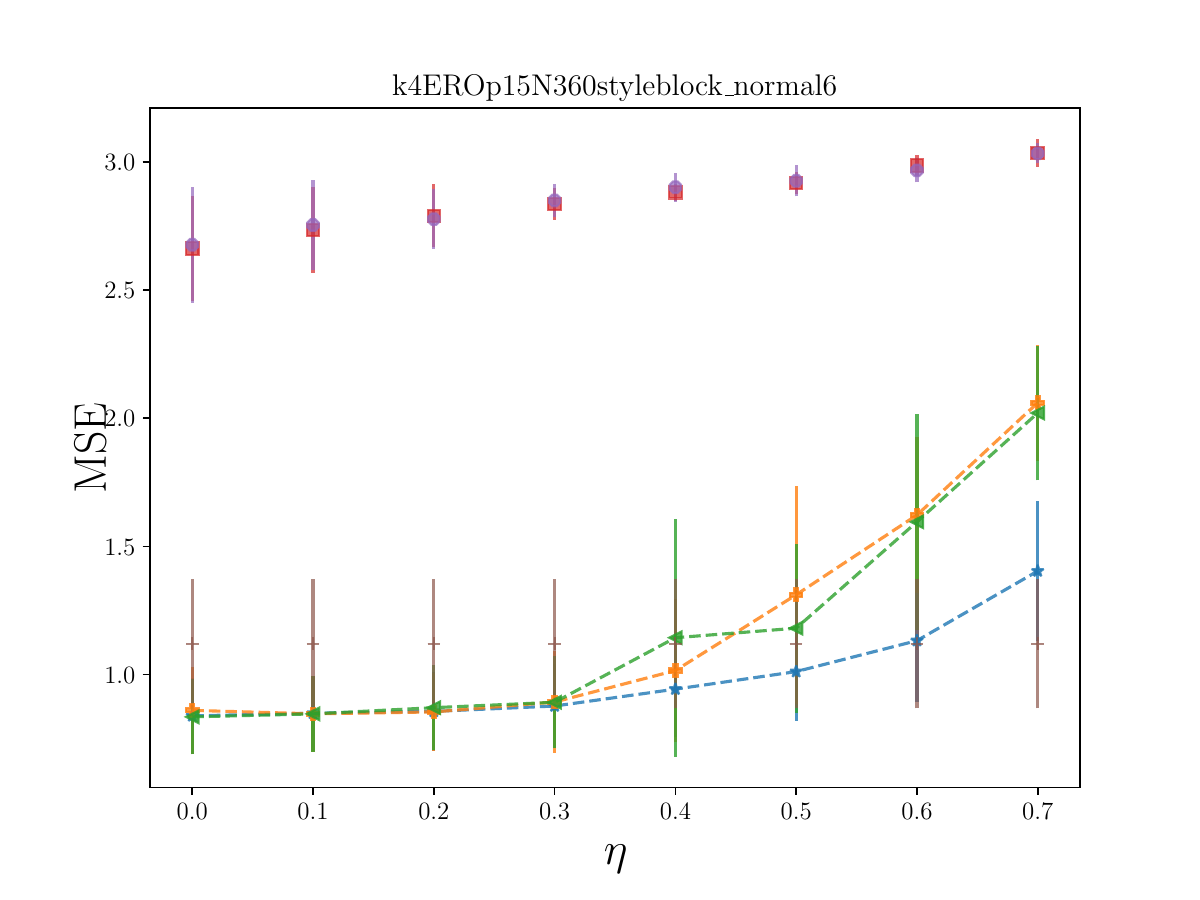}
    \caption{$\text{ERO}_4(p=0.15)$}
  \end{subfigure}
  %%%%%%%%%%%%%%%%%%%%%%%%%%%%%%%%%%%
\begin{subfigure}[ht]{\linewidth}
    \centering
    \includegraphics[width=1.0\linewidth,trim=0cm 0cm 0cm 0cm,clip]{figures/legend_ksync.png}
  \end{subfigure}
    \caption{MSE performance comparison on GNNSync against baselines on $k-$synchronization with $k=4$ for ERO models. $p$ is the network density and $\eta$ is the noise level. Error bars indicate one standard deviation. 
    }
    \label{fig:main_k4_ERO}
\end{figure*}

\begin{figure*}[!hbt]
    \centering
  \begin{subfigure}[ht]{0.245\linewidth}
    \centering
    \includegraphics[width=\linewidth,trim=0cm 0cm 0cm 1.8cm,clip]{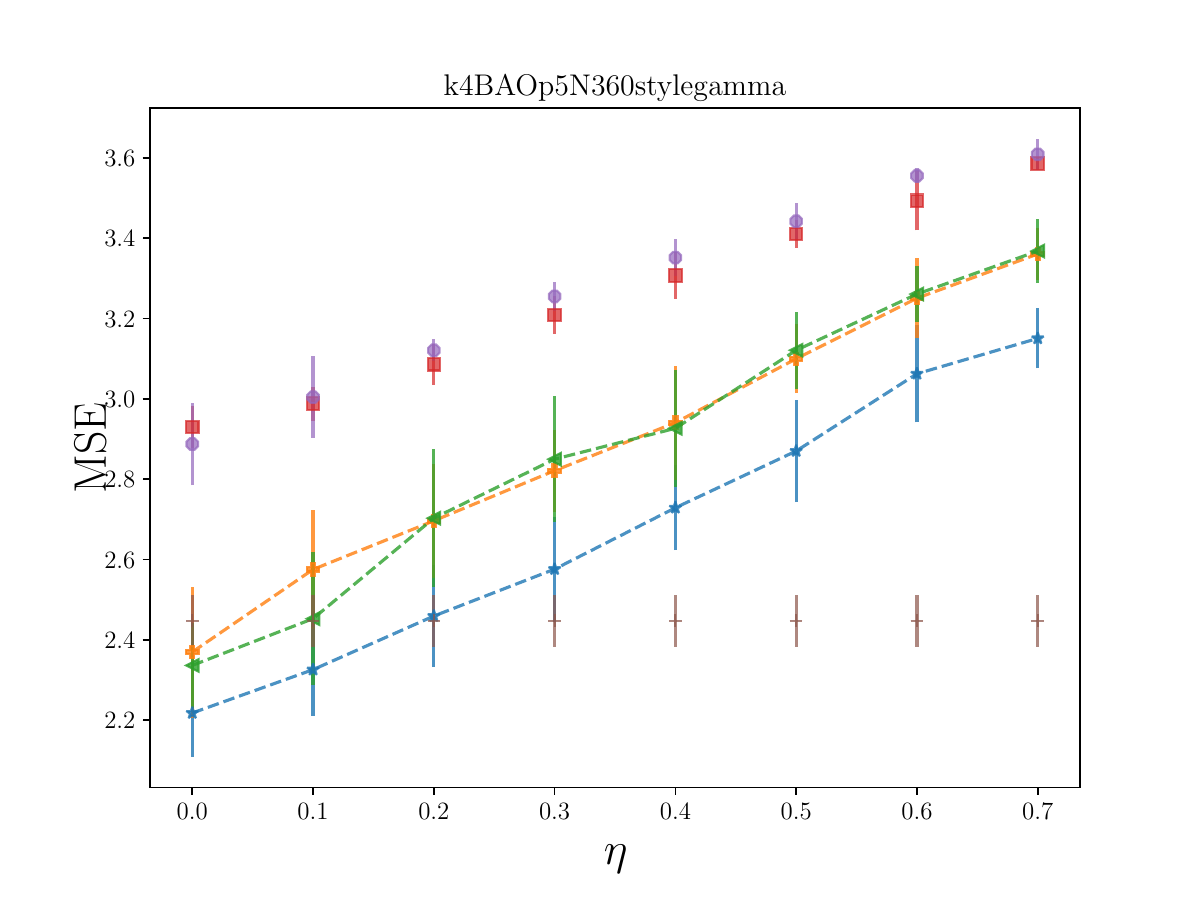}
    \caption{$\text{BAO}_1(p=0.05)$}
  \end{subfigure}
  \begin{subfigure}[ht]{0.245\linewidth}
    \centering
    \includegraphics[width=\linewidth,trim=0cm 0cm 0cm 1.8cm,clip]{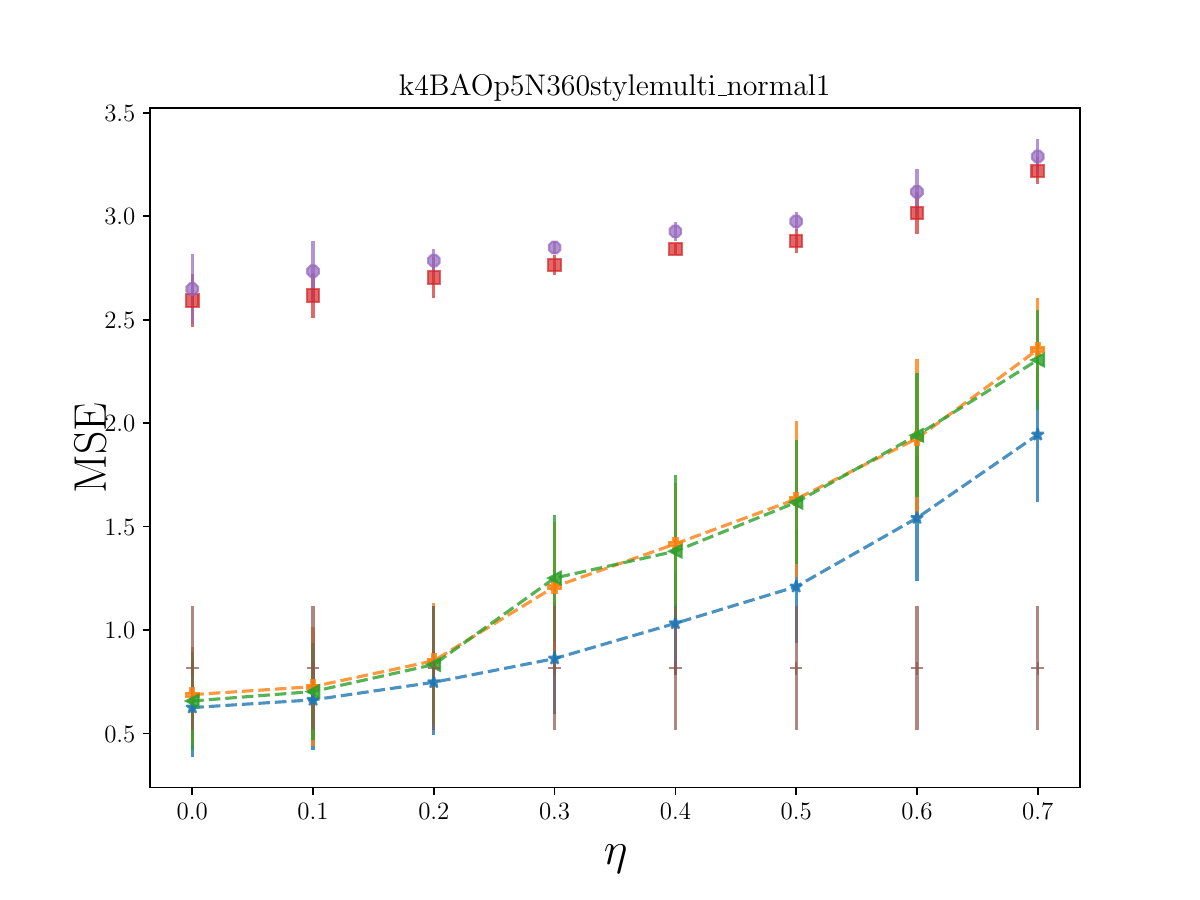}
    \caption{$\text{BAO}_2(p=0.05)$}
  \end{subfigure}
  \begin{subfigure}[ht]{0.245\linewidth}
    \centering
    \includegraphics[width=\linewidth,trim=0cm 0cm 0cm 1.8cm,clip]{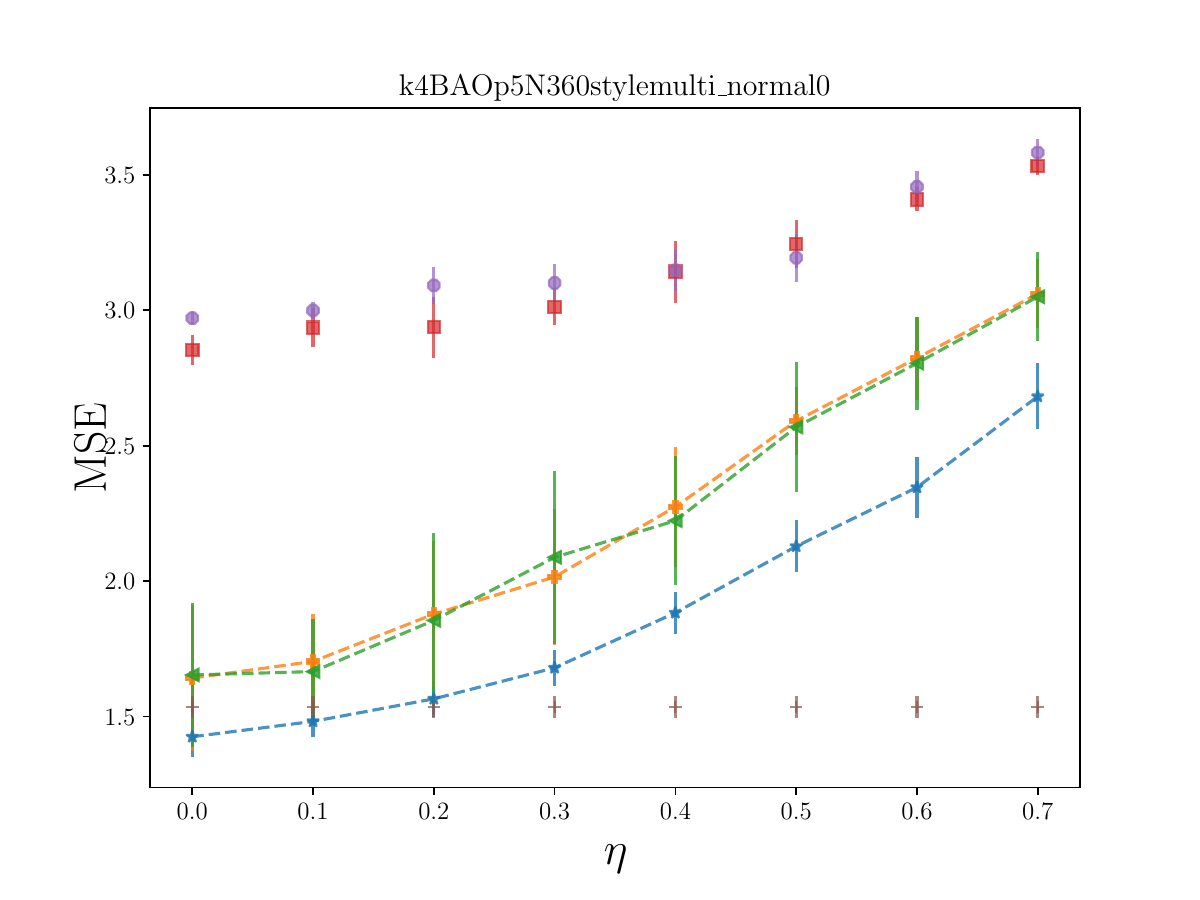}
    \caption{$\text{BAO}_3(p=0.05)$}
  \end{subfigure}
  \begin{subfigure}[ht]{0.245\linewidth}
    \centering
    \includegraphics[width=\linewidth,trim=0cm 0cm 0cm 1.8cm,clip]{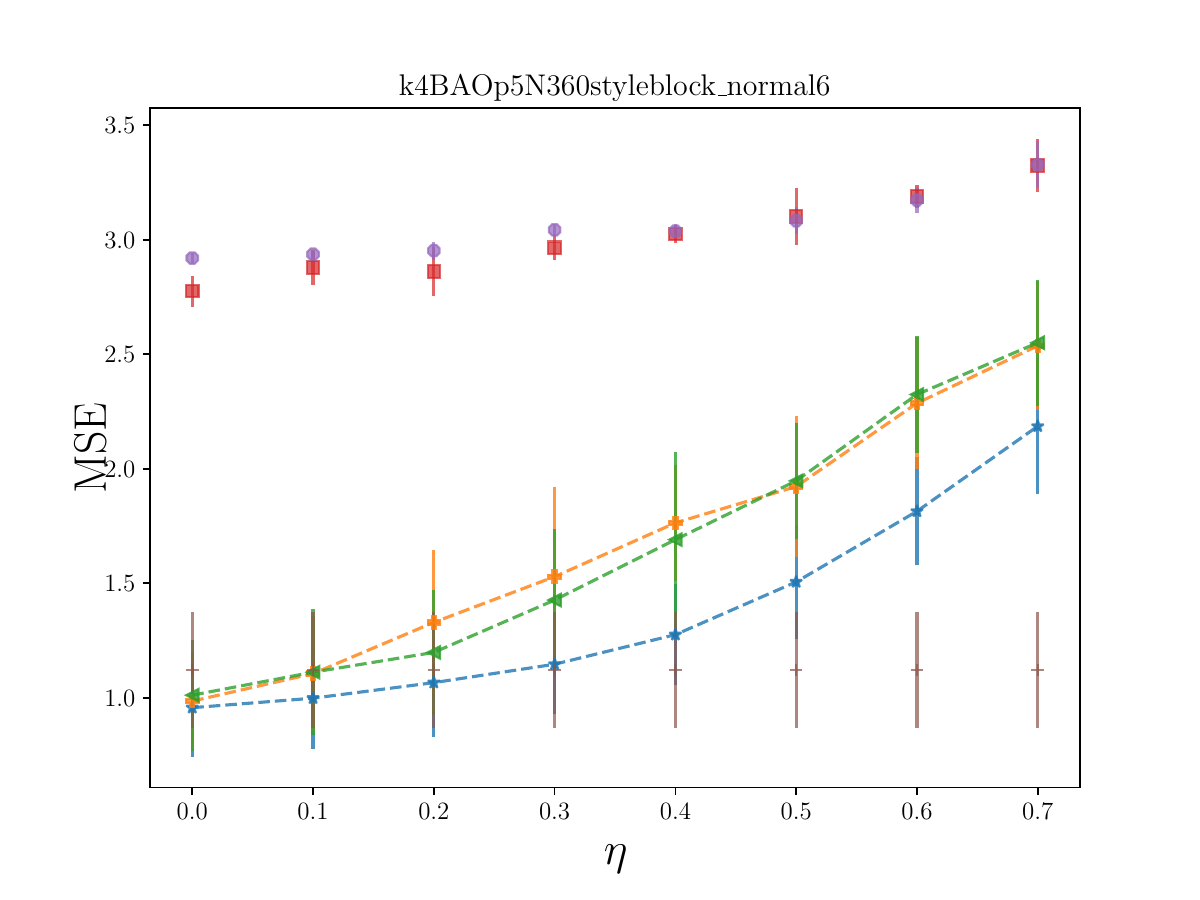}
    \caption{$\text{BAO}_4(p=0.05)$}
  \end{subfigure}
  %%%%%%%%%%%%%%%%%%%%%%%%%%%%
  \begin{subfigure}[ht]{0.245\linewidth}
    \centering
    \includegraphics[width=\linewidth,trim=0cm 0cm 0cm 1.8cm,clip]{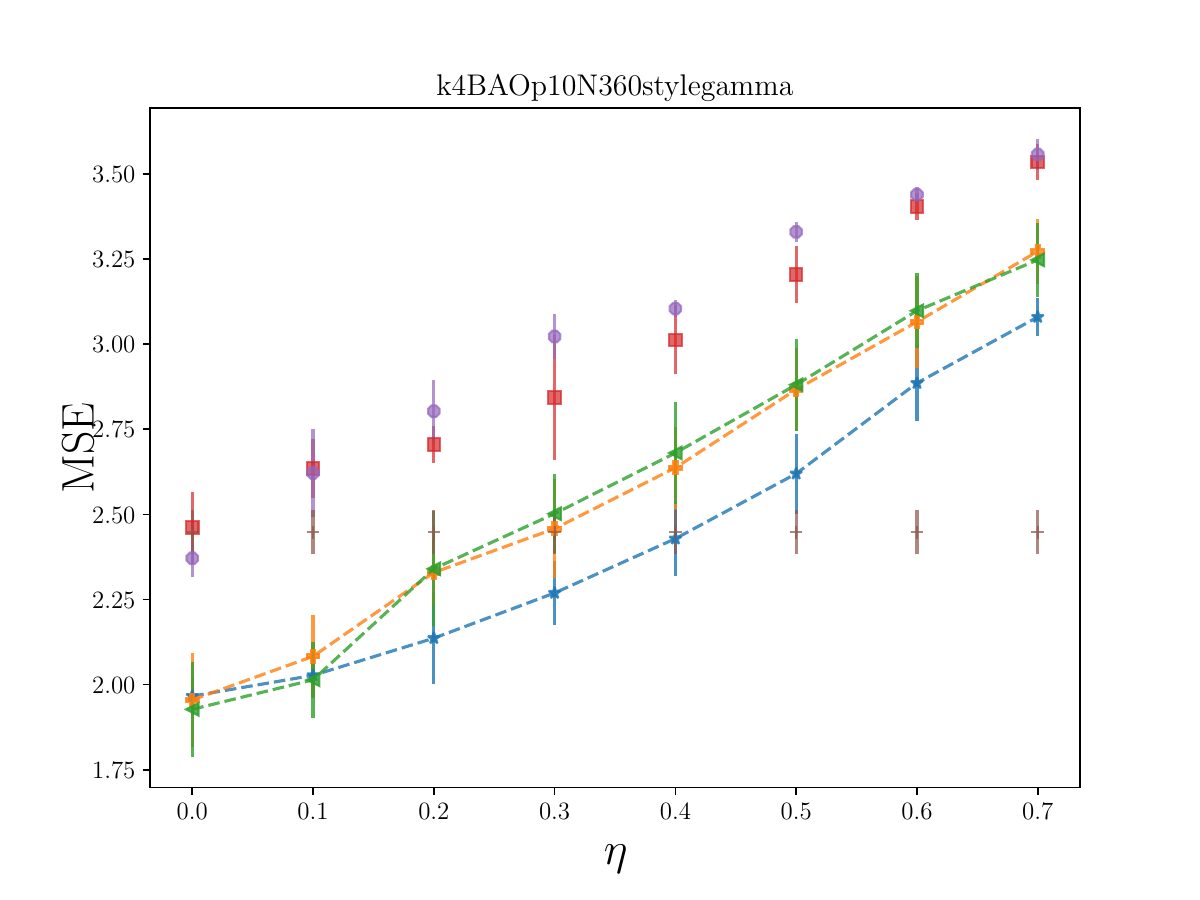}
    \caption{$\text{BAO}_1(p=0.1)$}
  \end{subfigure}
  \begin{subfigure}[ht]{0.245\linewidth}
    \centering
    \includegraphics[width=\linewidth,trim=0cm 0cm 0cm 1.8cm,clip]{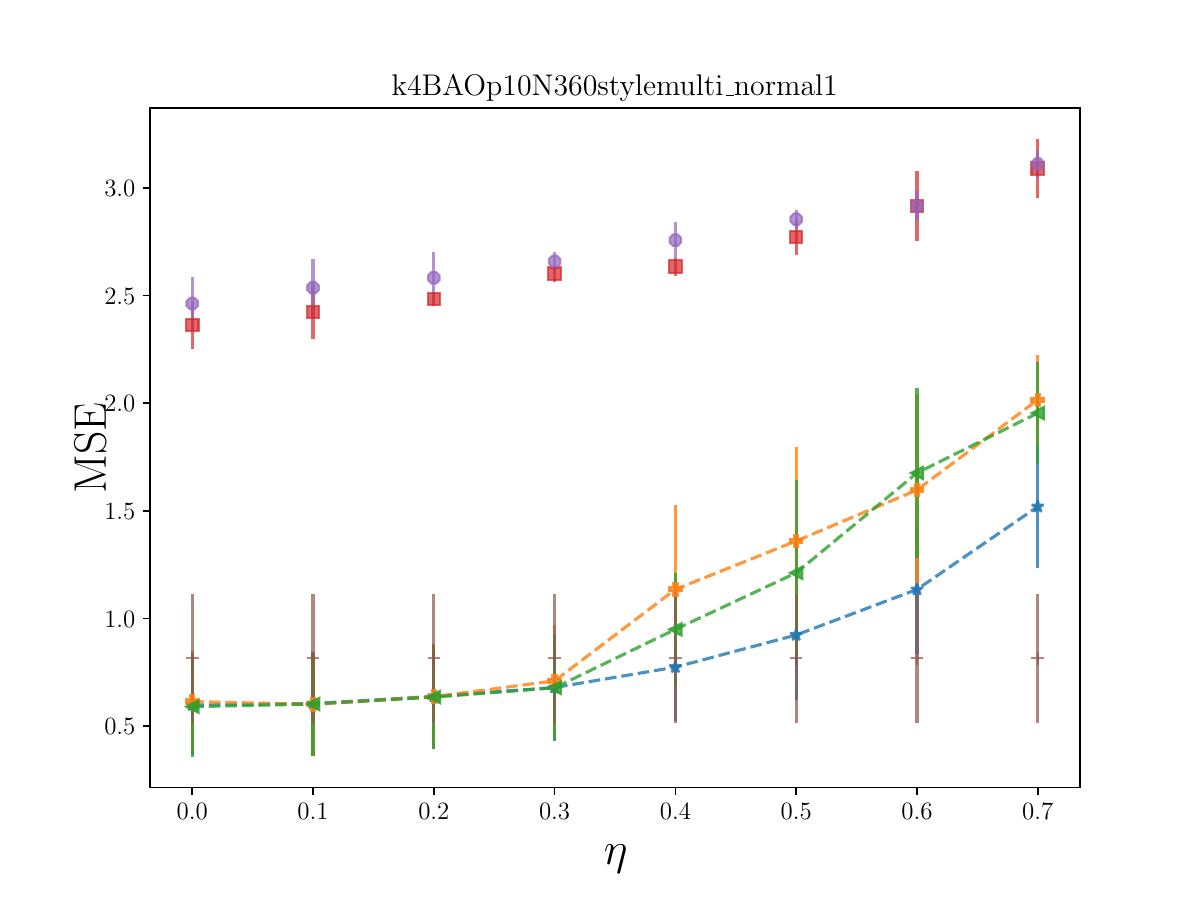}
    \caption{$\text{BAO}_2(p=0.1)$}
  \end{subfigure}
  \begin{subfigure}[ht]{0.245\linewidth}
    \centering
    \includegraphics[width=\linewidth,trim=0cm 0cm 0cm 1.8cm,clip]{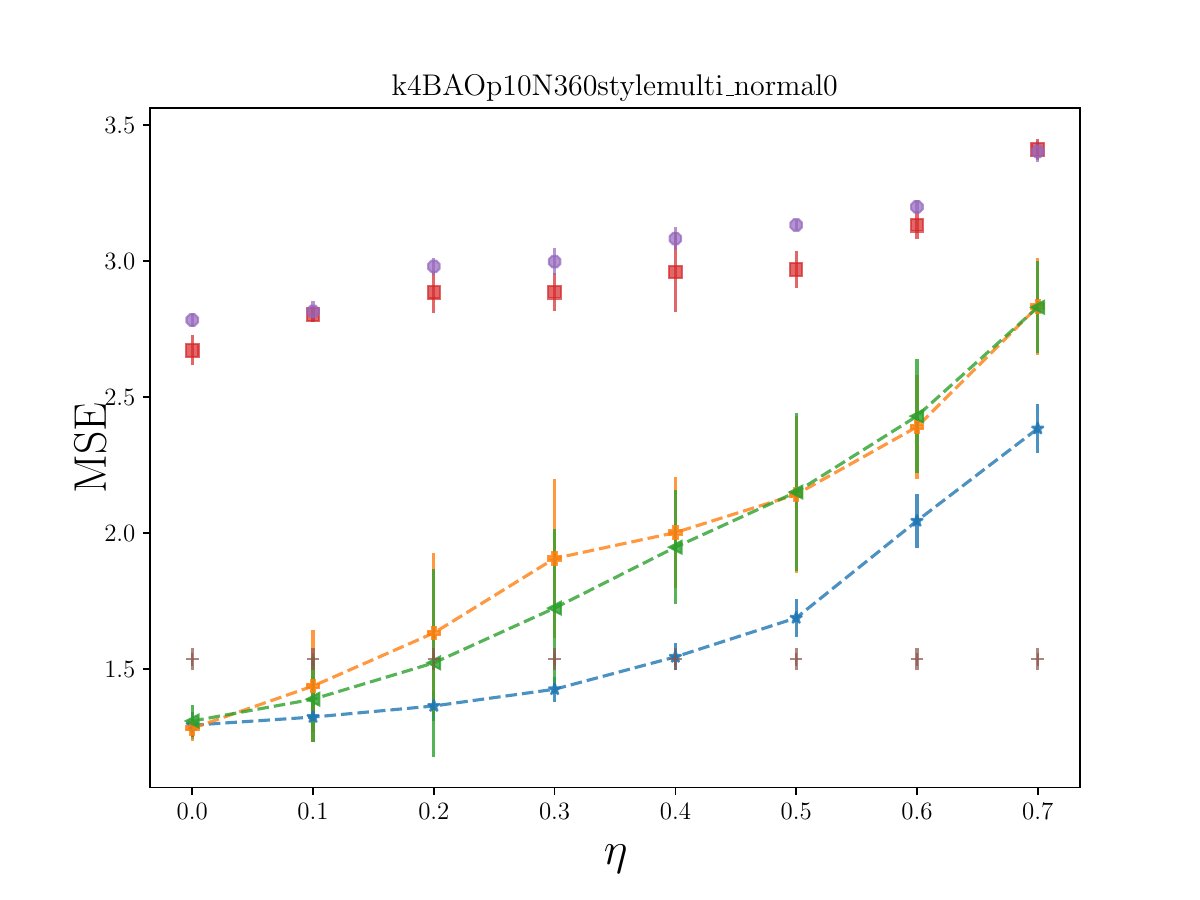}
    \caption{$\text{BAO}_3(p=0.1)$}
  \end{subfigure}
  \begin{subfigure}[ht]{0.245\linewidth}
    \centering
    \includegraphics[width=\linewidth,trim=0cm 0cm 0cm 1.8cm,clip]{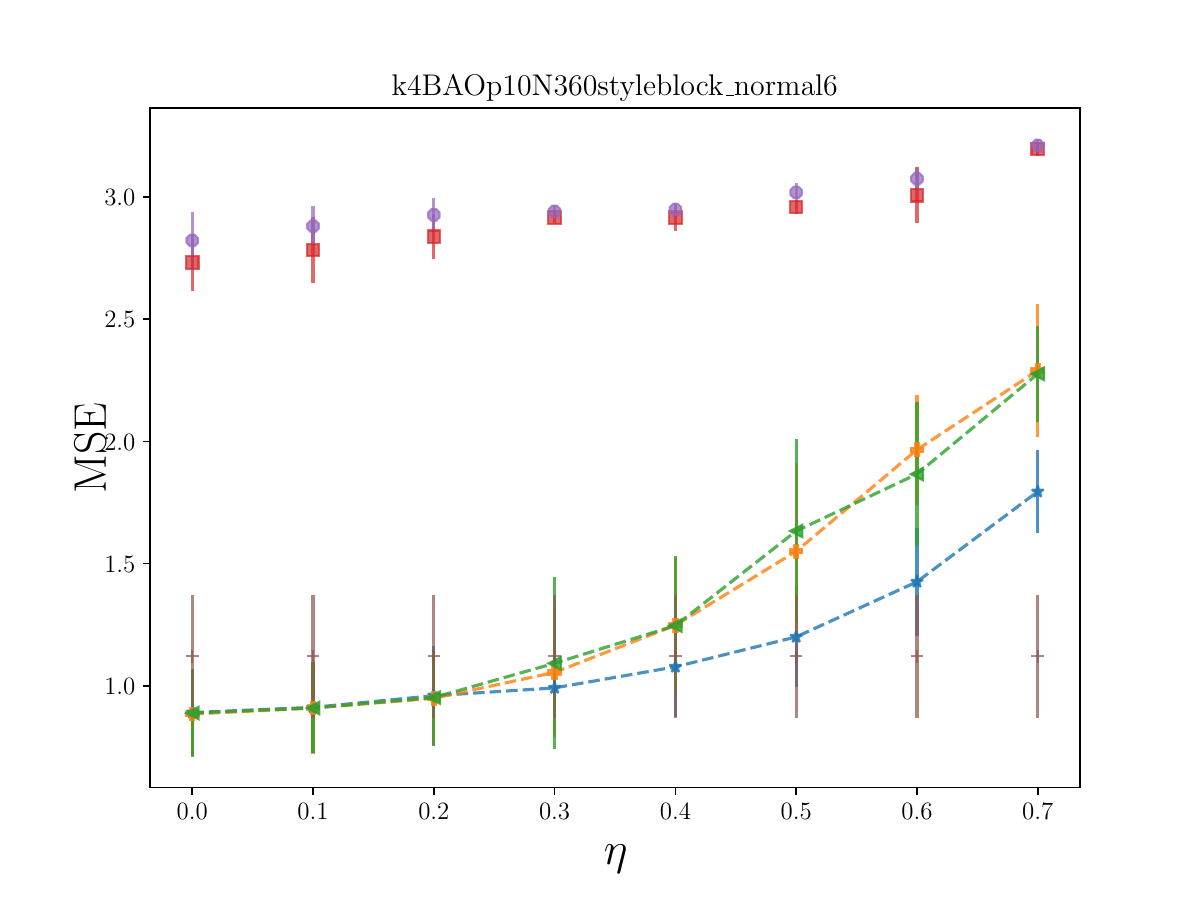}
    \caption{$\text{BAO}_4(p=0.1)$}
  \end{subfigure}
  %%%%%%%%%%%%%%%%%%%%%%%%%%%%%%%%
  \begin{subfigure}[ht]{0.245\linewidth}
    \centering
    \includegraphics[width=\linewidth,trim=0cm 0cm 0cm 1.8cm,clip]{figures/k4BAOp15N360stylegamma.pdf}
    \caption{$\text{BAO}_1(p=0.15)$}
  \end{subfigure}
  \begin{subfigure}[ht]{0.245\linewidth}
    \centering
    \includegraphics[width=\linewidth,trim=0cm 0cm 0cm 1.8cm,clip]{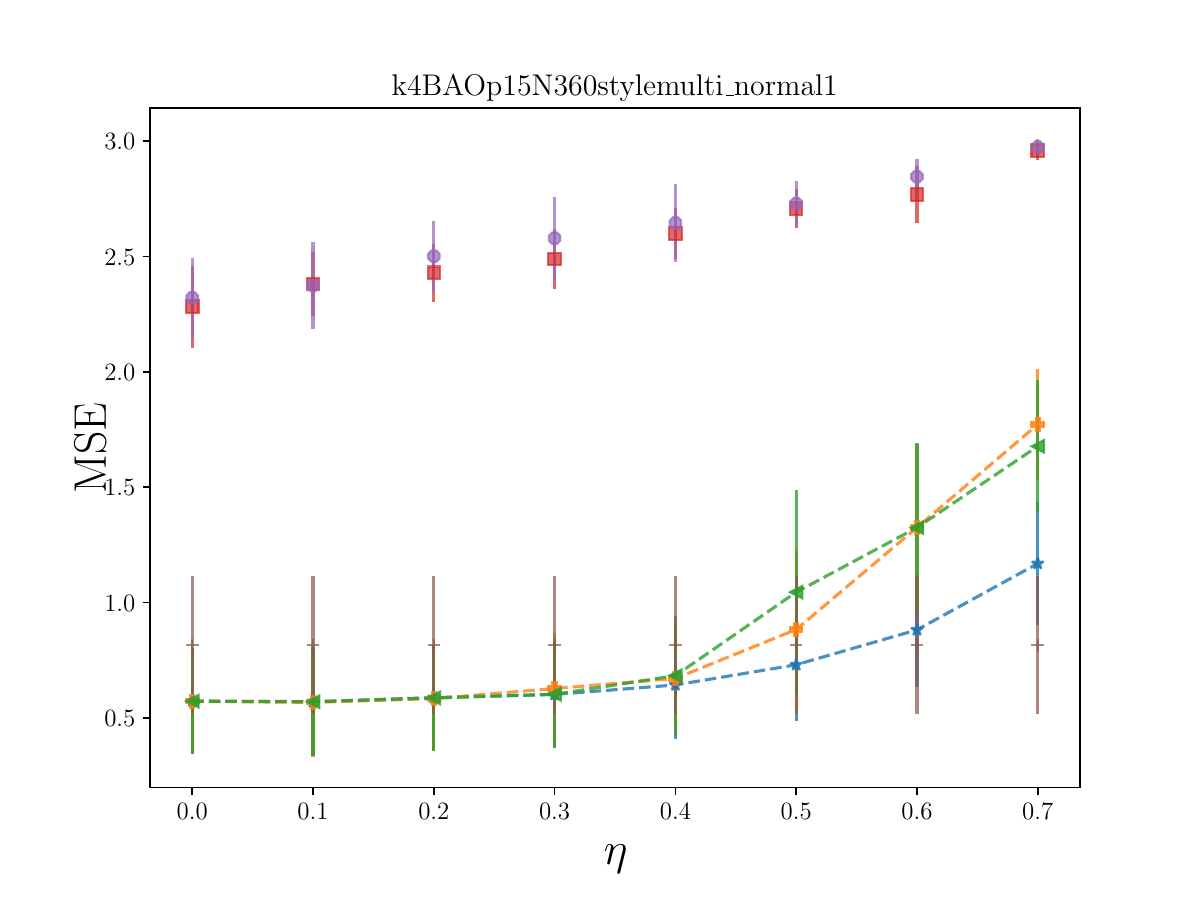}
    \caption{$\text{BAO}_2(p=0.15)$}
  \end{subfigure}
  \begin{subfigure}[ht]{0.245\linewidth}
    \centering
    \includegraphics[width=\linewidth,trim=0cm 0cm 0cm 1.8cm,clip]{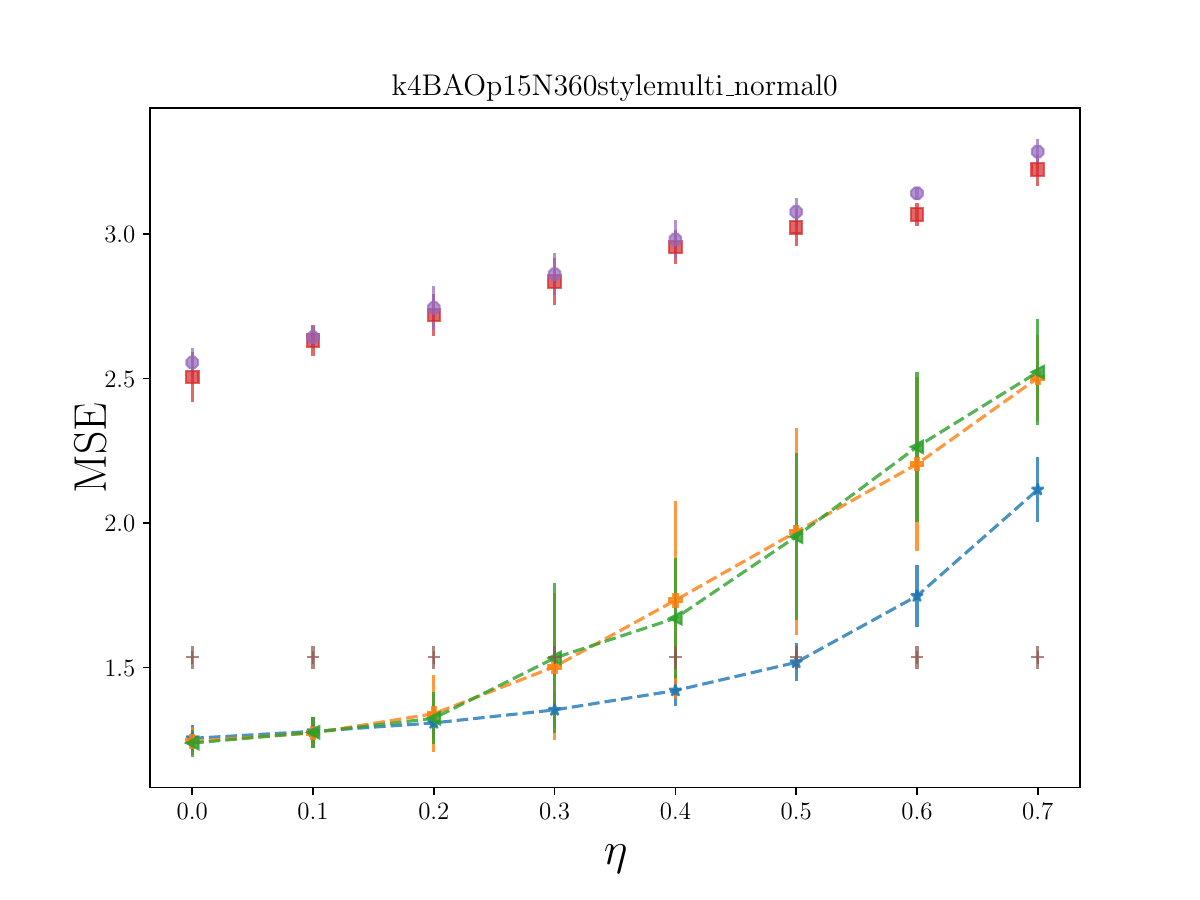}
    \caption{$\text{BAO}_3(p=0.15)$}
  \end{subfigure}
  \begin{subfigure}[ht]{0.245\linewidth}
    \centering
    \includegraphics[width=\linewidth,trim=0cm 0cm 0cm 1.8cm,clip]{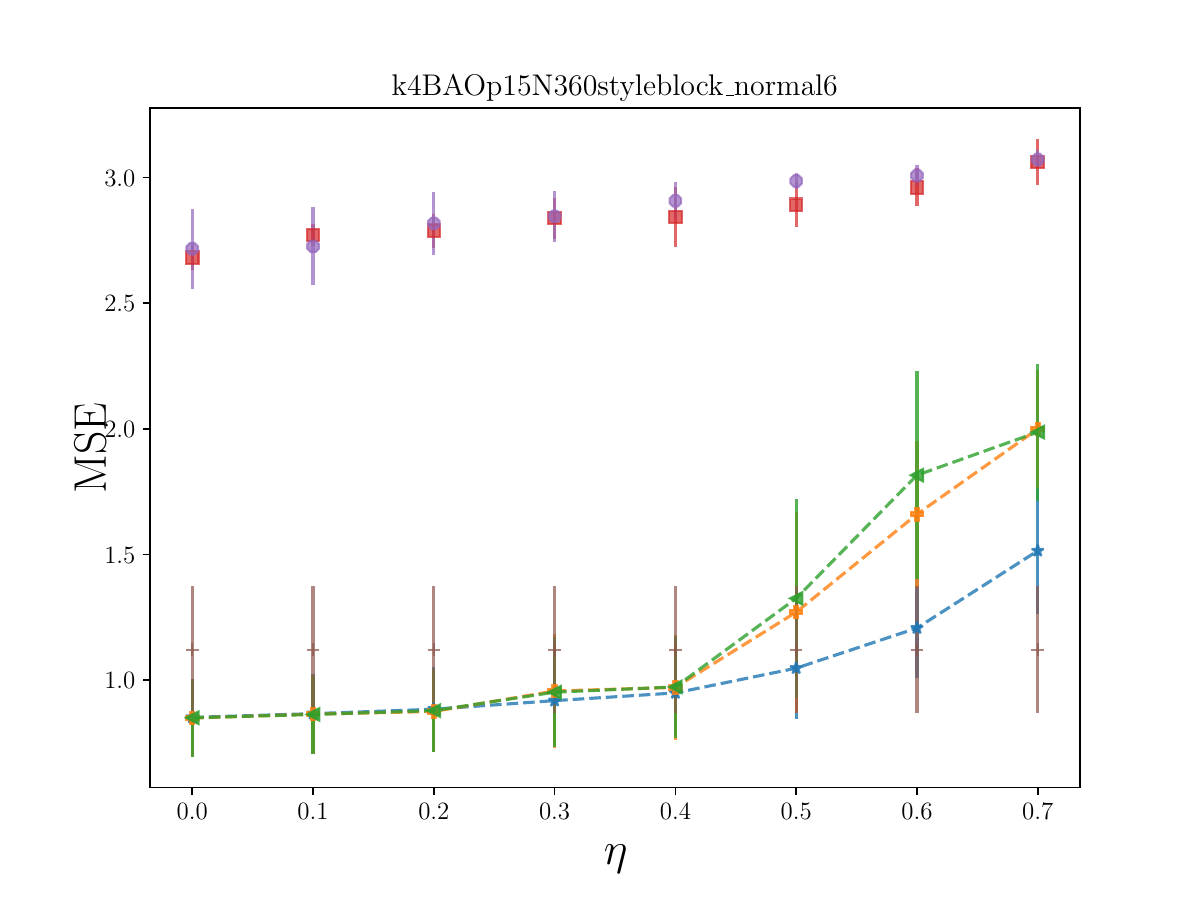}
    \caption{$\text{BAO}_4(p=0.15)$}
  \end{subfigure}
  %%%%%%%%%%%%%%%%%%%%%%%%%%%%%%%%%%%
\begin{subfigure}[ht]{\linewidth}
    \centering
    \includegraphics[width=1.0\linewidth,trim=0cm 0cm 0cm 0cm,clip]{figures/legend_ksync.png}
  \end{subfigure}
    \caption{MSE performance comparison on GNNSync against baselines on $k-$synchronization with $k=4$ for BAO models. $p$ is the network density and $\eta$ is the noise level. Error bars indicate one standard deviation. 
    }
    \label{fig:main_k4_BAO}
\end{figure*}

\begin{figure*}[!hbt]
    \centering
  \begin{subfigure}[ht]{0.245\linewidth}
    \centering
    -------------\includegraphics[width=\linewidth,trim=0cm 0cm 0cm 1.8cm,clip]{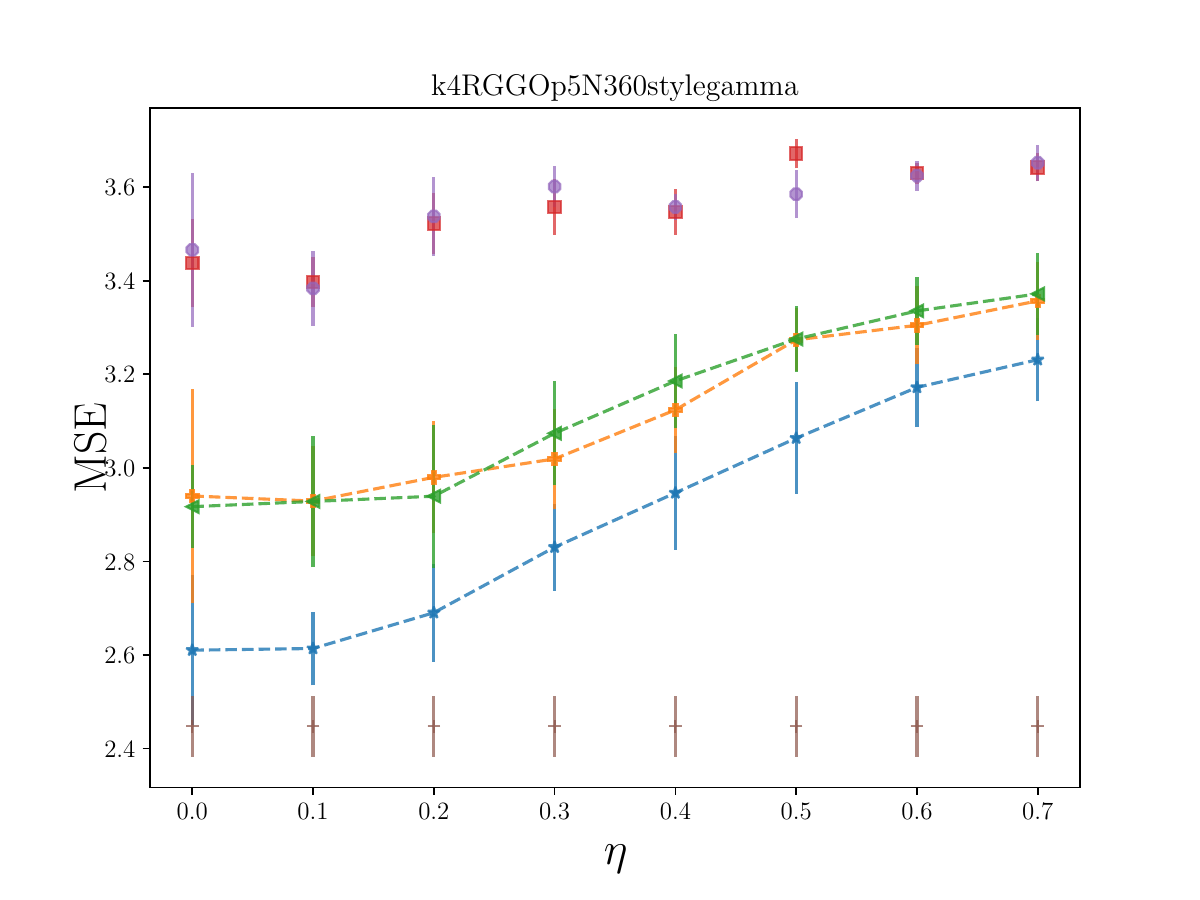}
    \caption{$\text{RGGO}_1(p=0.05)$}
  \end{subfigure}
  \begin{subfigure}[ht]{0.245\linewidth}
    \centering
    \includegraphics[width=\linewidth,trim=0cm 0cm 0cm 1.8cm,clip]{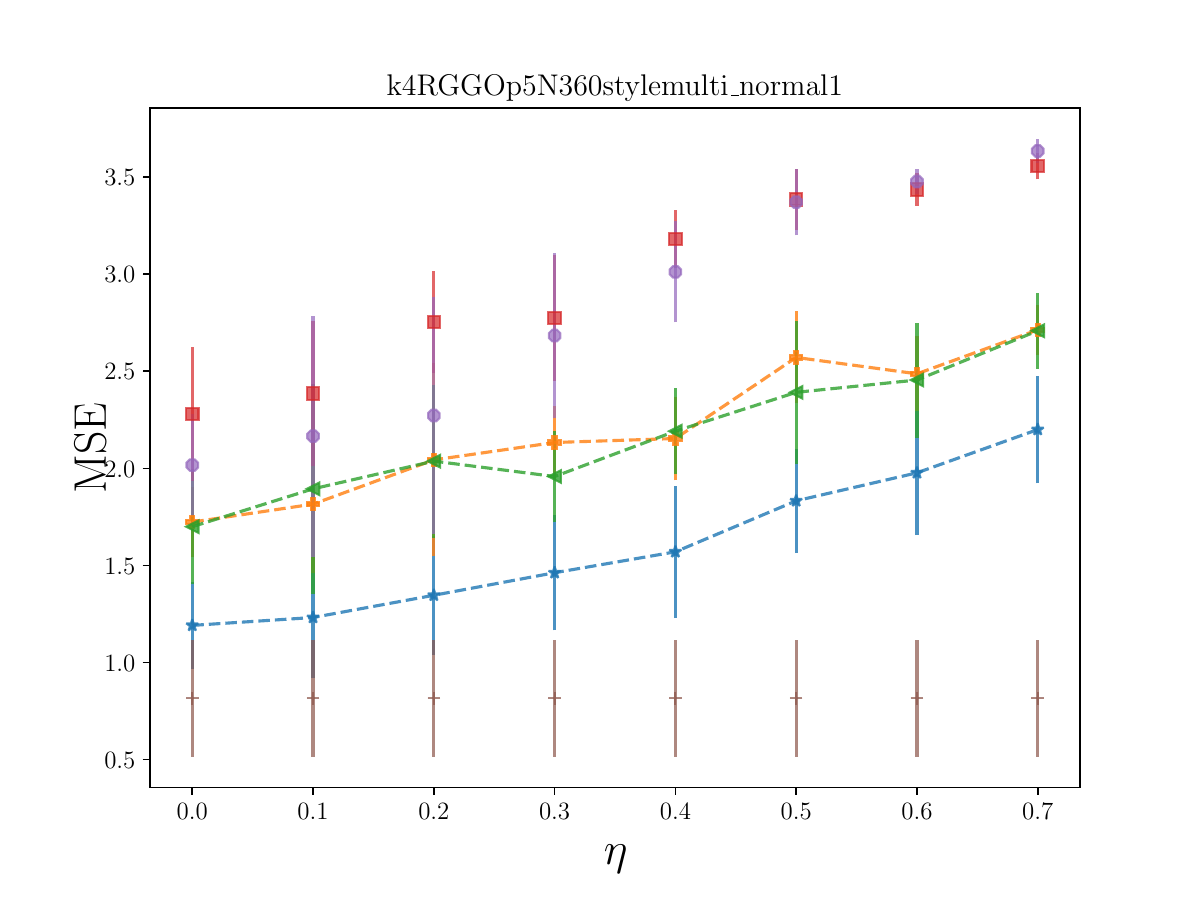}
    \caption{$\text{RGGO}_2(p=0.05)$}
  \end{subfigure}
  \begin{subfigure}[ht]{0.245\linewidth}
    \centering
    \includegraphics[width=\linewidth,trim=0cm 0cm 0cm 1.8cm,clip]{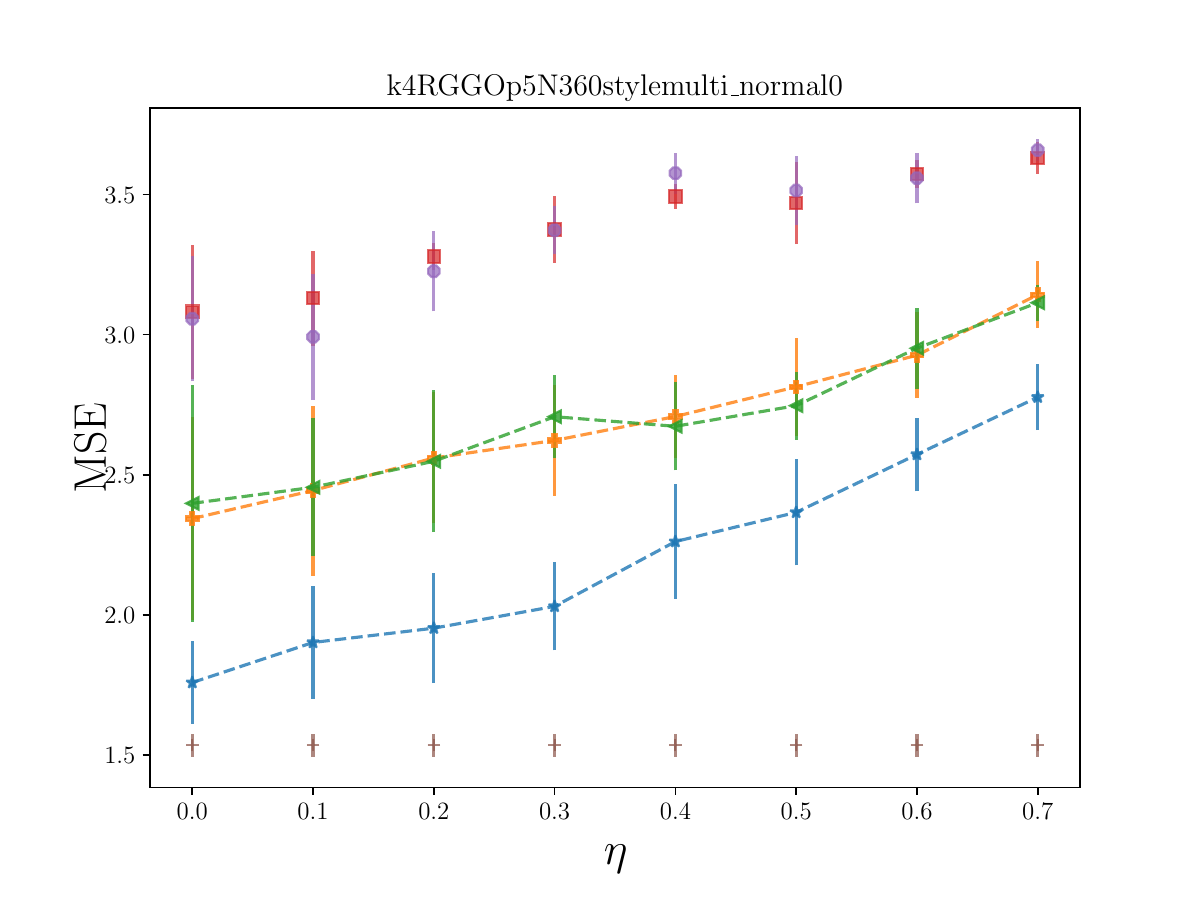}
    \caption{$\text{RGGO}_3(p=0.05)$}
  \end{subfigure}
  \begin{subfigure}[ht]{0.245\linewidth}
    \centering
    \includegraphics[width=\linewidth,trim=0cm 0cm 0cm 1.8cm,clip]{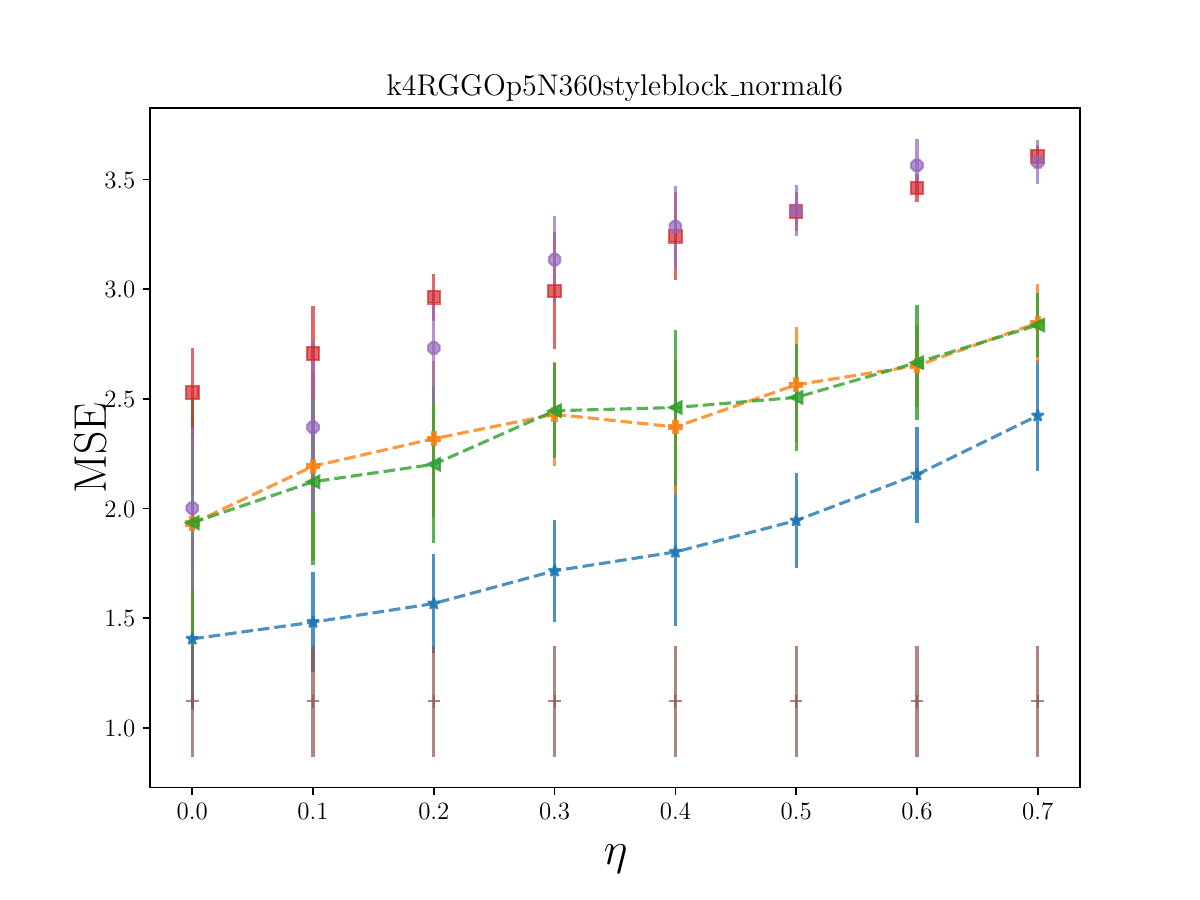}
    \caption{$\text{RGGO}_4(p=0.05)$}
  \end{subfigure}
  %%%%%%%%%%%%%%%%%%%%%%%%%%%%
  \begin{subfigure}[ht]{0.245\linewidth}
    \centering
    \includegraphics[width=\linewidth,trim=0cm 0cm 0cm 1.8cm,clip]{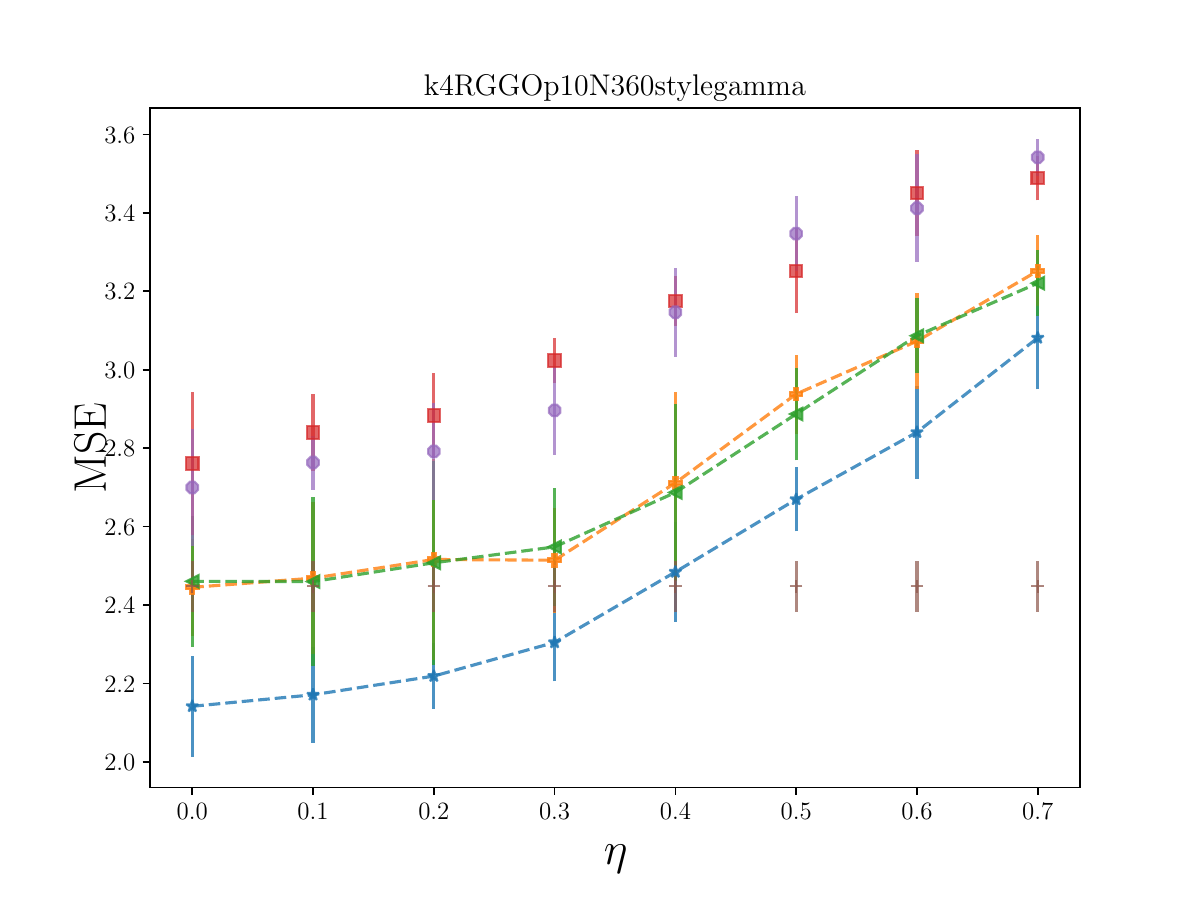}
    \caption{$\text{RGGO}_1(p=0.1)$}
  \end{subfigure}
  \begin{subfigure}[ht]{0.245\linewidth}
    \centering
    \includegraphics[width=\linewidth,trim=0cm 0cm 0cm 1.8cm,clip]{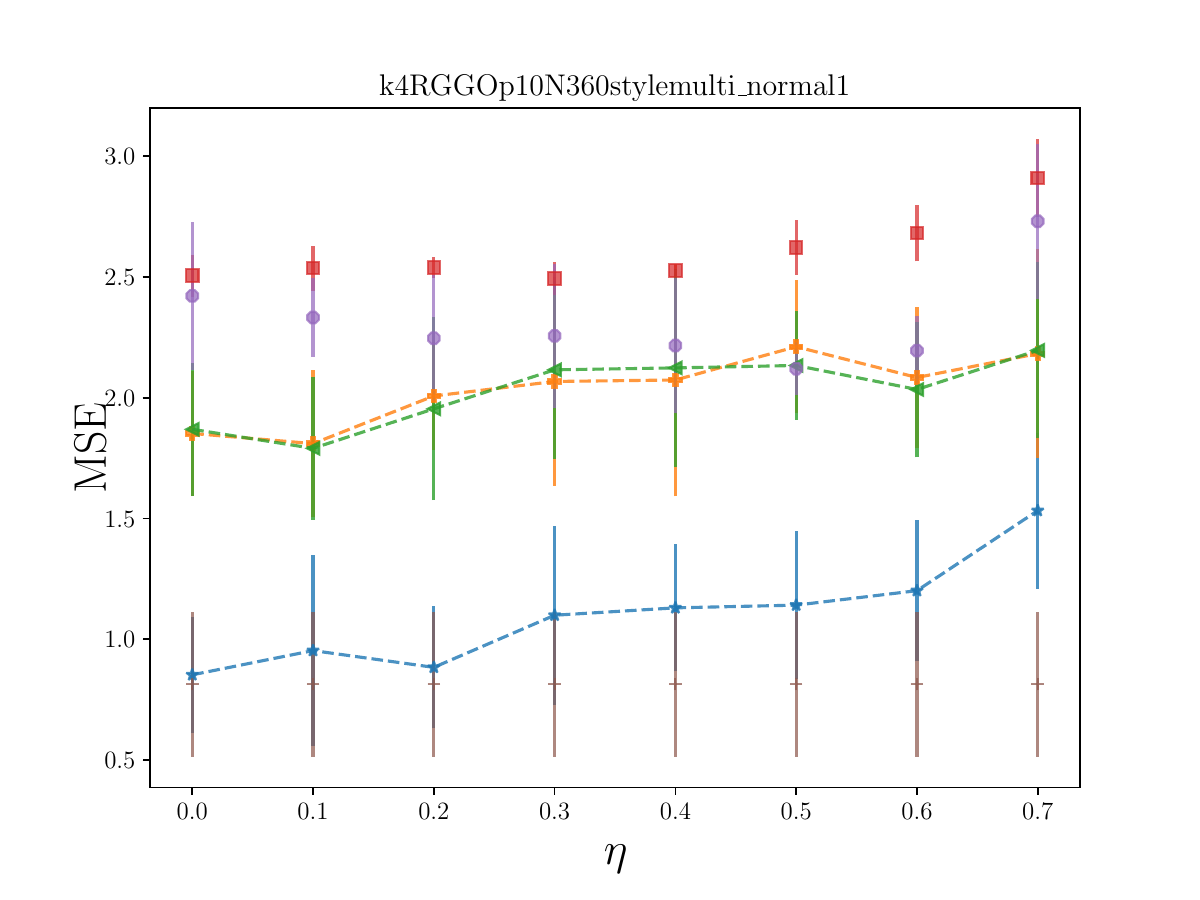}
    \caption{$\text{RGGO}_2(p=0.1)$}
  \end{subfigure}
  \begin{subfigure}[ht]{0.245\linewidth}
    \centering
    \includegraphics[width=\linewidth,trim=0cm 0cm 0cm 1.8cm,clip]{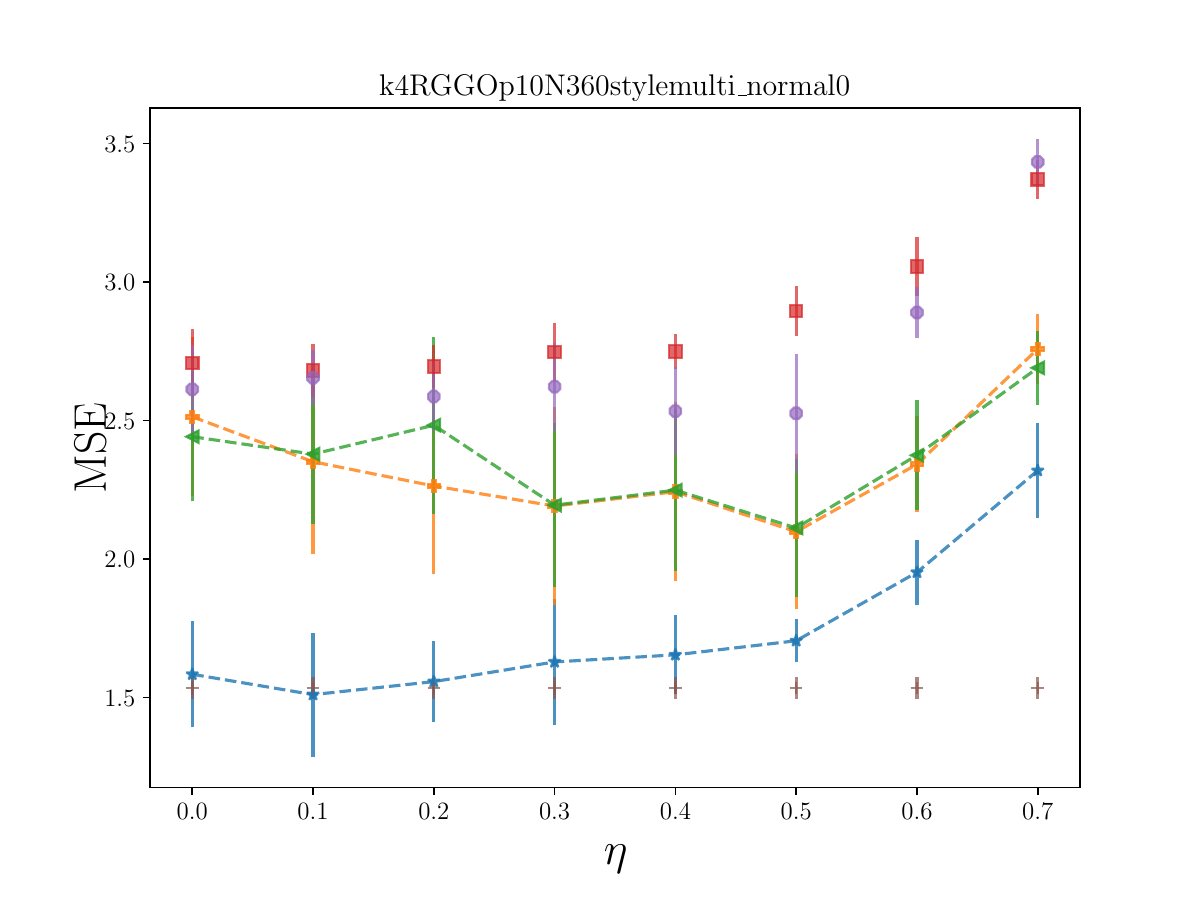}
    \caption{$\text{RGGO}_3(p=0.1)$}
  \end{subfigure}
  \begin{subfigure}[ht]{0.245\linewidth}
    \centering
    \includegraphics[width=\linewidth,trim=0cm 0cm 0cm 1.8cm,clip]{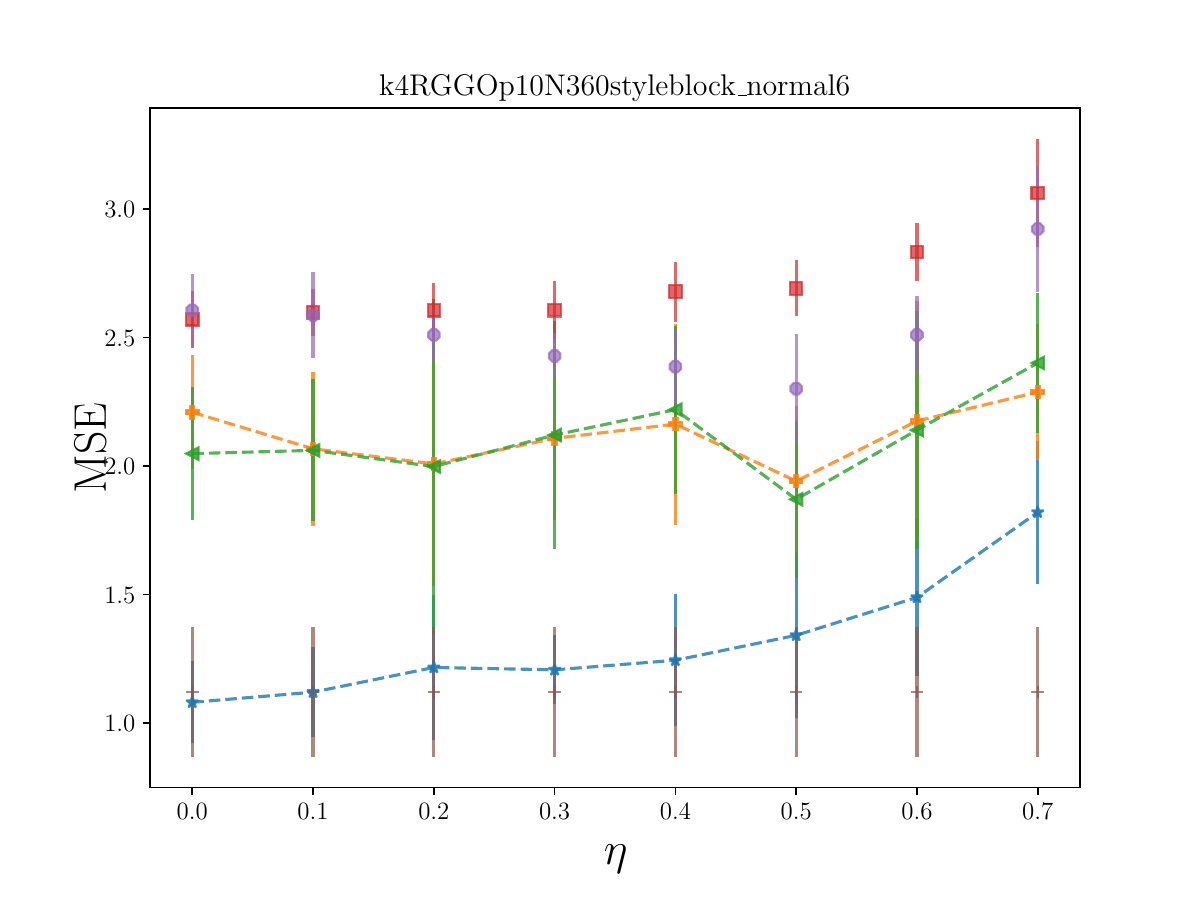}
    \caption{$\text{RGGO}_4(p=0.1)$}
  \end{subfigure}
  %%%%%%%%%%%%%%%%%%%%%%%%%%%%%%%%
  \begin{subfigure}[ht]{0.245\linewidth}
    \centering
    \includegraphics[width=\linewidth,trim=0cm 0cm 0cm 1.8cm,clip]{figures/k4RGGOp15N360stylegamma.pdf}
    \caption{$\text{RGGO}_1(p=0.15)$}
  \end{subfigure}
  \begin{subfigure}[ht]{0.245\linewidth}
    \centering
    \includegraphics[width=\linewidth,trim=0cm 0cm 0cm 1.8cm,clip]{figures/k4RGGOp15N360stylemulti_normal1.pdf}
    \caption{$\text{RGGO}_2(p=0.15)$}
  \end{subfigure}
  \begin{subfigure}[ht]{0.245\linewidth}
    \centering
    \includegraphics[width=\linewidth,trim=0cm 0cm 0cm 1.8cm,clip]{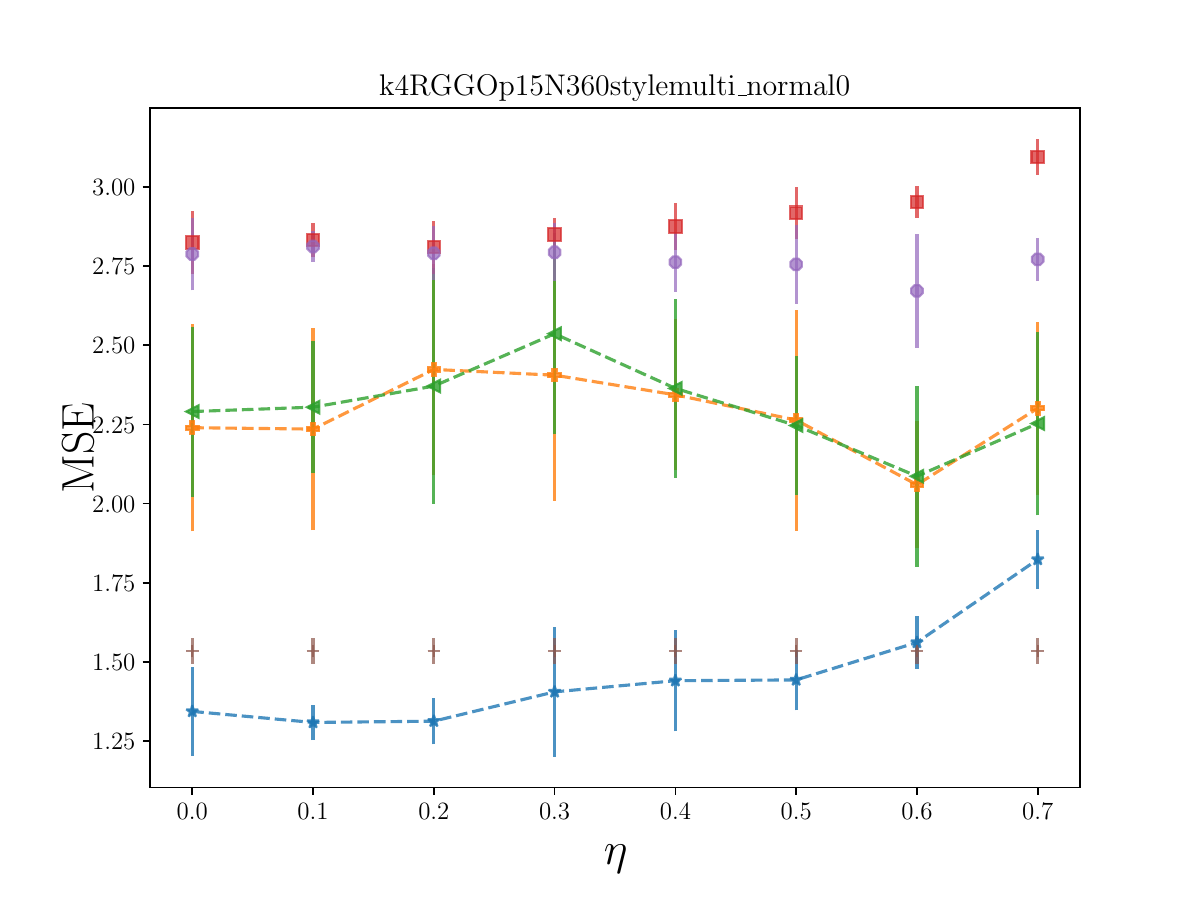}
    \caption{$\text{RGGO}_3(p=0.15)$}
  \end{subfigure}
  \begin{subfigure}[ht]{0.245\linewidth}
    \centering
    \includegraphics[width=\linewidth,trim=0cm 0cm 0cm 1.8cm,clip]{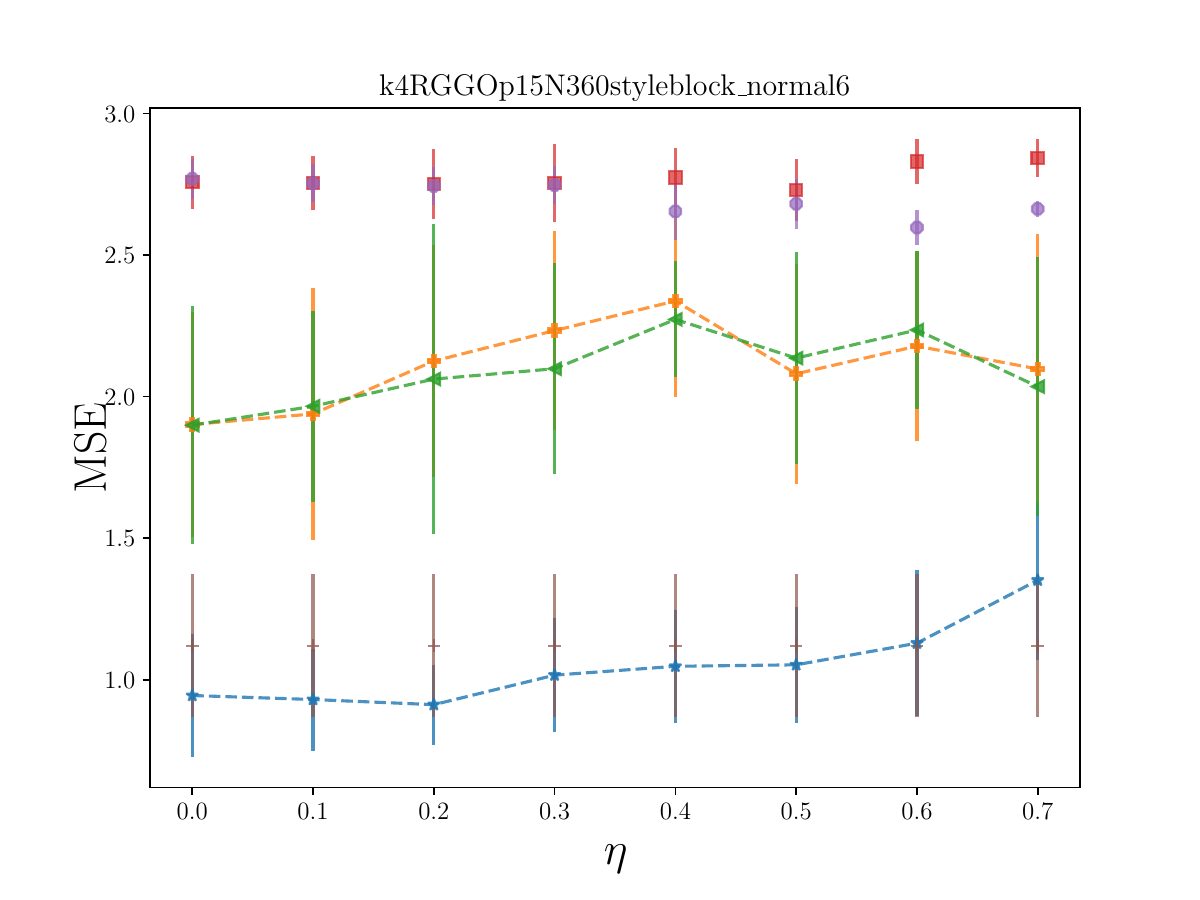}
    \caption{$\text{RGGO}_4(p=0.15)$}
  \end{subfigure}
  %%%%%%%%%%%%%%%%%%%%%%%%%%%%%%%%%%%
\begin{subfigure}[ht]{\linewidth}
    \centering
    \includegraphics[width=1.0\linewidth,trim=0cm 0cm 0cm 0cm,clip]{figures/legend_ksync.png}
  \end{subfigure}
    \caption{MSE performance comparison on GNNSync against baselines on $k-$synchronization with $k=4$ for RGGO models. $p$ is the network density and $\eta$ is the noise level. Error bars indicate one standard deviation. 
    }
    \label{fig:main_k4_RGGO}
\end{figure*}

\begin{figure*}[!hbt]
    \centering
  \begin{subfigure}[ht]{0.245\linewidth}
    \centering
    \includegraphics[width=\linewidth,trim=0cm 0cm 0cm 0cm,clip]{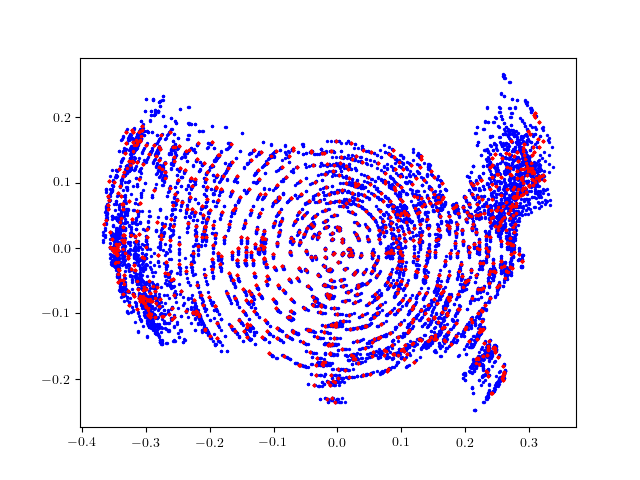}
    \caption{GNNSync, $\eta=0$}
  \end{subfigure}
  \begin{subfigure}[ht]{0.245\linewidth}
    \centering
    \includegraphics[width=\linewidth,trim=0cm 0cm 0cm 0cm,clip]{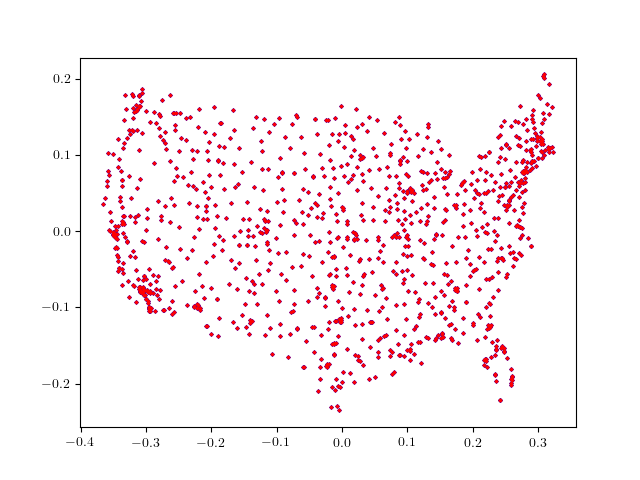}
    \caption{Spectral, $\eta=0$}
  \end{subfigure}
  \begin{subfigure}[ht]{0.245\linewidth}
    \centering
    \includegraphics[width=\linewidth,trim=0cm 0cm 0cm 0cm,clip]{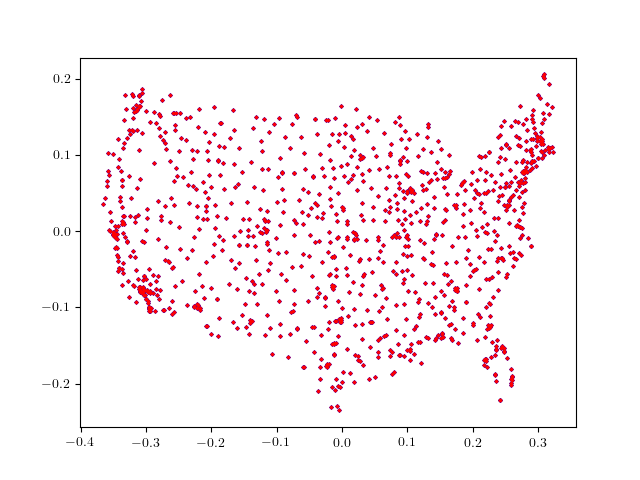}
    \caption{Spectral\_RN, $\eta=0$}
  \end{subfigure}
  \begin{subfigure}[ht]{0.245\linewidth}
    \centering
    \includegraphics[width=\linewidth,trim=0cm 0cm 0cm 0cm,clip]{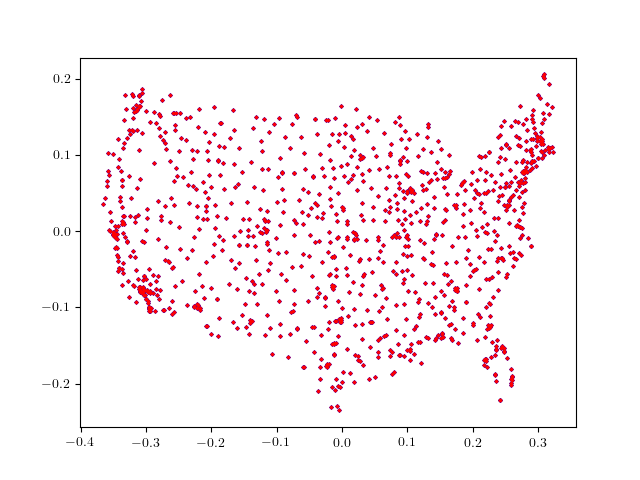}
    \caption{GPM, $\eta=0$}
  \end{subfigure}
  \begin{subfigure}[ht]{0.245\linewidth}
    \centering
    \includegraphics[width=\linewidth,trim=0cm 0cm 0cm 0cm,clip]{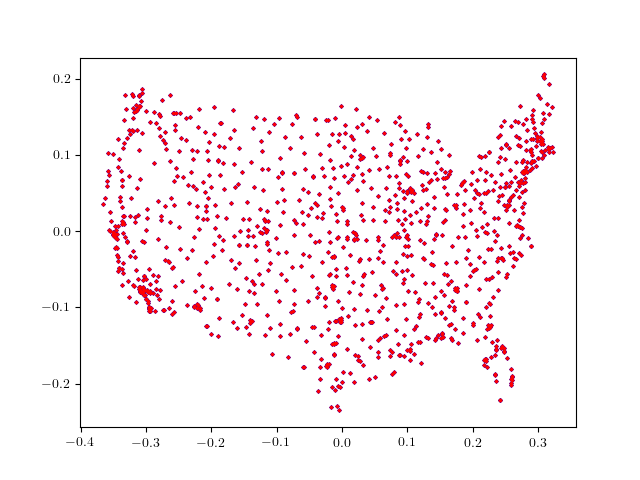}
    \caption{TranSync, $\eta=0$}
  \end{subfigure}
  \begin{subfigure}[ht]{0.245\linewidth}
    \centering
    \includegraphics[width=\linewidth,trim=0cm 0cm 0cm 0cm,clip]{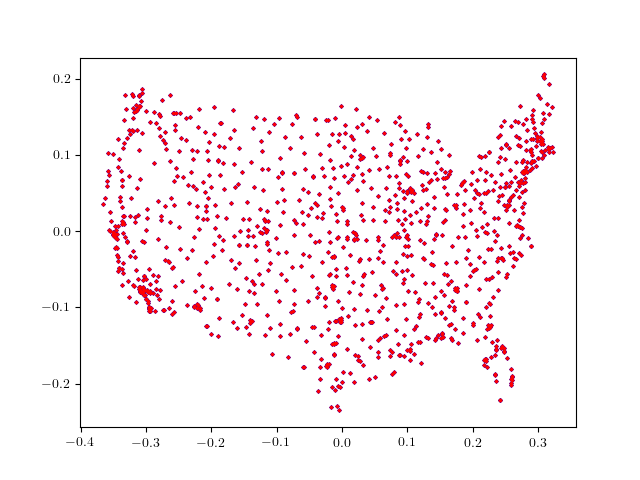}
    \caption{CEMP\_GCW, $\eta=0$}
  \end{subfigure}
  \begin{subfigure}[ht]{0.245\linewidth}
    \centering
    \includegraphics[width=\linewidth,trim=0cm 0cm 0cm 0cm,clip]{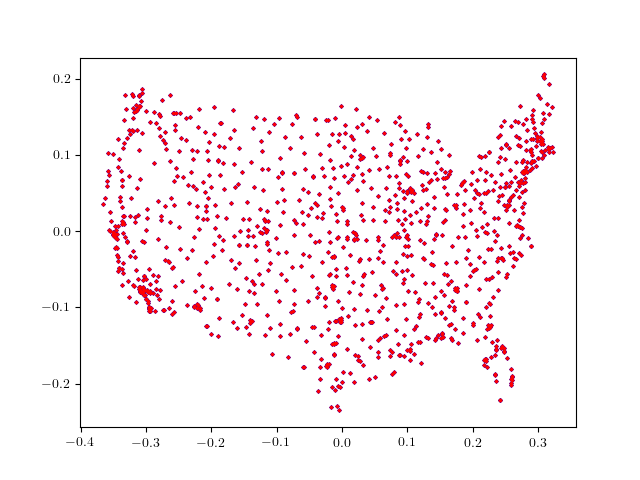}
    \caption{CEMP\_MST, $\eta=0$}
  \end{subfigure}
  \begin{subfigure}[ht]{0.245\linewidth}
    \centering
    \includegraphics[width=\linewidth,trim=0cm 0cm 0cm 0cm,clip]{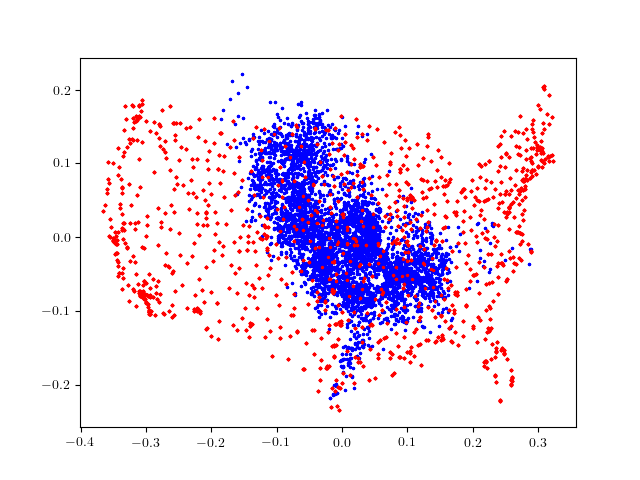}
    \caption{TAS, $\eta=0$}
  \end{subfigure}
%%%%%%%%%%%%%%%%%%%%%%%%%%%%%%%%%%%
  \begin{subfigure}[ht]{0.245\linewidth}
    \centering
    \includegraphics[width=\linewidth,trim=0cm 0cm 0cm 0cm,clip]{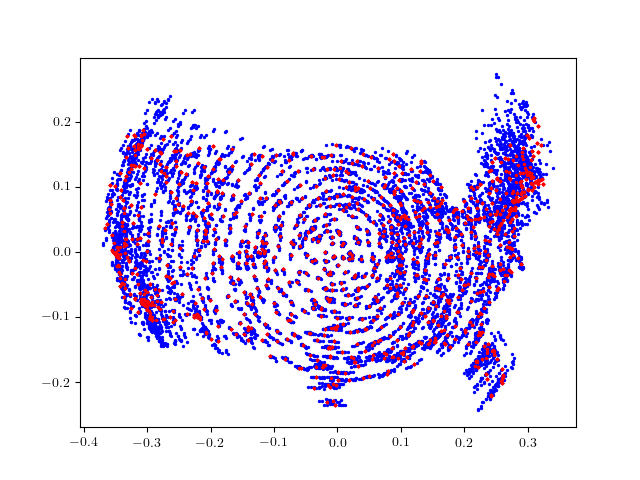}
    \caption{GNNSync, $\eta=0.05$}
  \end{subfigure}
  \begin{subfigure}[ht]{0.245\linewidth}
    \centering
    \includegraphics[width=\linewidth,trim=0cm 0cm 0cm 0cm,clip]{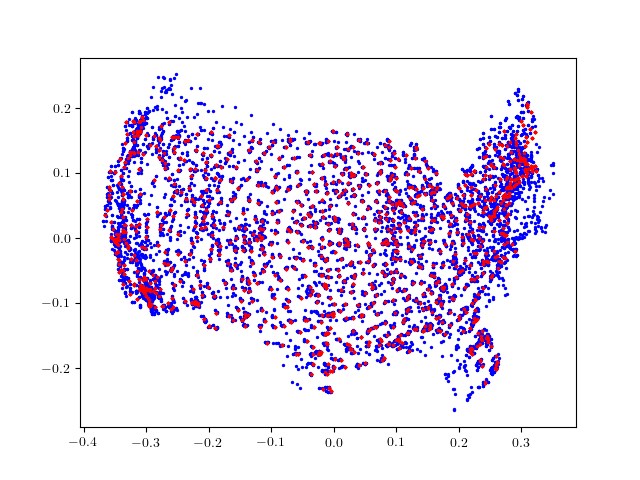}
    \caption{Spectral, $\eta=0.05$}
  \end{subfigure}
  \begin{subfigure}[ht]{0.245\linewidth}
    \centering
    \includegraphics[width=\linewidth,trim=0cm 0cm 0cm 0cm,clip]{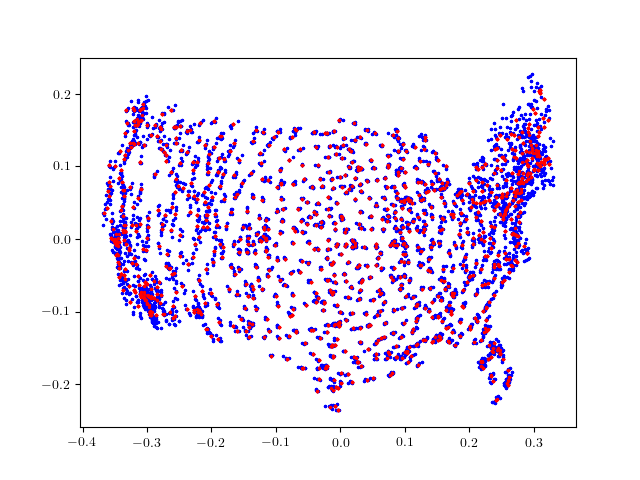}
    \caption{Spectral\_RN, $\eta=0.05$}
  \end{subfigure}
  \begin{subfigure}[ht]{0.245\linewidth}
    \centering
    \includegraphics[width=\linewidth,trim=0cm 0cm 0cm 0cm,clip]{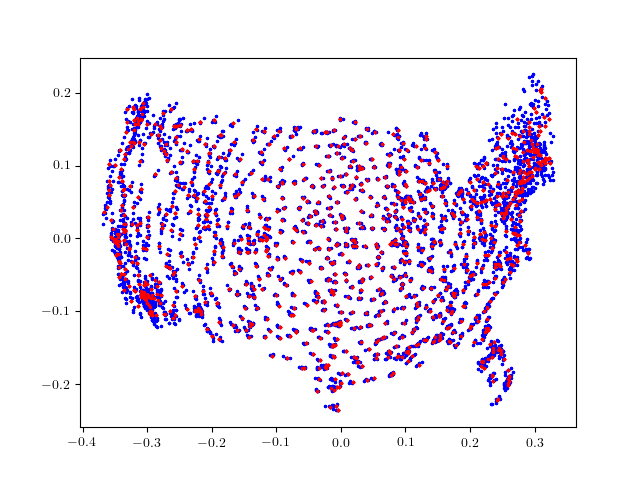}
    \caption{GPM, $\eta=0.05$}
  \end{subfigure}
  \begin{subfigure}[ht]{0.245\linewidth}
    \centering
    \includegraphics[width=\linewidth,trim=0cm 0cm 0cm 0cm,clip]{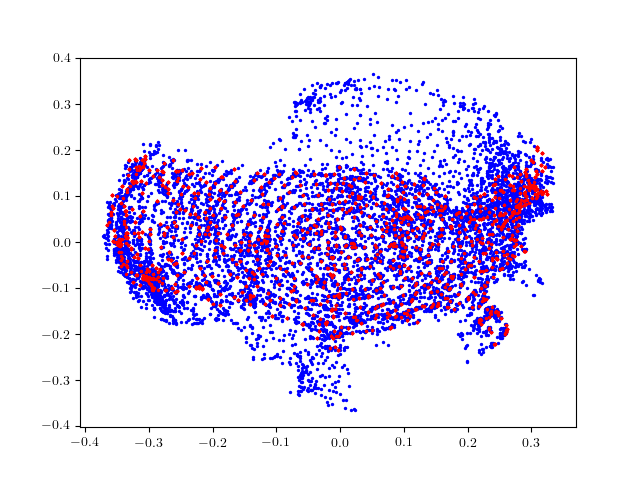}
    \caption{TranSync, $\eta=0.05$}
  \end{subfigure}
  \begin{subfigure}[ht]{0.245\linewidth}
    \centering
    \includegraphics[width=\linewidth,trim=0cm 0cm 0cm 0cm,clip]{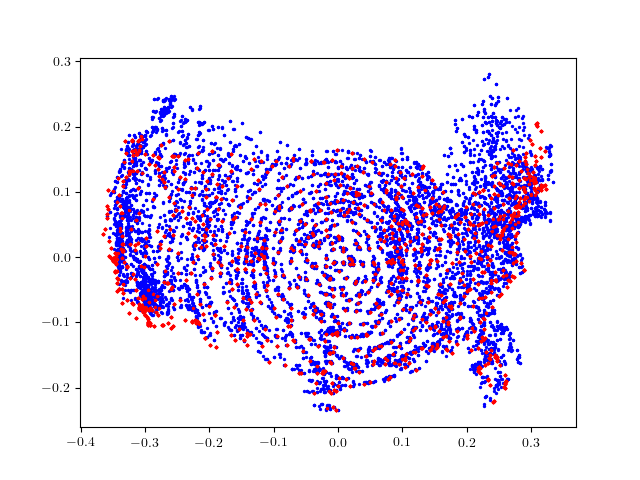}
    \caption{CEMP\_GCW, $\eta=0.05$}
  \end{subfigure}
  \begin{subfigure}[ht]{0.245\linewidth}
    \centering
    \includegraphics[width=\linewidth,trim=0cm 0cm 0cm 0cm,clip]{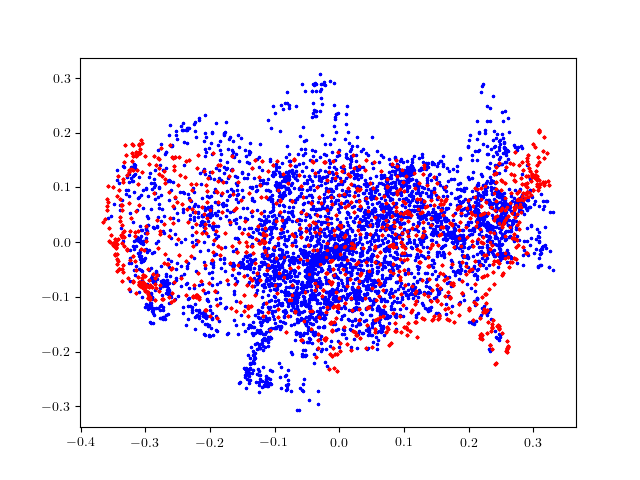}
    \caption{CEMP\_MST, $\eta=0.05$}
  \end{subfigure}
  \begin{subfigure}[ht]{0.245\linewidth}
    \centering
    \includegraphics[width=\linewidth,trim=0cm 0cm 0cm 0cm,clip]{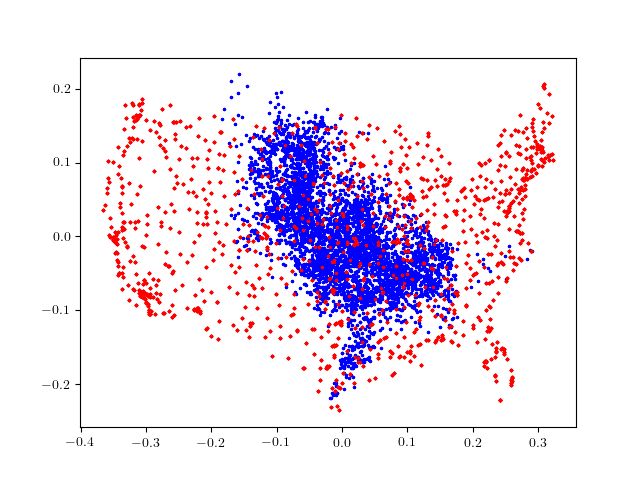}
    \caption{TAS, $\eta=0.05$}
  \end{subfigure}
%%%%%%%%%%%%%%%%%%%%%%%%%%%%%%%%%%%
  \begin{subfigure}[ht]{0.245\linewidth}
    \centering
    \includegraphics[width=\linewidth,trim=0cm 0cm 0cm 0cm,clip]{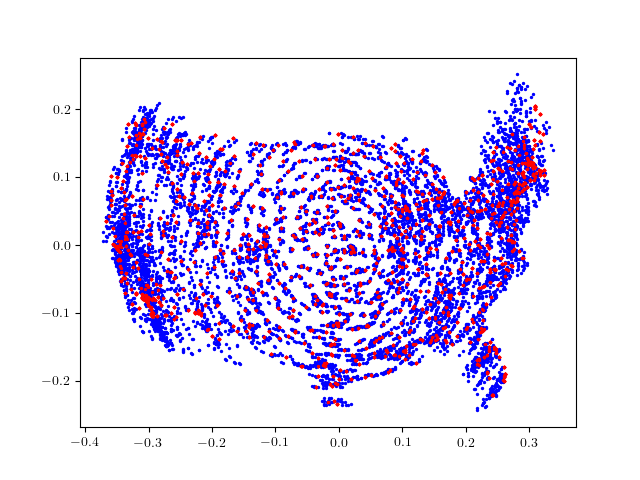}
    \caption{GNNSync, $\eta=0.1$}
  \end{subfigure}
  \begin{subfigure}[ht]{0.245\linewidth}
    \centering
    \includegraphics[width=\linewidth,trim=0cm 0cm 0cm 0cm,clip]{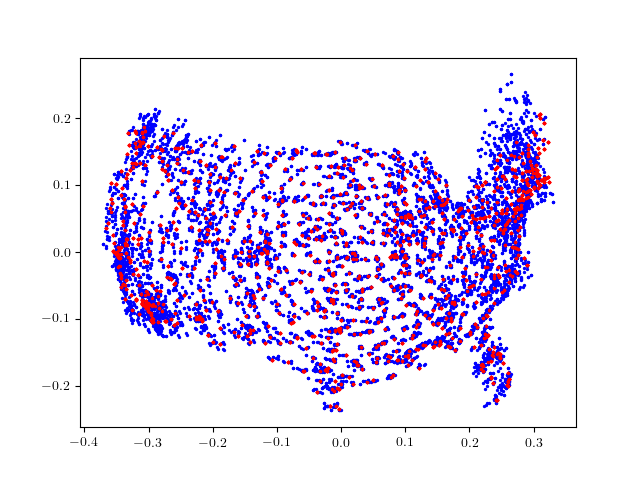}
    \caption{Spectral, $\eta=0.1$}
  \end{subfigure}
  \begin{subfigure}[ht]{0.245\linewidth}
    \centering
    \includegraphics[width=\linewidth,trim=0cm 0cm 0cm 0cm,clip]{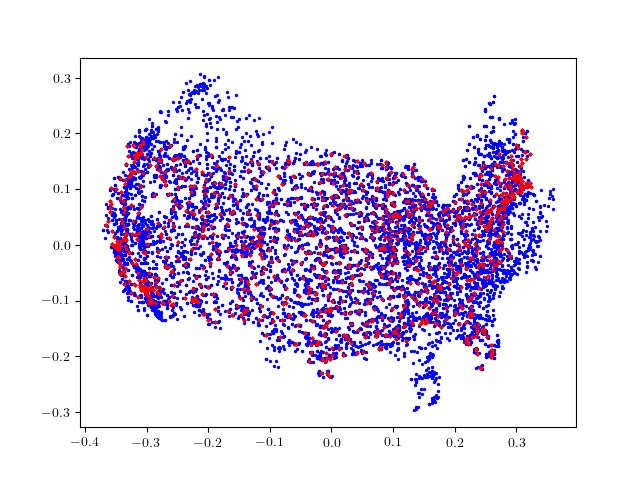}
    \caption{Spectral\_RN, $\eta=0.1$}
  \end{subfigure}
  \begin{subfigure}[ht]{0.245\linewidth}
    \centering
    \includegraphics[width=\linewidth,trim=0cm 0cm 0cm 0cm,clip]{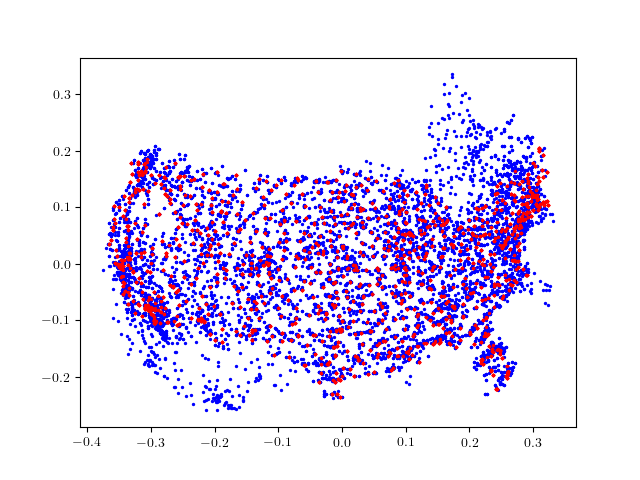}
    \caption{GPM, $\eta=0.1$}
  \end{subfigure}
  \begin{subfigure}[ht]{0.245\linewidth}
    \centering
    \includegraphics[width=\linewidth,trim=0cm 0cm 0cm 0cm,clip]{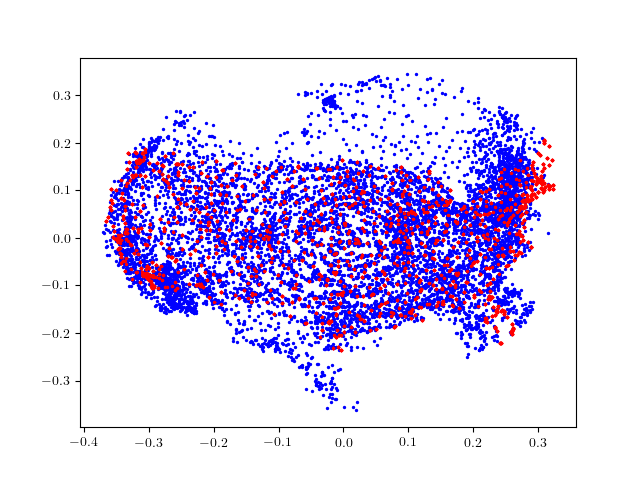}
    \caption{TranSync, $\eta=0.1$}
  \end{subfigure}
  \begin{subfigure}[ht]{0.245\linewidth}
    \centering
    \includegraphics[width=\linewidth,trim=0cm 0cm 0cm 0cm,clip]{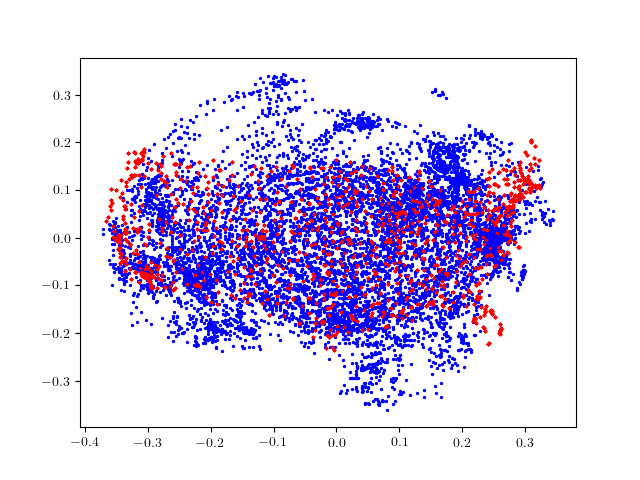}
    \caption{CEMP\_GCW, $\eta=0.1$}
  \end{subfigure}
  \begin{subfigure}[ht]{0.245\linewidth}
    \centering
    \includegraphics[width=\linewidth,trim=0cm 0cm 0cm 0cm,clip]{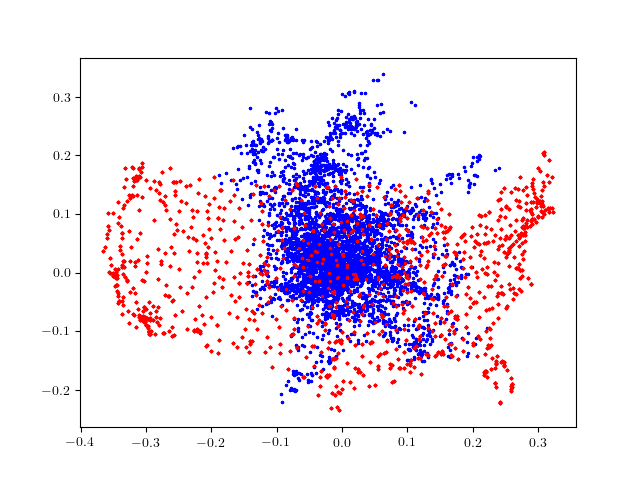}
    \caption{CEMP\_MST, $\eta=0.1$}
  \end{subfigure}
  \begin{subfigure}[ht]{0.245\linewidth}
    \centering
    \includegraphics[width=\linewidth,trim=0cm 0cm 0cm 0cm,clip]{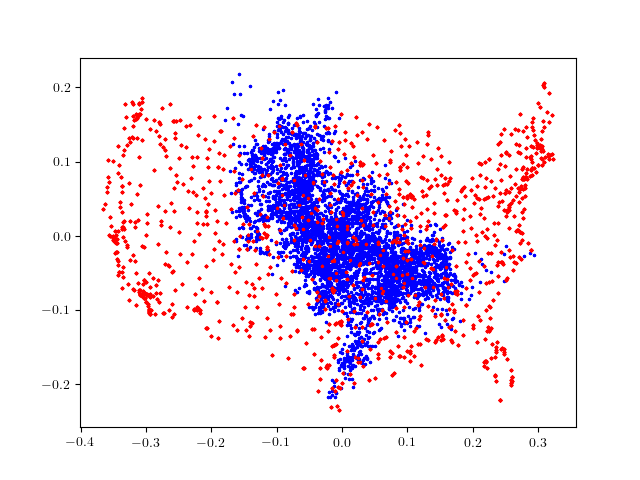}
    \caption{TAS, $\eta=0.1$}
  \end{subfigure}
%%%%%%%%%%%%%%%%%%%%%%%%%%%%%%%%%%%
    \caption{Result visualization for the Sensor Network Localization task on the U.S. map using option ``1" as ground-truth angles for low-noise input data. Red dots indicate ground-truth locations and blue dots are estimated city locations.
    }
    \label{fig:uscities_gamma_full_low}
\end{figure*}

\begin{figure*}[!hbt]
    \centering
  \begin{subfigure}[ht]{0.245\linewidth}
    \centering
    \includegraphics[width=\linewidth,trim=0cm 0cm 0cm 0cm,clip]{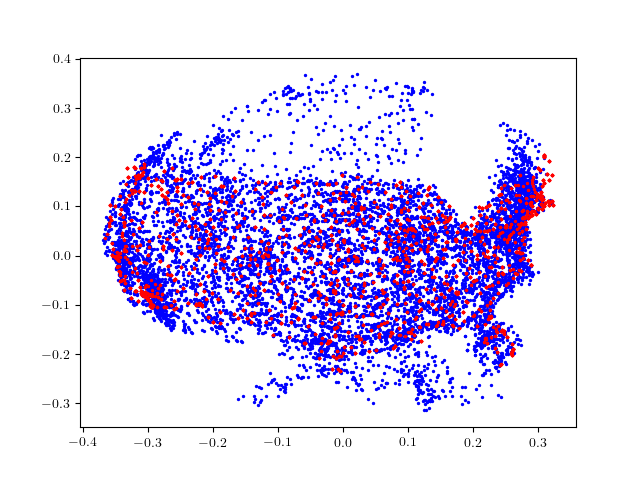}
    \caption{GNNSync, $\eta=0.15$}
  \end{subfigure}
  \begin{subfigure}[ht]{0.245\linewidth}
    \centering
    \includegraphics[width=\linewidth,trim=0cm 0cm 0cm 0cm,clip]{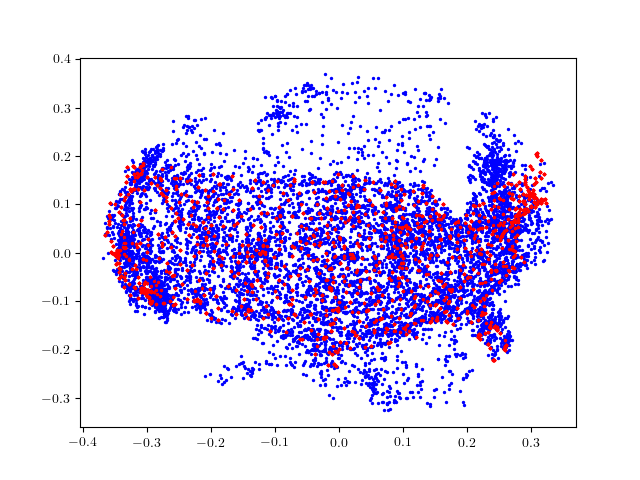}
    \caption{Spectral, $\eta=0.15$}
  \end{subfigure}
  \begin{subfigure}[ht]{0.245\linewidth}
    \centering
    \includegraphics[width=\linewidth,trim=0cm 0cm 0cm 0cm,clip]{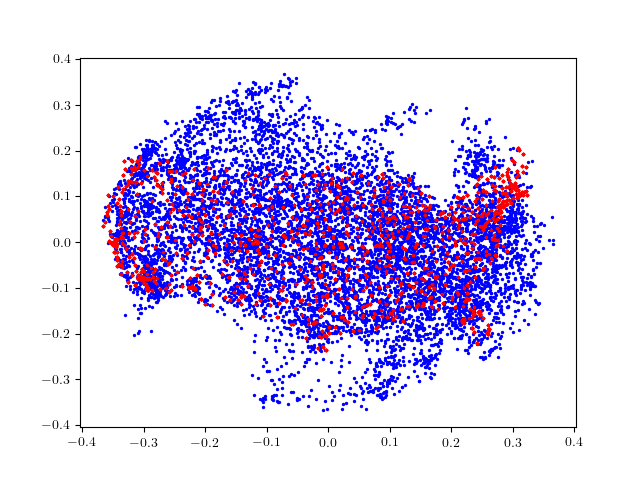}
    \caption{Spectral\_RN, $\eta=0.15$}
  \end{subfigure}
  \begin{subfigure}[ht]{0.245\linewidth}
    \centering
    \includegraphics[width=\linewidth,trim=0cm 0cm 0cm 0cm,clip]{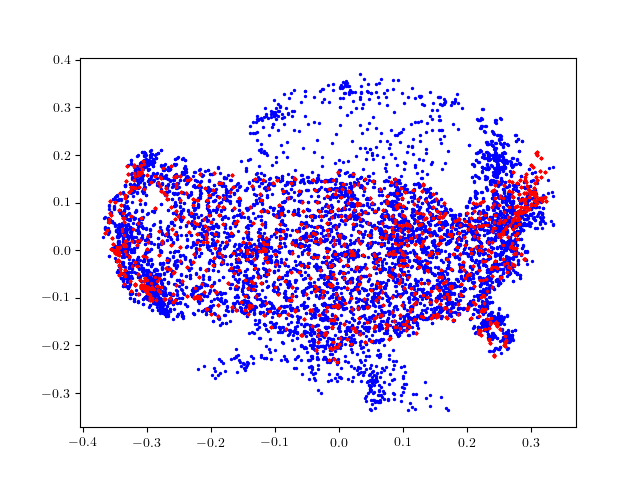}
    \caption{GPM, $\eta=0.15$}
  \end{subfigure}
  \begin{subfigure}[ht]{0.245\linewidth}
    \centering
    \includegraphics[width=\linewidth,trim=0cm 0cm 0cm 0cm,clip]{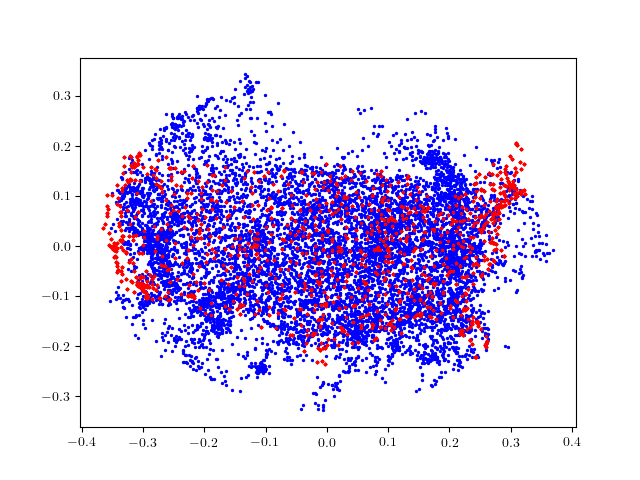}
    \caption{TranSync, $\eta=0.15$}
  \end{subfigure}
  \begin{subfigure}[ht]{0.245\linewidth}
    \centering
    \includegraphics[width=\linewidth,trim=0cm 0cm 0cm 0cm,clip]{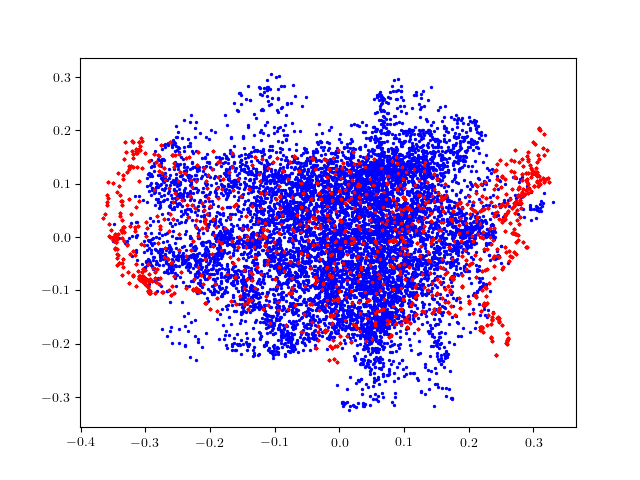}
    \caption{CEMP\_GCW, $\eta=0.15$}
  \end{subfigure}
  \begin{subfigure}[ht]{0.245\linewidth}
    \centering
    \includegraphics[width=\linewidth,trim=0cm 0cm 0cm 0cm,clip]{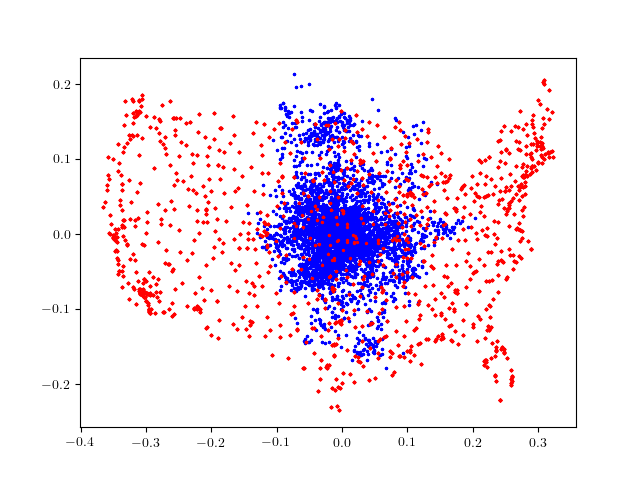}
    \caption{CEMP\_MST, $\eta=0.15$}
  \end{subfigure}
  \begin{subfigure}[ht]{0.245\linewidth}
    \centering
    \includegraphics[width=\linewidth,trim=0cm 0cm 0cm 0cm,clip]{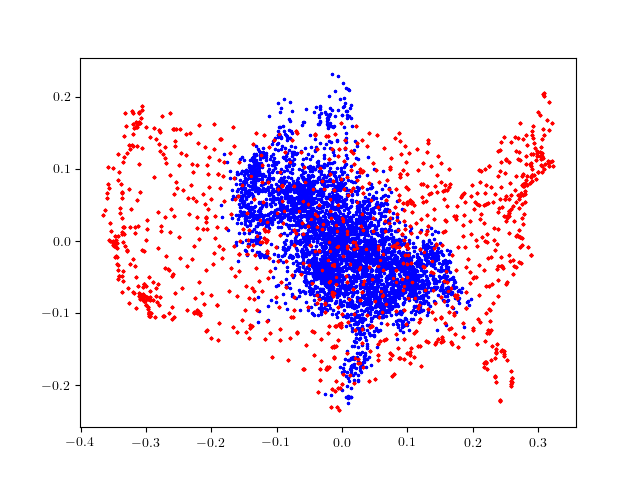}
    \caption{TAS, $\eta=0.15$}
  \end{subfigure}
%%%%%%%%%%%%%%%%%%%%%%%%%%%%%%%%%%%
  \begin{subfigure}[ht]{0.245\linewidth}
    \centering
    \includegraphics[width=\linewidth,trim=0cm 0cm 0cm 0cm,clip]{uscities_figures/GNNSync_k50_thres6_100eta20gammaseed20split1.png}
    \caption{GNNSync, $\eta=0.2$}
  \end{subfigure}
  \begin{subfigure}[ht]{0.245\linewidth}
    \centering
    \includegraphics[width=\linewidth,trim=0cm 0cm 0cm 0cm,clip]{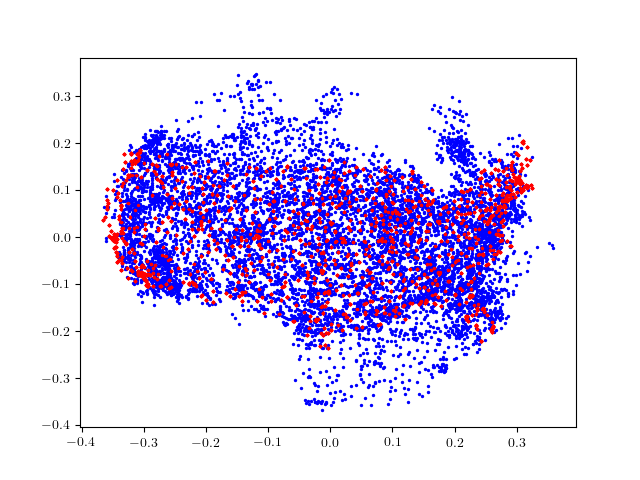}
    \caption{Spectral, $\eta=0.2$}
  \end{subfigure}
  \begin{subfigure}[ht]{0.245\linewidth}
    \centering
    \includegraphics[width=\linewidth,trim=0cm 0cm 0cm 0cm,clip]{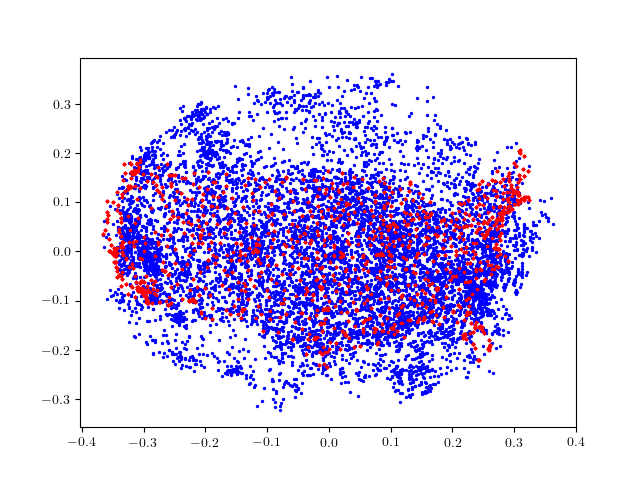}
    \caption{Spectral\_RN, $\eta=0.2$}
  \end{subfigure}
  \begin{subfigure}[ht]{0.245\linewidth}
    \centering
    \includegraphics[width=\linewidth,trim=0cm 0cm 0cm 0cm,clip]{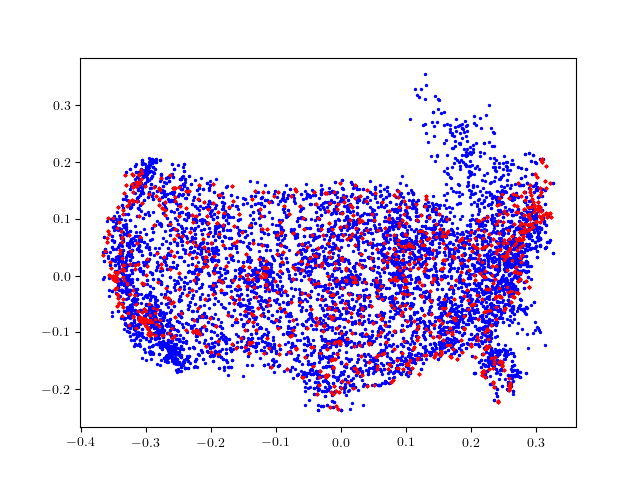}
    \caption{GPM, $\eta=0.2$}
  \end{subfigure}
  \begin{subfigure}[ht]{0.245\linewidth}
    \centering
    \includegraphics[width=\linewidth,trim=0cm 0cm 0cm 0cm,clip]{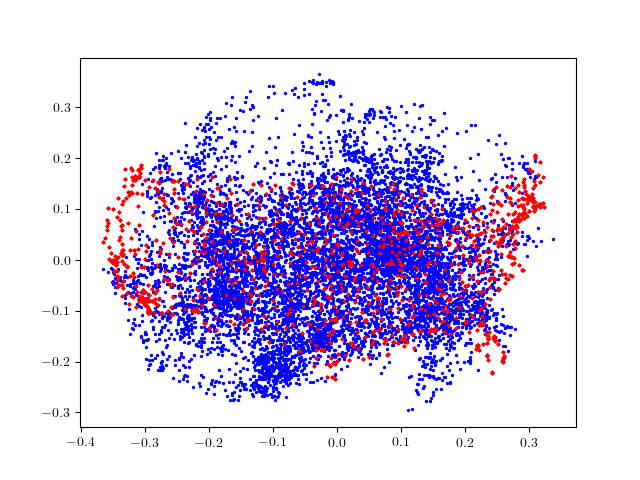}
    \caption{TranSync, $\eta=0.2$}
  \end{subfigure}
  \begin{subfigure}[ht]{0.245\linewidth}
    \centering
    \includegraphics[width=\linewidth,trim=0cm 0cm 0cm 0cm,clip]{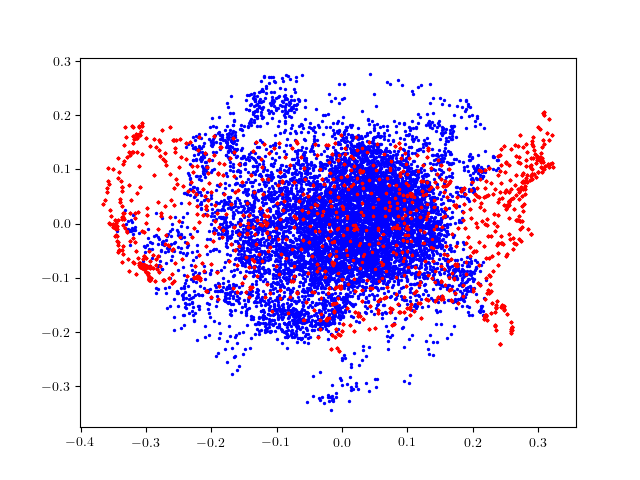}
    \caption{CEMP\_GCW, $\eta=0.2$}
  \end{subfigure}
  \begin{subfigure}[ht]{0.245\linewidth}
    \centering
    \includegraphics[width=\linewidth,trim=0cm 0cm 0cm 0cm,clip]{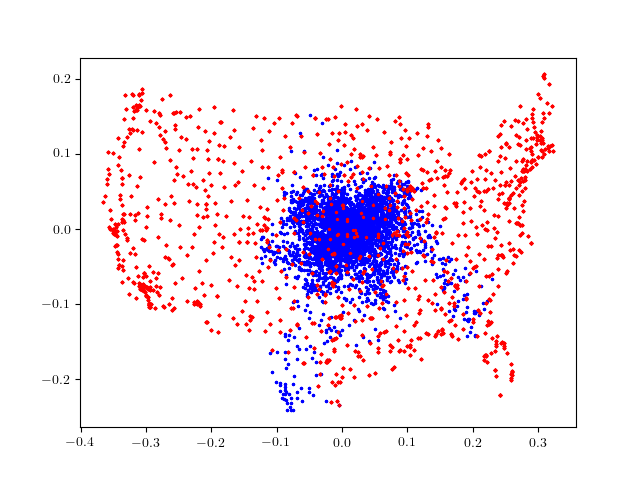}
    \caption{CEMP\_MST, $\eta=0.2$}
  \end{subfigure}
  \begin{subfigure}[ht]{0.245\linewidth}
    \centering
    \includegraphics[width=\linewidth,trim=0cm 0cm 0cm 0cm,clip]{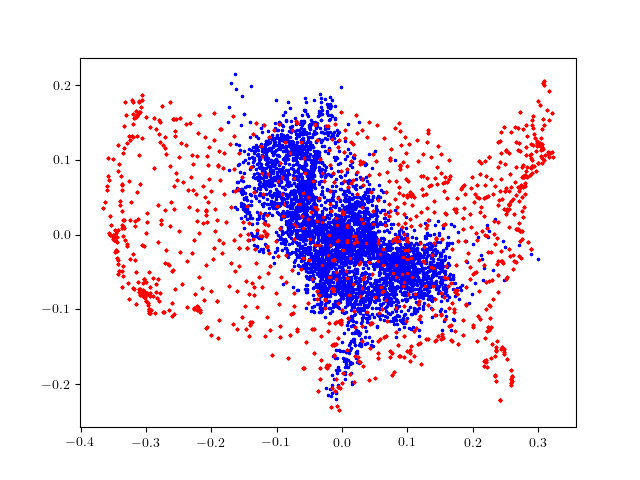}
    \caption{TAS, $\eta=0.2$}
  \end{subfigure}
%%%%%%%%%%%%%%%%%%%%%%%%%%%%%%%%%%%
  \begin{subfigure}[ht]{0.245\linewidth}
    \centering
    \includegraphics[width=\linewidth,trim=0cm 0cm 0cm 0cm,clip]{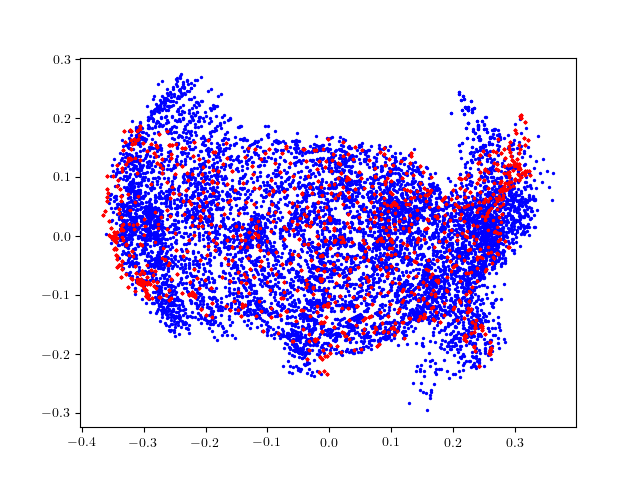}
    \caption{GNNSync, $\eta=0.25$}
  \end{subfigure}
  \begin{subfigure}[ht]{0.245\linewidth}
    \centering
    \includegraphics[width=\linewidth,trim=0cm 0cm 0cm 0cm,clip]{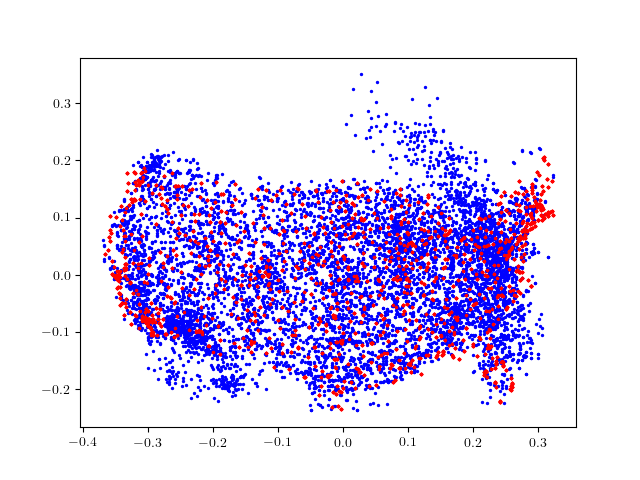}
    \caption{Spectral, $\eta=0.25$}
  \end{subfigure}
  \begin{subfigure}[ht]{0.245\linewidth}
    \centering
    \includegraphics[width=\linewidth,trim=0cm 0cm 0cm 0cm,clip]{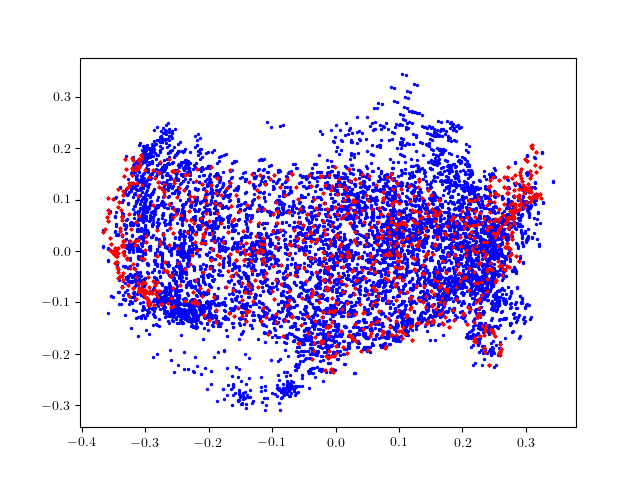}
    \caption{Spectral\_RN, $\eta=0.25$}
  \end{subfigure}
  \begin{subfigure}[ht]{0.245\linewidth}
    \centering
    \includegraphics[width=\linewidth,trim=0cm 0cm 0cm 0cm,clip]{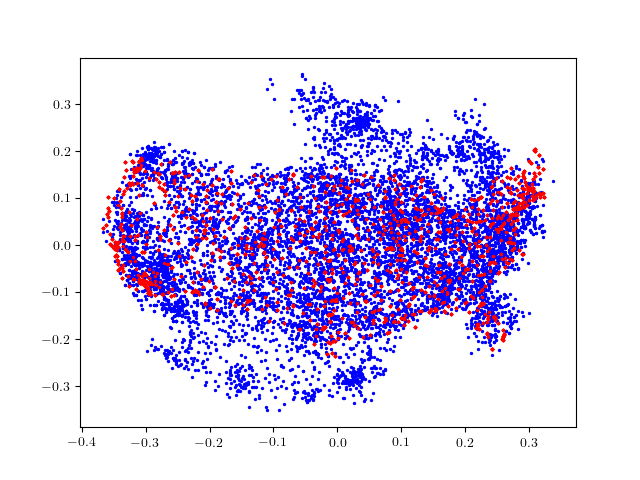}
    \caption{GPM, $\eta=0.25$}
  \end{subfigure}
  \begin{subfigure}[ht]{0.245\linewidth}
    \centering
    \includegraphics[width=\linewidth,trim=0cm 0cm 0cm 0cm,clip]{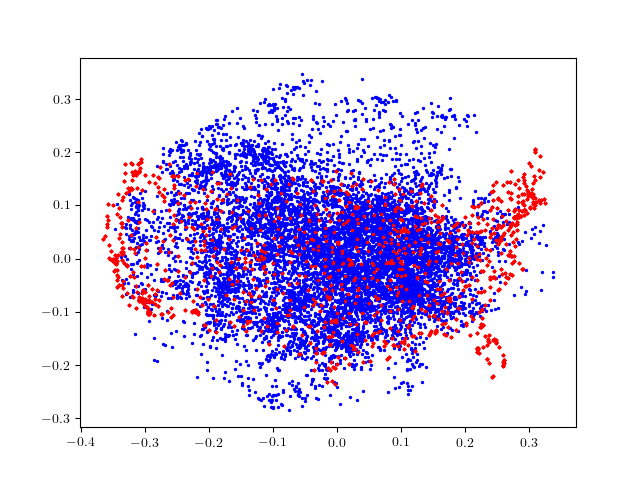}
    \caption{TranSync, $\eta=0.25$}
  \end{subfigure}
  \begin{subfigure}[ht]{0.245\linewidth}
    \centering
    \includegraphics[width=\linewidth,trim=0cm 0cm 0cm 0cm,clip]{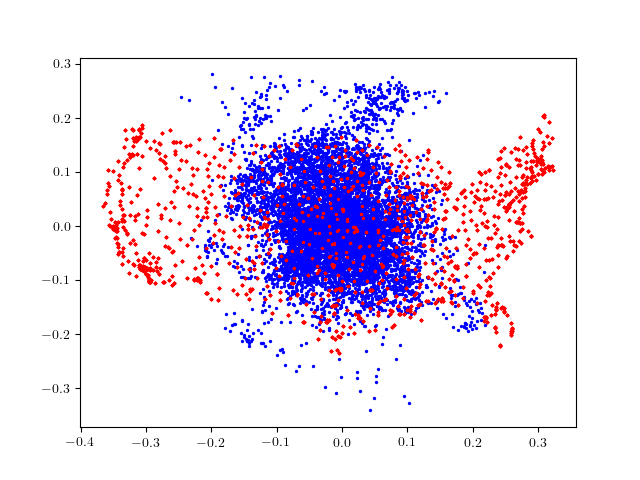}
    \caption{CEMP\_GCW, $\eta=0.25$}
  \end{subfigure}
  \begin{subfigure}[ht]{0.245\linewidth}
    \centering
    \includegraphics[width=\linewidth,trim=0cm 0cm 0cm 0cm,clip]{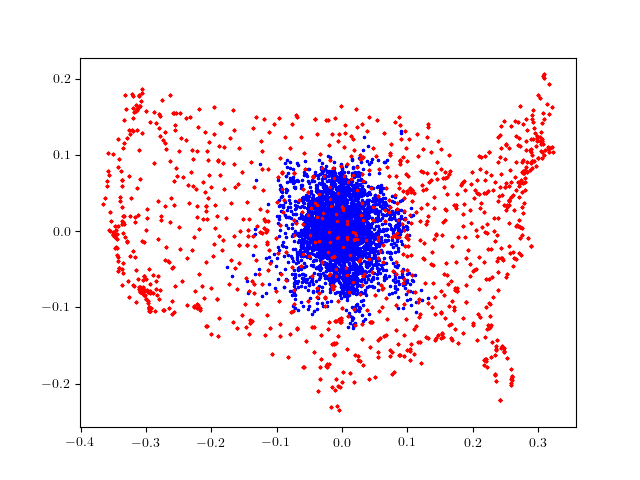}
    \caption{CEMP\_MST, $\eta=0.25$}
  \end{subfigure}
  \begin{subfigure}[ht]{0.245\linewidth}
    \centering
    \includegraphics[width=\linewidth,trim=0cm 0cm 0cm 0cm,clip]{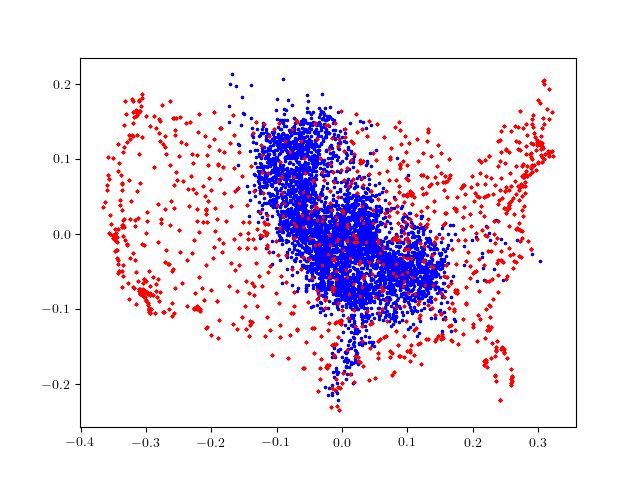}
    \caption{TAS, $\eta=0.25$}
  \end{subfigure}
%%%%%%%%%%%%%%%%%%%%%%%%%%%%%%%%%%%
    \caption{Result visualization for the Sensor Network Localization task on the U.S. map using option ``1" as ground-truth angles for high-noise input data. Red dots indicate ground-truth locations and blue dots are estimated city locations.
    }
    \label{fig:uscities_gamma_full_high}
\end{figure*}

\begin{figure*}[!hbt]
    \centering
  \begin{subfigure}[ht]{0.245\linewidth}
    \centering
    \includegraphics[width=\linewidth,trim=0cm 0cm 0cm 0cm,clip]{uscities_figures/GNNSync_k50_thres6_100eta0multi_normal1seed20split1.png}
    \caption{GNNSync, $\eta=0$}
  \end{subfigure}
  \begin{subfigure}[ht]{0.245\linewidth}
    \centering
    \includegraphics[width=\linewidth,trim=0cm 0cm 0cm 0cm,clip]{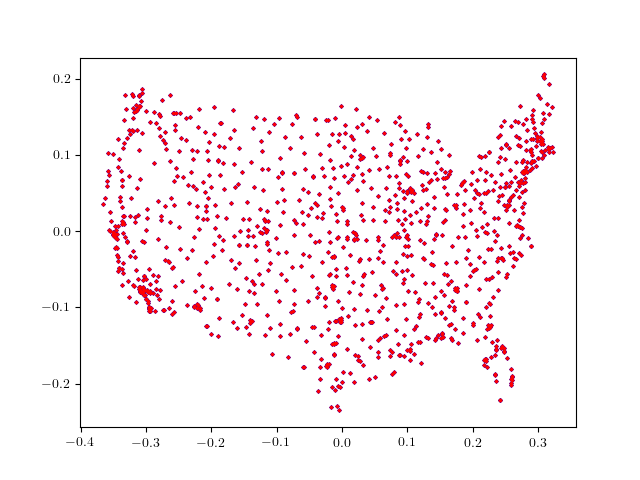}
    \caption{Spectral, $\eta=0$}
  \end{subfigure}
  \begin{subfigure}[ht]{0.245\linewidth}
    \centering
    \includegraphics[width=\linewidth,trim=0cm 0cm 0cm 0cm,clip]{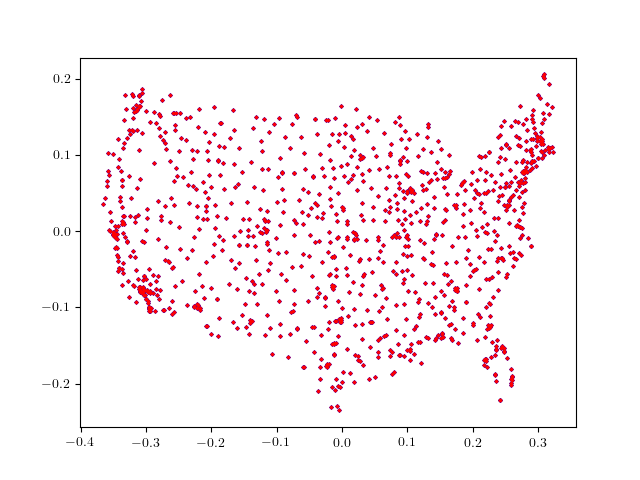}
    \caption{Spectral\_RN, $\eta=0$}
  \end{subfigure}
  \begin{subfigure}[ht]{0.245\linewidth}
    \centering
    \includegraphics[width=\linewidth,trim=0cm 0cm 0cm 0cm,clip]{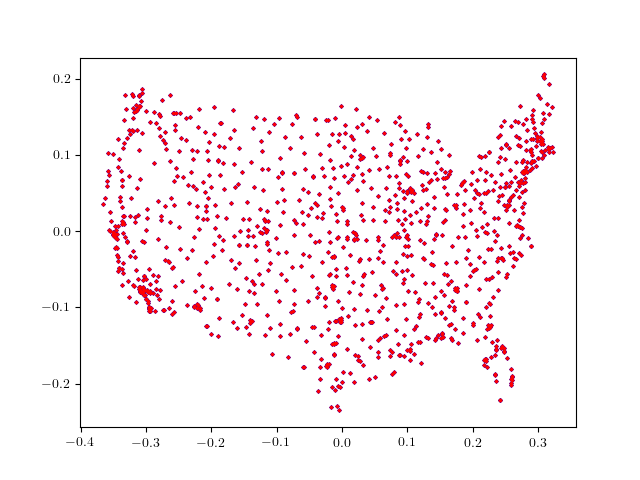}
    \caption{GPM, $\eta=0$}
  \end{subfigure}
  \begin{subfigure}[ht]{0.245\linewidth}
    \centering
    \includegraphics[width=\linewidth,trim=0cm 0cm 0cm 0cm,clip]{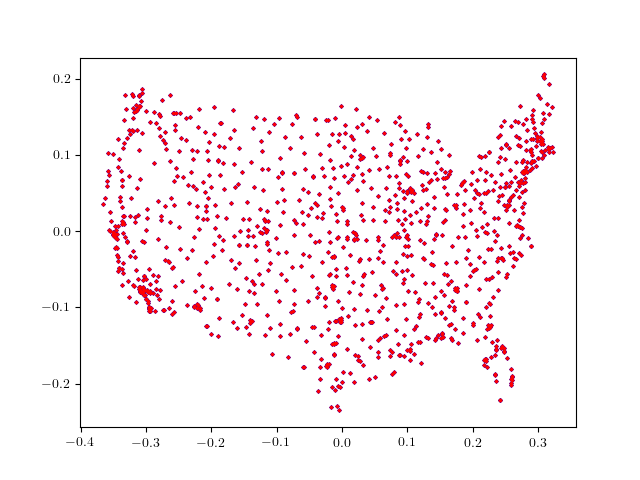}
    \caption{TranSync, $\eta=0$}
  \end{subfigure}
  \begin{subfigure}[ht]{0.245\linewidth}
    \centering
    \includegraphics[width=\linewidth,trim=0cm 0cm 0cm 0cm,clip]{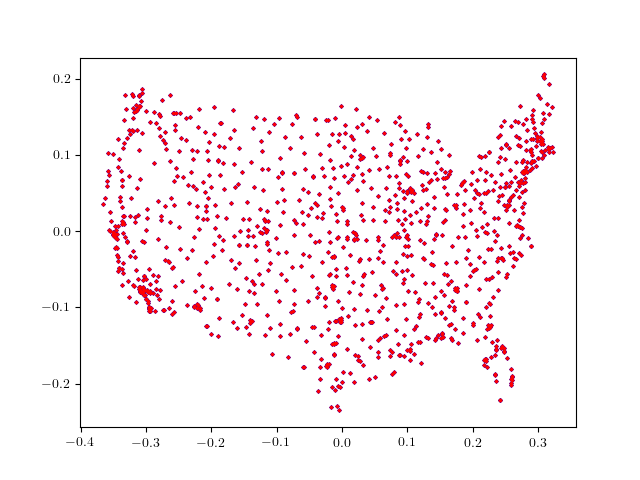}
    \caption{CEMP\_GCW, $\eta=0$}
  \end{subfigure}
  \begin{subfigure}[ht]{0.245\linewidth}
    \centering
    \includegraphics[width=\linewidth,trim=0cm 0cm 0cm 0cm,clip]{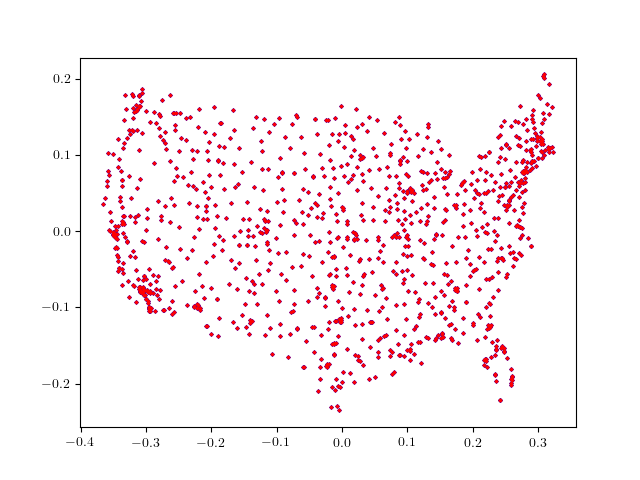}
    \caption{CEMP\_MST, $\eta=0$}
  \end{subfigure}
  \begin{subfigure}[ht]{0.245\linewidth}
    \centering
    \includegraphics[width=\linewidth,trim=0cm 0cm 0cm 0cm,clip]{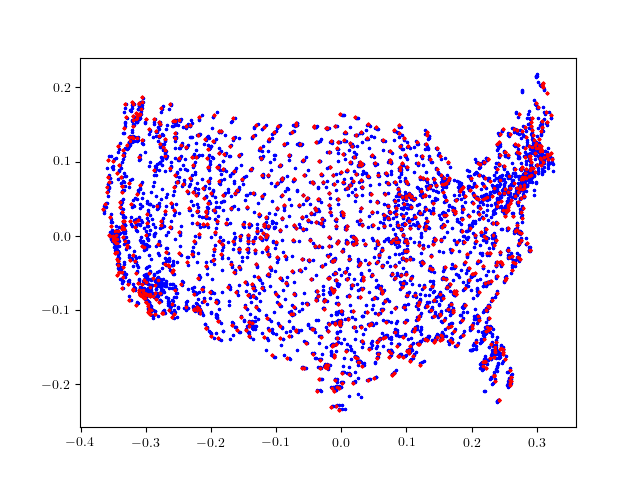}
    \caption{TAS, $\eta=0$}
  \end{subfigure}
%%%%%%%%%%%%%%%%%%%%%%%%%%%%%%%%%%%
  \begin{subfigure}[ht]{0.245\linewidth}
    \centering
    \includegraphics[width=\linewidth,trim=0cm 0cm 0cm 0cm,clip]{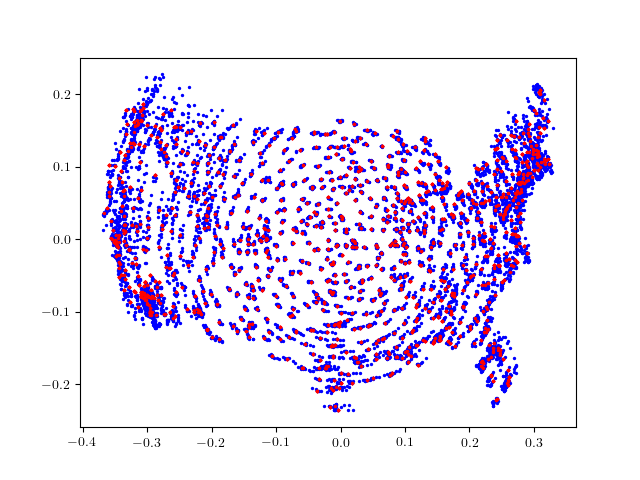}
    \caption{GNNSync, $\eta=0.05$}
  \end{subfigure}
  \begin{subfigure}[ht]{0.245\linewidth}
    \centering
    \includegraphics[width=\linewidth,trim=0cm 0cm 0cm 0cm,clip]{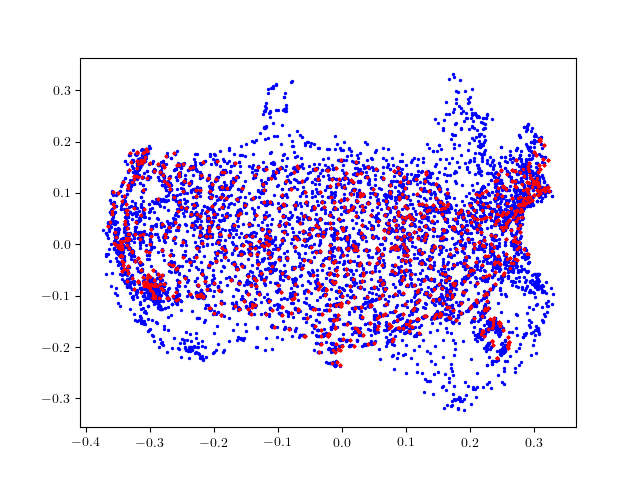}
    \caption{Spectral, $\eta=0.05$}
  \end{subfigure}
  \begin{subfigure}[ht]{0.245\linewidth}
    \centering
    \includegraphics[width=\linewidth,trim=0cm 0cm 0cm 0cm,clip]{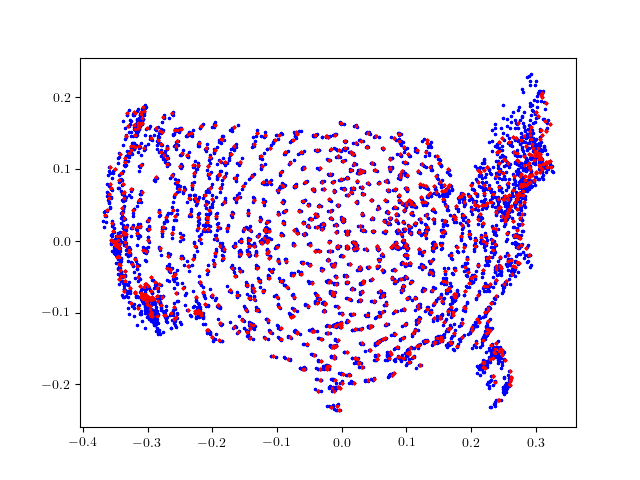}
    \caption{Spectral\_RN, $\eta=0.05$}
  \end{subfigure}
  \begin{subfigure}[ht]{0.245\linewidth}
    \centering
    \includegraphics[width=\linewidth,trim=0cm 0cm 0cm 0cm,clip]{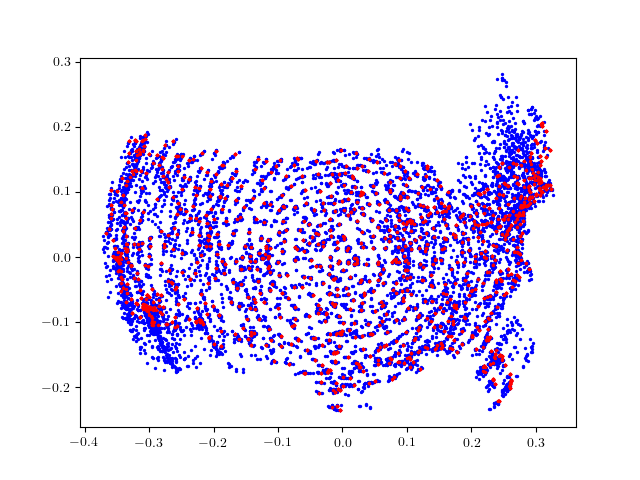}
    \caption{GPM, $\eta=0.05$}
  \end{subfigure}
  \begin{subfigure}[ht]{0.245\linewidth}
    \centering
    \includegraphics[width=\linewidth,trim=0cm 0cm 0cm 0cm,clip]{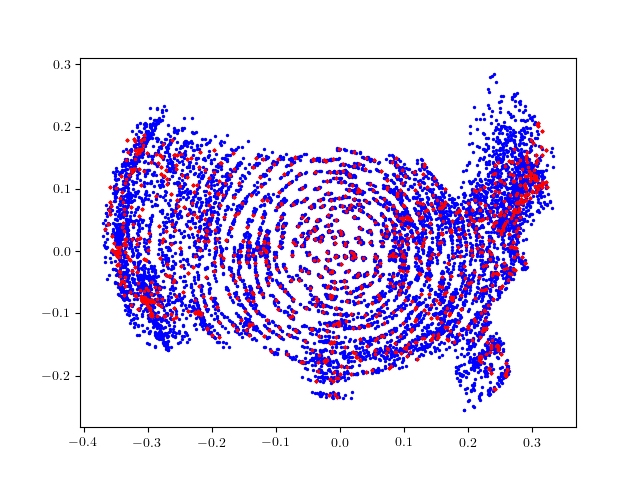}
    \caption{TranSync, $\eta=0.05$}
  \end{subfigure}
  \begin{subfigure}[ht]{0.245\linewidth}
    \centering
    \includegraphics[width=\linewidth,trim=0cm 0cm 0cm 0cm,clip]{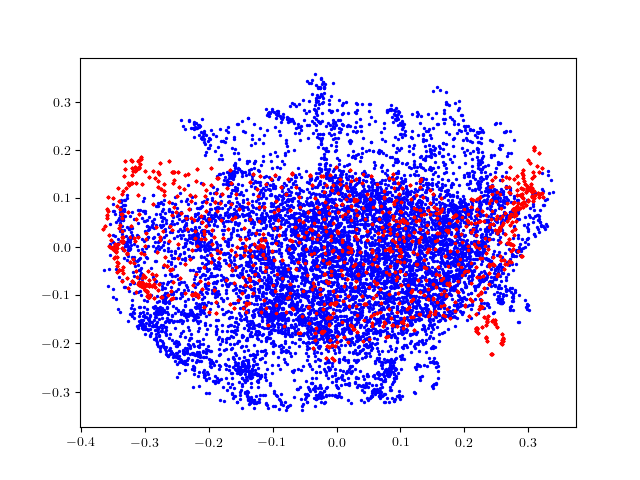}
    \caption{CEMP\_GCW, $\eta=0.05$}
  \end{subfigure}
  \begin{subfigure}[ht]{0.245\linewidth}
    \centering
    \includegraphics[width=\linewidth,trim=0cm 0cm 0cm 0cm,clip]{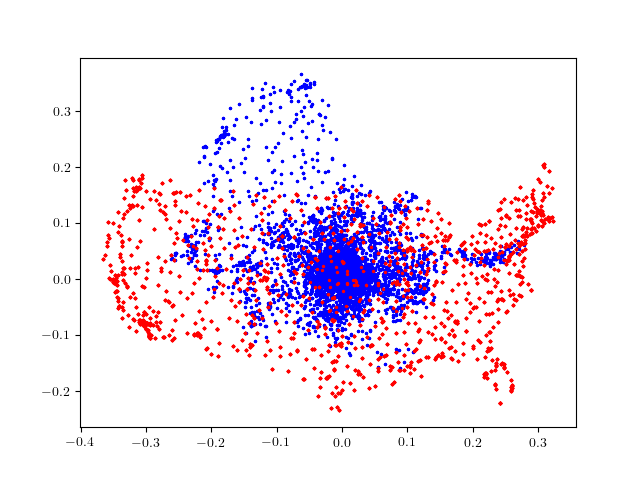}
    \caption{CEMP\_MST, $\eta=0.05$}
  \end{subfigure}
  \begin{subfigure}[ht]{0.245\linewidth}
    \centering
    \includegraphics[width=\linewidth,trim=0cm 0cm 0cm 0cm,clip]{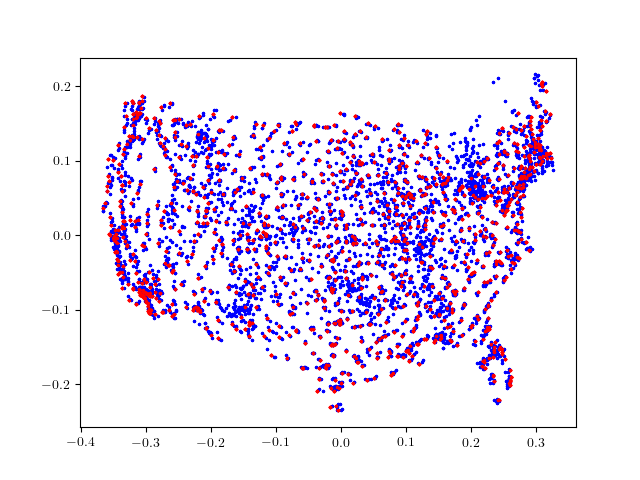}
    \caption{TAS, $\eta=0.05$}
  \end{subfigure}
%%%%%%%%%%%%%%%%%%%%%%%%%%%%%%%%%%%
  \begin{subfigure}[ht]{0.245\linewidth}
    \centering
    \includegraphics[width=\linewidth,trim=0cm 0cm 0cm 0cm,clip]{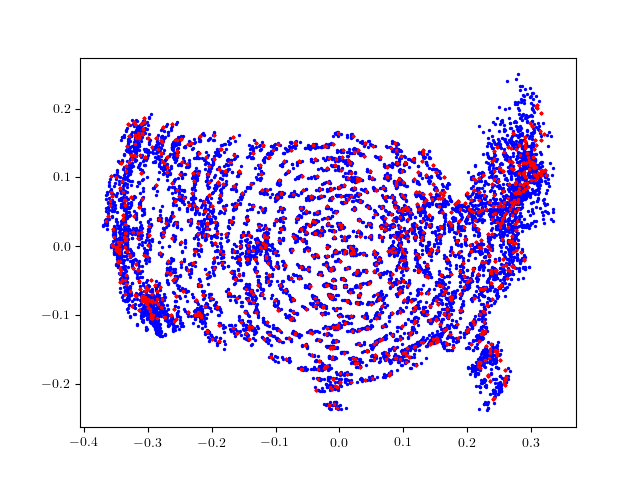}
    \caption{GNNSync, $\eta=0.1$}
  \end{subfigure}
  \begin{subfigure}[ht]{0.245\linewidth}
    \centering
    \includegraphics[width=\linewidth,trim=0cm 0cm 0cm 0cm,clip]{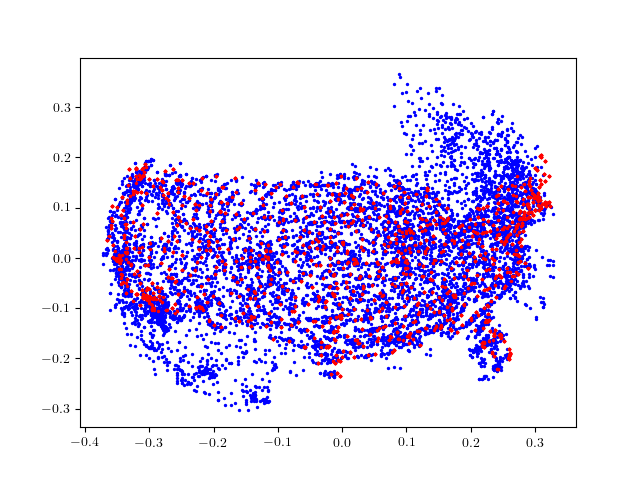}
    \caption{Spectral, $\eta=0.1$}
  \end{subfigure}
  \begin{subfigure}[ht]{0.245\linewidth}
    \centering
    \includegraphics[width=\linewidth,trim=0cm 0cm 0cm 0cm,clip]{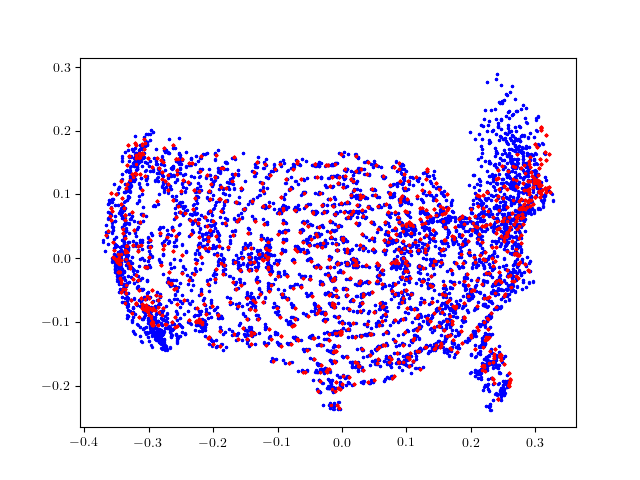}
    \caption{Spectral\_RN, $\eta=0.1$}
  \end{subfigure}
  \begin{subfigure}[ht]{0.245\linewidth}
    \centering
    \includegraphics[width=\linewidth,trim=0cm 0cm 0cm 0cm,clip]{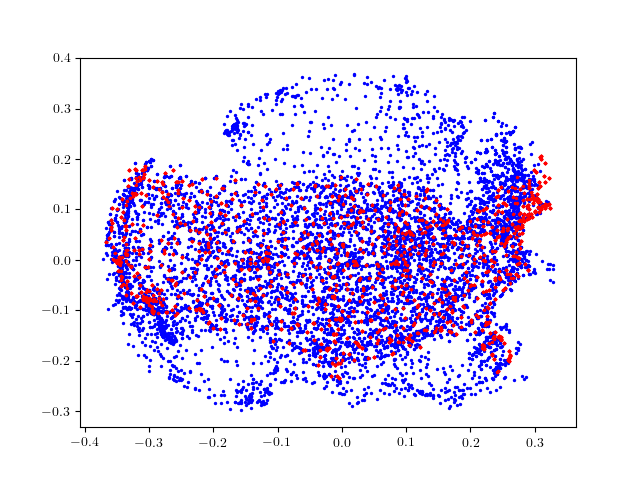}
    \caption{GPM, $\eta=0.1$}
  \end{subfigure}
  \begin{subfigure}[ht]{0.245\linewidth}
    \centering
    \includegraphics[width=\linewidth,trim=0cm 0cm 0cm 0cm,clip]{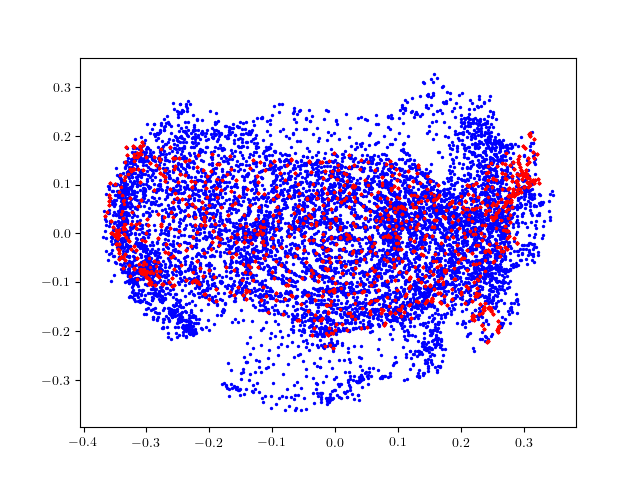}
    \caption{TranSync, $\eta=0.1$}
  \end{subfigure}
  \begin{subfigure}[ht]{0.245\linewidth}
    \centering
    \includegraphics[width=\linewidth,trim=0cm 0cm 0cm 0cm,clip]{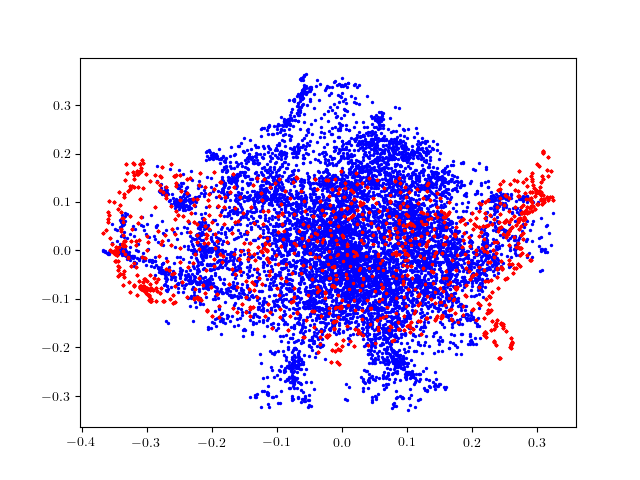}
    \caption{CEMP\_GCW, $\eta=0.1$}
  \end{subfigure}
  \begin{subfigure}[ht]{0.245\linewidth}
    \centering
    \includegraphics[width=\linewidth,trim=0cm 0cm 0cm 0cm,clip]{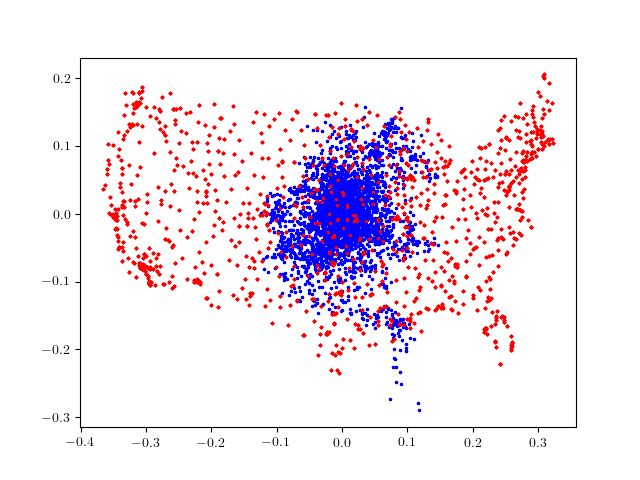}
    \caption{CEMP\_MST, $\eta=0.1$}
  \end{subfigure}
  \begin{subfigure}[ht]{0.245\linewidth}
    \centering
    \includegraphics[width=\linewidth,trim=0cm 0cm 0cm 0cm,clip]{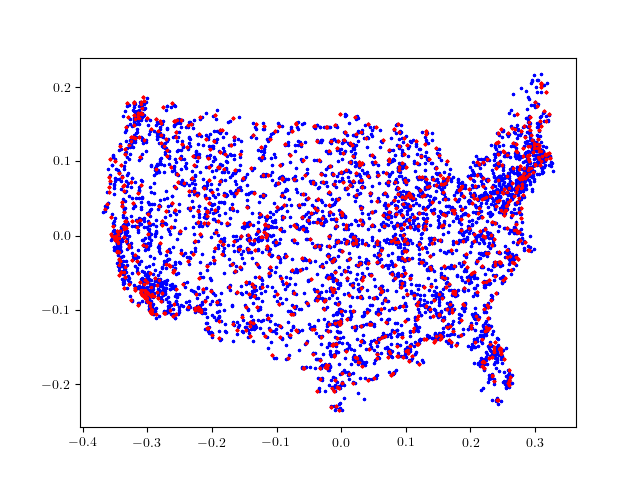}
    \caption{TAS, $\eta=0.1$}
  \end{subfigure}
%%%%%%%%%%%%%%%%%%%%%%%%%%%%%%%%%%%
    \caption{Result visualization for the Sensor Network Localization task on the U.S. map using option ``2" as ground-truth angles for low-noise input data. Red dots indicate ground-truth locations and blue dots are estimated city locations.
    }
    \label{fig:uscities_multi_normal1_full_low}
\end{figure*}

\begin{figure*}[!hbt]
    \centering
  \begin{subfigure}[ht]{0.245\linewidth}
    \centering
    \includegraphics[width=\linewidth,trim=0cm 0cm 0cm 0cm,clip]{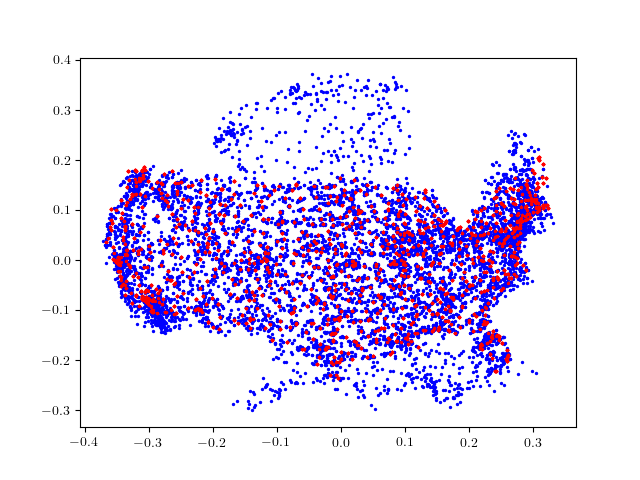}
    \caption{GNNSync, $\eta=0.15$}
  \end{subfigure}
  \begin{subfigure}[ht]{0.245\linewidth}
    \centering
    \includegraphics[width=\linewidth,trim=0cm 0cm 0cm 0cm,clip]{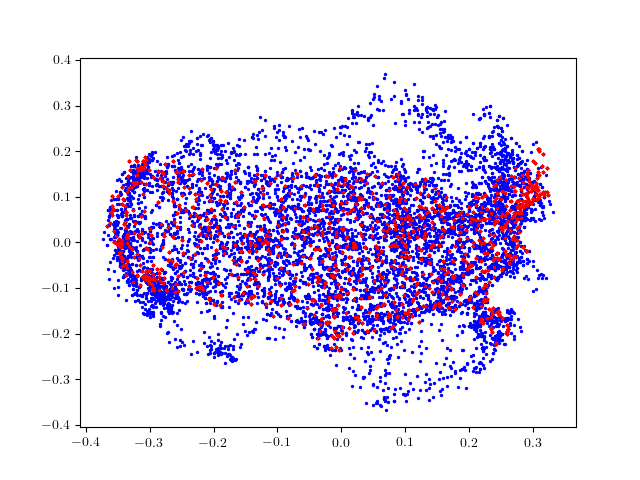}
    \caption{Spectral, $\eta=0.15$}
  \end{subfigure}
  \begin{subfigure}[ht]{0.245\linewidth}
    \centering
    \includegraphics[width=\linewidth,trim=0cm 0cm 0cm 0cm,clip]{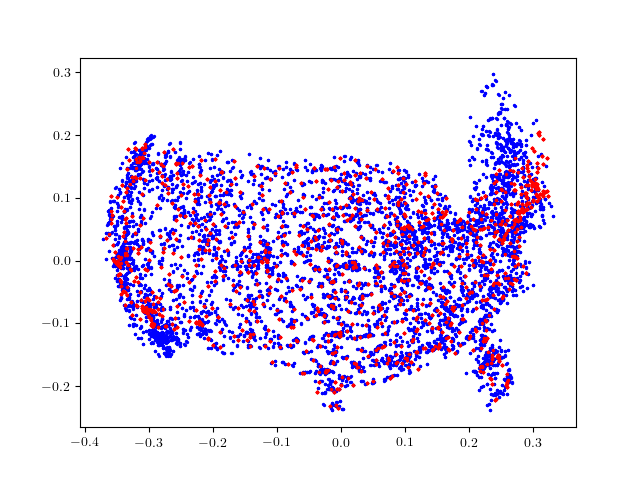}
    \caption{Spectral\_RN, $\eta=0.15$}
  \end{subfigure}
  \begin{subfigure}[ht]{0.245\linewidth}
    \centering
    \includegraphics[width=\linewidth,trim=0cm 0cm 0cm 0cm,clip]{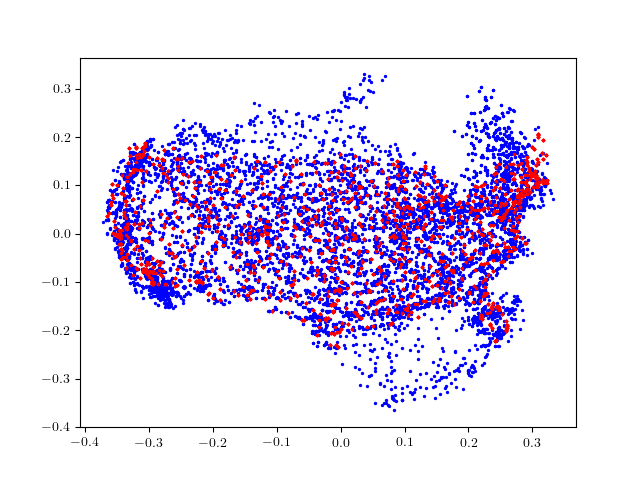}
    \caption{GPM, $\eta=0.15$}
  \end{subfigure}
  \begin{subfigure}[ht]{0.245\linewidth}
    \centering
    \includegraphics[width=\linewidth,trim=0cm 0cm 0cm 0cm,clip]{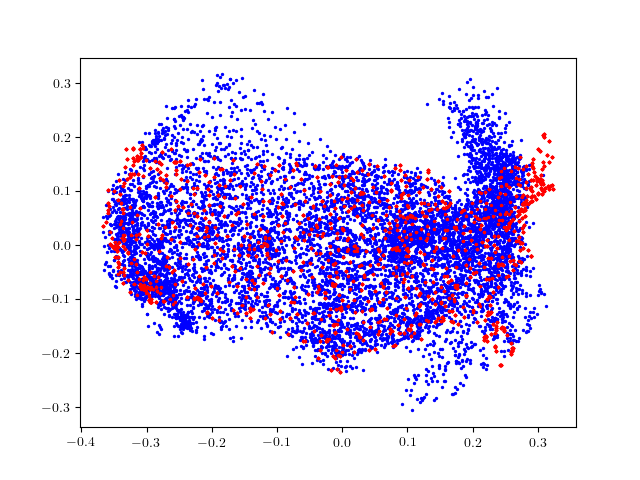}
    \caption{TranSync, $\eta=0.15$}
  \end{subfigure}
  \begin{subfigure}[ht]{0.245\linewidth}
    \centering
    \includegraphics[width=\linewidth,trim=0cm 0cm 0cm 0cm,clip]{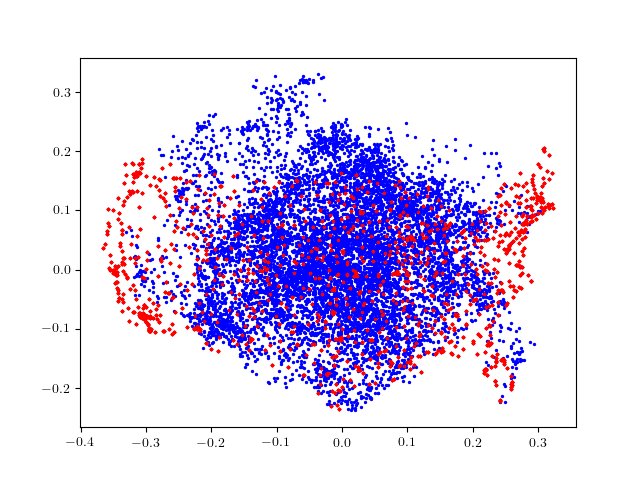}
    \caption{CEMP\_GCW, $\eta=0.15$}
  \end{subfigure}
  \begin{subfigure}[ht]{0.245\linewidth}
    \centering
    \includegraphics[width=\linewidth,trim=0cm 0cm 0cm 0cm,clip]{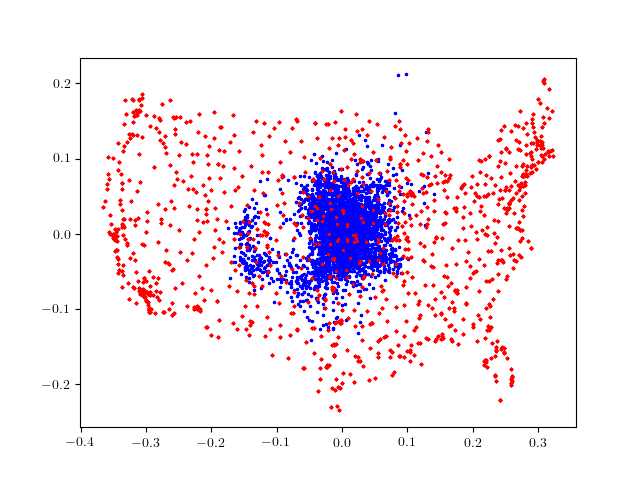}
    \caption{CEMP\_MST, $\eta=0.15$}
  \end{subfigure}
  \begin{subfigure}[ht]{0.245\linewidth}
    \centering
    \includegraphics[width=\linewidth,trim=0cm 0cm 0cm 0cm,clip]{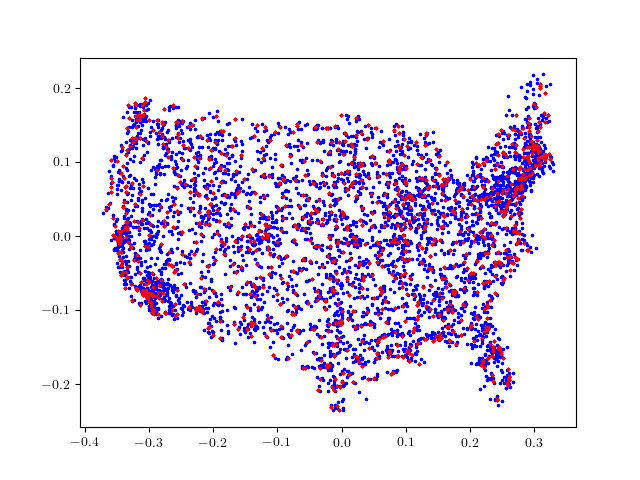}
    \caption{TAS, $\eta=0.15$}
  \end{subfigure}
%%%%%%%%%%%%%%%%%%%%%%%%%%%%%%%%%%%
  \begin{subfigure}[ht]{0.245\linewidth}
    \centering
    \includegraphics[width=\linewidth,trim=0cm 0cm 0cm 0cm,clip]{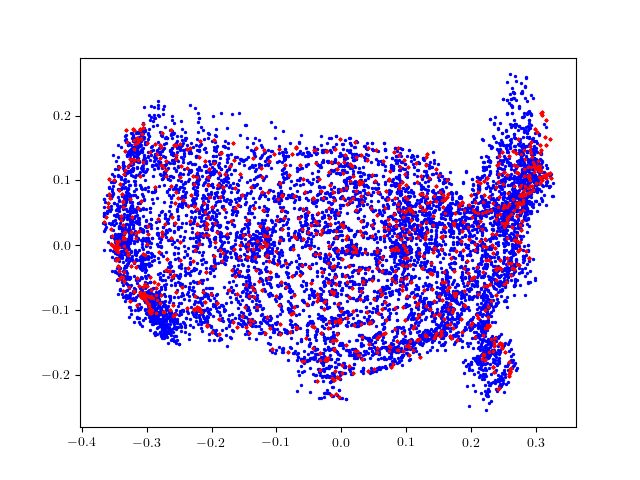}
    \caption{GNNSync, $\eta=0.2$}
  \end{subfigure}
  \begin{subfigure}[ht]{0.245\linewidth}
    \centering
    \includegraphics[width=\linewidth,trim=0cm 0cm 0cm 0cm,clip]{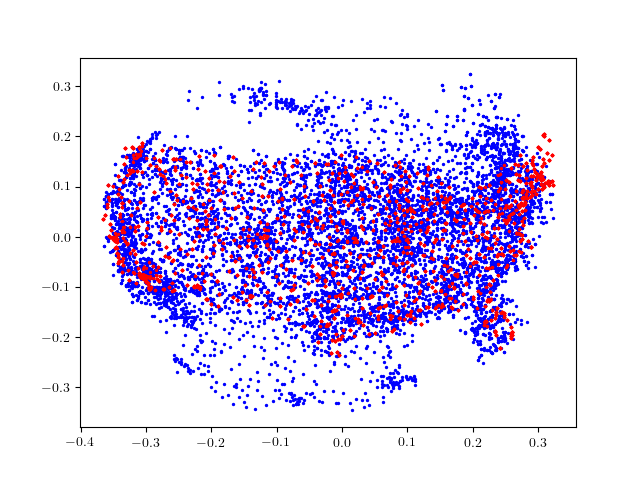}
    \caption{Spectral, $\eta=0.2$}
  \end{subfigure}
  \begin{subfigure}[ht]{0.245\linewidth}
    \centering
    \includegraphics[width=\linewidth,trim=0cm 0cm 0cm 0cm,clip]{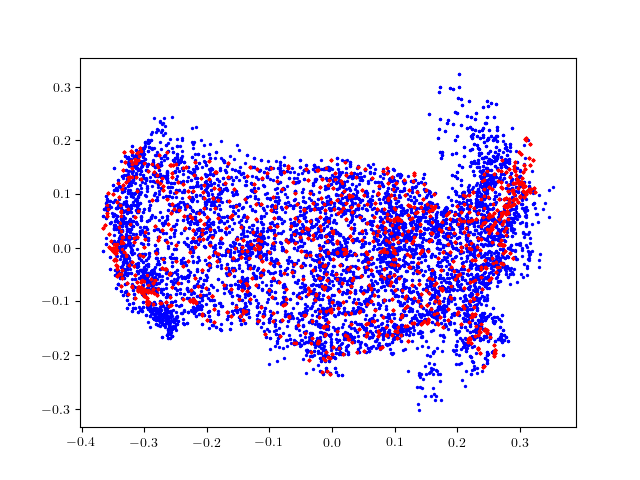}
    \caption{Spectral\_RN, $\eta=0.2$}
  \end{subfigure}
  \begin{subfigure}[ht]{0.245\linewidth}
    \centering
    \includegraphics[width=\linewidth,trim=0cm 0cm 0cm 0cm,clip]{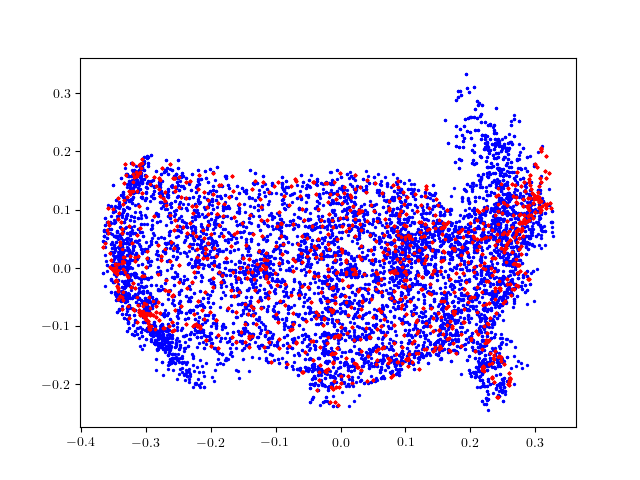}
    \caption{GPM, $\eta=0.2$}
  \end{subfigure}
  \begin{subfigure}[ht]{0.245\linewidth}
    \centering
    \includegraphics[width=\linewidth,trim=0cm 0cm 0cm 0cm,clip]{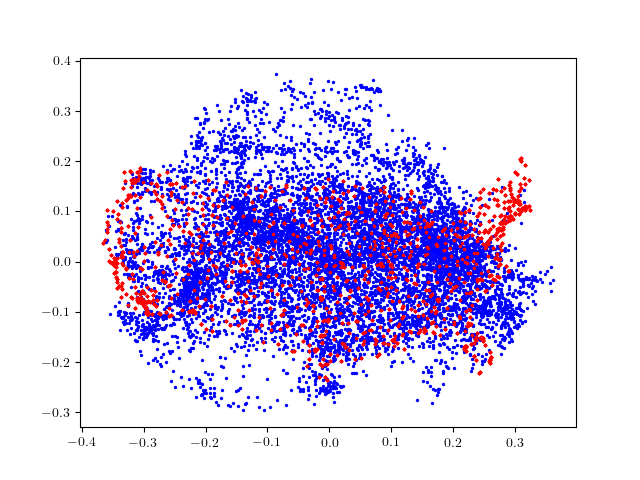}
    \caption{TranSync, $\eta=0.2$}
  \end{subfigure}
  \begin{subfigure}[ht]{0.245\linewidth}
    \centering
    \includegraphics[width=\linewidth,trim=0cm 0cm 0cm 0cm,clip]{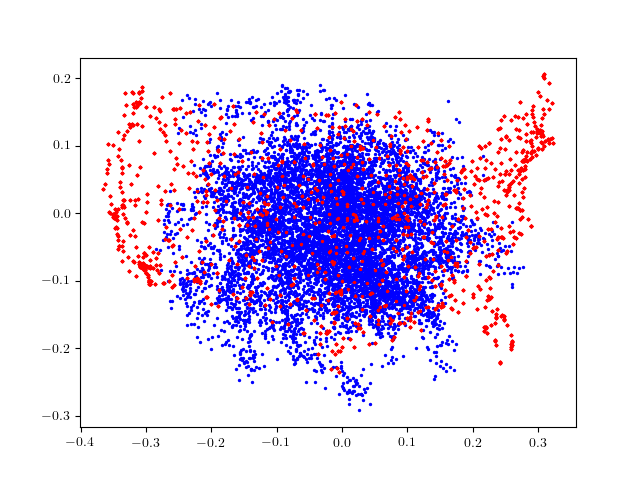}
    \caption{CEMP\_GCW, $\eta=0.2$}
  \end{subfigure}
  \begin{subfigure}[ht]{0.245\linewidth}
    \centering
    \includegraphics[width=\linewidth,trim=0cm 0cm 0cm 0cm,clip]{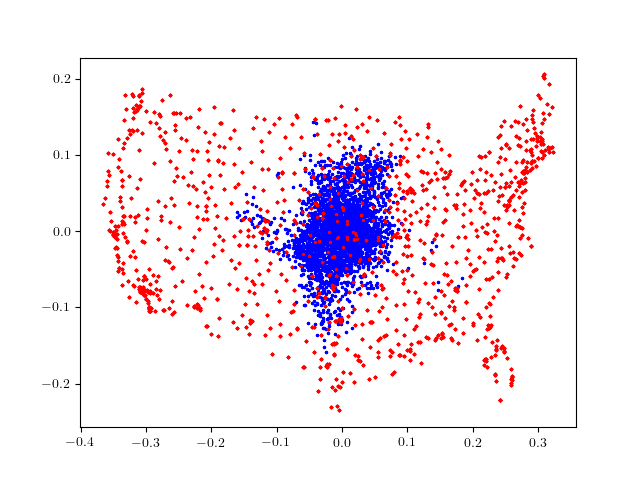}
    \caption{CEMP\_MST, $\eta=0.2$}
  \end{subfigure}
  \begin{subfigure}[ht]{0.245\linewidth}
    \centering
    \includegraphics[width=\linewidth,trim=0cm 0cm 0cm 0cm,clip]{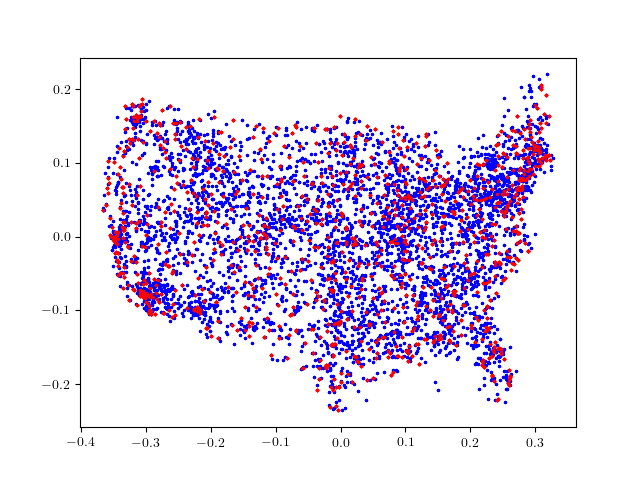}
    \caption{TAS, $\eta=0.2$}
  \end{subfigure}
%%%%%%%%%%%%%%%%%%%%%%%%%%%%%%%%%%%
  \begin{subfigure}[ht]{0.245\linewidth}
    \centering
    \includegraphics[width=\linewidth,trim=0cm 0cm 0cm 0cm,clip]{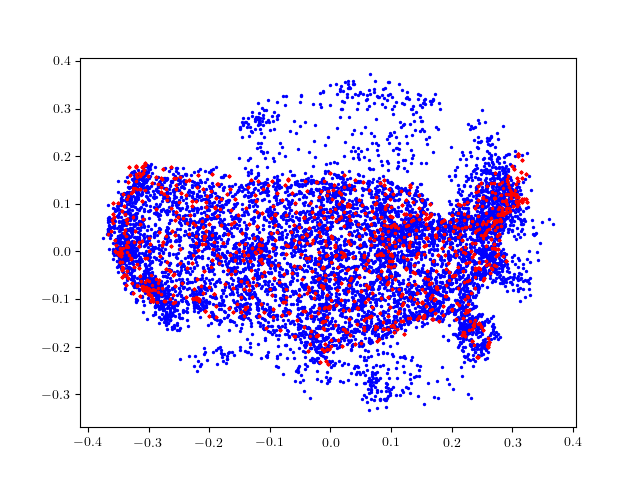}
    \caption{GNNSync, $\eta=0.25$}
  \end{subfigure}
  \begin{subfigure}[ht]{0.245\linewidth}
    \centering
    \includegraphics[width=\linewidth,trim=0cm 0cm 0cm 0cm,clip]{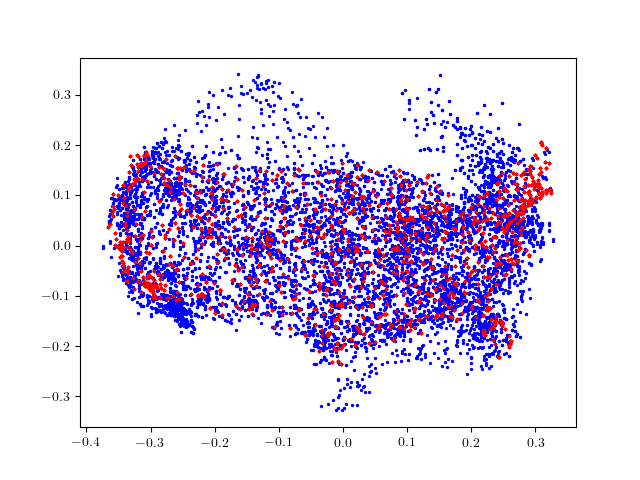}
    \caption{Spectral, $\eta=0.25$}
  \end{subfigure}
  \begin{subfigure}[ht]{0.245\linewidth}
    \centering
    \includegraphics[width=\linewidth,trim=0cm 0cm 0cm 0cm,clip]{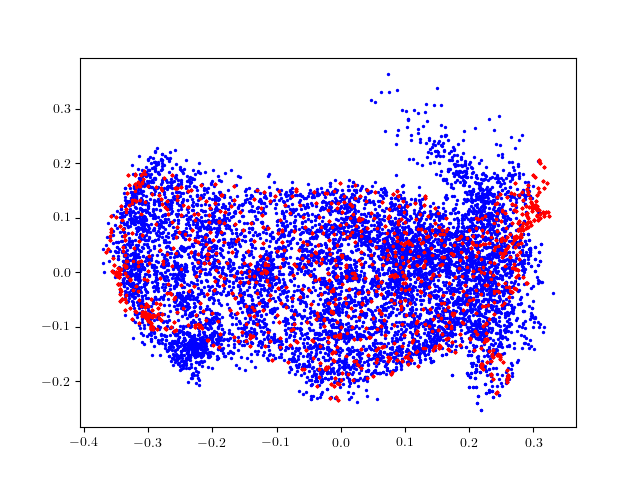}
    \caption{Spectral\_RN, $\eta=0.25$}
  \end{subfigure}
  \begin{subfigure}[ht]{0.245\linewidth}
    \centering
    \includegraphics[width=\linewidth,trim=0cm 0cm 0cm 0cm,clip]{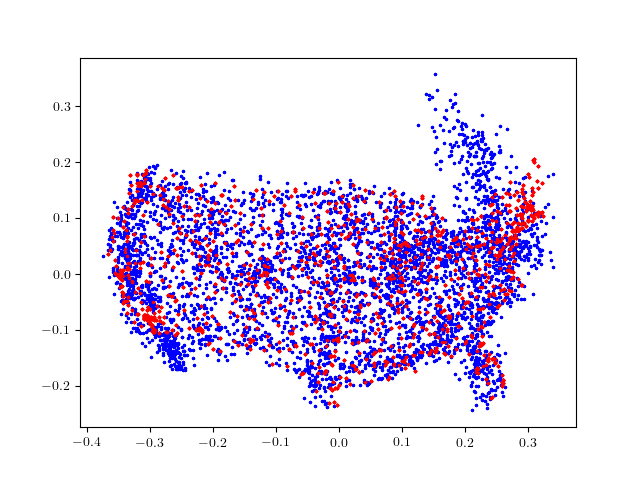}
    \caption{GPM, $\eta=0.25$}
  \end{subfigure}
  \begin{subfigure}[ht]{0.245\linewidth}
    \centering
    \includegraphics[width=\linewidth,trim=0cm 0cm 0cm 0cm,clip]{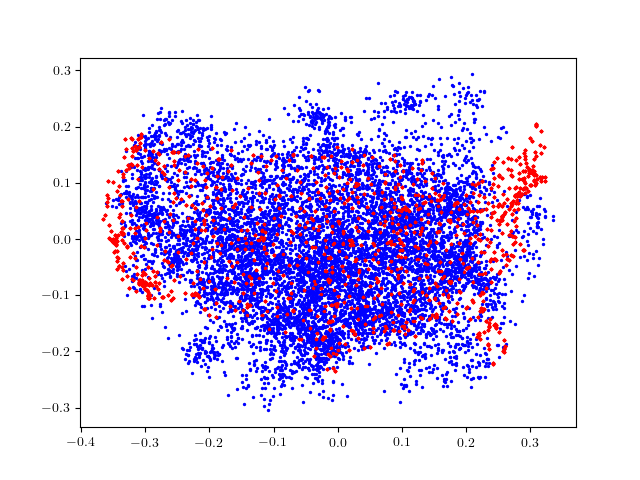}
    \caption{TranSync, $\eta=0.25$}
  \end{subfigure}
  \begin{subfigure}[ht]{0.245\linewidth}
    \centering
    \includegraphics[width=\linewidth,trim=0cm 0cm 0cm 0cm,clip]{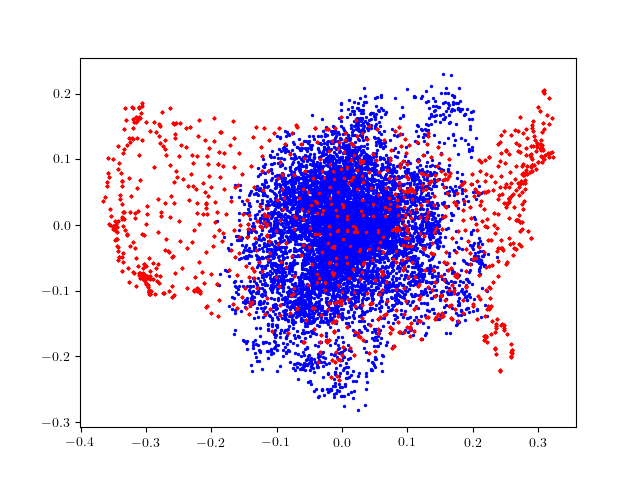}
    \caption{CEMP\_GCW, $\eta=0.25$}
  \end{subfigure}
  \begin{subfigure}[ht]{0.245\linewidth}
    \centering
    \includegraphics[width=\linewidth,trim=0cm 0cm 0cm 0cm,clip]{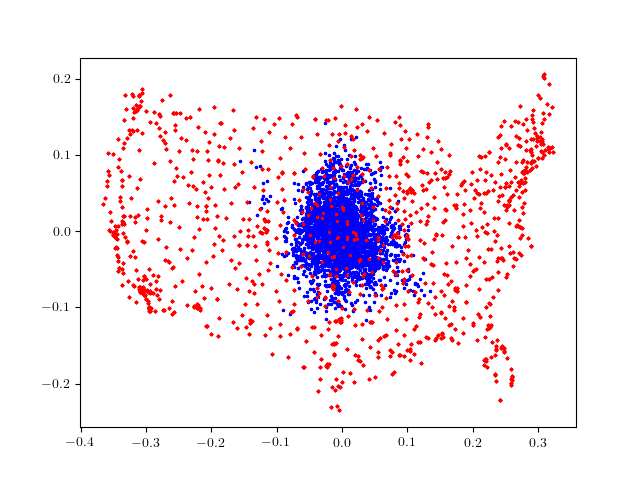}
    \caption{CEMP\_MST, $\eta=0.25$}
  \end{subfigure}
  \begin{subfigure}[ht]{0.245\linewidth}
    \centering
    \includegraphics[width=\linewidth,trim=0cm 0cm 0cm 0cm,clip]{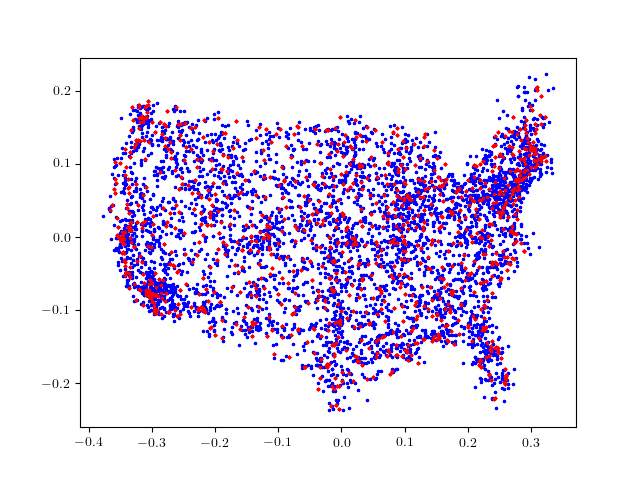}
    \caption{TAS, $\eta=0.25$}
  \end{subfigure}
%%%%%%%%%%%%%%%%%%%%%%%%%%%%%%%%%%%
    \caption{Result visualization for the Sensor Network Localization task on the U.S. map using option ``2" as ground-truth angles for high-noise input data. Red dots indicate ground-truth locations and blue dots are estimated city locations.
    }
    \label{fig:uscities_multi_normal1_full_high}
\end{figure*}

\begin{figure*}[!hbt]
    \centering
  \begin{subfigure}[ht]{0.245\linewidth}
    \centering
    \includegraphics[width=\linewidth,trim=0cm 0cm 0cm 0cm,clip]{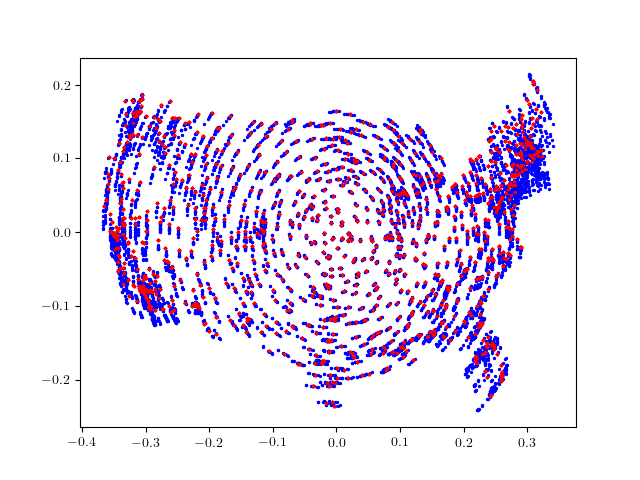}
    \caption{GNNSync, $\eta=0$}
  \end{subfigure}
  \begin{subfigure}[ht]{0.245\linewidth}
    \centering
    \includegraphics[width=\linewidth,trim=0cm 0cm 0cm 0cm,clip]{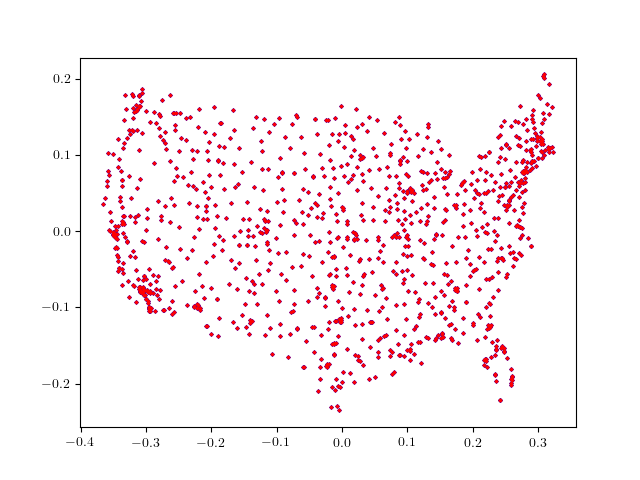}
    \caption{Spectral, $\eta=0$}
  \end{subfigure}
  \begin{subfigure}[ht]{0.245\linewidth}
    \centering
    \includegraphics[width=\linewidth,trim=0cm 0cm 0cm 0cm,clip]{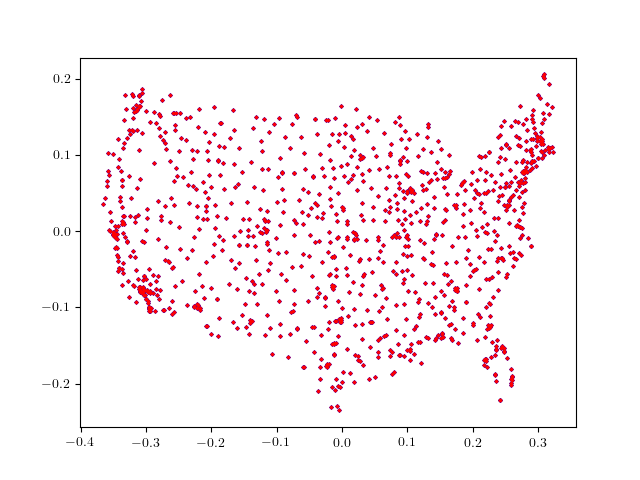}
    \caption{Spectral\_RN, $\eta=0$}
  \end{subfigure}
  \begin{subfigure}[ht]{0.245\linewidth}
    \centering
    \includegraphics[width=\linewidth,trim=0cm 0cm 0cm 0cm,clip]{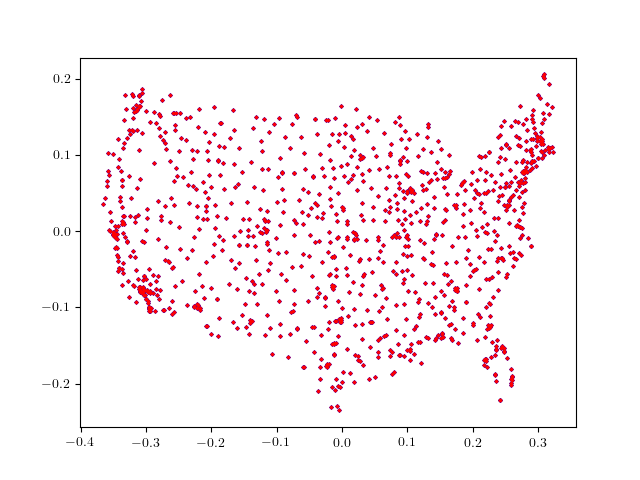}
    \caption{GPM, $\eta=0$}
  \end{subfigure}
  \begin{subfigure}[ht]{0.245\linewidth}
    \centering
    \includegraphics[width=\linewidth,trim=0cm 0cm 0cm 0cm,clip]{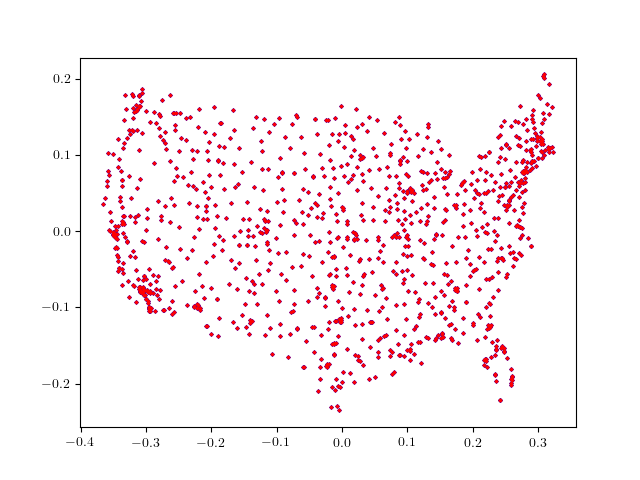}
    \caption{TranSync, $\eta=0$}
  \end{subfigure}
  \begin{subfigure}[ht]{0.245\linewidth}
    \centering
    \includegraphics[width=\linewidth,trim=0cm 0cm 0cm 0cm,clip]{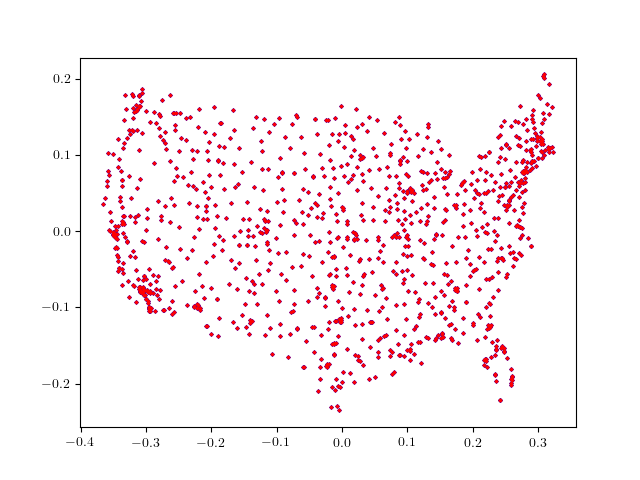}
    \caption{CEMP\_GCW, $\eta=0$}
  \end{subfigure}
  \begin{subfigure}[ht]{0.245\linewidth}
    \centering
    \includegraphics[width=\linewidth,trim=0cm 0cm 0cm 0cm,clip]{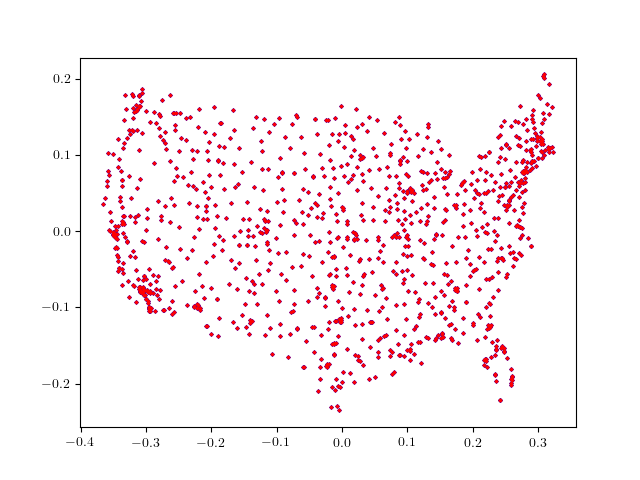}
    \caption{CEMP\_MST, $\eta=0$}
  \end{subfigure}
  \begin{subfigure}[ht]{0.245\linewidth}
    \centering
    \includegraphics[width=\linewidth,trim=0cm 0cm 0cm 0cm,clip]{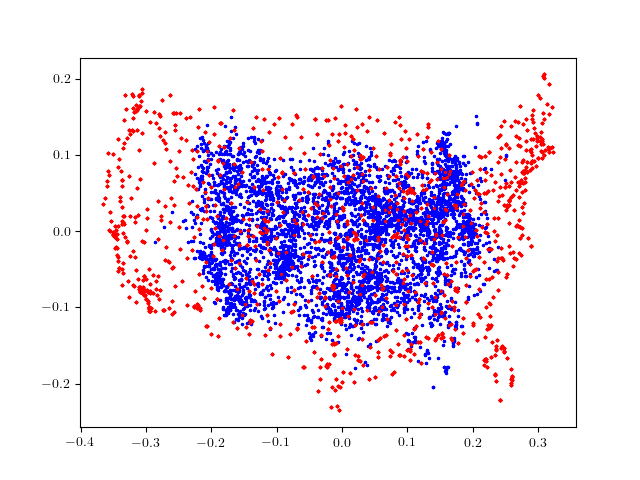}
    \caption{TAS, $\eta=0$}
  \end{subfigure}
%%%%%%%%%%%%%%%%%%%%%%%%%%%%%%%%%%%
  \begin{subfigure}[ht]{0.245\linewidth}
    \centering
    \includegraphics[width=\linewidth,trim=0cm 0cm 0cm 0cm,clip]{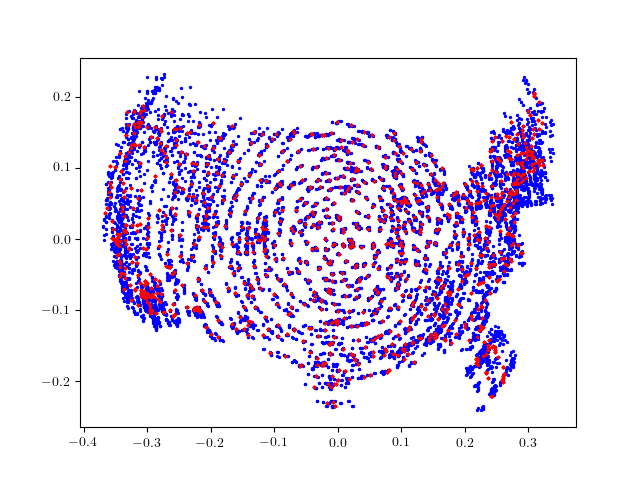}
    \caption{GNNSync, $\eta=0.05$}
  \end{subfigure}
  \begin{subfigure}[ht]{0.245\linewidth}
    \centering
    \includegraphics[width=\linewidth,trim=0cm 0cm 0cm 0cm,clip]{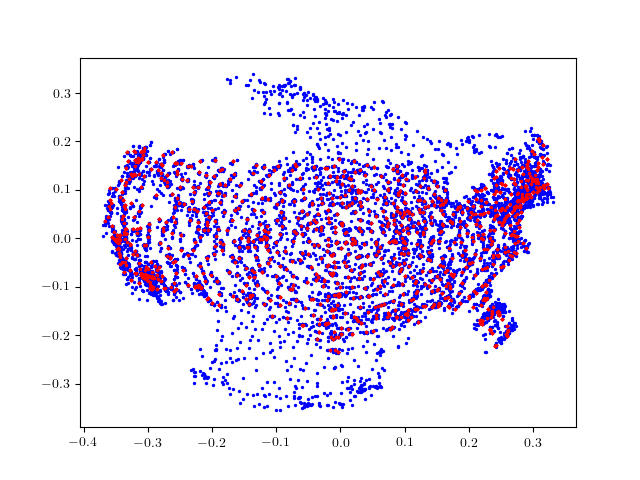}
    \caption{Spectral, $\eta=0.05$}
  \end{subfigure}
  \begin{subfigure}[ht]{0.245\linewidth}
    \centering
    \includegraphics[width=\linewidth,trim=0cm 0cm 0cm 0cm,clip]{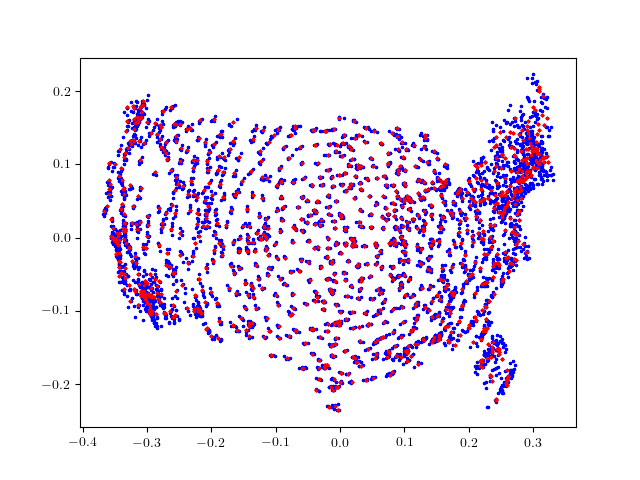}
    \caption{Spectral\_RN, $\eta=0.05$}
  \end{subfigure}
  \begin{subfigure}[ht]{0.245\linewidth}
    \centering
    \includegraphics[width=\linewidth,trim=0cm 0cm 0cm 0cm,clip]{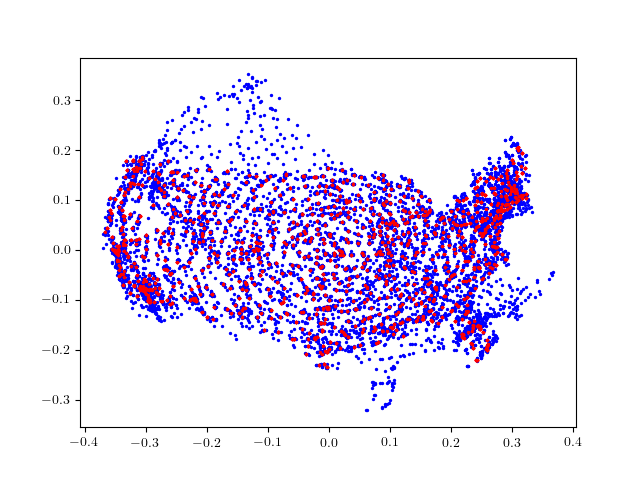}
    \caption{GPM, $\eta=0.05$}
  \end{subfigure}
  \begin{subfigure}[ht]{0.245\linewidth}
    \centering
    \includegraphics[width=\linewidth,trim=0cm 0cm 0cm 0cm,clip]{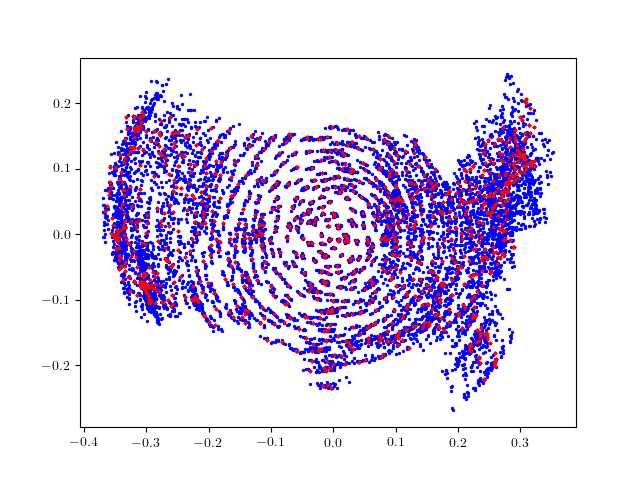}
    \caption{TranSync, $\eta=0.05$}
  \end{subfigure}
  \begin{subfigure}[ht]{0.245\linewidth}
    \centering
    \includegraphics[width=\linewidth,trim=0cm 0cm 0cm 0cm,clip]{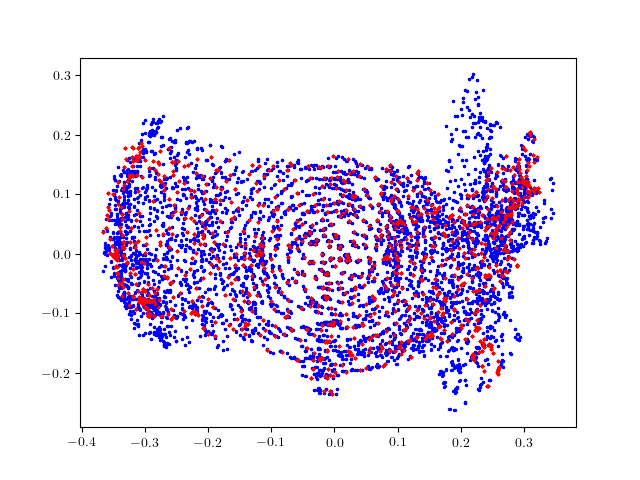}
    \caption{CEMP\_GCW, $\eta=0.05$}
  \end{subfigure}
  \begin{subfigure}[ht]{0.245\linewidth}
    \centering
    \includegraphics[width=\linewidth,trim=0cm 0cm 0cm 0cm,clip]{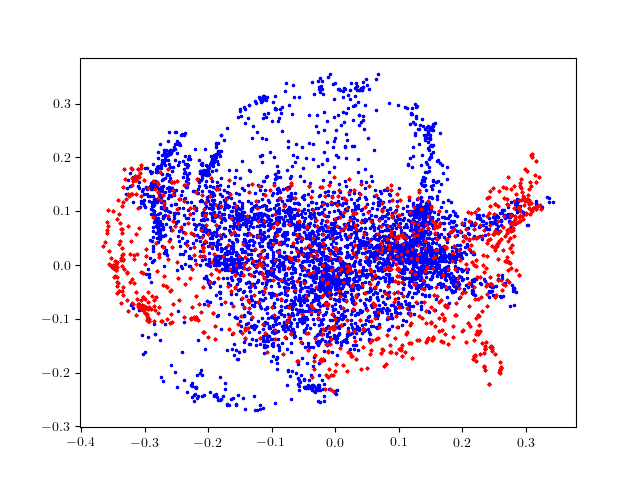}
    \caption{CEMP\_MST, $\eta=0.05$}
  \end{subfigure}
  \begin{subfigure}[ht]{0.245\linewidth}
    \centering
    \includegraphics[width=\linewidth,trim=0cm 0cm 0cm 0cm,clip]{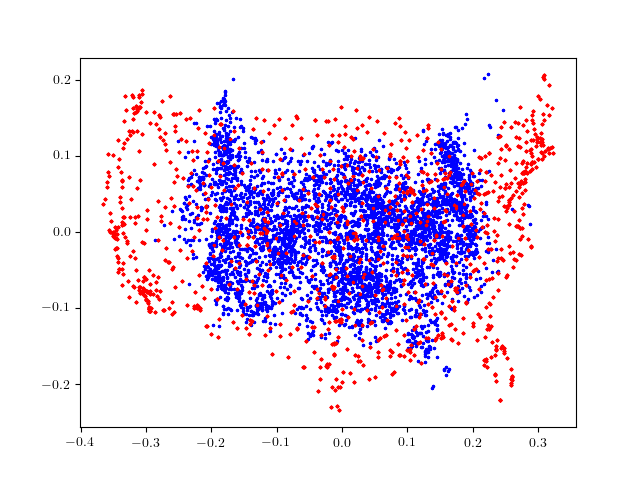}
    \caption{TAS, $\eta=0.05$}
  \end{subfigure}
%%%%%%%%%%%%%%%%%%%%%%%%%%%%%%%%%%%
  \begin{subfigure}[ht]{0.245\linewidth}
    \centering
    \includegraphics[width=\linewidth,trim=0cm 0cm 0cm 0cm,clip]{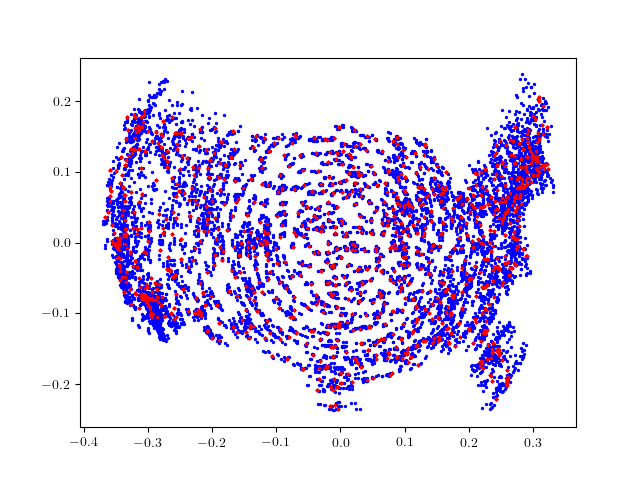}
    \caption{GNNSync, $\eta=0.1$}
  \end{subfigure}
  \begin{subfigure}[ht]{0.245\linewidth}
    \centering
    \includegraphics[width=\linewidth,trim=0cm 0cm 0cm 0cm,clip]{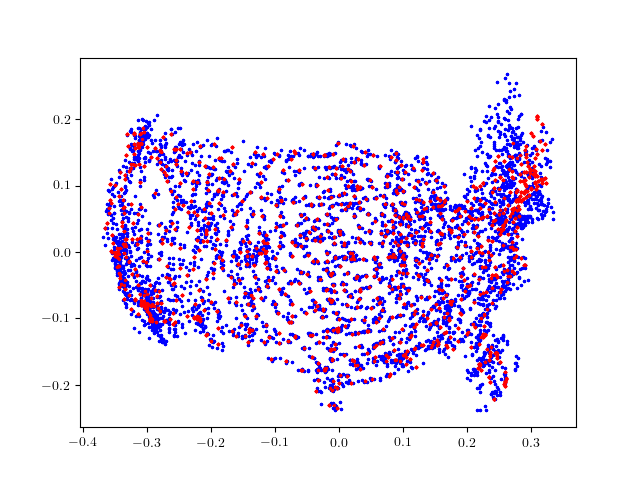}
    \caption{Spectral, $\eta=0.1$}
  \end{subfigure}
  \begin{subfigure}[ht]{0.245\linewidth}
    \centering
    \includegraphics[width=\linewidth,trim=0cm 0cm 0cm 0cm,clip]{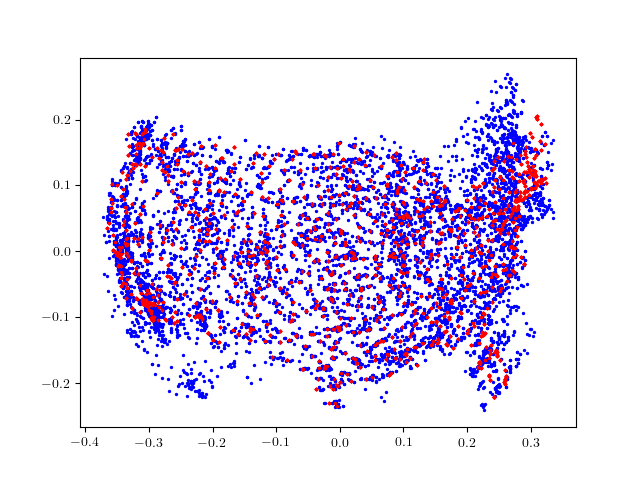}
    \caption{Spectral\_RN, $\eta=0.1$}
  \end{subfigure}
  \begin{subfigure}[ht]{0.245\linewidth}
    \centering
    \includegraphics[width=\linewidth,trim=0cm 0cm 0cm 0cm,clip]{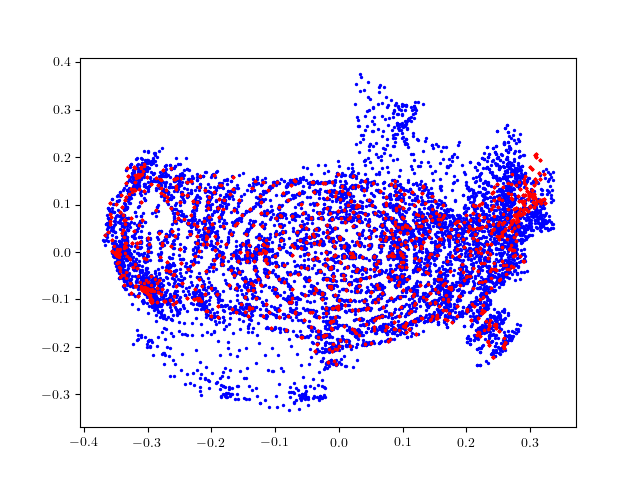}
    \caption{GPM, $\eta=0.1$}
  \end{subfigure}
  \begin{subfigure}[ht]{0.245\linewidth}
    \centering
    \includegraphics[width=\linewidth,trim=0cm 0cm 0cm 0cm,clip]{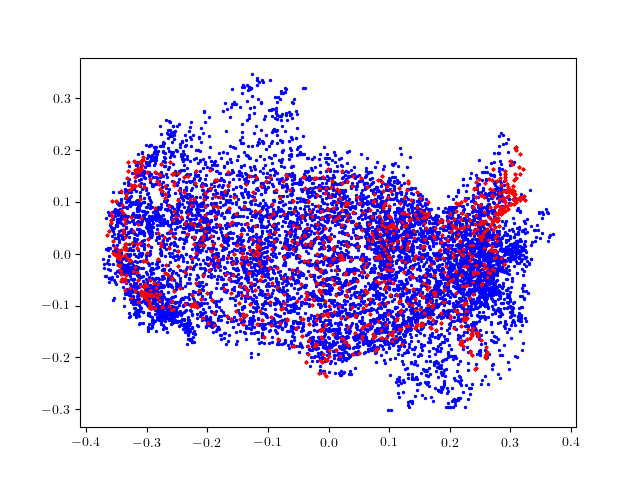}
    \caption{TranSync, $\eta=0.1$}
  \end{subfigure}
  \begin{subfigure}[ht]{0.245\linewidth}
    \centering
    \includegraphics[width=\linewidth,trim=0cm 0cm 0cm 0cm,clip]{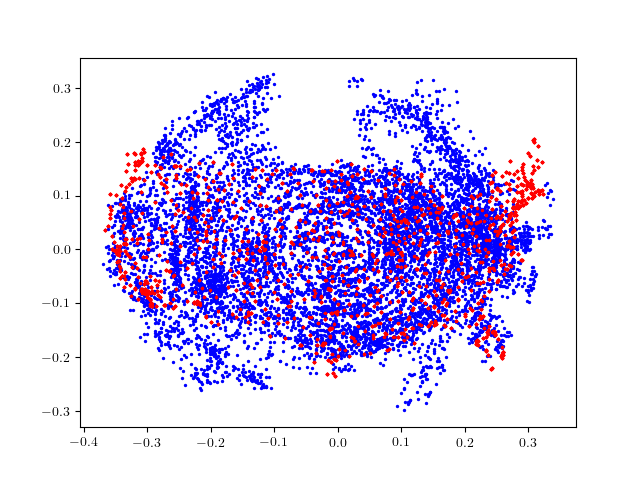}
    \caption{CEMP\_GCW, $\eta=0.1$}
  \end{subfigure}
  \begin{subfigure}[ht]{0.245\linewidth}
    \centering
    \includegraphics[width=\linewidth,trim=0cm 0cm 0cm 0cm,clip]{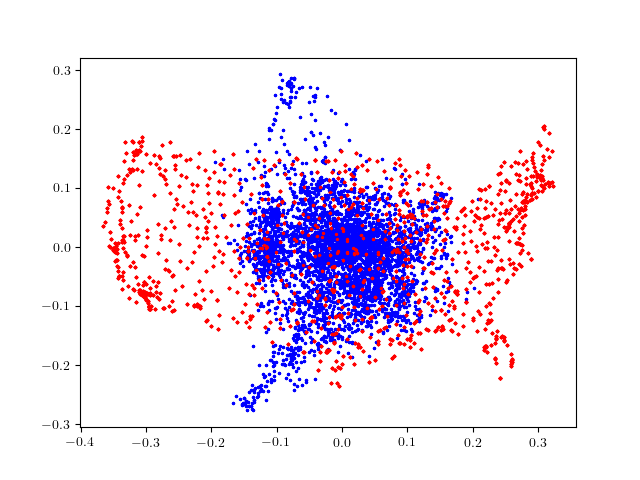}
    \caption{CEMP\_MST, $\eta=0.1$}
  \end{subfigure}
  \begin{subfigure}[ht]{0.245\linewidth}
    \centering
    \includegraphics[width=\linewidth,trim=0cm 0cm 0cm 0cm,clip]{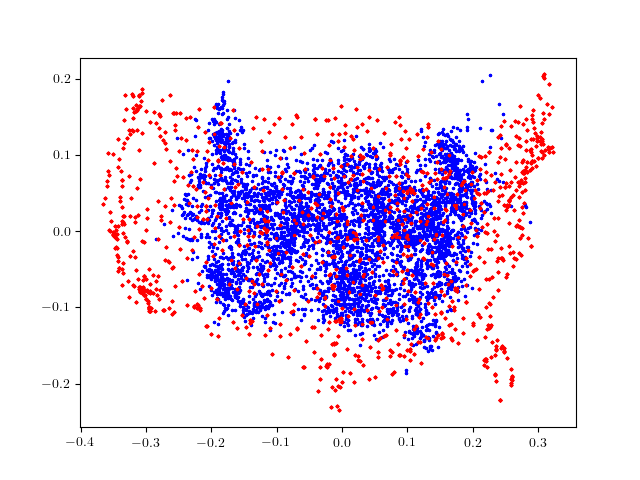}
    \caption{TAS, $\eta=0.1$}
  \end{subfigure}
%%%%%%%%%%%%%%%%%%%%%%%%%%%%%%%%%%%
    \caption{Result visualization for the Sensor Network Localization task on the U.S. map using option ``3" as ground-truth angles for low-noise input data. Red dots indicate ground-truth locations and blue dots are estimated city locations.
    }
    \label{fig:uscities_multi_normal0_full_low}
\end{figure*}

\begin{figure*}[!hbt]
    \centering
  \begin{subfigure}[ht]{0.245\linewidth}
    \centering
    \includegraphics[width=\linewidth,trim=0cm 0cm 0cm 0cm,clip]{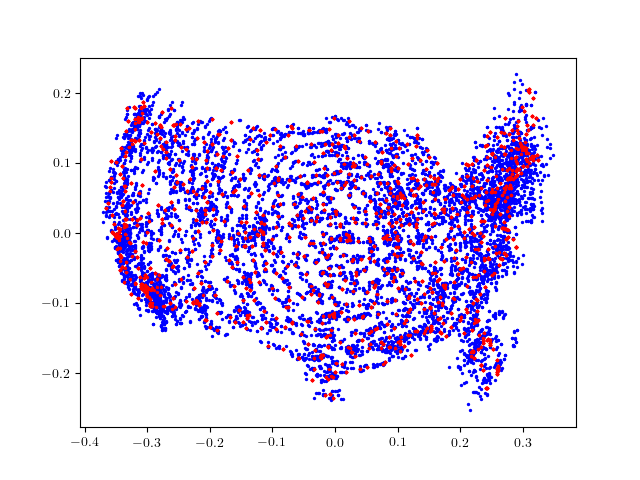}
    \caption{GNNSync, $\eta=0.15$}
  \end{subfigure}
  \begin{subfigure}[ht]{0.245\linewidth}
    \centering
    \includegraphics[width=\linewidth,trim=0cm 0cm 0cm 0cm,clip]{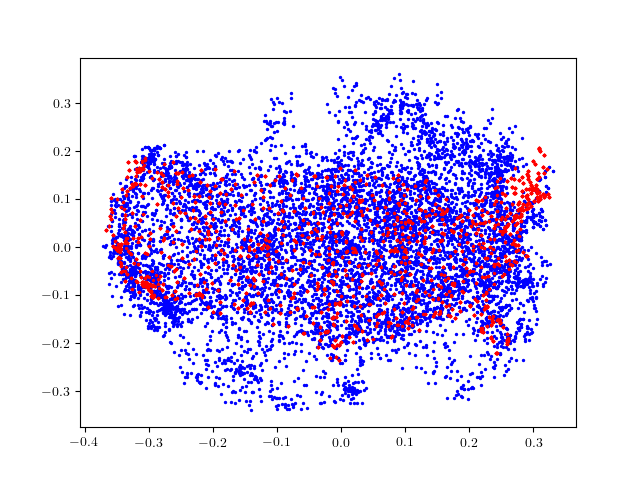}
    \caption{Spectral, $\eta=0.15$}
  \end{subfigure}
  \begin{subfigure}[ht]{0.245\linewidth}
    \centering
    \includegraphics[width=\linewidth,trim=0cm 0cm 0cm 0cm,clip]{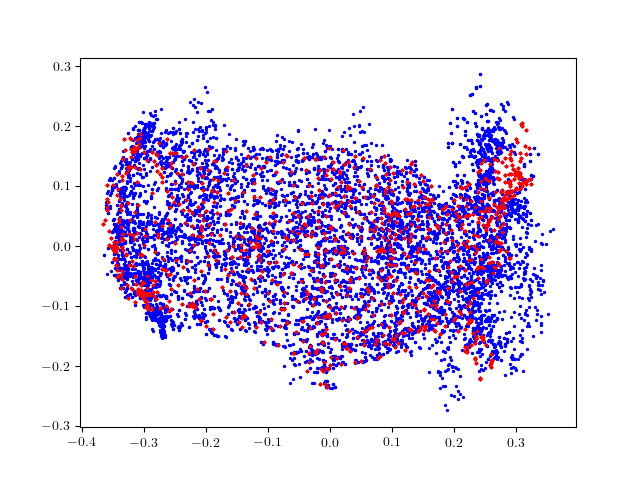}
    \caption{Spectral\_RN, $\eta=0.15$}
  \end{subfigure}
  \begin{subfigure}[ht]{0.245\linewidth}
    \centering
    \includegraphics[width=\linewidth,trim=0cm 0cm 0cm 0cm,clip]{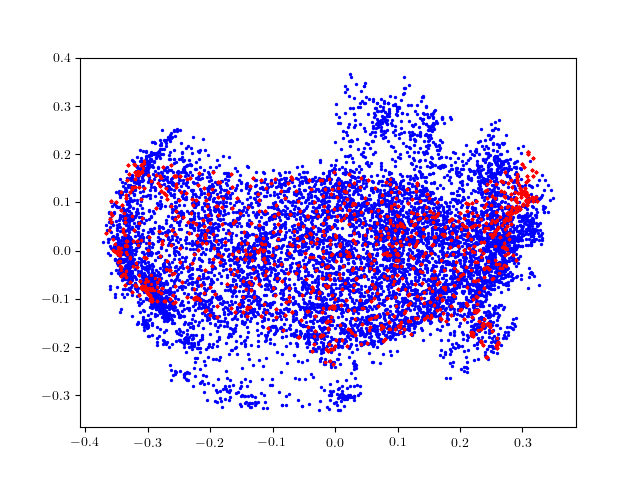}
    \caption{GPM, $\eta=0.15$}
  \end{subfigure}
  \begin{subfigure}[ht]{0.245\linewidth}
    \centering
    \includegraphics[width=\linewidth,trim=0cm 0cm 0cm 0cm,clip]{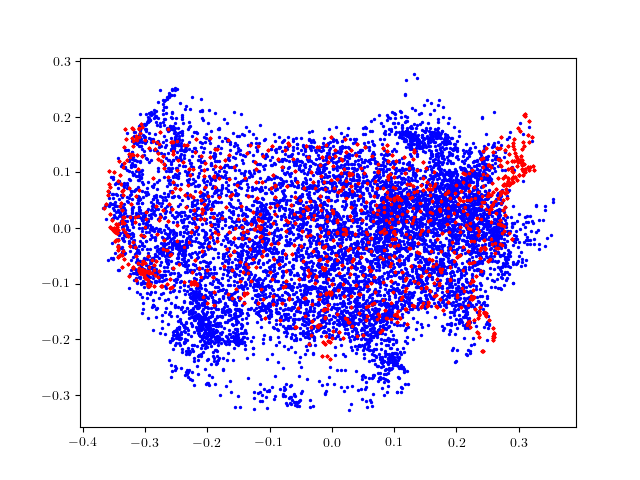}
    \caption{TranSync, $\eta=0.15$}
  \end{subfigure}
  \begin{subfigure}[ht]{0.245\linewidth}
    \centering
    \includegraphics[width=\linewidth,trim=0cm 0cm 0cm 0cm,clip]{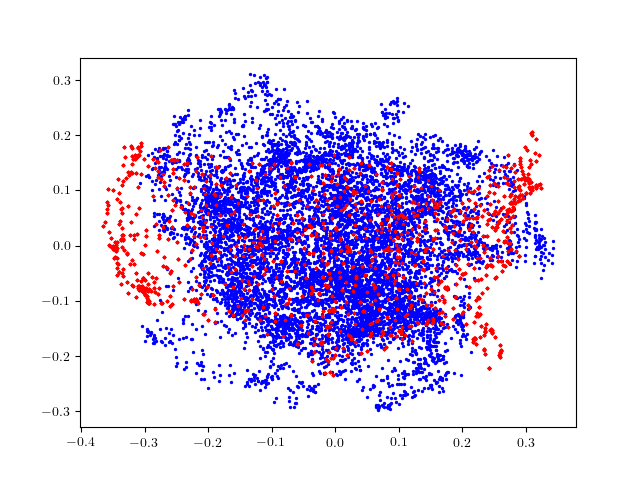}
    \caption{CEMP\_GCW, $\eta=0.15$}
  \end{subfigure}
  \begin{subfigure}[ht]{0.245\linewidth}
    \centering
    \includegraphics[width=\linewidth,trim=0cm 0cm 0cm 0cm,clip]{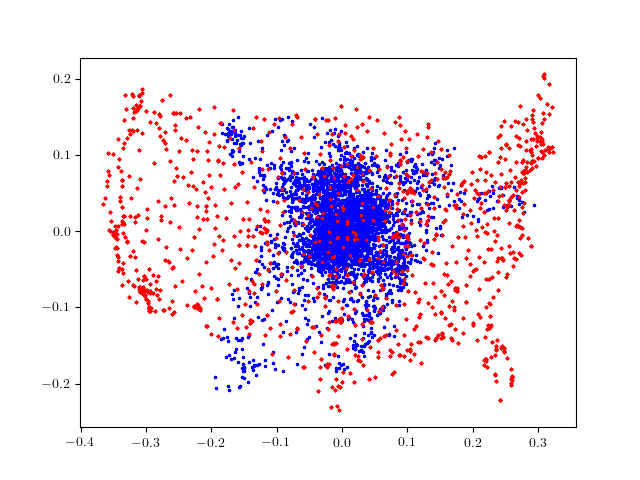}
    \caption{CEMP\_MST, $\eta=0.15$}
  \end{subfigure}
  \begin{subfigure}[ht]{0.245\linewidth}
    \centering
    \includegraphics[width=\linewidth,trim=0cm 0cm 0cm 0cm,clip]{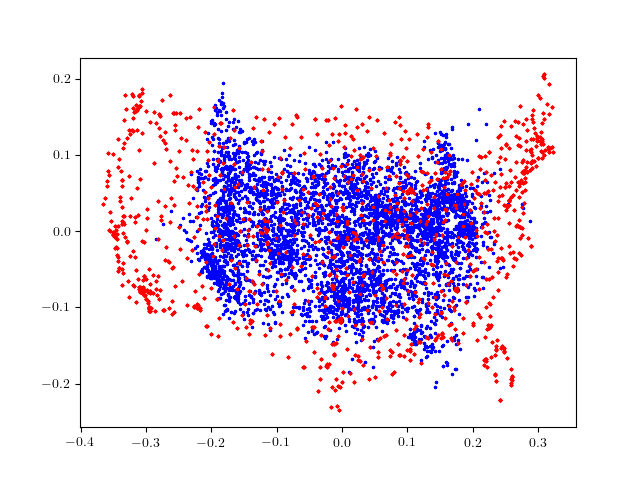}
    \caption{TAS, $\eta=0.15$}
  \end{subfigure}
%%%%%%%%%%%%%%%%%%%%%%%%%%%%%%%%%%%
  \begin{subfigure}[ht]{0.245\linewidth}
    \centering
    \includegraphics[width=\linewidth,trim=0cm 0cm 0cm 0cm,clip]{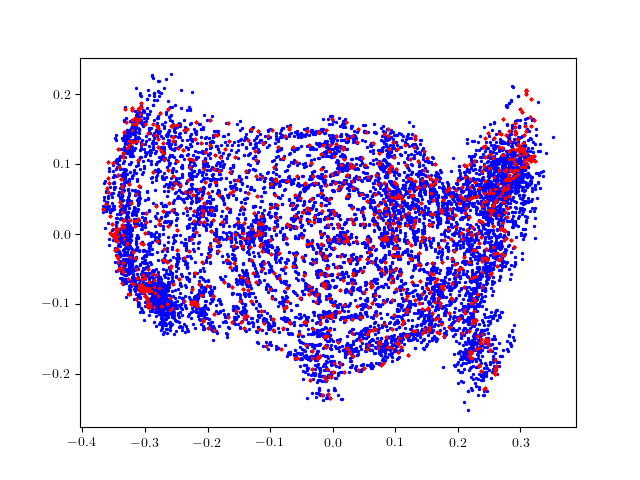}
    \caption{GNNSync, $\eta=0.2$}
  \end{subfigure}
  \begin{subfigure}[ht]{0.245\linewidth}
    \centering
    \includegraphics[width=\linewidth,trim=0cm 0cm 0cm 0cm,clip]{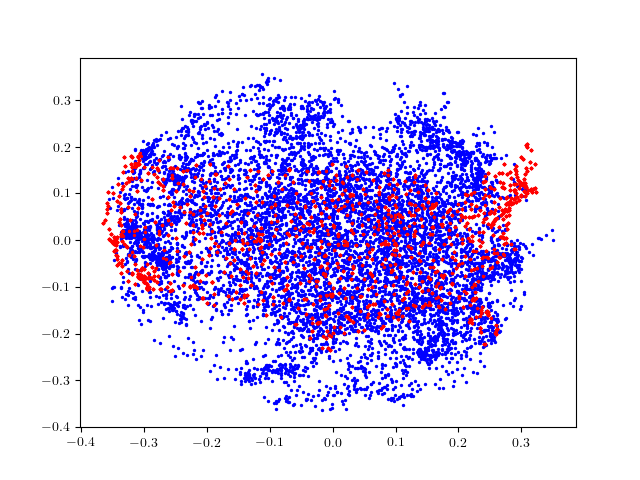}
    \caption{Spectral, $\eta=0.2$}
  \end{subfigure}
  \begin{subfigure}[ht]{0.245\linewidth}
    \centering
    \includegraphics[width=\linewidth,trim=0cm 0cm 0cm 0cm,clip]{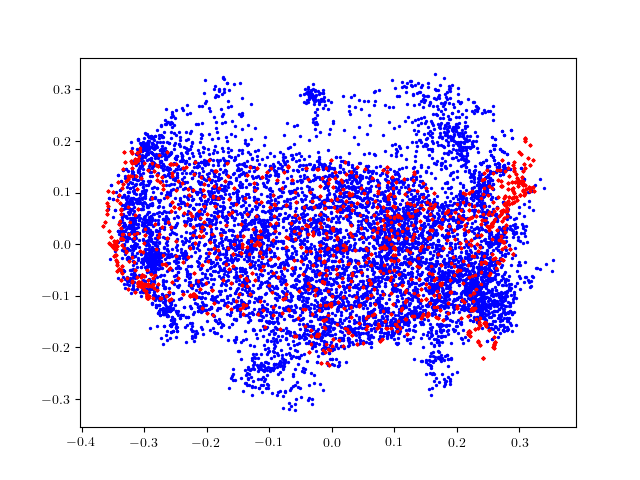}
    \caption{Spectral\_RN, $\eta=0.2$}
  \end{subfigure}
  \begin{subfigure}[ht]{0.245\linewidth}
    \centering
    \includegraphics[width=\linewidth,trim=0cm 0cm 0cm 0cm,clip]{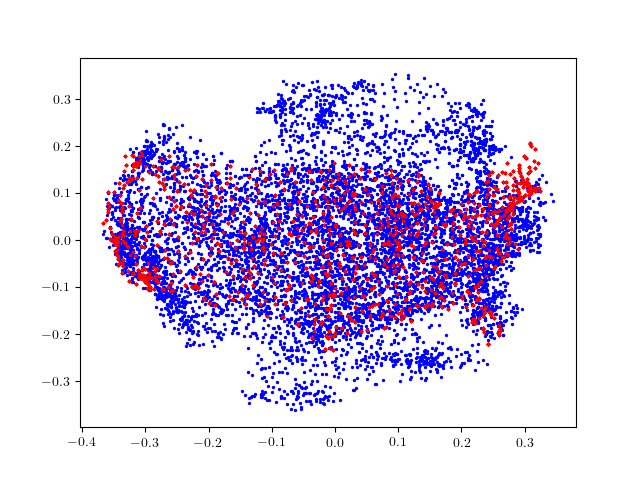}
    \caption{GPM, $\eta=0.2$}
  \end{subfigure}
  \begin{subfigure}[ht]{0.245\linewidth}
    \centering
    \includegraphics[width=\linewidth,trim=0cm 0cm 0cm 0cm,clip]{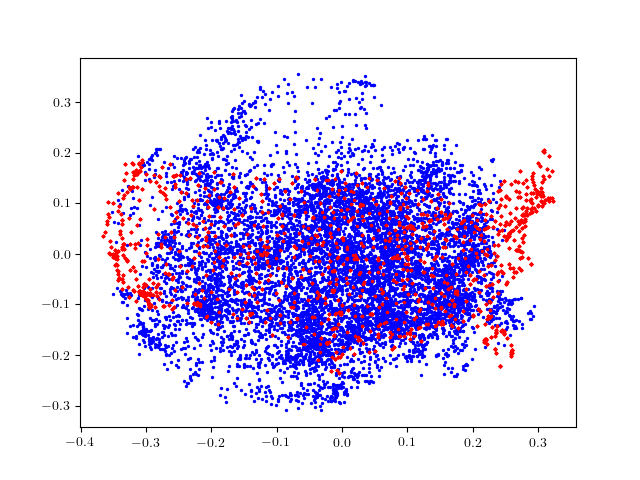}
    \caption{TranSync, $\eta=0.2$}
  \end{subfigure}
  \begin{subfigure}[ht]{0.245\linewidth}
    \centering
    \includegraphics[width=\linewidth,trim=0cm 0cm 0cm 0cm,clip]{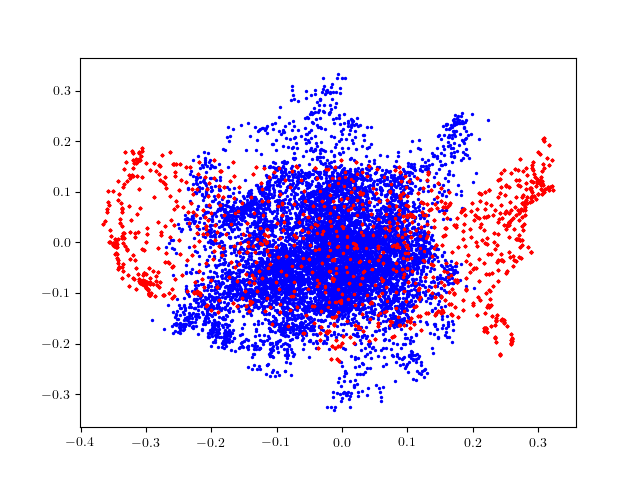}
    \caption{CEMP\_GCW, $\eta=0.2$}
  \end{subfigure}
  \begin{subfigure}[ht]{0.245\linewidth}
    \centering
    \includegraphics[width=\linewidth,trim=0cm 0cm 0cm 0cm,clip]{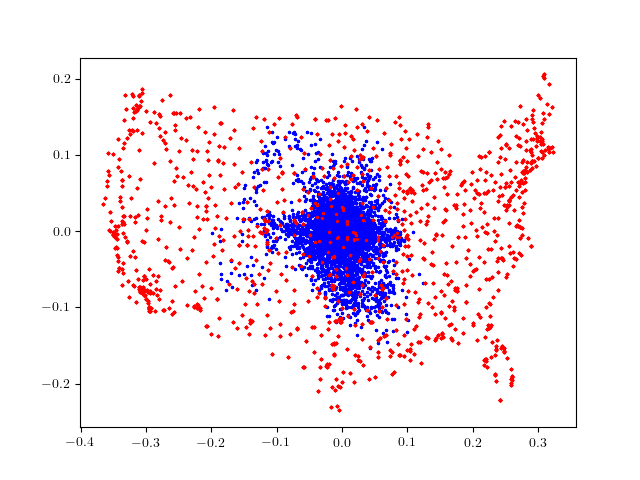}
    \caption{CEMP\_MST, $\eta=0.2$}
  \end{subfigure}
  \begin{subfigure}[ht]{0.245\linewidth}
    \centering
    \includegraphics[width=\linewidth,trim=0cm 0cm 0cm 0cm,clip]{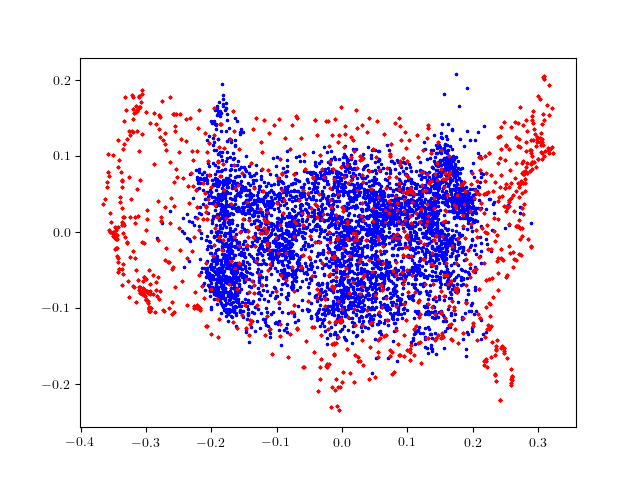}
    \caption{TAS, $\eta=0.2$}
  \end{subfigure}
%%%%%%%%%%%%%%%%%%%%%%%%%%%%%%%%%%%
  \begin{subfigure}[ht]{0.245\linewidth}
    \centering
    \includegraphics[width=\linewidth,trim=0cm 0cm 0cm 0cm,clip]{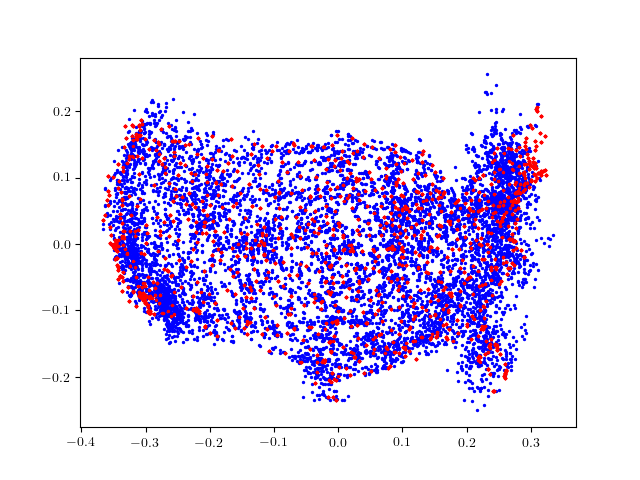}
    \caption{GNNSync, $\eta=0.25$}
  \end{subfigure}
  \begin{subfigure}[ht]{0.245\linewidth}
    \centering
    \includegraphics[width=\linewidth,trim=0cm 0cm 0cm 0cm,clip]{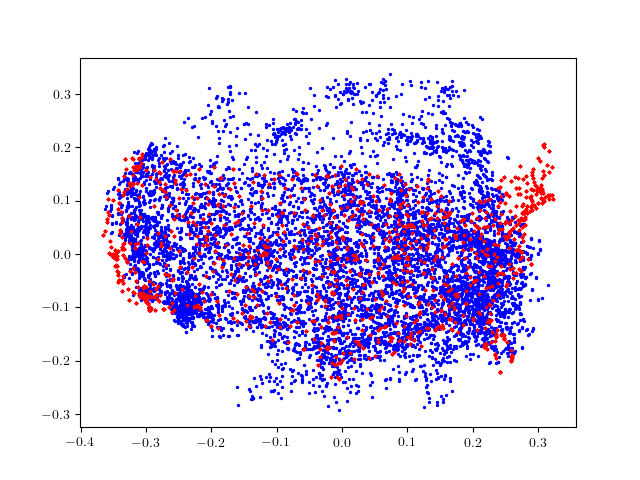}
    \caption{Spectral, $\eta=0.25$}
  \end{subfigure}
  \begin{subfigure}[ht]{0.245\linewidth}
    \centering
    \includegraphics[width=\linewidth,trim=0cm 0cm 0cm 0cm,clip]{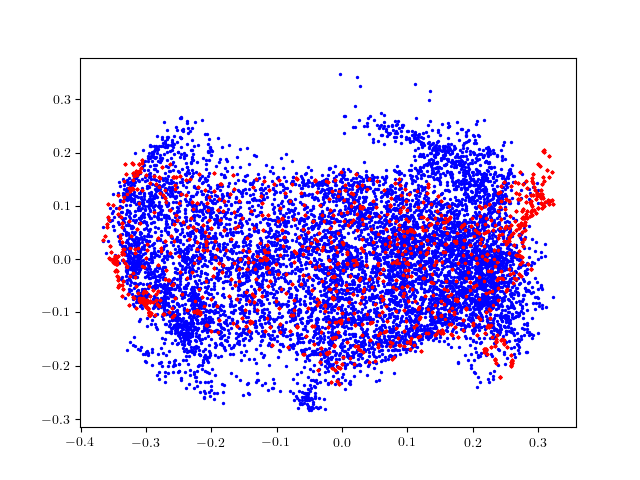}
    \caption{Spectral\_RN, $\eta=0.25$}
  \end{subfigure}
  \begin{subfigure}[ht]{0.245\linewidth}
    \centering
    \includegraphics[width=\linewidth,trim=0cm 0cm 0cm 0cm,clip]{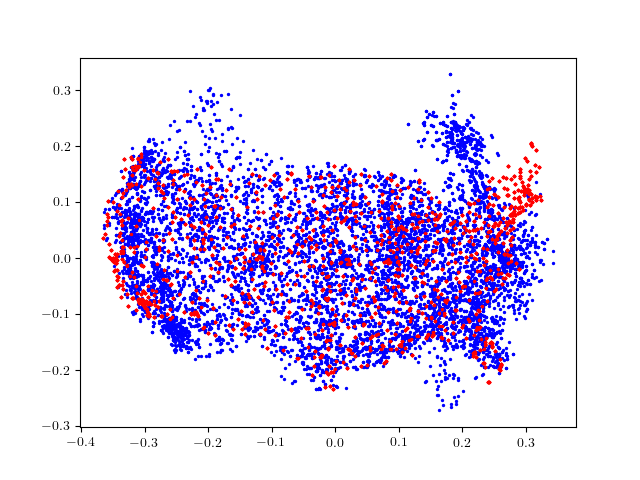}
    \caption{GPM, $\eta=0.25$}
  \end{subfigure}
  \begin{subfigure}[ht]{0.245\linewidth}
    \centering
    \includegraphics[width=\linewidth,trim=0cm 0cm 0cm 0cm,clip]{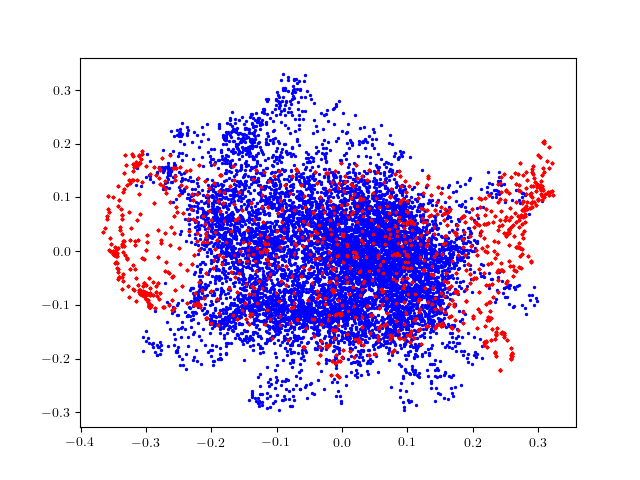}
    \caption{TranSync, $\eta=0.25$}
  \end{subfigure}
  \begin{subfigure}[ht]{0.245\linewidth}
    \centering
    \includegraphics[width=\linewidth,trim=0cm 0cm 0cm 0cm,clip]{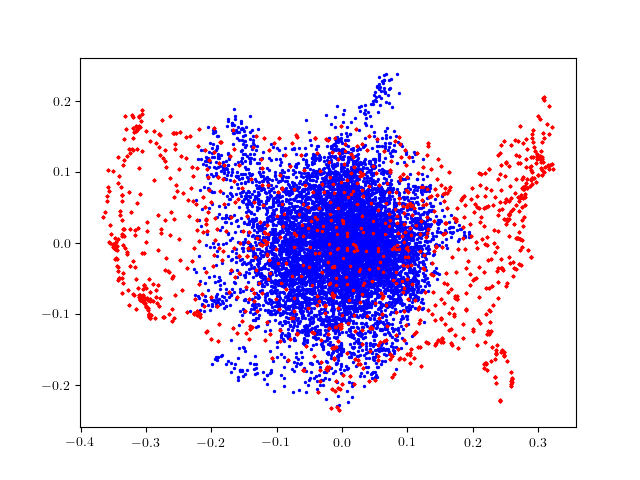}
    \caption{CEMP\_GCW, $\eta=0.25$}
  \end{subfigure}
  \begin{subfigure}[ht]{0.245\linewidth}
    \centering
    \includegraphics[width=\linewidth,trim=0cm 0cm 0cm 0cm,clip]{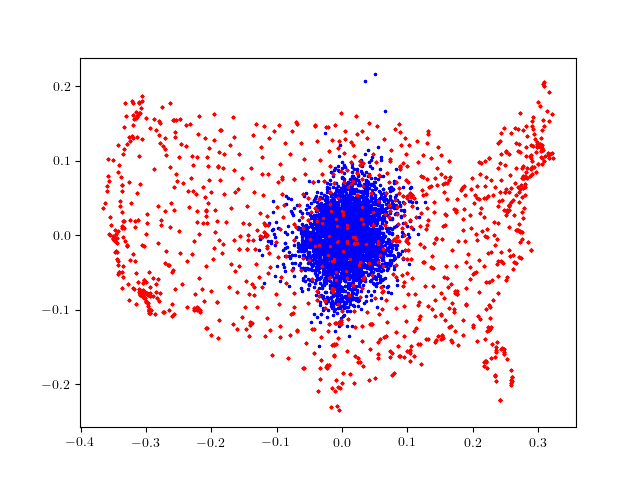}
    \caption{CEMP\_MST, $\eta=0.25$}
  \end{subfigure}
  \begin{subfigure}[ht]{0.245\linewidth}
    \centering
    \includegraphics[width=\linewidth,trim=0cm 0cm 0cm 0cm,clip]{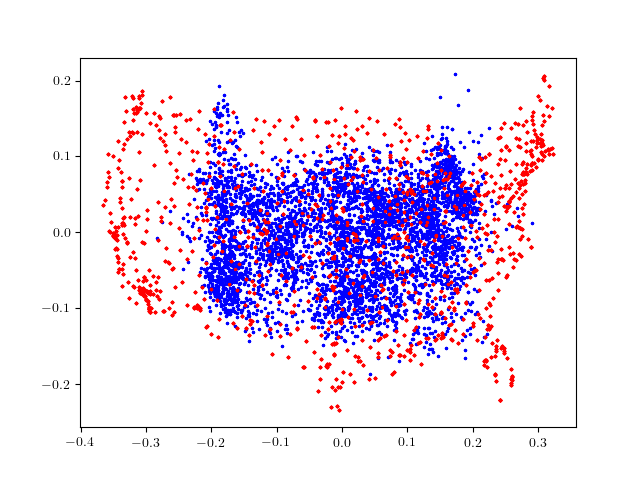}
    \caption{TAS, $\eta=0.25$}
  \end{subfigure}
%%%%%%%%%%%%%%%%%%%%%%%%%%%%%%%%%%%
    \caption{Result visualization for the Sensor Network Localization task on the U.S. map using option ``3" as ground-truth angles for high-noise input data. Red dots indicate ground-truth locations and blue dots are estimated city locations.
    }
    \label{fig:uscities_multi_normal0_full_high}
\end{figure*}

\begin{figure*}[!hbt]
    \centering
  \begin{subfigure}[ht]{0.245\linewidth}
    \centering
    \includegraphics[width=\linewidth,trim=0cm 0cm 0cm 0cm,clip]{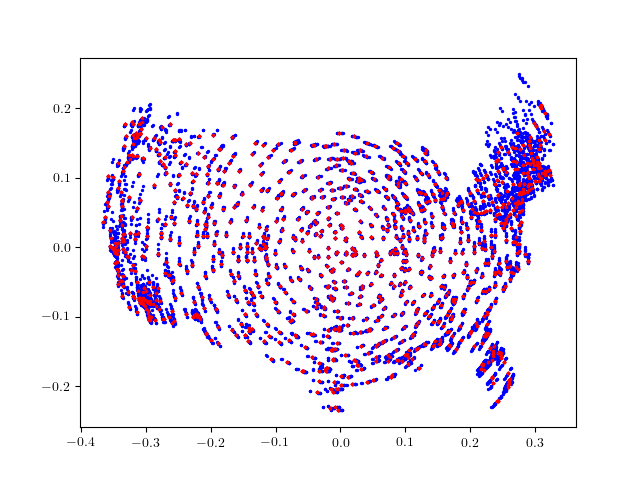}
    \caption{GNNSync, $\eta=0$}
  \end{subfigure}
  \begin{subfigure}[ht]{0.245\linewidth}
    \centering
    \includegraphics[width=\linewidth,trim=0cm 0cm 0cm 0cm,clip]{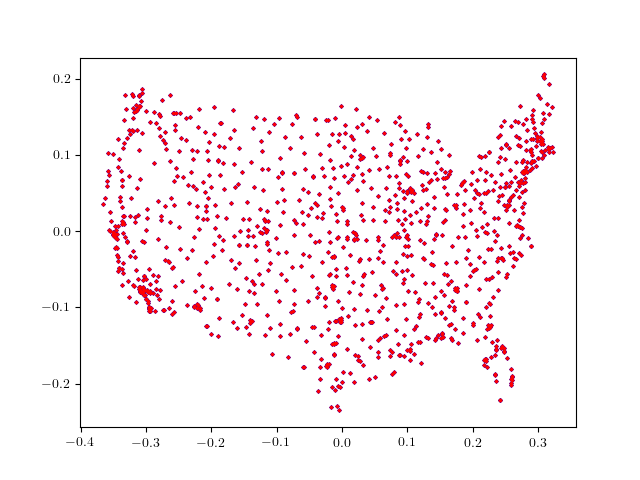}
    \caption{Spectral, $\eta=0$}
  \end{subfigure}
  \begin{subfigure}[ht]{0.245\linewidth}
    \centering
    \includegraphics[width=\linewidth,trim=0cm 0cm 0cm 0cm,clip]{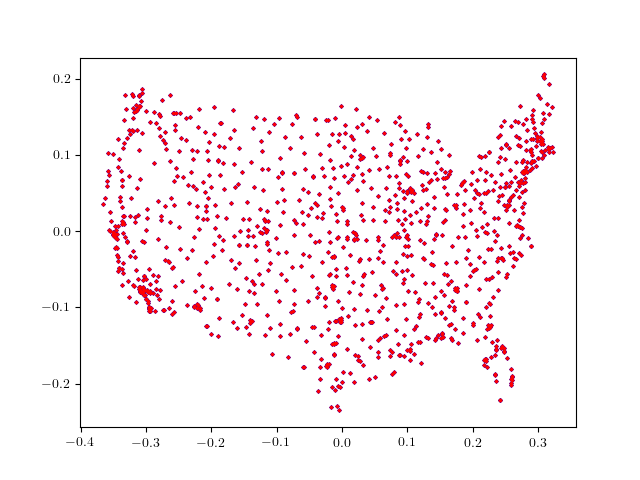}
    \caption{Spectral\_RN, $\eta=0$}
  \end{subfigure}
  \begin{subfigure}[ht]{0.245\linewidth}
    \centering
    \includegraphics[width=\linewidth,trim=0cm 0cm 0cm 0cm,clip]{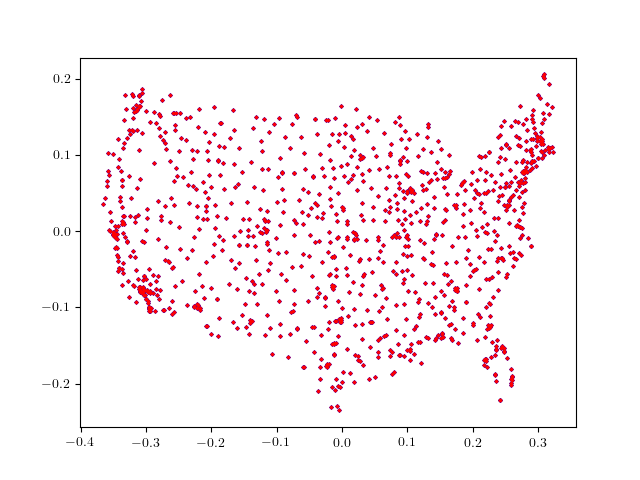}
    \caption{GPM, $\eta=0$}
  \end{subfigure}
  \begin{subfigure}[ht]{0.245\linewidth}
    \centering
    \includegraphics[width=\linewidth,trim=0cm 0cm 0cm 0cm,clip]{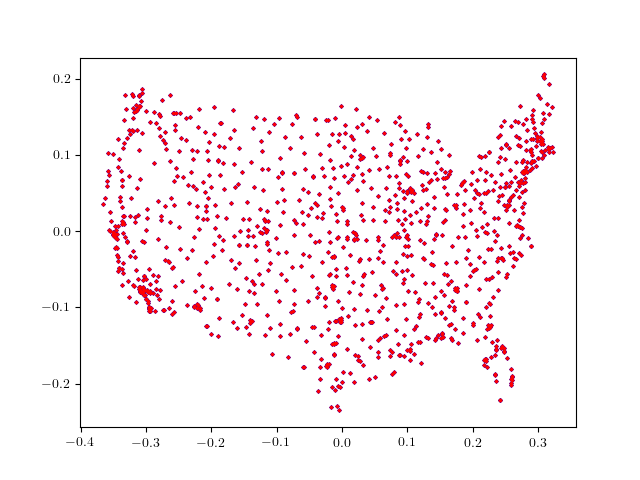}
    \caption{TranSync, $\eta=0$}
  \end{subfigure}
  \begin{subfigure}[ht]{0.245\linewidth}
    \centering
    \includegraphics[width=\linewidth,trim=0cm 0cm 0cm 0cm,clip]{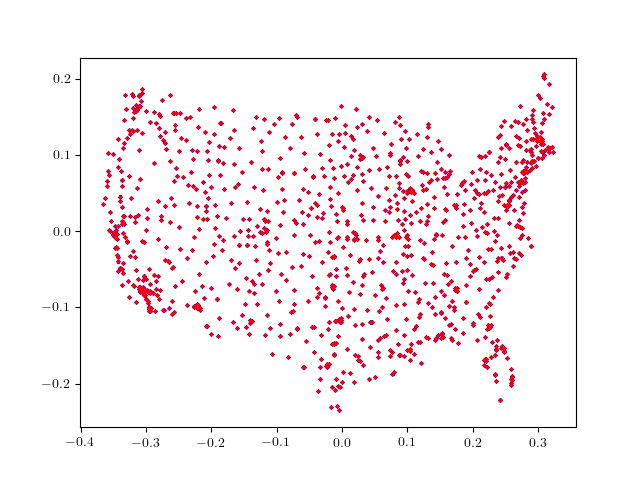}
    \caption{CEMP\_GCW, $\eta=0$}
  \end{subfigure}
  \begin{subfigure}[ht]{0.245\linewidth}
    \centering
    \includegraphics[width=\linewidth,trim=0cm 0cm 0cm 0cm,clip]{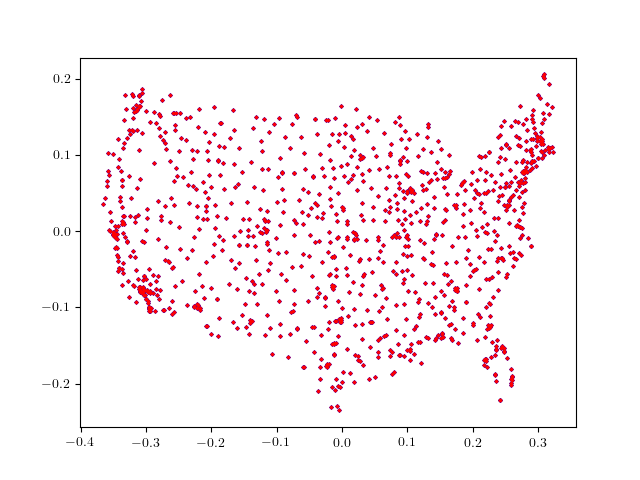}
    \caption{CEMP\_MST, $\eta=0$}
  \end{subfigure}
  \begin{subfigure}[ht]{0.245\linewidth}
    \centering
    \includegraphics[width=\linewidth,trim=0cm 0cm 0cm 0cm,clip]{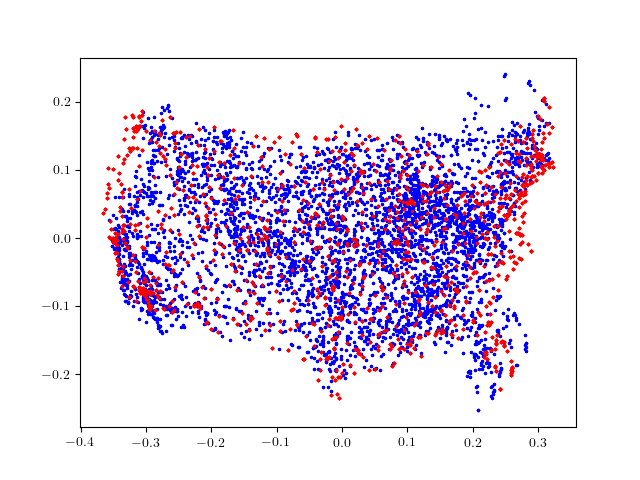}
    \caption{TAS, $\eta=0$}
  \end{subfigure}
%%%%%%%%%%%%%%%%%%%%%%%%%%%%%%%%%%%
  \begin{subfigure}[ht]{0.245\linewidth}
    \centering
    \includegraphics[width=\linewidth,trim=0cm 0cm 0cm 0cm,clip]{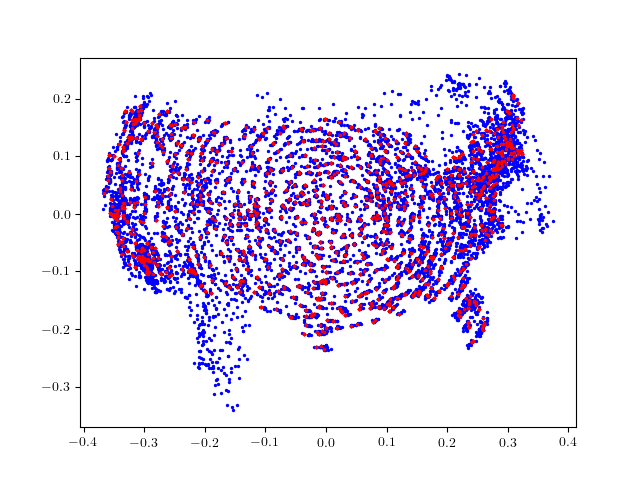}
    \caption{GNNSync, $\eta=0.05$}
  \end{subfigure}
  \begin{subfigure}[ht]{0.245\linewidth}
    \centering
    \includegraphics[width=\linewidth,trim=0cm 0cm 0cm 0cm,clip]{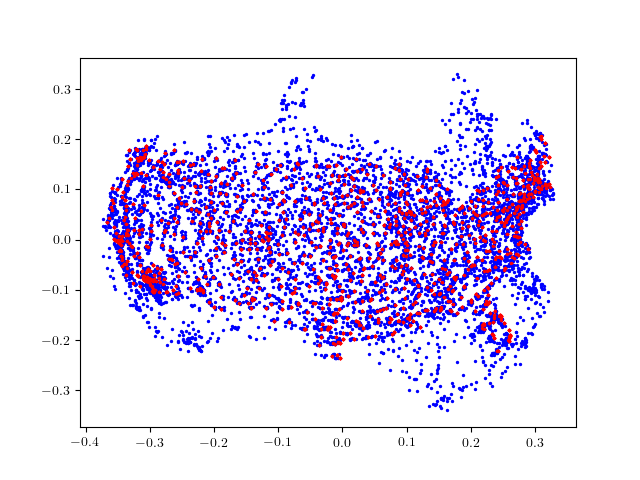}
    \caption{Spectral, $\eta=0.05$}
  \end{subfigure}
  \begin{subfigure}[ht]{0.245\linewidth}
    \centering
    \includegraphics[width=\linewidth,trim=0cm 0cm 0cm 0cm,clip]{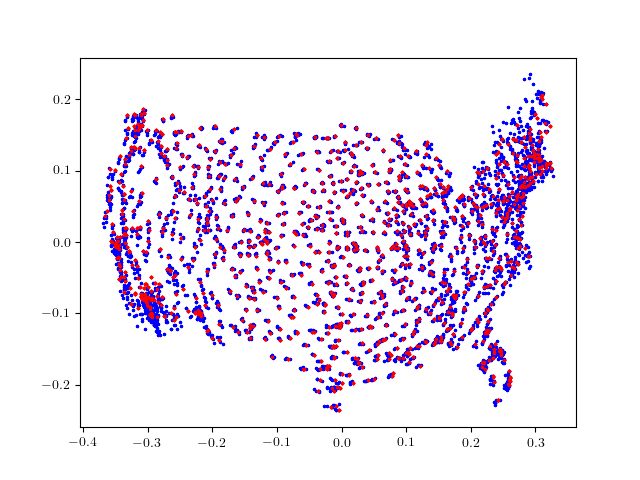}
    \caption{Spectral\_RN, $\eta=0.05$}
  \end{subfigure}
  \begin{subfigure}[ht]{0.245\linewidth}
    \centering
    \includegraphics[width=\linewidth,trim=0cm 0cm 0cm 0cm,clip]{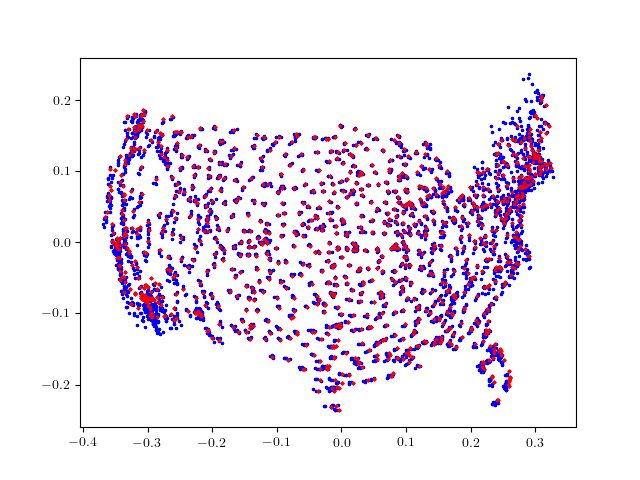}
    \caption{GPM, $\eta=0.05$}
  \end{subfigure}
  \begin{subfigure}[ht]{0.245\linewidth}
    \centering
    \includegraphics[width=\linewidth,trim=0cm 0cm 0cm 0cm,clip]{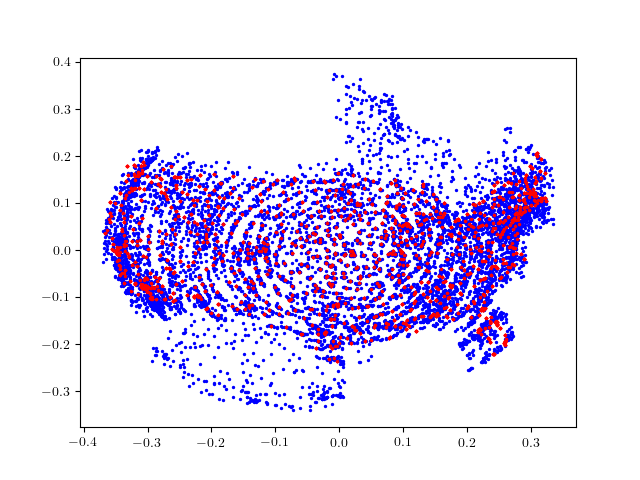}
    \caption{TranSync, $\eta=0.05$}
  \end{subfigure}
  \begin{subfigure}[ht]{0.245\linewidth}
    \centering
    \includegraphics[width=\linewidth,trim=0cm 0cm 0cm 0cm,clip]{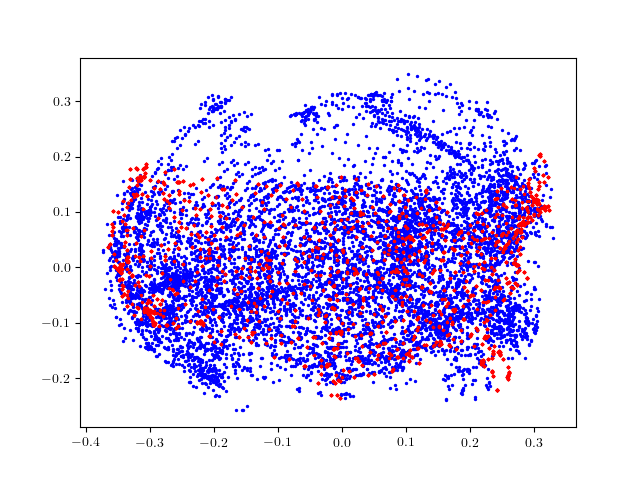}
    \caption{CEMP\_GCW, $\eta=0.05$}
  \end{subfigure}
  \begin{subfigure}[ht]{0.245\linewidth}
    \centering
    \includegraphics[width=\linewidth,trim=0cm 0cm 0cm 0cm,clip]{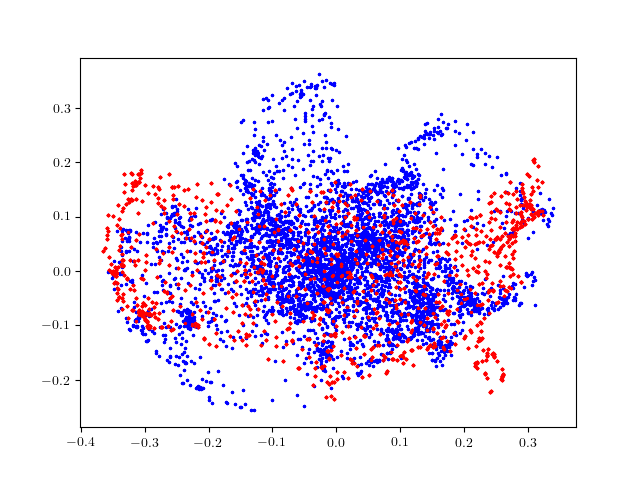}
    \caption{CEMP\_MST, $\eta=0.05$}
  \end{subfigure}
  \begin{subfigure}[ht]{0.245\linewidth}
    \centering
    \includegraphics[width=\linewidth,trim=0cm 0cm 0cm 0cm,clip]{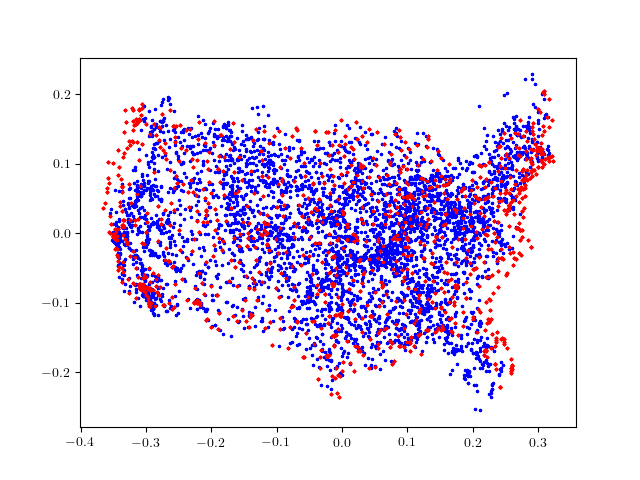}
    \caption{TAS, $\eta=0.05$}
  \end{subfigure}
%%%%%%%%%%%%%%%%%%%%%%%%%%%%%%%%%%%
  \begin{subfigure}[ht]{0.245\linewidth}
    \centering
    \includegraphics[width=\linewidth,trim=0cm 0cm 0cm 0cm,clip]{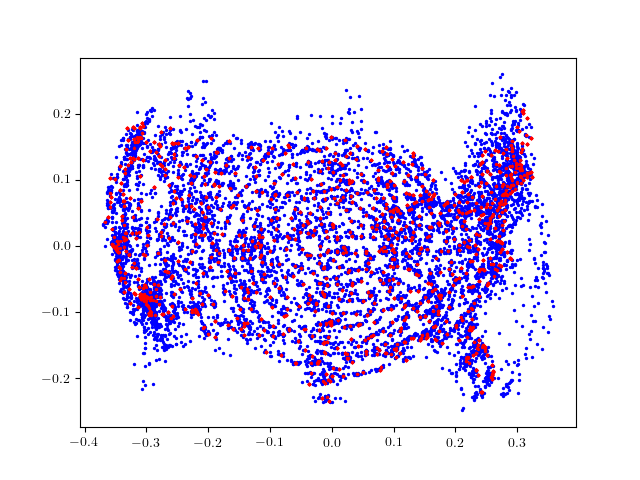}
    \caption{GNNSync, $\eta=0.1$}
  \end{subfigure}
  \begin{subfigure}[ht]{0.245\linewidth}
    \centering
    \includegraphics[width=\linewidth,trim=0cm 0cm 0cm 0cm,clip]{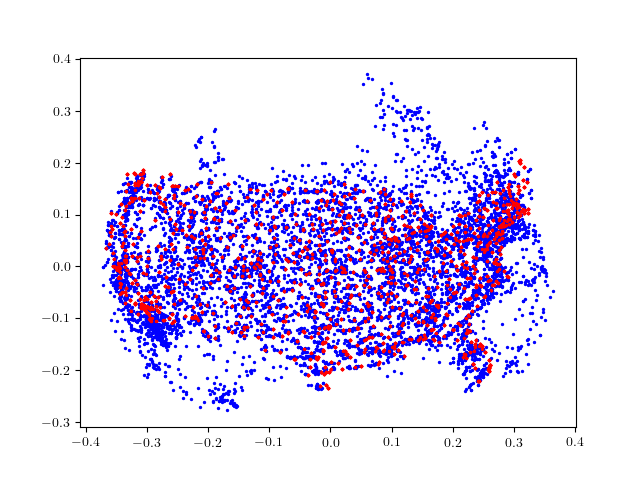}
    \caption{Spectral, $\eta=0.1$}
  \end{subfigure}
  \begin{subfigure}[ht]{0.245\linewidth}
    \centering
    \includegraphics[width=\linewidth,trim=0cm 0cm 0cm 0cm,clip]{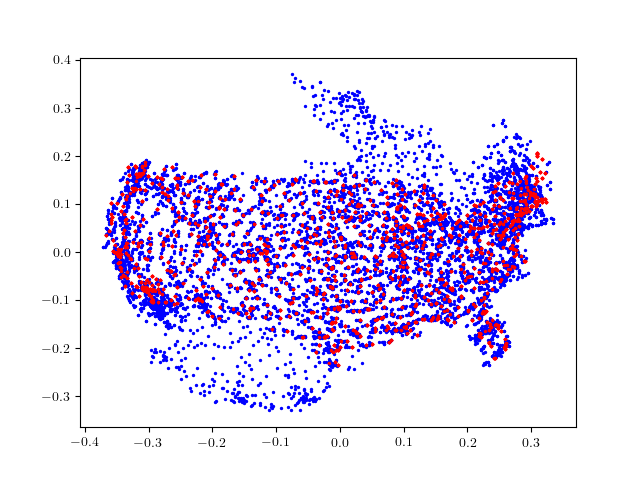}
    \caption{Spectral\_RN, $\eta=0.1$}
  \end{subfigure}
  \begin{subfigure}[ht]{0.245\linewidth}
    \centering
    \includegraphics[width=\linewidth,trim=0cm 0cm 0cm 0cm,clip]{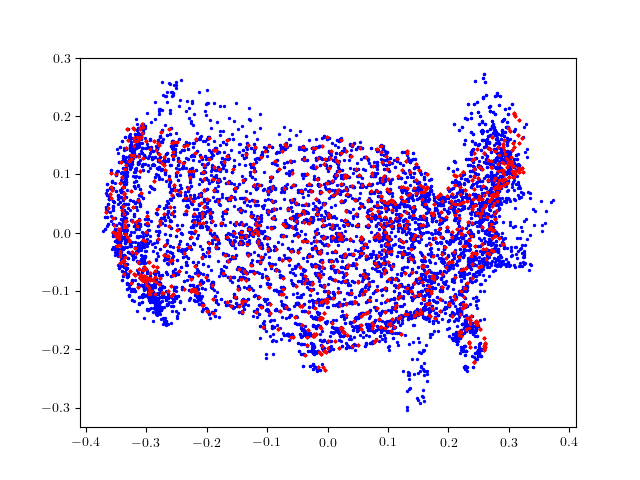}
    \caption{GPM, $\eta=0.1$}
  \end{subfigure}
  \begin{subfigure}[ht]{0.245\linewidth}
    \centering
    \includegraphics[width=\linewidth,trim=0cm 0cm 0cm 0cm,clip]{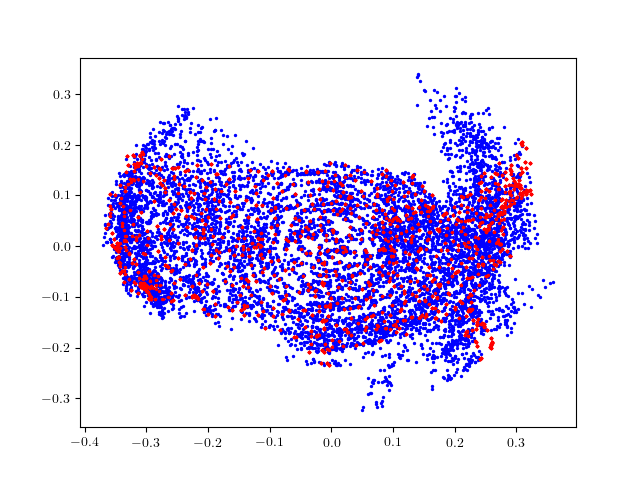}
    \caption{TranSync, $\eta=0.1$}
  \end{subfigure}
  \begin{subfigure}[ht]{0.245\linewidth}
    \centering
    \includegraphics[width=\linewidth,trim=0cm 0cm 0cm 0cm,clip]{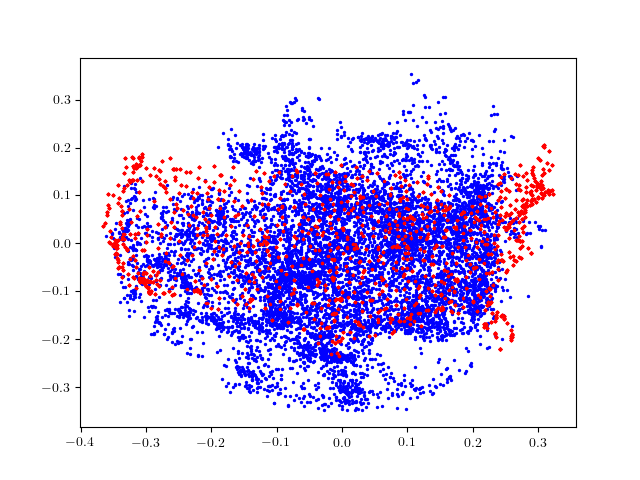}
    \caption{CEMP\_GCW, $\eta=0.1$}
  \end{subfigure}
  \begin{subfigure}[ht]{0.245\linewidth}
    \centering
    \includegraphics[width=\linewidth,trim=0cm 0cm 0cm 0cm,clip]{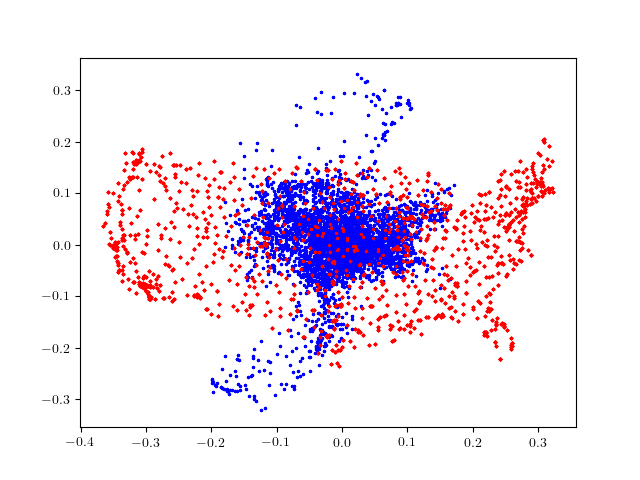}
    \caption{CEMP\_MST, $\eta=0.1$}
  \end{subfigure}
  \begin{subfigure}[ht]{0.245\linewidth}
    \centering
    \includegraphics[width=\linewidth,trim=0cm 0cm 0cm 0cm,clip]{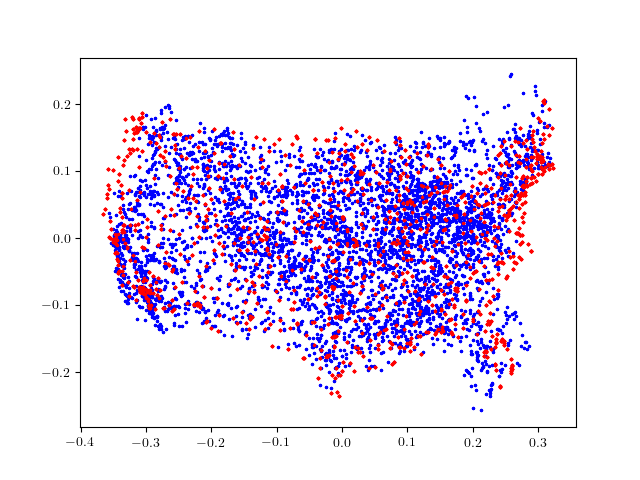}
    \caption{TAS, $\eta=0.1$}
  \end{subfigure}
%%%%%%%%%%%%%%%%%%%%%%%%%%%%%%%%%%%
    \caption{Result visualization for the Sensor Network Localization task on the U.S. map using option ``4" as ground-truth angles for low-noise input data. Red dots indicate ground-truth locations and blue dots are estimated city locations.
    }
    \label{fig:uscities_block_normal6_full_low}
\end{figure*}

\begin{figure*}[!hbt]
    \centering
  \begin{subfigure}[ht]{0.245\linewidth}
    \centering
    \includegraphics[width=\linewidth,trim=0cm 0cm 0cm 0cm,clip]{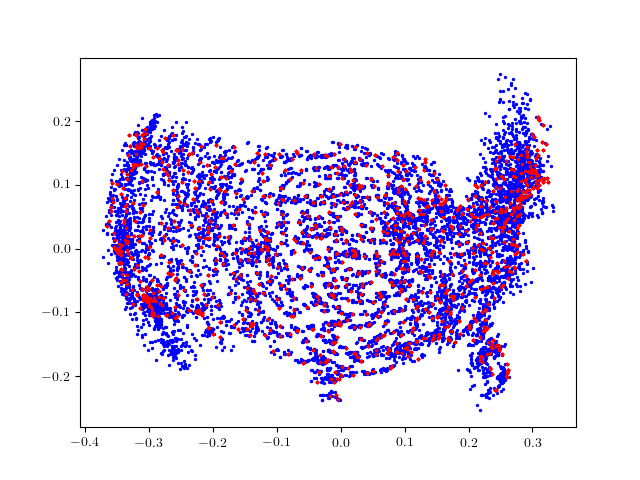}
    \caption{GNNSync, $\eta=0.15$}
  \end{subfigure}
  \begin{subfigure}[ht]{0.245\linewidth}
    \centering
    \includegraphics[width=\linewidth,trim=0cm 0cm 0cm 0cm,clip]{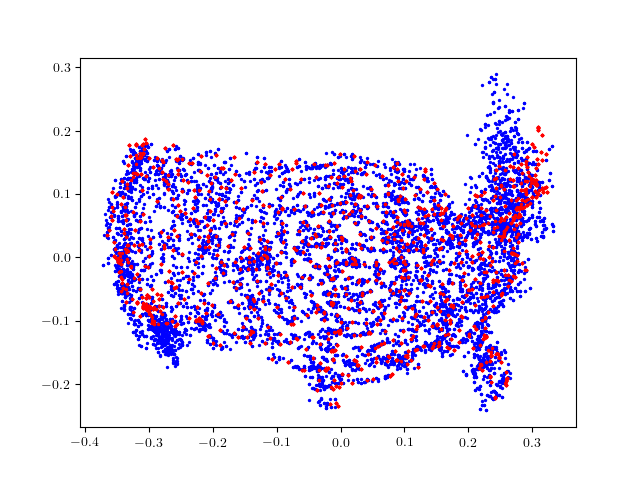}
    \caption{Spectral, $\eta=0.15$}
  \end{subfigure}
  \begin{subfigure}[ht]{0.245\linewidth}
    \centering
    \includegraphics[width=\linewidth,trim=0cm 0cm 0cm 0cm,clip]{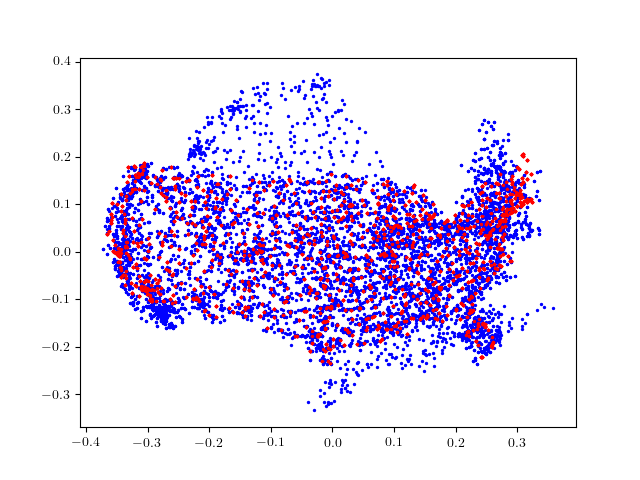}
    \caption{Spectral\_RN, $\eta=0.15$}
  \end{subfigure}
  \begin{subfigure}[ht]{0.245\linewidth}
    \centering
    \includegraphics[width=\linewidth,trim=0cm 0cm 0cm 0cm,clip]{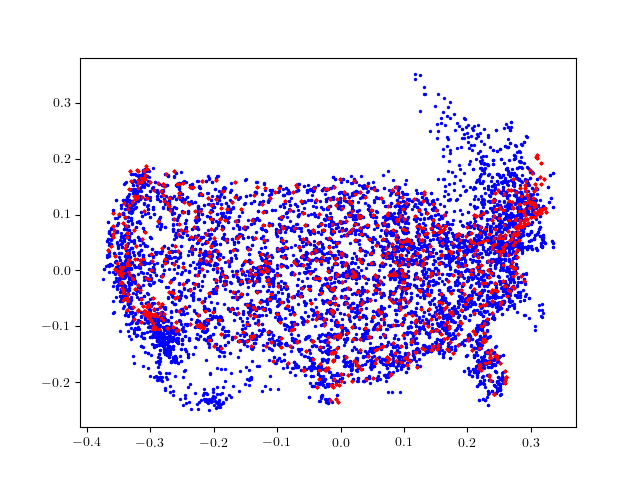}
    \caption{GPM, $\eta=0.15$}
  \end{subfigure}
  \begin{subfigure}[ht]{0.245\linewidth}
    \centering
    \includegraphics[width=\linewidth,trim=0cm 0cm 0cm 0cm,clip]{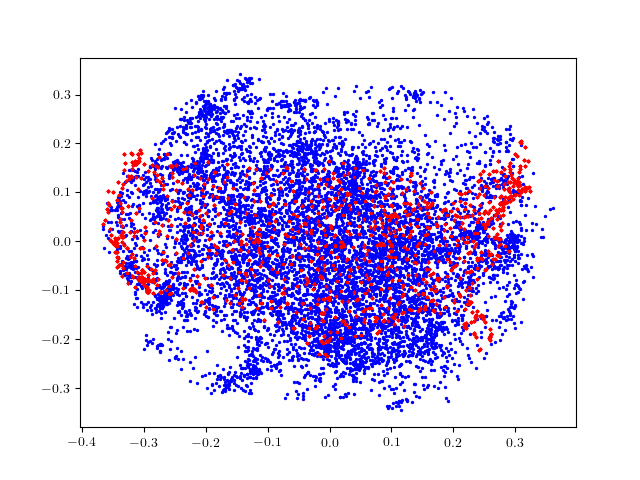}
    \caption{TranSync, $\eta=0.15$}
  \end{subfigure}
  \begin{subfigure}[ht]{0.245\linewidth}
    \centering
    \includegraphics[width=\linewidth,trim=0cm 0cm 0cm 0cm,clip]{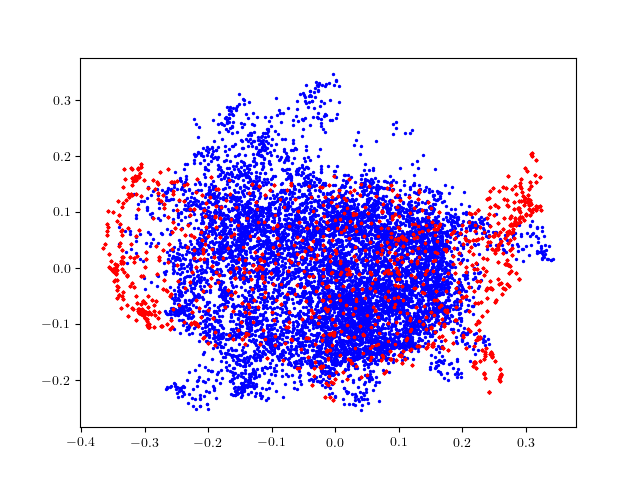}
    \caption{CEMP\_GCW, $\eta=0.15$}
  \end{subfigure}
  \begin{subfigure}[ht]{0.245\linewidth}
    \centering
    \includegraphics[width=\linewidth,trim=0cm 0cm 0cm 0cm,clip]{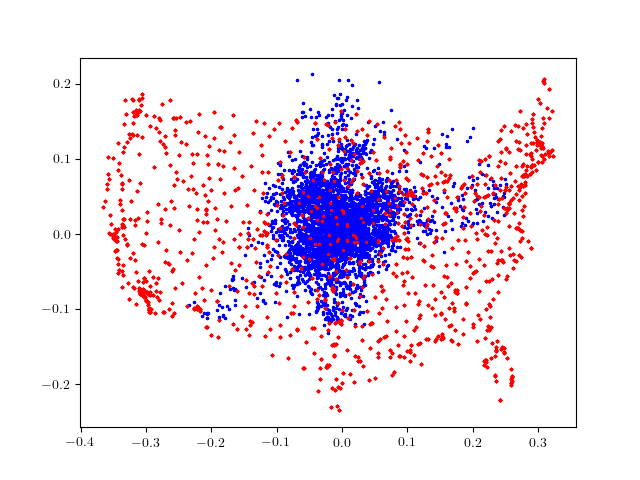}
    \caption{CEMP\_MST, $\eta=0.15$}
  \end{subfigure}
  \begin{subfigure}[ht]{0.245\linewidth}
    \centering
    \includegraphics[width=\linewidth,trim=0cm 0cm 0cm 0cm,clip]{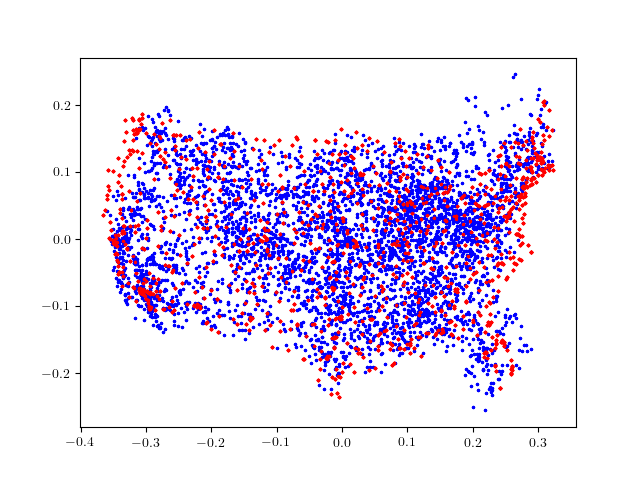}
    \caption{TAS, $\eta=0.15$}
  \end{subfigure}
%%%%%%%%%%%%%%%%%%%%%%%%%%%%%%%%%%%
  \begin{subfigure}[ht]{0.245\linewidth}
    \centering
    \includegraphics[width=\linewidth,trim=0cm 0cm 0cm 0cm,clip]{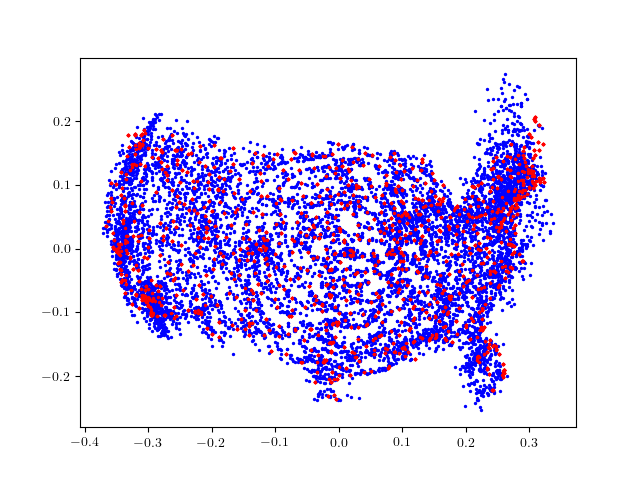}
    \caption{GNNSync, $\eta=0.2$}
  \end{subfigure}
  \begin{subfigure}[ht]{0.245\linewidth}
    \centering
    \includegraphics[width=\linewidth,trim=0cm 0cm 0cm 0cm,clip]{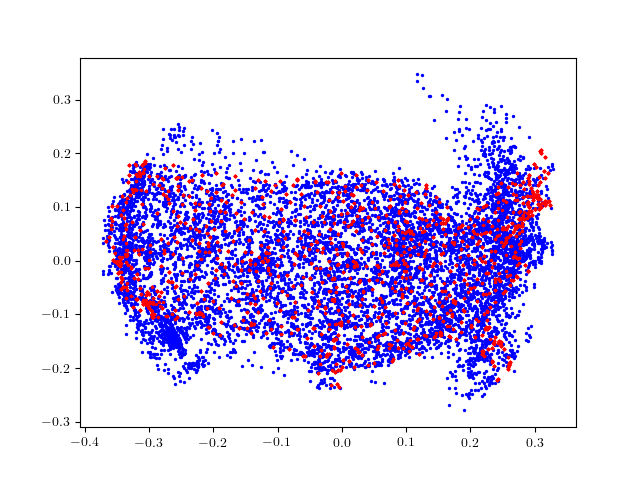}
    \caption{Spectral, $\eta=0.2$}
  \end{subfigure}
  \begin{subfigure}[ht]{0.245\linewidth}
    \centering
    \includegraphics[width=\linewidth,trim=0cm 0cm 0cm 0cm,clip]{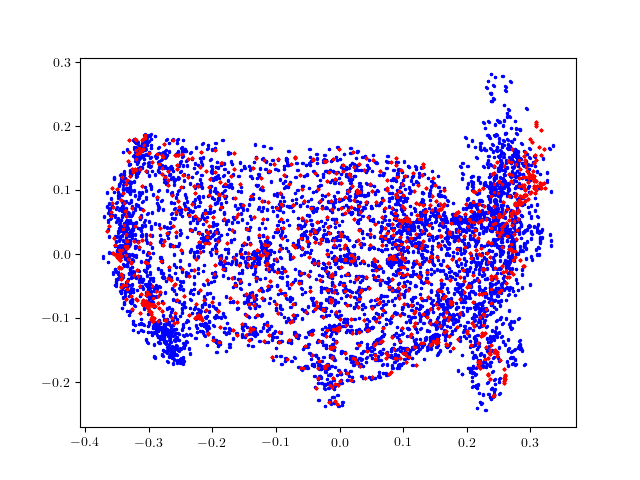}
    \caption{Spectral\_RN, $\eta=0.2$}
  \end{subfigure}
  \begin{subfigure}[ht]{0.245\linewidth}
    \centering
    \includegraphics[width=\linewidth,trim=0cm 0cm 0cm 0cm,clip]{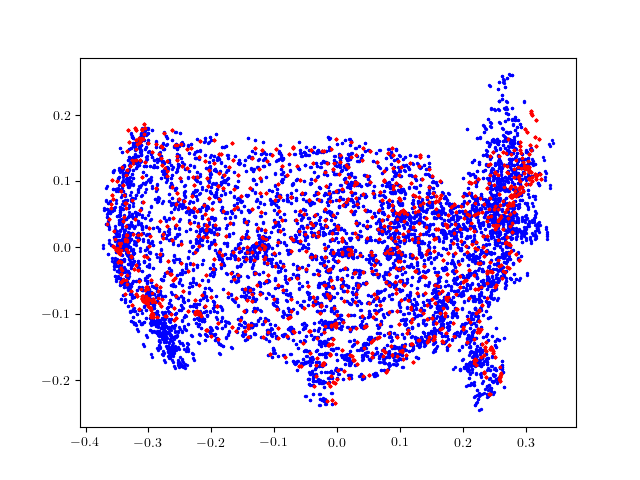}
    \caption{GPM, $\eta=0.2$}
  \end{subfigure}
  \begin{subfigure}[ht]{0.245\linewidth}
    \centering
    \includegraphics[width=\linewidth,trim=0cm 0cm 0cm 0cm,clip]{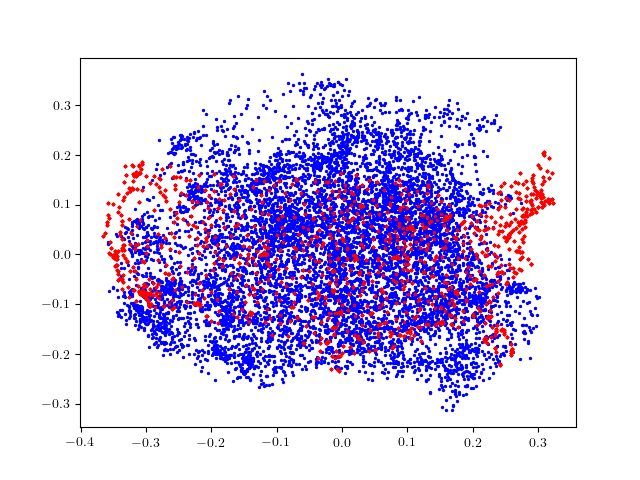}
    \caption{TranSync, $\eta=0.2$}
  \end{subfigure}
  \begin{subfigure}[ht]{0.245\linewidth}
    \centering
    \includegraphics[width=\linewidth,trim=0cm 0cm 0cm 0cm,clip]{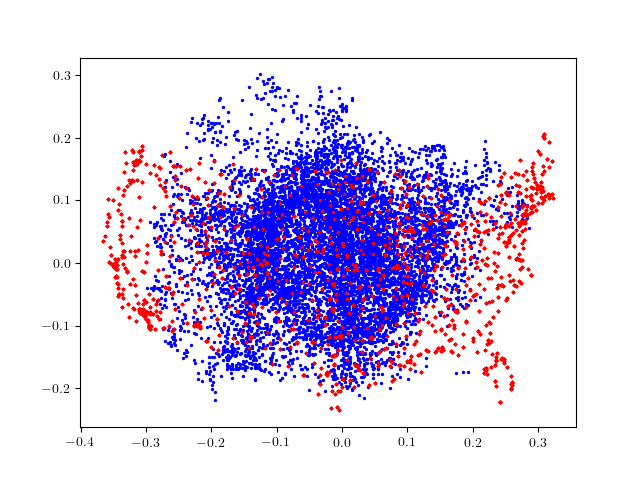}
    \caption{CEMP\_GCW, $\eta=0.2$}
  \end{subfigure}
  \begin{subfigure}[ht]{0.245\linewidth}
    \centering
    \includegraphics[width=\linewidth,trim=0cm 0cm 0cm 0cm,clip]{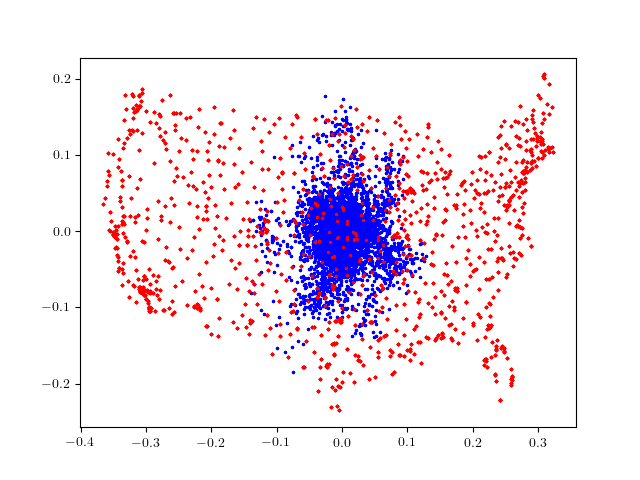}
    \caption{CEMP\_MST, $\eta=0.2$}
  \end{subfigure}
  \begin{subfigure}[ht]{0.245\linewidth}
    \centering
    \includegraphics[width=\linewidth,trim=0cm 0cm 0cm 0cm,clip]{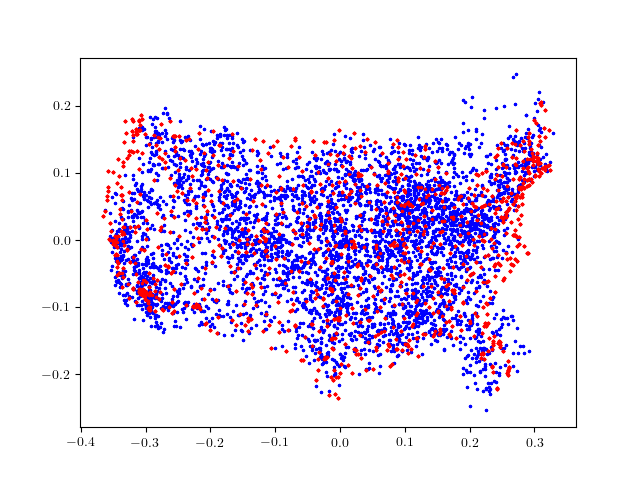}
    \caption{TAS, $\eta=0.2$}
  \end{subfigure}
%%%%%%%%%%%%%%%%%%%%%%%%%%%%%%%%%%%
  \begin{subfigure}[ht]{0.245\linewidth}
    \centering
    \includegraphics[width=\linewidth,trim=0cm 0cm 0cm 0cm,clip]{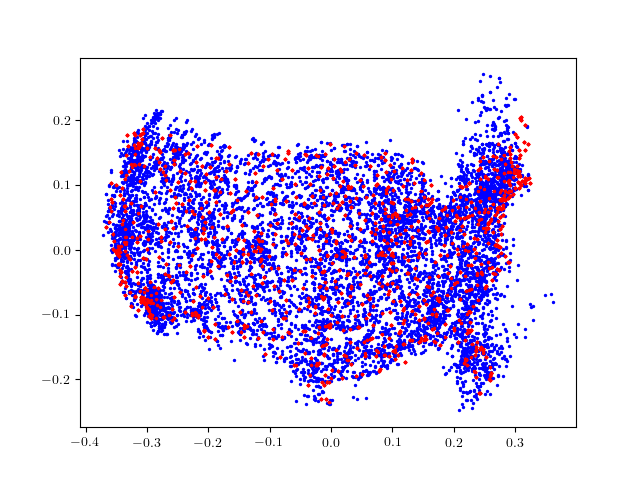}
    \caption{GNNSync, $\eta=0.25$}
  \end{subfigure}
  \begin{subfigure}[ht]{0.245\linewidth}
    \centering
    \includegraphics[width=\linewidth,trim=0cm 0cm 0cm 0cm,clip]{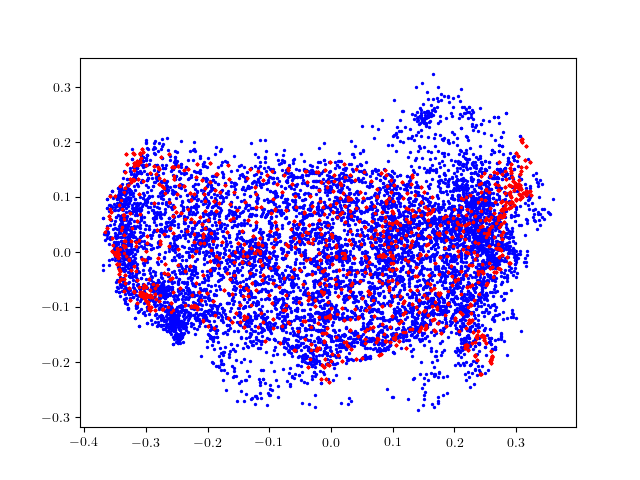}
    \caption{Spectral, $\eta=0.25$}
  \end{subfigure}
  \begin{subfigure}[ht]{0.245\linewidth}
    \centering
    \includegraphics[width=\linewidth,trim=0cm 0cm 0cm 0cm,clip]{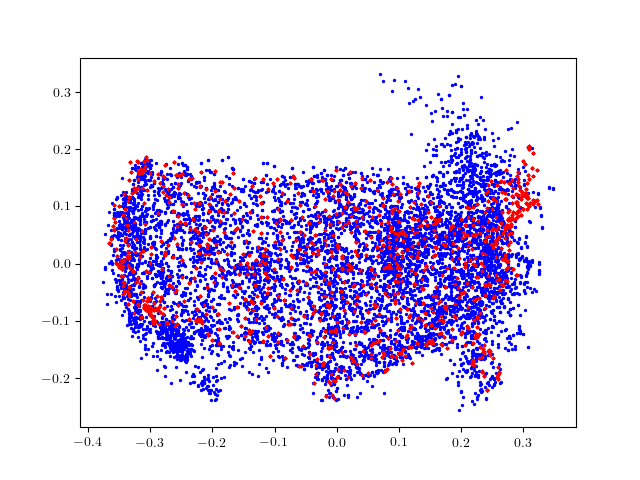}
    \caption{Spectral\_RN, $\eta=0.25$}
  \end{subfigure}
  \begin{subfigure}[ht]{0.245\linewidth}
    \centering
    \includegraphics[width=\linewidth,trim=0cm 0cm 0cm 0cm,clip]{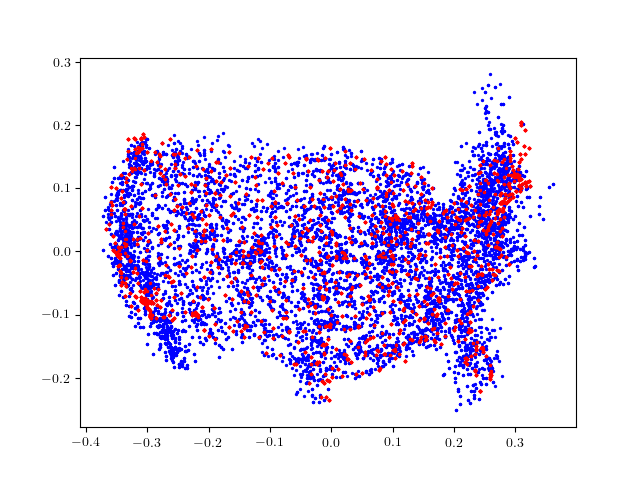}
    \caption{GPM, $\eta=0.25$}
  \end{subfigure}
  \begin{subfigure}[ht]{0.245\linewidth}
    \centering
    \includegraphics[width=\linewidth,trim=0cm 0cm 0cm 0cm,clip]{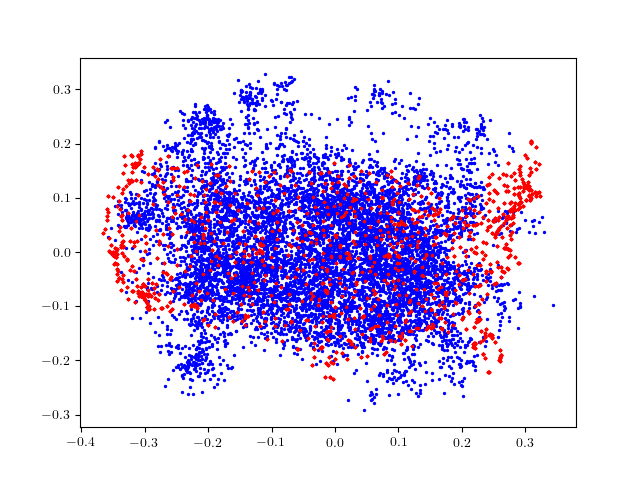}
    \caption{TranSync, $\eta=0.25$}
  \end{subfigure}
  \begin{subfigure}[ht]{0.245\linewidth}
    \centering
    \includegraphics[width=\linewidth,trim=0cm 0cm 0cm 0cm,clip]{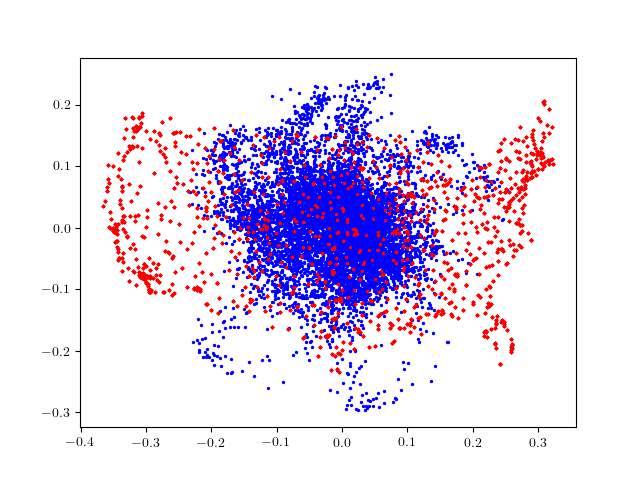}
    \caption{CEMP\_GCW, $\eta=0.25$}
  \end{subfigure}
  \begin{subfigure}[ht]{0.245\linewidth}
    \centering
    \includegraphics[width=\linewidth,trim=0cm 0cm 0cm 0cm,clip]{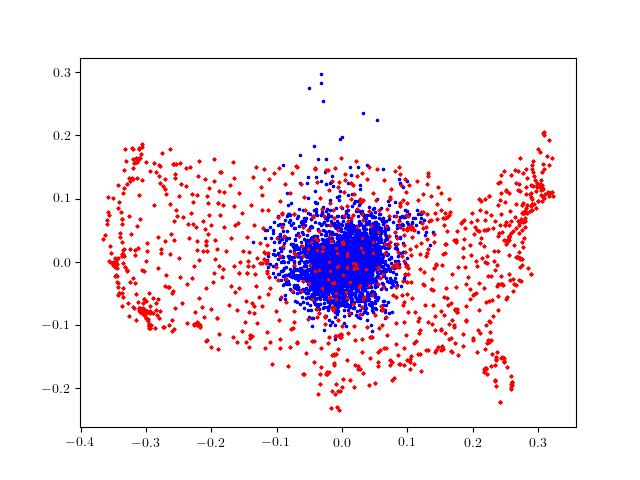}
    \caption{CEMP\_MST, $\eta=0.25$}
  \end{subfigure}
  \begin{subfigure}[ht]{0.245\linewidth}
    \centering
    \includegraphics[width=\linewidth,trim=0cm 0cm 0cm 0cm,clip]{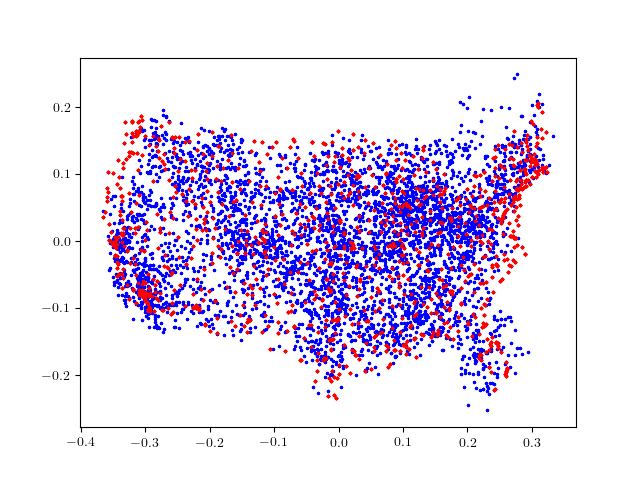}
    \caption{TAS, $\eta=0.25$}
  \end{subfigure}
%%%%%%%%%%%%%%%%%%%%%%%%%%%%%%%%%%%
    \caption{Result visualization for the Sensor Network Localization task on the U.S. map using option ``4" as ground-truth angles for high-noise input data. Red dots indicate ground-truth locations and blue dots are estimated city locations.
    }
    \label{fig:uscities_block_normal6_full_high}
\end{figure*}

%\clearpage

\begin{figure*}[!hbt]
    \centering
  \begin{subfigure}[ht]{0.245\linewidth}
    \centering
    \includegraphics[width=\linewidth,trim=0cm 0cm 0cm 0cm,clip]{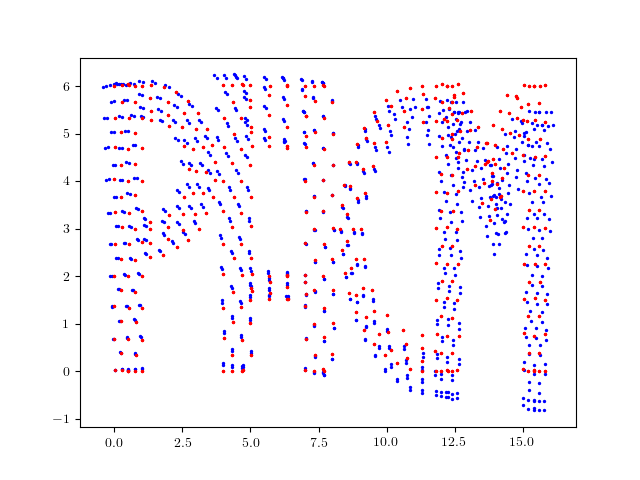}
    \caption{GNNSync, $\eta=0$}
  \end{subfigure}
  \begin{subfigure}[ht]{0.245\linewidth}
    \centering
    \includegraphics[width=\linewidth,trim=0cm 0cm 0cm 0cm,clip]{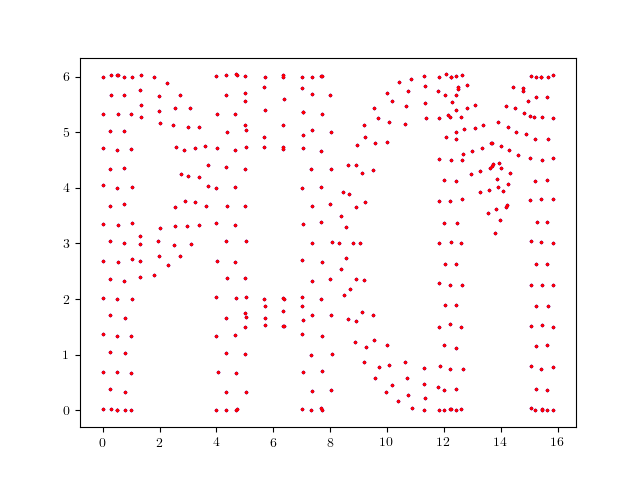}
    \caption{Spectral, $\eta=0$}
  \end{subfigure}
  \begin{subfigure}[ht]{0.245\linewidth}
    \centering
    \includegraphics[width=\linewidth,trim=0cm 0cm 0cm 0cm,clip]{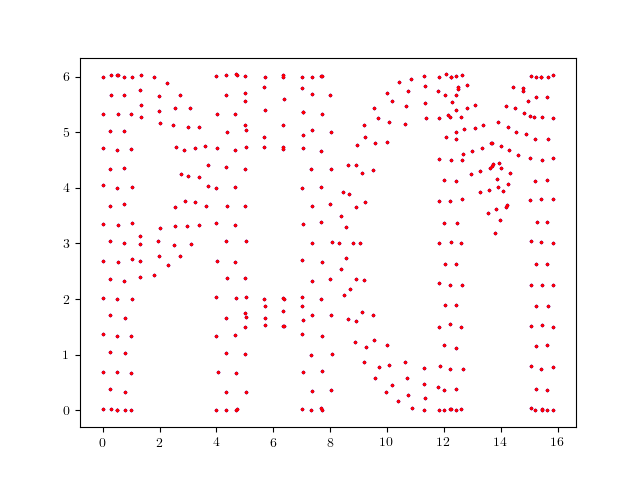}
    \caption{Spectral\_RN, $\eta=0$}
  \end{subfigure}
  \begin{subfigure}[ht]{0.245\linewidth}
    \centering
    \includegraphics[width=\linewidth,trim=0cm 0cm 0cm 0cm,clip]{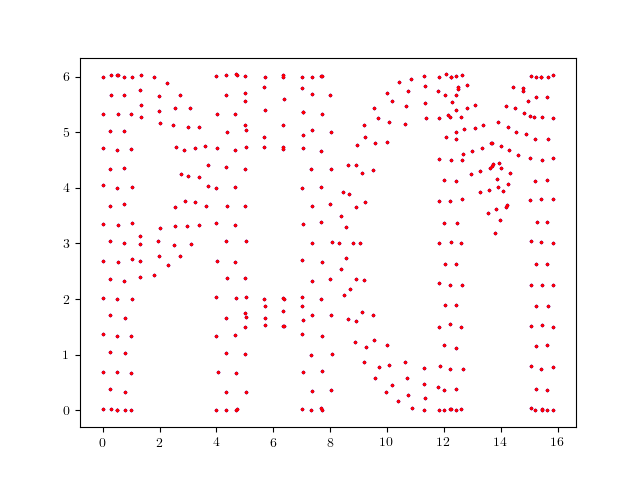}
    \caption{GPM, $\eta=0$}
  \end{subfigure}
  \begin{subfigure}[ht]{0.245\linewidth}
    \centering
    \includegraphics[width=\linewidth,trim=0cm 0cm 0cm 0cm,clip]{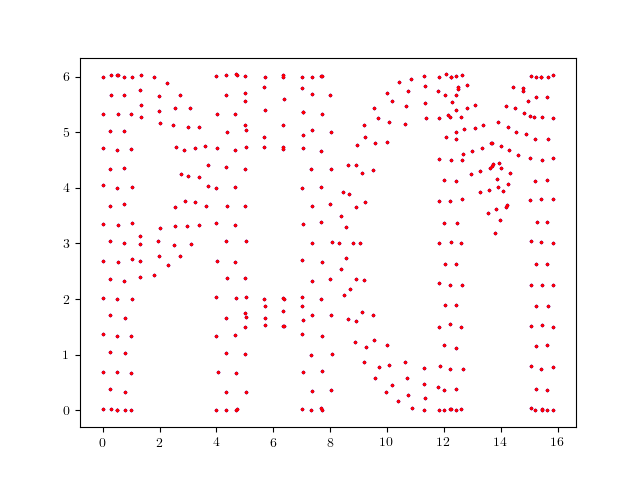}
    \caption{TranSync, $\eta=0$}
  \end{subfigure}
  \begin{subfigure}[ht]{0.245\linewidth}
    \centering
    \includegraphics[width=\linewidth,trim=0cm 0cm 0cm 0cm,clip]{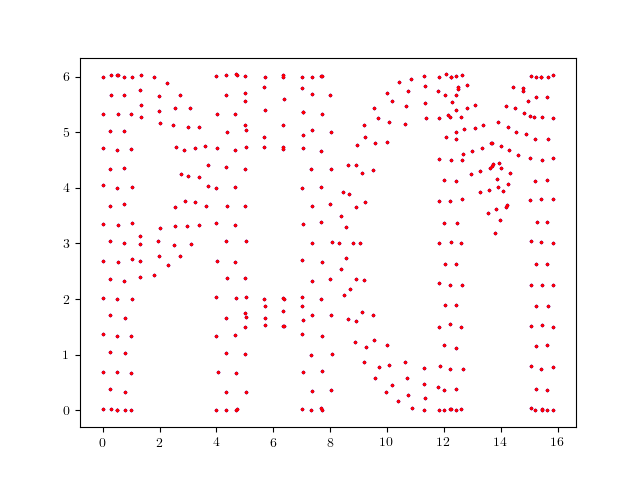}
    \caption{CEMP\_GCW, $\eta=0$}
  \end{subfigure}
  \begin{subfigure}[ht]{0.245\linewidth}
    \centering
    \includegraphics[width=\linewidth,trim=0cm 0cm 0cm 0cm,clip]{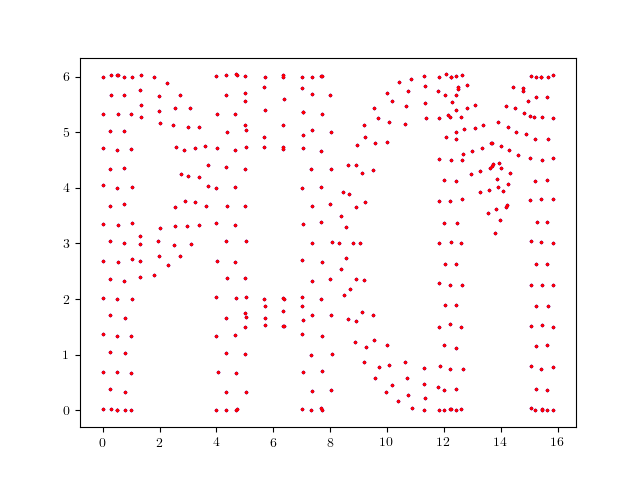}
    \caption{CEMP\_MST, $\eta=0$}
  \end{subfigure}
  \begin{subfigure}[ht]{0.245\linewidth}
    \centering
    \includegraphics[width=\linewidth,trim=0cm 0cm 0cm 0cm,clip]{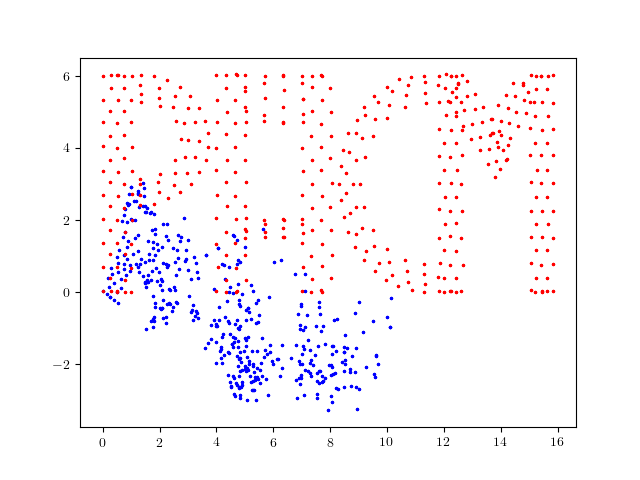}
    \caption{TAS, $\eta=0$}
  \end{subfigure}
%%%%%%%%%%%%%%%%%%%%%%%%%%%%%%%%%%%
  \begin{subfigure}[ht]{0.245\linewidth}
    \centering
    \includegraphics[width=\linewidth,trim=0cm 0cm 0cm 0cm,clip]{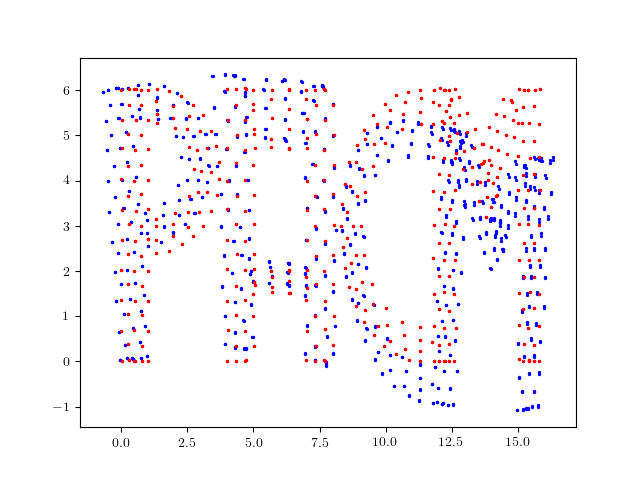}
    \caption{GNNSync, $\eta=0.05$}
  \end{subfigure}
  \begin{subfigure}[ht]{0.245\linewidth}
    \centering
    \includegraphics[width=\linewidth,trim=0cm 0cm 0cm 0cm,clip]{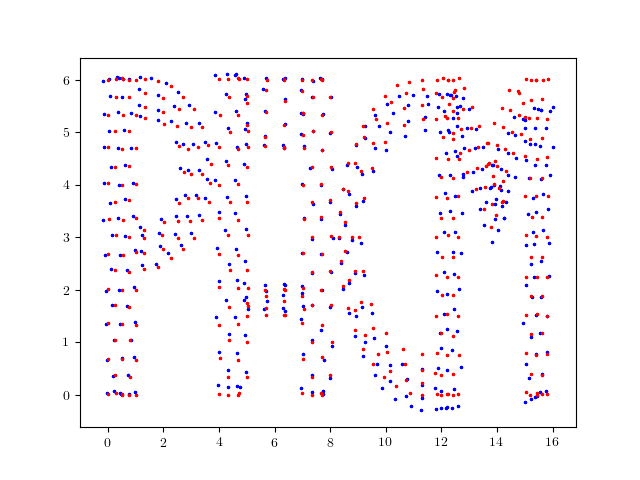}
    \caption{Spectral, $\eta=0.05$}
  \end{subfigure}
  \begin{subfigure}[ht]{0.245\linewidth}
    \centering
    \includegraphics[width=\linewidth,trim=0cm 0cm 0cm 0cm,clip]{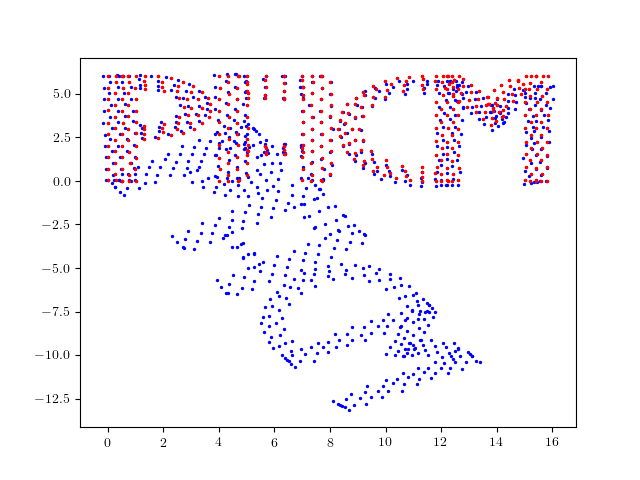}
    \caption{Spectral\_RN, $\eta=0.05$}
  \end{subfigure}
  \begin{subfigure}[ht]{0.245\linewidth}
    \centering
    \includegraphics[width=\linewidth,trim=0cm 0cm 0cm 0cm,clip]{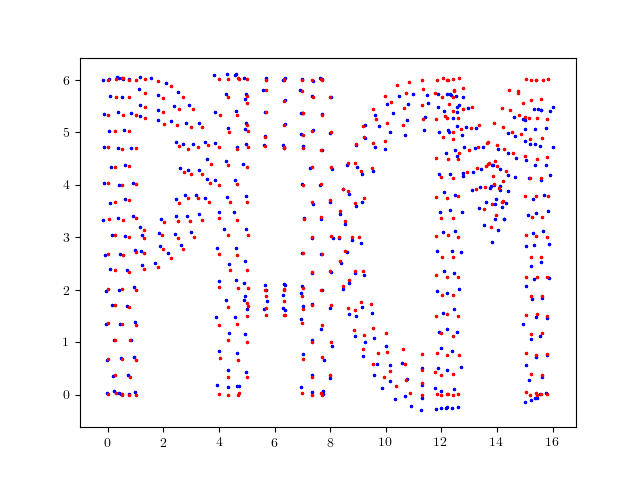}
    \caption{GPM, $\eta=0.05$}
  \end{subfigure}
  \begin{subfigure}[ht]{0.245\linewidth}
    \centering
    \includegraphics[width=\linewidth,trim=0cm 0cm 0cm 0cm,clip]{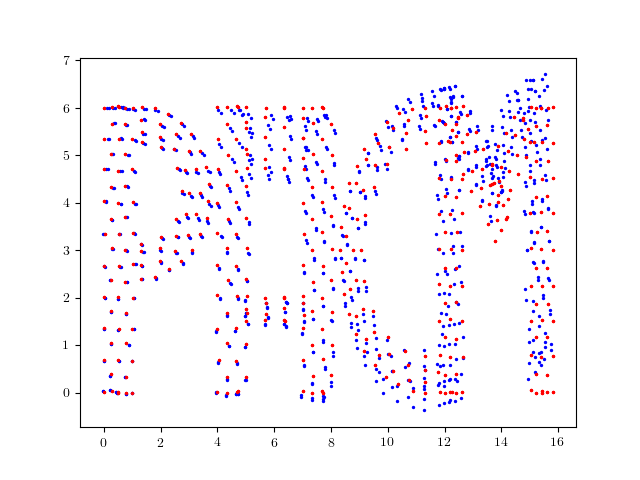}
    \caption{TranSync, $\eta=0.05$}
  \end{subfigure}
  \begin{subfigure}[ht]{0.245\linewidth}
    \centering
    \includegraphics[width=\linewidth,trim=0cm 0cm 0cm 0cm,clip]{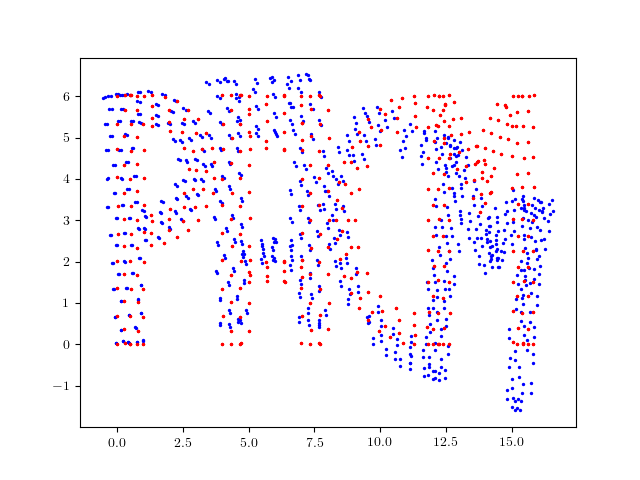}
    \caption{CEMP\_GCW, $\eta=0.05$}
  \end{subfigure}
  \begin{subfigure}[ht]{0.245\linewidth}
    \centering
    \includegraphics[width=\linewidth,trim=0cm 0cm 0cm 0cm,clip]{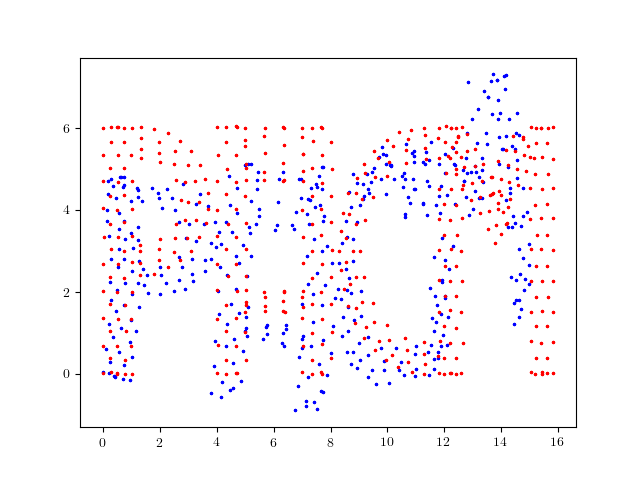}
    \caption{CEMP\_MST, $\eta=0.05$}
  \end{subfigure}
  \begin{subfigure}[ht]{0.245\linewidth}
    \centering
    \includegraphics[width=\linewidth,trim=0cm 0cm 0cm 0cm,clip]{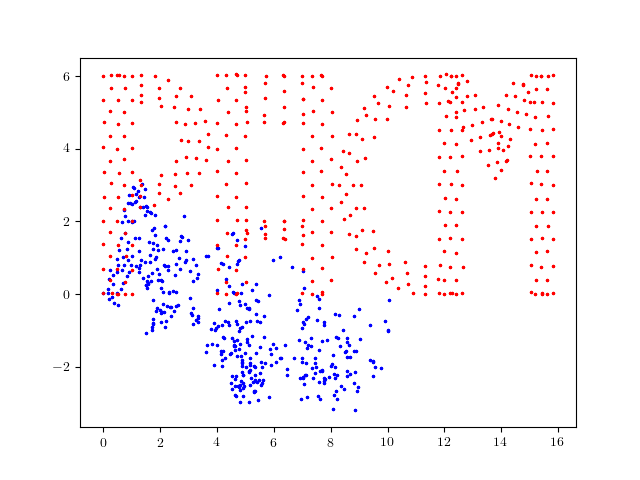}
    \caption{TAS, $\eta=0.05$}
  \end{subfigure}
%%%%%%%%%%%%%%%%%%%%%%%%%%%%%%%%%%%
  \begin{subfigure}[ht]{0.245\linewidth}
    \centering
    \includegraphics[width=\linewidth,trim=0cm 0cm 0cm 0cm,clip]{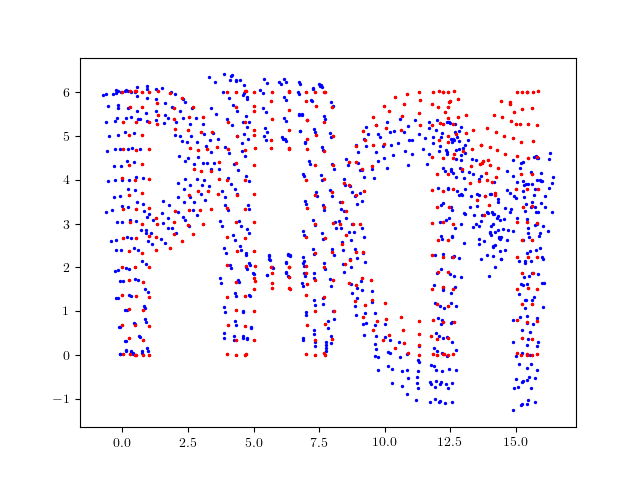}
    \caption{GNNSync, $\eta=0.1$}
  \end{subfigure}
  \begin{subfigure}[ht]{0.245\linewidth}
    \centering
    \includegraphics[width=\linewidth,trim=0cm 0cm 0cm 0cm,clip]{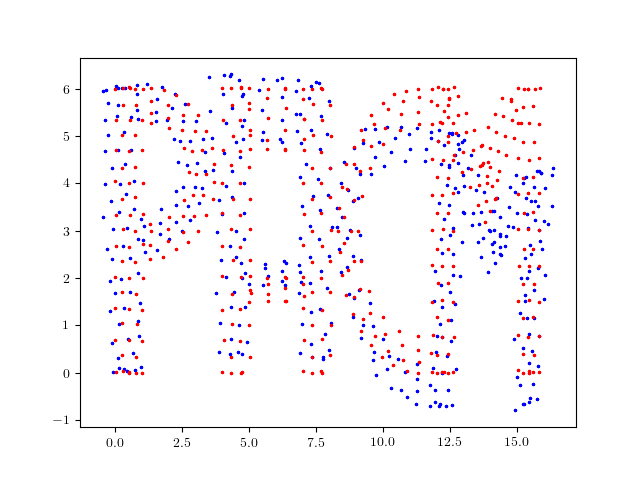}
    \caption{Spectral, $\eta=0.1$}
  \end{subfigure}
  \begin{subfigure}[ht]{0.245\linewidth}
    \centering
    \includegraphics[width=\linewidth,trim=0cm 0cm 0cm 0cm,clip]{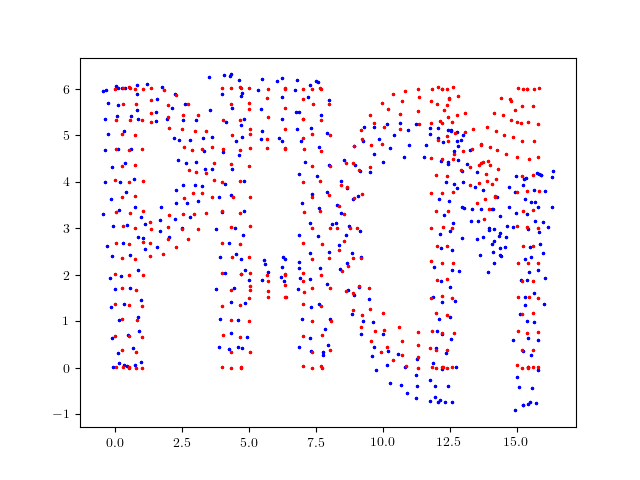}
    \caption{Spectral\_RN, $\eta=0.1$}
  \end{subfigure}
  \begin{subfigure}[ht]{0.245\linewidth}
    \centering
    \includegraphics[width=\linewidth,trim=0cm 0cm 0cm 0cm,clip]{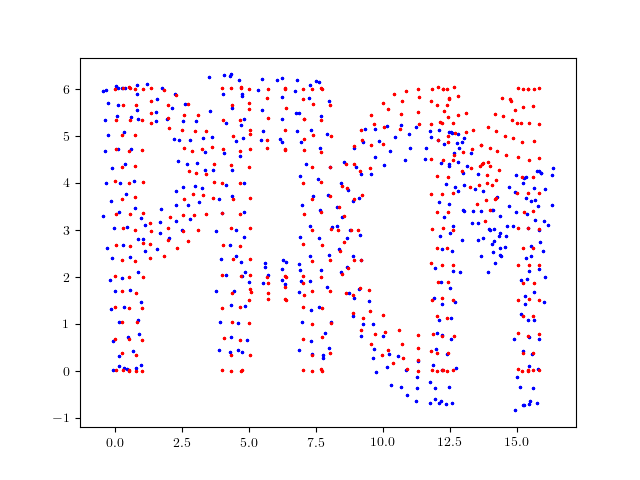}
    \caption{GPM, $\eta=0.1$}
  \end{subfigure}
  \begin{subfigure}[ht]{0.245\linewidth}
    \centering
    \includegraphics[width=\linewidth,trim=0cm 0cm 0cm 0cm,clip]{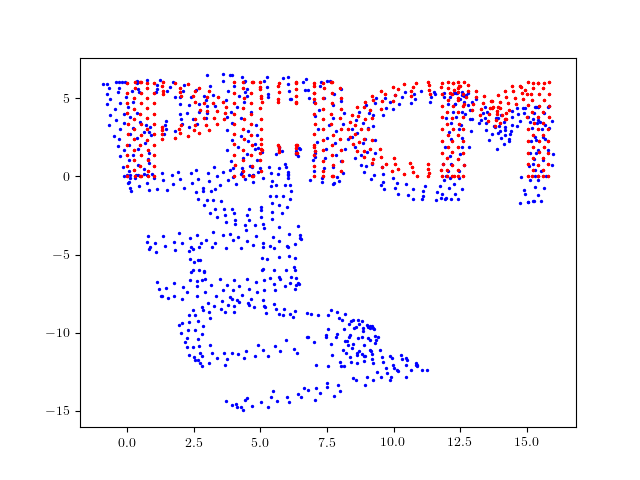}
    \caption{TranSync, $\eta=0.1$}
  \end{subfigure}
  \begin{subfigure}[ht]{0.245\linewidth}
    \centering
    \includegraphics[width=\linewidth,trim=0cm 0cm 0cm 0cm,clip]{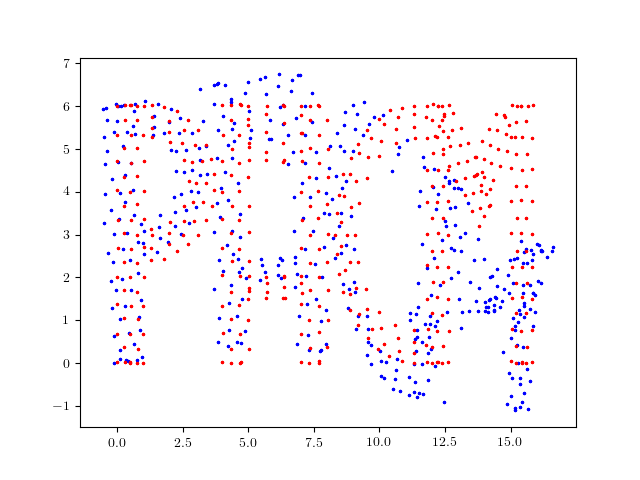}
    \caption{CEMP\_GCW, $\eta=0.1$}
  \end{subfigure}
  \begin{subfigure}[ht]{0.245\linewidth}
    \centering
    \includegraphics[width=\linewidth,trim=0cm 0cm 0cm 0cm,clip]{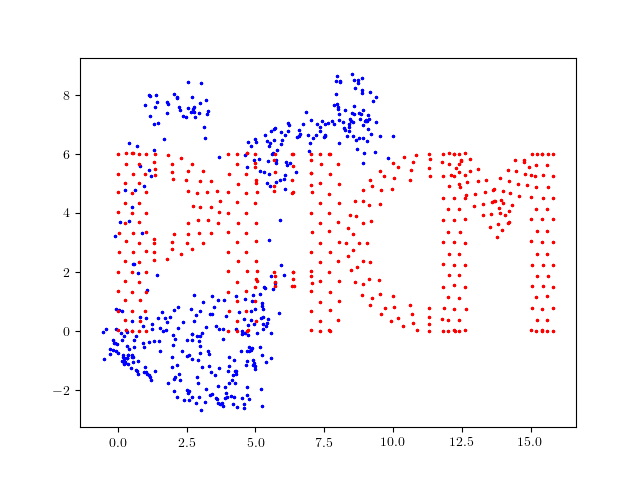}
    \caption{CEMP\_MST, $\eta=0.1$}
  \end{subfigure}
  \begin{subfigure}[ht]{0.245\linewidth}
    \centering
    \includegraphics[width=\linewidth,trim=0cm 0cm 0cm 0cm,clip]{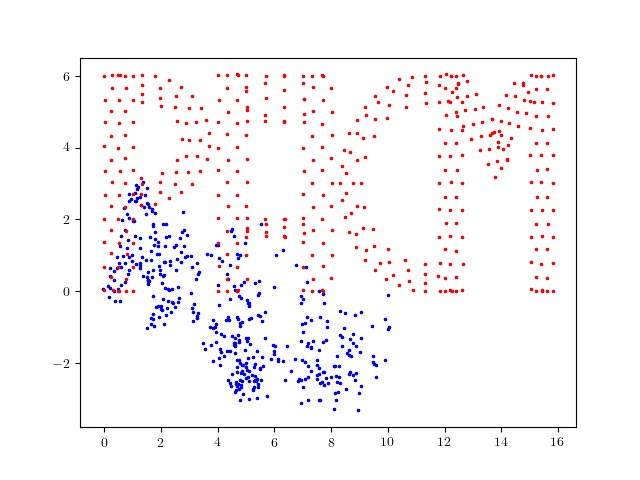}
    \caption{TAS, $\eta=0.1$}
  \end{subfigure}
%%%%%%%%%%%%%%%%%%%%%%%%%%%%%%%%%%%
    \caption{Result visualization for the Sensor Network Localization task on the PACM point cloud using option ``1" as ground-truth angles for low-noise input data. Red dots indicate ground-truth locations and blue dots are estimated city locations.
    }
    \label{fig:pacm_gamma_full_low}
\end{figure*}

\begin{figure*}[!hbt]
    \centering
  \begin{subfigure}[ht]{0.245\linewidth}
    \centering
    \includegraphics[width=\linewidth,trim=0cm 0cm 0cm 0cm,clip]{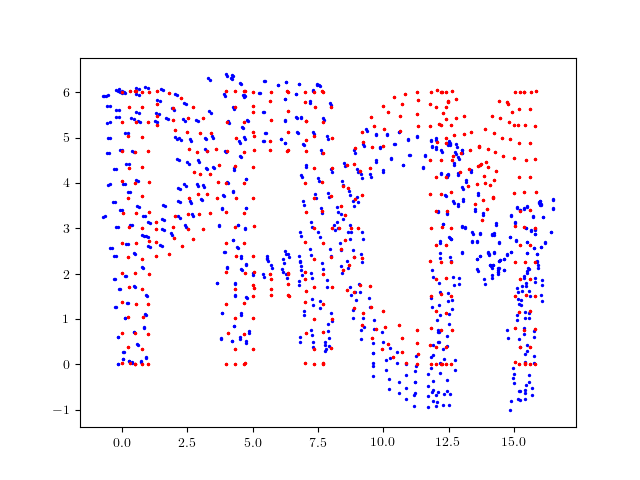}
    \caption{GNNSync, $\eta=0.15$}
  \end{subfigure}
  \begin{subfigure}[ht]{0.245\linewidth}
    \centering
    \includegraphics[width=\linewidth,trim=0cm 0cm 0cm 0cm,clip]{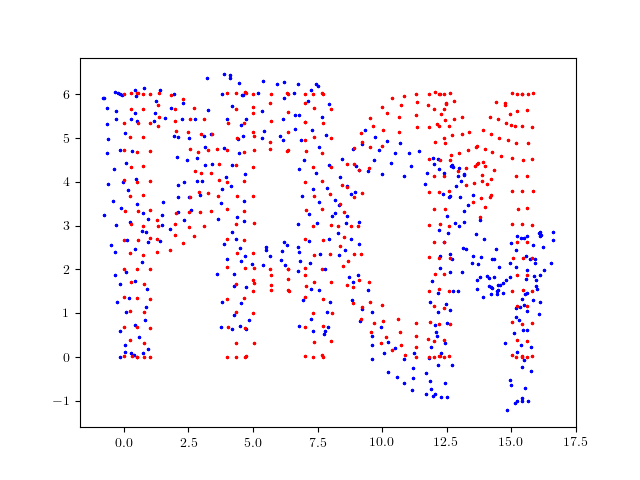}
    \caption{Spectral, $\eta=0.15$}
  \end{subfigure}
  \begin{subfigure}[ht]{0.245\linewidth}
    \centering
    \includegraphics[width=\linewidth,trim=0cm 0cm 0cm 0cm,clip]{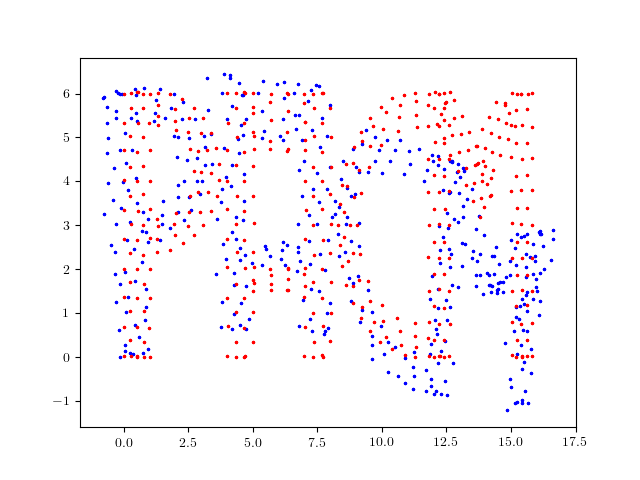}
    \caption{Spectral\_RN, $\eta=0.15$}
  \end{subfigure}
  \begin{subfigure}[ht]{0.245\linewidth}
    \centering
    \includegraphics[width=\linewidth,trim=0cm 0cm 0cm 0cm,clip]{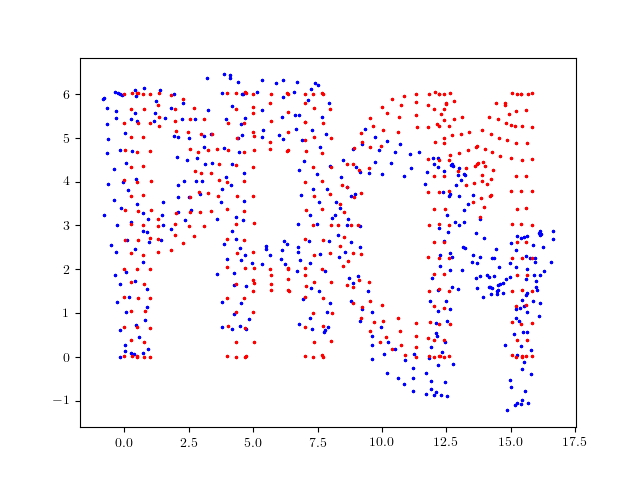}
    \caption{GPM, $\eta=0.15$}
  \end{subfigure}
  \begin{subfigure}[ht]{0.245\linewidth}
    \centering
    \includegraphics[width=\linewidth,trim=0cm 0cm 0cm 0cm,clip]{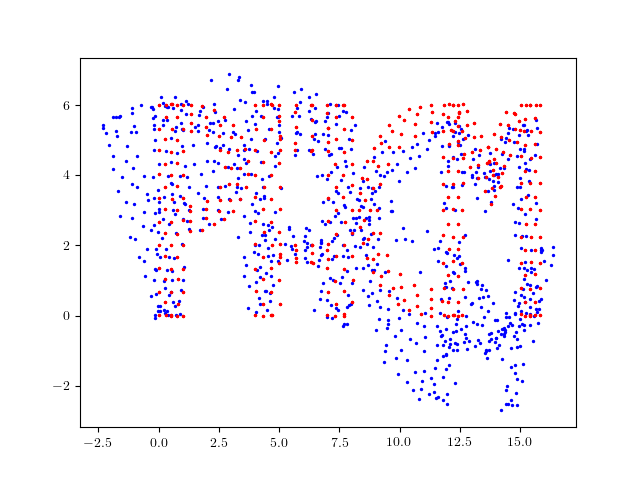}
    \caption{TranSync, $\eta=0.15$}
  \end{subfigure}
  \begin{subfigure}[ht]{0.245\linewidth}
    \centering
    \includegraphics[width=\linewidth,trim=0cm 0cm 0cm 0cm,clip]{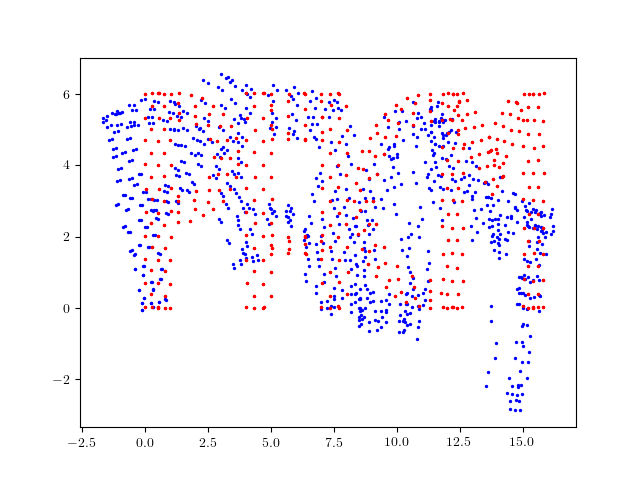}
    \caption{CEMP\_GCW, $\eta=0.15$}
  \end{subfigure}
  \begin{subfigure}[ht]{0.245\linewidth}
    \centering
    \includegraphics[width=\linewidth,trim=0cm 0cm 0cm 0cm,clip]{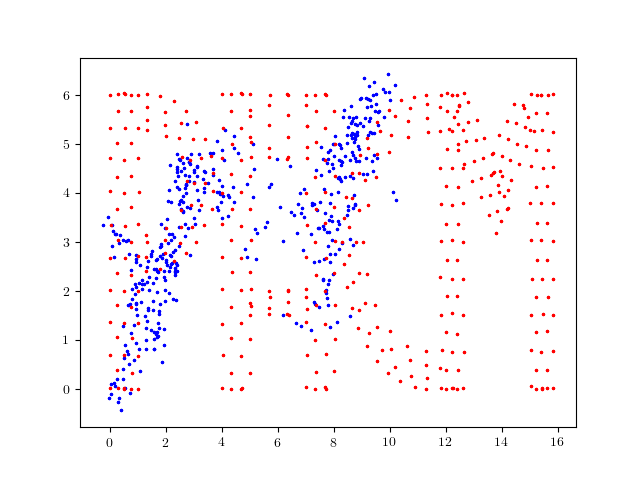}
    \caption{CEMP\_MST, $\eta=0.15$}
  \end{subfigure}
  \begin{subfigure}[ht]{0.245\linewidth}
    \centering
    \includegraphics[width=\linewidth,trim=0cm 0cm 0cm 0cm,clip]{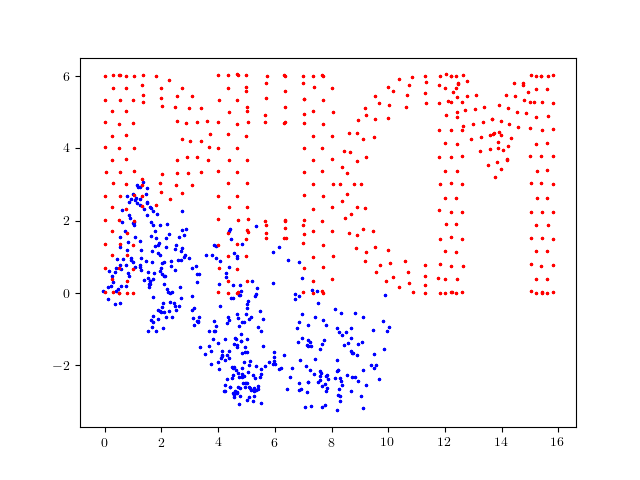}
    \caption{TAS, $\eta=0.15$}
  \end{subfigure}
%%%%%%%%%%%%%%%%%%%%%%%%%%%%%%%%%%%
  \begin{subfigure}[ht]{0.245\linewidth}
    \centering
    \includegraphics[width=\linewidth,trim=0cm 0cm 0cm 0cm,clip]{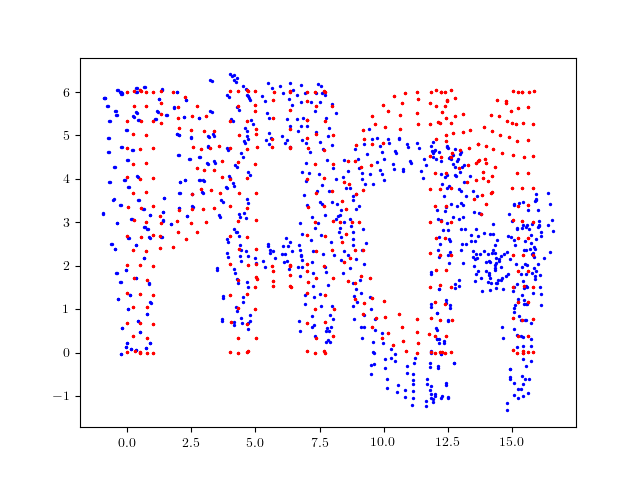}
    \caption{GNNSync, $\eta=0.2$}
  \end{subfigure}
  \begin{subfigure}[ht]{0.245\linewidth}
    \centering
    \includegraphics[width=\linewidth,trim=0cm 0cm 0cm 0cm,clip]{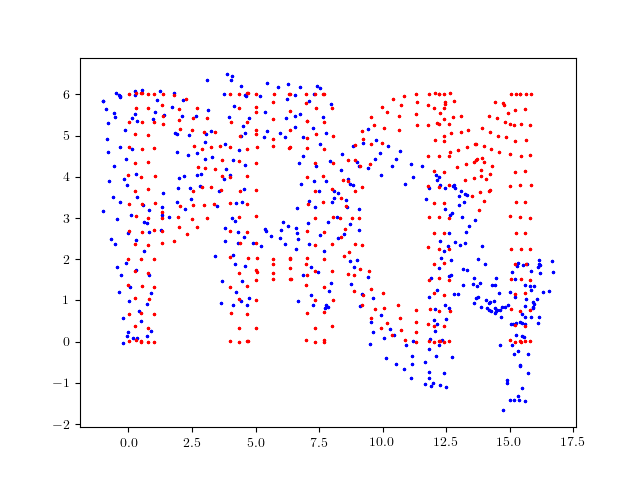}
    \caption{Spectral, $\eta=0.2$}
  \end{subfigure}
  \begin{subfigure}[ht]{0.245\linewidth}
    \centering
    \includegraphics[width=\linewidth,trim=0cm 0cm 0cm 0cm,clip]{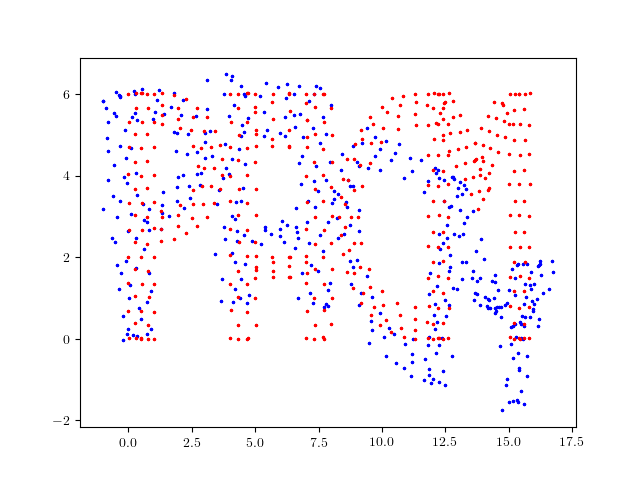}
    \caption{Spectral\_RN, $\eta=0.2$}
  \end{subfigure}
  \begin{subfigure}[ht]{0.245\linewidth}
    \centering
    \includegraphics[width=\linewidth,trim=0cm 0cm 0cm 0cm,clip]{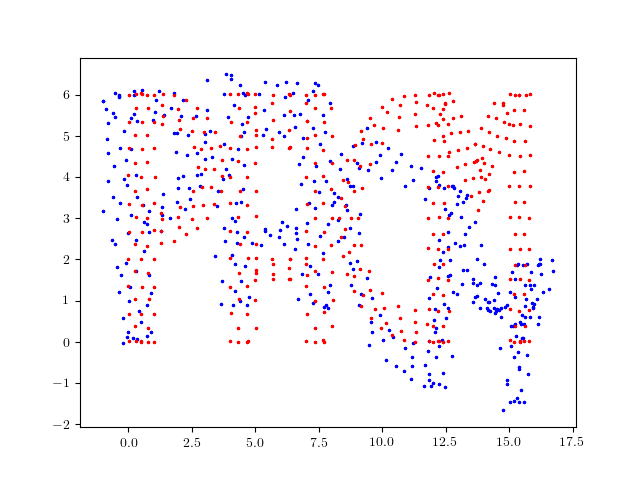}
    \caption{GPM, $\eta=0.2$}
  \end{subfigure}
  \begin{subfigure}[ht]{0.245\linewidth}
    \centering
    \includegraphics[width=\linewidth,trim=0cm 0cm 0cm 0cm,clip]{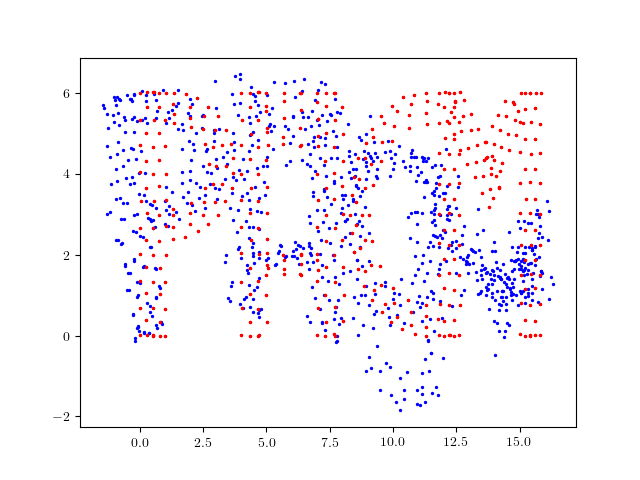}
    \caption{TranSync, $\eta=0.2$}
  \end{subfigure}
  \begin{subfigure}[ht]{0.245\linewidth}
    \centering
    \includegraphics[width=\linewidth,trim=0cm 0cm 0cm 0cm,clip]{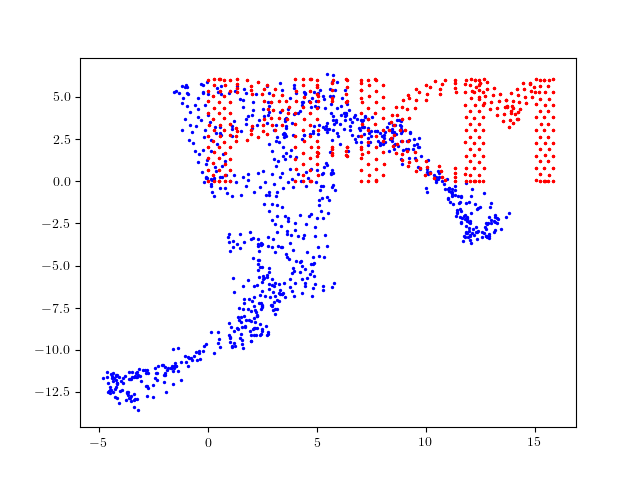}
    \caption{CEMP\_GCW, $\eta=0.2$}
  \end{subfigure}
  \begin{subfigure}[ht]{0.245\linewidth}
    \centering
    \includegraphics[width=\linewidth,trim=0cm 0cm 0cm 0cm,clip]{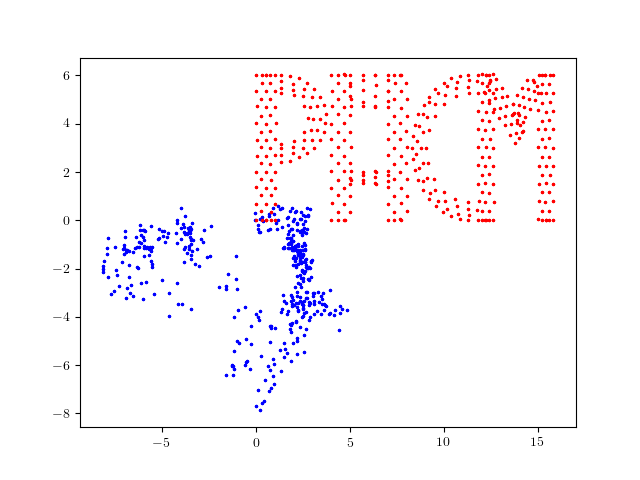}
    \caption{CEMP\_MST, $\eta=0.2$}
  \end{subfigure}
  \begin{subfigure}[ht]{0.245\linewidth}
    \centering
    \includegraphics[width=\linewidth,trim=0cm 0cm 0cm 0cm,clip]{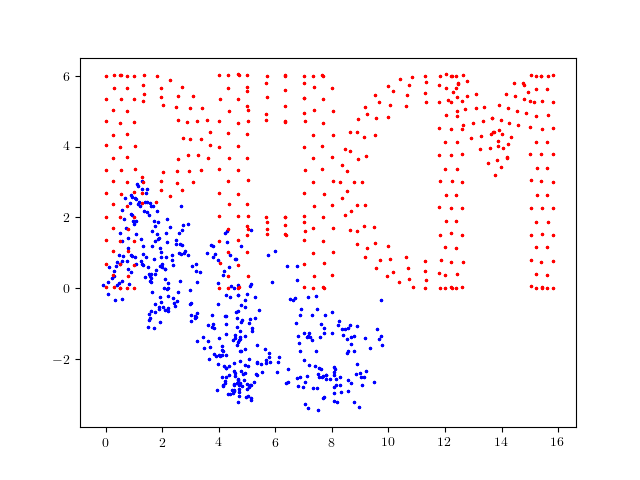}
    \caption{TAS, $\eta=0.2$}
  \end{subfigure}
%%%%%%%%%%%%%%%%%%%%%%%%%%%%%%%%%%%
  \begin{subfigure}[ht]{0.245\linewidth}
    \centering
    \includegraphics[width=\linewidth,trim=0cm 0cm 0cm 0cm,clip]{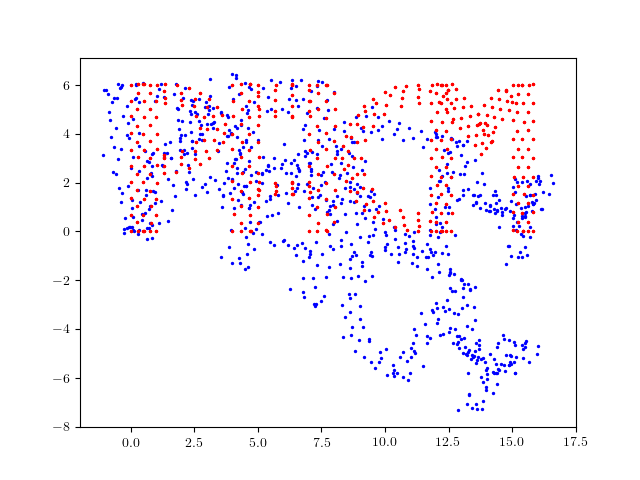}
    \caption{GNNSync, $\eta=0.25$}
  \end{subfigure}
  \begin{subfigure}[ht]{0.245\linewidth}
    \centering
    \includegraphics[width=\linewidth,trim=0cm 0cm 0cm 0cm,clip]{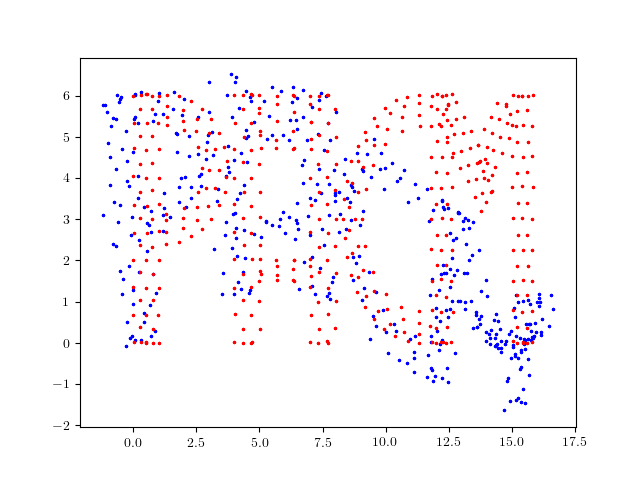}
    \caption{Spectral, $\eta=0.25$}
  \end{subfigure}
  \begin{subfigure}[ht]{0.245\linewidth}
    \centering
    \includegraphics[width=\linewidth,trim=0cm 0cm 0cm 0cm,clip]{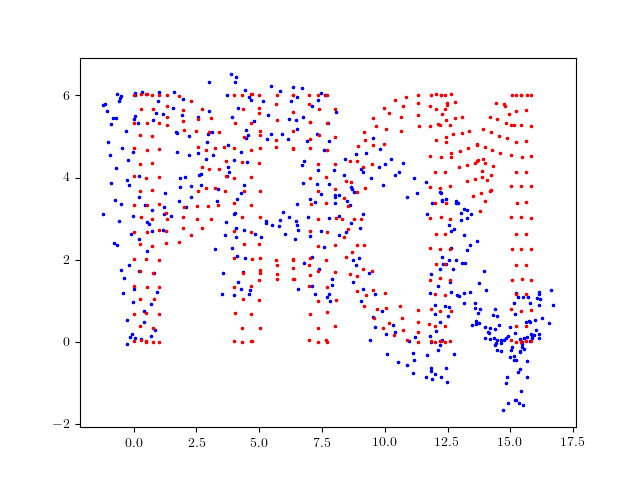}
    \caption{Spectral\_RN, $\eta=0.25$}
  \end{subfigure}
  \begin{subfigure}[ht]{0.245\linewidth}
    \centering
    \includegraphics[width=\linewidth,trim=0cm 0cm 0cm 0cm,clip]{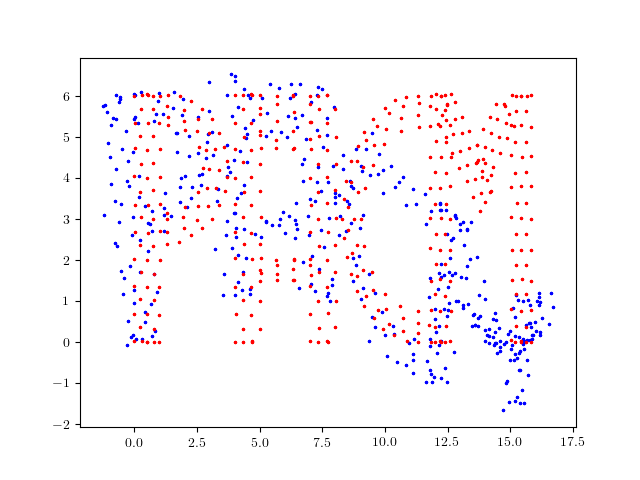}
    \caption{GPM, $\eta=0.25$}
  \end{subfigure}
  \begin{subfigure}[ht]{0.245\linewidth}
    \centering
    \includegraphics[width=\linewidth,trim=0cm 0cm 0cm 0cm,clip]{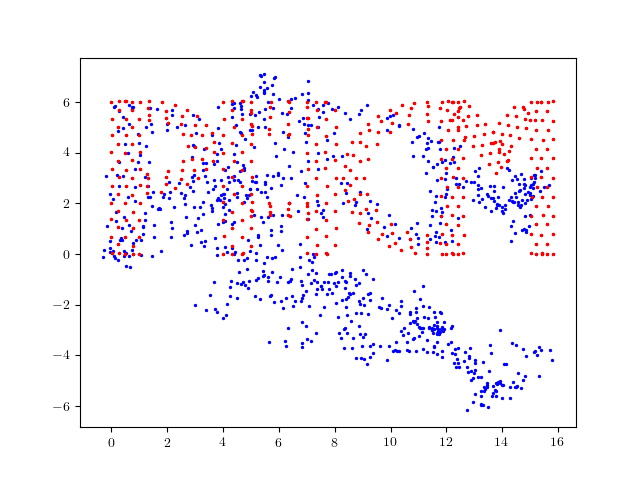}
    \caption{TranSync, $\eta=0.25$}
  \end{subfigure}
  \begin{subfigure}[ht]{0.245\linewidth}
    \centering
    \includegraphics[width=\linewidth,trim=0cm 0cm 0cm 0cm,clip]{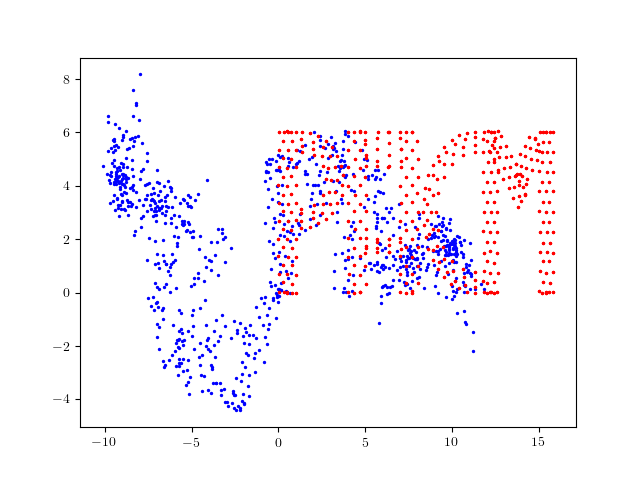}
    \caption{CEMP\_GCW, $\eta=0.25$}
  \end{subfigure}
  \begin{subfigure}[ht]{0.245\linewidth}
    \centering
    \includegraphics[width=\linewidth,trim=0cm 0cm 0cm 0cm,clip]{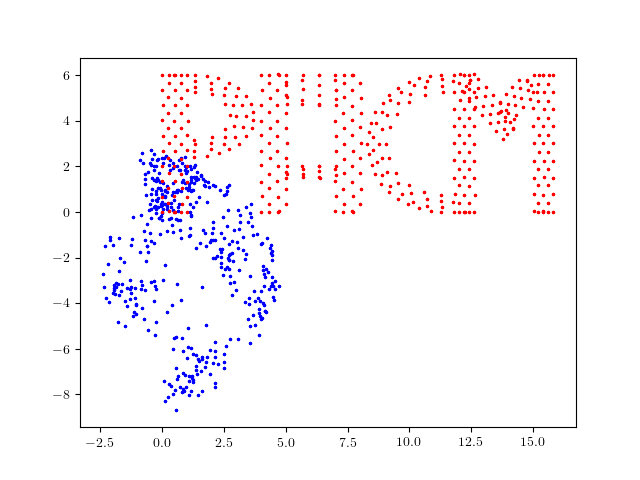}
    \caption{CEMP\_MST, $\eta=0.25$}
  \end{subfigure}
  \begin{subfigure}[ht]{0.245\linewidth}
    \centering
    \includegraphics[width=\linewidth,trim=0cm 0cm 0cm 0cm,clip]{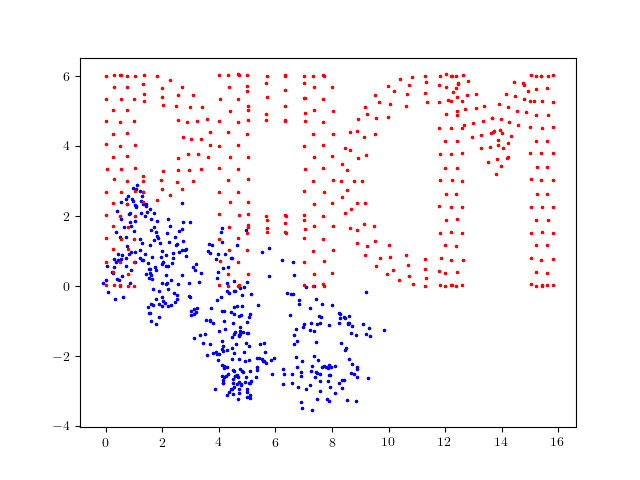}
    \caption{TAS, $\eta=0.25$}
  \end{subfigure}
%%%%%%%%%%%%%%%%%%%%%%%%%%%%%%%%%%%
    \caption{Result visualization for the Sensor Network Localization task on the PACM point cloud using option ``1" as ground-truth angles for high-noise input data. Red dots indicate ground-truth locations and blue dots are estimated city locations.
    }
    \label{fig:pacm_gamma_full_high}
\end{figure*}
\iffalse
\input{figures_tex/pacm_multi_normal1_low}
\input{figures_tex/pacm_multi_normal1_high}
\input{figures_tex/pacm_multi_normal0_low}
\input{figures_tex/pacm_multi_normal0_high}
\input{figures_tex/pacm_block_normal6_low}
\input{figures_tex/pacm_block_normal6_high}
\fi
%%%%%%%%%%%%%%%%%%%%%%%%%%%%%%%%%%%%%%%%%%%%%%%%%%%%
\clearpage
\subsection{Extended ablation study results}
\label{app_sec:ablation_full}
Ablation study results are reported in Fig.~\ref{fig:ablation_k1_ERO} and \ref{fig:ablation_k2_ERO}, while the rest are omitted but could lead to the same conclusion. Note that for $k>1,$ we ablation study results are based on using $\mathcal{L}_\text{cycle}$ as the training loss function.

Improvements over all possible baselines when taking their output as input features for $k=1$ are reported in Fig.~\ref{fig:improvement_k1_ERO}, \ref{fig:improvement_k1_BAO} and \ref{fig:improvement_k1_RGGO}, where we omit results for $\eta=0.9$ as in general all methods fall behind the trivial solution at $\eta=0.9$. We find that in most cases GNNSync could improve over baselines, and could do worse often only when all methods fall behind the trivial baseline.

To show the effect of a linear combination of $L_\text{cycle}$ and $L_\text{upset}$, we empirically test $L_\text{cycle} + \tau L_\text{upset}$, with $\tau$ varying from 0 to 0.9; see Fig.~\ref{fig:linear_comb_k2_ERO} (the others are omitted but could lead to the same conclusion) for details. The performance for different choices of $\tau$ do not vary significantly, providing further evidence that it
suffices to simply pay attention to either of the two loss functions instead of their linear combination. The experiments also show that as the problem becomes harder (e.g. as the noise level increases and the network becomes sparser), a smaller coefficient of $L_\text{upset}$ (even zero) is preferred, which indicates that $L_\text{cycle}$ plays a more essential role in the more challenging scenarios. 

To assess the effect of fine-tuning (via projected gradient steps) over the baselines, we apply the same number of projected gradient descent steps as GNNSync to the comparative baselines and report the performance in Figures~\ref{fig:ablation_fine_tuned_k1_ERO} and \ref{fig:ablation_fine_tuned_k2_ERO}. We observe that even when applying these fine-tuning steps, the baselines are usually beaten by our end-to-end trainable GNNSync pipeline.

\begin{figure*}[!hbt]
    \centering
  \begin{subfigure}[ht]{0.245\linewidth}
    \centering
    \includegraphics[width=\linewidth,trim=0cm 0cm 0cm 1.8cm,clip]{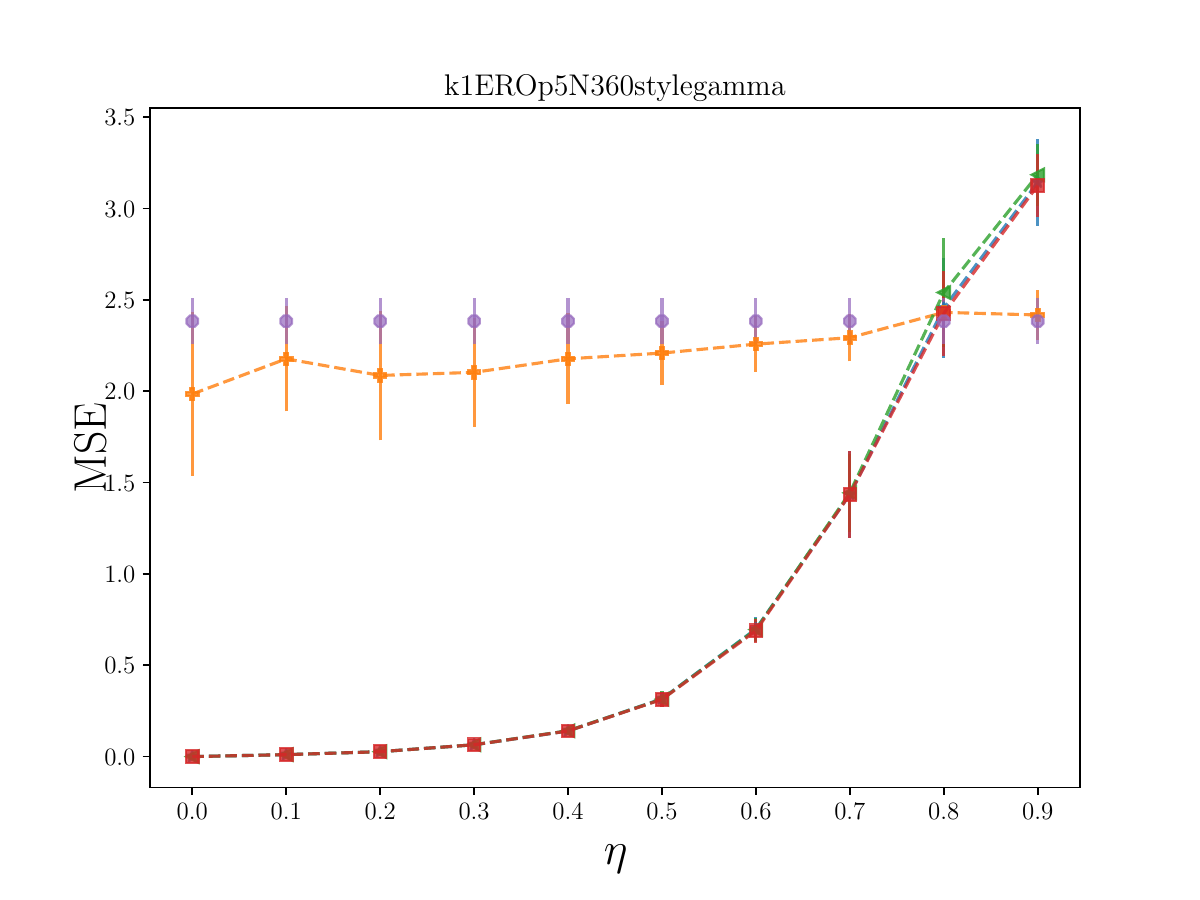}
    \caption{$\text{ERO}_1(p=0.05)$}
  \end{subfigure}
  \begin{subfigure}[ht]{0.245\linewidth}
    \centering
    \includegraphics[width=\linewidth,trim=0cm 0cm 0cm 1.8cm,clip]{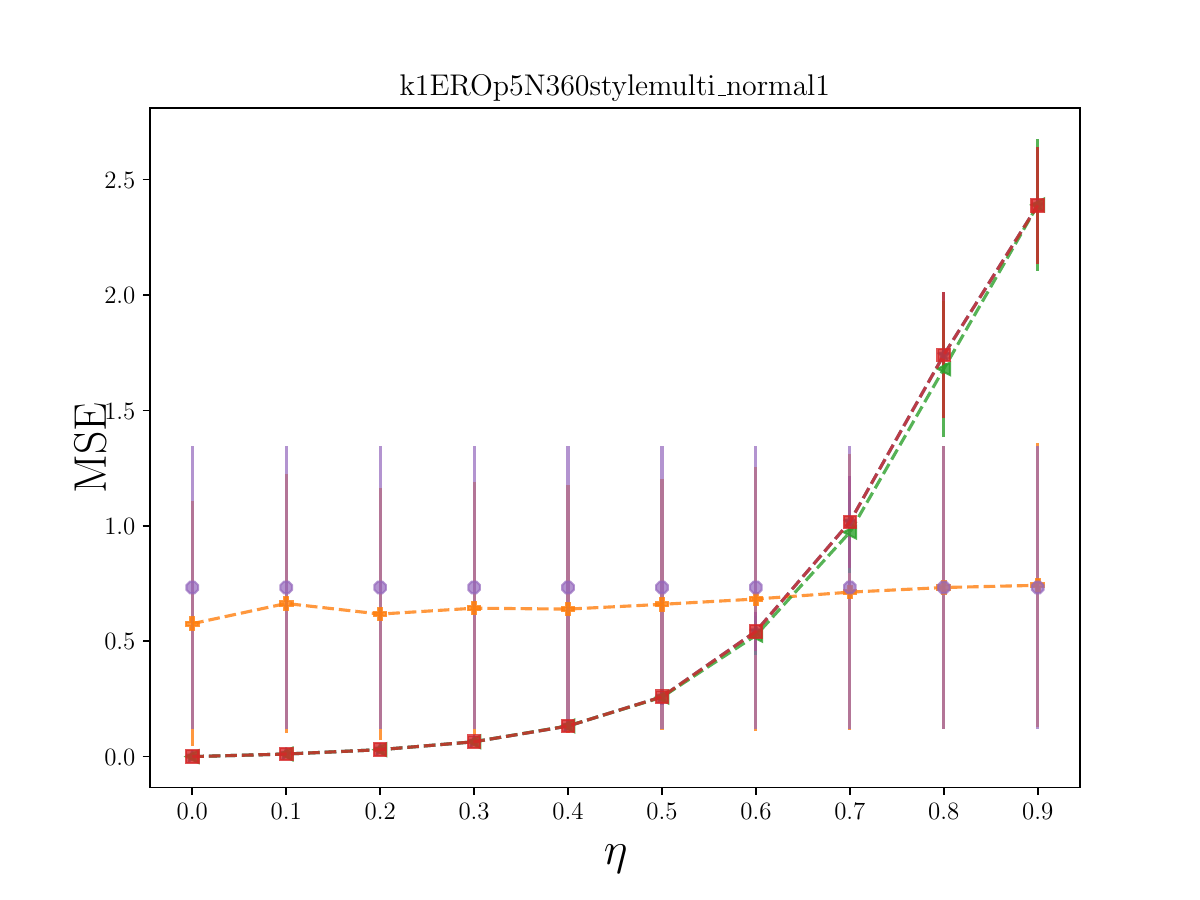}
    \caption{$\text{ERO}_2(p=0.05)$}
  \end{subfigure}
  \begin{subfigure}[ht]{0.245\linewidth}
    \centering
    \includegraphics[width=\linewidth,trim=0cm 0cm 0cm 1.8cm,clip]{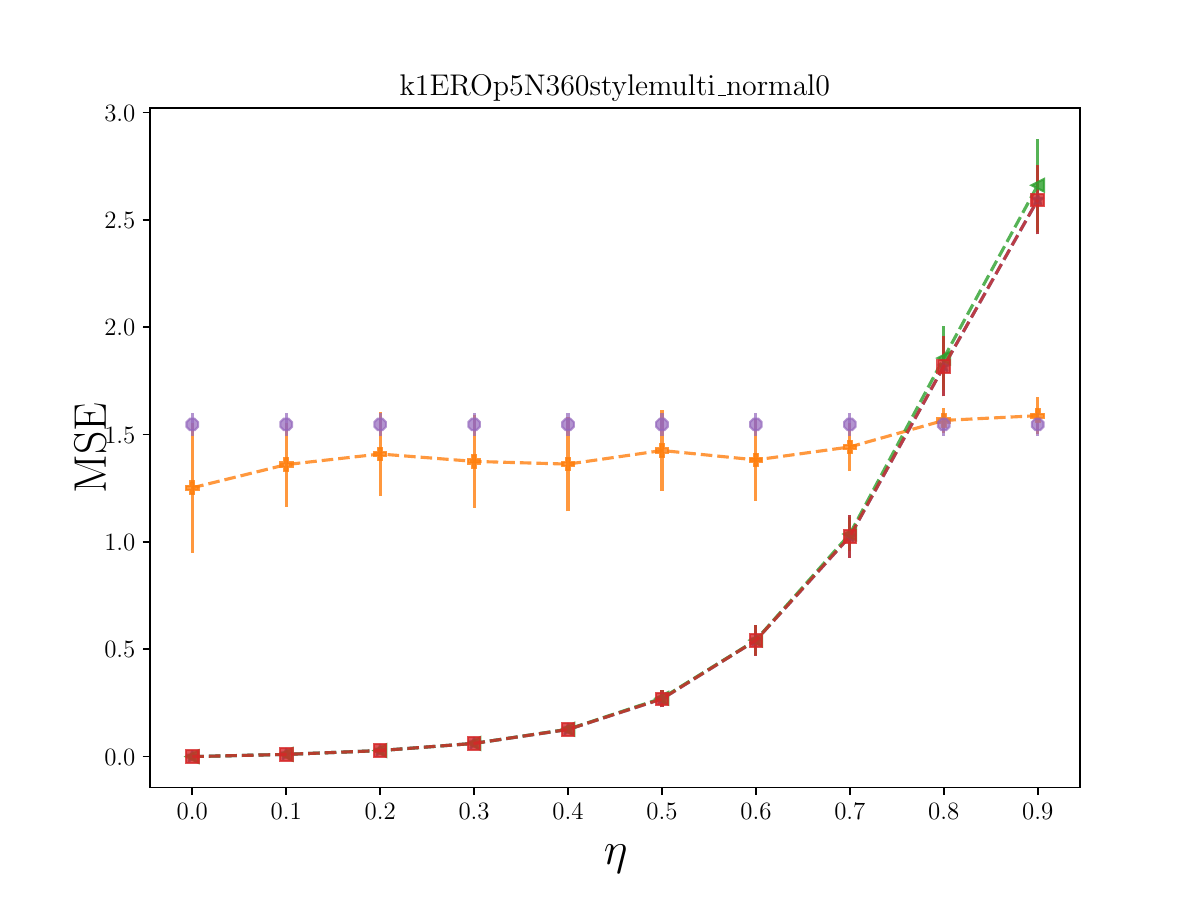}
    \caption{$\text{ERO}_3(p=0.05)$}
  \end{subfigure}
  \begin{subfigure}[ht]{0.245\linewidth}
    \centering
    \includegraphics[width=\linewidth,trim=0cm 0cm 0cm 1.8cm,clip]{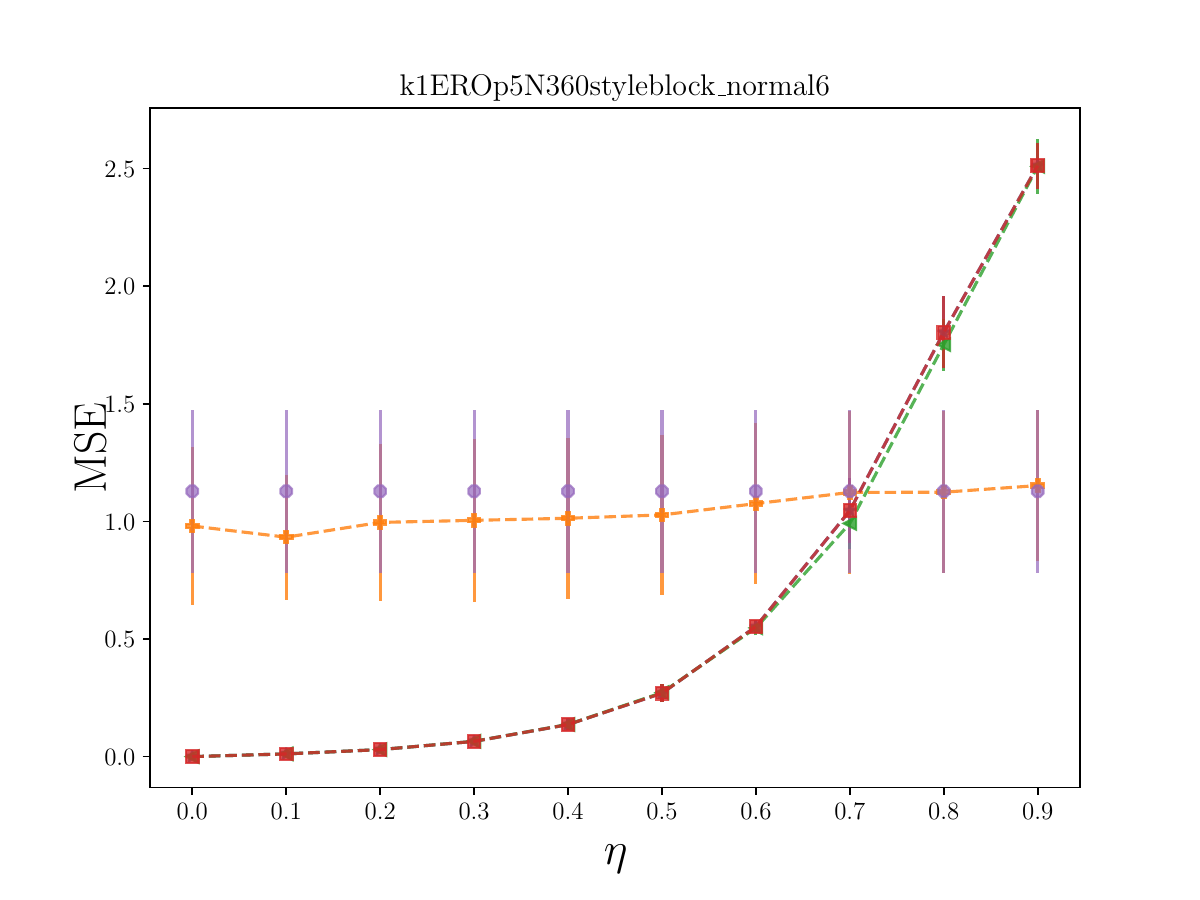}
    \caption{$\text{ERO}_4(p=0.05)$}
  \end{subfigure}
  %%%%%%%%%%%%%%%%%%%%%%%%%%%%
  \begin{subfigure}[ht]{0.245\linewidth}
    \centering
    \includegraphics[width=\linewidth,trim=0cm 0cm 0cm 1.8cm,clip]{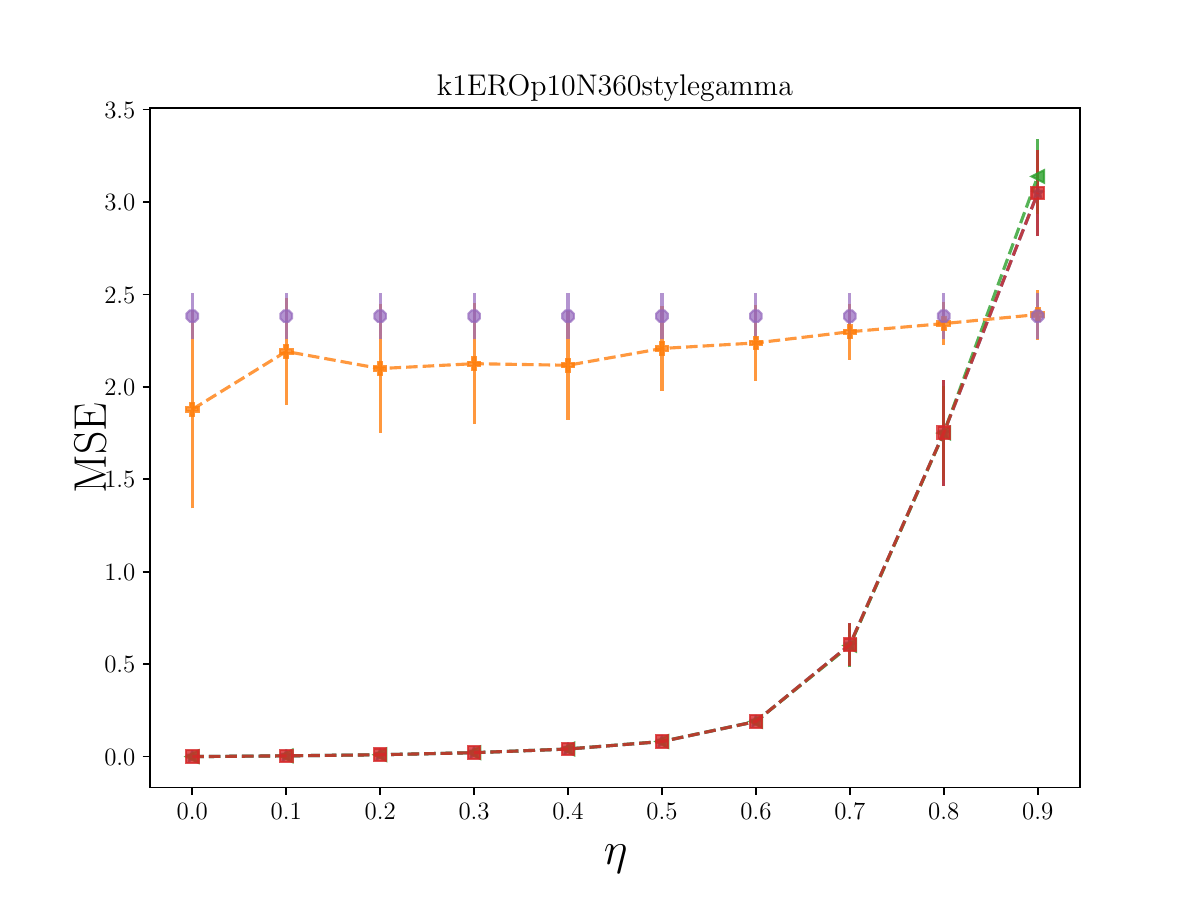}
    \caption{$\text{ERO}_1(p=0.1)$}
  \end{subfigure}
  \begin{subfigure}[ht]{0.245\linewidth}
    \centering
    \includegraphics[width=\linewidth,trim=0cm 0cm 0cm 1.8cm,clip]{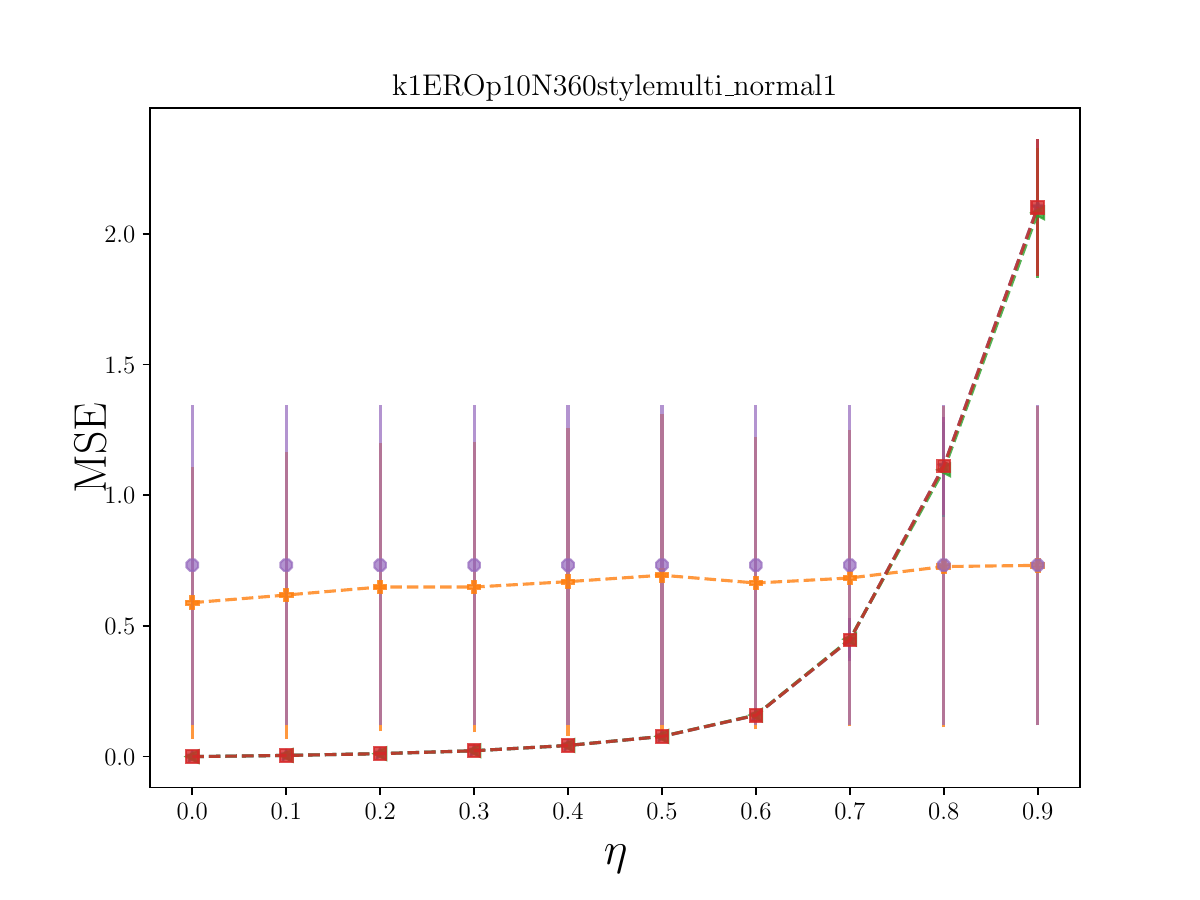}
    \caption{$\text{ERO}_2(p=0.1)$}
  \end{subfigure}
  \begin{subfigure}[ht]{0.245\linewidth}
    \centering
    \includegraphics[width=\linewidth,trim=0cm 0cm 0cm 1.8cm,clip]{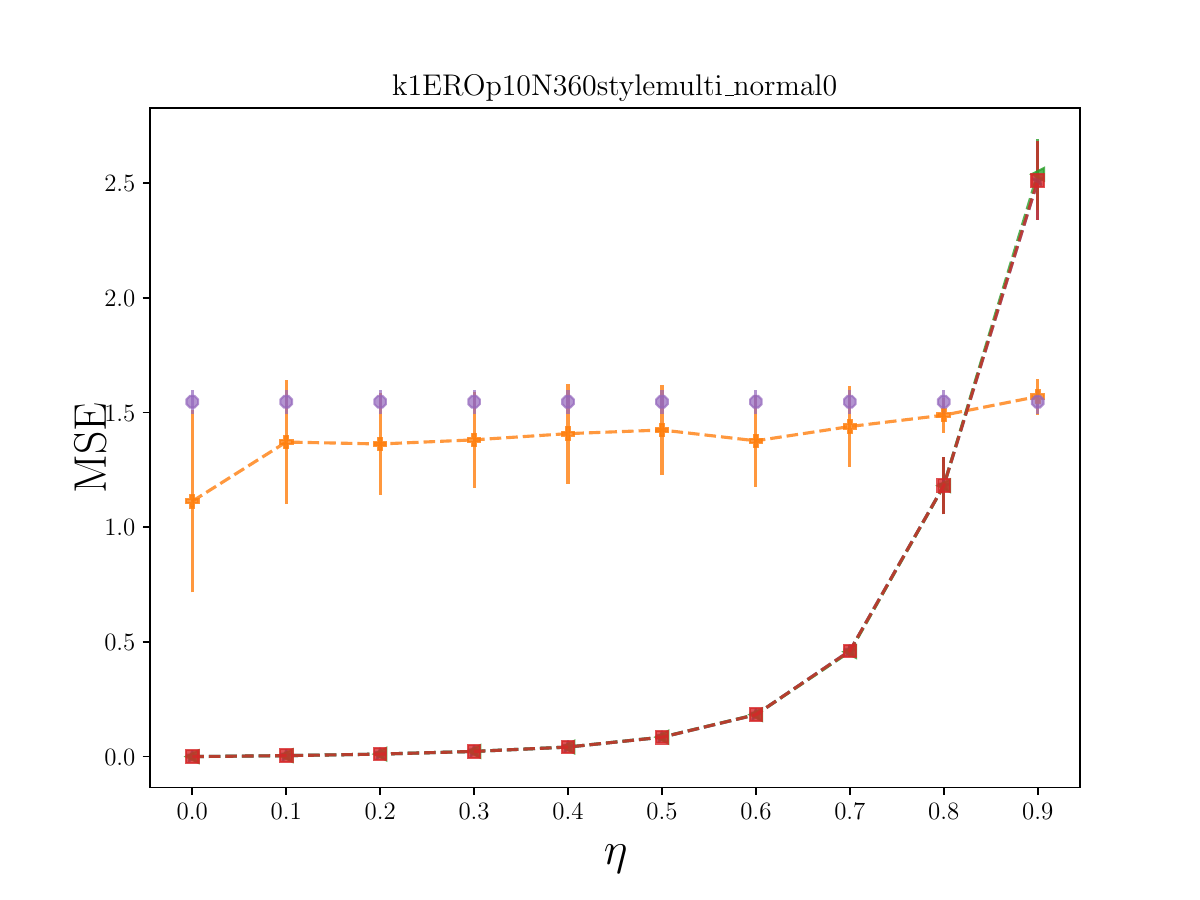}
    \caption{$\text{ERO}_3(p=0.1)$}
  \end{subfigure}
  \begin{subfigure}[ht]{0.245\linewidth}
    \centering
    \includegraphics[width=\linewidth,trim=0cm 0cm 0cm 1.8cm,clip]{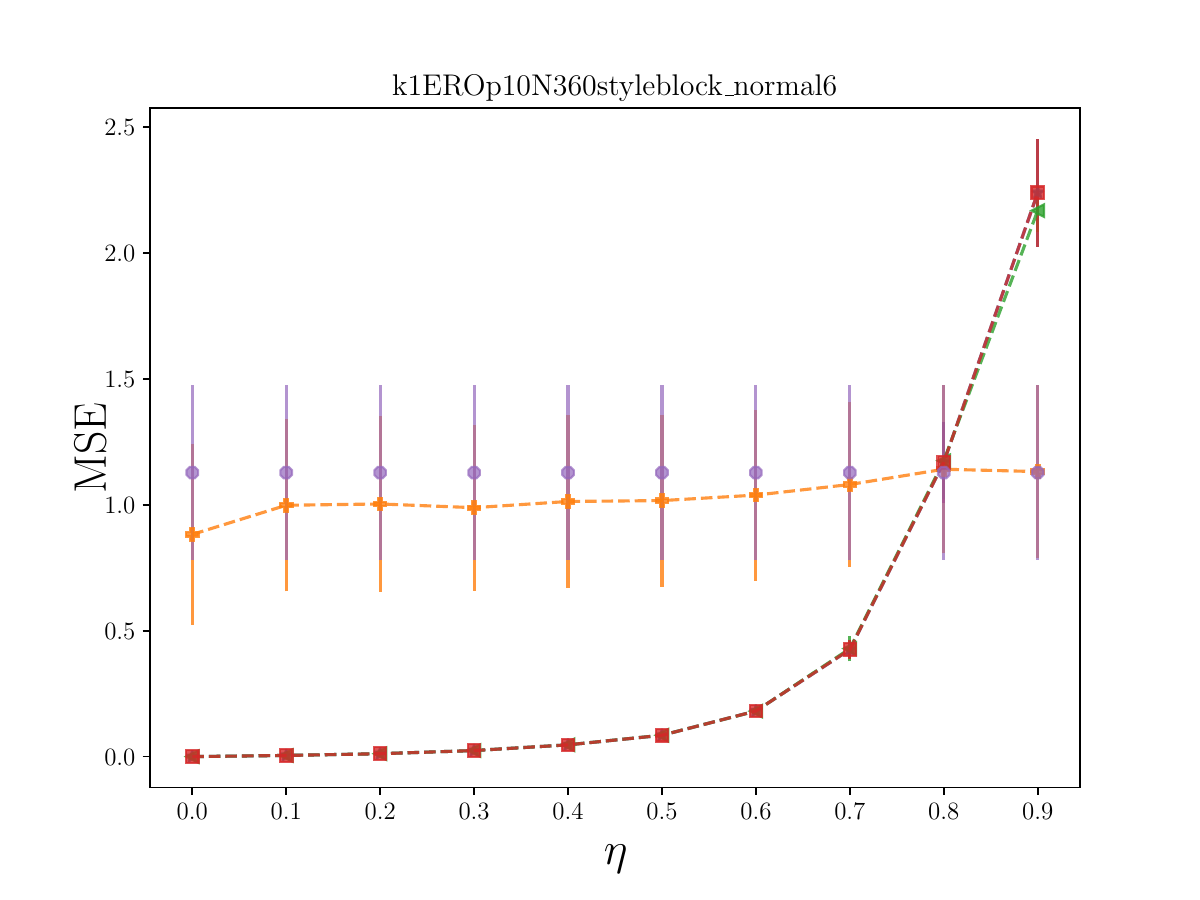}
    \caption{$\text{ERO}_4(p=0.1)$}
  \end{subfigure}
  %%%%%%%%%%%%%%%%%%%%%%%%%%%%%%%%
  \begin{subfigure}[ht]{0.245\linewidth}
    \centering
    \includegraphics[width=\linewidth,trim=0cm 0cm 0cm 1.8cm,clip]{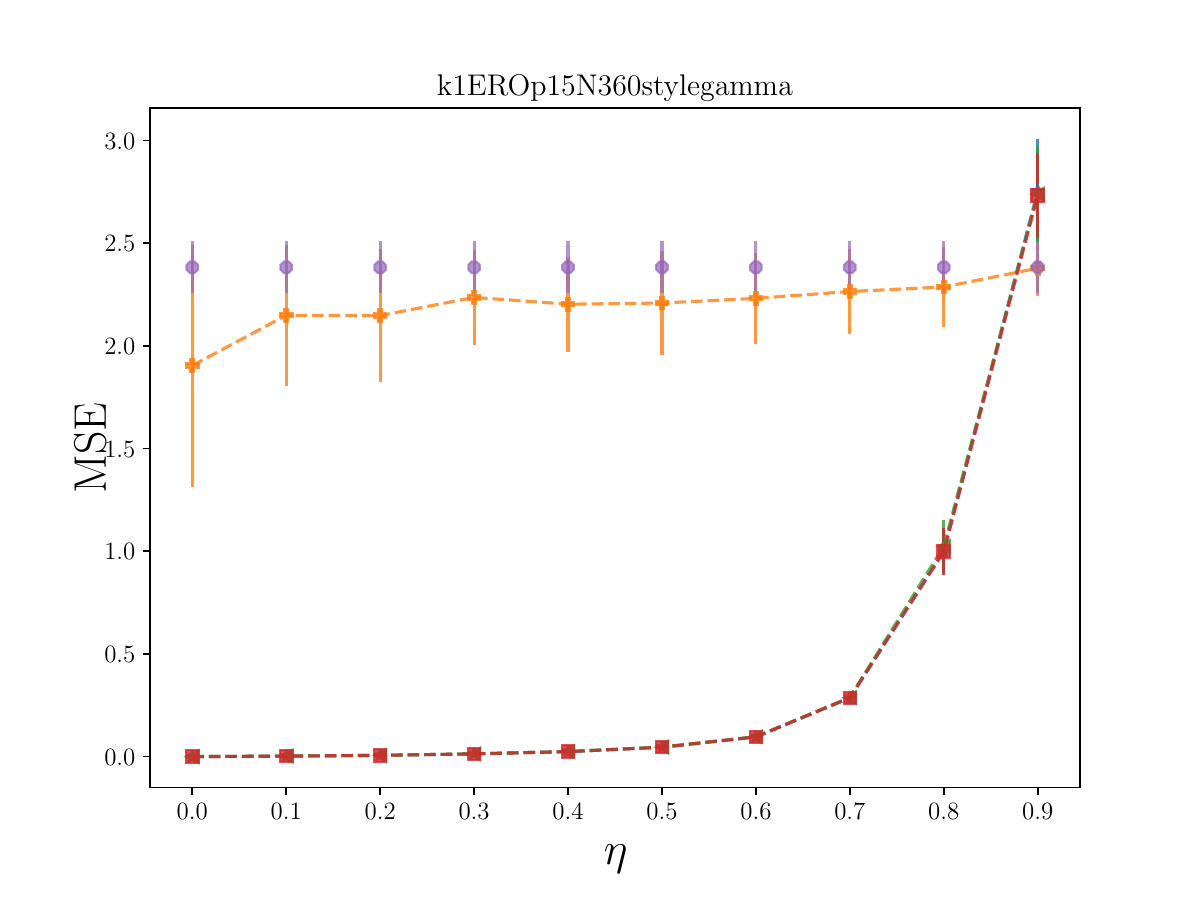}
    \caption{$\text{ERO}_1(p=0.15)$}
  \end{subfigure}
  \begin{subfigure}[ht]{0.245\linewidth}
    \centering
    \includegraphics[width=\linewidth,trim=0cm 0cm 0cm 1.8cm,clip]{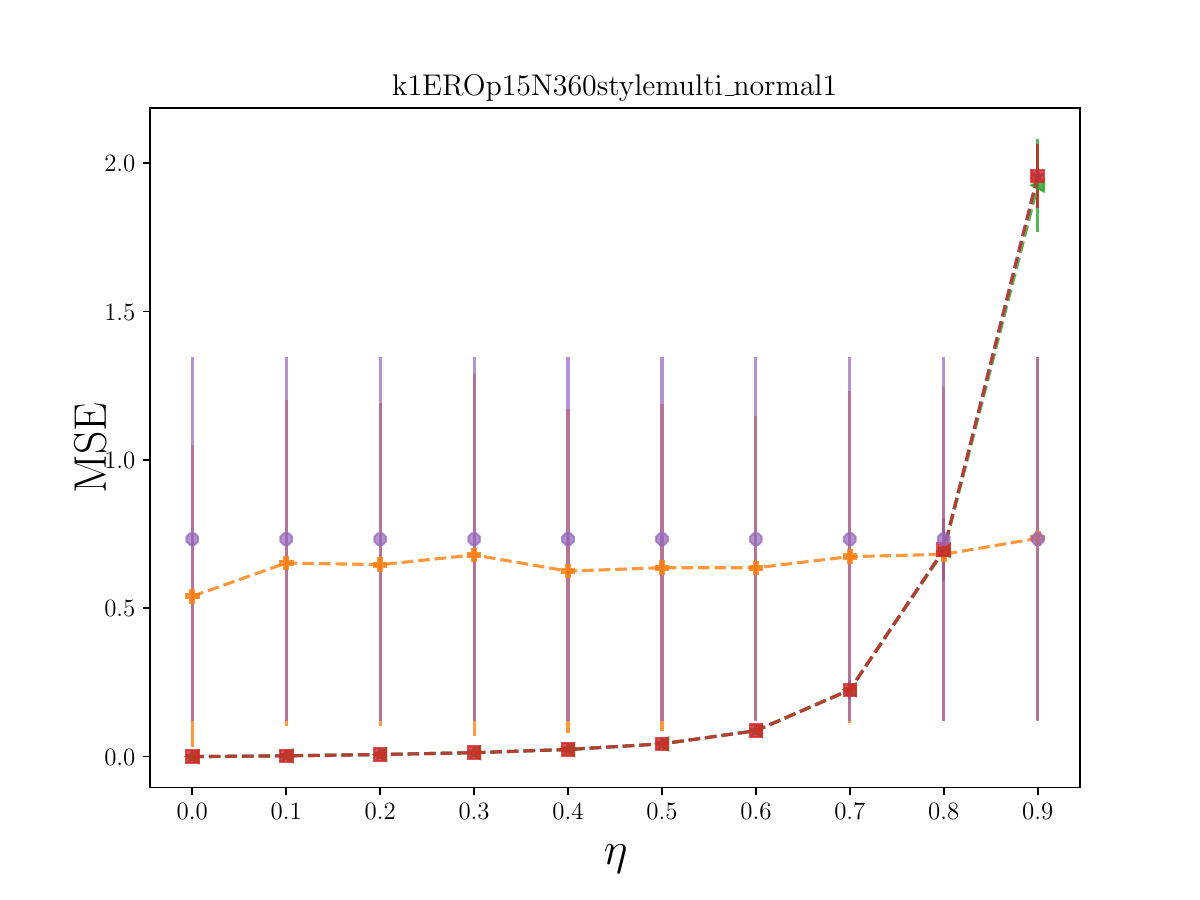}
    \caption{$\text{ERO}_2(p=0.15)$}
  \end{subfigure}
  \begin{subfigure}[ht]{0.245\linewidth}
    \centering
    \includegraphics[width=\linewidth,trim=0cm 0cm 0cm 1.8cm,clip]{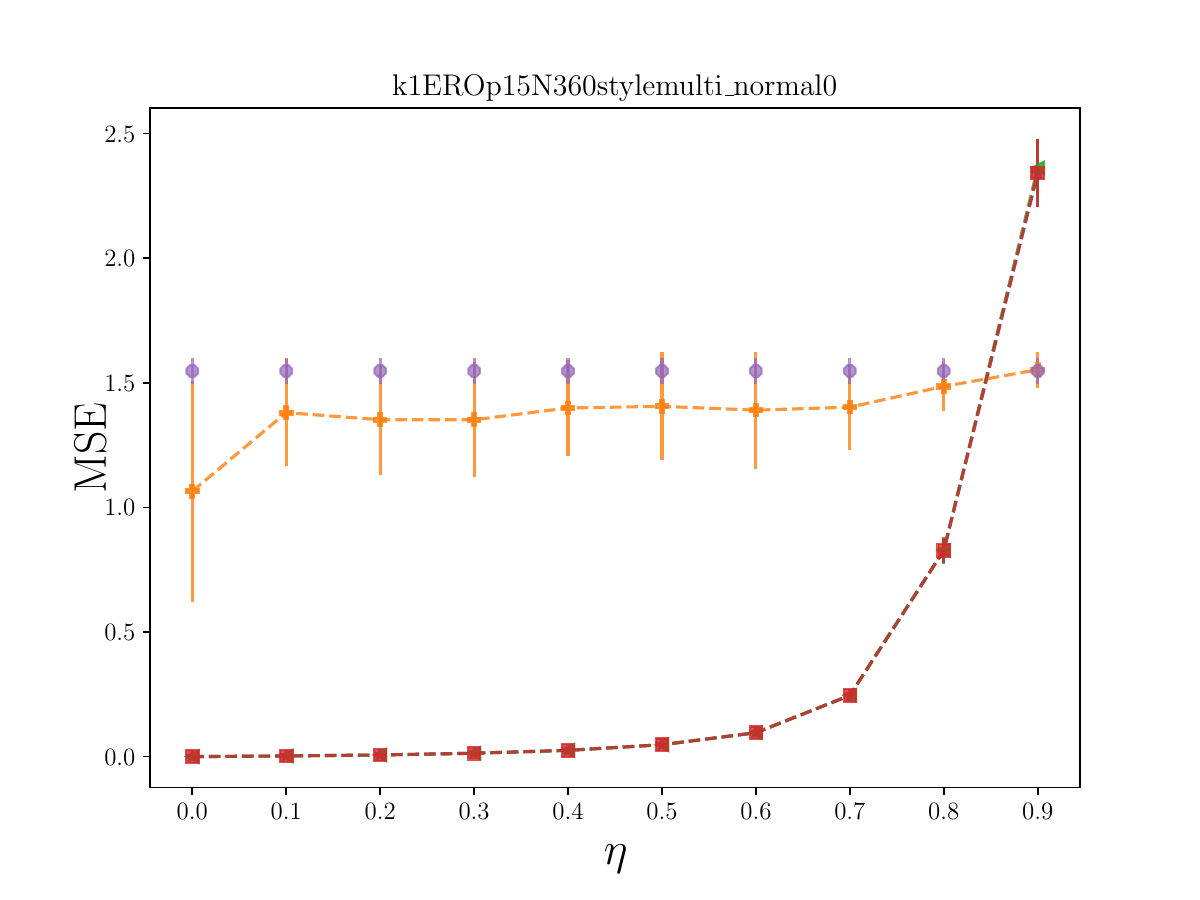}
    \caption{$\text{ERO}_3(p=0.15)$}
  \end{subfigure}
  \begin{subfigure}[ht]{0.245\linewidth}
    \centering
    \includegraphics[width=\linewidth,trim=0cm 0cm 0cm 1.8cm,clip]{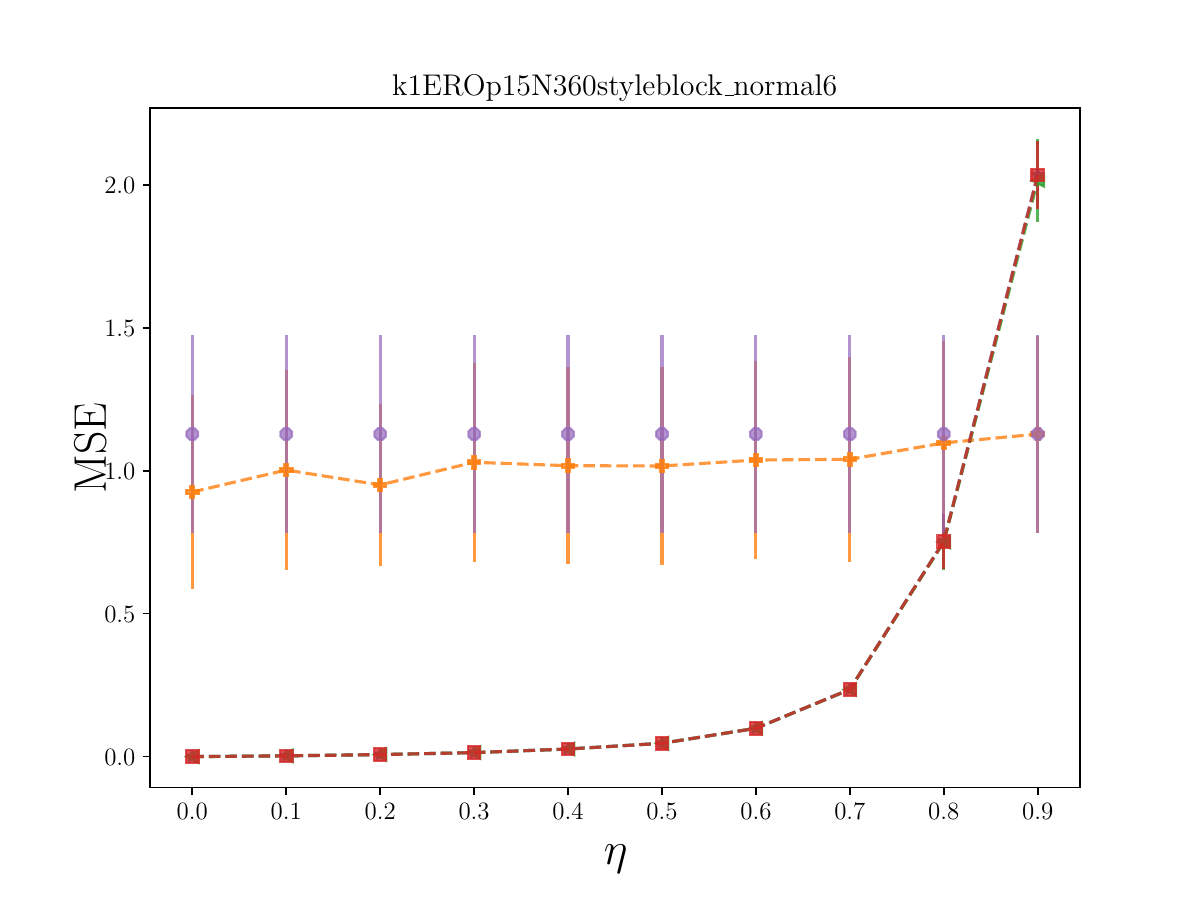}
    \caption{$\text{ERO}_4(p=0.15)$}
  \end{subfigure}
  %%%%%%%%%%%%%%%%%%%%%%%%%%%%%%%%%%%
\begin{subfigure}[ht]{\linewidth}
    \centering
    \includegraphics[width=1.0\linewidth,trim=0cm 0cm 0cm 0cm,clip]{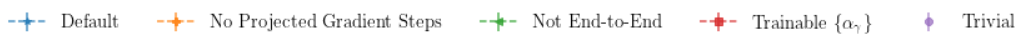}
  \end{subfigure}
    \caption{MSE performance comparison on GNNSync variants on angular synchronization ($k=1$) for ERO models. $p$ is the network density and $\eta$ is the noise level. Error bars indicate one standard deviation. 
    }
    \label{fig:ablation_k1_ERO}
\end{figure*}

\begin{figure*}[!hbt]
    \centering
  \begin{subfigure}[ht]{0.245\linewidth}
    \centering
    \includegraphics[width=\linewidth,trim=0cm 0cm 0cm 1.8cm,clip]{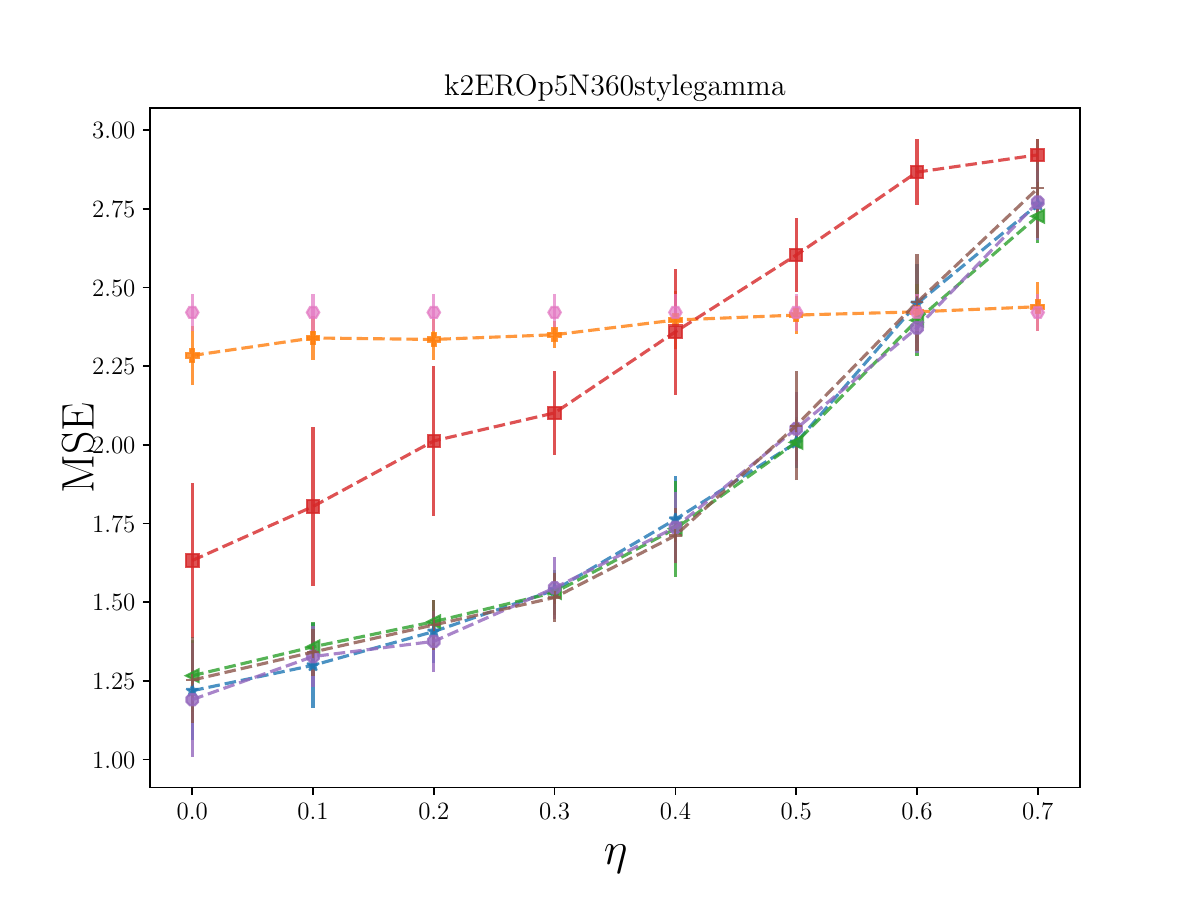}
    \caption{$\text{ERO}_1(p=0.05)$}
  \end{subfigure}
  \begin{subfigure}[ht]{0.245\linewidth}
    \centering
    \includegraphics[width=\linewidth,trim=0cm 0cm 0cm 1.8cm,clip]{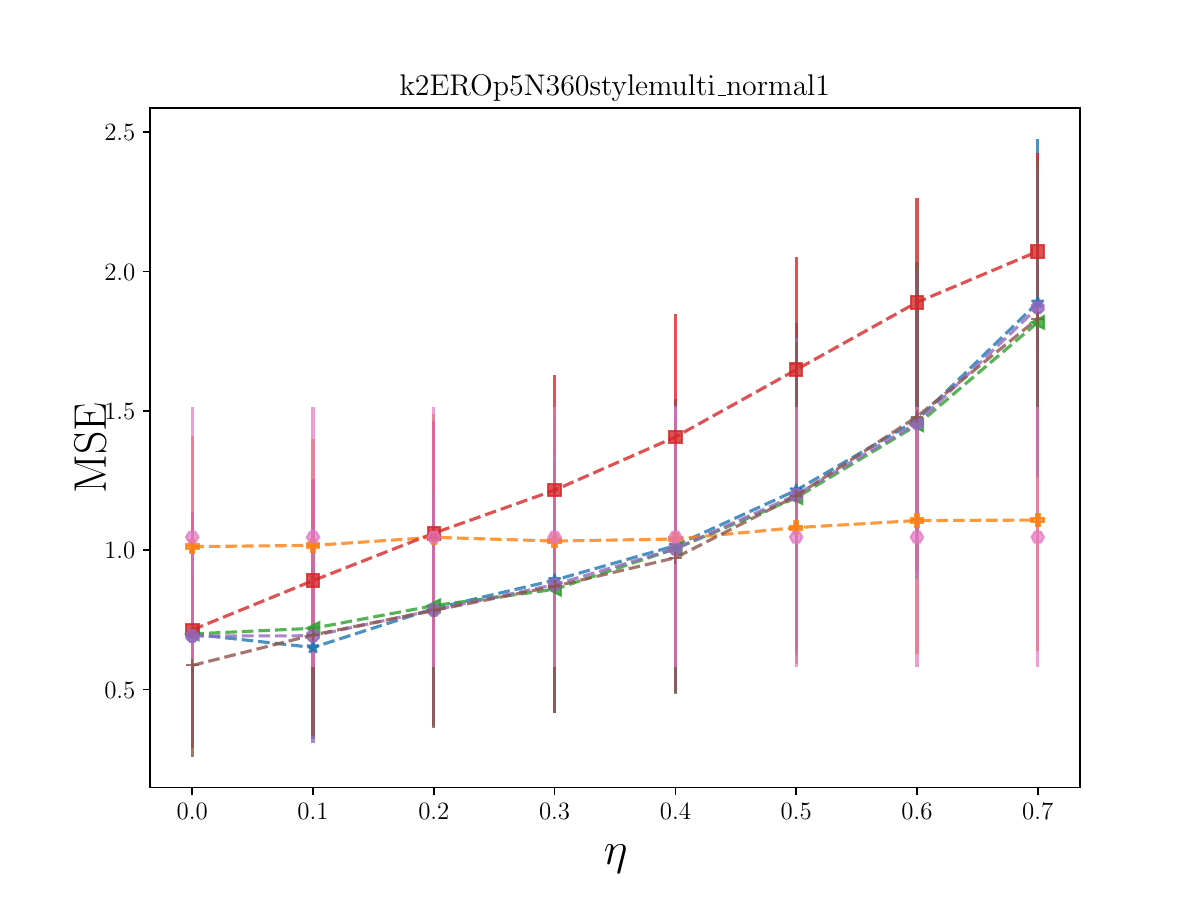}
    \caption{$\text{ERO}_2(p=0.05)$}
  \end{subfigure}
  \begin{subfigure}[ht]{0.245\linewidth}
    \centering
    \includegraphics[width=\linewidth,trim=0cm 0cm 0cm 1.8cm,clip]{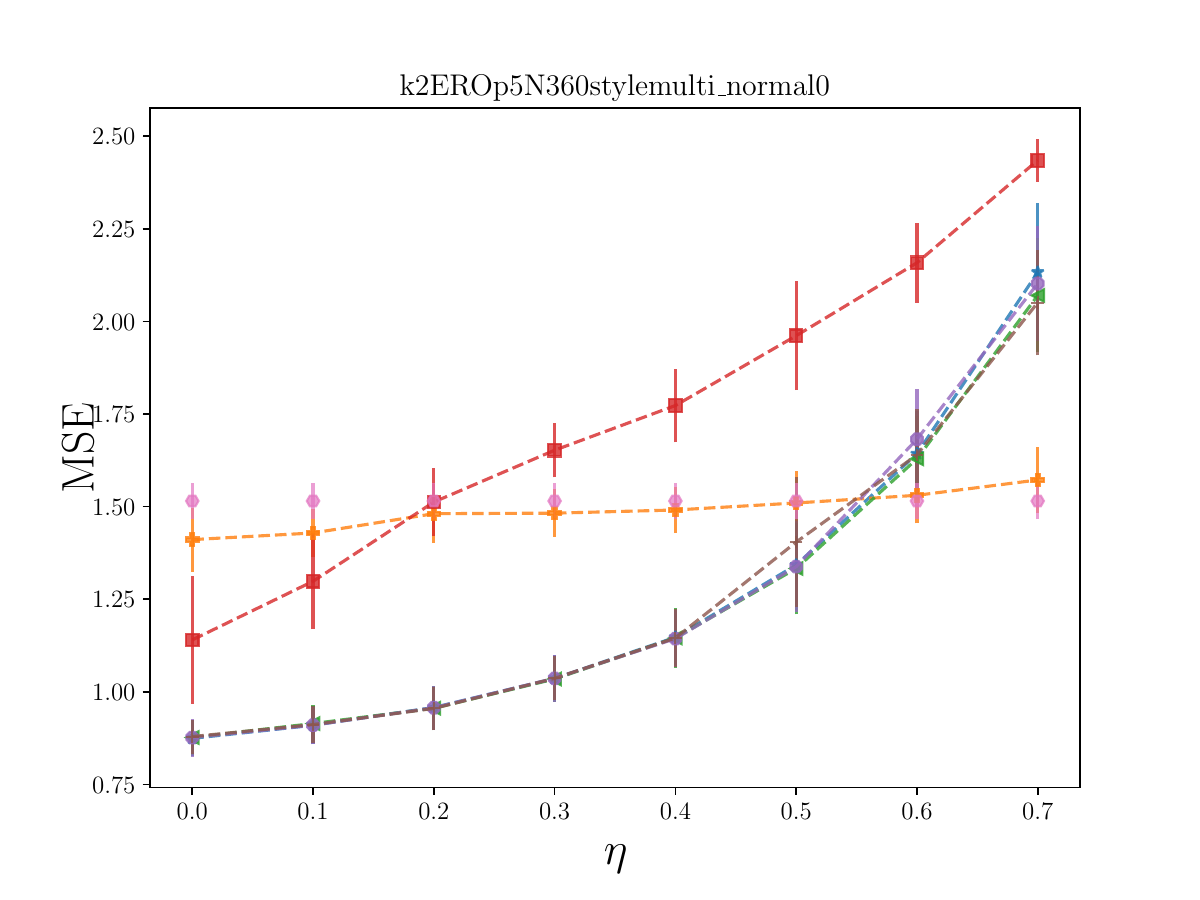}
    \caption{$\text{ERO}_3(p=0.05)$}
  \end{subfigure}
  \begin{subfigure}[ht]{0.245\linewidth}
    \centering
    \includegraphics[width=\linewidth,trim=0cm 0cm 0cm 1.8cm,clip]{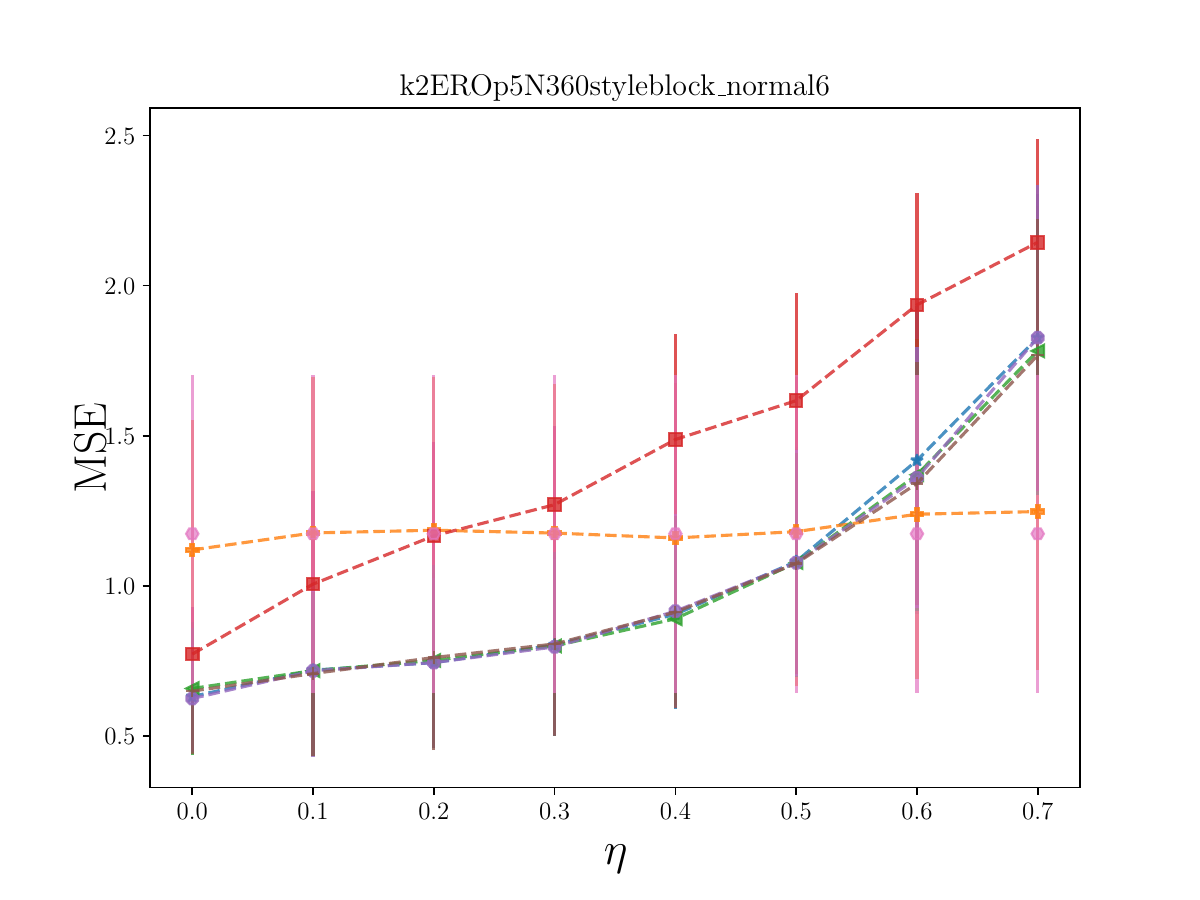}
    \caption{$\text{ERO}_4(p=0.05)$}
  \end{subfigure}
  %%%%%%%%%%%%%%%%%%%%%%%%%%%%
  \begin{subfigure}[ht]{0.245\linewidth}
    \centering
    \includegraphics[width=\linewidth,trim=0cm 0cm 0cm 1.8cm,clip]{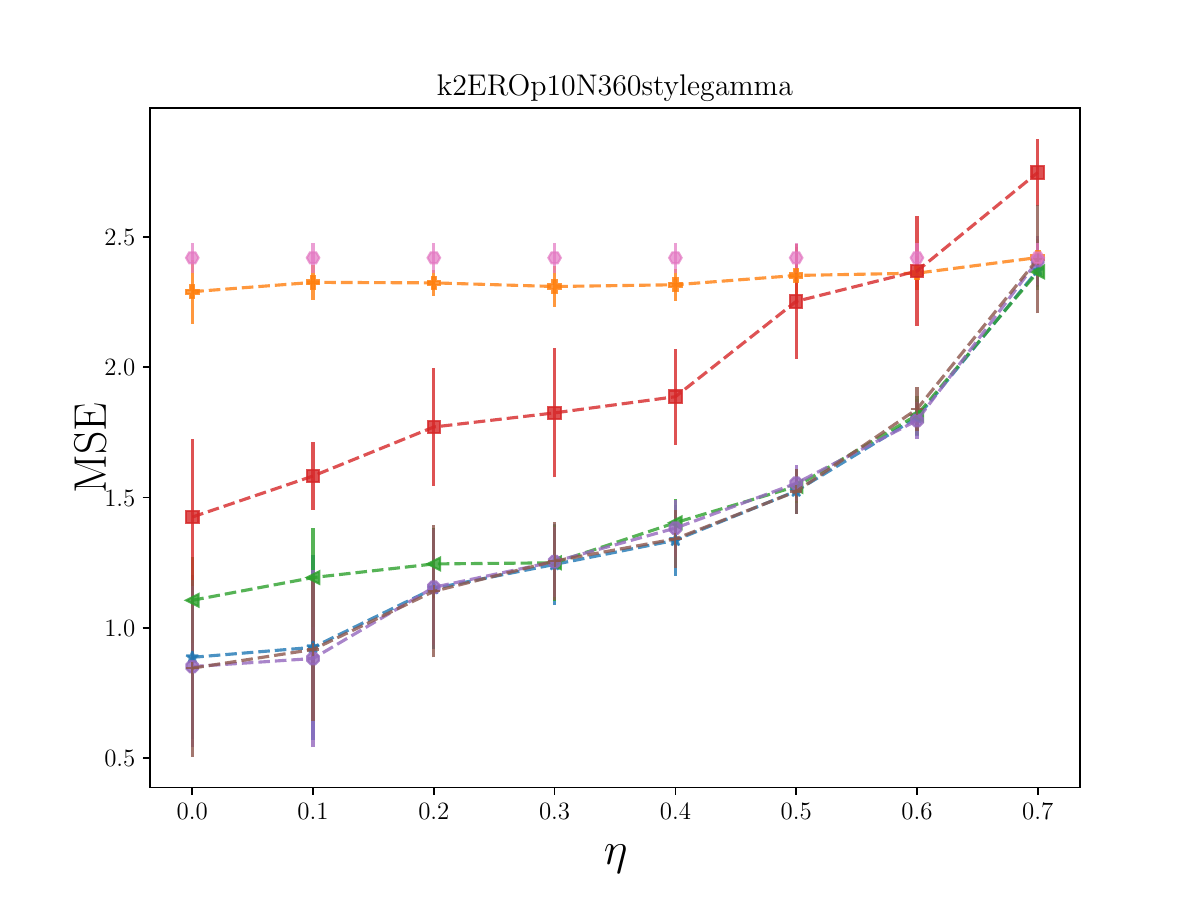}
    \caption{$\text{ERO}_1(p=0.1)$}
  \end{subfigure}
  \begin{subfigure}[ht]{0.245\linewidth}
    \centering
    \includegraphics[width=\linewidth,trim=0cm 0cm 0cm 1.8cm,clip]{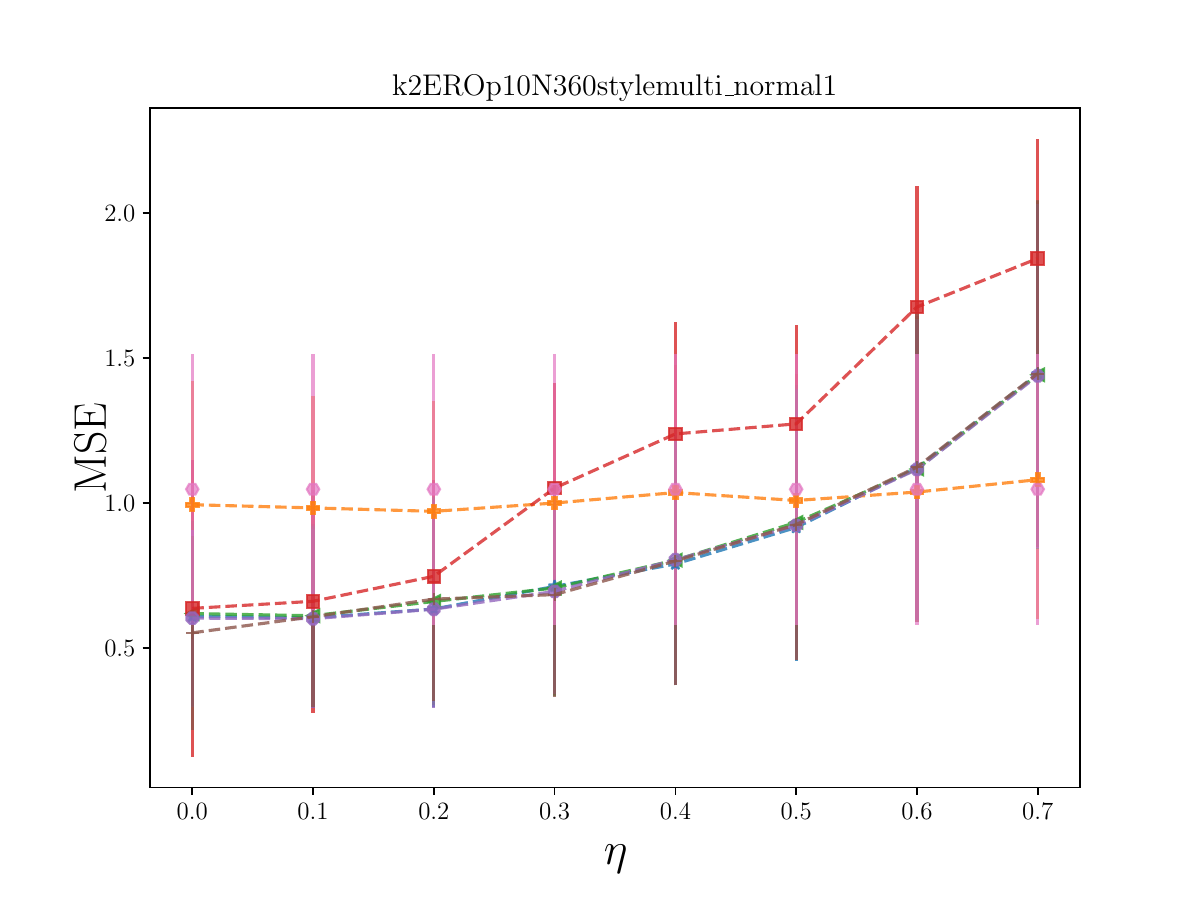}
    \caption{$\text{ERO}_2(p=0.1)$}
  \end{subfigure}
  \begin{subfigure}[ht]{0.245\linewidth}
    \centering
    \includegraphics[width=\linewidth,trim=0cm 0cm 0cm 1.8cm,clip]{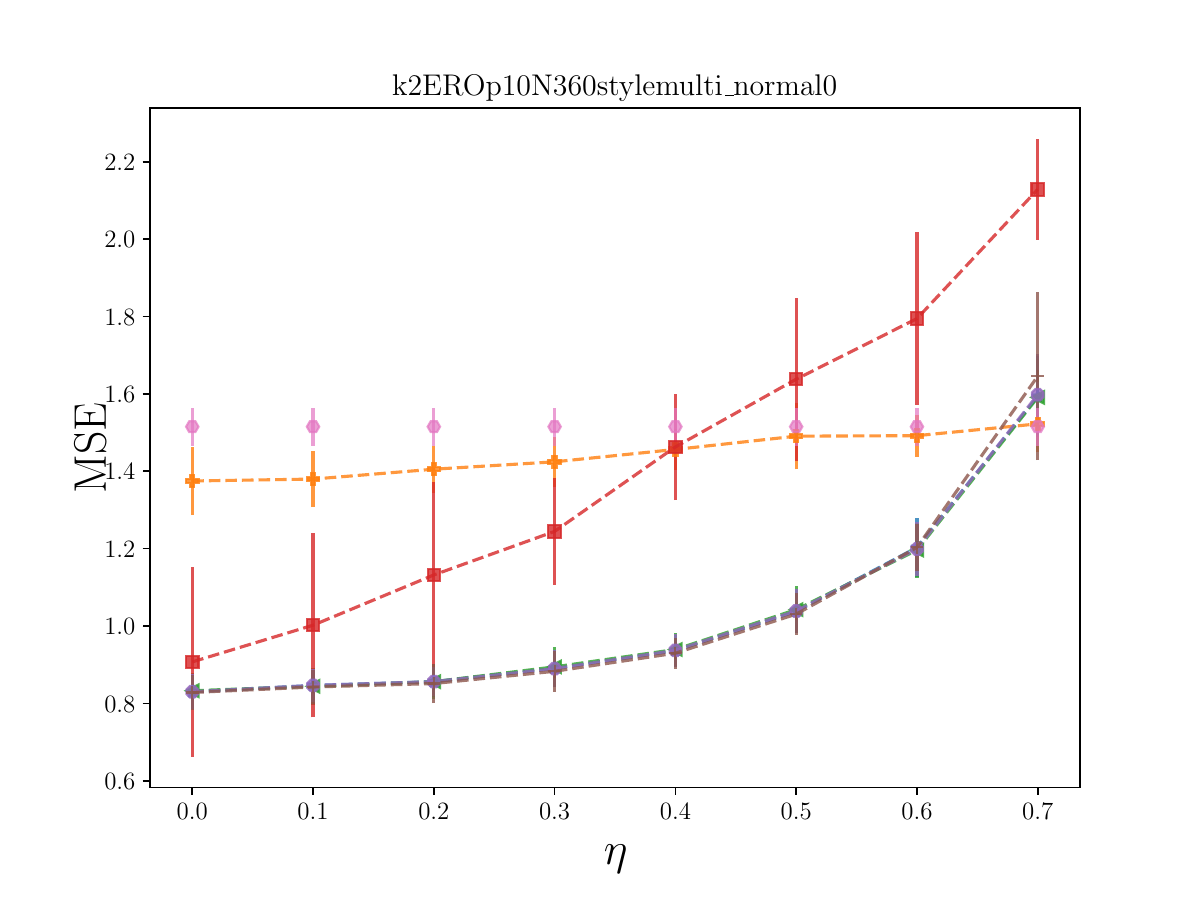}
    \caption{$\text{ERO}_3(p=0.1)$}
  \end{subfigure}
  \begin{subfigure}[ht]{0.245\linewidth}
    \centering
    \includegraphics[width=\linewidth,trim=0cm 0cm 0cm 1.8cm,clip]{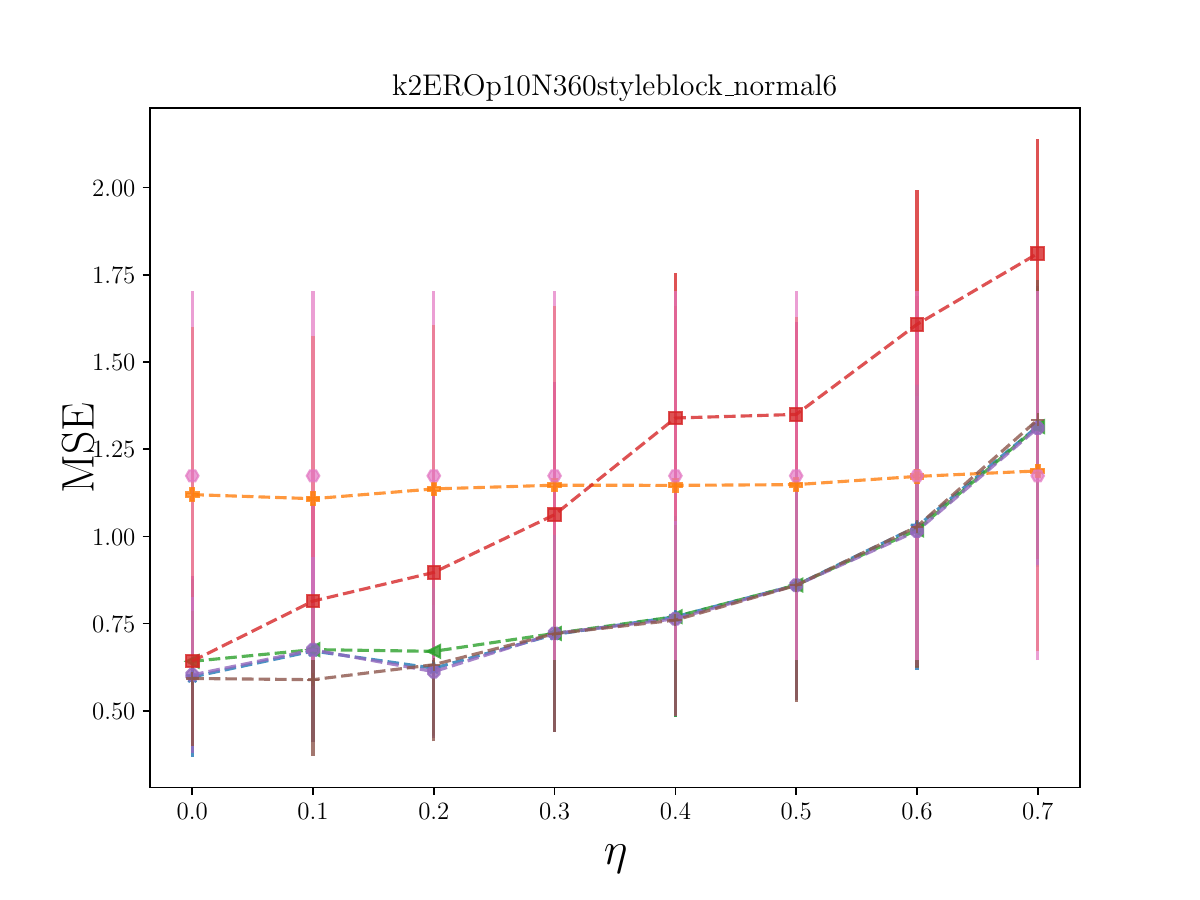}
    \caption{$\text{ERO}_4(p=0.1)$}
  \end{subfigure}
  %%%%%%%%%%%%%%%%%%%%%%%%%%%%%%%%
  \begin{subfigure}[ht]{0.245\linewidth}
    \centering
    \includegraphics[width=\linewidth,trim=0cm 0cm 0cm 1.8cm,clip]{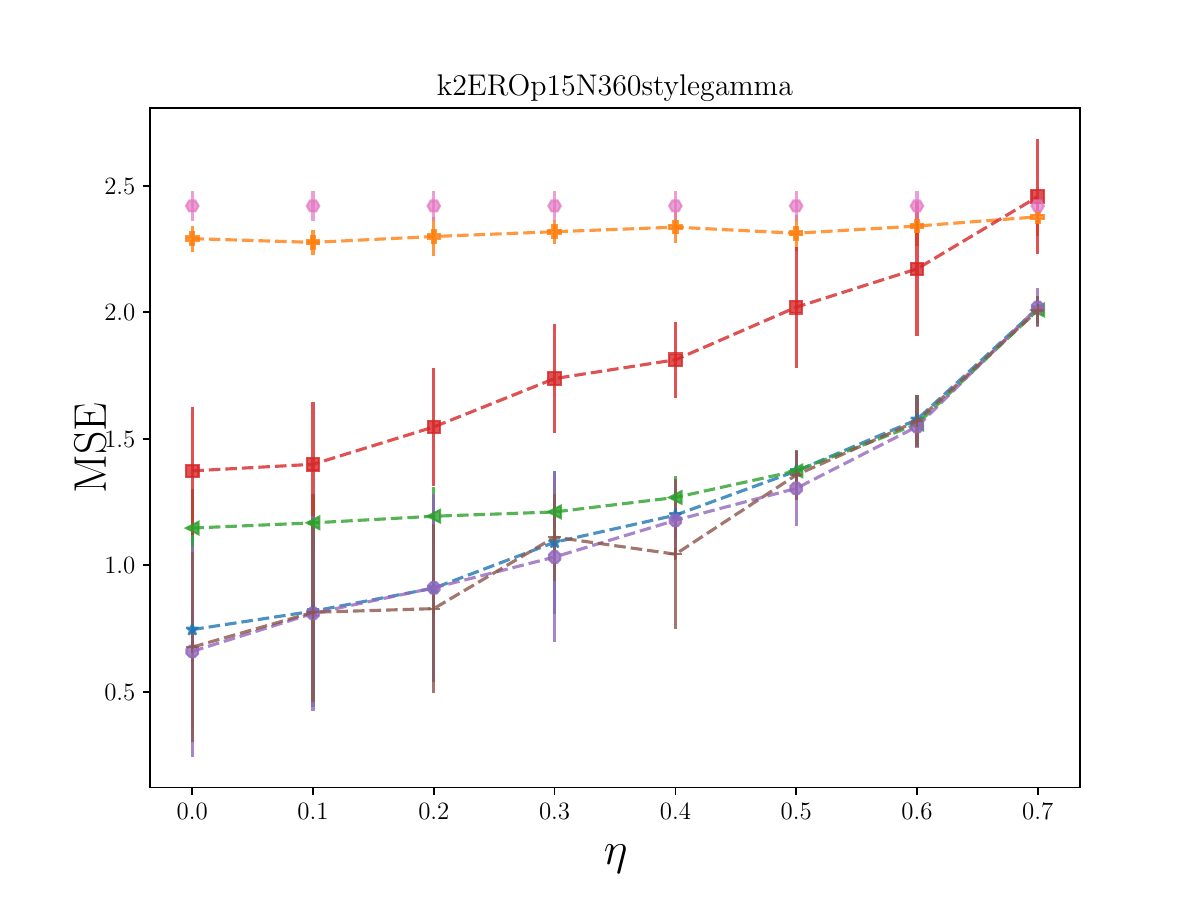}
    \caption{$\text{ERO}_1(p=0.15)$}
  \end{subfigure}
  \begin{subfigure}[ht]{0.245\linewidth}
    \centering
    \includegraphics[width=\linewidth,trim=0cm 0cm 0cm 1.8cm,clip]{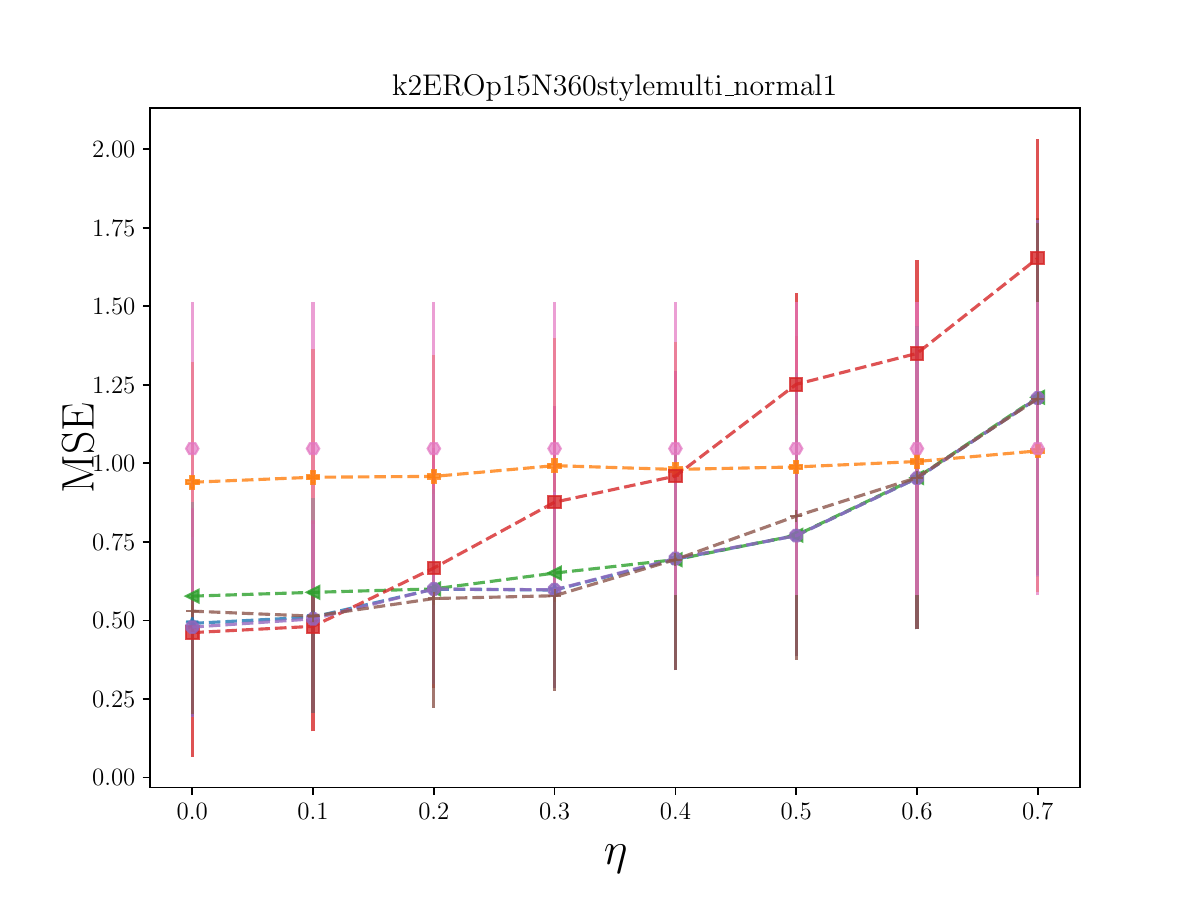}
    \caption{$\text{ERO}_2(p=0.15)$}
  \end{subfigure}
  \begin{subfigure}[ht]{0.245\linewidth}
    \centering
    \includegraphics[width=\linewidth,trim=0cm 0cm 0cm 1.8cm,clip]{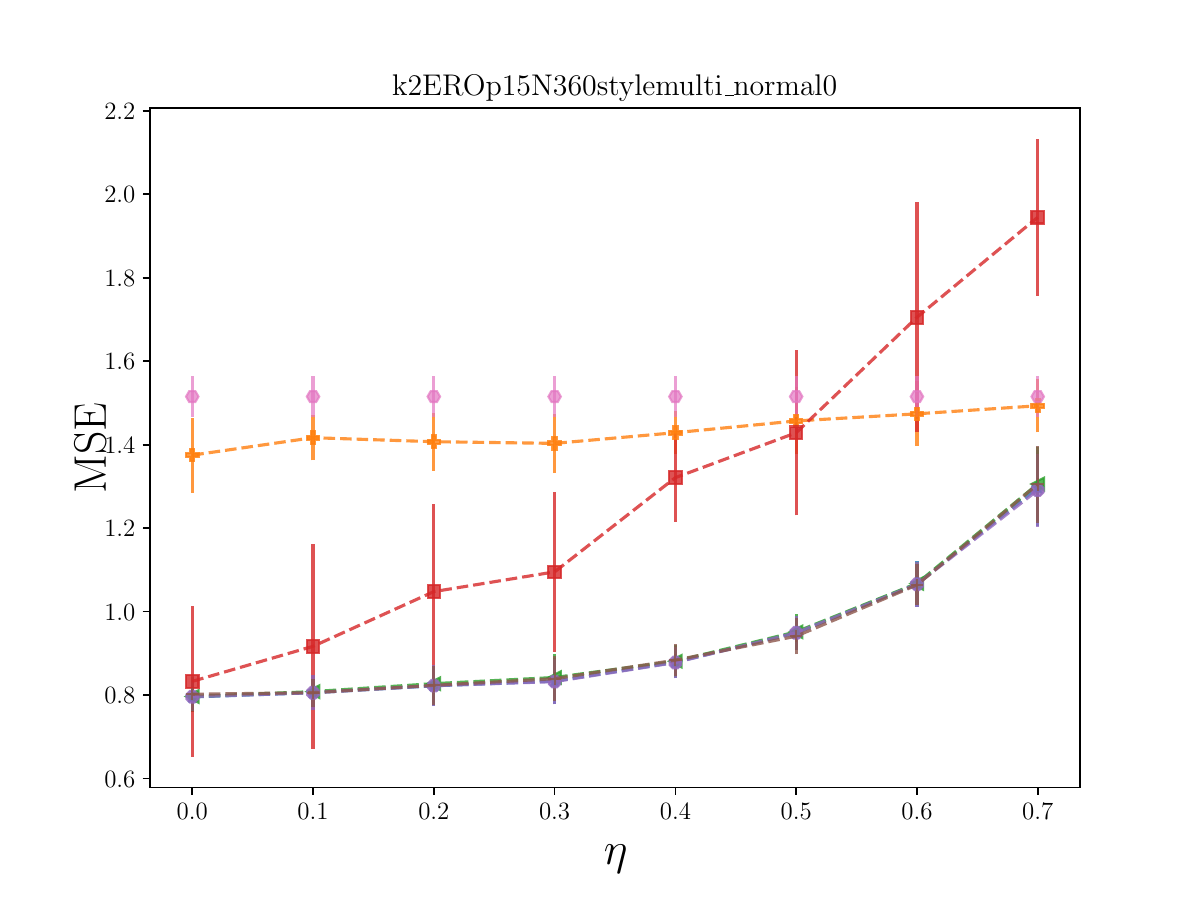}
    \caption{$\text{ERO}_3(p=0.15)$}
  \end{subfigure}
  \begin{subfigure}[ht]{0.245\linewidth}
    \centering
    \includegraphics[width=\linewidth,trim=0cm 0cm 0cm 1.8cm,clip]{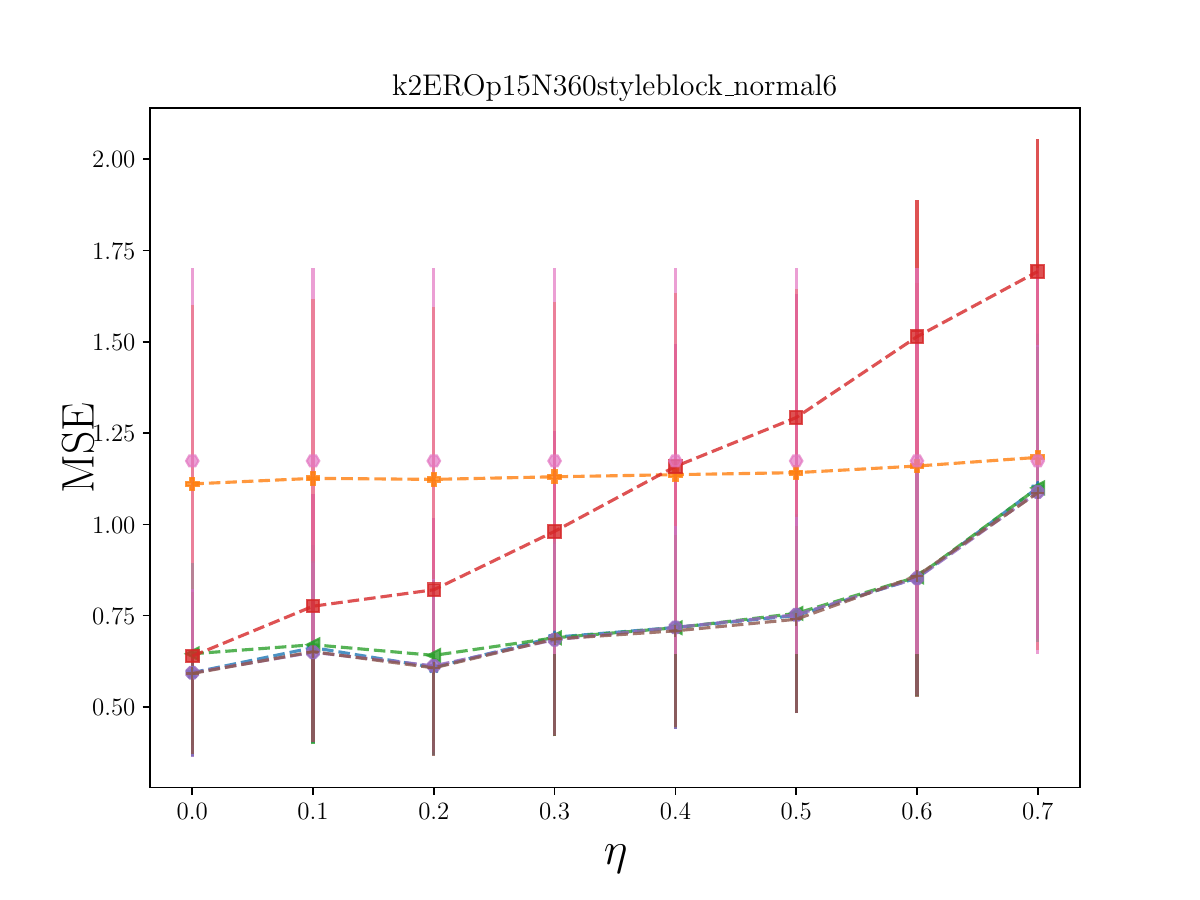}
    \caption{$\text{ERO}_4(p=0.15)$}
  \end{subfigure}
  %%%%%%%%%%%%%%%%%%%%%%%%%%%%%%%%%%%
\begin{subfigure}[ht]{\linewidth}
    \centering
    \includegraphics[width=1.0\linewidth,trim=0cm 0cm 0cm 0cm,clip]{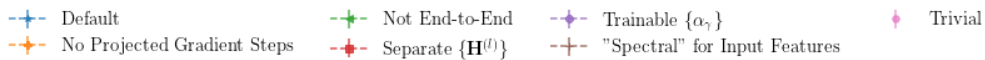}
  \end{subfigure}
    \caption{MSE performance comparison on GNNSync variants on $k-$synchronization with $k=2$ for ERO models. $p$ is the network density and $\eta$ is the noise level. Error bars indicate one standard deviation. 
    }
    \label{fig:ablation_k2_ERO}
\end{figure*}
\iffalse
\input{figures_tex/ablation_k2_BAO}
\input{figures_tex/ablation_k2_RGGO}

\input{figures_tex/ablation_k3_ERO}
\input{figures_tex/ablation_k3_BAO}
\input{figures_tex/ablation_k3_RGGO}

\input{figures_tex/ablation_k4_ERO}
\input{figures_tex/ablation_k4_BAO}
\input{figures_tex/ablation_k4_RGGO}
\fi

\begin{figure*}[!hbt]
    \centering
  \begin{subfigure}[ht]{0.245\linewidth}
    \centering
    \includegraphics[width=\linewidth,trim=0cm 0cm 0cm 1.8cm,clip]{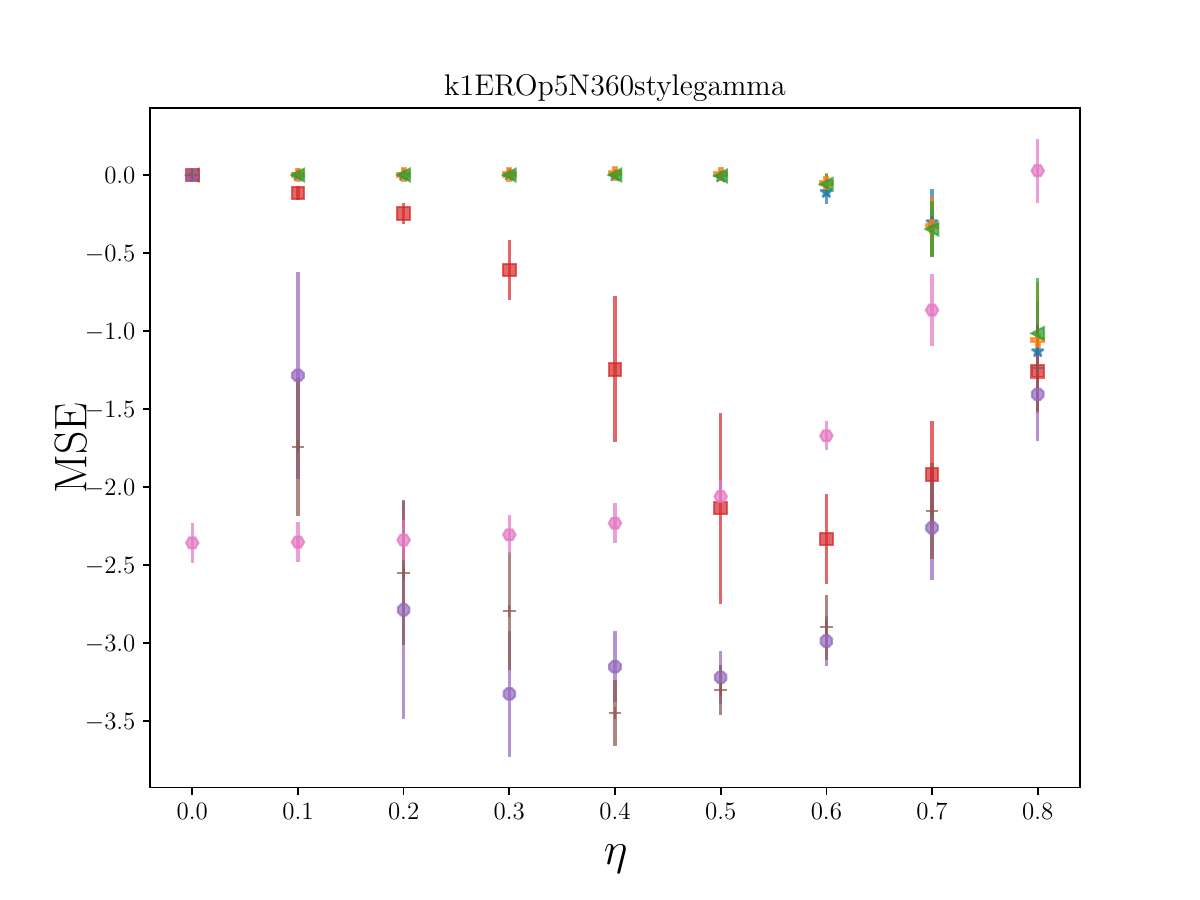}
    \caption{$\text{ERO}_1(p=0.05)$}
  \end{subfigure}
  \begin{subfigure}[ht]{0.245\linewidth}
    \centering
    \includegraphics[width=\linewidth,trim=0cm 0cm 0cm 1.8cm,clip]{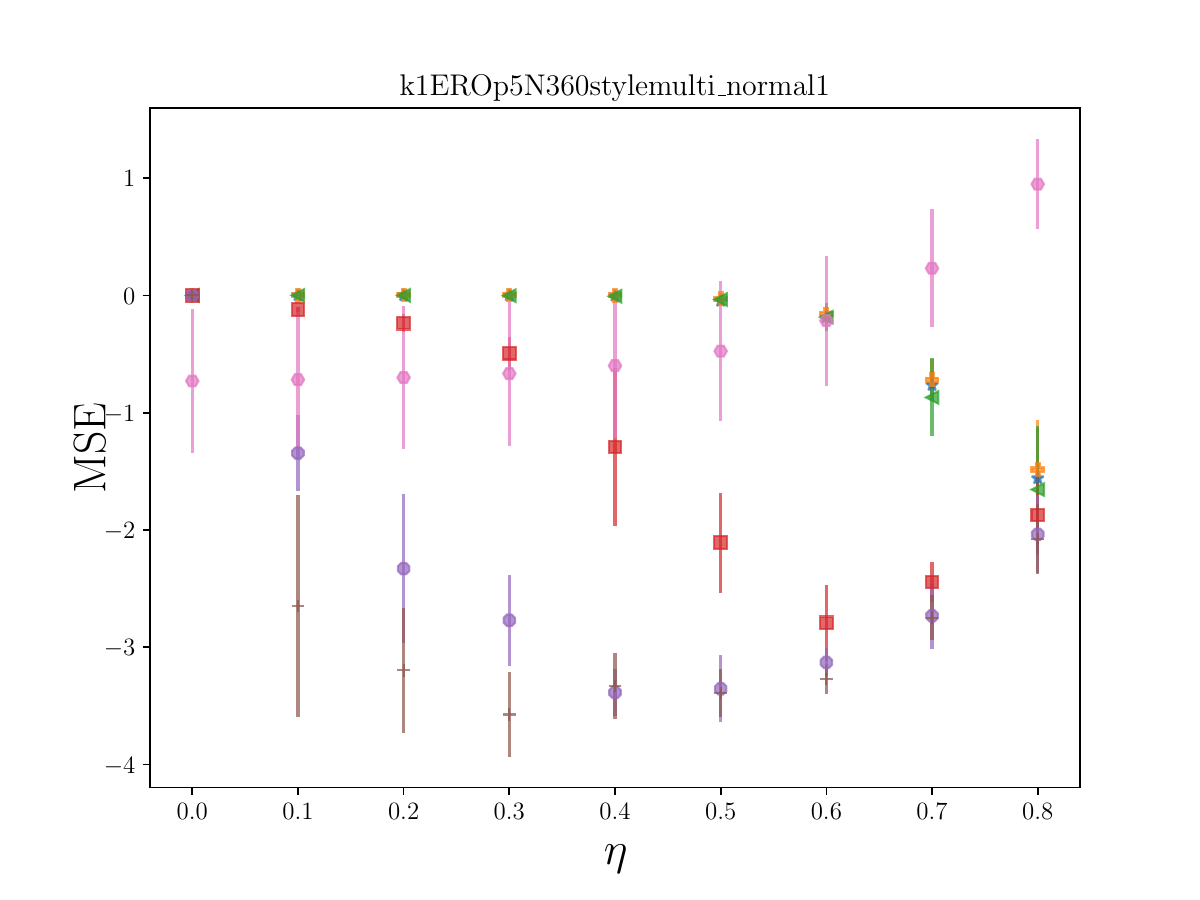}
    \caption{$\text{ERO}_2(p=0.05)$}
  \end{subfigure}
  \begin{subfigure}[ht]{0.245\linewidth}
    \centering
    \includegraphics[width=\linewidth,trim=0cm 0cm 0cm 1.8cm,clip]{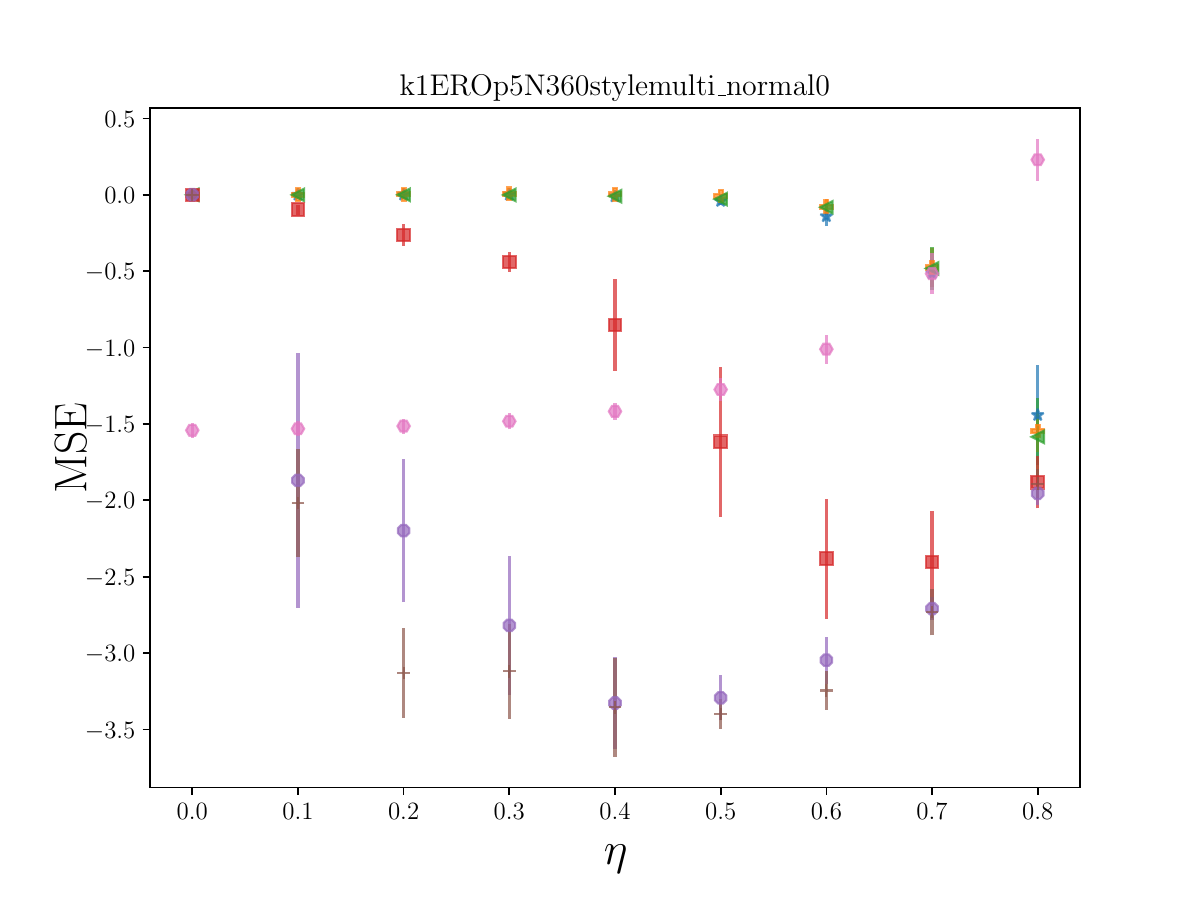}
    \caption{$\text{ERO}_3(p=0.05)$}
  \end{subfigure}
  \begin{subfigure}[ht]{0.245\linewidth}
    \centering
    \includegraphics[width=\linewidth,trim=0cm 0cm 0cm 1.8cm,clip]{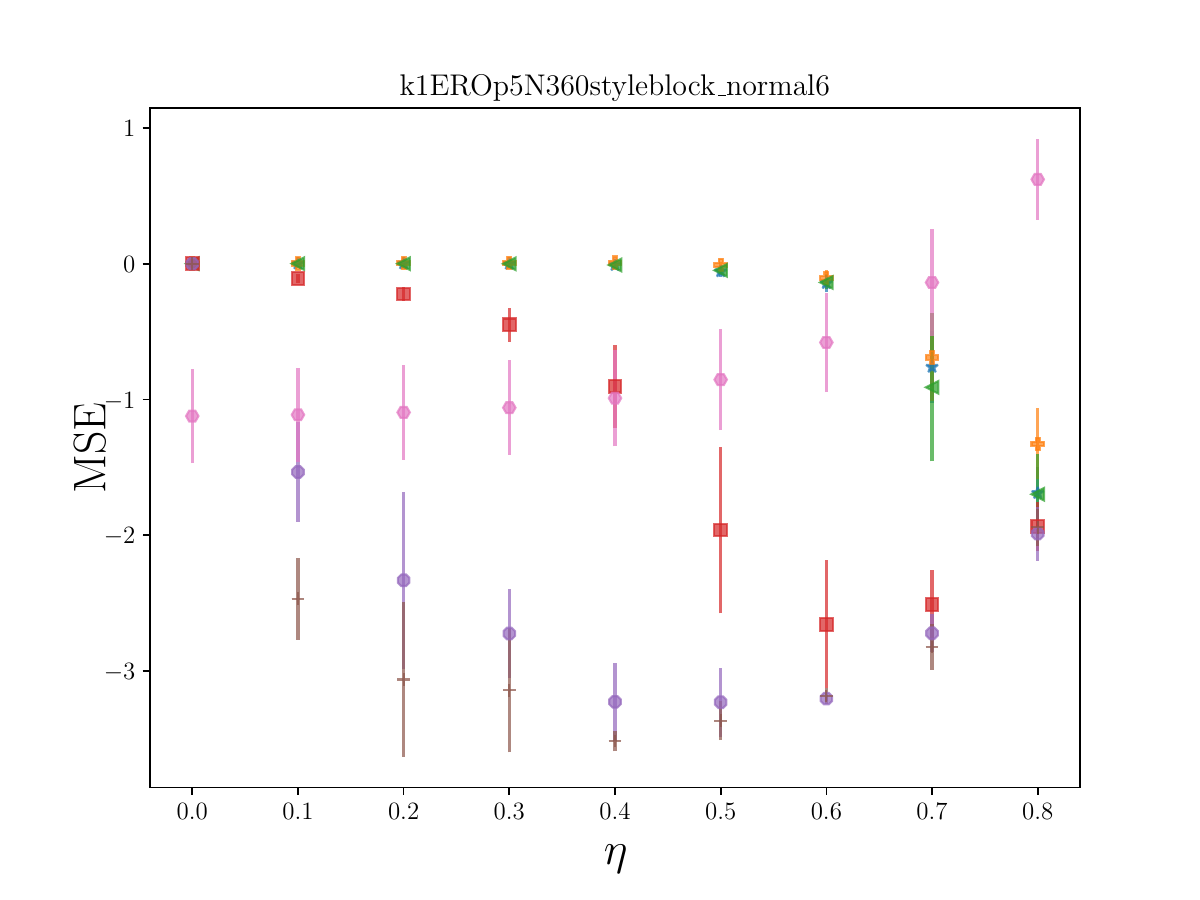}
    \caption{$\text{ERO}_4(p=0.05)$}
  \end{subfigure}
  %%%%%%%%%%%%%%%%%%%%%%%%%%%%
  \begin{subfigure}[ht]{0.245\linewidth}
    \centering
    \includegraphics[width=\linewidth,trim=0cm 0cm 0cm 1.8cm,clip]{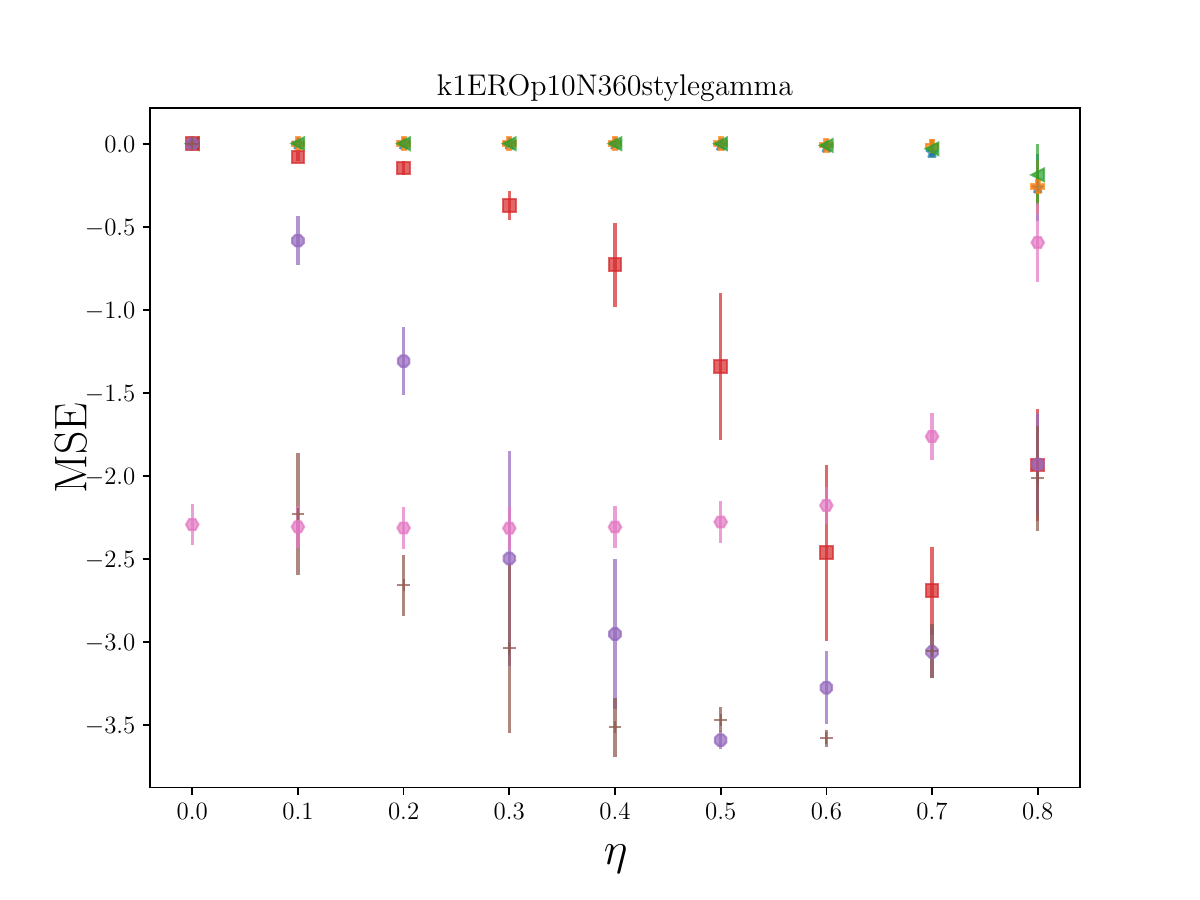}
    \caption{$\text{ERO}_1(p=0.1)$}
  \end{subfigure}
  \begin{subfigure}[ht]{0.245\linewidth}
    \centering
    \includegraphics[width=\linewidth,trim=0cm 0cm 0cm 1.8cm,clip]{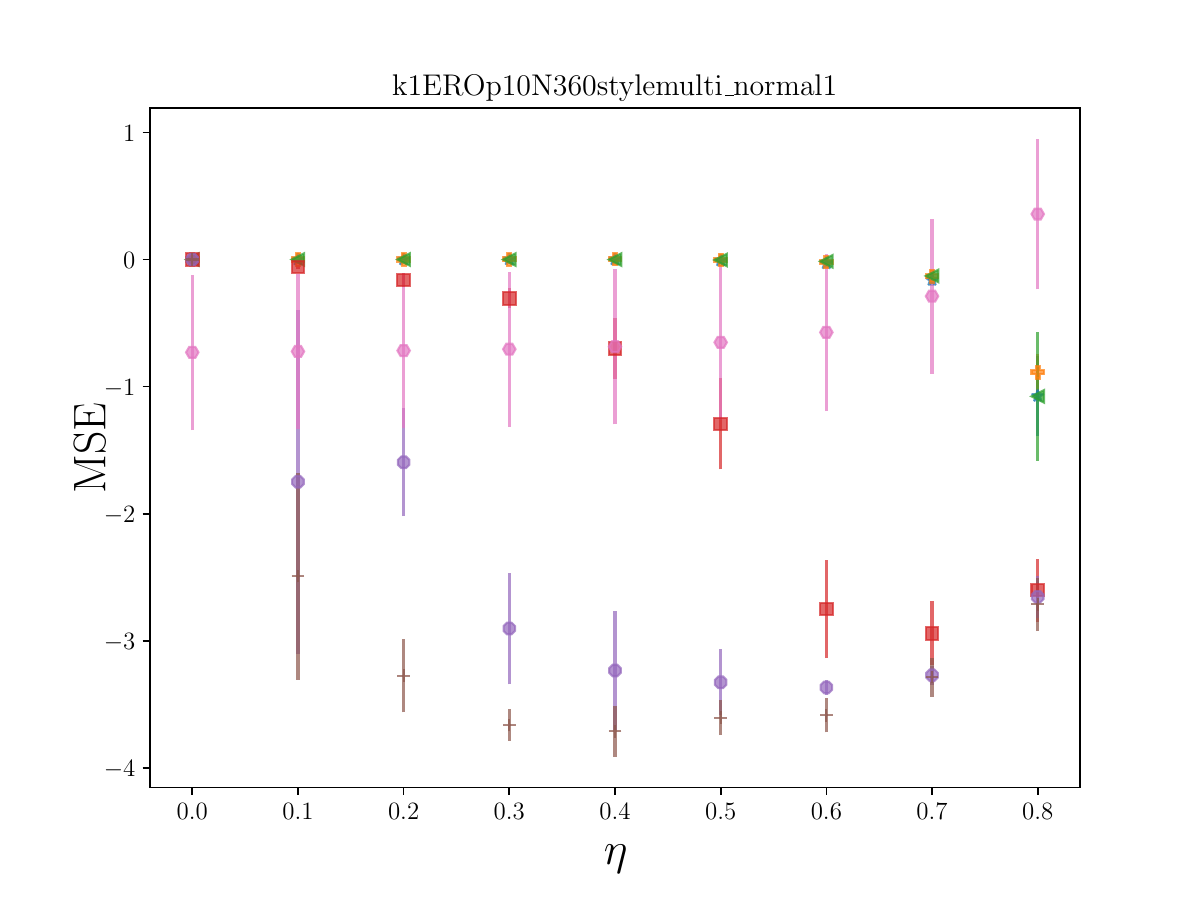}
    \caption{$\text{ERO}_2(p=0.1)$}
  \end{subfigure}
  \begin{subfigure}[ht]{0.245\linewidth}
    \centering
    \includegraphics[width=\linewidth,trim=0cm 0cm 0cm 1.8cm,clip]{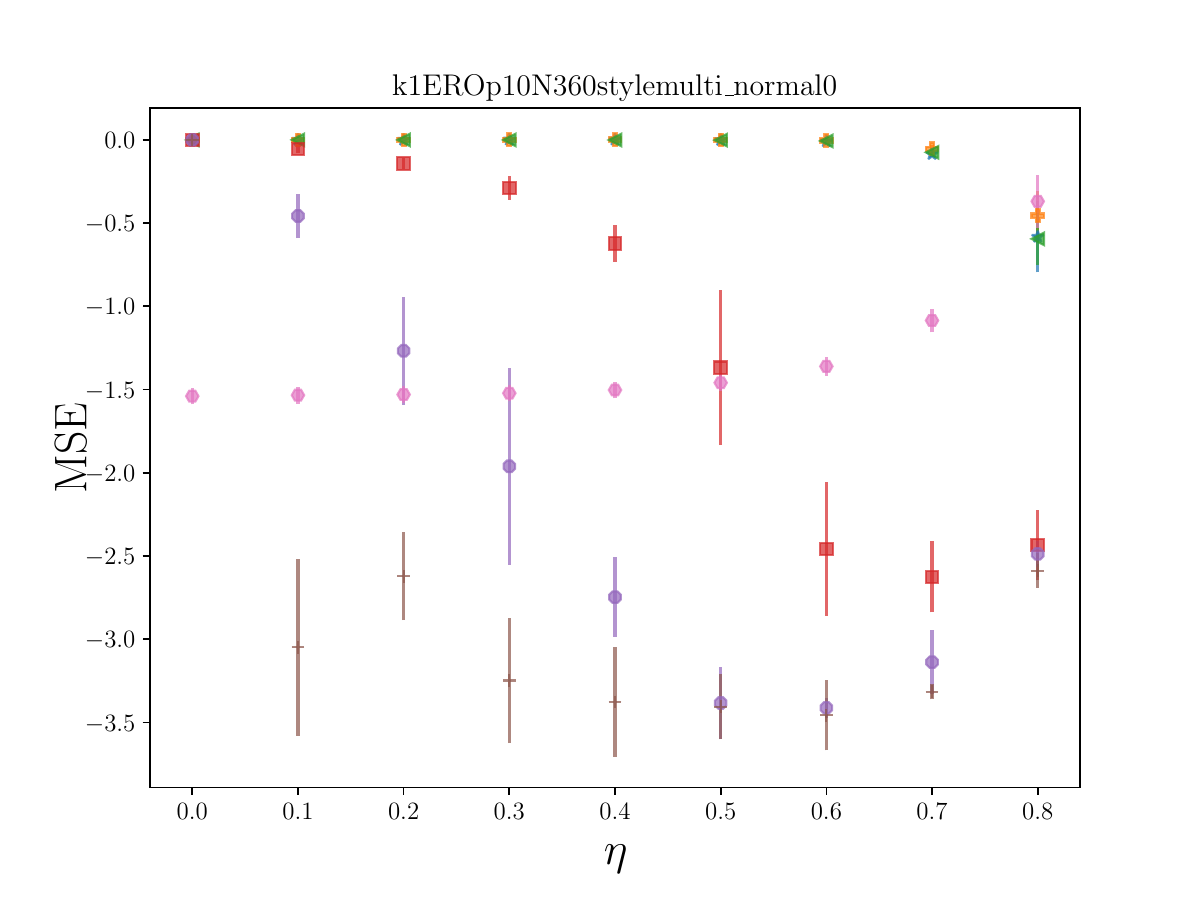}
    \caption{$\text{ERO}_3(p=0.1)$}
  \end{subfigure}
  \begin{subfigure}[ht]{0.245\linewidth}
    \centering
    \includegraphics[width=\linewidth,trim=0cm 0cm 0cm 1.8cm,clip]{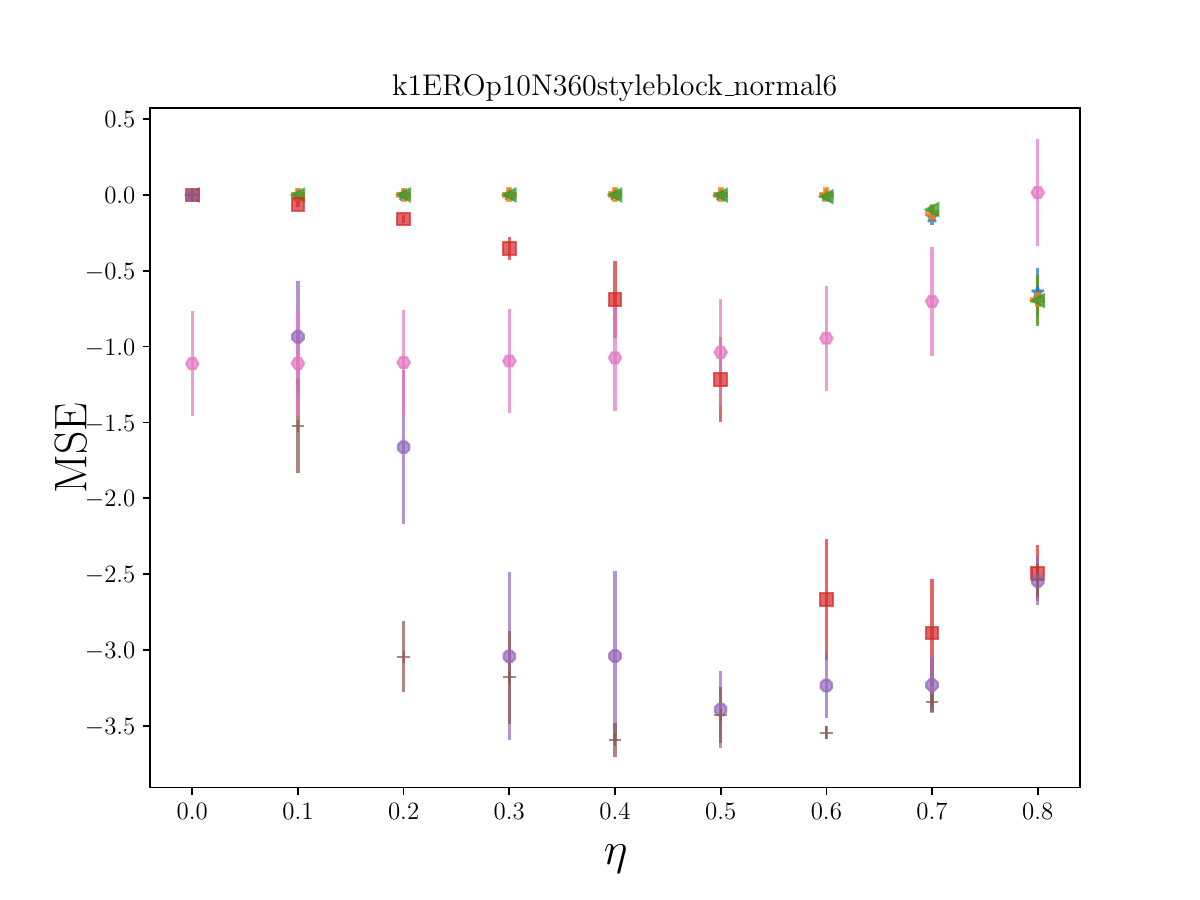}
    \caption{$\text{ERO}_4(p=0.1)$}
  \end{subfigure}
  %%%%%%%%%%%%%%%%%%%%%%%%%%%%%%%%
  \begin{subfigure}[ht]{0.245\linewidth}
    \centering
    \includegraphics[width=\linewidth,trim=0cm 0cm 0cm 1.8cm,clip]{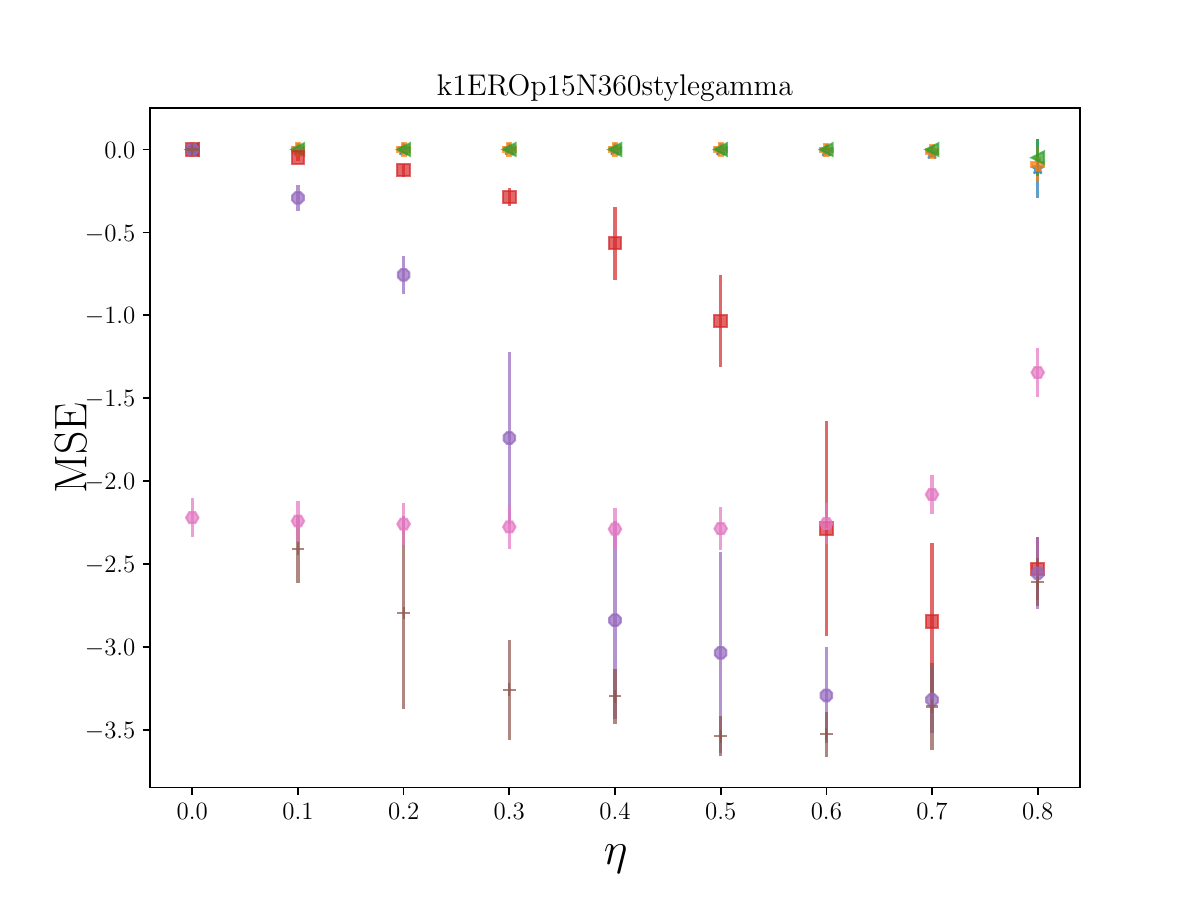}
    \caption{$\text{ERO}_1(p=0.15)$}
  \end{subfigure}
  \begin{subfigure}[ht]{0.245\linewidth}
    \centering
    \includegraphics[width=\linewidth,trim=0cm 0cm 0cm 1.8cm,clip]{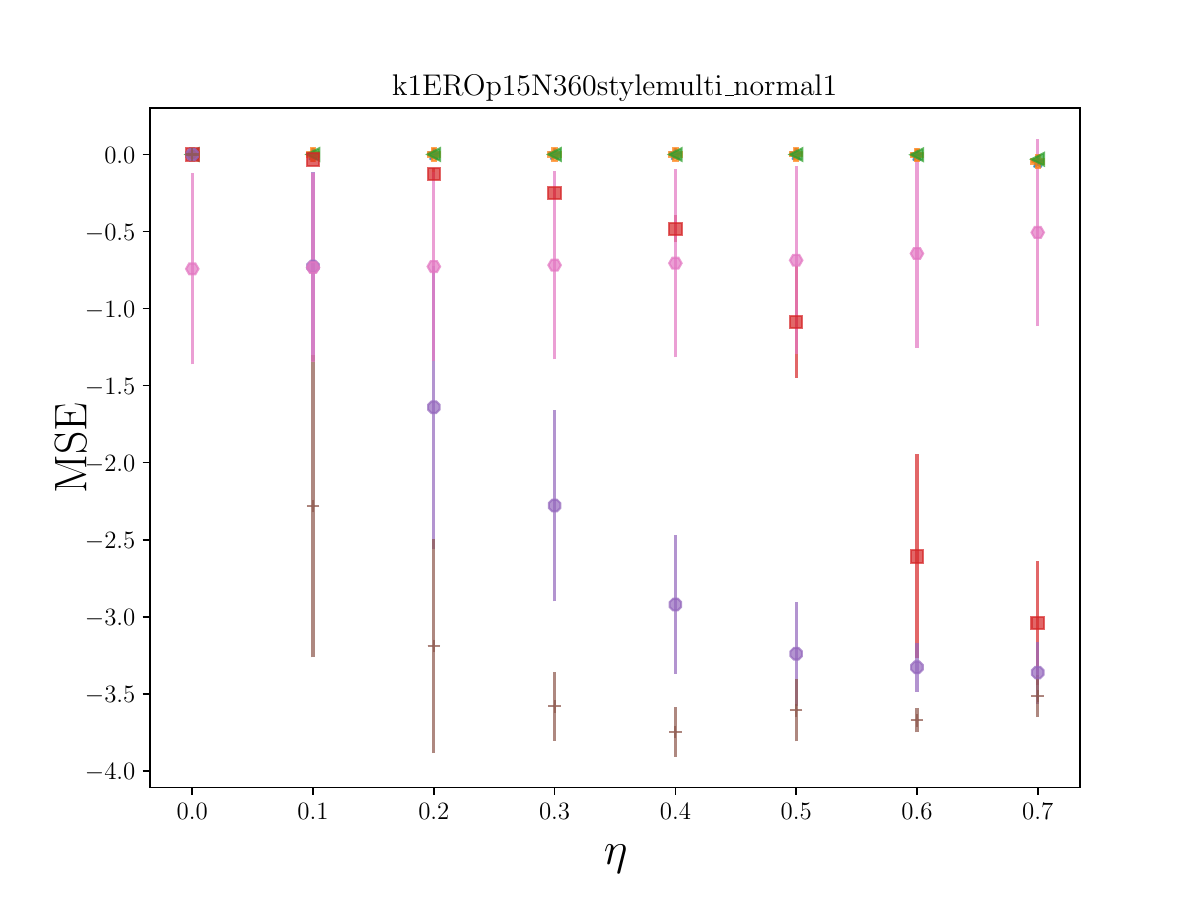}
    \caption{$\text{ERO}_2(p=0.15)$}
  \end{subfigure}
  \begin{subfigure}[ht]{0.245\linewidth}
    \centering
    \includegraphics[width=\linewidth,trim=0cm 0cm 0cm 1.8cm,clip]{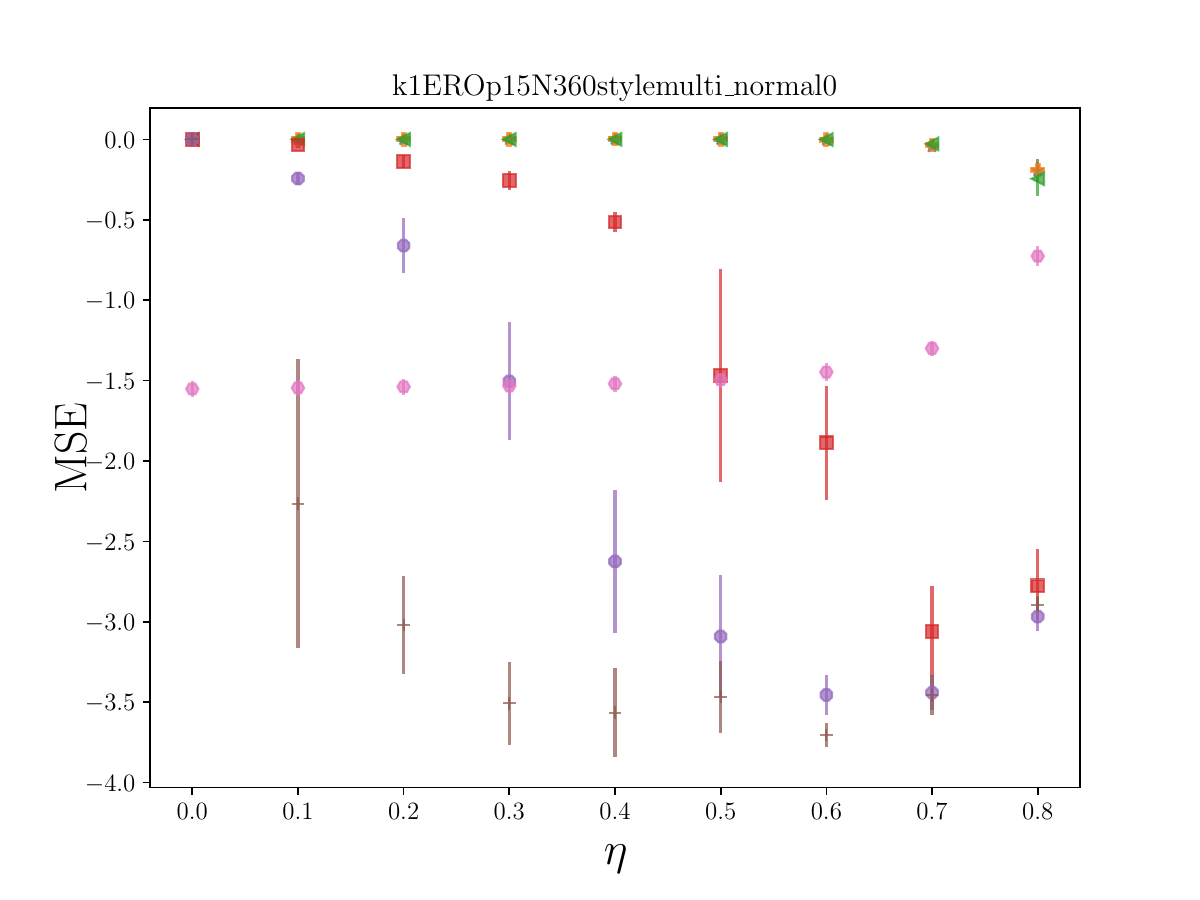}
    \caption{$\text{ERO}_3(p=0.15)$}
  \end{subfigure}
  \begin{subfigure}[ht]{0.245\linewidth}
    \centering
    \includegraphics[width=\linewidth,trim=0cm 0cm 0cm 1.8cm,clip]{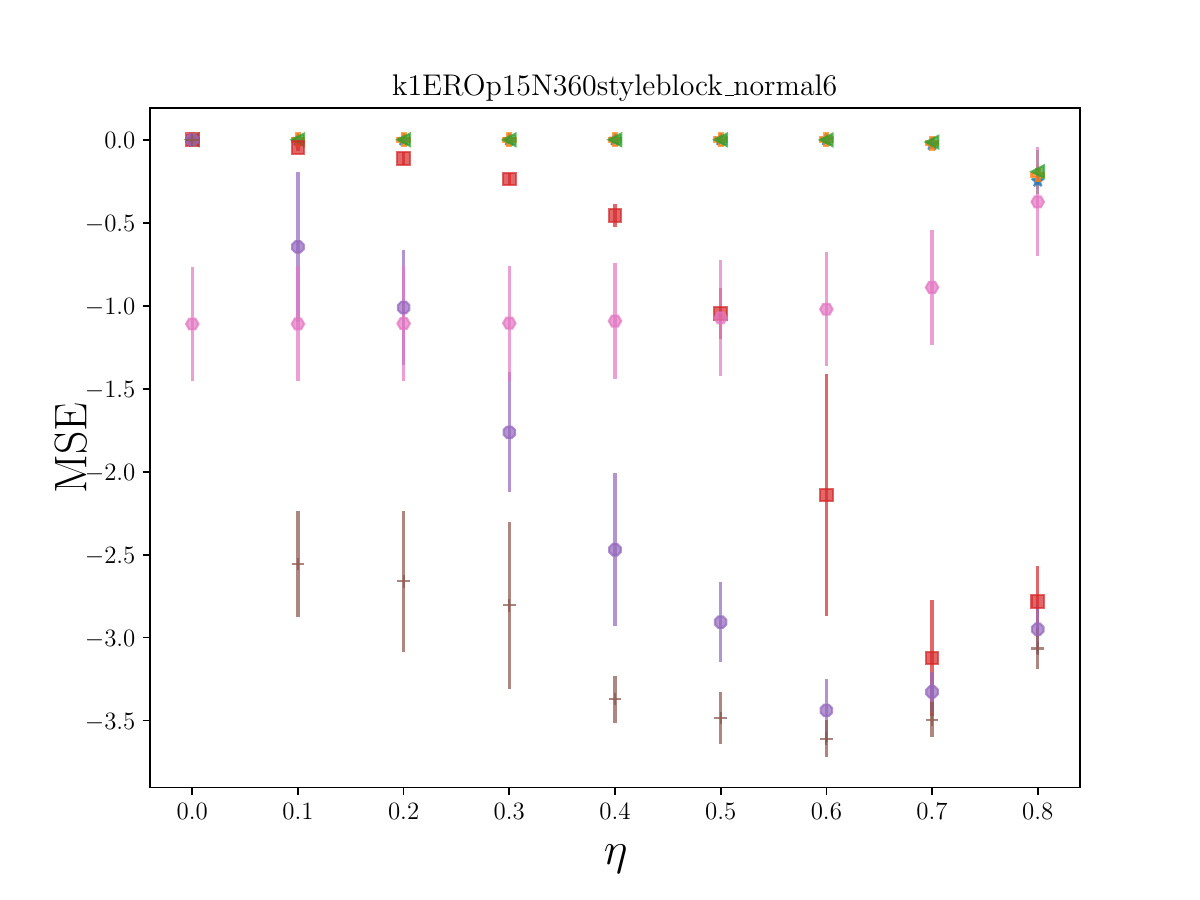}
    \caption{$\text{ERO}_4(p=0.15)$}
  \end{subfigure}
  %%%%%%%%%%%%%%%%%%%%%%%%%%%%%%%%%%%
\begin{subfigure}[ht]{\linewidth}
    \centering
    \includegraphics[width=1.0\linewidth,trim=0cm 0cm 0cm 0cm,clip]{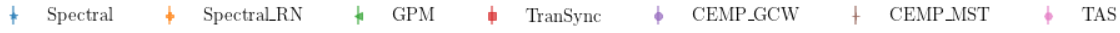}
  \end{subfigure}
    \caption{MSE performance improvement on GNNSync over variants on angular synchronization ($k=1$) for ERO models. $p$ is the network density and $\eta$ is the noise level. Error bars indicate one standard deviation. 
    }
    \label{fig:improvement_k1_ERO}
\end{figure*}

\begin{figure*}[!hbt]
    \centering
  \begin{subfigure}[ht]{0.245\linewidth}
    \centering
    \includegraphics[width=\linewidth,trim=0cm 0cm 0cm 1.8cm,clip]{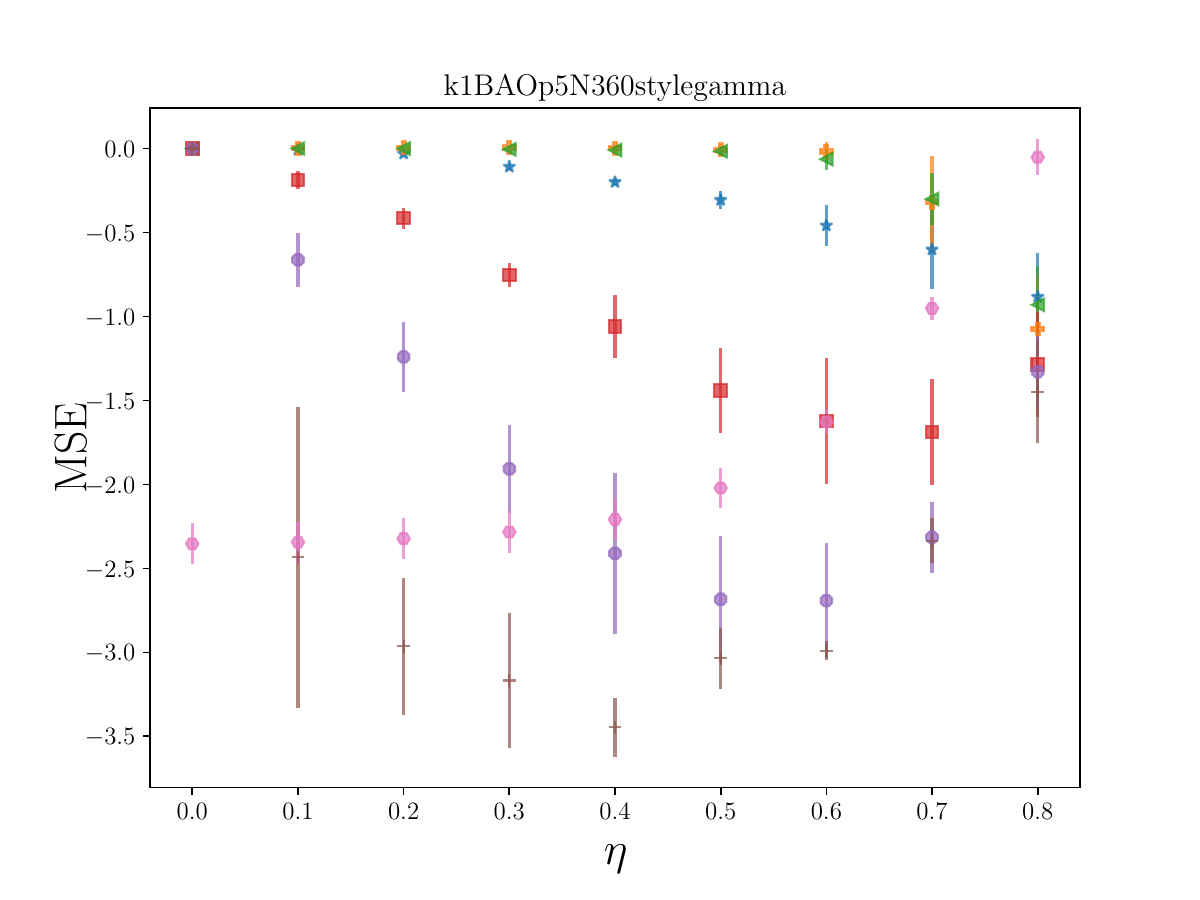}
    \caption{$\text{BAO}_1(p=0.05)$}
  \end{subfigure}
  \begin{subfigure}[ht]{0.245\linewidth}
    \centering
    \includegraphics[width=\linewidth,trim=0cm 0cm 0cm 1.8cm,clip]{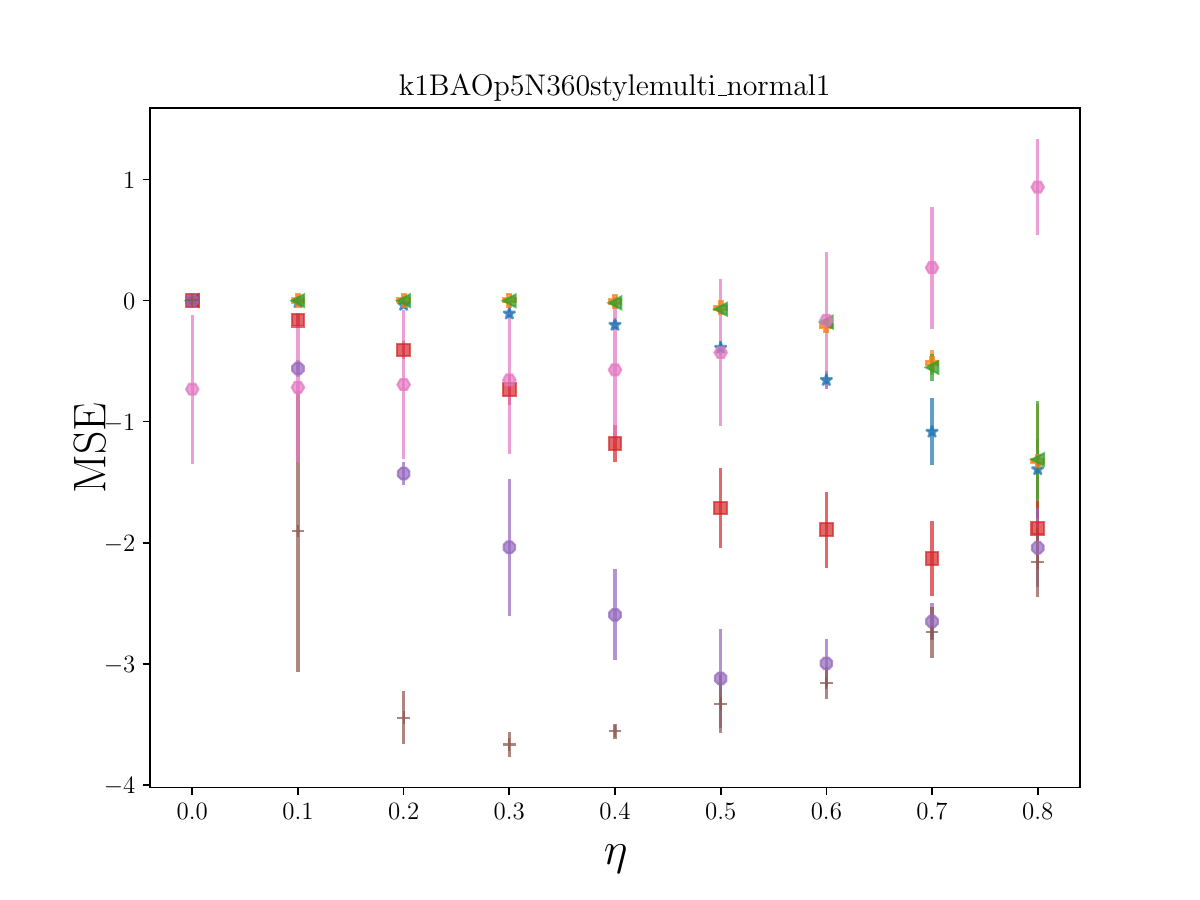}
    \caption{$\text{BAO}_2(p=0.05)$}
  \end{subfigure}
  \begin{subfigure}[ht]{0.245\linewidth}
    \centering
    \includegraphics[width=\linewidth,trim=0cm 0cm 0cm 1.8cm,clip]{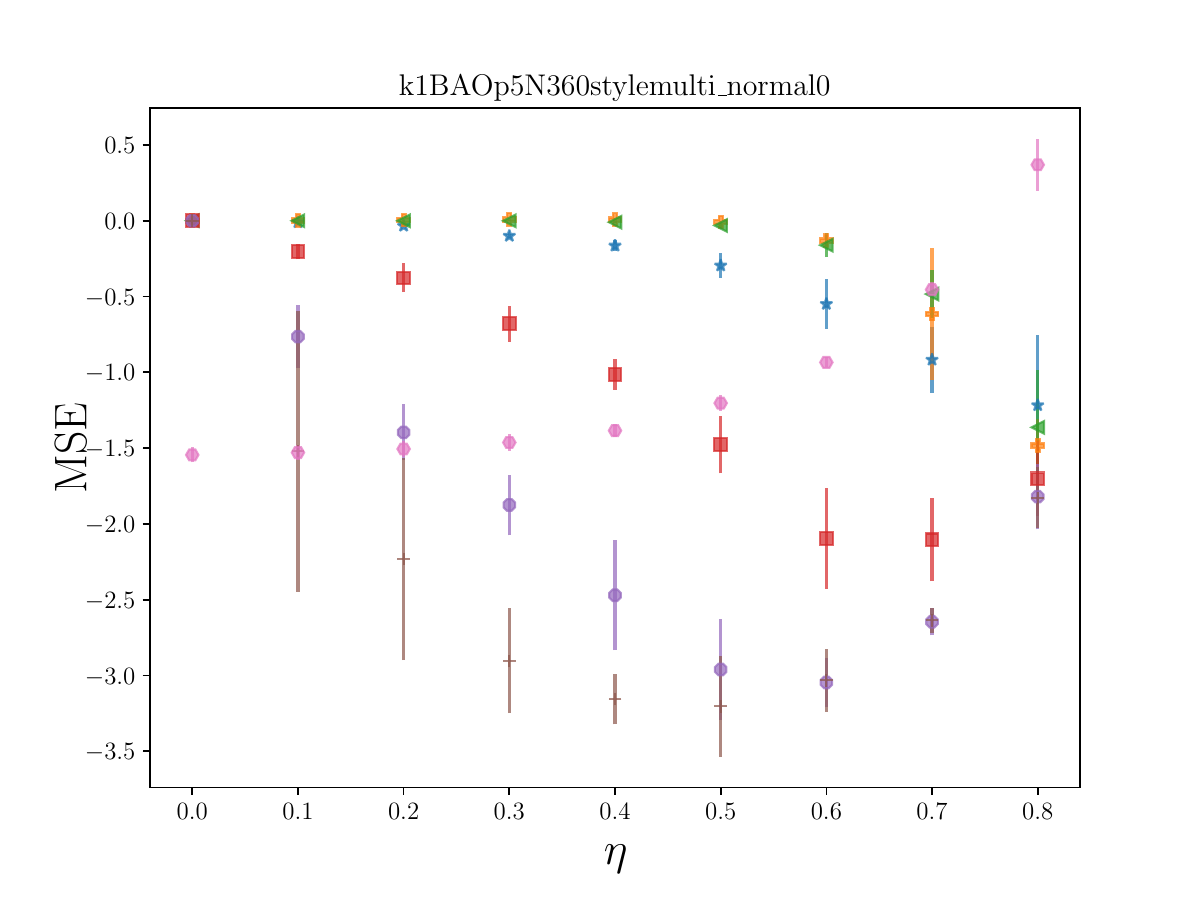}
    \caption{$\text{BAO}_3(p=0.05)$}
  \end{subfigure}
  \begin{subfigure}[ht]{0.245\linewidth}
    \centering
    \includegraphics[width=\linewidth,trim=0cm 0cm 0cm 1.8cm,clip]{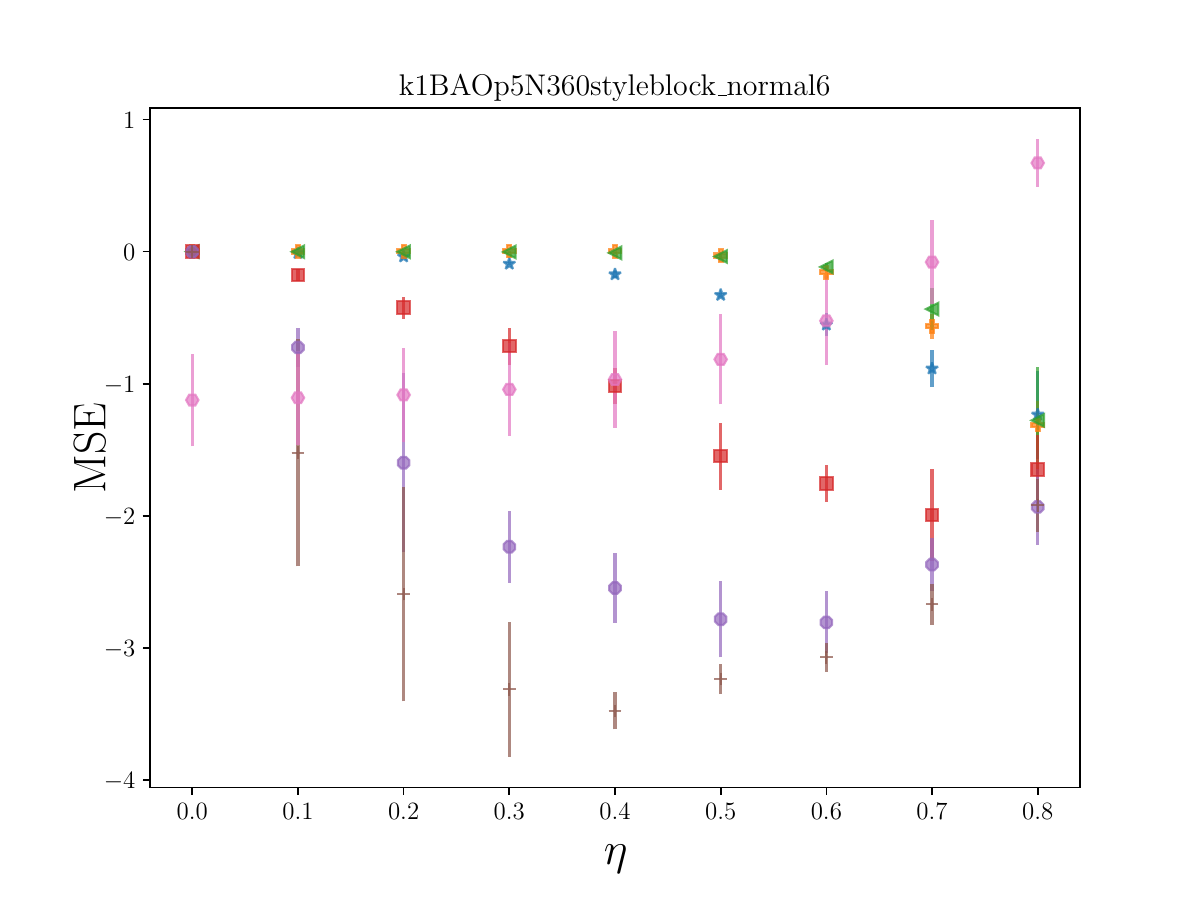}
    \caption{$\text{BAO}_4(p=0.05)$}
  \end{subfigure}
  %%%%%%%%%%%%%%%%%%%%%%%%%%%%
  \begin{subfigure}[ht]{0.245\linewidth}
    \centering
    \includegraphics[width=\linewidth,trim=0cm 0cm 0cm 1.8cm,clip]{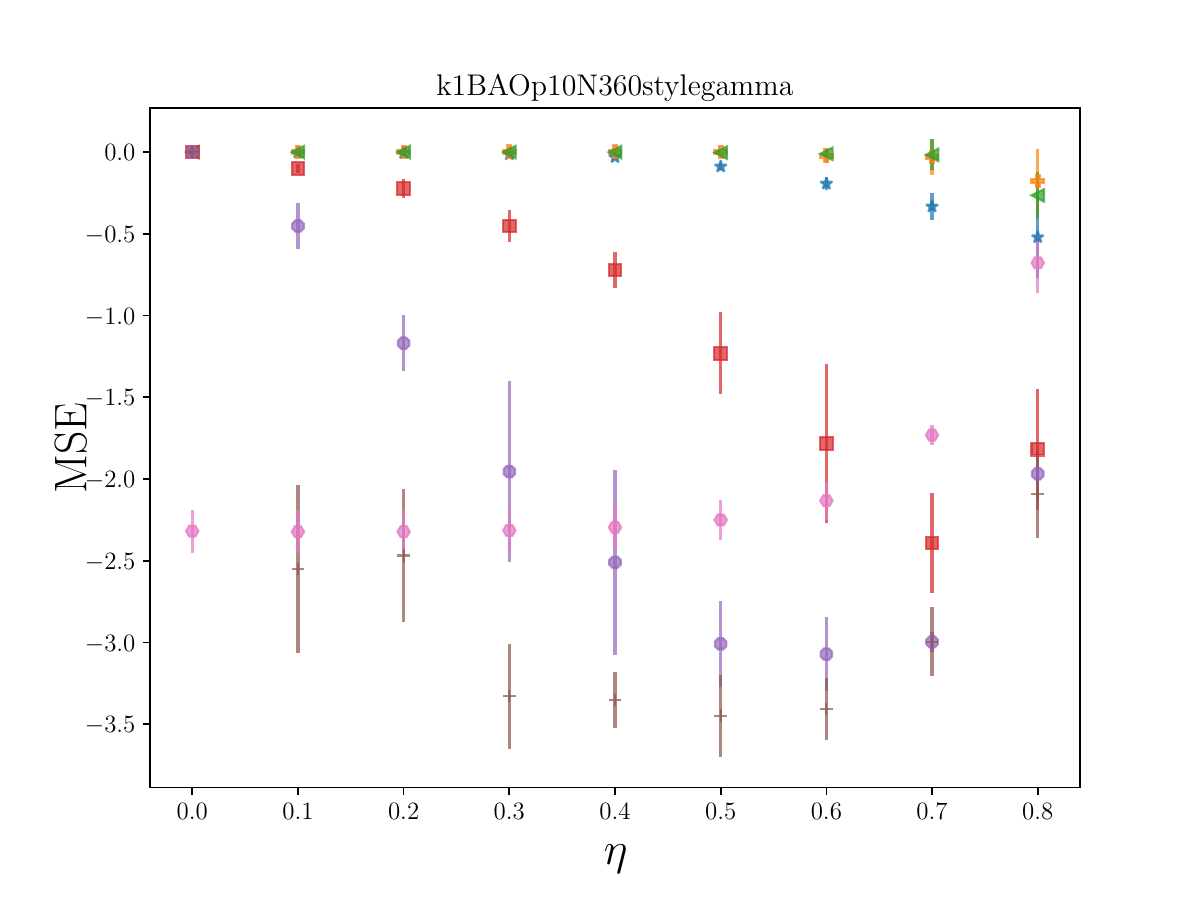}
    \caption{$\text{BAO}_1(p=0.1)$}
  \end{subfigure}
  \begin{subfigure}[ht]{0.245\linewidth}
    \centering
    \includegraphics[width=\linewidth,trim=0cm 0cm 0cm 1.8cm,clip]{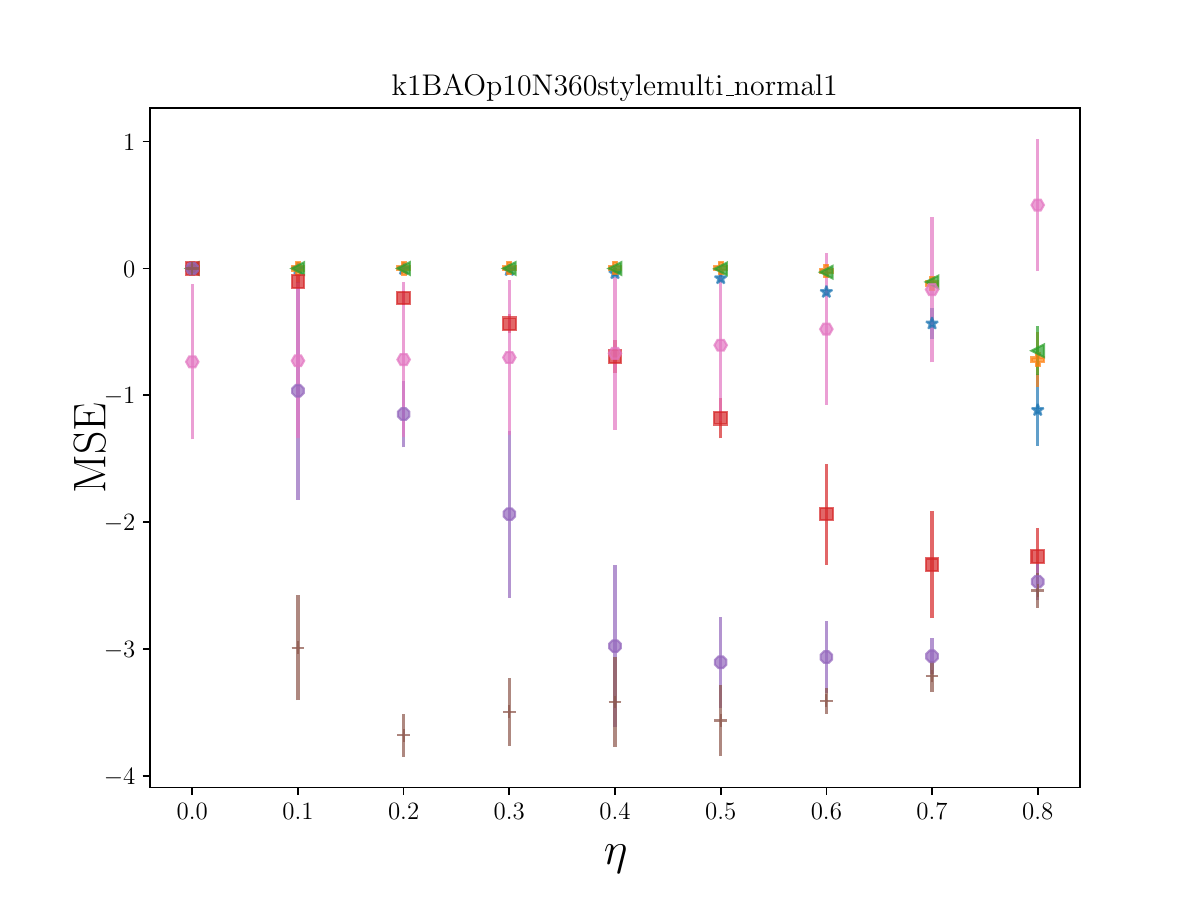}
    \caption{$\text{BAO}_2(p=0.1)$}
  \end{subfigure}
  \begin{subfigure}[ht]{0.245\linewidth}
    \centering
    \includegraphics[width=\linewidth,trim=0cm 0cm 0cm 1.8cm,clip]{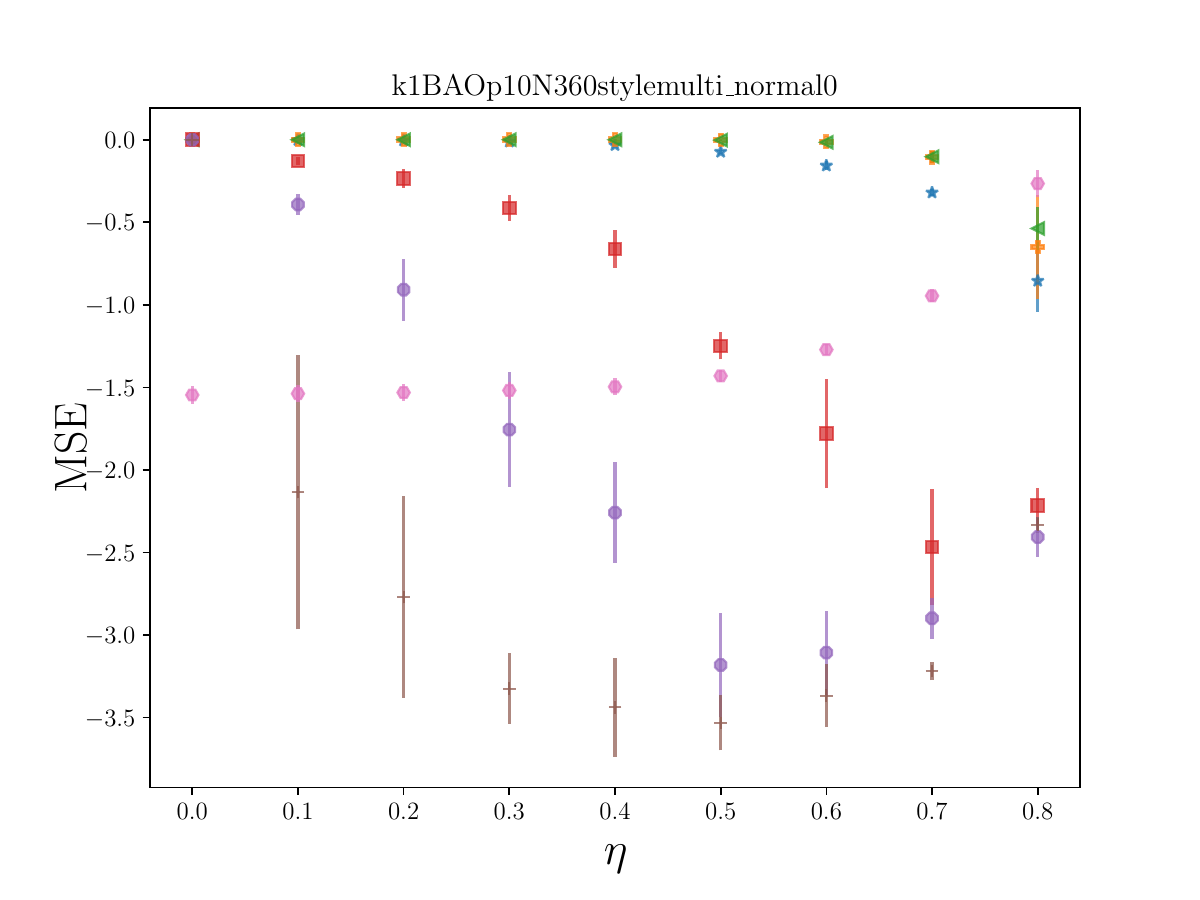}
    \caption{$\text{BAO}_3(p=0.1)$}
  \end{subfigure}
  \begin{subfigure}[ht]{0.245\linewidth}
    \centering
    \includegraphics[width=\linewidth,trim=0cm 0cm 0cm 1.8cm,clip]{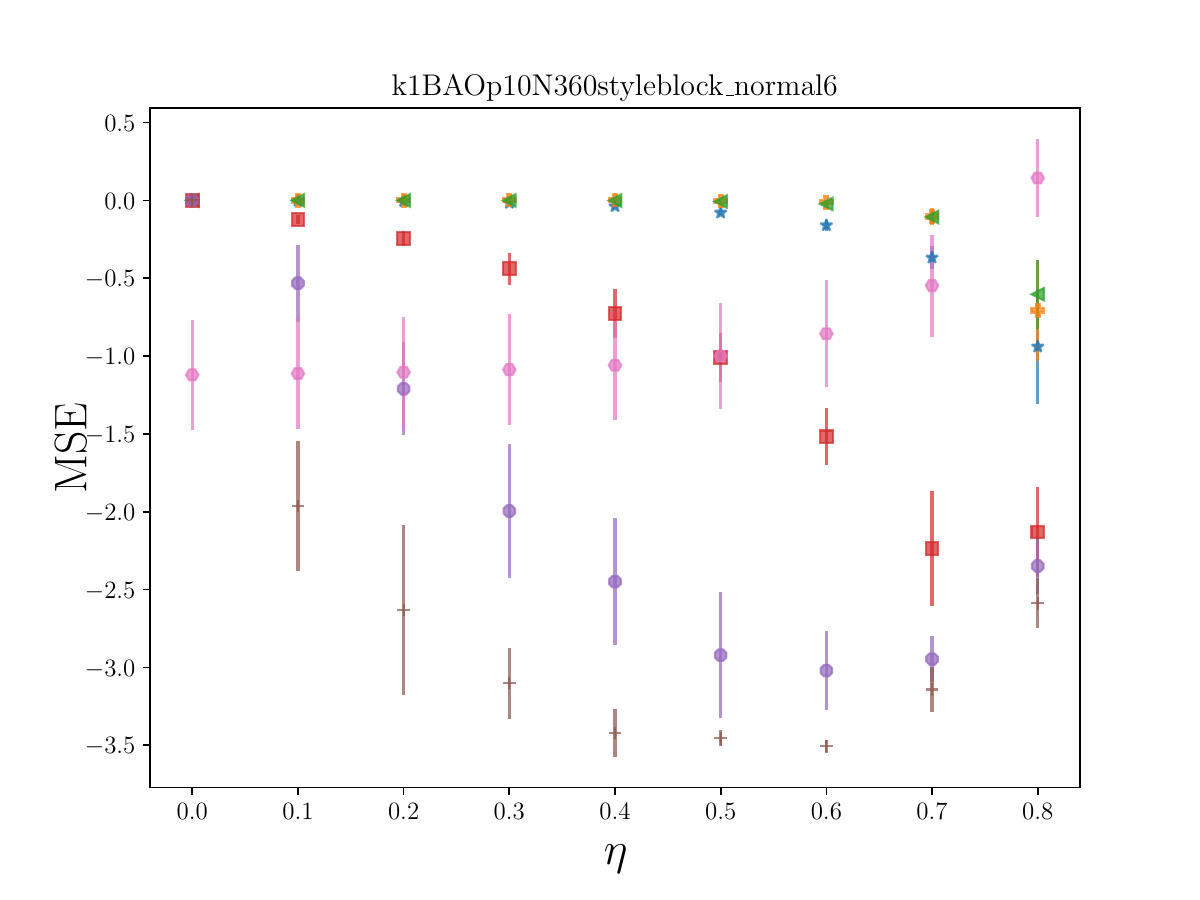}
    \caption{$\text{BAO}_4(p=0.1)$}
  \end{subfigure}
  %%%%%%%%%%%%%%%%%%%%%%%%%%%%%%%%
  \begin{subfigure}[ht]{0.245\linewidth}
    \centering
    \includegraphics[width=\linewidth,trim=0cm 0cm 0cm 1.8cm,clip]{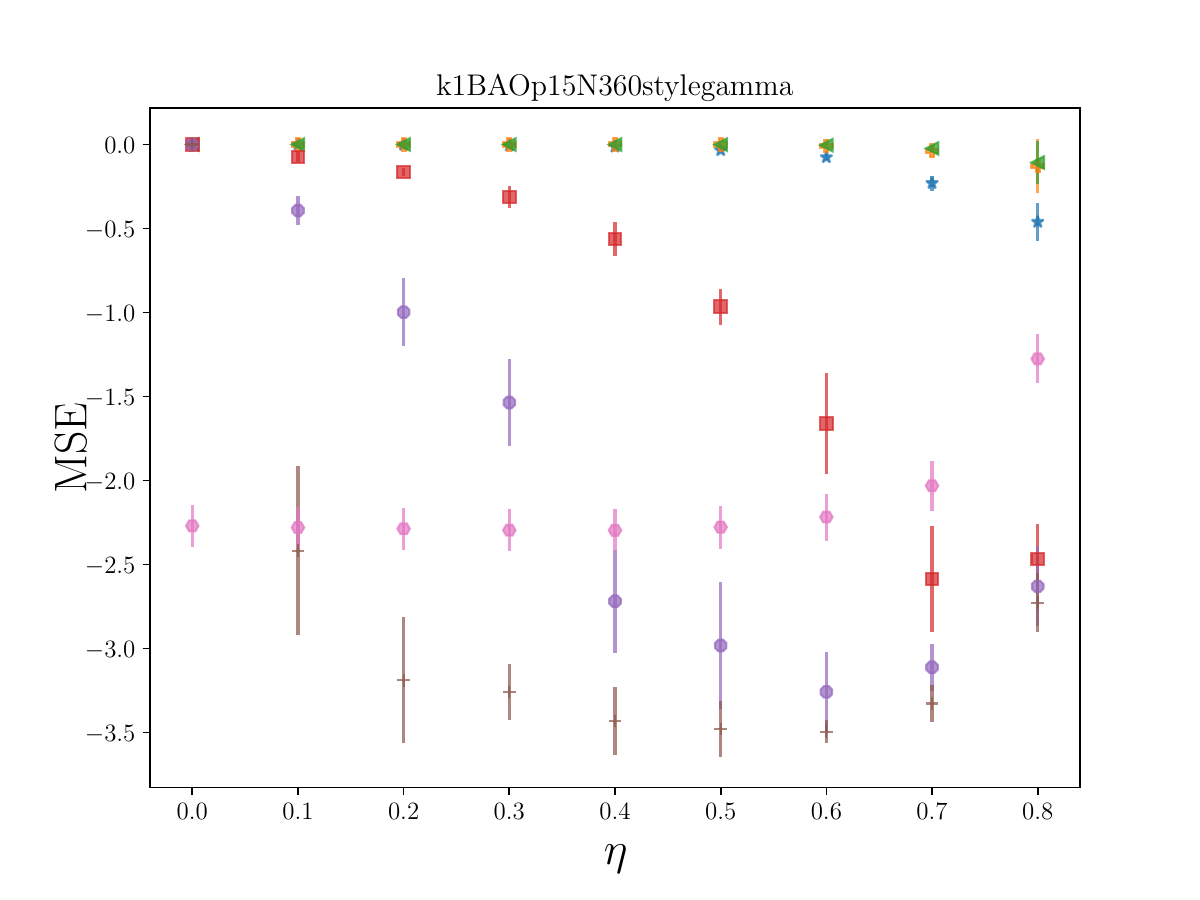}
    \caption{$\text{BAO}_1(p=0.15)$}
  \end{subfigure}
  \begin{subfigure}[ht]{0.245\linewidth}
    \centering
    \includegraphics[width=\linewidth,trim=0cm 0cm 0cm 1.8cm,clip]{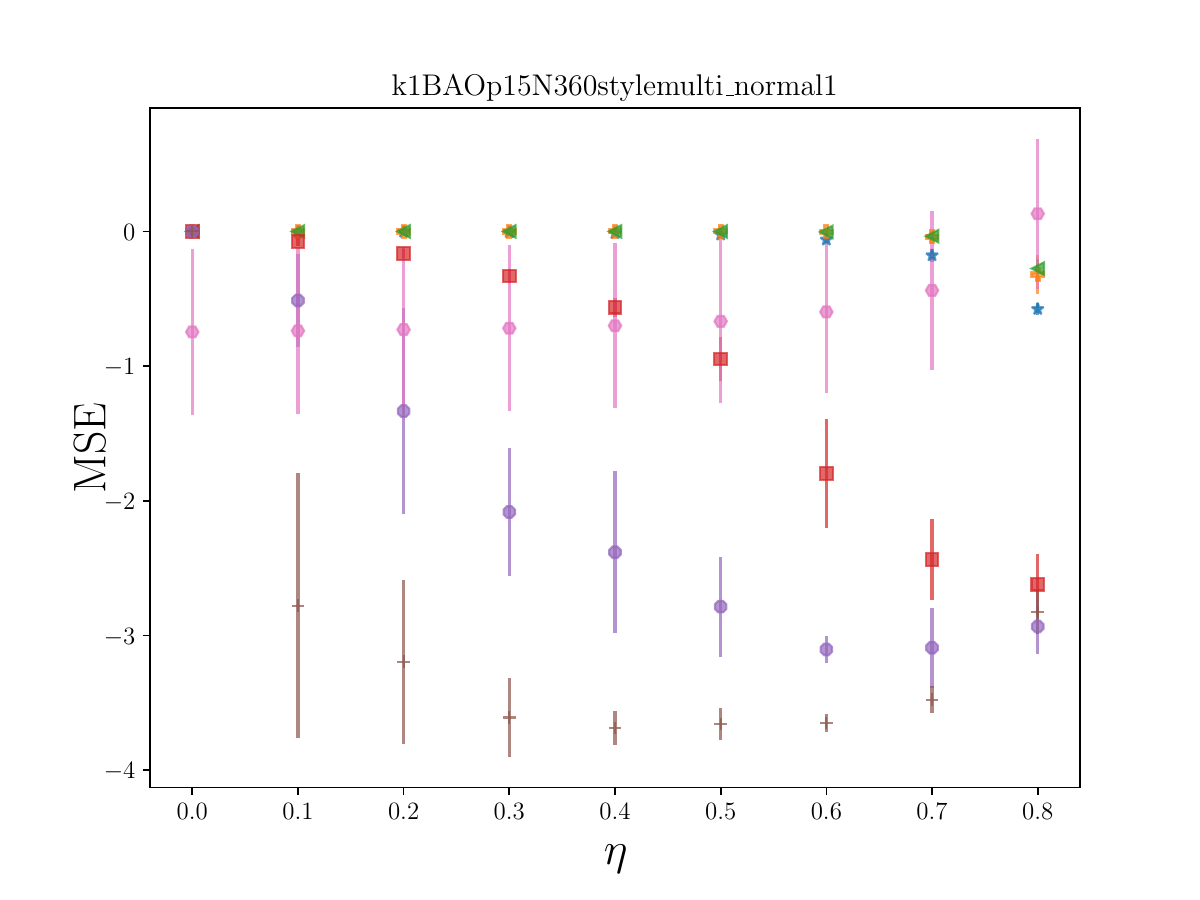}
    \caption{$\text{BAO}_2(p=0.15)$}
  \end{subfigure}
  \begin{subfigure}[ht]{0.245\linewidth}
    \centering
    \includegraphics[width=\linewidth,trim=0cm 0cm 0cm 1.8cm,clip]{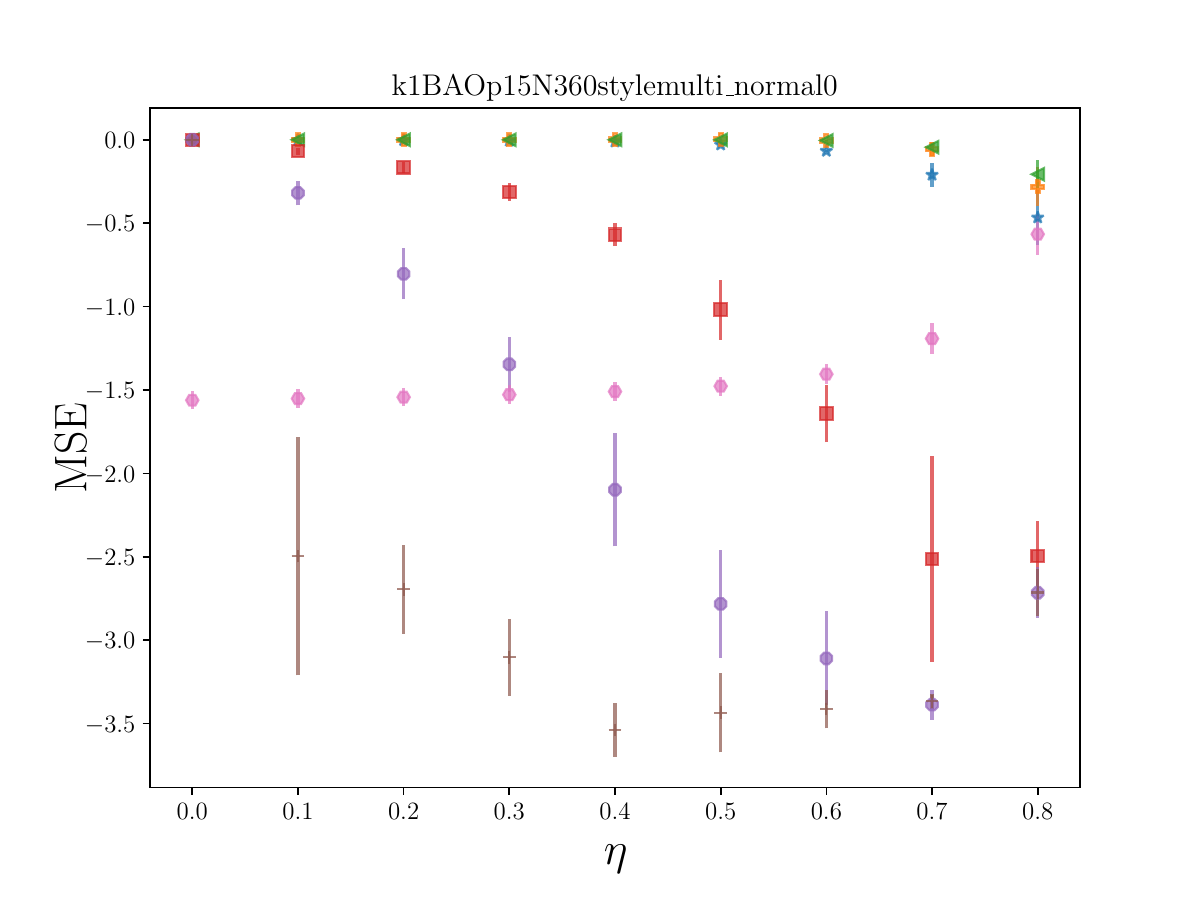}
    \caption{$\text{BAO}_3(p=0.15)$}
  \end{subfigure}
  \begin{subfigure}[ht]{0.245\linewidth}
    \centering
    \includegraphics[width=\linewidth,trim=0cm 0cm 0cm 1.8cm,clip]{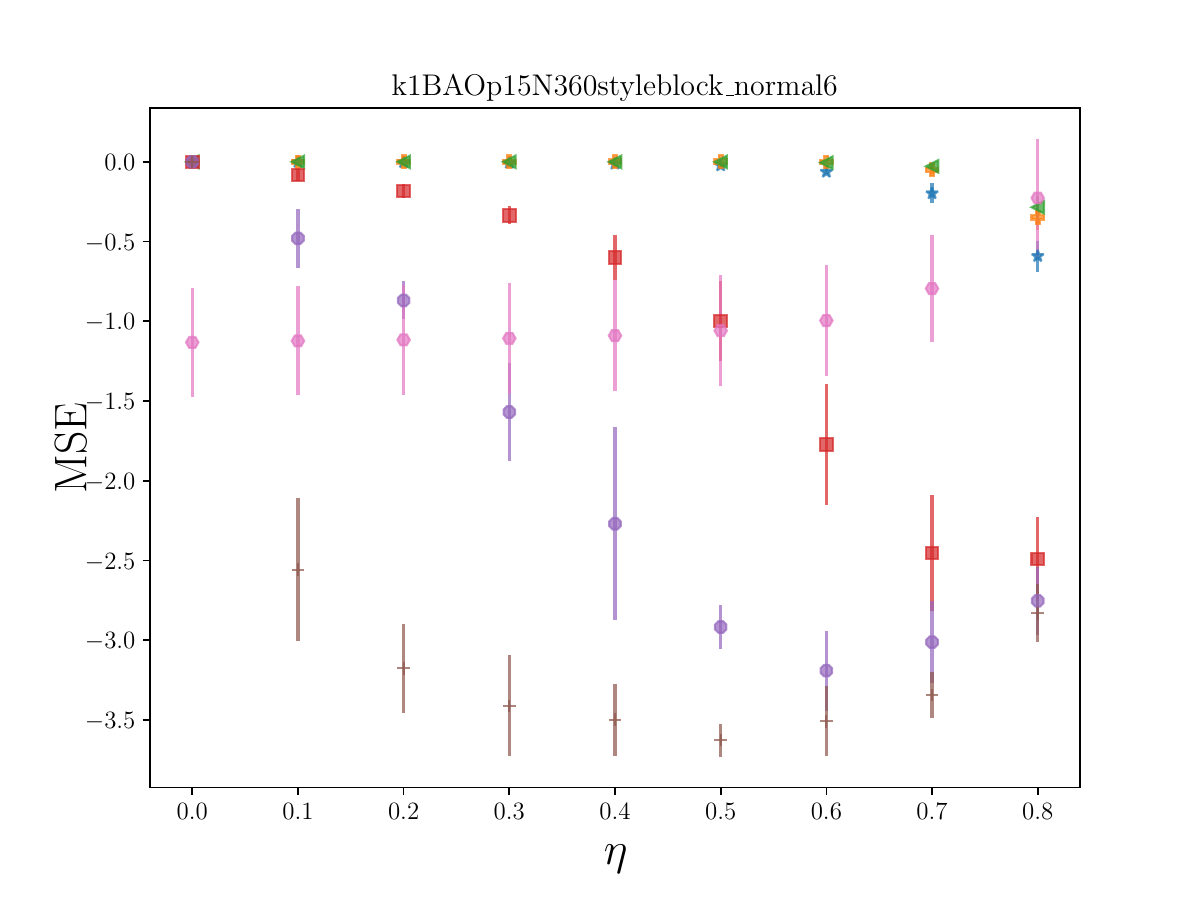}
    \caption{$\text{BAO}_4(p=0.15)$}
  \end{subfigure}
  %%%%%%%%%%%%%%%%%%%%%%%%%%%%%%%%%%%
\begin{subfigure}[ht]{\linewidth}
    \centering
    \includegraphics[width=1.0\linewidth,trim=0cm 0cm 0cm 0cm,clip]{ablation_figures/sync_legend_improvement.png}
  \end{subfigure}
    \caption{MSE performance improvement on GNNSync over variants on angular synchronization ($k=1$) for BAO models. $p$ is the network density and $\eta$ is the noise level. Error bars indicate one standard deviation. 
    }
    \label{fig:improvement_k1_BAO}
\end{figure*}

\begin{figure*}[!hbt]
    \centering
  \begin{subfigure}[ht]{0.245\linewidth}
    \centering
    \includegraphics[width=\linewidth,trim=0cm 0cm 0cm 1.8cm,clip]{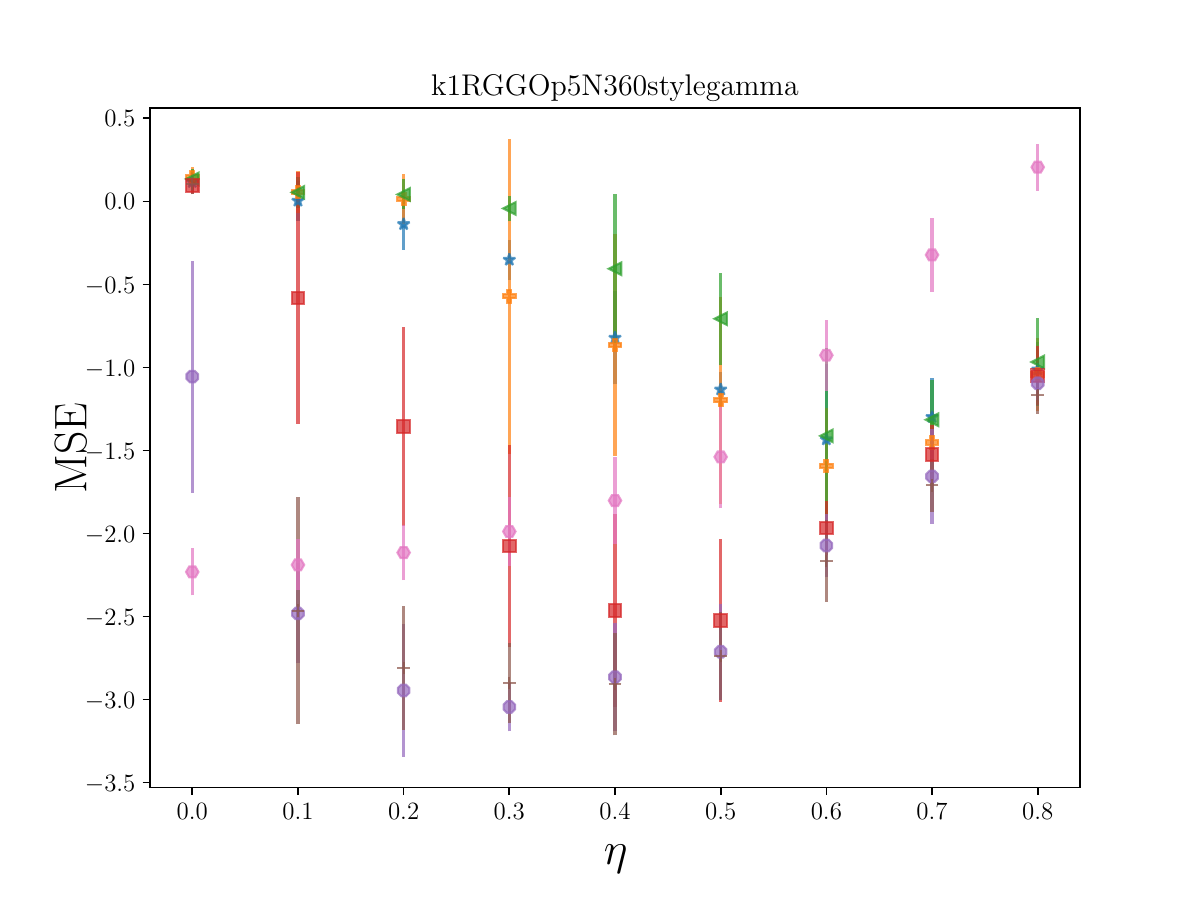}
    \caption{$\text{RGGO}_1(p=0.05)$}
  \end{subfigure}
  \begin{subfigure}[ht]{0.245\linewidth}
    \centering
    \includegraphics[width=\linewidth,trim=0cm 0cm 0cm 1.8cm,clip]{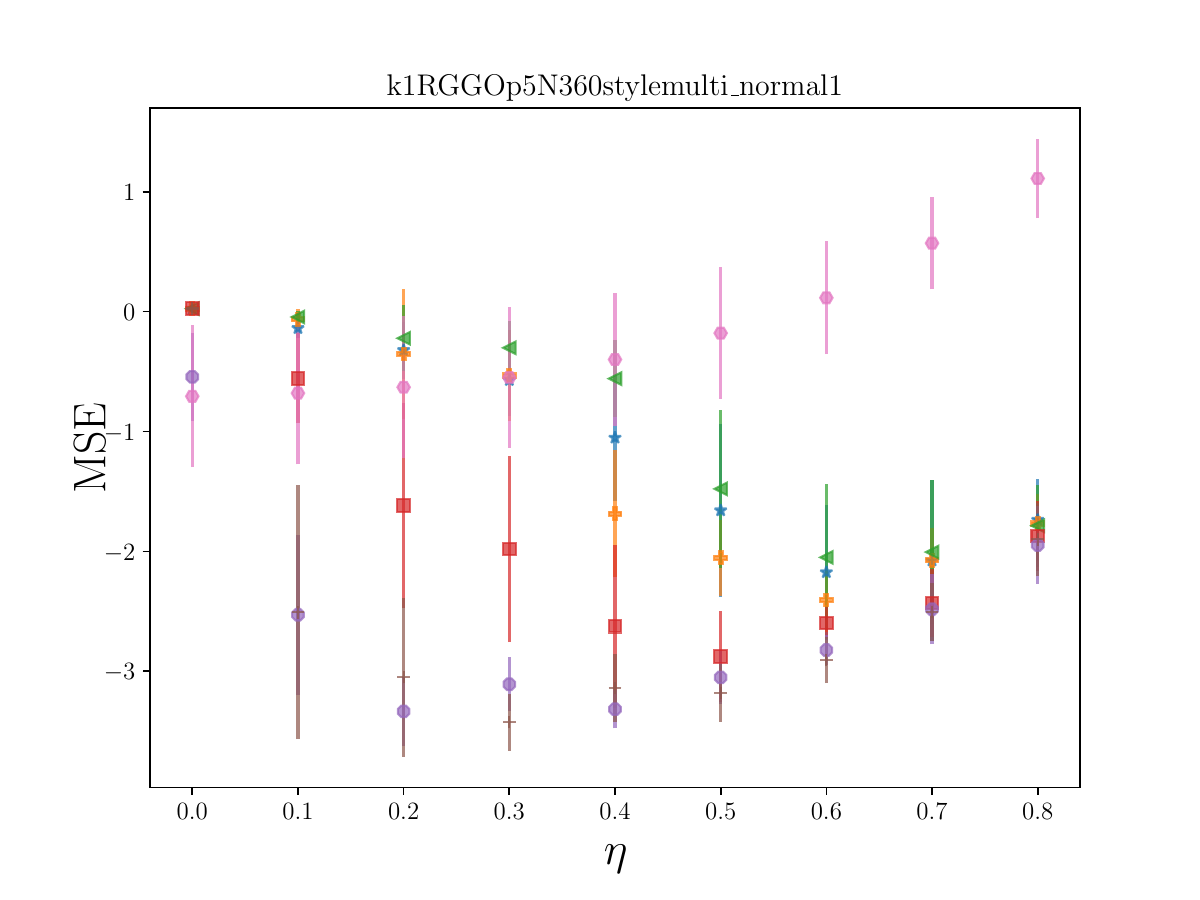}
    \caption{$\text{RGGO}_2(p=0.05)$}
  \end{subfigure}
  \begin{subfigure}[ht]{0.245\linewidth}
    \centering
    \includegraphics[width=\linewidth,trim=0cm 0cm 0cm 1.8cm,clip]{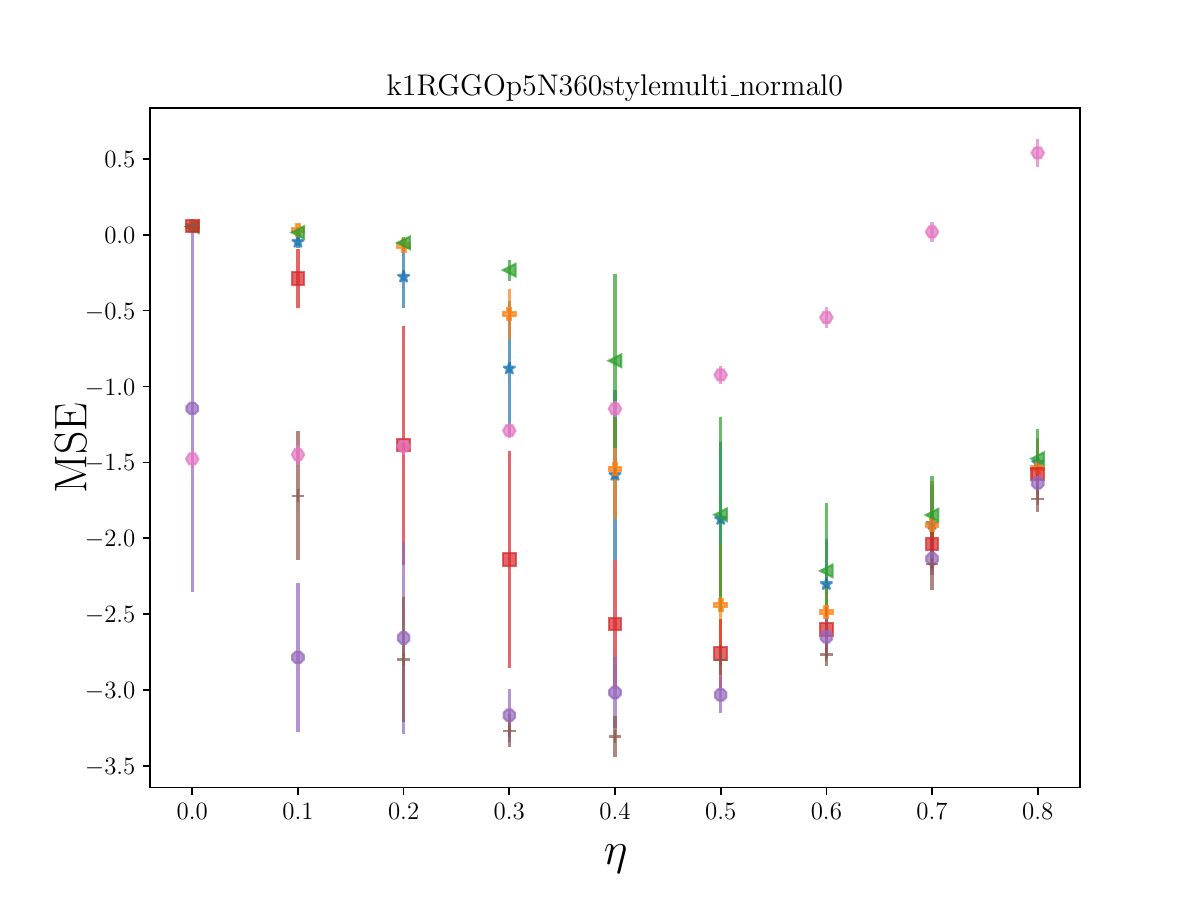}
    \caption{$\text{RGGO}_3(p=0.05)$}
  \end{subfigure}
  \begin{subfigure}[ht]{0.245\linewidth}
    \centering
    \includegraphics[width=\linewidth,trim=0cm 0cm 0cm 1.8cm,clip]{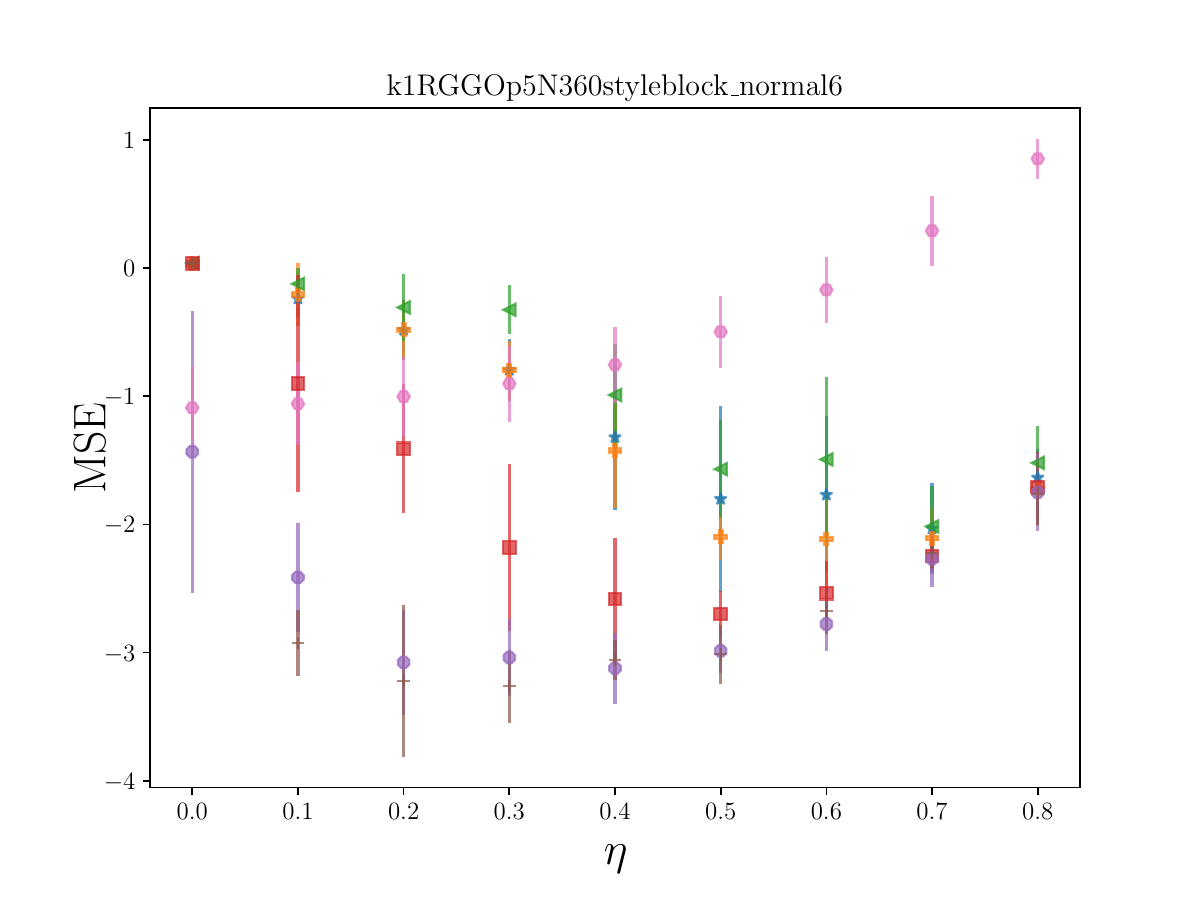}
    \caption{$\text{RGGO}_4(p=0.05)$}
  \end{subfigure}
  %%%%%%%%%%%%%%%%%%%%%%%%%%%%
  \begin{subfigure}[ht]{0.245\linewidth}
    \centering
    \includegraphics[width=\linewidth,trim=0cm 0cm 0cm 1.8cm,clip]{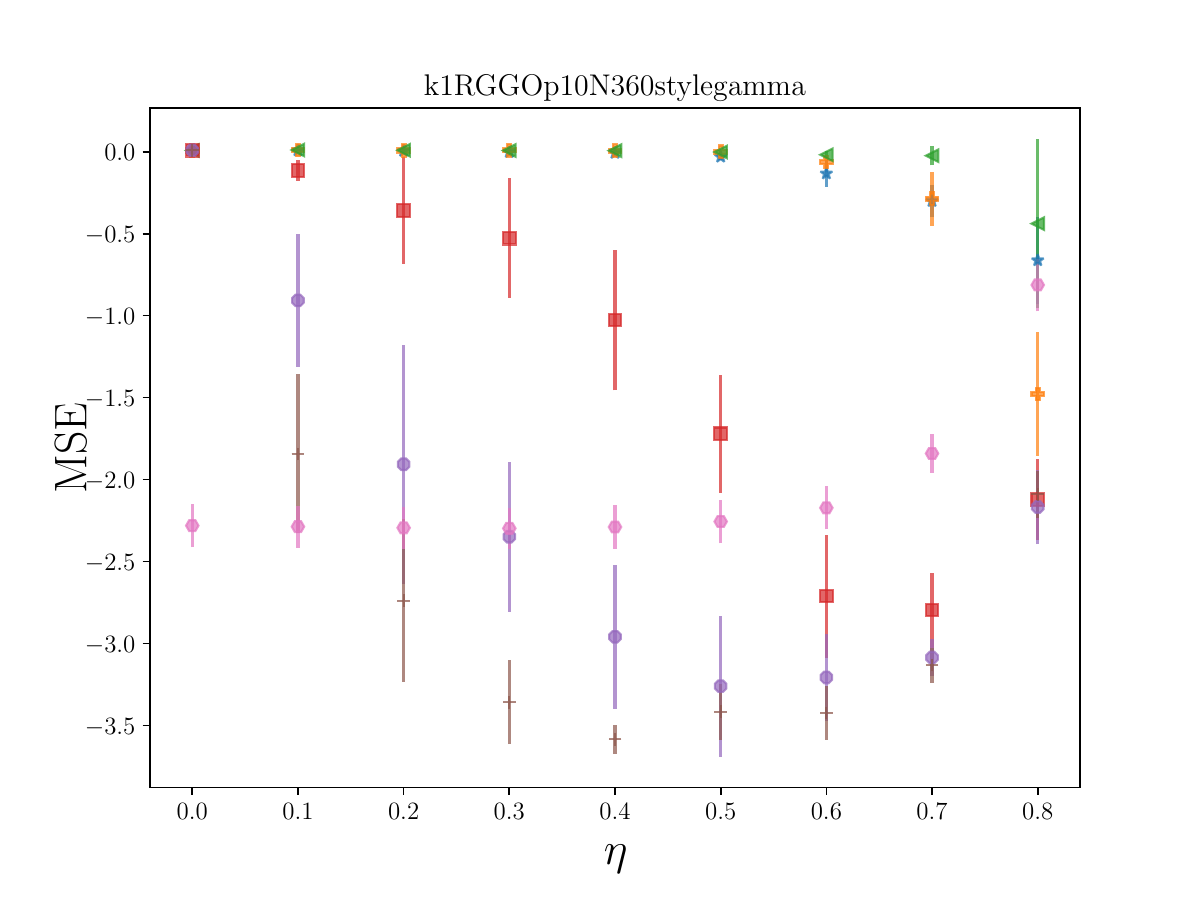}
    \caption{$\text{RGGO}_1(p=0.1)$}
  \end{subfigure}
  \begin{subfigure}[ht]{0.245\linewidth}
    \centering
    \includegraphics[width=\linewidth,trim=0cm 0cm 0cm 1.8cm,clip]{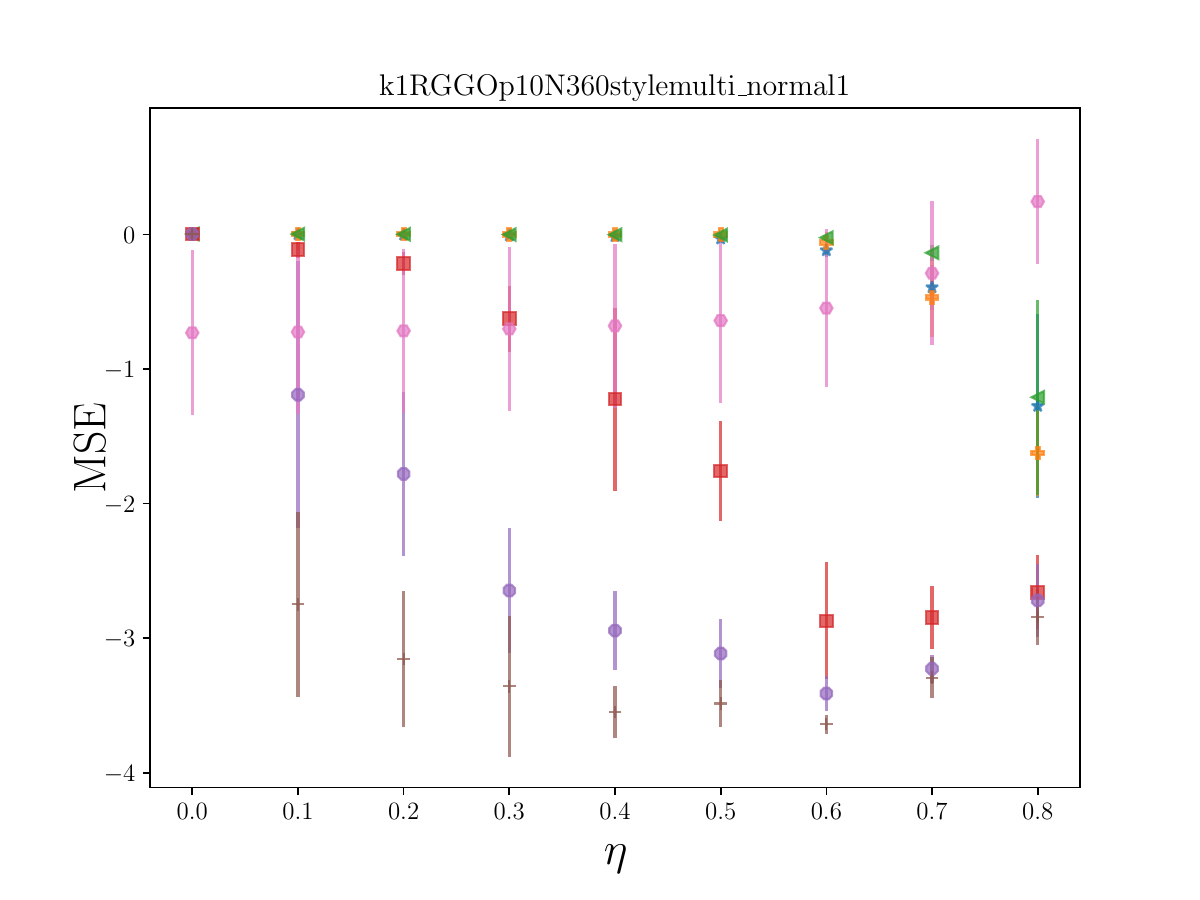}
    \caption{$\text{RGGO}_2(p=0.1)$}
  \end{subfigure}
  \begin{subfigure}[ht]{0.245\linewidth}
    \centering
    \includegraphics[width=\linewidth,trim=0cm 0cm 0cm 1.8cm,clip]{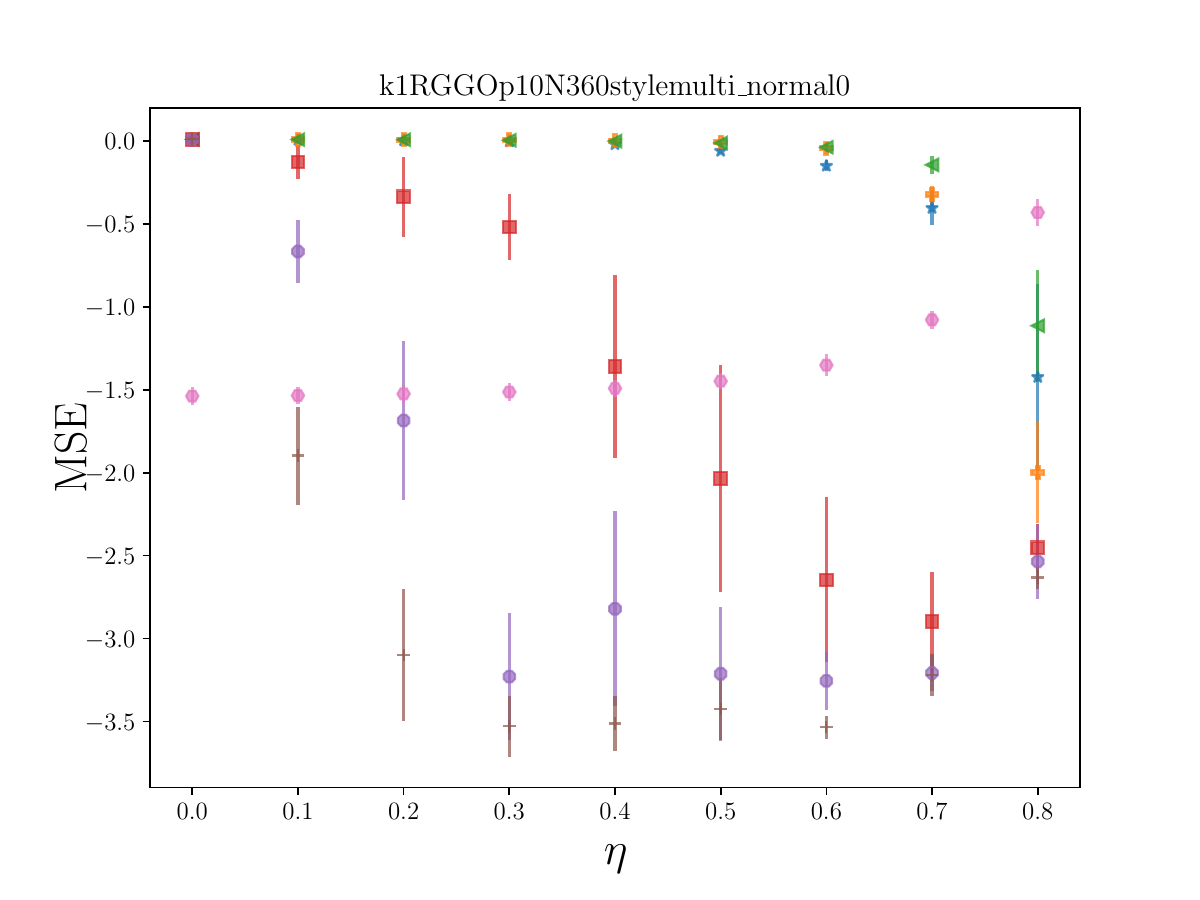}
    \caption{$\text{RGGO}_3(p=0.1)$}
  \end{subfigure}
  \begin{subfigure}[ht]{0.245\linewidth}
    \centering
    \includegraphics[width=\linewidth,trim=0cm 0cm 0cm 1.8cm,clip]{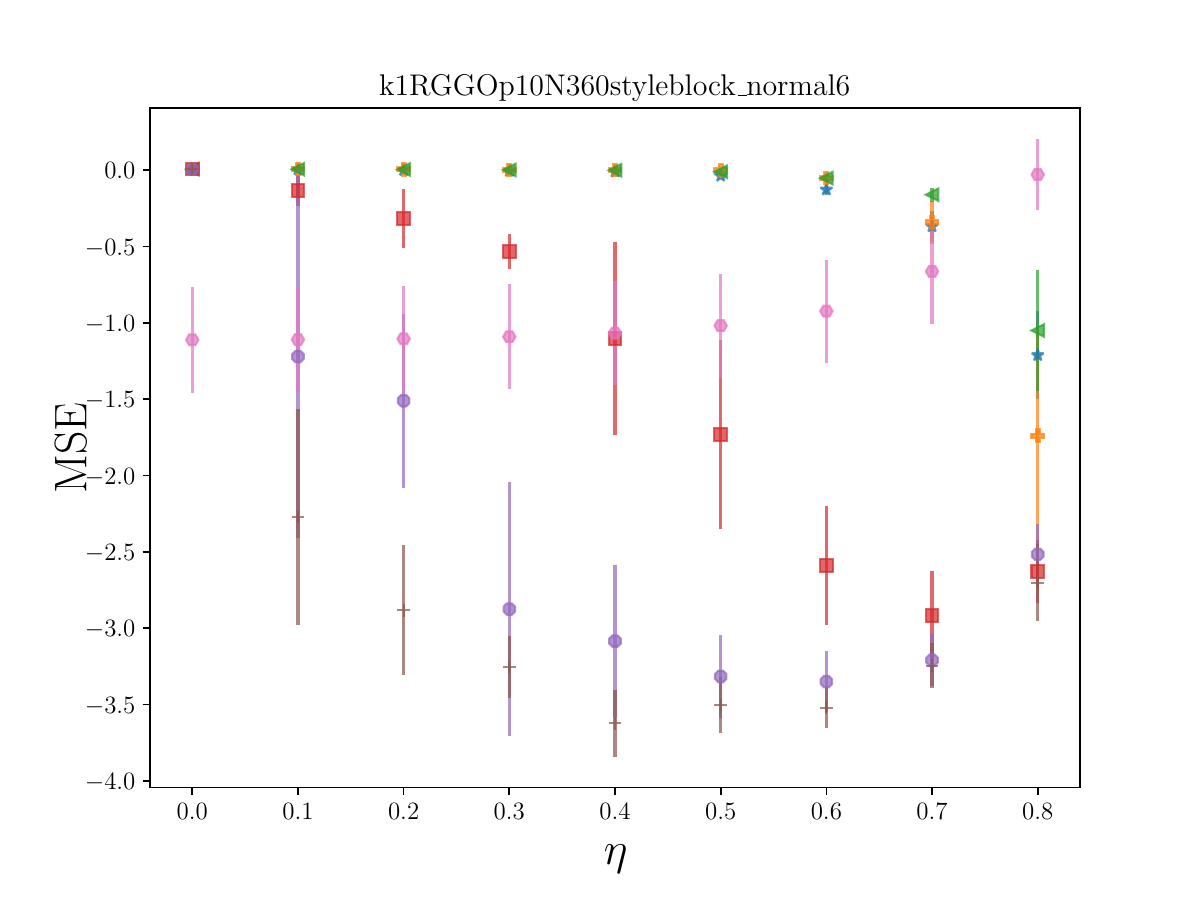}
    \caption{$\text{RGGO}_4(p=0.1)$}
  \end{subfigure}
  %%%%%%%%%%%%%%%%%%%%%%%%%%%%%%%%
  \begin{subfigure}[ht]{0.245\linewidth}
    \centering
    \includegraphics[width=\linewidth,trim=0cm 0cm 0cm 1.8cm,clip]{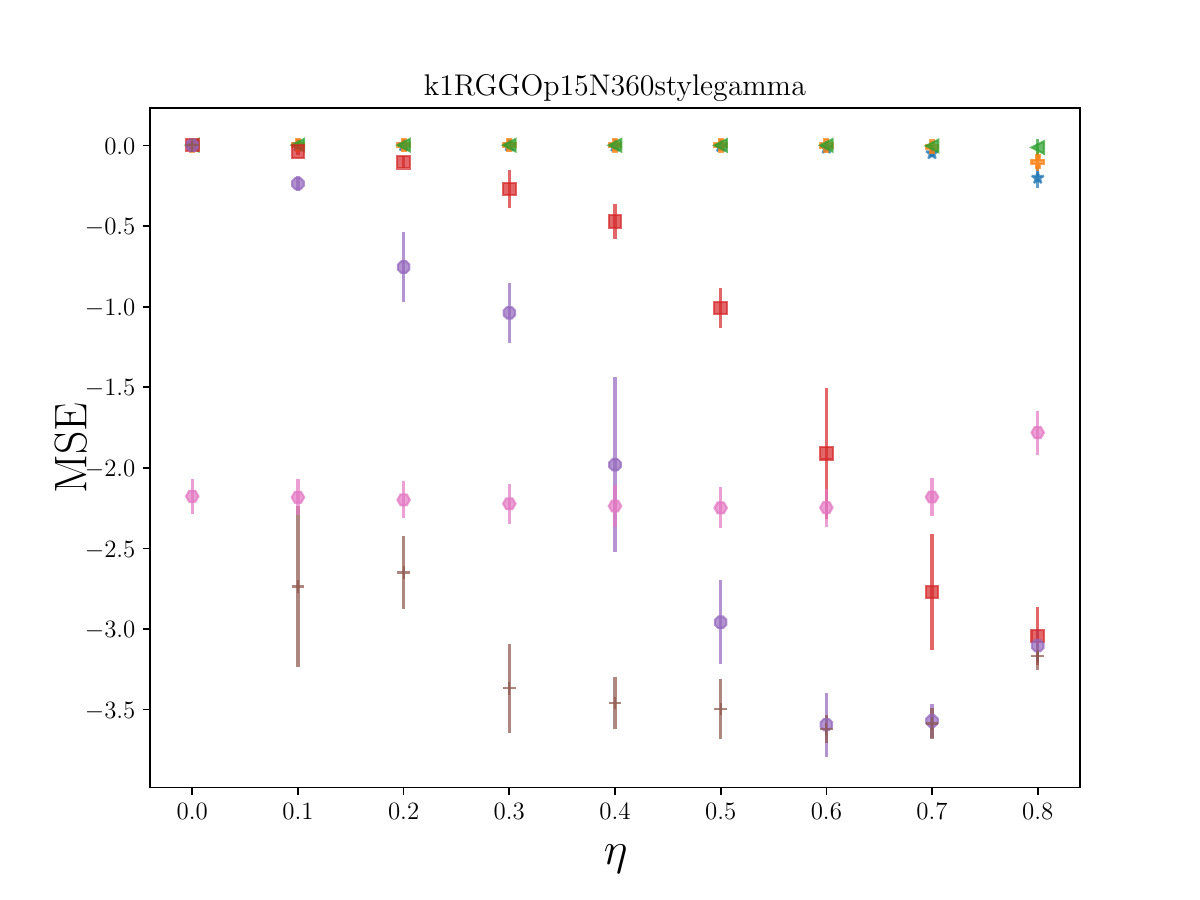}
    \caption{$\text{RGGO}_1(p=0.15)$}
  \end{subfigure}
  \begin{subfigure}[ht]{0.245\linewidth}
    \centering
    \includegraphics[width=\linewidth,trim=0cm 0cm 0cm 1.8cm,clip]{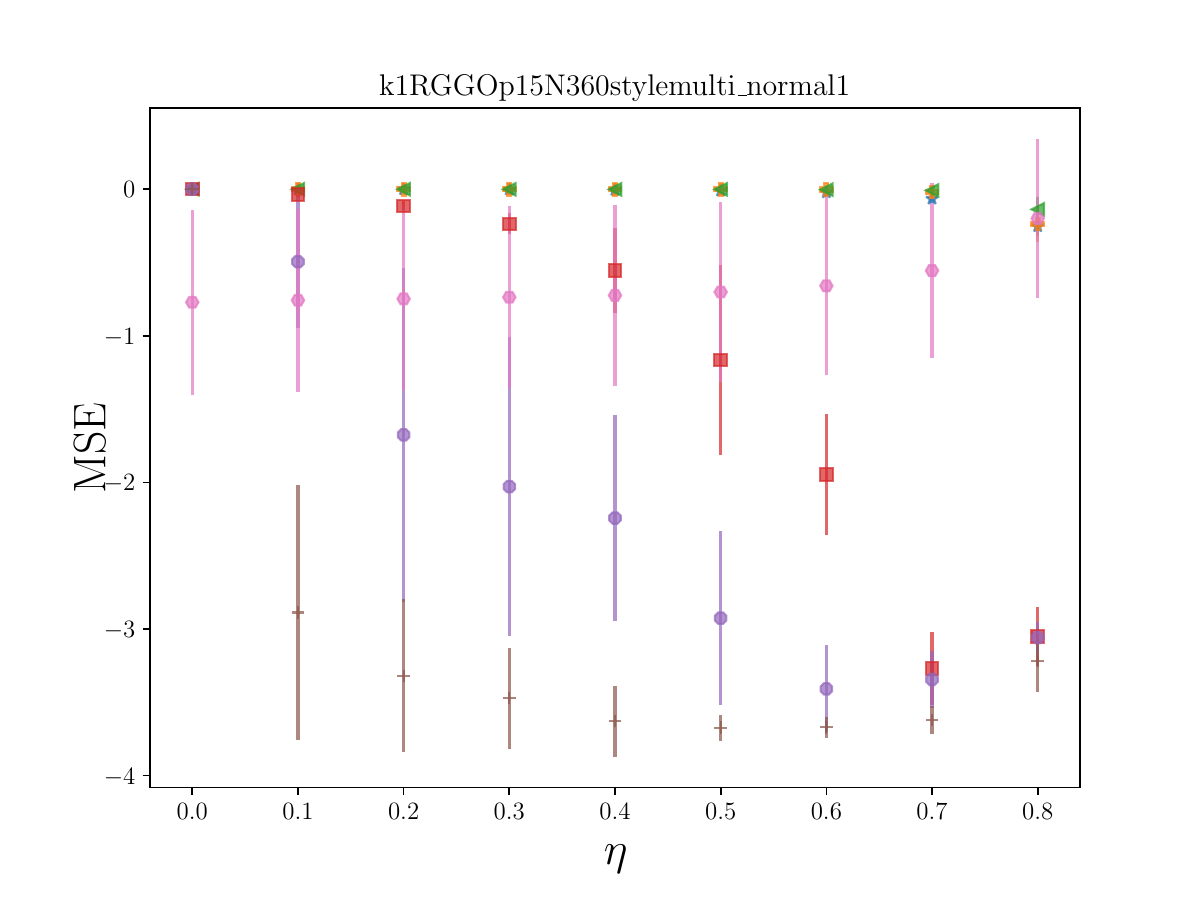}
    \caption{$\text{RGGO}_2(p=0.15)$}
  \end{subfigure}
  \begin{subfigure}[ht]{0.245\linewidth}
    \centering
    \includegraphics[width=\linewidth,trim=0cm 0cm 0cm 1.8cm,clip]{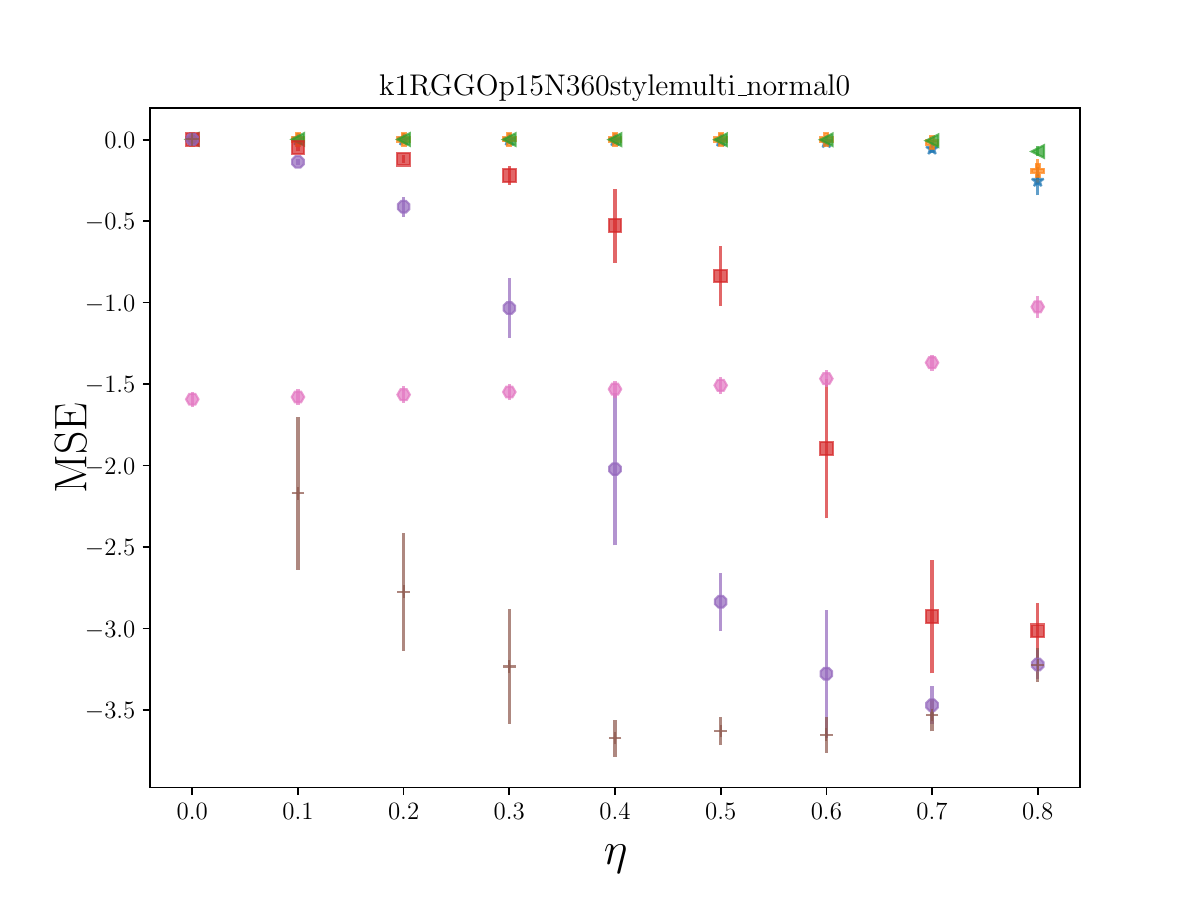}
    \caption{$\text{RGGO}_3(p=0.15)$}
  \end{subfigure}
  \begin{subfigure}[ht]{0.245\linewidth}
    \centering
    \includegraphics[width=\linewidth,trim=0cm 0cm 0cm 1.8cm,clip]{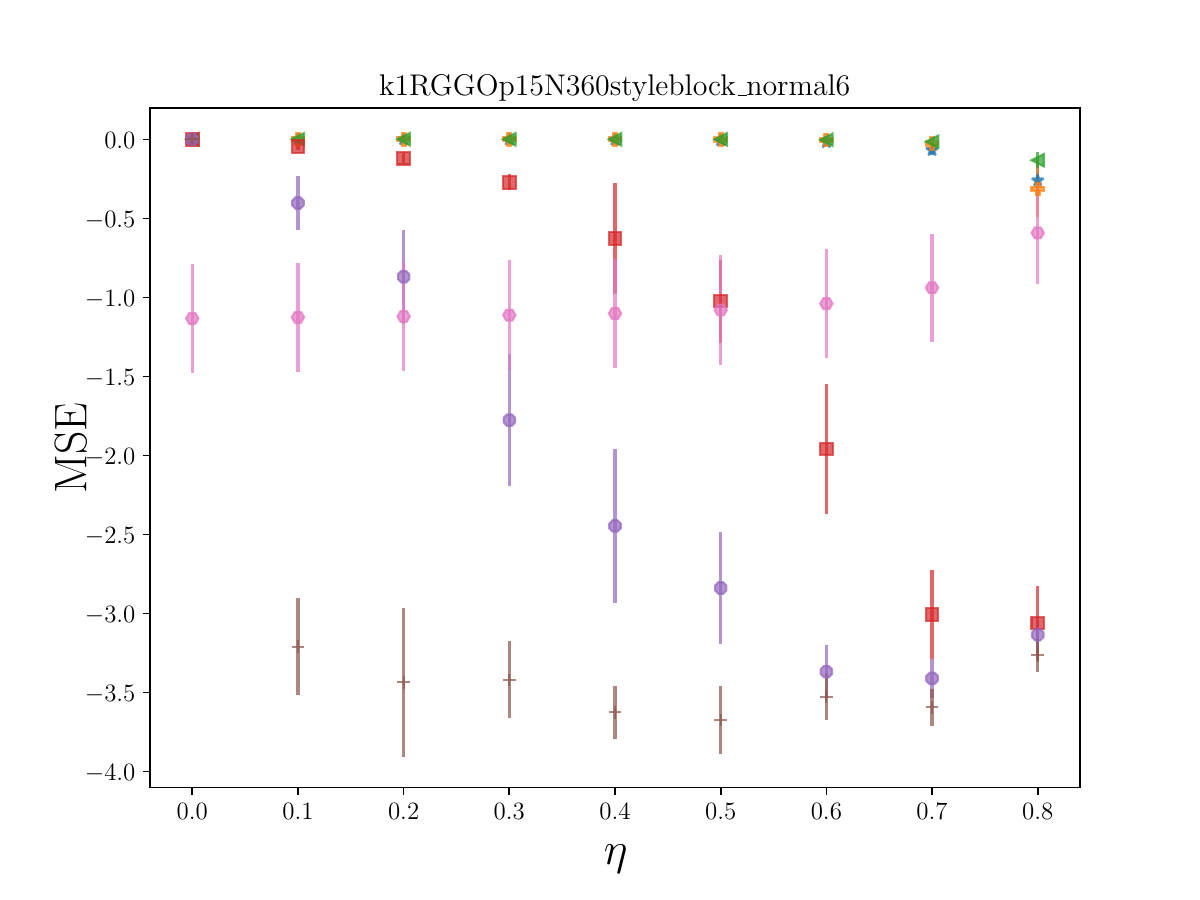}
    \caption{$\text{RGGO}_4(p=0.15)$}
  \end{subfigure}
  %%%%%%%%%%%%%%%%%%%%%%%%%%%%%%%%%%%
\begin{subfigure}[ht]{\linewidth}
    \centering
    \includegraphics[width=1.0\linewidth,trim=0cm 0cm 0cm 0cm,clip]{ablation_figures/sync_legend_improvement.png}
  \end{subfigure}
    \caption{MSE performance improvement on GNNSync over variants on angular synchronization ($k=1$) for RGGO models. $p$ is the network density and $\eta$ is the noise level.  Error bars indicate one standard deviation. 
    }
    \label{fig:improvement_k1_RGGO}
\end{figure*}

\begin{figure*}[!hbt]
    \centering
  \begin{subfigure}[ht]{0.245\linewidth}
    \centering
    \includegraphics[width=\linewidth,trim=0cm 0cm 0cm 1.8cm,clip]{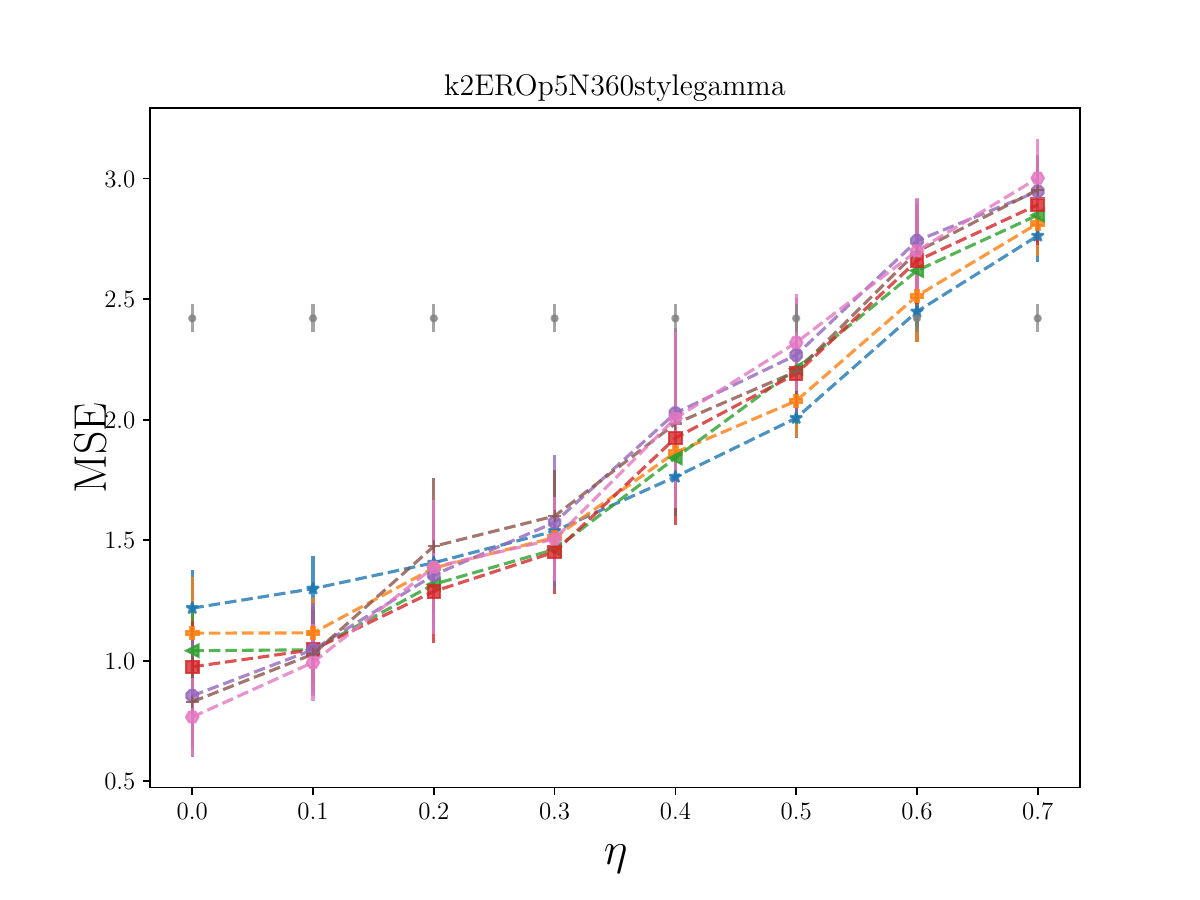}
    \vspace{-7mm}
    \caption{$\text{ERO}_1(p=0.05)$}
  \end{subfigure}
  \begin{subfigure}[ht]{0.245\linewidth}
    \centering
    \includegraphics[width=\linewidth,trim=0cm 0cm 0cm 1.8cm,clip]{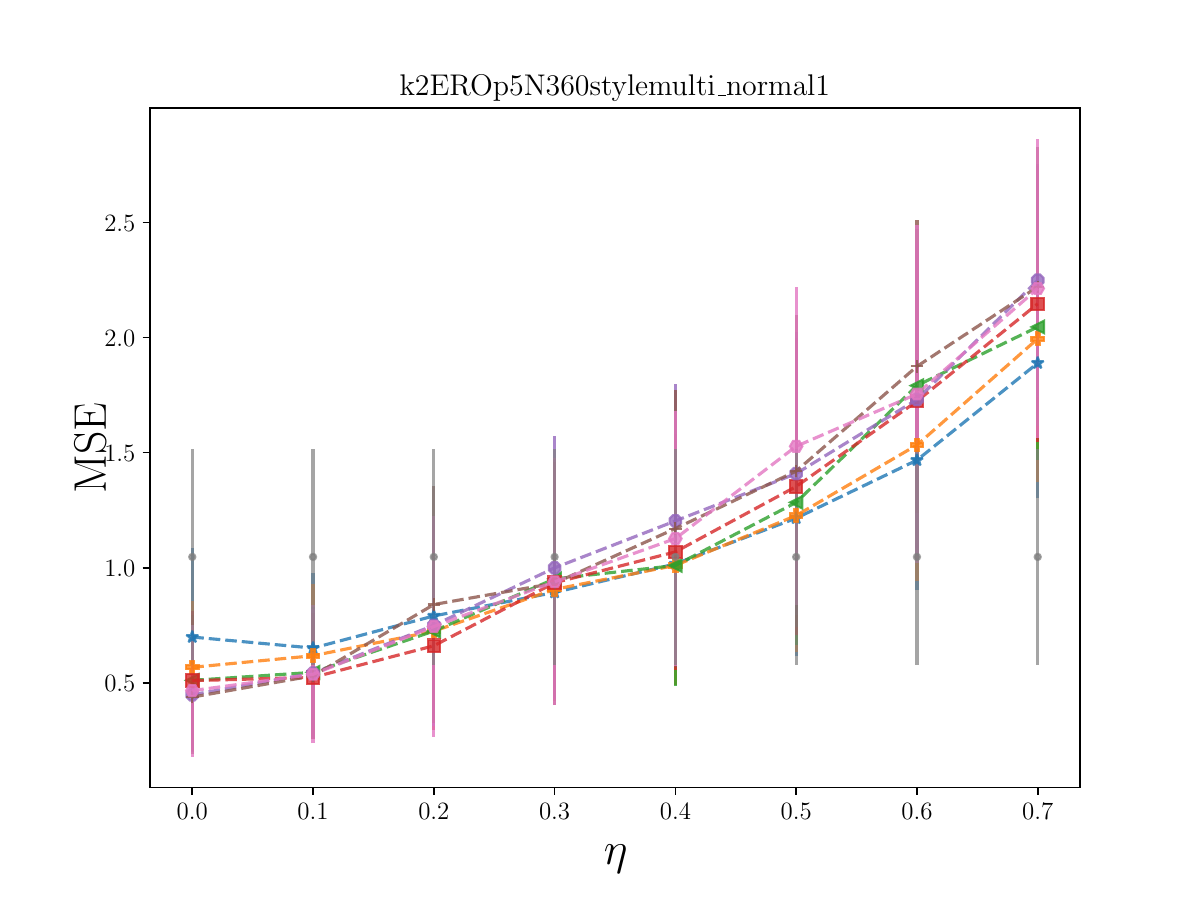}
    \vspace{-7mm}
    \caption{$\text{ERO}_2(p=0.05)$}
  \end{subfigure}
  \begin{subfigure}[ht]{0.245\linewidth}
    \centering
    \includegraphics[width=\linewidth,trim=0cm 0cm 0cm 1.8cm,clip]{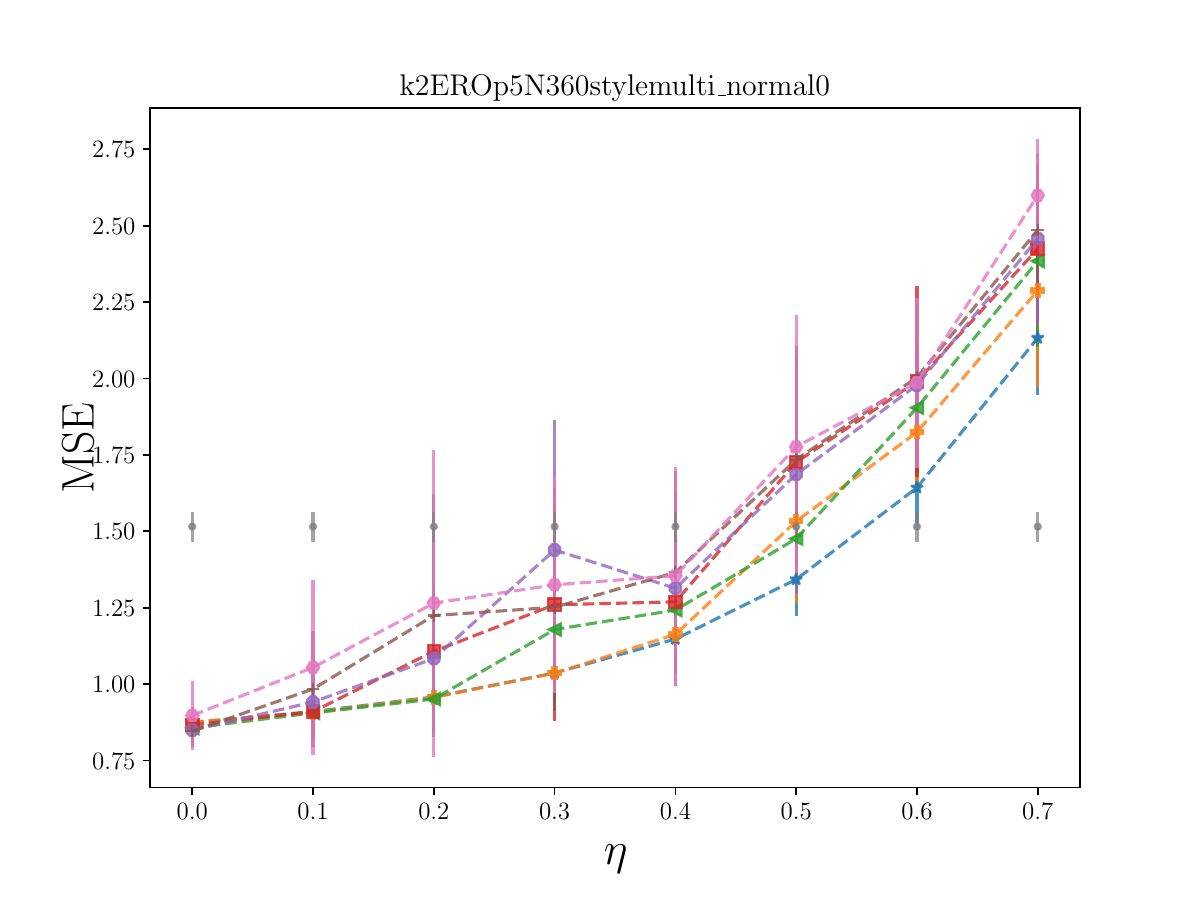}
    \vspace{-7mm}
    \caption{$\text{ERO}_3(p=0.05)$}
  \end{subfigure}
  \begin{subfigure}[ht]{0.245\linewidth}
    \centering
    \includegraphics[width=\linewidth,trim=0cm 0cm 0cm 1.8cm,clip]{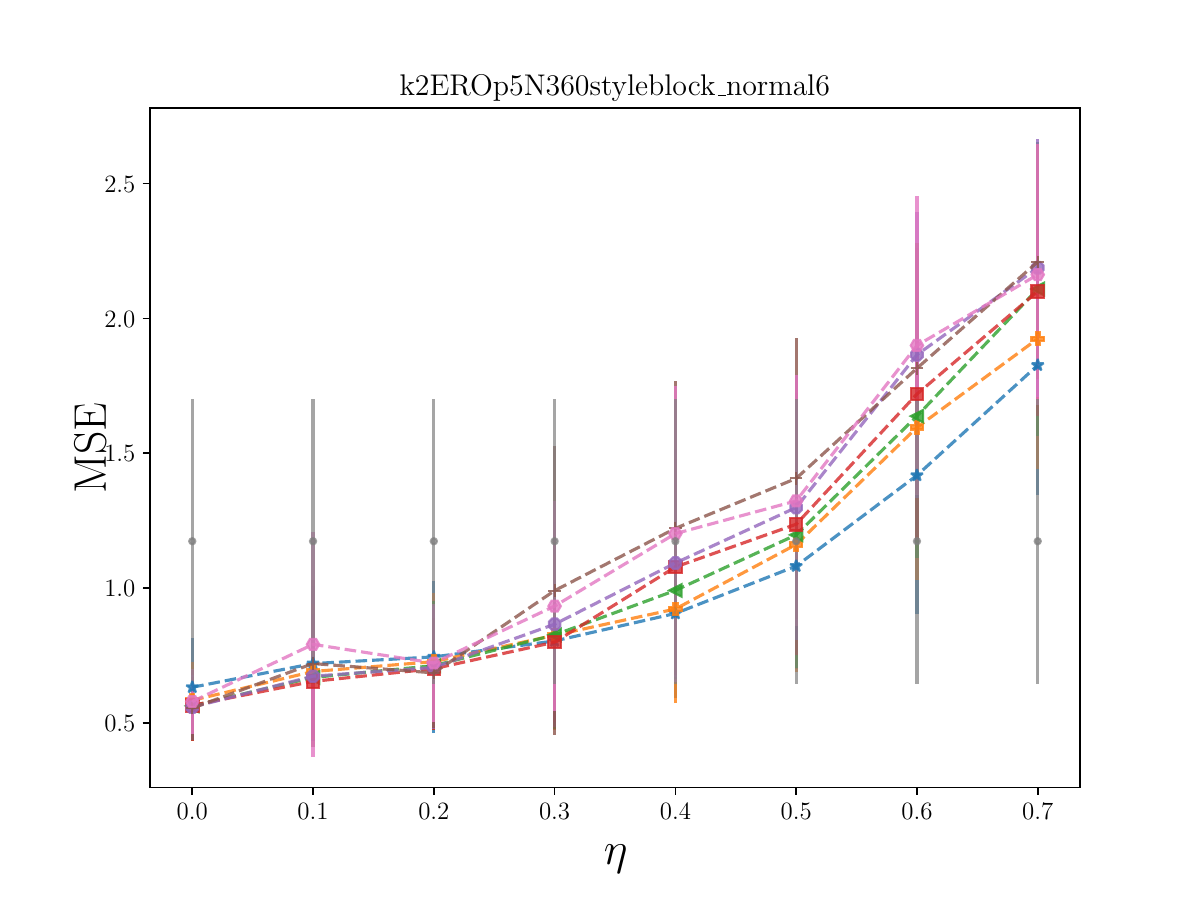}
    \vspace{-7mm}
    \caption{$\text{ERO}_4(p=0.05)$}
  \end{subfigure}
  %%%%%%%%%%%%%%%%%%%%%%%%%%%%
  \begin{subfigure}[ht]{0.245\linewidth}
    \centering
    \includegraphics[width=\linewidth,trim=0cm 0cm 0cm 1.8cm,clip]{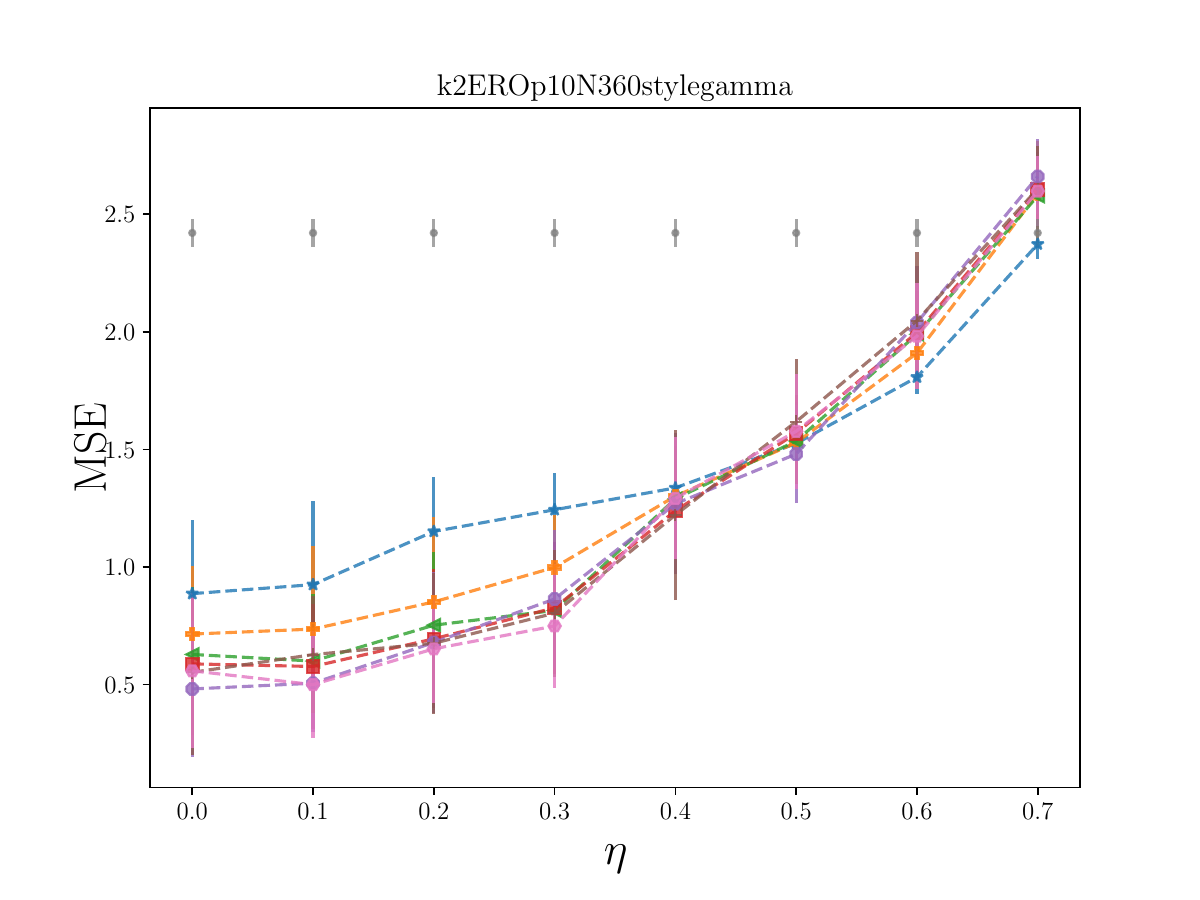}
    \vspace{-7mm}
    \caption{$\text{ERO}_1(p=0.1)$}
  \end{subfigure}
  \begin{subfigure}[ht]{0.245\linewidth}
    \centering
    \includegraphics[width=\linewidth,trim=0cm 0cm 0cm 1.8cm,clip]{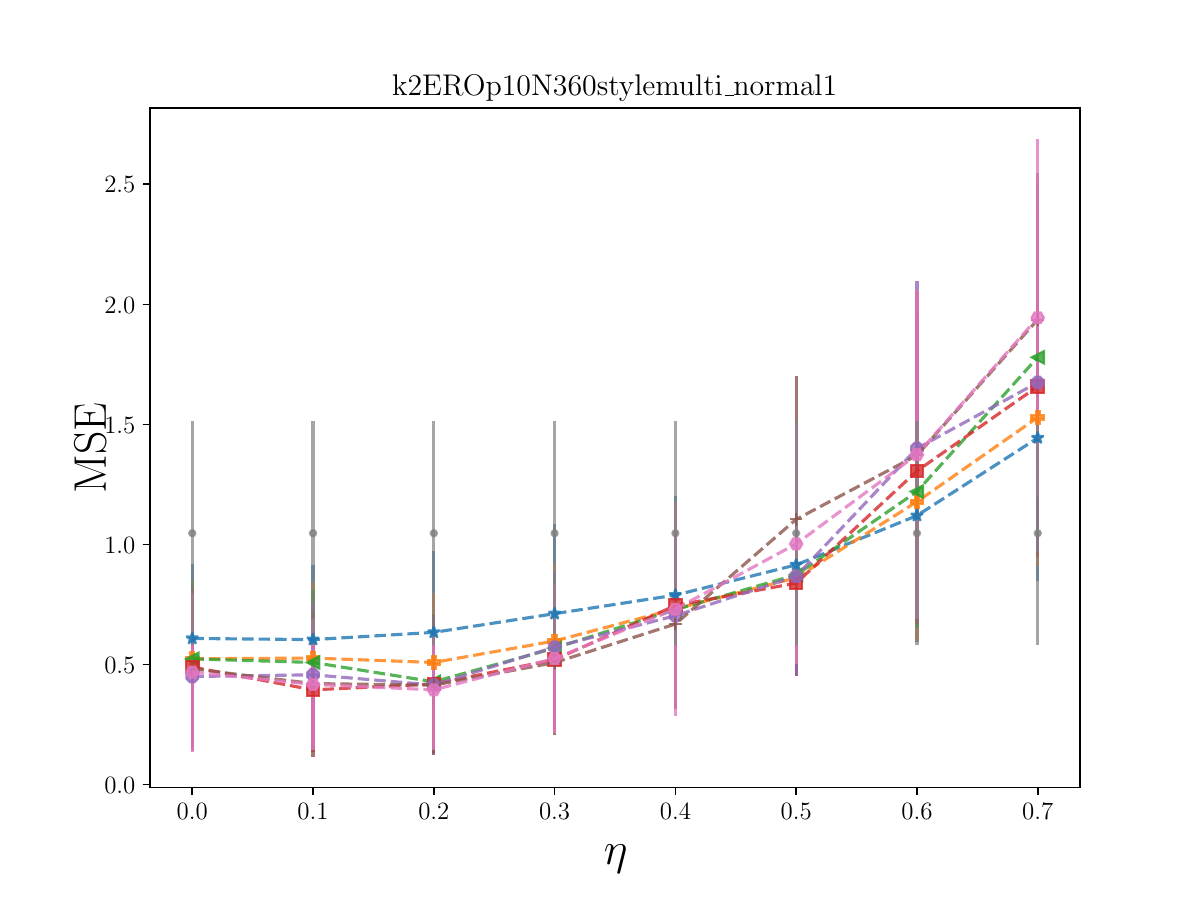}
    \vspace{-7mm}
    \caption{$\text{ERO}_2(p=0.1)$}
  \end{subfigure}
  \begin{subfigure}[ht]{0.245\linewidth}
    \centering
    \includegraphics[width=\linewidth,trim=0cm 0cm 0cm 1.8cm,clip]{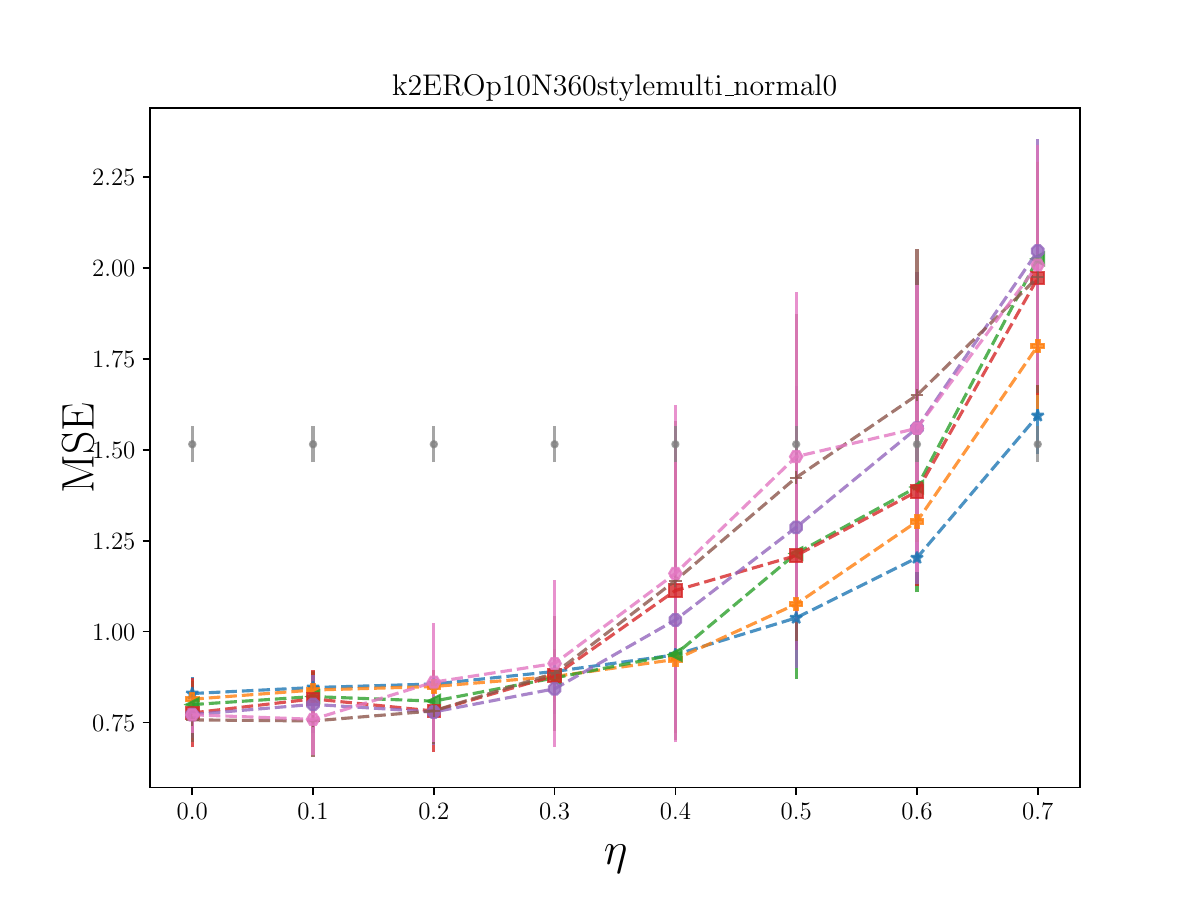}
    \vspace{-7mm}
    \caption{$\text{ERO}_3(p=0.1)$}
  \end{subfigure}
  \begin{subfigure}[ht]{0.245\linewidth}
    \centering
    \includegraphics[width=\linewidth,trim=0cm 0cm 0cm 1.8cm,clip]{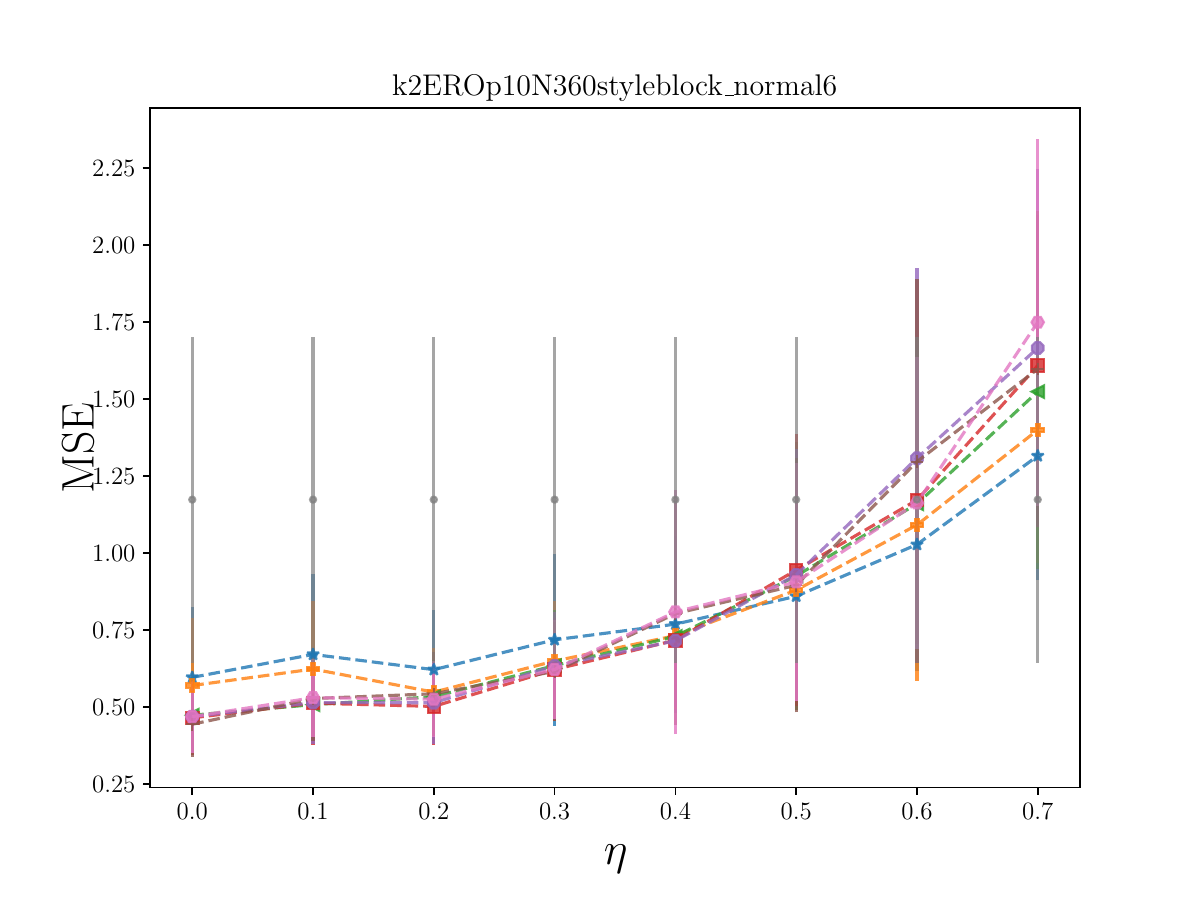}
    \vspace{-7mm}
    \caption{$\text{ERO}_4(p=0.1)$}
  \end{subfigure}
  %%%%%%%%%%%%%%%%%%%%%%%%%%%%%%%%
  \begin{subfigure}[ht]{0.245\linewidth}
    \centering
    \includegraphics[width=\linewidth,trim=0cm 0cm 0cm 1.8cm,clip]{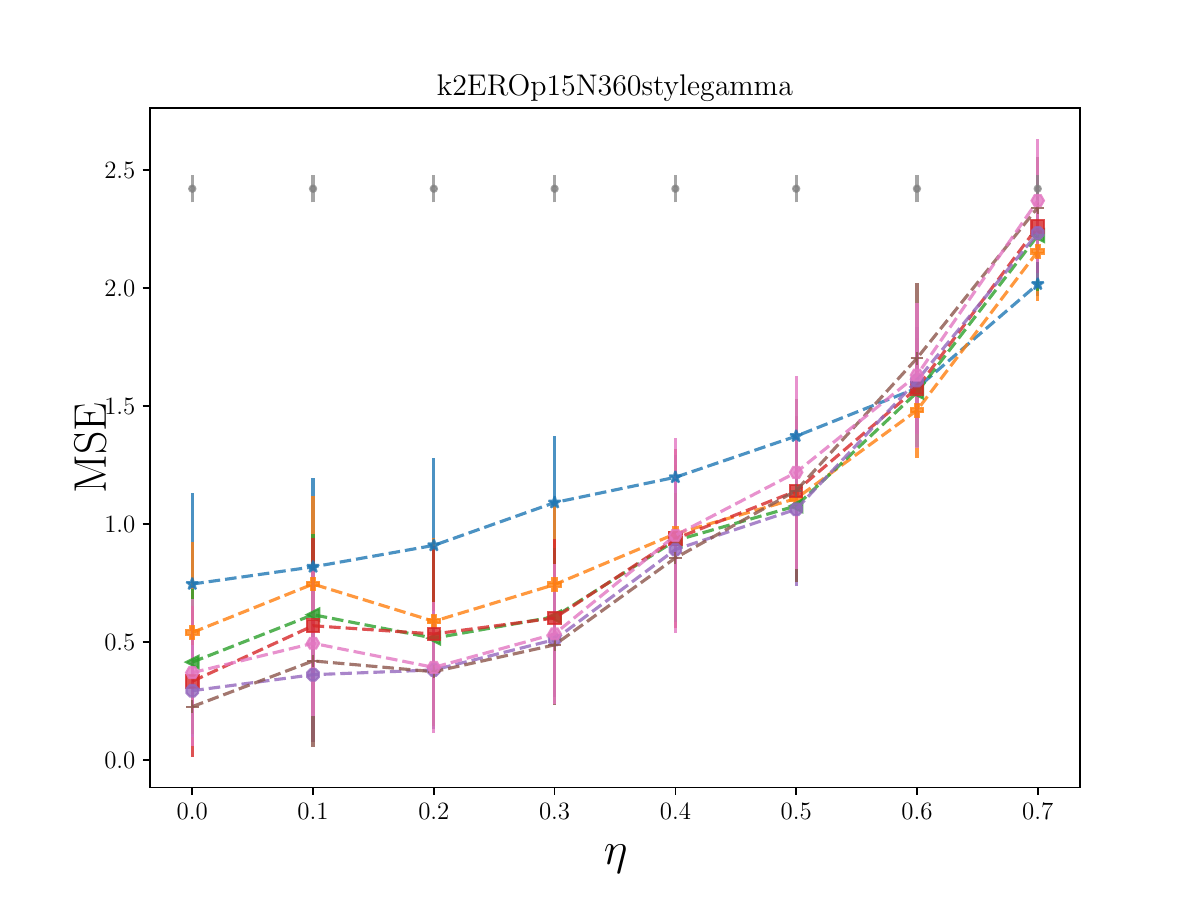}
    \vspace{-7mm}
    \caption{$\text{ERO}_1(p=0.15)$}
  \end{subfigure}
  \begin{subfigure}[ht]{0.245\linewidth}
    \centering
    \includegraphics[width=\linewidth,trim=0cm 0cm 0cm 1.8cm,clip]{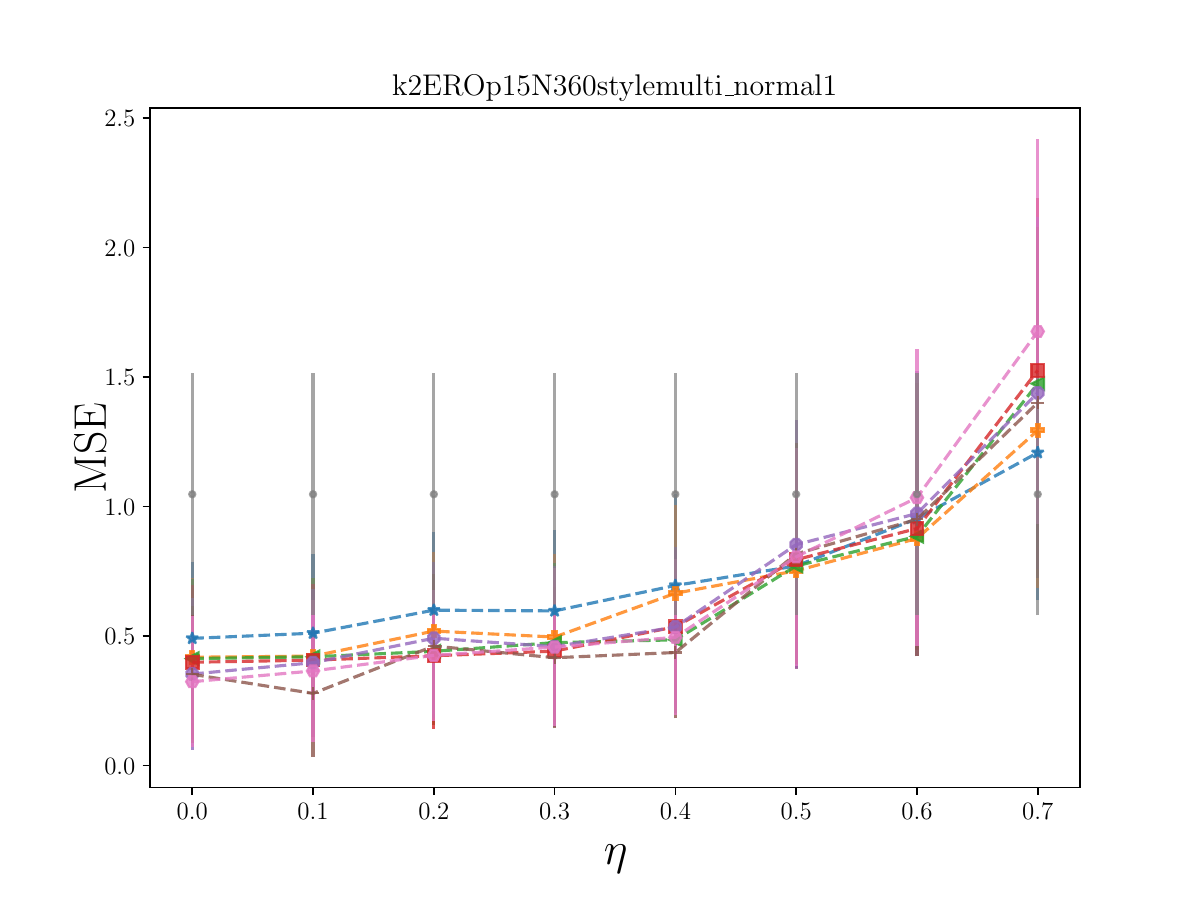}
    \vspace{-7mm}
    \caption{$\text{ERO}_2(p=0.15)$}
  \end{subfigure}
  \begin{subfigure}[ht]{0.245\linewidth}
    \centering
    \includegraphics[width=\linewidth,trim=0cm 0cm 0cm 1.8cm,clip]{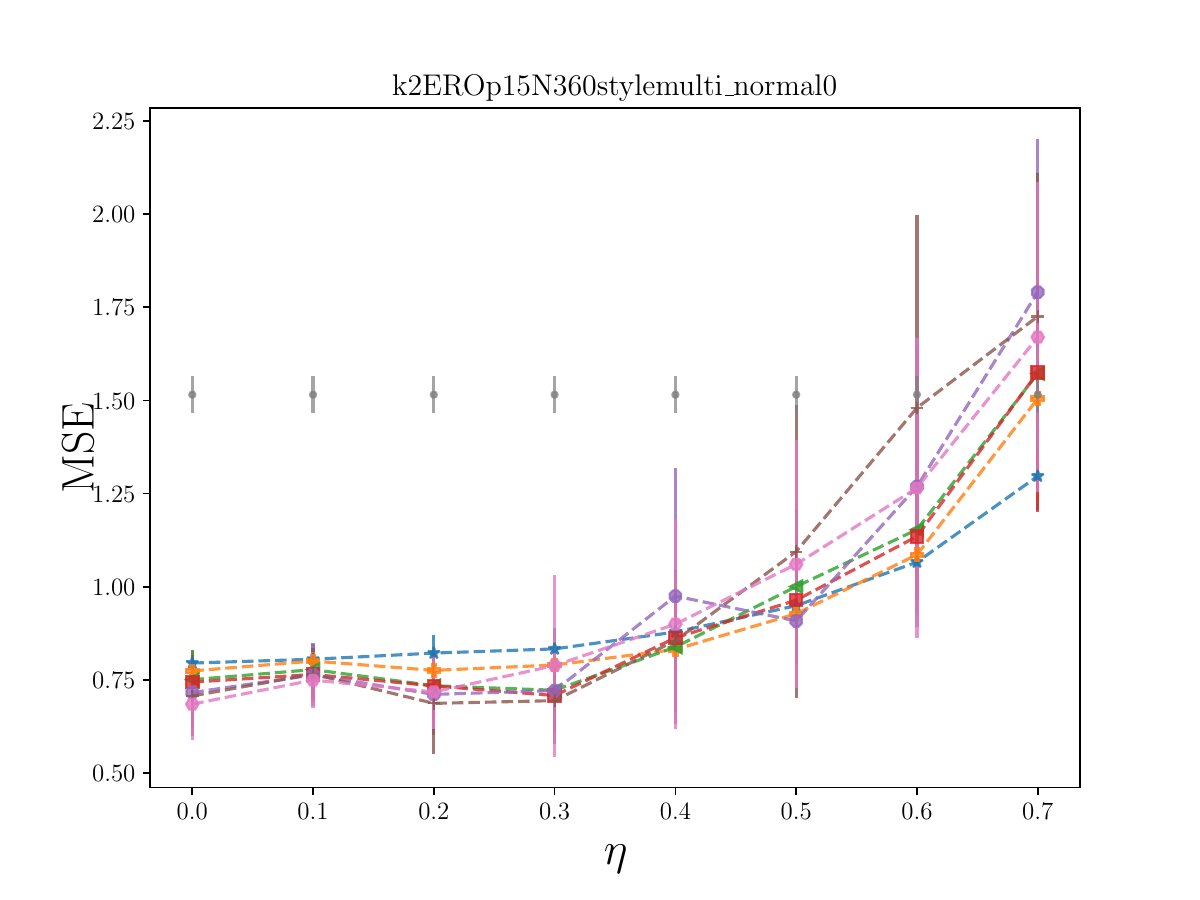}
    \vspace{-7mm}
    \caption{$\text{ERO}_3(p=0.15)$}
  \end{subfigure}
  \begin{subfigure}[ht]{0.245\linewidth}
    \centering
    \includegraphics[width=\linewidth,trim=0cm 0cm 0cm 1.8cm,clip]{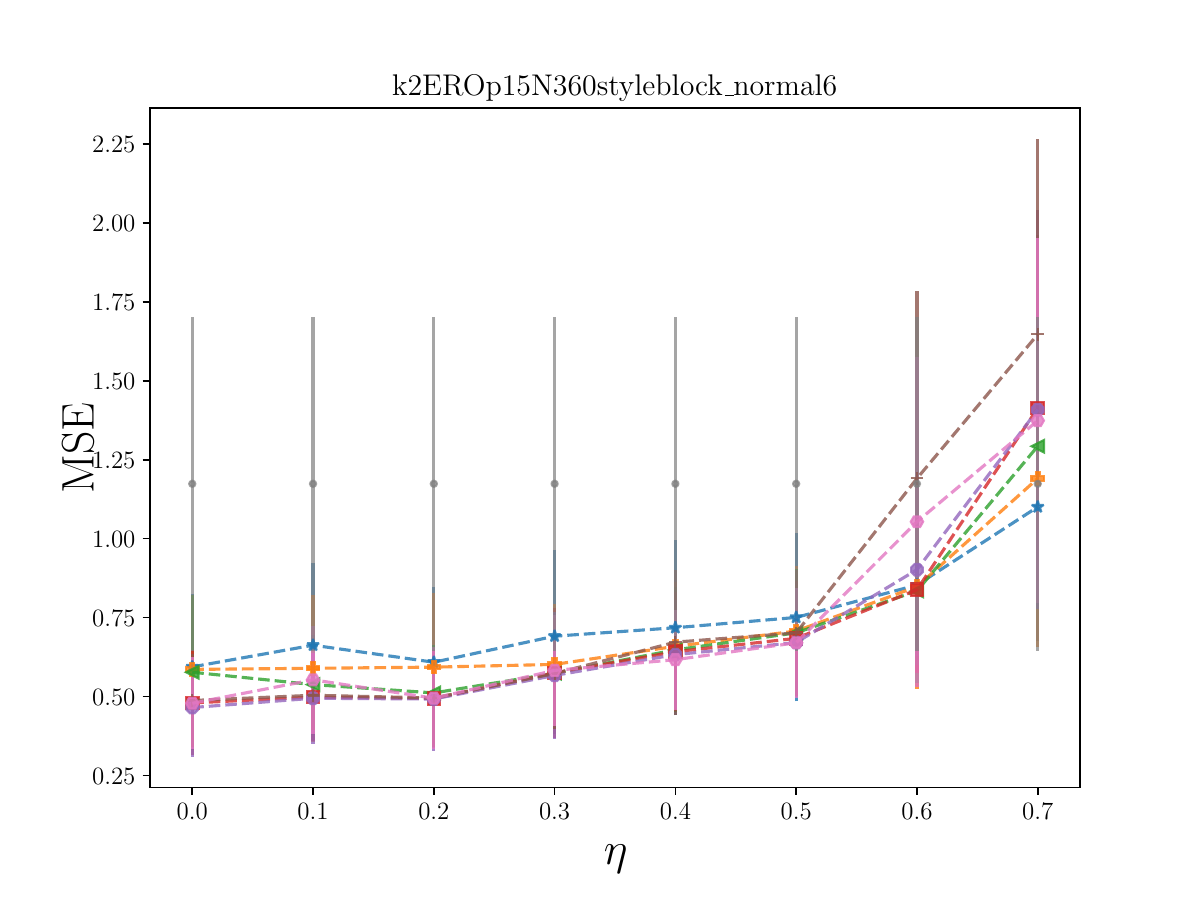}
    \vspace{-7mm}
    \caption{$\text{ERO}_4(p=0.15)$}
  \end{subfigure}
  %%%%%%%%%%%%%%%%%%%%%%%%%%%%%%%%%%%
\begin{subfigure}[ht]{\linewidth}
    \centering
    \includegraphics[width=0.9\linewidth,trim=0cm 0cm 0cm 0cm,clip]{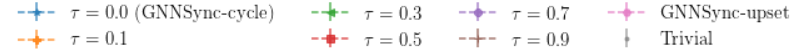}
  \end{subfigure}
  \vspace{-2mm}
    \caption{MSE comparison on GNNSync variants using as loss $\tau \mathcal{L}_\text{upset}+ \mathcal{L}_\text{cycle}$ with different coefficients $\tau$,  on $k-$synchronization with $k=2$ for ERO models. $p$ is the network density and $\eta$ is the noise level. Error bars indicate one standard deviation. 
    }
    \label{fig:linear_comb_k2_ERO}
\end{figure*}
\iffalse
\input{figures_tex/linear_comb_k2_BAO}
\input{figures_tex/linear_comb_k2_RGGO}

\input{figures_tex/linear_comb_k3_ERO}
\input{figures_tex/linear_comb_k3_BAO}
\input{figures_tex/linear_comb_k3_RGGO}

\input{figures_tex/linear_comb_k4_ERO}
\input{figures_tex/linear_comb_k4_BAO}
\input{figures_tex/linear_comb_k4_RGGO}
\fi

\begin{figure*}[!hbt]
    \centering
  \begin{subfigure}[ht]{0.245\linewidth}
    \centering
    \includegraphics[width=\linewidth,trim=0cm 0cm 0cm 1.8cm,clip]{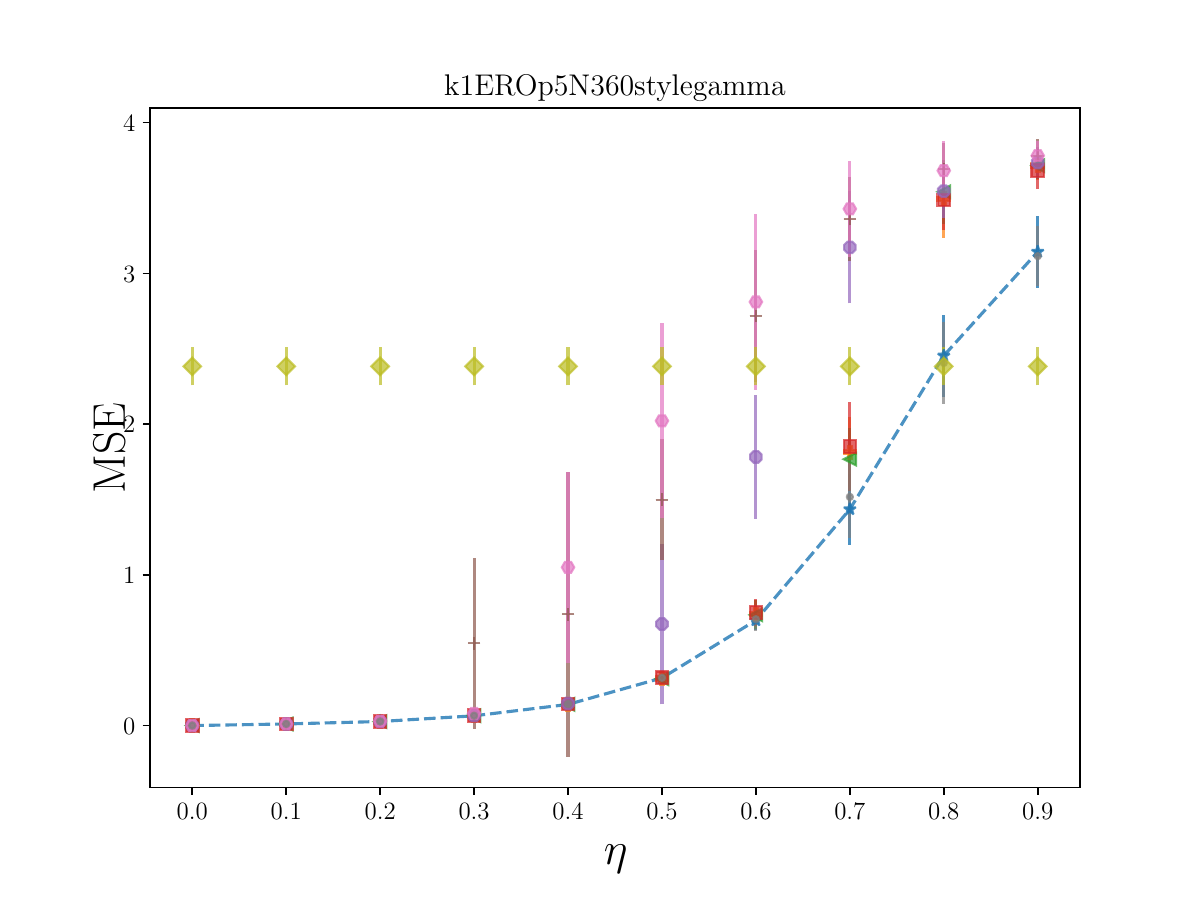}
    \caption{$\text{ERO}_1(p=0.05)$}
  \end{subfigure}
  \begin{subfigure}[ht]{0.245\linewidth}
    \centering
    \includegraphics[width=\linewidth,trim=0cm 0cm 0cm 1.8cm,clip]{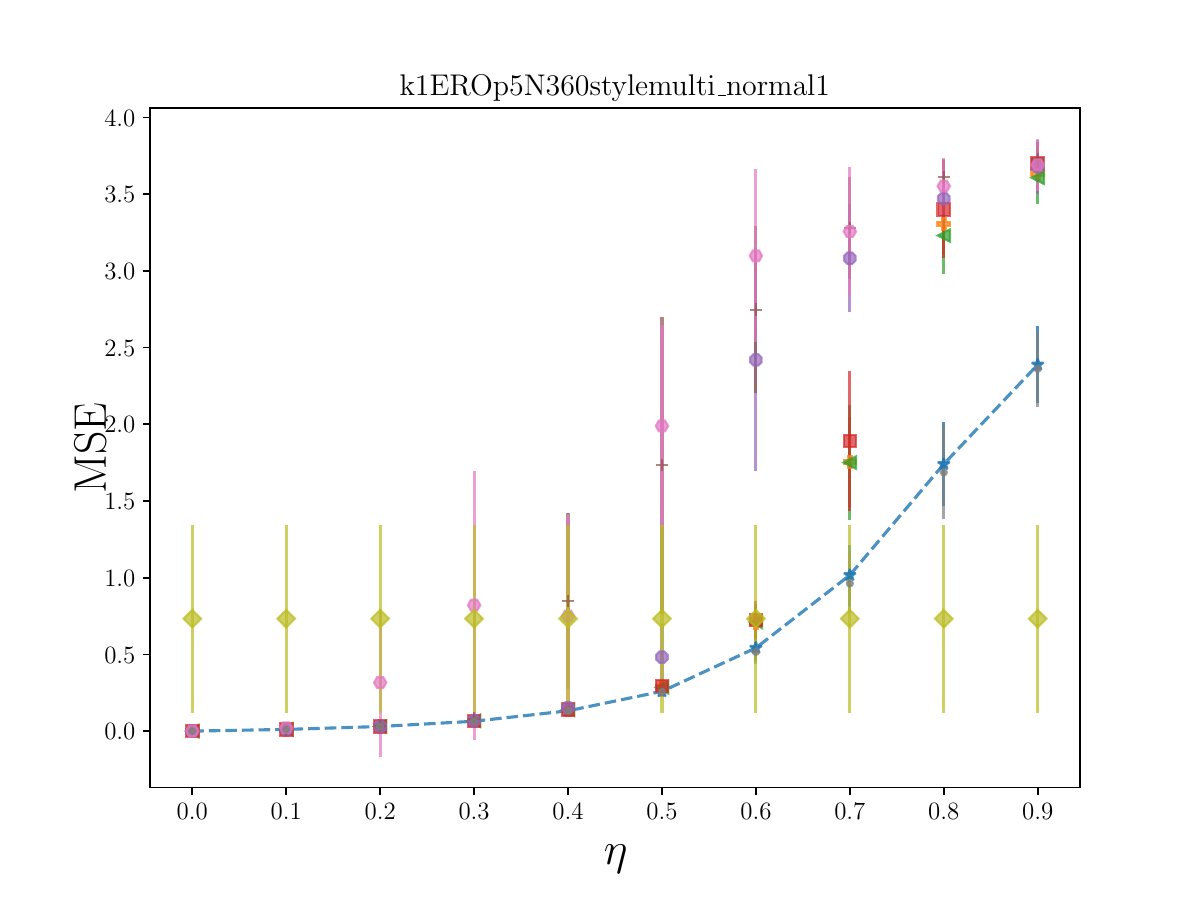}
    \caption{$\text{ERO}_2(p=0.05)$}
  \end{subfigure}
  \begin{subfigure}[ht]{0.245\linewidth}
    \centering
    \includegraphics[width=\linewidth,trim=0cm 0cm 0cm 1.8cm,clip]{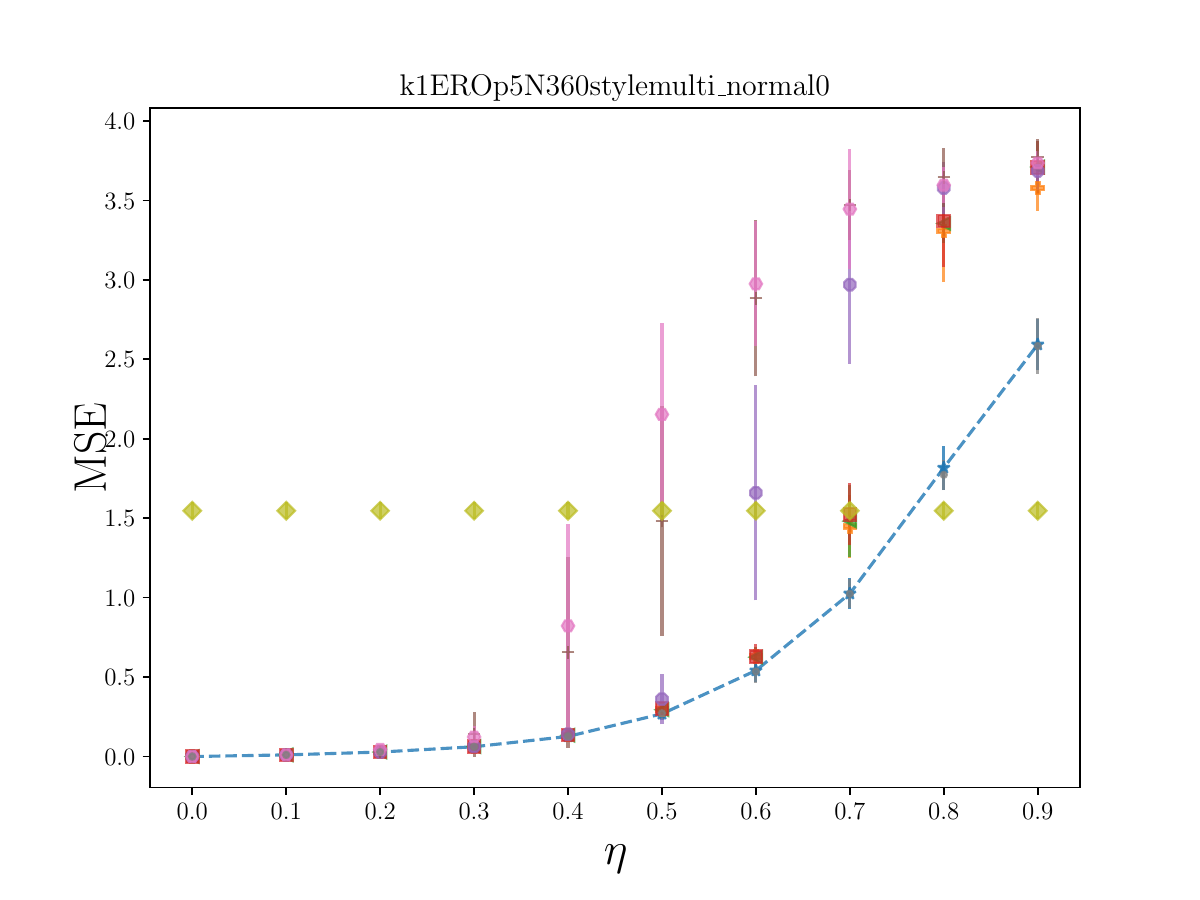}
    \caption{$\text{ERO}_3(p=0.05)$}
  \end{subfigure}
  \begin{subfigure}[ht]{0.245\linewidth}
    \centering
    \includegraphics[width=\linewidth,trim=0cm 0cm 0cm 1.8cm,clip]{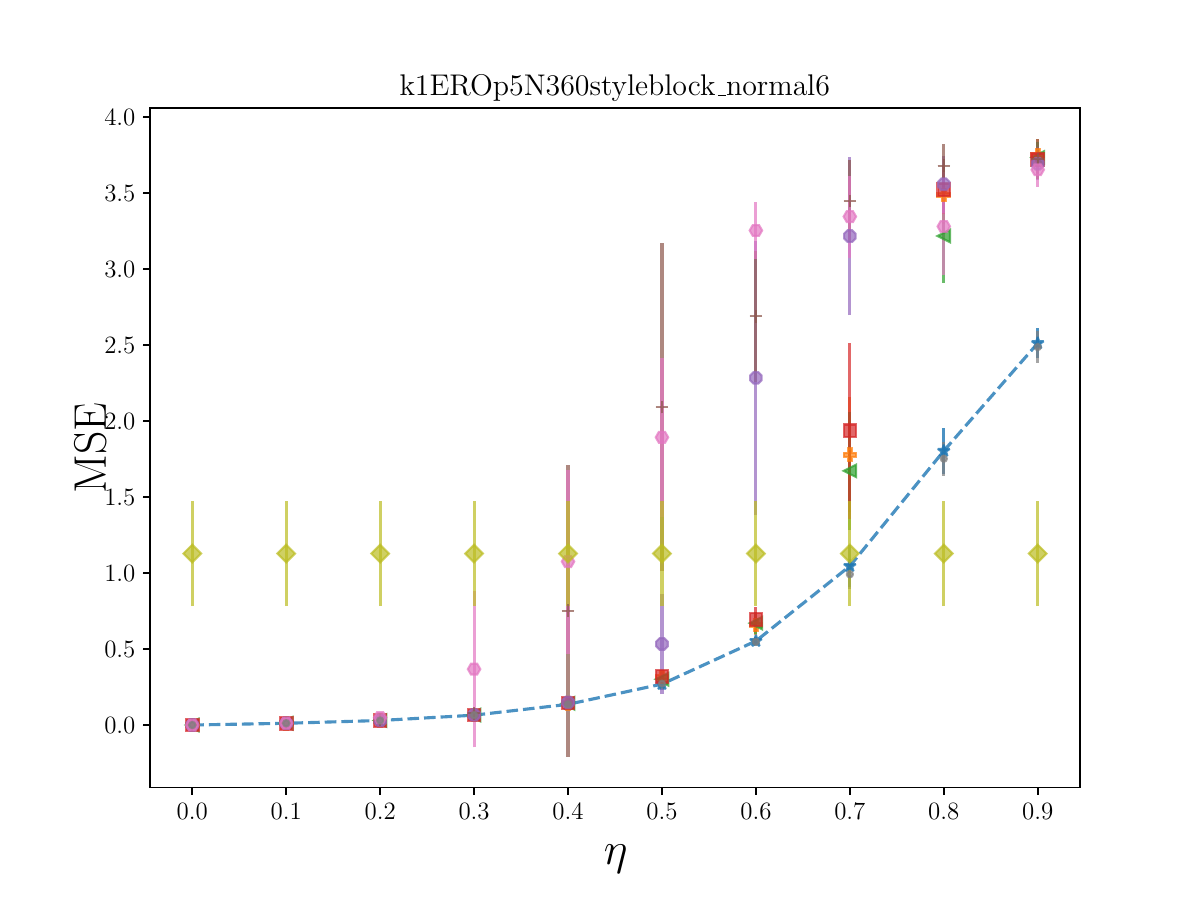}
    \caption{$\text{ERO}_4(p=0.05)$}
  \end{subfigure}
  %%%%%%%%%%%%%%%%%%%%%%%%%%%%
  \begin{subfigure}[ht]{0.245\linewidth}
    \centering
    \includegraphics[width=\linewidth,trim=0cm 0cm 0cm 1.8cm,clip]{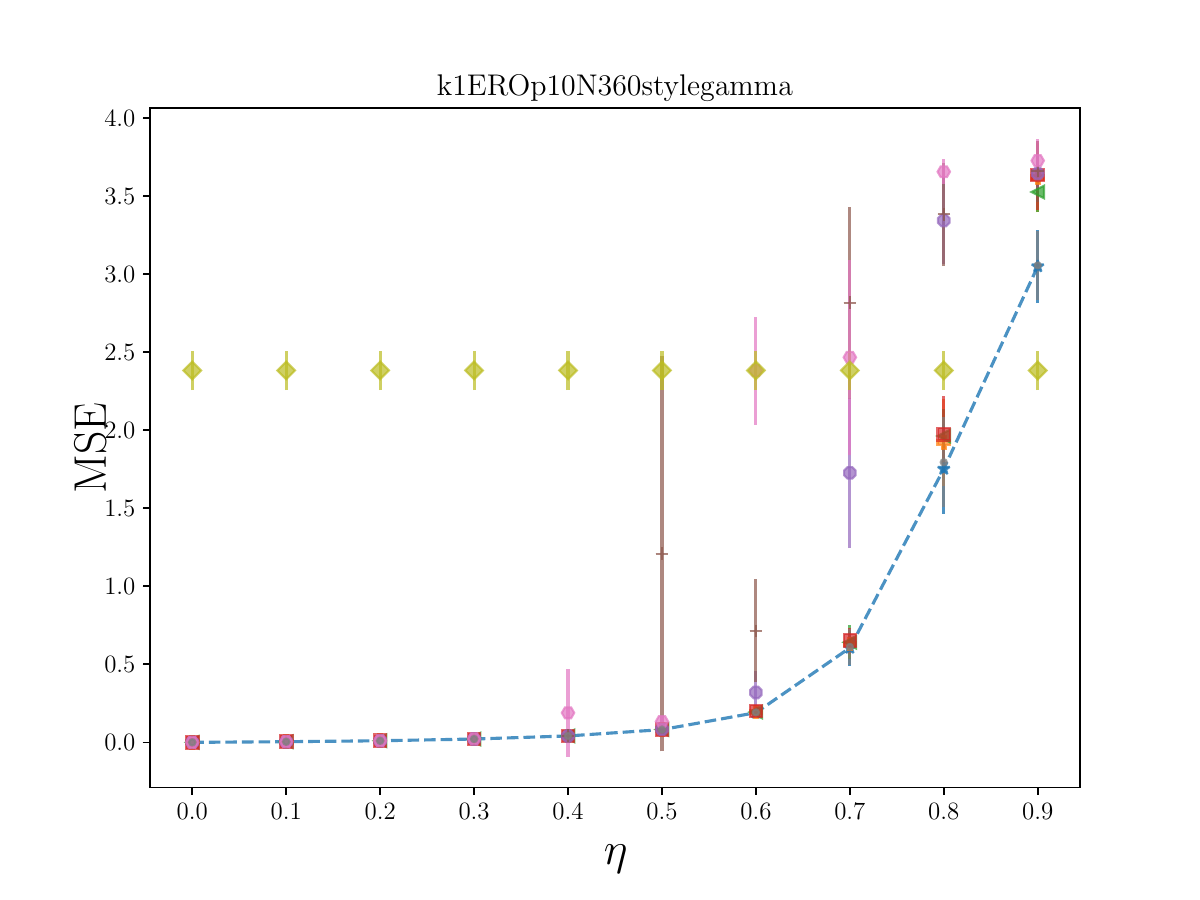}
    \caption{$\text{ERO}_1(p=0.1)$}
  \end{subfigure}
  \begin{subfigure}[ht]{0.245\linewidth}
    \centering
    \includegraphics[width=\linewidth,trim=0cm 0cm 0cm 1.8cm,clip]{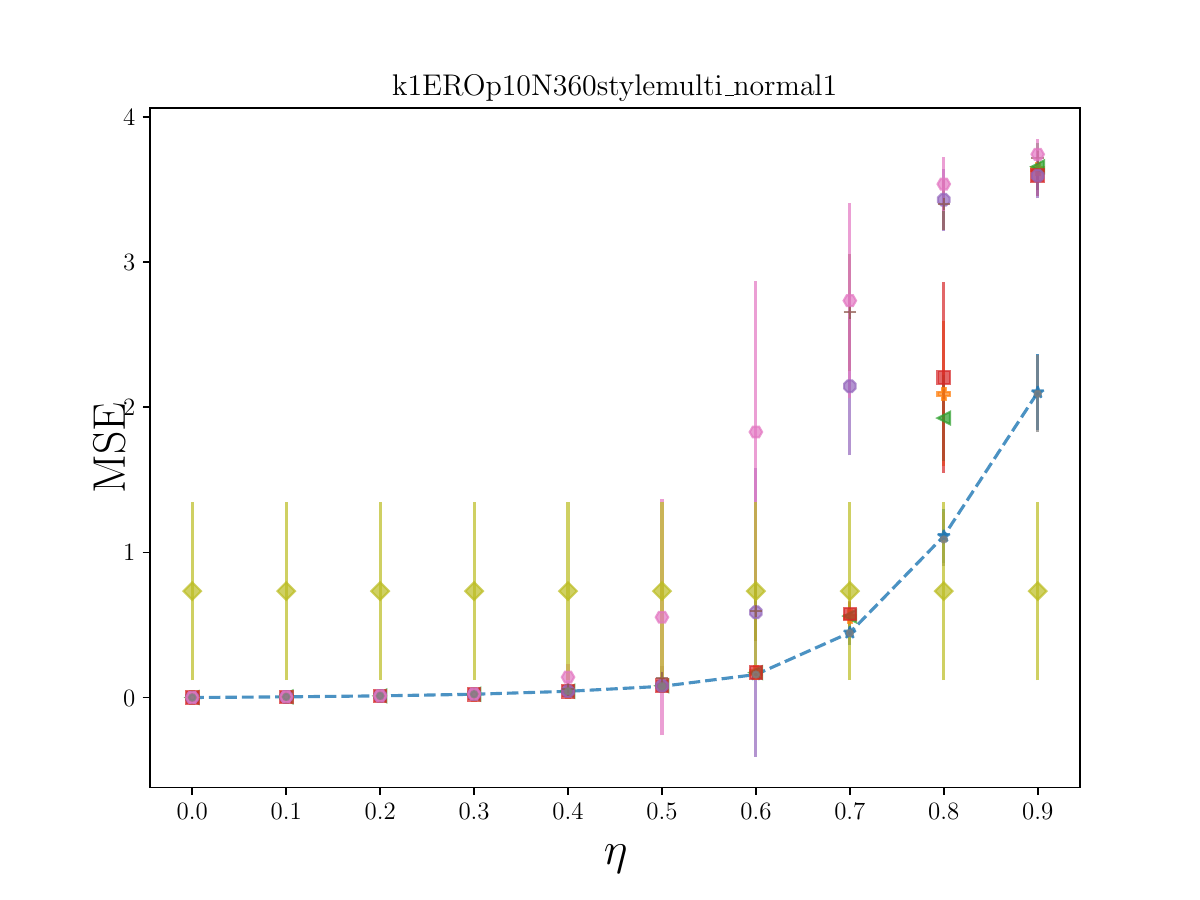}
    \caption{$\text{ERO}_2(p=0.1)$}
  \end{subfigure}
  \begin{subfigure}[ht]{0.245\linewidth}
    \centering
    \includegraphics[width=\linewidth,trim=0cm 0cm 0cm 1.8cm,clip]{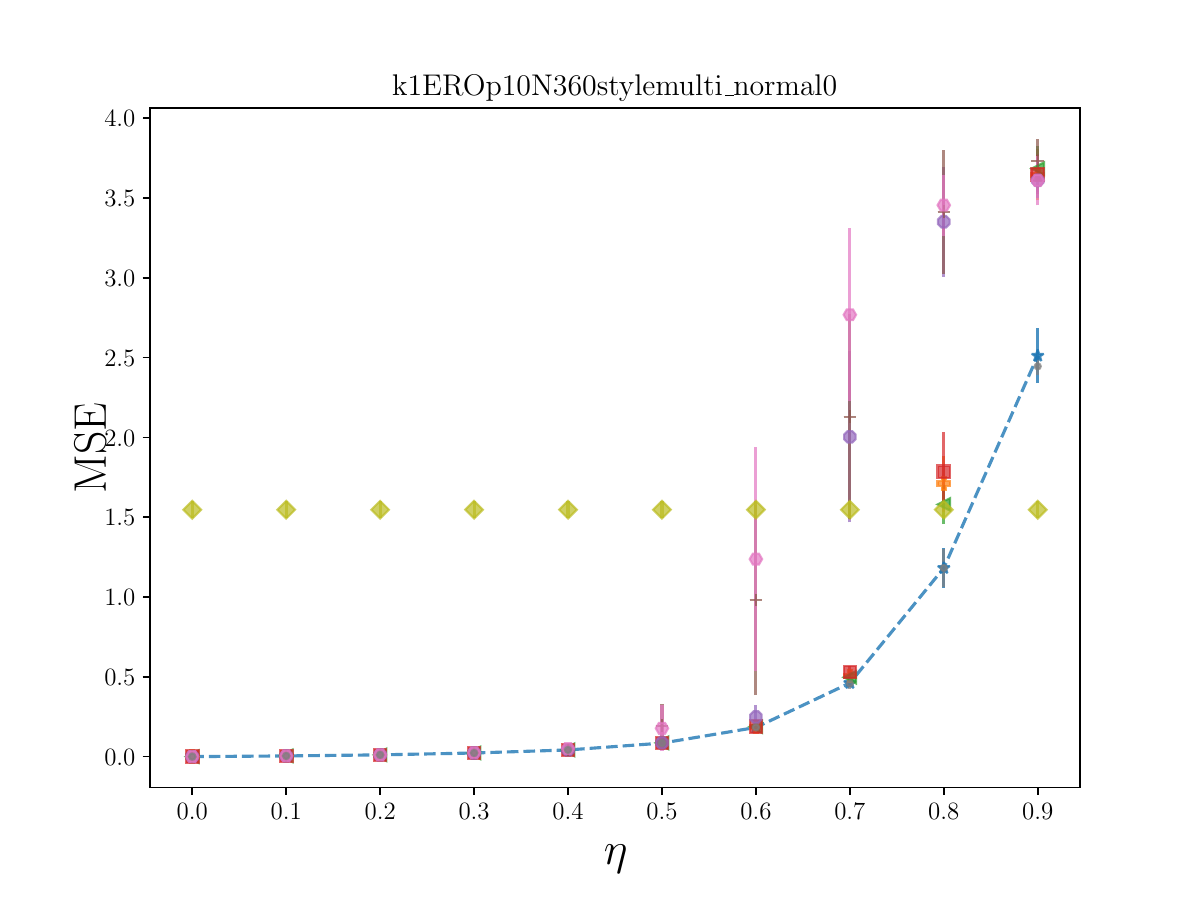}
    \caption{$\text{ERO}_3(p=0.1)$}
  \end{subfigure}
  \begin{subfigure}[ht]{0.245\linewidth}
    \centering
    \includegraphics[width=\linewidth,trim=0cm 0cm 0cm 1.8cm,clip]{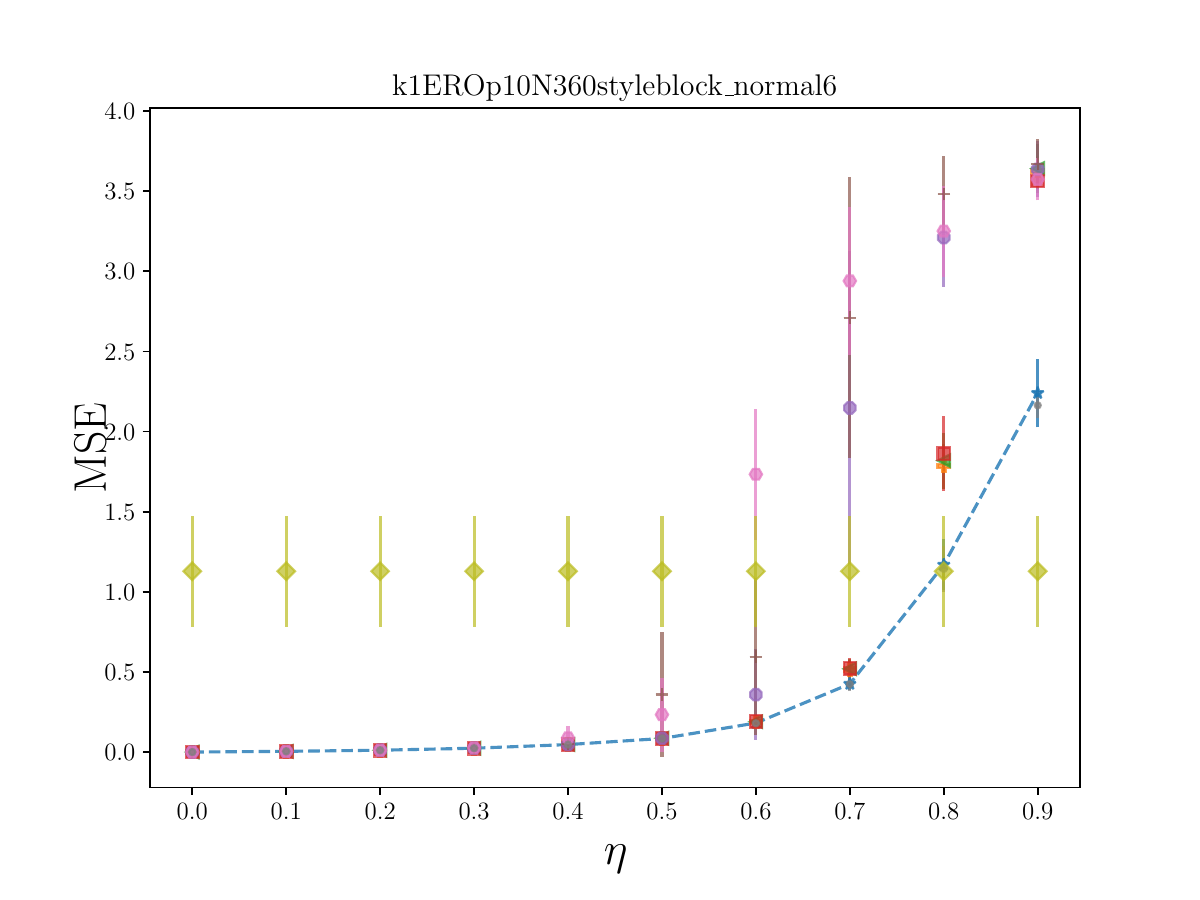}
    \caption{$\text{ERO}_4(p=0.1)$}
  \end{subfigure}
  %%%%%%%%%%%%%%%%%%%%%%%%%%%%%%%%
  \begin{subfigure}[ht]{0.245\linewidth}
    \centering
    \includegraphics[width=\linewidth,trim=0cm 0cm 0cm 1.8cm,clip]{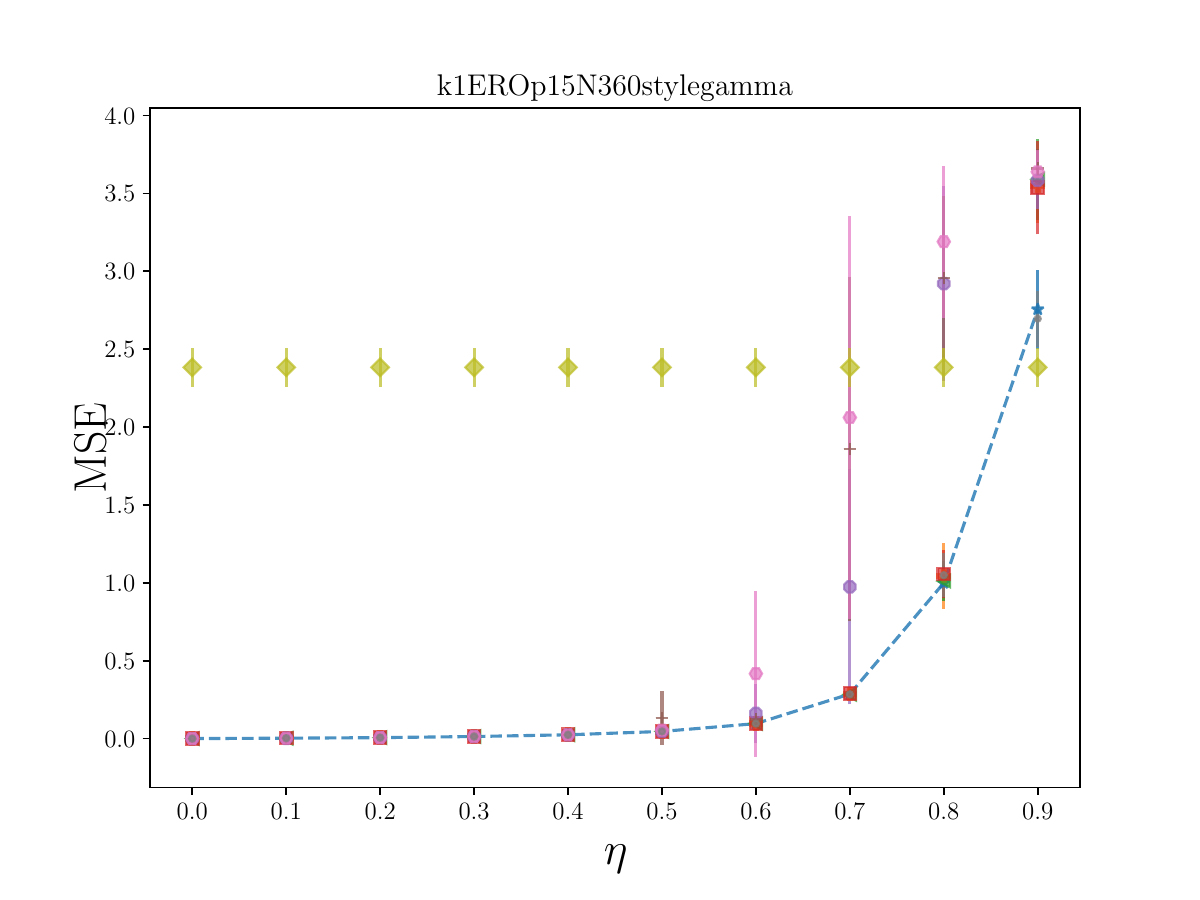}
    \caption{$\text{ERO}_1(p=0.15)$}
  \end{subfigure}
  \begin{subfigure}[ht]{0.245\linewidth}
    \centering
    \includegraphics[width=\linewidth,trim=0cm 0cm 0cm 1.8cm,clip]{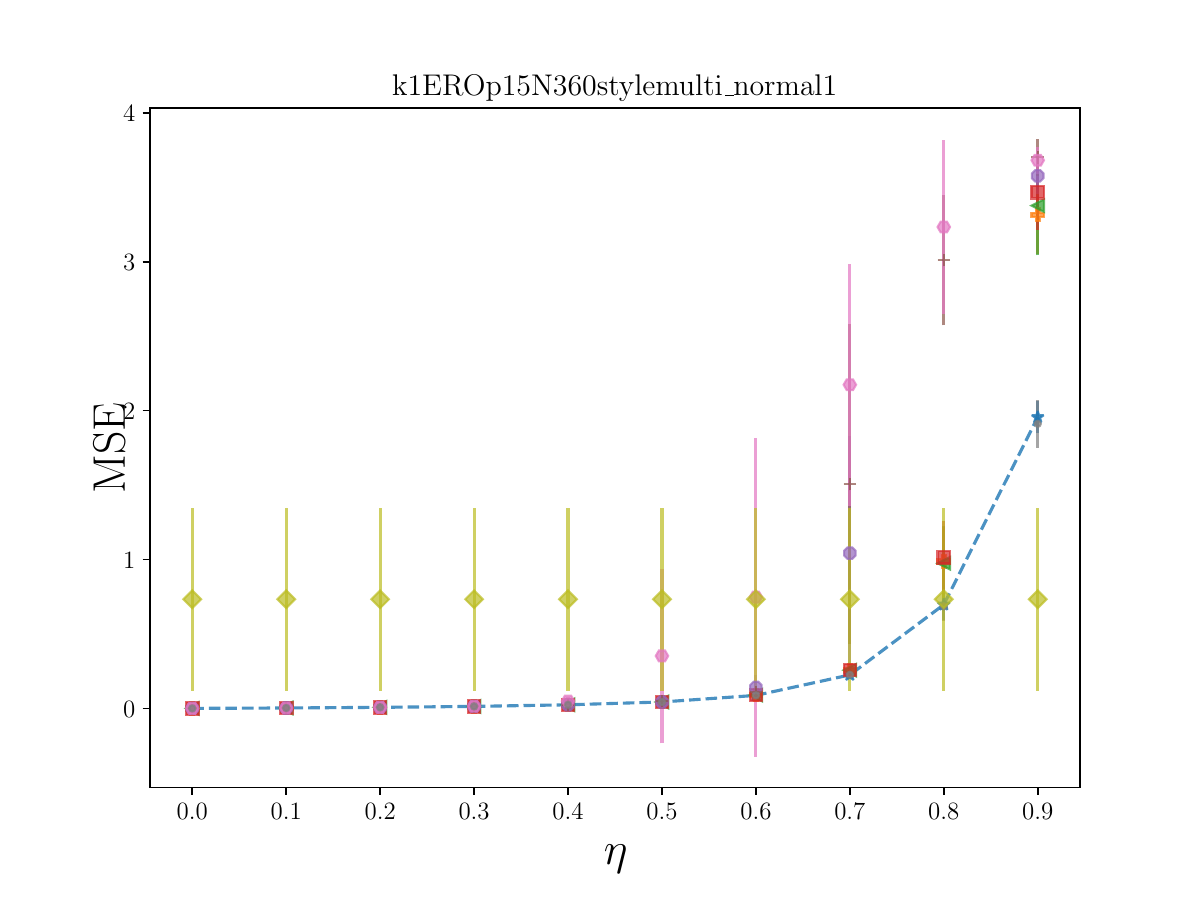}
    \caption{$\text{ERO}_2(p=0.15)$}
  \end{subfigure}
  \begin{subfigure}[ht]{0.245\linewidth}
    \centering
    \includegraphics[width=\linewidth,trim=0cm 0cm 0cm 1.8cm,clip]{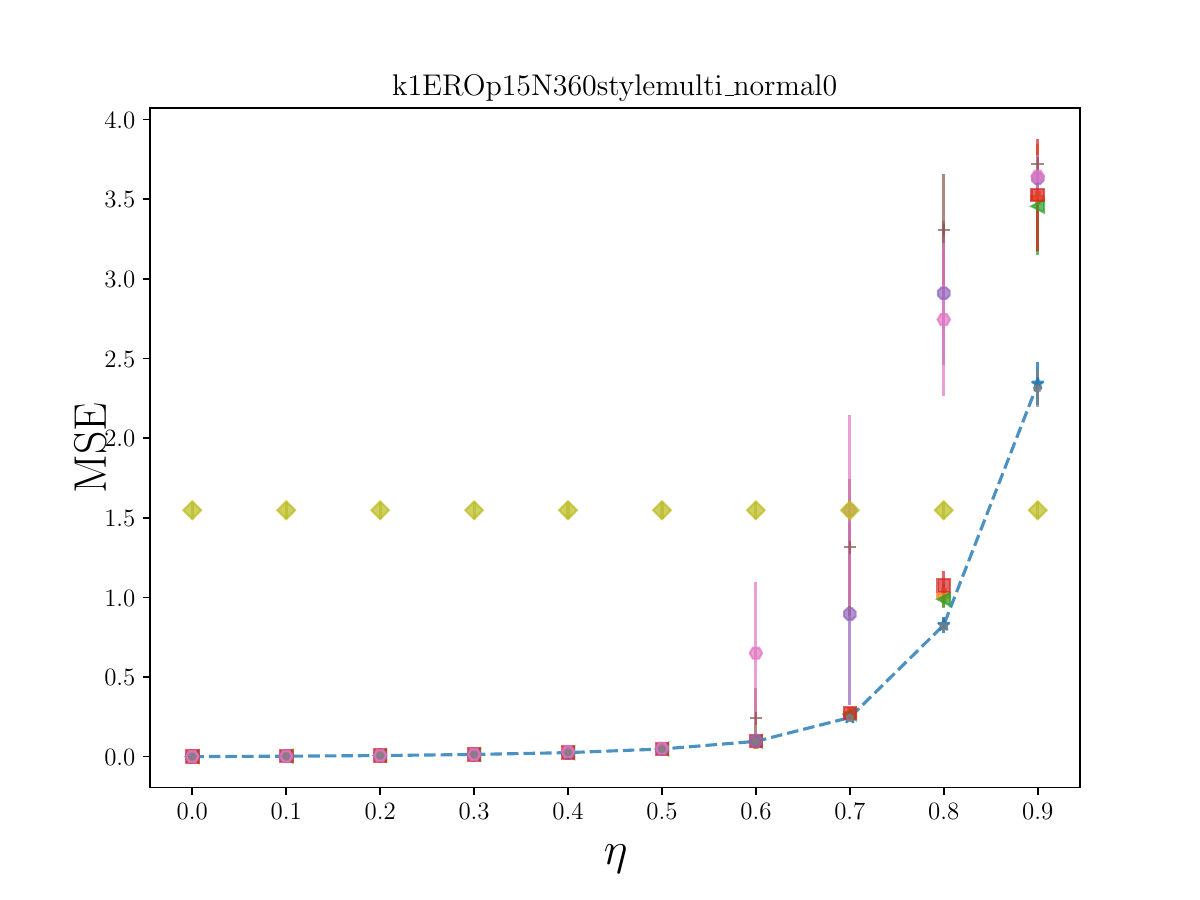}
    \caption{$\text{ERO}_3(p=0.15)$}
  \end{subfigure}
  \begin{subfigure}[ht]{0.245\linewidth}
    \centering
    \includegraphics[width=\linewidth,trim=0cm 0cm 0cm 1.8cm,clip]{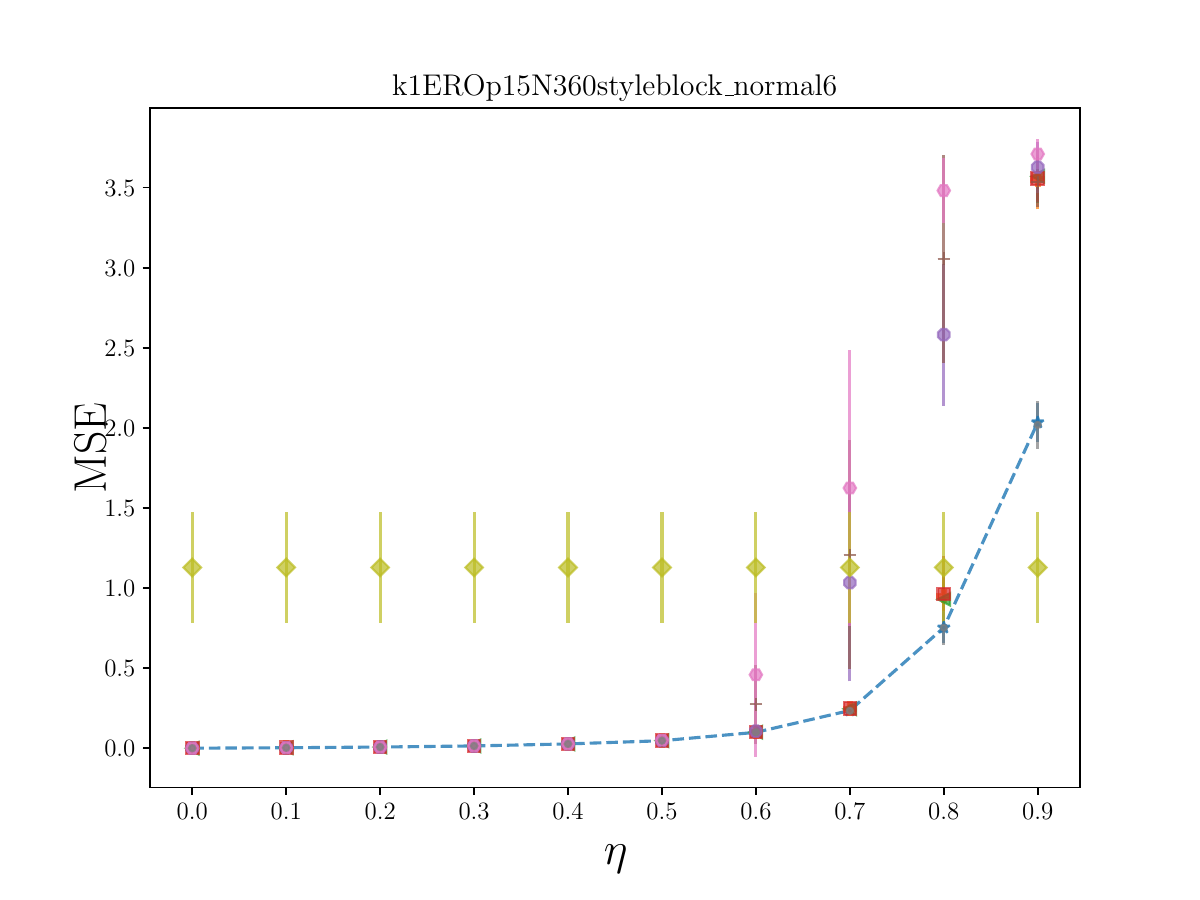}
    \caption{$\text{ERO}_4(p=0.15)$}
  \end{subfigure}
  %%%%%%%%%%%%%%%%%%%%%%%%%%%%%%%%%%%
\begin{subfigure}[ht]{\linewidth}
    \centering
    \includegraphics[width=1.0\linewidth,trim=0cm 0cm 0cm 0cm,clip]{figures/sync_legend_main.png}
  \end{subfigure}
    \caption{MSE performance comparison on GNNSync against fine-tuned baselines on angular synchronization ($k=1$) for ERO models. $p$ is the network density and $\eta$ is the noise level.  Error bars indicate one standard deviation. 
    }
    \label{fig:ablation_fine_tuned_k1_ERO}
\end{figure*}

\begin{figure*}[!hbt]
    \centering
  \begin{subfigure}[ht]{0.245\linewidth}
    \centering
    \includegraphics[width=\linewidth,trim=0cm 0cm 0cm 1.8cm,clip]{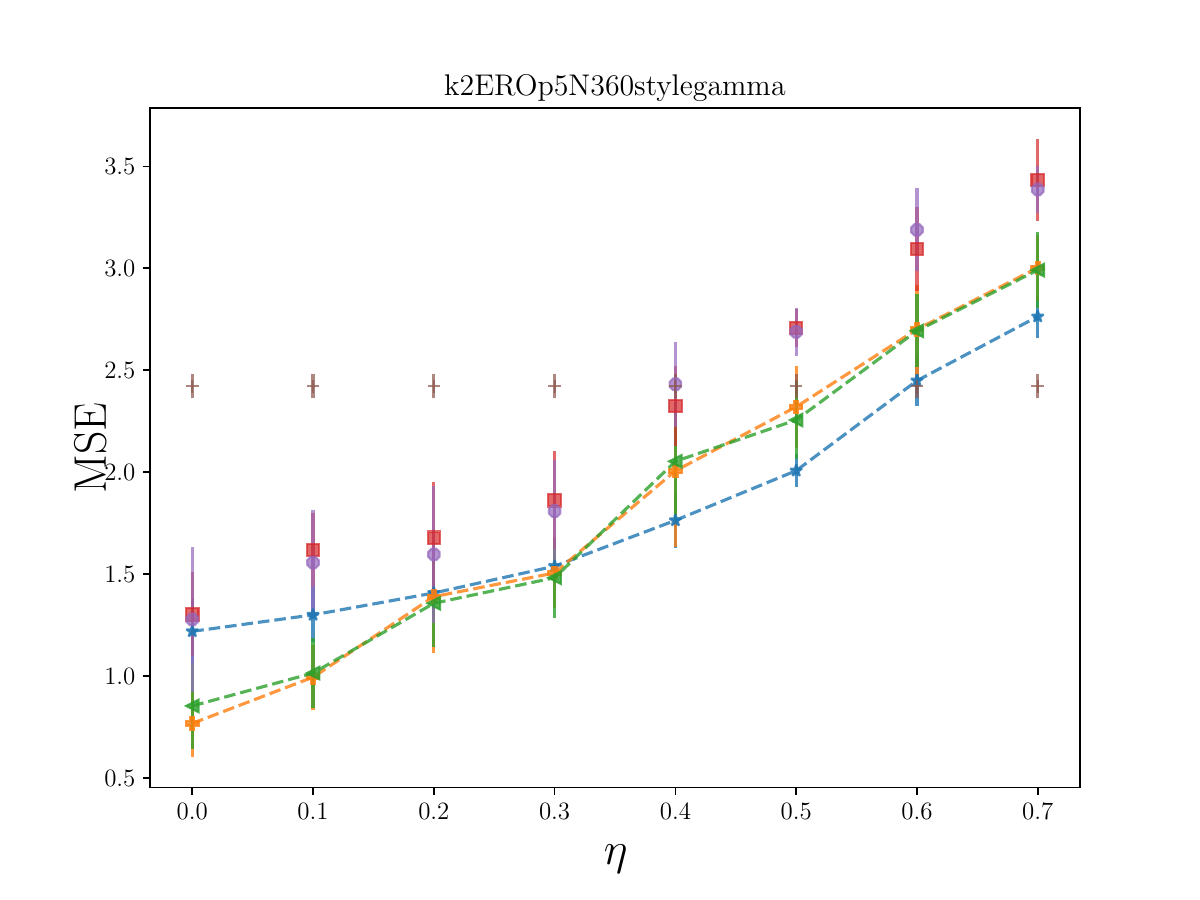}
    \caption{$\text{ERO}_1(p=0.05)$}
  \end{subfigure}
  \begin{subfigure}[ht]{0.245\linewidth}
    \centering
    \includegraphics[width=\linewidth,trim=0cm 0cm 0cm 1.8cm,clip]{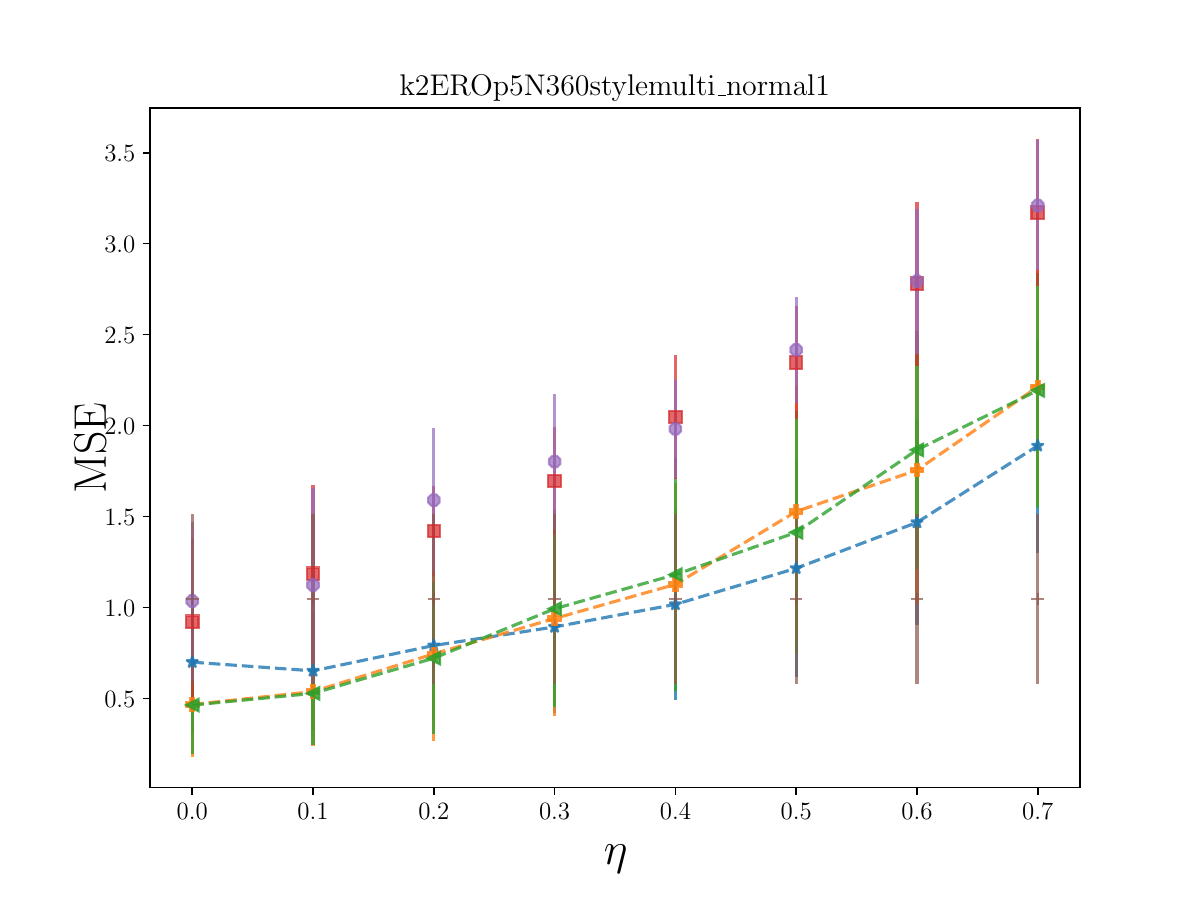}
    \caption{$\text{ERO}_2(p=0.05)$}
  \end{subfigure}
  \begin{subfigure}[ht]{0.245\linewidth}
    \centering
    \includegraphics[width=\linewidth,trim=0cm 0cm 0cm 1.8cm,clip]{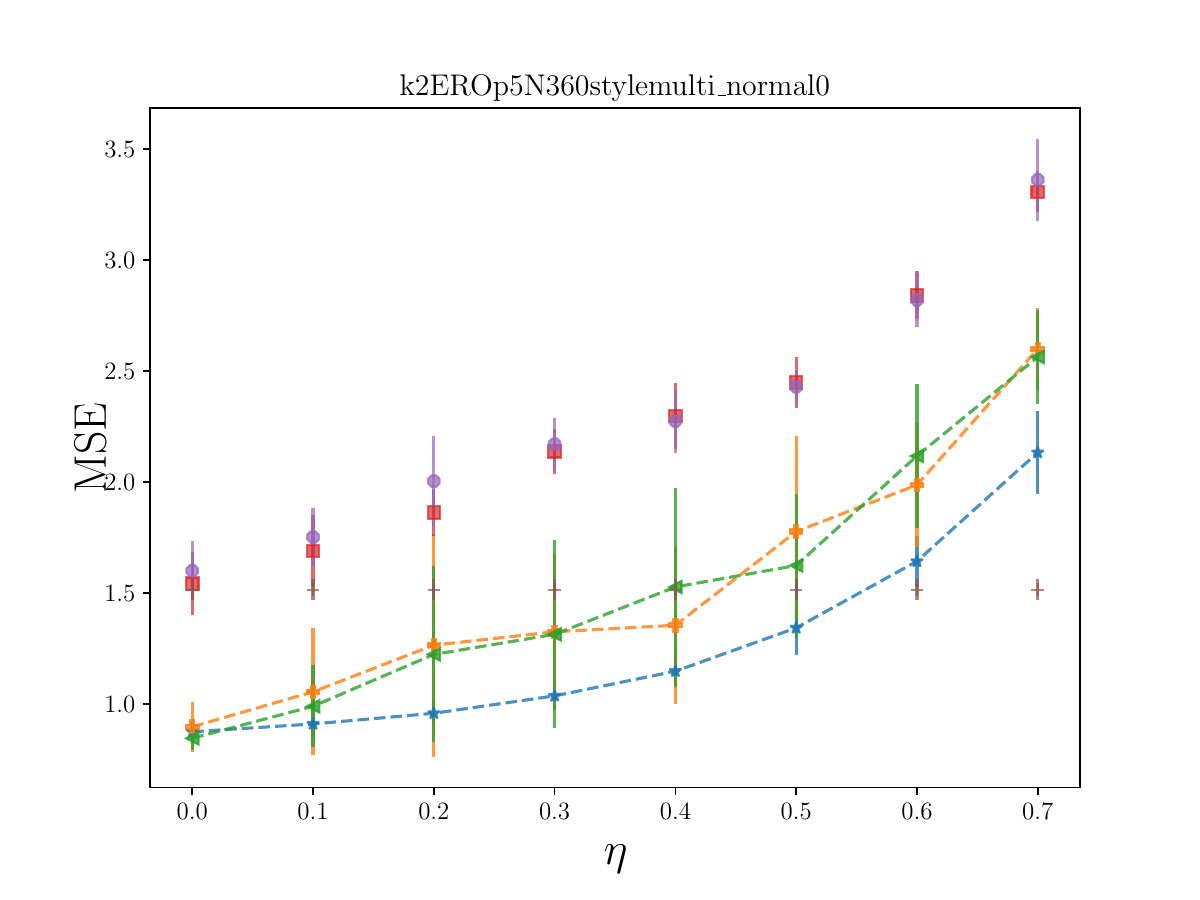}
    \caption{$\text{ERO}_3(p=0.05)$}
  \end{subfigure}
  \begin{subfigure}[ht]{0.245\linewidth}
    \centering
    \includegraphics[width=\linewidth,trim=0cm 0cm 0cm 1.8cm,clip]{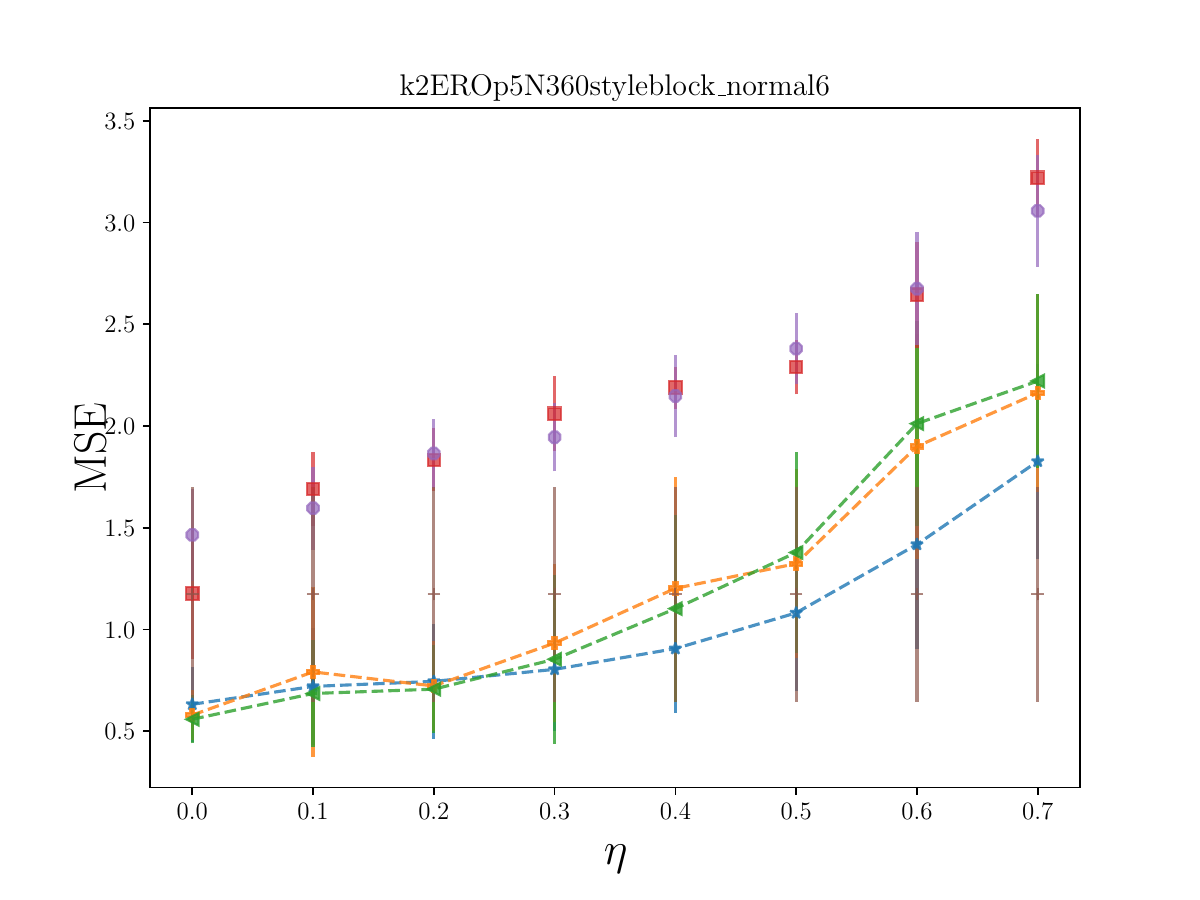}
    \caption{$\text{ERO}_4(p=0.05)$}
  \end{subfigure}
  %%%%%%%%%%%%%%%%%%%%%%%%%%%%
  \begin{subfigure}[ht]{0.245\linewidth}
    \centering
    \includegraphics[width=\linewidth,trim=0cm 0cm 0cm 1.8cm,clip]{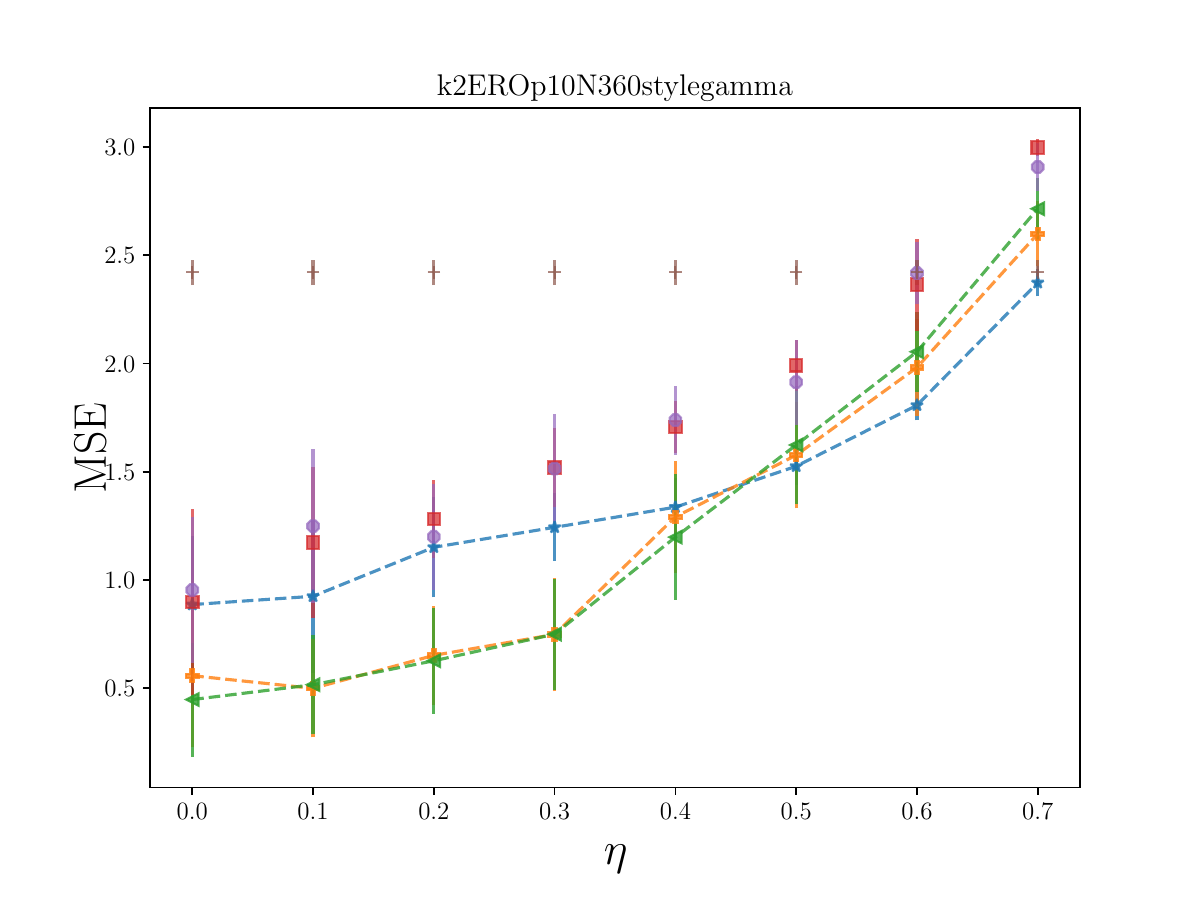}
    \caption{$\text{ERO}_1(p=0.1)$}
  \end{subfigure}
  \begin{subfigure}[ht]{0.245\linewidth}
    \centering
    \includegraphics[width=\linewidth,trim=0cm 0cm 0cm 1.8cm,clip]{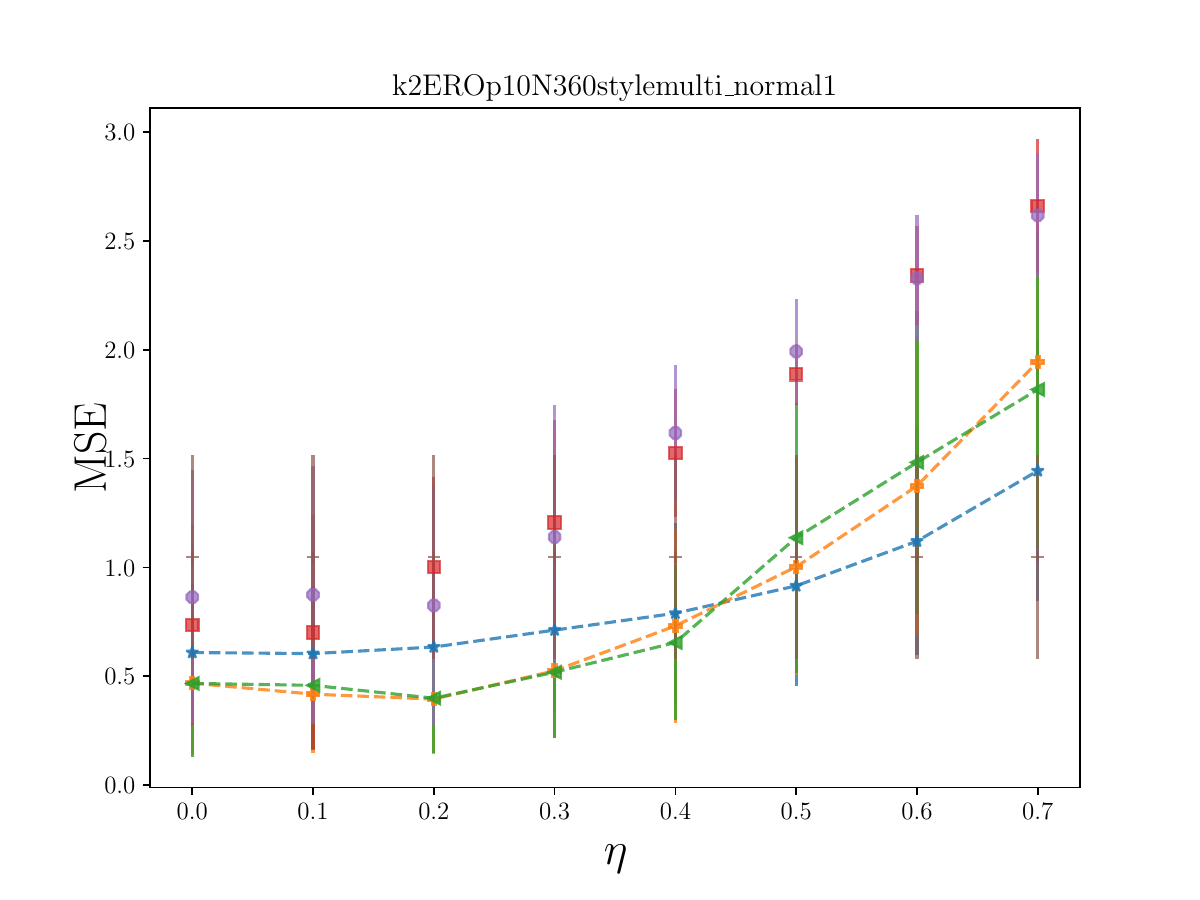}
    \caption{$\text{ERO}_2(p=0.1)$}
  \end{subfigure}
  \begin{subfigure}[ht]{0.245\linewidth}
    \centering
    \includegraphics[width=\linewidth,trim=0cm 0cm 0cm 1.8cm,clip]{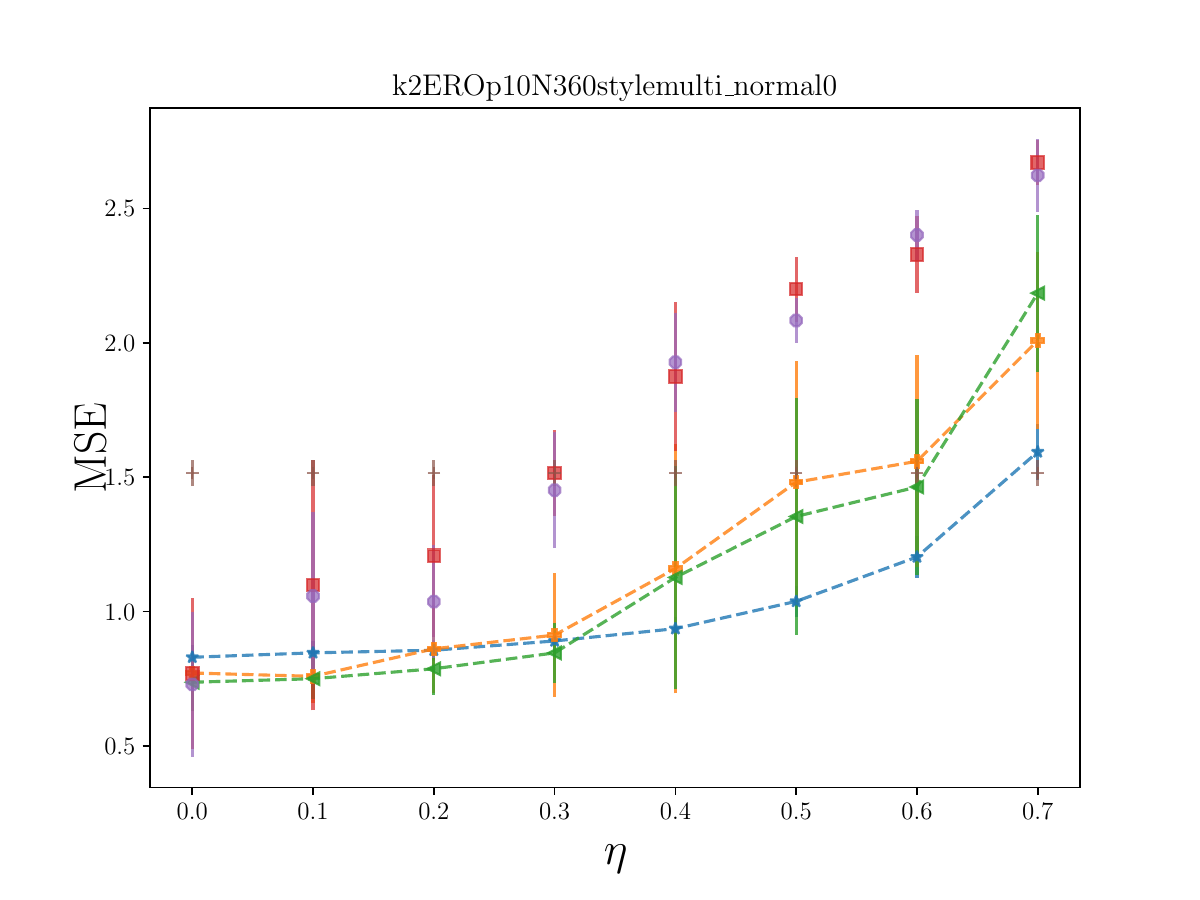}
    \caption{$\text{ERO}_3(p=0.1)$}
  \end{subfigure}
  \begin{subfigure}[ht]{0.245\linewidth}
    \centering
    \includegraphics[width=\linewidth,trim=0cm 0cm 0cm 1.8cm,clip]{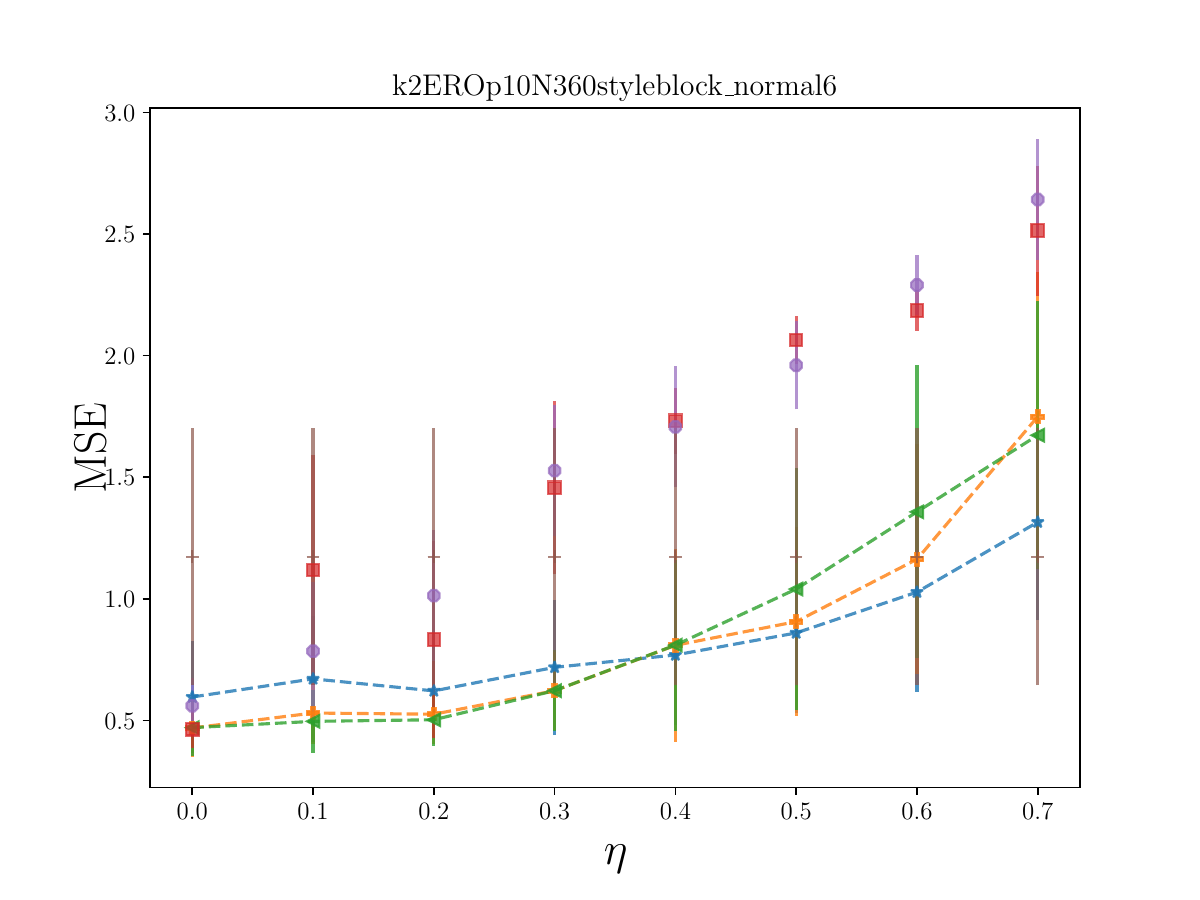}
    \caption{$\text{ERO}_4(p=0.1)$}
  \end{subfigure}
  %%%%%%%%%%%%%%%%%%%%%%%%%%%%%%%%
  \begin{subfigure}[ht]{0.245\linewidth}
    \centering
    \includegraphics[width=\linewidth,trim=0cm 0cm 0cm 1.8cm,clip]{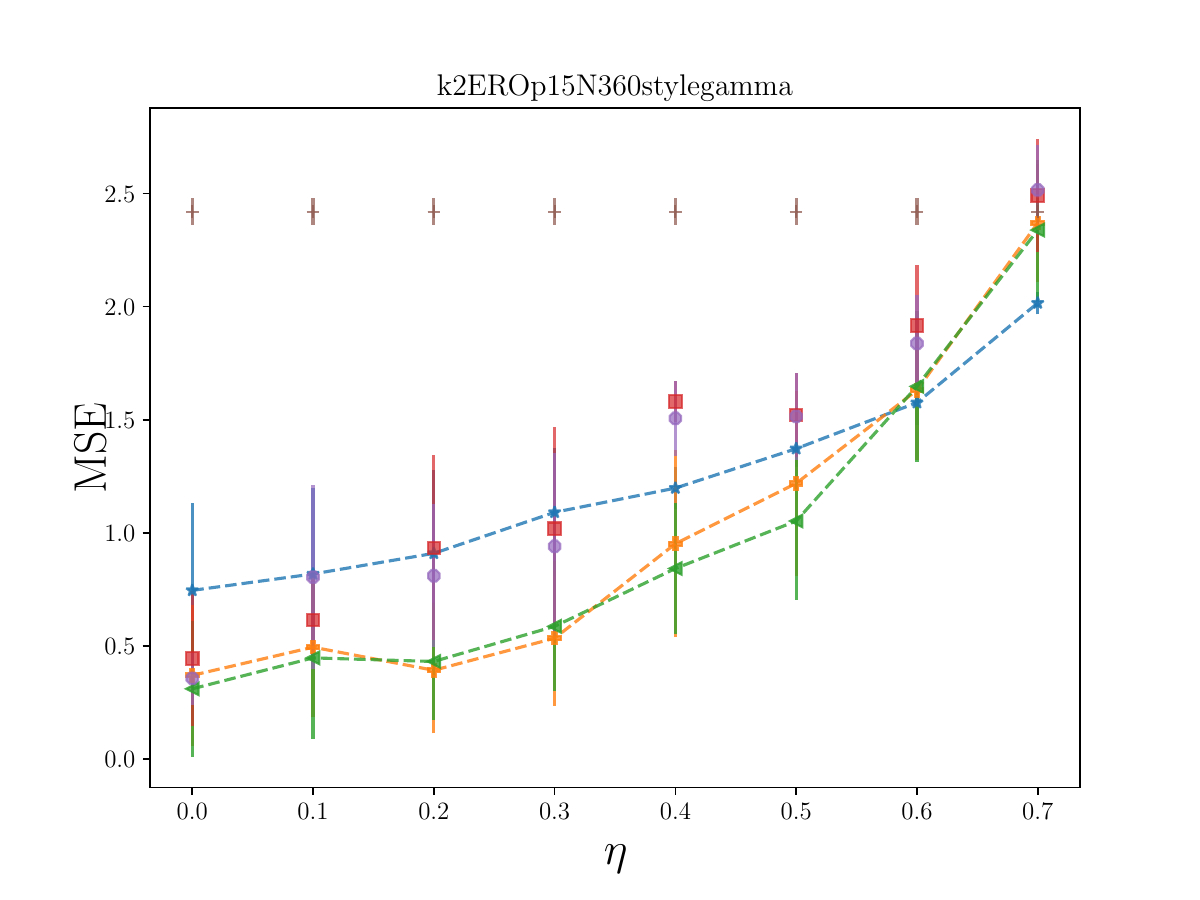}
    \caption{$\text{ERO}_1(p=0.15)$}
  \end{subfigure}
  \begin{subfigure}[ht]{0.245\linewidth}
    \centering
    \includegraphics[width=\linewidth,trim=0cm 0cm 0cm 1.8cm,clip]{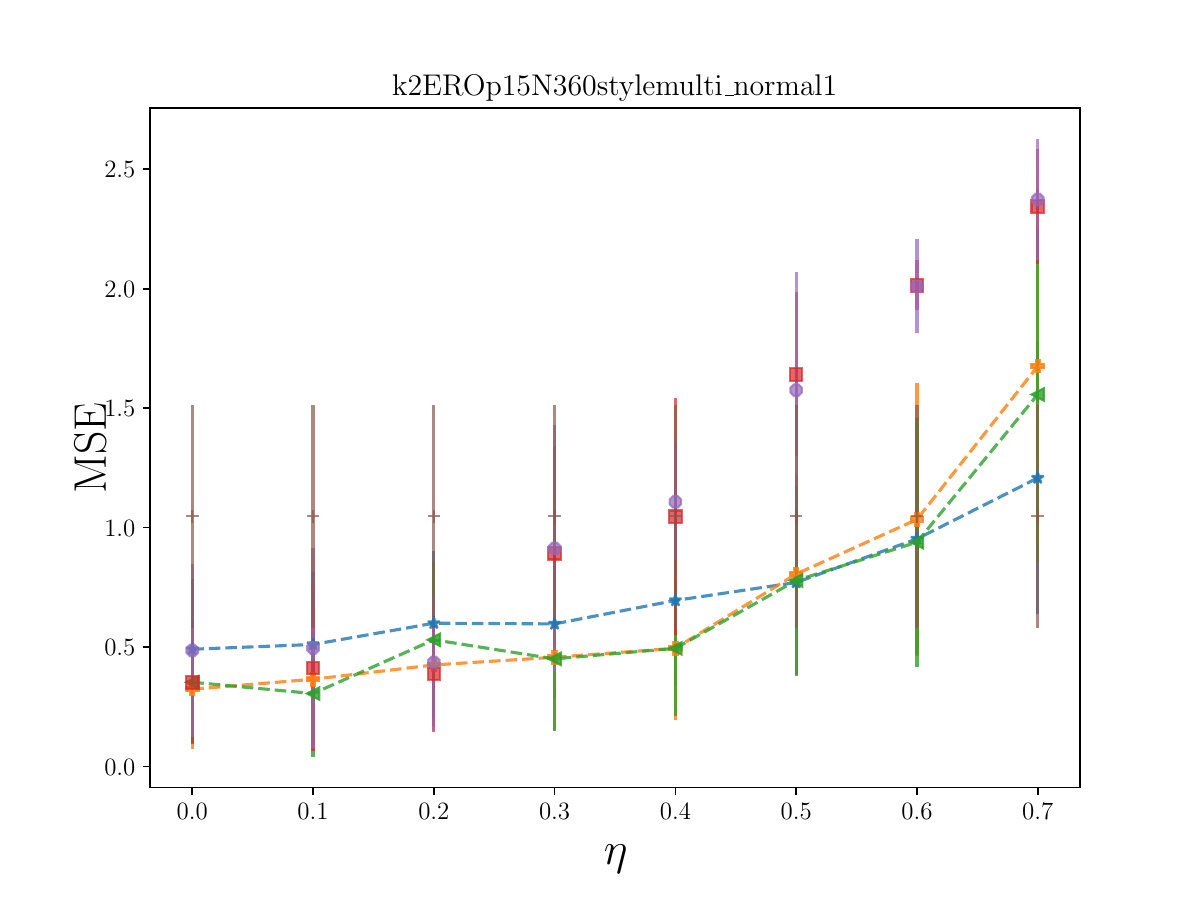}
    \caption{$\text{ERO}_2(p=0.15)$}
  \end{subfigure}
  \begin{subfigure}[ht]{0.245\linewidth}
    \centering
    \includegraphics[width=\linewidth,trim=0cm 0cm 0cm 1.8cm,clip]{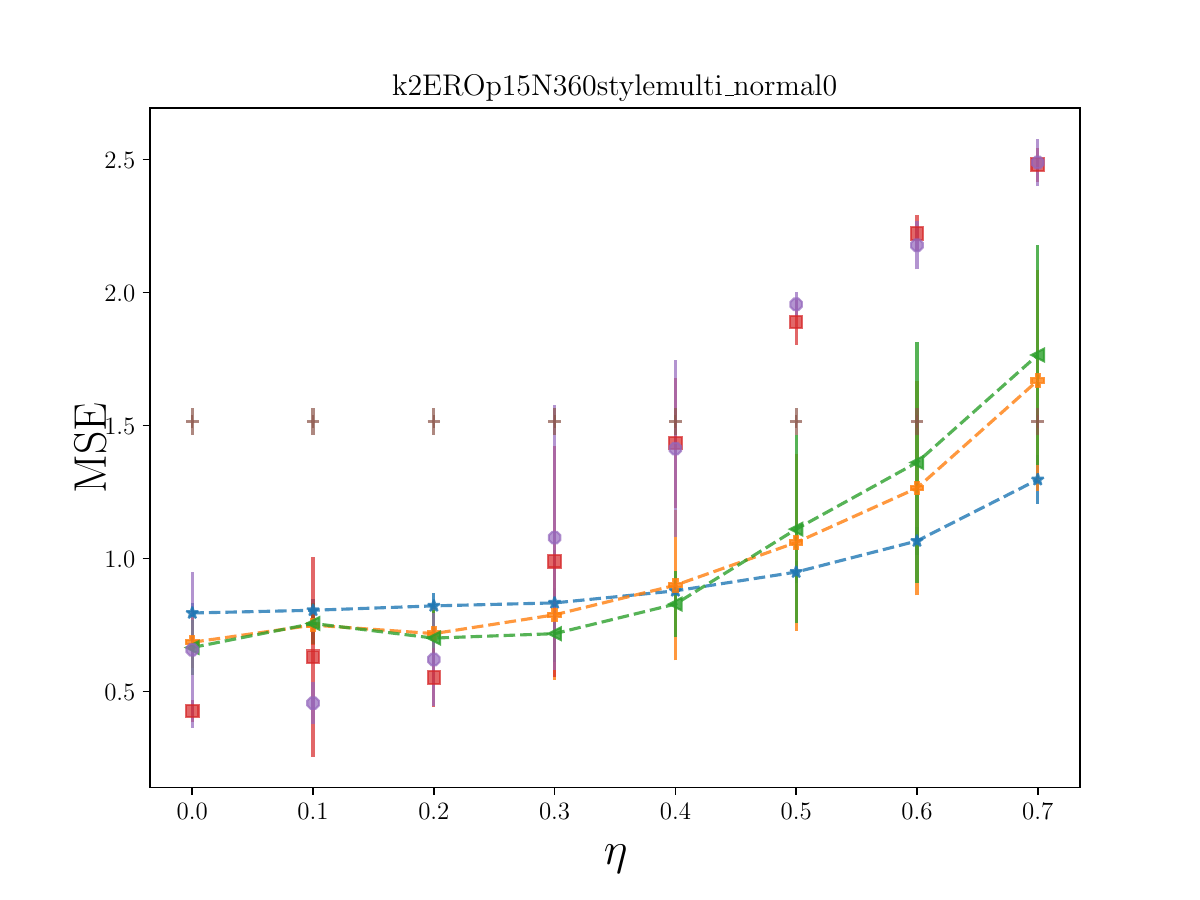}
    \caption{$\text{ERO}_3(p=0.15)$}
  \end{subfigure}
  \begin{subfigure}[ht]{0.245\linewidth}
    \centering
    \includegraphics[width=\linewidth,trim=0cm 0cm 0cm 1.8cm,clip]{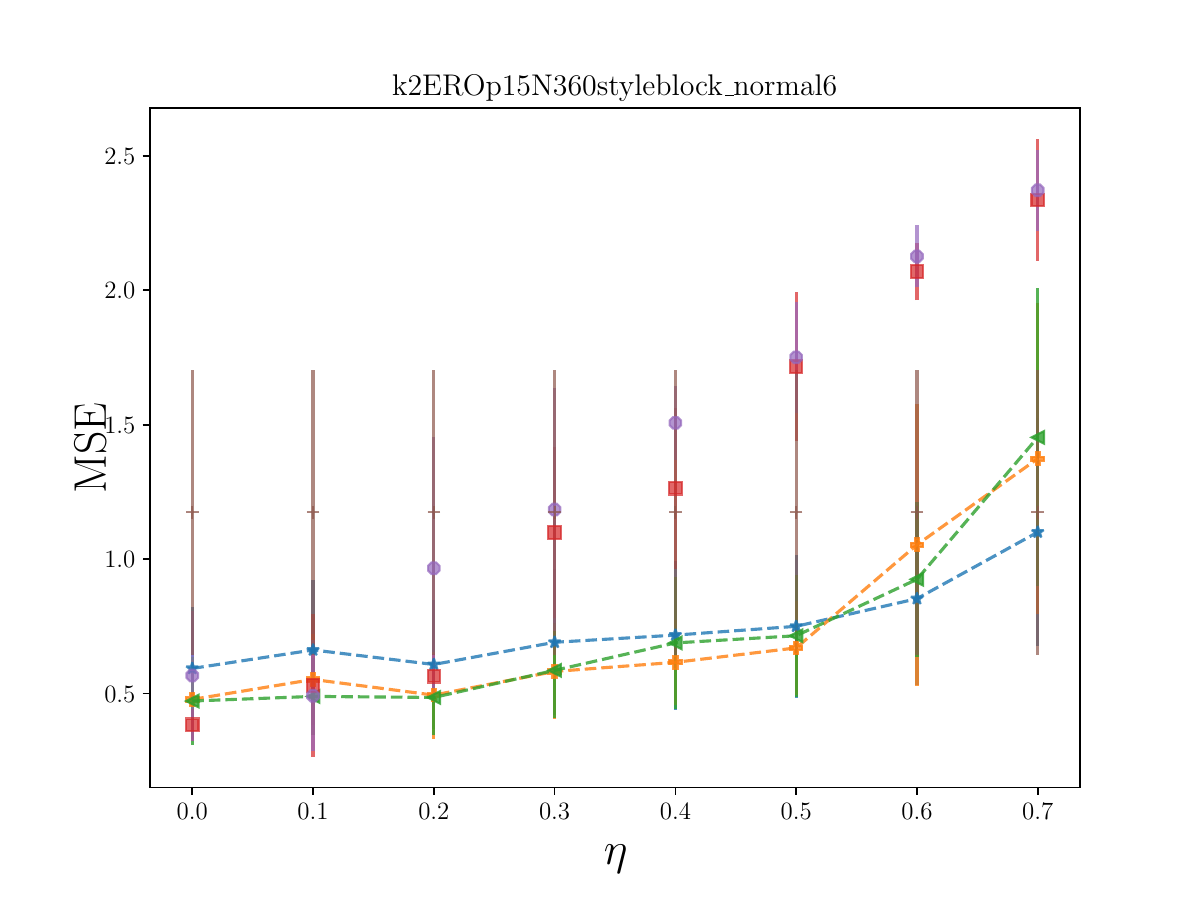}
    \caption{$\text{ERO}_4(p=0.15)$}
  \end{subfigure}
  %%%%%%%%%%%%%%%%%%%%%%%%%%%%%%%%%%%
\begin{subfigure}[ht]{\linewidth}
    \centering
    \includegraphics[width=1.0\linewidth,trim=0cm 0cm 0cm 0cm,clip]{figures/legend_ksync.png}
  \end{subfigure}
    \caption{MSE performance comparison on GNNSync against fine-tuned baselines on $k-$synchronization with $k=2$ for ERO models. $p$ is the network density and $\eta$ is the noise level. Error bars indicate one standard deviation. 
    }
    \label{fig:ablation_fine_tuned_k2_ERO}
\end{figure*}
\end{document}